\newtheorem{thm}{Theorem}
\newtheorem{prop}[thm]{Proposition}
\newtheorem{definition}{Definition}
\newtheorem{remark}{Remark}
\newtheorem{convention}[thm]{Convention}
\newcommand{\cmark}{\ding{51}\xspace}%
\newcommand{\xmark}{\ding{55}\xspace}%
\newcommand{\conceptnet}{\ensuremath{\mbox{\texttt{ConceptNet 4}}}\xspace}
\newcommand{\igraph}{\ensuremath{\mbox{\texttt{igraph}}}\xspace}
\newcommand{\cfinder}{\ensuremath{\mbox{\texttt{CFinder}}}\xspace}
\newcommand{\plplot}{\ensuremath{\mbox{\texttt{plplot}}}\xspace}
\newcommand{\inlinecycle}{\ensuremath{\kern 0.2em %
\substack{\longrightarrow\\\longleftarrow} \kern 0.2em}}
\newcommand{\dbtext}[1]{\textcolor{Green}{\texttt{#1}}\xspace}
\providecommand{\abs}[1]{\ensuremath{\lvert#1\rvert}\xspace}
\newcommand{\OO}[1]{\ensuremath{\mathcal O\left(#1\right)}\xspace}
\newcommand{\surj}{\mbox{\kern .2em %
$\rightarrow$\hspace*{-.8em}$\rightarrow$\kern .2em}}
\newcommand{\bij}{\rightarrowtail\kern -.8em\rightarrow}
\newenvironment{code}
{\begin{quote}
\ttfamily 
\rule{\linewidth}{2pt}}
{\newline\rule{\linewidth}{2pt}
\end{quote}}
\newcommand{\NN}{\ensuremath{\mathbb N}\xspace}
\newcommand{\ZZ}{\ensuremath{\mathbb Z}\xspace}
\newcounter{eg}
\newtheorem{example}{Example}
\renewcommand{\thefootnote}{\fnsymbol{footnote}}
\title{Commonsense Reasoning and Large Network Analysis:\\ 
A Computational Study of ConceptNet 4 \protect\footnote{ Research supported by NSF Grants CCF-0916708 and IIS-0747369.}}
\author{Dimitrios I.~Diochnos\\
Department of Mathematics, Statistics, and Computer Science\\
University of Illinois at Chicago, Chicago IL 60607, USA}
\date{
}
\begin{document}

\pagenumbering{roman}

\maketitle

\bibliographystyle{plain}

\renewcommand{\thefootnote}{\arabic{footnote}}


\begin{abstract}
In this report a computational study of \conceptnet is performed 
using tools from the field of network analysis.
Part I describes the process of extracting the data from the SQL database that is available online,
as well as how the closure of the input among the assertions in the English language is computed.
This part also performs a validation of the input as well as checks for the consistency of the entire database.
Part II investigates the structural properties of \conceptnet. 
Different graphs are induced from the knowledge base by fixing different parameters.
The degrees and the degree distributions are examined, the number and sizes of connected components,
the transitivity and clustering coefficient, the cores, information related to shortest paths in the
graphs, and cliques.
%
%
Part III investigates non-overlapping, as well as overlapping communities that are found in \conceptnet.
%
%
Finally, Part IV describes an investigation on rules.
\end{abstract}

\newpage
\renewcommand{\abstractname}{Acknowledgements}
\begin{abstract}
The author would like to thank Robert H.~Sloan and Gy{\"o}rgy Tur{\'a}n for suggesting him this subject. 
The author would also like to thank Tanya Y.~Berger-Wolf for giving him access to a computer
with sufficiently enough memory to perform the computations required in Chapter \ref{ch:communities:overlapping}.
\end{abstract}

\pagenumbering{roman}
\setcounter{page}{1}

\tableofcontents


\newpage
\pagenumbering{arabic}
\setcounter{page}{1}
\part{Closure of the Input, Validity, and Consistency}

\chapter{Validity and Closure of the Database}\label{chapter:closure}
Our aim is to compute the minimal data-set implied by the assertions of the English language,
extract it from the database, and store it in files of our own format.
Towards this direction we read the table of assertions (conceptnet\_assertion) and keep
the entries that have their \texttt{language\_id} set to \dbtext{en}.
According to Table \ref{tbl:csv:assertion} in Appendix \ref{chapter:appendix:conceptnet-db}, every assertion
is associated with entries from the database tables conceptnet\_concept (Table \ref{tbl:csv:concept}), 
conceptnet\_relation (Table \ref{tbl:csv:relation}), nl\_frequency (Table \ref{tbl:csv:frequency}),
conceptnet\_frame (Table \ref{tbl:csv:frame}), conceptnet\_surfaceform (Table \ref{tbl:csv:surfaceform}), 
and conceptnet\_rawassertion (Table \ref{tbl:csv:rawassertion}).
Through conceptnet\_rawassertion the assertions are also associated with the actual sentences
which are located in the table corpus\_sentence (Table \ref{tbl:csv:surfaceform}).
Moreover, we do not need any other table from the database, as the important entries from all the above
tables are contained in among these tables.

\medskip

It turns out that reading once the assertions and then all the entries referenced from the assertions
in the English language is not enough to produce a minimal consistent data-set. 
Section \ref{sec:closure:computation} explains why, and gives a high-level overview of the process 
that we follow in order to compute the closure of the data-set implied by the assertions of the English language.
However, before we describe these reasons we mention which fields we are going to keep from each table
of the original \conceptnet database.

\paragraph{Fields Retained from Each Database Table.}
A description of the information of the fields that we retain in every case follows.
\begin{description}[noitemsep,leftmargin=8mm,topsep=0.5mm]
 \item [conceptnet\_assertion:] Everything but the \texttt{language\_id} field.
 \item [conceptnet\_concept:] \texttt{id}, \texttt{text}.
 \item [conceptnet\_relation:] Everything.
 \item [nl\_frequency:] Everything but the \texttt{language\_id} field.
 \item [conceptnet\_frame:] The fields \texttt{question\_yn}, \texttt{question1}, and \texttt{question2}
are null in the entire database, hence we can safely ignore them. We are also dropping the
\texttt{language\_id} field as well as the \texttt{goodness} field. 
 \item [conceptnet\_surfaceform:] We retain the information of the fields 
\texttt{id}, \texttt{concept\_id}, and \texttt{text}.
 \item [conceptnet\_rawassertion:] We retain the information of the fields \texttt{id},
\texttt{sentence\_id}, \texttt{assertion\_id}, \texttt{surface1\_id}, \texttt{surface2\_id},
\texttt{frame\_id}, and \texttt{score}.
 \item [corpus\_sentence:] We retain the information of the fields \texttt{id}, \texttt{text},
and \texttt{score}.
\end{description}

\section{High Level Description for the Computation of the Closure}\label{sec:closure:computation}
In this section we give a high-level description of the process that we follow in order to compute
the closure of the data-set implied by the assertions of the English language.
Ultimately we also want to use \emph{igraph} \citep{igraph} to aid our analysis of the networks
induced by \conceptnet.

\subsection{First Pass}
During the first pass of the tables in the database we read the IDs of all the objects
and store in matrices which IDs actually appear in the database. This is an important step
since some references in the database are inconsistent. For example, some \texttt{best\_raw\_id}'s
found in the table conceptnet\_assertion point essentially to nowhere, since these particular
IDs do not appear anywhere in the conceptnet\_rawassertion table.

\subsection{Second Pass} 
During the second pass of the tables in the database we extract the entries of
assertions in the English language and all the entries from the other tables that are referenced
from assertions. Moreover, we also extract all the sentences that are indirectly referenced by the 
assertions through the raw assertions. Ideally, one would expect that this process is enough in order
to compute the minimal closure implied by the assertions in the English language. However, this is 
not the case. Below we describe the issues that arise after the first pass.

\paragraph{Null Entries.} 
Some fields in the tables of the database do not have
data associated to them. In the case of assertions in the English language
these entries can appear in the fields \texttt{best\_frame\_id},
\texttt{best\_surface1\_id}, \texttt{best\_surface2\_id}, and \texttt{best\_raw\_id}.
The assertion with the minimum ID that has \texttt{best\_frame\_id} equal to null is $344873$.
The assertion with the minimum ID that has \texttt{best\_surface1\_id} and \texttt{best\_surface2\_id} 
equal to null is $885221$. The assertion with the minimum ID that has \texttt{best\_raw\_id}
equal to null is $320114$.

\paragraph{Undefined Raw Assertion IDs.} 
There is an inconsistency problem regarding the IDs 
for the raw assertions that are mentioned in some entries of the assertions table. It turns out
that $39312$ different \texttt{best\_raw\_id}'s are not defined in the table of raw assertions;
i.e.~the IDs do not appear in the conceptnet\_rawassertion table. The assertion instance with the
minimum ID is $962$ which points to \texttt{best\_raw\_id} $965$.

\paragraph{Duplicate Raw Assertion IDs.} 
Multiple assertions may point to the same raw assertion.
Hence, not only we have assertions that have their \texttt{best\_raw\_id} equal to null or
undefined, but the map between assertions and raw assertions is actually a \emph{surjection}.

\paragraph{Discrepancies due to Frames.} 
When we are able to read the frames in the field
\texttt{best\_frame\_id} for an assertion (see Table \ref{tbl:csv:assertion})
we would expect that the \texttt{relation\_id} and \texttt{frequency\_id}
mentioned in the relevant entry of the conceptnet\_frame table (Table \ref{tbl:csv:frame}) 
agree with the entries found in the assertion. 
However, it turns out that this is not necessarily the case for both of these values.
For more information see Section \ref{sec:closure:first-pass:frames}.

\paragraph{Discrepancies due to Surface Forms.} 
When we are able to read the surface forms in the
fields \texttt{best\_surface1\_id} and \texttt{best\_surface2\_id} for an assertion 
(see Table \ref{tbl:csv:assertion}) we would again expect that the \texttt{concept\_id} 
mentioned in the relevant entry of the conceptnet\_surfaceform table (Table \ref{tbl:csv:surfaceform})
agree with the respective \texttt{concept1\_id} and \texttt{concept2\_id} entries in the 
conceptnet\_assertion table (Table \ref{tbl:csv:assertion}). However, it turns out that this is
not necessarily the case. In fact, this time the nature of disagreement can be dual:
\begin{itemize}[noitemsep,leftmargin=8mm,topsep=0.5mm]
 \item the IDs for the concepts disagree and both IDs are mentioned in the assertions, or
 \item the IDs for the concepts disagree but the IDs coming from the conceptnet\_surfaceform table
 are not mentioned among the assertions in the English language. 
\end{itemize}
The second case of disagreement above forces us to perform a second pass through the data so that
we can collect all the data for the $388$ concept IDs that did not appear during our first pass from the assertions 
in the English language. For more information about the quantitative properties of the discrepancies
see Section \ref{sec:closure:first-pass:surfaceforms}.

\paragraph{Discrepancies due to Raw Assertions.} 
When we are able to read the raw assertion ID in the
fields \texttt{best\_raw\_id} for an assertion 
(see Table \ref{tbl:csv:assertion}) we would again expect that the 
entries \texttt{surface1\_id}, \texttt{surface2\_id}, and \texttt{frame\_id}
mentioned in the relevant entries of the conceptnet\_rawassertion table (Table \ref{tbl:csv:rawassertion})
agree with the \texttt{best\_surface1\_id}, \texttt{best\_surface2\_id},
and \texttt{best\_frame\_id} entries in the 
conceptnet\_assertion table (Table \ref{tbl:csv:assertion}). However, it turns out that this is
not necessarily the case either.
Similarly to the case above where we find $388$ concepts not mentioned among the assertions in the English language, 
this process uncovers $540$ surface form IDs that were not
best surfaces for any assertion in the English language.
Moreover, note that from our earlier remark the map of assertions
to raw assertions is a surjection, and hence it is guaranteed that there is a discrepancy
in the \texttt{assertion\_id} entry of the conceptnet\_rawassertion table.
For more information see Section \ref{sec:closure:first-pass:rawassertions}.

\paragraph{Discrepancies on the Score Entries.} 
Most of the assertions have a valid raw assertion
associated to them. Moreover, every raw assertion is associated with an actual sentence.
Inspecting the Tables \ref{tbl:csv:assertion}, \ref{tbl:csv:rawassertion}, and \ref{tbl:csv:sentence}
we see that each of the tables conceptnet\_assertion, conceptnet\_rawassertion, and corpus\_sentence
has an entry with the score associated score. One would expect that all these three scores actually
agree with each other whenever we have a valid chain of the form: 
assertion $\longrightarrow$ raw assertion $\longrightarrow$ sentence. However, it turns out that this
is not true either. For more information see Section \ref{sec:closure:first-pass:scores}.

\paragraph{End of Second Pass}
At the end of second pass we can observe that $459662$ assertions have
all their indicators equal to zero, out of which $450205$ have positive score
and the rest $9457$ have non-positive score.
Given the fact that we have the mapping $\mbox{assertions} \surj \mbox{raw assertions}$
we could allow the indicator for the raw assertions to achieve, apart from zero, the value $18$ as well
(see Table \ref{tbl:rawassertions:indicator-distribution}). However, 
the numbers mentioned above \emph{do not change at all}. 

\subsection{Third Pass}
In the third pass we parse the data in the tables conceptnet\_concept and conceptnet\_surfaceform.
This allows us to load the concepts and the surface forms that were raised from the previous pass.
In theory, it could be the case that these new additional surface forms were referring to concepts
that have not been raised yet from the previous passes, and hence we would require one more pass 
on the conceptnet\_concept table to add these last concepts. However, this is not the case.
In other words, these newly introduced surface forms from the last pass do not refer to concepts
that we have not encountered earlier. 
Hence, this third pass is the last pass that we perform on the tables of the database.

\section{First Pass: Validating IDs}
There is not much to be said about the first pass. 
We parse all $8$ tables of the database and record which IDs of these
objects are actually valid IDs in the sense that references from other
tables to these objects are guaranteed to return a result.
The issue that forces us to follow this direction is the fact that some
\texttt{best\_raw\_id}'s found in the conceptnet\_assertion table actually
point to nowhere, since we can not find raw assertions with these specific IDs
in the table conceptnet\_rawassertion.

\section{Second Pass: Discrepancies due to Frames}\label{sec:closure:first-pass:frames}
Looking at Tables \ref{tbl:csv:assertion} and \ref{tbl:csv:frame} we would expect
the relations and the frequencies mentioned in the associated entries to agree.
However, this is not always true in both cases. Moreover, when we have discrepancies
among the relations or the frequencies, these other values appear in some other assertion
in the English language. We mention this because in Section \ref{sec:closure:first-pass:surfaceforms}
similar discrepancies will occur only among values that can be observed based on the
input from the assertions in the English language.

\medskip 

Regarding the relations, there are $816$ assertions that have 
\texttt{best\_frame\_id} equal to null. Among the not null
entries, in $564445$ assertions the relation ID from the conceptnet\_assertion
table agrees with the respective relation ID from the relevant entry in the 
conceptnet\_frame table. The rest $833$ assertions have relation ID different
from the relevant entry mentioned in the table conceptnet\_frame.

Regarding the frequencies, there are again $816$ 
assertions that have \texttt{best\_frame\_id} equal to null. Among the not null
entries, in $562798$ assertions the frequency ID from the conceptnet\_assertion
table agrees with the respective frequency ID from the relevant entry in the 
conceptnet\_frame table. The rest $2480$ assertions have a frequency ID different
from the relevant entry mentioned in the table conceptnet\_frame.

\begin{remark}[Interesting Phenomenon]
These two fields never disagree simultaneously with the entries
found in the conceptnet\_assertion table. This implies that if the relation
changes, then the frequency, which expresses the extent to which the relation holds,
does not change. Moreover, if the frequency changes, i.e.~the extent to which the
relation holds, then the relation does not change.
\end{remark}

The above information is captured in Table \ref{tbl:frames:indicator}.

\begin{table}[ht]
\caption{Distribution of the frame indicator.}\label{tbl:frames:indicator}
\begin{center}
\begin{tabular}{|c|l|r|}\hline
indicator & \multicolumn{1}{|c|}{description} & entries \\\hline\hline
0 & \texttt{best\_frame\_id} is not null, relations agree, frequencies agree       & $564445$ \\\hline
1 & \texttt{best\_frame\_id} is not null, relations agree, frequencies disagree    &   $2480$ \\\hline
2 & \texttt{best\_frame\_id} is not null, relations disagree, frequencies agree    &    $833$ \\\hline
3 & \texttt{best\_frame\_id} is not null, relations disagree, frequencies disagree &      $0$ \\\hline
4 & \texttt{best\_frame\_id} is null                                               &    $816$ \\\hline
\multicolumn{2}{r|}{sum} & $566094$ \\\cline{3-3}
\end{tabular}
\end{center}
\end{table}

\paragraph{Examples.}
We give one example for each case presented in Table \ref{tbl:frames:indicator}.
\begin{description}
 \item [Indicator = 0.] Assertion $2$ associates the concepts $5$ (\dbtext{something}) and $6$ (\dbtext{to})
with relation $6$ (\dbtext{AtLocation}) and frequency $1$ (which has value $5 (> 0)$ and the empty string for description). 
The best raw assertion for this assertion has ID $3$
which is associated with the sentence \dbtext{Somewhere something can be is next to}.
The best frame for the assertion is $3$ which is \dbtext{Somewhere \{1\} can be is next \{2\}}
and the relation associated to that frame is again $6$ (\dbtext{AtLocation}),
as well as the frequency is $1$.

 \item [Indicator = 1.] Assertion $36294$ associates the concepts $481$ (\dbtext{milk})
and $1503$ (\dbtext{refrigerator}) with relation $6$ (\dbtext{AtLocation}) and 
frequency $1$ (which has value $5$ and the empty string for description). 
The best raw assertion for this assertion has ID $368705$ which is associated with the
sentence \dbtext{Something you find the refrigerator is milk.}.
The best frame for the assertion is $2761$ which is \dbtext{Something you find \{2\} is \{1\}.}.
However, this frame is associated with relation $6$ (\dbtext{AtLocation}) and frequency $25$
(which has value $-5 < 0$ and the string description is \dbtext{not})!

 \item [Indicator = 2.] Assertion $17691$ associates the concepts $18845$ (\dbtext{bread knife})
and $13506$ (\dbtext{cut bread}) with relation $7$ (\dbtext{UsedFor}) and 
frequency $1$ (which has value $5$ and the empty string for description). 
The best raw assertion for this assertion has ID $18168$ which is associated with the
sentence \dbtext{bread knives are for cutting bread}.
The best frame for the assertion is $40$ which is \dbtext{\{1\} are \{2\}}.
However, this frame is associated with frequency $1$ and relation $5$ (\dbtext{IsA})!

 \item [Indicator = 3.] No instances.

 \item [Indicator = 4.] Assertion $344873$ associates the concepts $217239$ (\dbtext{don't})
and $217240$ (\dbtext{manipulate gene}) with relation $8$ (\dbtext{CapableOf}).
However, the \texttt{best\_frame\_id} field for this assertion is null. The \texttt{best\_raw\_id}
field is also null.
\end{description}

\section{Second Pass: Discrepancies due to Surface Forms}\label{sec:closure:first-pass:surfaceforms}
Every assertion has two best surface forms; one for each concept.
Hence, \texttt{best\_surface1\_id} describes the concept in the entry
\texttt{concept1\_id}, and \texttt{best\_surface2\_id} describes the
concept \texttt{concept2\_id}. On the other hand, every entry in the
table conceptnet\_surfaceform associates each surface form with a concept
and also provides a string representation of that concept in that particular surface form.

Ideally, we would expect \texttt{concept1\_id} or \texttt{concept2\_id} from the table
conceptnet\_assertion to match respectively with the entry of \texttt{concept\_id}
of the respective surface ID (i.e.~respectively \texttt{best\_surface1\_id} and 
\texttt{best\_surface2\_id}). However, it turns out that this is not the case.

\medskip

First of all, the table conceptnet\_assertion has $810$ entries in the English language
where the \texttt{best\_surface1\_id} and \texttt{best\_surface2\_id} are simultaneously null
(all other entries in the English language do not have null entries for either of these two parameters).
Moreover, even when we do have valid surface IDs in the respective entries of the table
conceptnet\_assertion, the \texttt{concept\_id} on the relevant entry of the table 
conceptnet\_surfaceform may point to a concept with ID not matching the respective one
obtained through conceptnet\_assertion. In fact, it may not match the relevant concept ID
of the conceptnet\_assertion table in two different ways:
\begin{itemize}[noitemsep]
 \item the concept IDs differ and both are part of the input, or
 \item the concept IDs differ but the concept with ID \texttt{concept\_id} is \emph{missing} (i.e.~does not appear)
among all the assertions of the English language.
\end{itemize}
The last case may happen either because that concept ID did not appear in any assertion but does
appear in the conceptnet\_concept table, or (I have not checked and I find quite unlikely the following)
because no such concept ID appears in the conceptnet\_concept table (similar phenomenon to that
observed on the IDs of raw assertions).

As a consequence of the above we can distinguish $16$ cases which are shown in Table \ref{tbl:surfaceforms:indicator}.

\begin{table}[h]
\caption{Distribution of the indicator for surface forms.}\label{tbl:surfaceforms:indicator}
\begin{center}
\begin{tabular}{|c|l|r|}\hline
\multicolumn{1}{|c|}{indicator} & \multicolumn{1}{c|}{description} & \multicolumn{1}{c|}{entries} \\\hline\hline
 0 & surface 1 not null and concept IDs agree; surface 2 not null and concept IDs agree       & 561530 \\\hline
 1 & surface 1 not null and concept IDs agree; surface 2 not null and concept IDs disagree    &   2513 \\\hline
 2 & surface 1 not null and concept IDs agree; surface 2 not null and concept ID missing      &    383 \\\hline
 3 & surface 1 not null and concept IDs agree; surface 2 is null                              &      0 \\\hline
 4 & surface 1 not null and concept IDs disagree; surface 2 not null and concept IDs agree    &    814 \\\hline
 5 & surface 1 not null and concept IDs disagree; surface 2 not null and concept IDs disagree &     28 \\\hline
 6 & surface 1 not null and concept IDs disagree; surface 2 not null and concept ID missing   &      3 \\\hline
 7 & surface 1 not null and concept IDs disagree; surface 2 is null                           &      0 \\\hline
 8 & surface 1 not null and concept ID missing; surface 2 not null and concept IDs agree      &     13 \\\hline
 9 & surface 1 not null and concept ID missing; surface 2 not null and concept IDs disagree   &      0 \\\hline
10 & surface 1 not null and concept ID missing; surface 2 not null and concept ID missing     &      0 \\\hline
11 & surface 1 not null and concept ID missing; surface 2 is null                             &      0 \\\hline
12 & surface 1 is null; surface 2 not null and concept IDs agree                              &      0 \\\hline
13 & surface 1 is null; surface 2 not null and concept IDs disagree                           &      0 \\\hline
14 & surface 1 is null; surface 2 not null and concept ID missing                             &      0 \\\hline
15 & surface 1 is null; surface 2 is null                                                     &    810 \\\hline
\multicolumn{2}{r|}{sum} & 566094 \\\cline{3-3} 
\end{tabular}
\end{center}
\end{table}


\paragraph{Examples.}
Below we give the first example (as we parse the assertions in order) for each case of the indicator
variable for surface forms.
\begin{description}
 \item [Indicator = 0.] Assertion $2$ relates concepts $5$ (\dbtext{something}) and $6$ (\dbtext{to}).
The best surface forms have IDs respectively $5$ and $6$ (same numbers as the concept IDs; it just happened).
These surface forms in turn point to the same concepts that we have in the assertion; i.e.~$5$ and $6$ respectively
and the text representation of the concepts obtained from the conceptnet\_concept table is the same as the text
representation obtained from the conceptnet\_surfaceform table.

Note that the text representation of the concepts need not be the same in general, even in this class.
An example in this direction is assertion $7$ which relates concepts $13$ (\dbtext{strike match}) and 
$14$ (\dbtext{burn down church}). The best surface form IDs respectively are $14$ and $15$.
The concept IDs obtained from these surface forms agree respectively with the concept IDs that we have in the assertion.
However, the strings that we get through the surface forms are respectively
\dbtext{striking a match} and \dbtext{burning down churches}.

 \item [Indicator = 1.] Assertion $335$ relates concepts $538$ (\dbtext{toothpaste}) and $327340$ (\dbtext{clean one tooth}).
The best surface forms have IDs respectively $565$ and $22753$.
The second surface form disagrees, since it points to the concept $311600$.
The string from the second surface form is \dbtext{cleaning teeth}.

 \item [Indicator = 2.] Assertion $29378$ relates concepts $5$ (\dbtext{something}) and $312273$ (\dbtext{one with all that be}).
The best surface forms have IDs respectively $5$ and $186249$.
The concept obtained from the second surface form has ID $322557$ and is not part of the input 
since it does not appear in any assertion in the English language.
The string from the second surface form is \dbtext{where it should be}.

 \item [Indicator = 3.] No instances.

 \item [Indicator = 4.] Assertion $1464$ relates concepts $1906$ (\dbtext{most people}) and $1121$ (\dbtext{read book}).
The best surface forms have IDs respectively $2173$ and $2174$.
However, the concept obtained from the first surface form has ID $9$. 
The string obtained for that concept from conceptnet\_concept is \dbtext{person}
while the string obtained from the surface form is \dbtext{most people}.

 \item [Indicator = 5.] Assertion $6233$ relates concepts $980$ (\dbtext{movies}) and $7356$ (\dbtext{show theater}).
The best surface forms have IDs respectively $8520$ and $8521$.
This time both concept IDs obtained from the surface forms disagree with the IDs
of the concepts found in the assertion. The surface forms give respectively
concept IDs $213$ and $316392$.
The strings for these concepts from the conceptnet\_concept table are respectively
\dbtext{movie} and \dbtext{shown theater}.
The strings for these concepts from the surface form entries are respectively
\dbtext{Movies} and \dbtext{shown in theaters}.

 \item [Indicator = 6.] Assertion $280329$ relates concepts $17626$ (\dbtext{entertain people})
and $186703$ (\dbtext{make keep friends}).
The best surface forms have IDs respectively $36638$ and $250423$.
The concept IDs obtained from the surface forms are respectively $427797$ and $326698$.
The strings for these concepts from the conceptnet\_concept table are
\dbtext{entertain person} and \dbtext{make keep friend}.
The strings for these concepts from the surface form entries are respectively
\dbtext{entertaining people} and \dbtext{making and keeping friends}.

 \item [Indicator = 7.] No instances.

 \item [Indicator = 8.] Assertion $60579$ relates concepts $49223$ (\dbtext{wah one's hair}) 
and $2697$ (\dbtext{good idea}).
The best surface forms have IDs respectively $63422$ and $63423$.
The concept IDs obtained from the surface forms are respectively $314140$ and $2697$.
The strings for these concepts from the conceptnet\_concept table are
\dbtext{wah hair} and \dbtext{good idea}.
The strings for these concepts from the surface form entries are respectively
\dbtext{Wahing one's hair} and \dbtext{is a good idea}.

 \item [Indicator = 9, \ldots, 14.] No instances.

 \item [Indicator = 15.] Assertion $885221$ relates concepts 
$25036$ (\dbtext{see particular program}) and $643$ (\dbtext{enjoyment}).
The best surface form IDs are null in both cases.
\end{description}

\subsection{Concepts Raised}
This verification process raises $388$ concept IDs, all of which are valid, but were not mentioned
among the assertions in the English language.

\section{Second Pass: Discrepancies due to Raw Assertions}\label{sec:closure:first-pass:rawassertions}
Table \ref{tbl:rawassertions:indicator-distribution} shows the distribution for the indicator.

\begin{table}[p]
\caption{The distribution of the indicator for the discrepancies due to the raw assertions.}\label{tbl:rawassertions:indicator-distribution}
\begin{center}
\begin{tabular}{|c|llll|r|}\hline
indicator & \multicolumn{4}{|c|}{description} & \multicolumn{1}{|c|}{entries} \\\hline\hline
 0 & assertion agrees,    & frame agrees,    & surface 1 agrees,    & surface 2 agrees    & 523306 \\\hline
 1 & assertion agrees,    & frame agrees,    & surface 1 agrees,    & surface 2 disagrees &      0 \\\hline
 2 & assertion agrees,    & frame agrees,    & surface 1 agrees,    & surface 2 missing   &      0 \\\hline
 3 & assertion agrees,    & frame agrees,    & surface 1 disagrees, & surface 2 agrees    &      0 \\\hline
 4 & assertion agrees,    & frame agrees,    & surface 1 disagrees, & surface 2 disagrees &      0 \\\hline
 5 & assertion agrees,    & frame agrees,    & surface 1 disagrees, & surface 2 missing   &      0 \\\hline
 6 & assertion agrees,    & frame agrees,    & surface 1 missing,   & surface 2 agrees    &      0 \\\hline
 7 & assertion agrees,    & frame agrees,    & surface 1 missing,   & surface 2 disagrees &      0 \\\hline
 8 & assertion agrees,    & frame agrees,    & surface 1 missing,   & surface 2 missing   &      0 \\\hline
 9 & assertion agrees,    & frame disagrees, & surface 1 agrees,    & surface 2 agrees    &      0 \\\hline
10 & assertion agrees,    & frame disagrees, & surface 1 agrees,    & surface 2 disagrees &      0 \\\hline
11 & assertion agrees,    & frame disagrees, & surface 1 agrees,    & surface 2 missing   &      0 \\\hline
12 & assertion agrees,    & frame disagrees, & surface 1 disagrees, & surface 2 agrees    &      0 \\\hline
13 & assertion agrees,    & frame disagrees, & surface 1 disagrees, & surface 2 disagrees &      0 \\\hline
14 & assertion agrees,    & frame disagrees, & surface 1 disagrees, & surface 2 missing   &      0 \\\hline
15 & assertion agrees,    & frame disagrees, & surface 1 missing,   & surface 2 agrees    &      0 \\\hline
16 & assertion agrees,    & frame disagrees, & surface 1 missing,   & surface 2 disagrees &      0 \\\hline
17 & assertion agrees,    & frame disagrees, & surface 1 missing,   & surface 2 missing   &      0 \\\hline\hline
18 & assertion disagrees, & frame agrees,    & surface 1 agrees,    & surface 2 agrees    &   1848 \\\hline
19 & assertion disagrees, & frame agrees,    & surface 1 agrees,    & surface 2 disagrees &      0 \\\hline
20 & assertion disagrees, & frame agrees,    & surface 1 agrees,    & surface 2 missing   &      0 \\\hline
21 & assertion disagrees, & frame agrees,    & surface 1 disagrees, & surface 2 agrees    &      0 \\\hline
22 & assertion disagrees, & frame agrees,    & surface 1 disagrees, & surface 2 disagrees &      0 \\\hline
23 & assertion disagrees, & frame agrees,    & surface 1 disagrees, & surface 2 missing   &      0 \\\hline
24 & assertion disagrees, & frame agrees,    & surface 1 missing,   & surface 2 agrees    &      0 \\\hline
25 & assertion disagrees, & frame agrees,    & surface 1 missing,   & surface 2 disagrees &      0 \\\hline
26 & assertion disagrees, & frame agrees,    & surface 1 missing,   & surface 2 missing   &      0 \\\hline
27 & assertion disagrees, & frame disagrees, & surface 1 agrees,    & surface 2 agrees    &      0 \\\hline
28 & assertion disagrees, & frame disagrees, & surface 1 agrees,    & surface 2 disagrees &    189 \\\hline
29 & assertion disagrees, & frame disagrees, & surface 1 agrees,    & surface 2 missing   &    607 \\\hline
30 & assertion disagrees, & frame disagrees, & surface 1 disagrees, & surface 2 agrees    &      0 \\\hline
31 & assertion disagrees, & frame disagrees, & surface 1 disagrees, & surface 2 disagrees &      0 \\\hline
32 & assertion disagrees, & frame disagrees, & surface 1 disagrees, & surface 2 missing   &      0 \\\hline
33 & assertion disagrees, & frame disagrees, & surface 1 missing,   & surface 2 agrees    &      0 \\\hline
34 & assertion disagrees, & frame disagrees, & surface 1 missing,   & surface 2 disagrees &      0 \\\hline
35 & assertion disagrees, & frame disagrees, & surface 1 missing,   & surface 2 missing   &      0 \\\hline
\multicolumn{5}{r|}{partial sum}                                                          & 525950 \\\cline{6-6}
\multicolumn{6}{c}{} \\\hline 
36 & \multicolumn{4}{|c|}{raw assertion is null}                                          &    832 \\\hline
37 & \multicolumn{4}{|c|}{raw assertion is undefined}                                     &  39312 \\\hline
\multicolumn{5}{r|}{sum}                                                                  & 566094 \\\cline{6-6}
\end{tabular}
\end{center}
\end{table}

\suppressfloats[b]

\paragraph{Examples.}
Below we give the first example (as we parse the assertions in order) 
for each case of the indicator variable for the raw assertions.

\begin{description}
\item [Indicator = 0.] Assertion $2$ 
has best raw assertion equal to $3$ which is associated with 
the sentence \dbtext{Somewhere something can be is next to} (ID $715991$).
The best frame for the assertion is $3$ which is \dbtext{Somewhere \{1\} can be is next \{2\}}.
The best surface forms are respectively $5$ (\dbtext{something}) and $6$ (\dbtext{to}).
The raw assertion points to the assertion $2$ and has the same surface forms and frame.

\item [Indicator = 1-17.] No instances.

\item [Indicator = 18.] Assertion $674$ has best raw assertion equal to $675$
which is associated with the sentence \dbtext{something can be at the movies} (ID $716856$).
Frame $43$ (\dbtext{\{1\} can be at \{2\}}) is the best frame for this assertion, 
and the two surface forms are respectively $5$ (\dbtext{something}) and $1047$ (\dbtext{the movies}).
The raw assertion has the same frame and same surfaces respectively, but points to the assertion
with ID $40199$. 
Interestingly enough, the assertion $40199$ does not point back to this
raw assertion but rather to the raw assertion with ID $43017$ which is associated with the sentence
\dbtext{Somewhere something can be is a movie}.

\item [Indicator = 19-27.] No instances.

\item [Indicator = 28.] Assertion $7270$ has best raw assertion equal to $7375$
which is associated with the sentence \dbtext{speakers are for making sound} (ID $728720$).
The best frame for this assertion is $40$ (\dbtext{\{1\} are \{2\}})
and the two surface forms are respectively $9819$ (\dbtext{speakers}) and $9820$ (\dbtext{for making sound}).
The raw assertion has frame $7$ (\dbtext{\{1\} is for \{2\}})
and the surface forms are respectively $9819$ (\dbtext{speakers}) and $143185$ (\dbtext{making sound})
The raw assertion points to the assertion $429487$ which has its \texttt{best\_raw\_id} equal to
$7375$. Both of these assertions, i.e.~the one with ID $7270$ and the one with ID $429487$,
relate the concepts $8419$ (\dbtext{speaker}) and $8420$ (\dbtext{make sound}).

\item [Indicator = 29.] Assertion $29506$ has best raw assertion equal to $31257$
which is associated with the sentence \dbtext{hands are for touching things} (ID $768019$).
The best frame for this assertion is $40$ (\dbtext{\{1\} are \{2\}})
and the two surface forms are respectively $2624$ (\dbtext{hands}) 
and $34422$ (\dbtext{for touching things}).
The raw assertion has frame $7$ (\dbtext{\{1\} is for \{2\}})
and the surface forms are respectively $2624$ (\dbtext{hands}) 
and $287669$ (\dbtext{touching things}).
The raw assertion points to the assertion $393267$ which has its \texttt{best\_raw\_id}
equal to $977445$ which is associated with the sentence \dbtext{hand is used for touch}
(ID $2308541$).

\item [Indicator = 30-35.] No instances.

\item [Indicator = 36.] The assertion with ID $320114$ has \texttt{best\_raw\_id} equal to null.

\item [Indicator = 37.] The assertion with ID $962$ points to an undefined \texttt{best\_raw\_id}
(ID $965$).
\end{description}

\subsection{Surface Forms Raised}
This verification process raises $540$ surface form IDs, all of which are valid, but were not mentioned
among the assertions in the English language.

\section{Second Pass: Discrepancies on the Score Entries}\label{sec:closure:first-pass:scores}
Table \ref{tbl:scores:discrepancies-distribution} gives a 
distribution of discrepancies according to the metric $h$ given by (\ref{eq:score:half-discrepancy})
in Section \ref{sec:scores:discrepancies-magnitude}.
Table \ref{tbl:scores:discrepancy-classes} gives a distribution of discrepancies observed
among the three tables that refer to scores. 

\subsection{Signs on Scores}
\begin{remark}[Two Signs for Scores]
We distinguish only two signs for the scores. Strictly positive ($> 0$) and 
non-positive ($\leq 0$). We do so, since every assertion when first entered into
\conceptnet has score equal to $1$. Hence, a non-positive score implies that the
assertion is not so good. This approach was also followed in \citep{analogyspace:2008}.
\end{remark}

Table \ref{tbl:scores:signs} presents the number of entries that have positive and
non-positive scores in the three tables.

\begin{table}[ht]
\caption{Positive and non-positive scores on the entries of the three tables.
In the cases of the entries of raw assertions and sentences, the values are obtained
by observing the entries in the chain assertion 
$\rightarrow$ \texttt{best\_raw\_id} $\rightarrow$ \texttt{sentence\_id}.}\label{tbl:scores:signs}
\begin{center}
\begin{tabular}{|r||c|c||c|}
\cline{2-4}
\multicolumn{1}{c|}{} & entries with positive score & entries with non-positive score & total entries \\\hline
assertions     &    492389 &         73705 & 566094 \\\hline
raw assertions &    493108 &         32842 & 525950 \\\hline
sentences      &    516324 &          9626 & 525950 \\\hline
\end{tabular}
\end{center}
\end{table}

\suppressfloats[t]

\subsection{Bounds on Scores}\label{sec:scores:extremes}
Tables \ref{tbl:scores:extremes:all} and \ref{tbl:scores:extremes:english} present the
extreme values that scores can obtain in \conceptnet. Table \ref{tbl:scores:extremes:all}
refers to the entire tables, while Table \ref{tbl:scores:extremes:english} refers
to the entries that have their \texttt{language\_id} equal to \texttt{en}.

\begin{table}[ht]
\caption{Bounds on scores from different tables; \emph{any language}}\label{tbl:scores:extremes:all}
\begin{center}
\begin{tabular}{|r||c|c|c|c|}\cline{2-5}
\multicolumn{1}{c|}{}& minimum score & maximum score & id for minimum score & id for maximum score \\\cline{2-5}\hline
assertions     & $-10$         & $311$         & $330369$  & $741038$ \\\hline
raw assertions & $-10$         & $265$         & $377317$  & $566768$ \\\hline
sentences      & $-10$         & $265$         & $1690862$ & $1509374$ \\\hline
\end{tabular}
\end{center}
\end{table}

\begin{table}[ht]
\caption{Bounds on scores from different tables when the language is restricted to \emph{English}.}\label{tbl:scores:extremes:english}
\begin{center}
\begin{tabular}{|r||c|c|c|c|}\cline{2-5}
\multicolumn{1}{c|}{}& minimum score & maximum score & id for minimum score & id for maximum score \\\cline{2-5}\hline
assertions     & $-10$         & $147$         & $330369$  & $1664$ \\\hline
raw assertions & $-10$         & $124$         & $377317$  & $19218$ \\\hline
sentences      & $-10$         & $124$/$48$    & $1690862$ & $1241798$/$1318471$ \\\hline
\end{tabular}
\end{center}
\end{table}


\paragraph{Minimum Score - Both Tables.} 
The assertion with ID $330369$ has \texttt{best\_raw\_id} equal to $377317$, which in turn
has \texttt{sentence\_id} equal to $1690862$. Hence, all the minimum values are obtained through
the same sentence of the English language:
\begin{center}
\dbtext{college is a kind of musical instrument.}
\end{center}
The frame and the surfaces also agree with each other between the assertion and the raw assertion.

\paragraph{Maximum Score - Any language.}
The assertion with ID $741038$ has score $311$ and it is an assertion in Chinese (\texttt{language\_id} is \texttt{zh-Hant}).
It refers to the raw assertion with ID $981853$ (score equal to $1$), which in turn refers to the sentence with ID $2312949$
(score equal to $1$). Google Translate gives:
\begin{center}
\dbtext{You eat because you're hungry.}
\end{center}

The raw assertion with ID $566768$ has score $265$ and it is a raw assertion in Portuguese (\texttt{language\_id} is \texttt{pt}).
It refers to the sentence with ID $1897890$ (score equal to $1$). Google Translate gives:
\begin{center}
\dbtext{People sleep when they are sleepy.}
\end{center}

The sentence with ID $1509374$ has score $265$ and it is also a sentence in Portuguese (\texttt{language\_id} is \texttt{pt}).
According to Google Translate the sentence is:
\begin{center}
\dbtext{People sleep when they are sleepy}
\end{center}

\begin{remark}[Slight Variations $\Rightarrow$ Big Score Discrepancies]
The last two examples in Portuguese differ only by a full stop!
However, the difference in scores is very large.
\end{remark}

\paragraph{Maximum Score - English Language.}
The assertion with ID $1664$ has score $147$ and has \texttt{best\_raw\_id} $19218$.
It relates the concepts \dbtext{baseball} (\texttt{concept1\_id} is $1890$) and \dbtext{sport}
(\texttt{concept2\_id} is $2130$) through the relation \texttt{IsA} (ID is $5$).

However, the score for the raw assertion $19218$ is $124$ (i.e.~different from $147$).
Note here, that this raw assertion points to the sentence with ID $748040$ which is:
\begin{center}
\dbtext{Baseball is a sport played in the U.S.}
\end{center}

\medskip

Regarding the maximum score obtained through the \texttt{corpus\_sentence} table we have
a very strange phenomenon. Just by looking at the table on those sentences that have a
tag for the English language, the maximum score is $124$ and is obtained through the sentence
with ID $1241798$. That 
sentence is:
\begin{center}
\dbtext{Baseball is a sport}
\end{center}

\begin{remark}[Baseball inconsistency on Sentences]
The sentence \dbtext{Baseball is a sport} 
with ID $1241798$ is \emph{not} referred by \emph{any} raw assertion!
This is very strange, especially because the score of this sentence is $124$, just like the score
of the raw assertion with ID $19218$, which refers to the sentence 
\dbtext{Baseball is a sport played in the U.S.} 
which in turn has score only $1$
(see above for the maximum score obtained among the raw assertions in the English language).
\end{remark}

On the other hand, if we look on all those sentences, that are associated with a raw assertion
(i.e.~we can find their IDs in some row of the \texttt{conceptnet\_rawassertion} table),
such that the raw assertion appears as \texttt{best\_raw\_id} in some assertion of the 
\texttt{conceptnet\_assertion} table,
then, the maximum score obtained is $48$ through the sentence with ID $1318471$ which is:
\begin{center}
\dbtext{bottles are often made of plastic}
\end{center}
We note here, that we have the same result even if we do the simpler search of finding the 
maximum score among the sentences in the English language that appear in some raw assertion
of the English language. In other words, the following two sets of SQL queries return the same values:
\begin{itemize}
\item SQL Query Set 1:
\begin{code}
sqlite> select max(score) from corpus\_sentence where id in (\\
\phantom{sql}...> select sentence\_id from conceptnet\_rawassertion where id in (\\
\phantom{sql}...> select best\_raw\_id from conceptnet\_assertion where language\_id = 'en'));\\
48\\
sqlite> select id from corpus\_sentence where score = 48 and id in (\\
\phantom{sql}...> select sentence\_id from conceptnet\_rawassertion where id in (\\
\phantom{sql}...> select best\_raw\_id from conceptnet\_assertion where language\_id = 'en'));\\
1318471
\end{code}

\item SQL Query Set 2:

\begin{code}
sqlite> select max(score) from corpus\_sentence where id in (\\
\phantom{sqlite> }select sentence\_id from conceptnet\_rawassertion where language\_id = 'en');\\
48\\
sqlite> select id from corpus\_sentence where score = 48 and id in (\\
\phantom{sqlite> }select sentence\_id from conceptnet\_rawassertion where language\_id = 'en');\\
1318471
\end{code}
\end{itemize}

\subsection{Magnitude of Score Inconsistencies: Discrepancy and Half-Discrepancy}\label{sec:scores:discrepancies-magnitude}
This section gives a brief description of the magnitude of the inconsistencies
that can be observed as we restrict the assertions in the English language.

\begin{definition}[Discrepancy]\label{def:score:discrepancy}
We define the discrepancy $d$ to be
\begin{equation}\label{eq:score:discrepancy}
d = \abs{s_1 - s_2} + \abs{s_2 - s_3} + \abs{s_3 - s_1} \; ,
\end{equation}
where $s_1, s_2$, and $s_3$ are the 
scores appearing
respectively
in the tables \texttt{conceptnet\_assertion},
\texttt{conceptnet\_rawassertion},
and 
\texttt{corpus\_sentence}. 
\end{definition}

\begin{definition}[Half-Discrepancy]\label{def:score:half-discrepancy}
We define half-discrepancy to be
\begin{equation}\label{eq:score:half-discrepancy}
h = \frac{d}{2} \; .
\end{equation}
\end{definition}

\medskip

The following theorem guarantees that $h\in\NN$, and hence $d$ is an even natural number.

\begin{prop}[Integer Half-Discrepancies]
Quantity $h$ is a natural number.
\end{prop}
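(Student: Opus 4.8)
The plan is to show that $d$ always equals twice the \emph{range} of the three scores, i.e.\ twice the difference between the largest and the smallest of $s_1,s_2,s_3$. Since the scores stored in \texttt{conceptnet\_assertion}, \texttt{conceptnet\_rawassertion}, and \texttt{corpus\_sentence} are integers, this range is a difference of two integers, hence an integer; being a difference of a maximum and a minimum it is moreover nonnegative, so it lies in $\NN$. That integer is exactly $h$, which settles the claim.

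Concretely, first I would record the elementary identity: for any real numbers $a,b,c$,
\begin{equation*}
\abs{a-b} + \abs{b-c} + \abs{c-a} = 2\bigl(\max\{a,b,c\} - \min\{a,b,c\}\bigr).
\end{equation*}
The left-hand side is invariant under every permutation of $a,b,c$ (each term $\abs{x-y}$ is symmetric in its two arguments, and the three unordered pairs $\{a,b\},\{b,c\},\{c,a\}$ are permuted among themselves), and so is the right-hand side; hence it suffices to verify the identity under the assumption $a \le b \le c$. In that case $\abs{a-b} = b-a$, $\abs{b-c} = c-b$, and $\abs{c-a} = c-a$, and the three terms sum to $(b-a)+(c-b)+(c-a) = 2(c-a) = 2(\max - \min)$, as required. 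The only mild subtlety here is the bookkeeping for the symmetry reduction, but it is genuinely routine.

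Applying this with $(a,b,c) = (s_1,s_2,s_3)$ gives $d = 2\bigl(\max\{s_1,s_2,s_3\} - \min\{s_1,s_2,s_3\}\bigr)$, so that
\begin{equation*}
h = \frac{d}{2} = \max\{s_1,s_2,s_3\} - \min\{s_1,s_2,s_3\}.
\end{equation*}
Since each $s_i$ is an integer, $h$ is a difference of integers, hence an integer; and since the maximum is at least the minimum, $h \ge 0$. Therefore $h \in \NN$, and consequently $d = 2h$ is an even natural number. I do not anticipate a real obstacle in this argument; the one thing worth stating explicitly is merely that the scores are integer-valued (as guaranteed by the description of the database fields), which is what upgrades "$h$ is half of an even-looking sum of absolute values" to "$h$ is a genuine natural number."
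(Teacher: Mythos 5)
Your proof is correct, but it takes a genuinely different route from the paper's. The paper argues by contradiction on parity: it assumes $d$ is odd, splits into the cases where $d$ is a sum of three odd terms or of one odd and two even terms, and in each case derives a contradiction from the fact that $\abs{x-y}$ odd (resp.\ even) forces $x$ and $y$ to have different (resp.\ equal) parity. That argument establishes only that $d$ is even, and then quietly relies on $d\ge 0$ (a sum of absolute values) to conclude $h\in\NN$ rather than merely $h\in\ZZ$. You instead prove the closed-form identity $\abs{a-b}+\abs{b-c}+\abs{c-a}=2\bigl(\max\{a,b,c\}-\min\{a,b,c\}\bigr)$, which yields $h=\max\{s_1,s_2,s_3\}-\min\{s_1,s_2,s_3\}$ outright. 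Your version is shorter, avoids the case analysis, and delivers strictly more information: an explicit interpretation of $h$ as the range of the three scores, with integrality and nonnegativity both immediate. The symmetry reduction to $a\le b\le c$ is handled correctly. The only thing to note is that your identity holds for arbitrary reals, so the integrality of the scores is the sole place where the database enters --- which you state explicitly, as you should.
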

\begin{proof}
Let us look at the quantity $d = 2\cdot h$
and note that $s_1, s_2, s_3 \in \ZZ$.
We will prove that $d$ can only be even.
Towards contradiction, assume that $d$ is odd. Then, $d$ is either
the sum of three odd values or one odd and two even values.

If $d$ is the sum of one odd and two even values, then, without loss of generality we can assume
that $\abs{s_1 - s_2} = 2k+1$, while $\abs{s_2-s_3}=2m$ and $\abs{s_3-s_1}=2n$,
where $k, m, n\in\NN$.
Since $\abs{s_2-s_3} = 2m$ and $\abs{s_3-s_1} = 2n$, it follows that $s_1$ and $s_2$
have the same parity since they differ an even amount of integer values from $s_3$.
However, this is a contradiction to $\abs{s_1-s_2} = 2k+1$ which implies that the
parity of $s_1$ and $s_2$ is different.

On the other hand, if $d$ is the sum of three odd values, then without loss of generality we can assume
that $\abs{s_1 - s_2} = 2k+1$, $\abs{s_2-s_3}=2m+1$, and $\abs{s_3-s_1}=2n+1$,
where $k, m, n\in\NN$.
Similarly to the case above, 
since $\abs{s_1 - s_2} = 2k+1$ and $\abs{s_2-s_3}=2m+1$ it follows that $s_1$ and $s_3$ have the
same parity because they differ an odd number of integer values from $s_2$. But this is
a contradiction to the assumption that $\abs{s_3-s_1}$ is odd.\qedhere
\end{proof}

Table \ref{tbl:scores:discrepancies-distribution} presents the distribution
of the magnitude of the discrepancies that we can observe among the three tables
that have score entries.

\begin{table}[h]
\caption{Distribution of half-discrepancies $h$. Half-discrepancies are given by (\ref{eq:score:half-discrepancy}).
In the cases where \texttt{best\_raw\_id} is null or undefined/missing, we set $d, h = 0$.}\label{tbl:scores:discrepancies-distribution}
\begin{center}
\hspace{\fill}
\begin{tabular}{|c|r|}\hline
$h$ & entries \\\hline\hline
  0 &  504889 \\\hline
  1 &   50972 \\\hline
  2 &    5990 \\\hline
  3 &    1976 \\\hline 
  4 &     931 \\\hline
  5 &     499 \\\hline
  6 &     224 \\\hline
  7 &     122 \\\hline
  8 &      51 \\\hline
  9 &     143 \\\hline
 10 &      86 \\\hline
\end{tabular}
\hspace{\fill}
\begin{tabular}{|c|r|}\hline
 $h$ & entries \\\hline\hline
  11 &      68 \\\hline
  12 &      42 \\\hline
  13 &      19 \\\hline
  14 &      20 \\\hline 
  15 &       8 \\\hline
  16 &       8 \\\hline
  17 &       2 \\\hline
  18 &       7 \\\hline
  19 &       5 \\\hline
  20 &       6 \\\hline
  21 &       1 \\\hline
\end{tabular}
\hspace{\fill}
\begin{tabular}{|c|r|}\hline
 $h$ & entries \\\hline\hline
  22 &       3 \\\hline
  23 &       2 \\\hline
  24 &       1 \\\hline
  25 &       2 \\\hline 
  26 &       1 \\\hline
  29 &       1 \\\hline
  32 &       1 \\\hline
  33 &       2 \\\hline
  35 &       1 \\\hline
  36 &       1 \\\hline
  39 &       1 \\\hline
\end{tabular}
\hspace{\fill}
\begin{tabular}{|c|r|}\hline
 $h$ & entries \\\hline\hline
  40 &      1 \\\hline
  41 &      1 \\\hline
  48 &      1 \\\hline
  62 &      1 \\\hline 
  64 &      1 \\\hline
  73 &      1 \\\hline
  77 &      1 \\\hline
 108 &      1 \\\hline
 146 &      1 \\\hline
 \multicolumn{2}{c}{}\\
 \multicolumn{2}{c}{}\\
\end{tabular}
\hspace{\fill}
\hspace{\fill}
\end{center}
\end{table}

Regarding the instance that achieves the maximum discrepancy ($146$)
please have a look in the discussion in Section \ref{sec:scores:enumarating};
in particular when the indicator is equal to $8$.

\subsection{Enumerating Score Inconsistencies between Tables}\label{sec:scores:enumarating}
Table \ref{tbl:scores:discrepancy-classes} presents the inconsistencies among the score entries
found in the three different tables as we read the assertions in the English language.

\begin{table}[ht]
\caption{Distribution of inconsistencies among the score entries found in the three
different tables of \texttt{conceptnet\_assertion}, \texttt{conceptnet\_rawassertion}, and
\texttt{corpus\_sentence}. The last column presents the maximum half-discrepancy obtained 
in each group.}\label{tbl:scores:discrepancy-classes}
\begin{center}
\begin{tabular}{|c|l||r||c|}\cline{4-4}
\multicolumn{3}{c|}{} & maximum \\\cline{1-3}
indicator & scores where \ldots & entries & $h$ \\\hline\hline
0 & all three agree                                                    & 464745 &  0 \\\hline
1 & \texttt{best\_raw\_id} is null or undefined/missing                &  40144 &  0 \\\hline
2 & assertions and raw assertions agree; sentences have same sign      &   7614 &  15 \\\hline
3 & assertions and raw assertions agree; sentences have different sign &  22933 &   3 \\\hline
4 & assertions and sentences agree; raw assertions have same sign      &    152 &   9 \\\hline
5 & assertions and sentences agree; raw assertions have different sign &    129 &   4 \\\hline
6 & raw assertions and sentences agree; assertions have same sign      &  22915 &  73 \\\hline
7 & raw assertions and sentences agree; assertions have different sign &   1616 &   8 \\\hline
8 & all three disagree; same sign ($> 0$, or $\leq 0$)                 &   5670 & 146 \\\hline
9 & all three disagree; different signs ($> 0$, or $\leq 0$)           &    176 &  15 \\\hline
\multicolumn{2}{r|}{sum}                                               & \multicolumn{1}{|c|}{566094} & \multicolumn{1}{c}{}\\\cline{3-3}
\end{tabular}
\end{center}
\end{table}

\paragraph{Examples.} We give one example for each case of the score discrepancy indicator.
\begin{description}
 \item [Indicator = 0.] 
 Assertion ID $12279$, Raw Assertion ID $351620$, Sentence ID $1432008$.
 The assertion relates concepts \dbtext{goldfish} (ID $14183$) and \dbtext{carp} (ID $14184$) with the relation \texttt{IsA}
(ID $5$). The sentence is \dbtext{a goldfish is a carp.}. The score in each case is $16$. This is 
the maximum score among all the cases in this class, and no other instance in this class achieves this score.

 \item [Indicator = 1.] Assertion with ID $320114$ has \texttt{best\_raw\_id} equal to null.
 The assertion relates the concept \dbtext{drink} (ID $120$) with itself with the relation \texttt{UsedFor} (ID $7$).
 There are $832$ assertions with null \texttt{best\_raw\_id}.

 Assertion with ID $962$ refers to raw assertion with ID $965$, but there is no raw assertion with such an ID.
 The assertion relates the concepts \dbtext{fight war} (ID $437$) and \dbtext{hate} (ID $1342$)
 with the relation \texttt{HasPrerequisite} (ID $3$), and has score equal to zero.
 There are $39312$ different \texttt{best\_raw\_id}'s such that there is no raw assertion with such an ID.
 Apparently, all these IDs are mentioned only once in the table \texttt{conceptnet\_assertion}.

 \item [Indicator = 2.]
 Assertion ID $39773$, Raw Assertion ID $42548$, Sentence ID $787525$.
 The assertion relates the concepts \dbtext{baseball} (ID $1890$) and \dbtext{game} (ID $732$) with the relation
 \texttt{IsA} (ID $5$). The sentence is \dbtext{Baseball is a game}.
 Assertion and raw assertion give a score of $16$, while the sentence gives a score of $1$.
 The half-discrepancy is $15$.
 This is the maximum half-discrepancy that can be observed in this class, and no other triple can achieve this value.

 \item [Indicator = 3.]
 Assertion ID $115013$, Raw Assertion ID $127019$, Sentence ID $942706$.
 The assertion relates the concepts \dbtext{jason} (ID $82025$) and \dbtext{late} (ID $1520$) with the relation
 \texttt{HasProperty} (ID $20$). The sentence is \dbtext{jason is not late}.
 Assertion and raw assertion give a score of $-2$, while the sentence gives a score of $1$.
 The half-discrepancy is $3$.
 This is the maximum half-discrepancy that can be observed in this class.
 A similar half-discrepancy of $3$ can be observed for the assertion with ID $544123$.

 \item [Indicator = 4.]
 Assertion ID $181807$, Raw Assertion ID $72566$, Sentence ID $840388$.
 The assertion relates the concepts \dbtext{jack} (ID $14299$) and \dbtext{child game} (ID $127337$) with the relation
 \texttt{IsA} (ID $5$). The sentence is \dbtext{Jacks is a children's game that requires agility.}.
 Assertion and sentence give a score of $1$, while the raw assertion gives a score of $10$.
 The half-discrepancy is $9$.
 This is the maximum half-discrepancy that can be observed in this class, and no other triple can achieve this value.

 \item [Indicator = 5.]
 Assertion ID $197813$, Raw Assertion ID $224508$, Sentence ID $1177474$.
 The assertion relates the concepts \dbtext{marujuana} (ID $137113$) and \dbtext{cannabis} (ID $37883$) with the relation
 \texttt{IsA} (ID $5$). The sentence is \dbtext{Marujuana is Cannabis}.
 Assertion and sentence give a score of $2$, while the raw assertion gives a score of $-2$.
 The half-discrepancy is $4$.
 This is the maximum half-discrepancy that can be observed in this class, and no other triple can achieve this value.

 \item [Indicator = 6.]
 Assertion ID $56287$, Raw Assertion ID $83533$, Sentence ID $861172$.
 The assertion relates the concepts \dbtext{pen} (ID $1205$) and \dbtext{write} (ID $1893$) with the relation
 \texttt{IsA} (ID $5$). The sentence is \dbtext{a pen is something you write with}.
 Raw assertion and sentence give a score of $1$, while the assertion gives a score of $74$.
 The half-discrepancy is $73$.
 This is the maximum half-discrepancy that can be observed in this class, and no other triple can achieve this value.

 \item [Indicator = 7.]
 Assertion ID $67530$, Raw Assertion ID $176468$, Sentence ID $1052796$.
 The assertion relates the concepts \dbtext{snake} (ID $369$) and \dbtext{leg} (ID $1252$) with the relation
 \texttt{HasA} (ID $16$). The sentence is \dbtext{A snake does not have legs.}.
 Raw assertion and sentence give a score of $1$, while the assertion gives a score of $-7$.
 The half-discrepancy is $8$.
 This is the maximum half-discrepancy that can be observed in this class, and no other triple can achieve this value.

 \item [Indicator = 8.]
 Assertion ID $1664$, Raw Assertion ID $19218$, Sentence ID $748040$.
 The assertion relates the concepts \dbtext{baseball} (ID $1890$) and \dbtext{sport} (ID $2130$) with the relation
 \texttt{IsA} (ID $5$). The sentence is \dbtext{Baseball is a sport played in the U.S.}.
 Assertion gives a score of $147$, raw assertion a score of $124$, and sentence a score of $1$.
 The half-discrepancy is $146$.
 This is the maximum half-discrepancy that can be observed in this class, and no other triple can achieve this value.

 \item [Indicator = 9.]
 Assertion ID $196090$, Raw Assertion ID $222398$, Sentence ID $1173220$.
 The assertion relates the concepts \dbtext{person} (ID $9$) and \dbtext{headache} (ID $2062$) with the relation
 \texttt{Desires} (ID $10$). The sentence is \dbtext{a person wants a headache}.
 Assertion gives a score of $7$, raw assertion a score of $-2$, and sentence a score of $13$.
 The half-discrepancy is $15$.
 This is the maximum half-discrepancy that can be observed in this class, and no other triple can achieve this value.
\end{description}

\section{Third and Final Pass}
In the third pass we parse the data in the tables conceptnet\_concept and conceptnet\_surfaceform.
This allows us to load the concepts and the surface forms that were raised from the previous pass.
In theory, it could be the case that these new additional surface forms were referring to concepts
that have not been raised yet from the previous passes, and hence we would require one more pass 
on the conceptnet\_concept table to add these last concepts. However, this is not the case.
In other words, these newly introduced surface forms from the last pass do not refer to concepts
that we have not encountered earlier. 
Hence, this third pass is the last pass that we perform on the tables of the database.


\chapter{Consistency of the Database}
The database is inconsistent. 
We have different assertions between the same concepts using the \emph{same} relation but
\emph{different} frequency. Not only that, but the \emph{value of the frequency can have opposite signs}, 
\emph{implying} essentially \emph{controversial statements}. 
Moreover, both statements can be characterized as correct since the
score (measure of the validity of the statement) is positive in both cases!

\begin{example}
We have the following instance. 
\begin{center}
\begin{tabular}{ll}
 concept 1: & man (id 7)\\
 concept 2: & animal (id 902)\\
 relation:  & IsA (id 5)
\end{tabular}
\end{center}

\begin{description}
\item [Assertion ID:] 103395
\begin{itemize}
 \item frequency: 1 (value: 5, string description: empty string)
 \item score: 3
 \item best raw assertion: 368795 (points to sentence 1672478)
 \item sentence: \textquotedblleft man is a kind of animal.\textquotedblright
\end{itemize}

\item [Assertion ID:] 616165
\begin{itemize}
 \item frequency: 25 (value: -5, string description: \textquotedblleft not\textquotedblright)
 \item score: 1
 \item best raw assertion: 827499 (points to sentence 2158613)
 \item sentence: \textquotedblleft man is not animal\textquotedblright
\end{itemize}
\end{description}
\end{example}

\part{Structural Properties of ConceptNet 4}

\chapter{High Level Overview and Conventions}\label{chapter:introduction}
In this part we will examine basic structural properties of \conceptnet.
All the results are based on the \texttt{ConceptNet.db} file located in
the~\texttt{.conceptnet} directory under our home directory. 
Regarding the specifics of the~.db file we have:
{\small
\begin{verbatim}
$ ls -l ~/.conceptnet/ConceptNet.db 
-rw-r--r-- 1 user user 959354880 Feb 11  2010 /home/user/.conceptnet/ConceptNet.db
$ file ~/.conceptnet/ConceptNet.db 
/home/user/.conceptnet/ConceptNet.db: SQLite 3.x database
$
\end{verbatim}
}

\section{Assertions}
The \conceptnet database has $828,252$ assertions; $566,094$ are in English.
These assertions define the input for the edges of the induced graphs.

\begin{convention}[Input Definition]
The input is defined by the assertions of the English language only.
\end{convention}

\begin{remark}
The preliminary analysis will consider edges that have both negative and positive score.
However, as the analysis progresses we will focus on edges that have strictly positive score,
since the rest of the assertions have received at least one negative vote, and the number of
negative votes is at least as the number of positive votes.
\end{remark}

\section{Concepts}
The \conceptnet database has $460,306$ concept IDs; $321,993$ are in English.
\begin{itemize}[noitemsep,leftmargin=8mm,topsep=0.5mm]
\item The minimum concept ID found among the assertions of the English language is: 
$5$ for \dbtext{something}.
\item The maximum concept ID found among the assertions of the English language is: 
$482,783$ for \dbtext{understand human mind brain}.
\item Number of different concepts appearing in assertions: $279,497$.
\item Number of different concepts appearing in the closure of the input: $279,885$.
\item Allowing self-loops there are $262,577$ different concepts with non-zero total degree on the induced
subgraphs formed by edges with positive score.
\item Disallowing self-loops there are $262,575$ different concepts with non-zero total degree on the induced
subgraphs formed by edges with positive score.
\end{itemize}

\begin{convention}[Number of Concepts in \conceptnet]\label{conv:number-of-concepts}
In what follows, when we refer to the total number of concepts found in \conceptnet, we mean $279,497$,
which is the number of concepts appearing in the assertions of the English language, which in turn define
our input.
\end{convention}

\section{Relations}
\conceptnet has $30$ relations; $27$ appear among the assertions in the English language.
Table \ref{tbl:relations} in Appendix \ref{sec:appendix:db-related} gives an overview of all the relations found in \conceptnet.

\section{Frequencies}
Table \ref{tbl:different-frequencies} in Appendix \ref{sec:appendix:db-related} presents the different frequencies that we
can encounter in \conceptnet in the assertions of the English language.

\section{Edges and Isolated Vertices in the Induced (Multi-)Graph Variants}
Table \ref{tbl:number-of-edges-isolated-vertices:overall} presents the number of edges as well as the isolated
vertices that we encounter in 12 different cases in \conceptnet. The cases are 12 since we can distinguish
cases based on the following:
\begin{itemize}[noitemsep,leftmargin=8mm,topsep=0.5mm]
 \item whether we allow edges with all scores or not,
 \item whether we allow self-loops or not,
 \item whether we allow edges with negative polarity, positive polarity, or finally both.
\end{itemize}

\begin{table}[ht]
\caption{Number of edges and isolated vertices on different variants of the induced subgraphs 
that can be obtained in \conceptnet by looking at the assertions of the English language.
The marks \cmark and \xmark indicate respectively whether we allow self-loops in the induced (multi-)graphs
or not. The enumeration allows all possible relations and frequencies on the edges.}\label{tbl:number-of-edges-isolated-vertices:overall}
\begin{center}
\begin{tabular}{|c|c|c||r|r|r||r|}\hline
score    & self-loops & polarity & multigraph & directed graph & undirected graph & isolated vertices \\\hline\hline
all      & \xmark     & negative &     15,327 &         15,168 &           14,707 & 267,187 \\\hline
all      & \xmark     & positive &    550,277 &        465,866 &          452,445 &   5,764 \\\hline
all      & \xmark     & both     &    565,604 &        478,624 &          464,767 &       2 \\\hline\hline
all      & \cmark     & negative &     15,342 &         15,182 &           14,721 & 267,187 \\\hline
all      & \cmark     & positive &    550,752 &        466,166 &          452,745 &   5,762 \\\hline
all      & \cmark     & both     &    566,094 &        478,929 &          465,072 &       0 \\\hline\hline
positive & \xmark     & negative &     13,497 &         13,387 &           12,989 & 267,790 \\\hline
positive & \xmark     & positive &    478,499 &        412,956 &          401,367 &  22,651 \\\hline
positive & \xmark     & both     &    491,996 &        424,525 &          412,569 &  16,922 \\\hline\hline
positive & \cmark     & negative &     13,510 &         13,399 &           13,001 & 267,790 \\\hline
positive & \cmark     & positive &    478,879 &        413,216 &          401,627 &  22,649 \\\hline
positive & \cmark     & both     &    492,389 &        424,790 &          412,834 &  16,920 \\\hline
\end{tabular}
\end{center}
\end{table}

\section{Non-Zero Degrees and Self-Loops in the Induced (Multi-)Graph Variants}
Again we distinguish four cases based on whether we include edges with all possible scores
or not and on whether we allow self-loops or not.
There are two nodes that have self-loops only among their edges.
These are the nodes with IDs $56,959$ 
(\dbtext{hansome} \footnote{This is the actual spelling of the concept.}) and 
$201,444$ (\dbtext{needless death}).
Table \ref{tbl:conceptnet:degrees:overview:directed} gives an overview of the directed case,
while
Table \ref{tbl:conceptnet:degrees:overview:undirected} gives an overview of the undirected case.
The entries for the number of vertices with non-zero degree in the undirected case are obtained 
by subtracting the number of isolated vertices found in 
Table \ref{tbl:number-of-edges-isolated-vertices:overall} from $279,497$.
Regarding the number of nodes that that have self-loops, these numbers are identical to the directed
case which is presented in Table \ref{tbl:conceptnet:degrees:overview:directed}.
However, we write down these numbers for clarity. Note that the numbers found in these column refer to
vertices and are not counting distinct self-loops. Counting distinct self-loops in different cases 
will be examined in Section \ref{sec:decomposition}.

\begin{table}[ht]
\caption{Overview on the degrees of the induced directed multigraphs and 
graphs for \conceptnet in the English language.}\label{tbl:conceptnet:degrees:overview:directed}
\begin{center}
\begin{tabular}{|c|c|c||r|r|r|r|r|}\cline{4-8}
\multicolumn{3}{c}{}  & \multicolumn{5}{|c|}{number of nodes with} \\\hline
%
%
%
score    & self-loops & polarity & $\neq 0$ in-deg & $\neq 0$ out-deg & $\neq 0$ in-/out-deg & self-loops & self-loops only \\\hline\hline
all      & \xmark     & negative &         $9,291$ &          $4,412$ &              $1,393$ &         -- & -- \\\hline
all      & \xmark     & positive &       $233,456$ &         $60,628$ &             $20,351$ &         -- & -- \\\hline
all      & \xmark     &     both &       $238,389$ &         $61,839$ &             $20,733$ &         -- & -- \\\hline\hline
all      & \cmark     & negative &         $9,293$ &          $4,412$ &              $1,395$ &       $14$ & $0$ \\\hline
all      & \cmark     & positive &       $233,462$ &         $60,634$ &             $20,361$ &      $300$ & $2$ \\\hline
all      & \cmark     &     both &       $238,395$ &         $61,845$ &             $20,743$ &      $305$ & $2$ \\\hline\hline
positive & \xmark     & negative &         $8,884$ &          $4,041$ &              $1,218$ &         -- & -- \\\hline
positive & \xmark     & positive &       $216,198$ &         $60,052$ &             $19,404$ &         -- & -- \\\hline
positive & \xmark     &     both &       $221,114$ &         $61,241$ &             $19,780$ &         -- & -- \\\hline\hline
positive & \cmark     & negative &         $8,886$ &          $4,041$ &              $1,220$ &       $12$ & $0$ \\\hline
positive & \cmark     & positive &       $216,204$ &         $60,057$ &             $19,413$ &      $260$ & $2$ \\\hline
positive & \cmark     &     both &       $221,120$ &         $61,246$ &             $19,789$ &      $265$ & $2$ \\\hline
\end{tabular}
\end{center}
\end{table}

\begin{table}[ht]
\caption{Overview of the degrees for the induced undirected multigraphs and 
graphs for \conceptnet in the English language. The columns about self-loops refer to the
multigraph only.}\label{tbl:conceptnet:degrees:overview:undirected}
\begin{center}
\begin{tabular}{|c|c|c||c|c|c|c|c|}\cline{4-6}
\multicolumn{3}{c}{}  & \multicolumn{3}{|c|}{number of nodes with} \\\hline
score    & self-loops & polarity & $\neq 0$ degree & self-loops & self-loops only \\\hline\hline
all      & \xmark     & negative &        $12,310$ &         -- & --  \\\hline
all      & \xmark     & positive &       $273,733$ &         -- & --  \\\hline
all      & \xmark     &     both &       $279,495$ &         -- & --  \\\hline\hline
all      & \cmark     & negative &        $12,310$ &       $14$ & $0$  \\\hline
all      & \cmark     & positive &       $273,735$ &      $300$ & $2$  \\\hline
all      & \cmark     &     both &       $279,497$ &      $305$ & $2$  \\\hline\hline
positive & \xmark     & negative &        $11,707$ &         -- & --  \\\hline
positive & \xmark     & positive &       $256,846$ &         -- & --  \\\hline
positive & \xmark     &     both &       $262,575$ &         -- & --  \\\hline\hline
positive & \cmark     & negative &        $11,707$ &       $12$ & $0$  \\\hline
positive & \cmark     & positive &       $256,848$ &      $260$ & $2$  \\\hline
positive & \cmark     &     both &       $262,577$ &      $265$ & $2$  \\\hline
\end{tabular}
\end{center}
\end{table}


\suppressfloats[t]

\section{Decomposition of Assertions and Edges}\label{sec:decomposition}
Table \ref{tbl:assertionDecomposition} gives the decomposition of the assertions
in the English language.

\begin{table}[ht]
\caption{Decomposition of assertions in the English language found in \conceptnet. We consider
all assertions regardless of their score and all assertions with positive score. 
Next to the number of edges or self-loops 
for each relation we see, in that order, how many have negative polarity and how many have positive polarity.}\label{tbl:assertionDecomposition}
\begin{center}
\resizebox{\textwidth}{!}{
\begin{tabular}{|r||r|l||rl|rl||rl|rl||}\cline{4-11}
\multicolumn{3}{c}{} & \multicolumn{8}{|c||}{induced directed multigraph based on assertions with}\\\cline{4-11}\hline
\multicolumn{1}{|c||}{\multirow{2}{*}{\rotatebox{90}{index}}} & \multicolumn{2}{c||}{relation} & \multicolumn{4}{c||}{all scores} & \multicolumn{4}{c||}{positive score} \\\cline{2-3}\cline{4-7}\cline{8-11}
   & id & \multicolumn{1}{c||}{name} & \multicolumn{2}{c|}{edges} & \multicolumn{2}{c||}{self-loops} & \multicolumn{2}{c|}{edges} & \multicolumn{2}{c||}{self-loops} \\\hline
 0 &  1 & HasFirstSubevent      &   4192 &      (4/4188) &      1 &   (0/1) &   4121 &     (3/4118) &      1 & (0/1)   \\\hline
 1 &  2 & HasLastSubevent       &   3066 &      (8/3058) &      2 &   (0/2) &   2971 &     (8/2963) &      2 & (0/2)   \\\hline
 2 &  3 & HasPrerequisite       &  23801 &    (68/23733) &     56 &  (0/56) &  23404 &   (55/23349) &     56 & (0/56)  \\\hline
 3 &  4 & MadeOf                &   1662 &     (29/1633) &      5 &   (1/4) &   1545 &    (25/1520) &      4 & (1/3)   \\\hline
 4 &  5 & IsA                   & 111547 & (4797/106750) &     89 & (11/78) &  94726 & (3884/90842) &     73 & (10/63) \\\hline
 5 &  6 & AtLocation            &  49508 &   (973/48535) &     43 &  (0/43) &  45192 &  (764/44428) &     26 & (0/26)  \\\hline
 6 &  7 & UsedFor               &  52135 &   (276/51859) &     31 &  (1/30) &  50451 &  (194/50257) &     30 & (1/29)  \\\hline
 7 &  8 & CapableOf             &  40141 &  (2994/37147) &     13 &  (0/13) &  39391 & (2924/36467) &     11 & (0/11)  \\\hline
 8 &  9 & MotivatedByGoal       &  15312 &    (33/15279) &     36 &  (0/36) &  15116 &   (27/15089) &     35 & (0/35)  \\\hline
 9 & 10 & Desires               &   9295 &   (4083/5212) &      4 &   (1/3) &   9059 &  (4048/5011) &      3 & (1/2)   \\\hline
10 & 12 & ConceptuallyRelatedTo &  23097 &     (0/23097) &     21 &  (0/21) &  23010 &    (0/23010) &     21 & (0/21)  \\\hline
11 & 13 & DefinedAs             &   6500 &      (3/6497) &      7 &   (0/7) &   6428 &     (0/6428) &      7 & (0/7)   \\\hline
12 & 14 & InstanceOf            &     70 &        (0/70) &      0 &   (0/0) &     69 &       (0/69) &      0 & (0/0)   \\\hline
13 & 15 & SymbolOf              &    167 &       (0/167) &      0 &   (0/0) &    166 &      (0/166) &      0 & (0/0)   \\\hline
14 & 16 & HasA                  &  55311 &   (415/54896) &     41 &  (0/41) &  22786 &  (399/22387) &     11 & (0/11)  \\\hline
15 & 17 & CausesDesire          &   5179 &     (20/5159) &      3 &   (0/3) &   4989 &    (15/4974) &      2 & (0/2)   \\\hline
16 & 18 & Causes                &  18624 &    (53/18571) &     25 &  (0/25) &  18257 &   (34/18223) &     24 & (0/24)  \\\hline
17 & 19 & HasSubevent           &  26206 &   (119/26087) &     19 &  (0/19) &  25444 &   (93/25351) &     18 & (0/18)  \\\hline
18 & 20 & HasProperty           &  93384 &  (1447/91937) &     63 &  (0/63) &  82458 & (1027/81431) &     53 & (0/53)  \\\hline
19 & 21 & PartOf                &   4935 &     (13/4922) &     13 &  (0/13) &   4676 &     (9/4667) &      8 & (0/8)   \\\hline
20 & 22 & ReceivesAction        &  10907 &     (1/10906) &      5 &   (1/4) &  10848 &    (0/10848) &      3 & (0/3)   \\\hline
21 & 24 & InheritsFrom          &    185 &       (0/185) &      2 &   (0/2) &     64 &       (0/64) &      2 & (0/2)   \\\hline
22 & 25 & CreatedBy             &    586 &       (6/580) &      2 &   (0/2) &    557 &      (1/556) &      1 & (0/1)   \\\hline
23 & 28 & HasPainCharacter      &     34 &        (0/34) &      0 &   (0/0) &     34 &       (0/34) &      0 & (0/0)   \\\hline
24 & 29 & HasPainIntensity      &     74 &        (0/74) &      0 &   (0/0) &     73 &       (0/73) &      0 & (0/0)   \\\hline
25 & 30 & LocatedNear           &   5053 &      (0/5053) &      1 &   (0/1) &   5044 &     (0/5044) &      1 & (0/1)   \\\hline
26 & 31 & SimilarSize           &   5123 &      (0/5123) &      8 &   (0/8) &   1510 &     (0/1510) &      1 & (0/1)   \\\hline
\multicolumn{11}{c}{} \\\cline{4-11}
\multicolumn{3}{r|}{total}      & 566094 & (15342/550752) &   490 & (15/475) & 492389 & (13510/478879) &   393 & (13/380) \\\cline{4-11}
\end{tabular}
}
\end{center}
\end{table}

\subsection{Partitioning Edges with Positive Score with respect to Frequencies}
Here we examine the number of edges of the induced subgraphs according to
different frequency value ranges. In every case we retain the edges with strictly positive score.
According to Convention \ref{conv:number-of-concepts} the number of nodes is $279,497$ in every case. 
Moreover, note that the number of edges of the induced multigraph with frequency values in the range
$\{-10, \ldots, 0\}$ plus the number of edges of the induced multigraph with frequency values in the range
$\{0, \ldots, 10\}$ is equal to $13,510 + 478,879 = 492,389$ which agrees with the total number of edges
with positive score mentioned in Table \ref{tbl:number-of-edges-isolated-vertices:overall}.

Table \ref{tbl:frequency-ranges:number-of-edges} gives a detailed overview in every case.
Note that from Table \ref{tbl:frequency-ranges:number-of-edges} it follows
that there are no edges with values for frequencies from the set $\{-9, -8, -7, -6, -4, -3, -1, 0, 1, 3, 6\}$,
which is, as it should be, in complete agreement with Table \ref{tbl:different-frequencies}.

\begin{table}[ht]
\caption{Number of edges in the induced subgraphs for various frequency ranges.
The columns in the cases of multigraph, directed graph, and undirected graph present the number
of edges with and without self-loops (in that order) in every case.
All relations are allowed between the concepts but the scores of the assertions have to be positive.}\label{tbl:frequency-ranges:number-of-edges}
\begin{center}
\begin{tabular}{|c|c||r|r||r|r||r|r||}\hline
\multirow{2}{*}{polarity} &
range for             & \multicolumn{6}{|c||}{number of edges with and without self-loops} \\\cline{3-8}
                      &
frequency values      & \multicolumn{2}{|c||}{multigraph} & \multicolumn{2}{c||}{directed graph} & \multicolumn{2}{c||}{undirected graph} \\\hline\hline
\multirow{11}{*}{\rotatebox{90}{negative}}
& \{-10\}             &     187 &     187 &     187 &     187 &     187 &     187 \\\cline{2-8}
& \{-10, -9\}         &     187 &     187 &     187 &     187 &     187 &     187 \\\cline{2-8}
& \{-10, -9, -8\}     &     187 &     187 &     187 &     187 &     187 &     187 \\\cline{2-8}
& \{-10, \ldots, -7\} &     187 &     187 &     187 &     187 &     187 &     187 \\\cline{2-8}
& \{-10, \ldots, -6\} &     187 &     187 &     187 &     187 &     187 &     187 \\\cline{2-8}
& \{-10, \ldots, -5\} &  13,395 &  13,382 &  13,287 &  13,275 &  12,889 &  12,877 \\\cline{2-8}
& \{-10, \ldots, -4\} &  13,395 &  13,382 &  13,287 &  13,275 &  12,889 &  12,877 \\\cline{2-8}
& \{-10, \ldots, -3\} &  13,395 &  13,382 &  13,287 &  13,275 &  12,889 &  12,877 \\\cline{2-8}
& \{-10, \ldots, -2\} &  13,510 &  13,497 &  13,399 &  13,387 &  13,001 &  12,989 \\\cline{2-8}
& \{-10, \ldots, -1\} &  13,510 &  13,497 &  13,399 &  13,387 &  13,001 &  12,989 \\\cline{2-8}
& \{-10, \ldots, 0\}  &  13,510 &  13,497 &  13,399 &  13,387 &  13,001 &  12,989 \\\hline\hline
\multirow{11}{*}{\rotatebox{90}{positive}}
& \{0, \ldots, 10\}   & 478,879 & 478,499 & 413,216 & 412,956 & 401,627 & 401,367 \\\cline{2-8}
& \{1, \ldots, 10\}   & 478,879 & 478,499 & 413,216 & 412,956 & 401,627 & 401,367 \\\cline{2-8}
& \{2, \ldots, 10\}   & 478,879 & 478,499 & 413,216 & 412,956 & 401,627 & 401,367 \\\cline{2-8}
& \{3, \ldots, 10\}   & 478,872 & 478,492 & 413,209 & 412,949 & 401,620 & 401,360 \\\cline{2-8}
& \{4, \ldots, 10\}   & 478,872 & 478,492 & 413,209 & 412,949 & 401,620 & 401,360 \\\cline{2-8}
& \{5, \ldots, 10\}   & 471,543 & 471,170 & 407,244 & 406,987 & 395,726 & 395,469 \\\cline{2-8}
& \{6, \ldots, 10\}   &   4,930 &   4,930 &   4,860 &   4,860 &   4,859 &   4,859 \\\cline{2-8}
& \{7, \ldots, 10\}   &   4,930 &   4,930 &   4,860 &   4,860 &   4,859 &   4,859 \\\cline{2-8}
& \{8, 9, 10\}        &   2,217 &   2,217 &   2,206 &   2,206 &   2,205 &   2,205 \\\cline{2-8}
& \{9, 10\}           &     445 &     445 &     444 &     444 &     443 &     443 \\\cline{2-8}
& \{10\}              &     386 &     386 &     385 &     385 &     384 &     384 \\\hline
\end{tabular}
\end{center}
\end{table}

\chapter{Degrees and Distributions}\label{chapter:degree-distributions}
Here we examine the degrees and the degree distributions on the various induced graphs. 
Figure \ref{fig:degreeDistribution:wordle} gives a snapshot of the total degree distribution
of the induced directed multigraph of \conceptnet 
as this was generated by \href{http://www.wordle.net/}{Wordle} \footnote{ Homepage: \url{http://www.wordle.net}}.
Table \ref{tbl:degreeDistribution:concepts:top40} presents the $100$ concepts with highest total degree
in the same graph (that is, directed multigraph induced by the assertions of the English language with positive score).

\begin{figure}[ht]
\begin{center}
\includegraphics[width=\textwidth]{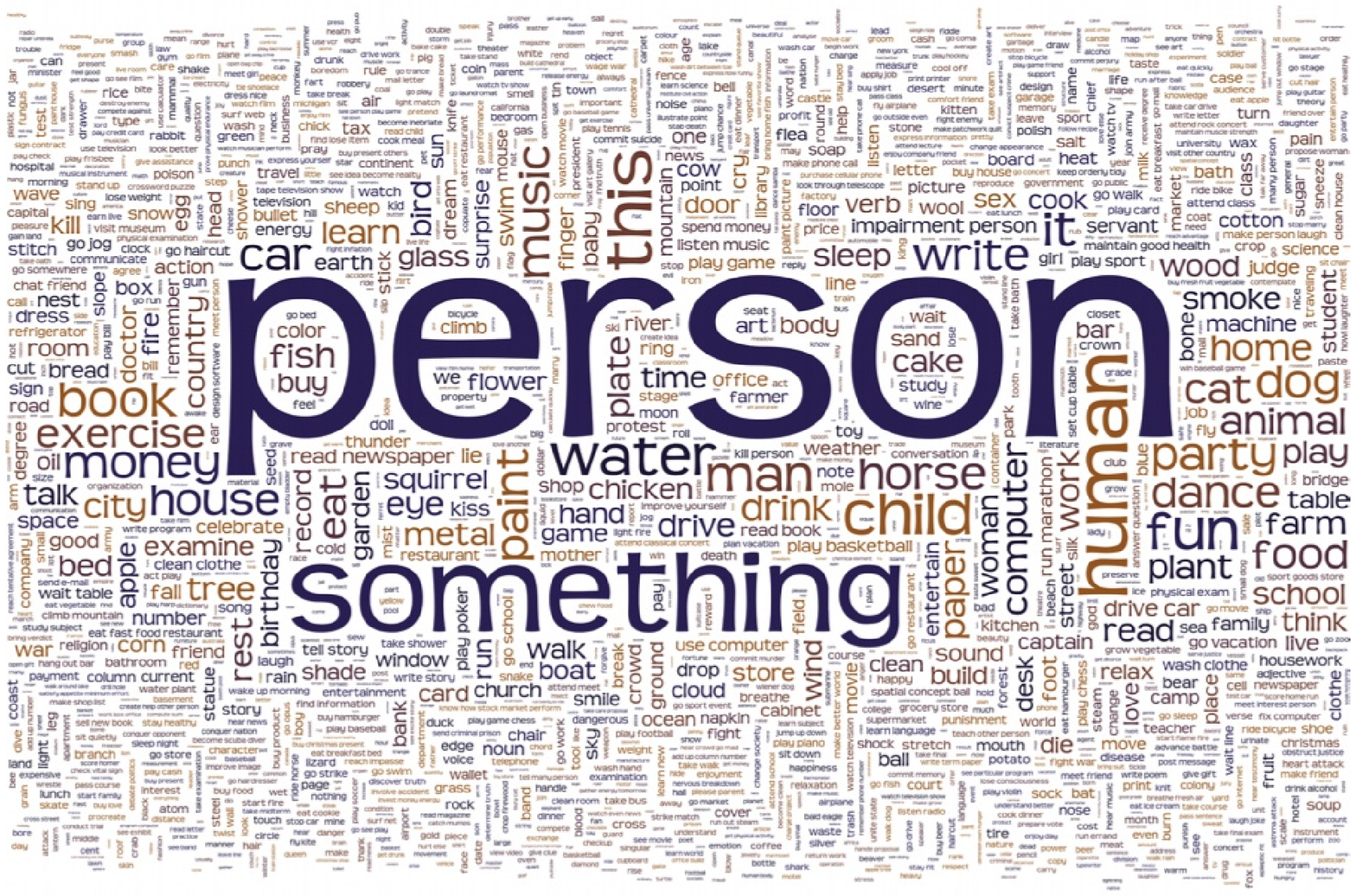}
\end{center}
\caption{The 2013 concepts with highest total degree in the directed multigraph induced by
assertions of positive score in the English language.
For clarity the total degree for the concept \dbtext{person} was scaled down to $\nicefrac{1}{3}$ 
of its actual value (see Table \ref{tbl:degreeDistribution:concepts:top40}) for better visual output.}\label{fig:degreeDistribution:wordle}
\end{figure}

\begin{table}
\caption{The $100$ concepts with the highest total degree in the directed multigraph induced
by the assertions with positive score in the English language.}\label{tbl:degreeDistribution:concepts:top40}
\begin{center}
\begin{tabular}{|l|r|}\hline
\multicolumn{1}{|c|}{concept} & \multicolumn{1}{|c|}{degree} \\\hline\hline
   person & $19,172$ \\\hline
something &  $2,893$ \\\hline
    human &  $1,794$ \\\hline
     this &  $1,637$ \\\hline
    child &  $1,500$ \\\hline
      fun &  $1,378$ \\\hline
    water &  $1,366$ \\\hline
     book &  $1,241$ \\\hline
       it &  $1,208$ \\\hline
      man &  $1,204$ \\\hline
      dog &  $1,152$ \\\hline
    money &  $1,133$ \\\hline
    party &  $1,128$ \\\hline
    paint &  $1,124$ \\\hline
    music &  $1,123$ \\\hline
    horse &  $1,122$ \\\hline
      car &  $1,114$ \\\hline
    write &  $1,095$ \\\hline
    house &  $1,089$ \\\hline
    dance &  $1,076$ \\\hline
\end{tabular}
\hspace{\fill}
\begin{tabular}{|l|r|}\hline
\multicolumn{1}{|c|}{concept} & \multicolumn{1}{|c|}{degree} \\\hline\hline
     food & $1,042$ \\\hline
      cat & $1,010$ \\\hline
 exercise &   $986$ \\\hline
   animal &   $971$ \\\hline
      eat &   $960$ \\\hline
    drink &   $927$ \\\hline
     home &   $906$ \\\hline
     fish &   $881$ \\\hline
 computer &   $876$ \\\hline
    paper &   $865$ \\\hline
    plant &   $846$ \\\hline
     city &   $832$ \\\hline
    plate &   $825$ \\\hline
     play &   $818$ \\\hline
     work &   $807$ \\\hline
     tree &   $801$ \\\hline
      eye &   $798$ \\\hline
    drive &   $796$ \\\hline
    learn &   $793$ \\\hline
     farm &   $793$ \\\hline
\end{tabular}
\hspace{\fill}
\begin{tabular}{|l|r|}\hline
\multicolumn{1}{|c|}{concept} & \multicolumn{1}{|c|}{degree} \\\hline\hline
   metal & $784$ \\\hline
    read & $781$ \\\hline
    cake & $760$ \\\hline
    rest & $756$ \\\hline
   sleep & $752$ \\\hline
    talk & $750$ \\\hline
     bed & $749$ \\\hline
    bird & $744$ \\\hline
   smoke & $732$ \\\hline
    wood & $732$ \\\hline
  school & $730$ \\\hline
    time & $714$ \\\hline
 country & $708$ \\\hline
 chicken & $704$ \\\hline
squirrel & $700$ \\\hline
   glass & $695$ \\\hline
     buy & $689$ \\\hline
   woman & $684$ \\\hline
    hand & $683$ \\\hline
   think & $672$ \\\hline
\end{tabular}
\hspace{\fill}
\begin{tabular}{|l|r|}\hline
\multicolumn{1}{|c|}{concept} & \multicolumn{1}{|c|}{degree} \\\hline\hline
    walk & $661$ \\\hline
    wind & $655$ \\\hline
birthday & $648$ \\\hline
    kill & $643$ \\\hline
  garden & $643$ \\\hline
   build & $642$ \\\hline
   apple & $638$ \\\hline
 examine & $638$ \\\hline
  record & $633$ \\\hline
    cook & $625$ \\\hline
   table & $620$ \\\hline
    verb & $620$ \\\hline
    boat & $618$ \\\hline
    fire & $615$ \\\hline
  flower & $615$ \\\hline
    door & $610$ \\\hline
    body & $610$ \\\hline
     run & $604$ \\\hline
    desk & $595$ \\\hline
     sex & $589$ \\\hline
\end{tabular}
\hspace{\fill}
\begin{tabular}{|l|r|}\hline
\multicolumn{1}{|c|}{concept} & \multicolumn{1}{|c|}{degree} \\\hline\hline
     game & $588$ \\\hline
   doctor & $583$ \\\hline
      die & $581$ \\\hline
      bar & $578$ \\\hline
      oil & $573$ \\\hline
    store & $568$ \\\hline
     room & $567$ \\\hline
    sound & $564$ \\\hline
     swim & $563$ \\\hline
     card & $562$ \\\hline
     baby & $561$ \\\hline
drive car & $558$ \\\hline
   finger & $544$ \\\hline
     live & $541$ \\\hline
     love & $541$ \\\hline
 surprise & $540$ \\\hline
  machine & $540$ \\\hline
    shade & $537$ \\\hline
     corn & $529$ \\\hline
    earth & $528$ \\\hline
\end{tabular}
\end{center}
\end{table}

\section{Average Degrees}
The average degree in every case is given by $2\abs{E}/\abs{V}$. 
Regarding the number of edges we use the entries found in Table \ref{tbl:number-of-edges-isolated-vertices:overall}.
As of the number of vertices, we use both $279,497$ which is the amount of concepts
appearing among all the assertions in the English language regardless of the score of the
assertions (Convention \ref{conv:number-of-concepts}), 
as well as the smaller values that are obtained when we subtract from that number
the number of the isolated vertices that is given in Table \ref{tbl:number-of-edges-isolated-vertices:overall}.
The multigraph has an average degree of roughly $3.6$, the directed graph of roughly $3.1$,
and the undirected graph of roughly $3.0$.
Table \ref{tbl:conceptnet:average-degree} gives the details in every case.
\begin{table}[ht]
\caption{The average degree of the induced multigraphs and graphs of \conceptnet.
All values are rounded in the third decimal point. The number of vertices in the induced graphs
is considered to be equal to $279,497$. Inside the parentheses we see the values
that are obtained when we subtract from those vertices the number of isolated vertices
as these are described in Table \ref{tbl:number-of-edges-isolated-vertices:overall}.}\label{tbl:conceptnet:average-degree}
\begin{center}
\begin{tabular}{|c|c|c||c|c|c|}\hline
score    & self-loops & polarity & directed multigraph & directed graph    & undirected graph \\\hline\hline

all      & \xmark     & negative & $0.110 \ (2.490)$ & $0.109 \ (2.464)$ & $0.105 \ (2.389)$ \\\hline
all      & \xmark     & positive & $3.938 \ (4.021)$ & $3.334 \ (3.404)$ & $3.238 \ (3.306)$ \\\hline
all      & \xmark     &     both & $4.047 \ (4.047)$ & $3.425 \ (3.425)$ & $3.326 \ (3.326)$ \\\hline\hline

all      & \cmark     & negative & $0.110 \ (2.493)$ & $0.109 \ (2.467)$ & $0.105 \ (2.392)$ \\\hline
all      & \cmark     & positive & $3.941 \ (4.024)$ & $3.336 \ (3.406)$ & $3.240 \ (3.308)$ \\\hline
all      & \cmark     &     both & $4.051$ & $3.427$ & $3.328$ \\\hline\hline

positive & \xmark     & negative & $0.097 \ (2.306)$ & $0.096 \ (2.287)$ & $0.093 \ (2.219)$ \\\hline
positive & \xmark     & positive & $3.424 \ (3.726)$ & $2.955 \ (3.216)$ & $2.872 \ (3.125)$ \\\hline
positive & \xmark     &     both & $3.521 \ (3.747)$ & $3.038 \ (3.234)$ & $2.952 \ (3.142)$ \\\hline\hline

positive & \cmark     & negative & $0.097 \ (2.308)$ & $0.096 \ (2.289)$ & $0.093 \ (2.221)$ \\\hline
positive & \cmark     & positive & $3.427 \ (3.729)$ & $2.957 \ (3.218)$ & $2.874 \ (3.127)$ \\\hline
positive & \cmark     &     both & $3.523 \ (3.750)$ & $3.040 \ (3.236)$ & $2.954 \ (3.144)$ \\\hline
\end{tabular}
\end{center}
\end{table}

\suppressfloats[b]

\section{Degree Distribution}
Figure \ref{fig:degreeDistribution:loglog} gives the degree distribution 
for the directed multigraph induced by assertions with positive score in three cases.
Recall that the polarity of the assertions can be both positive and negative. 
Hence the three cases that we
distinguish in the plots in Figure \ref{fig:degreeDistribution:loglog} are for the cases where:
\begin{itemize}[noitemsep,leftmargin=8mm,topsep=0.5mm]
 \item arbitrary polarity is allowed; that is both positive and negative,
 \item negative only polarity is allowed, and
 \item positive only polarity is allowed.
\end{itemize}


\begin{figure}[ht]
\begin{center}
\begin{subfigure}[b]{0.48\textwidth}
\includegraphics[width=\textwidth]{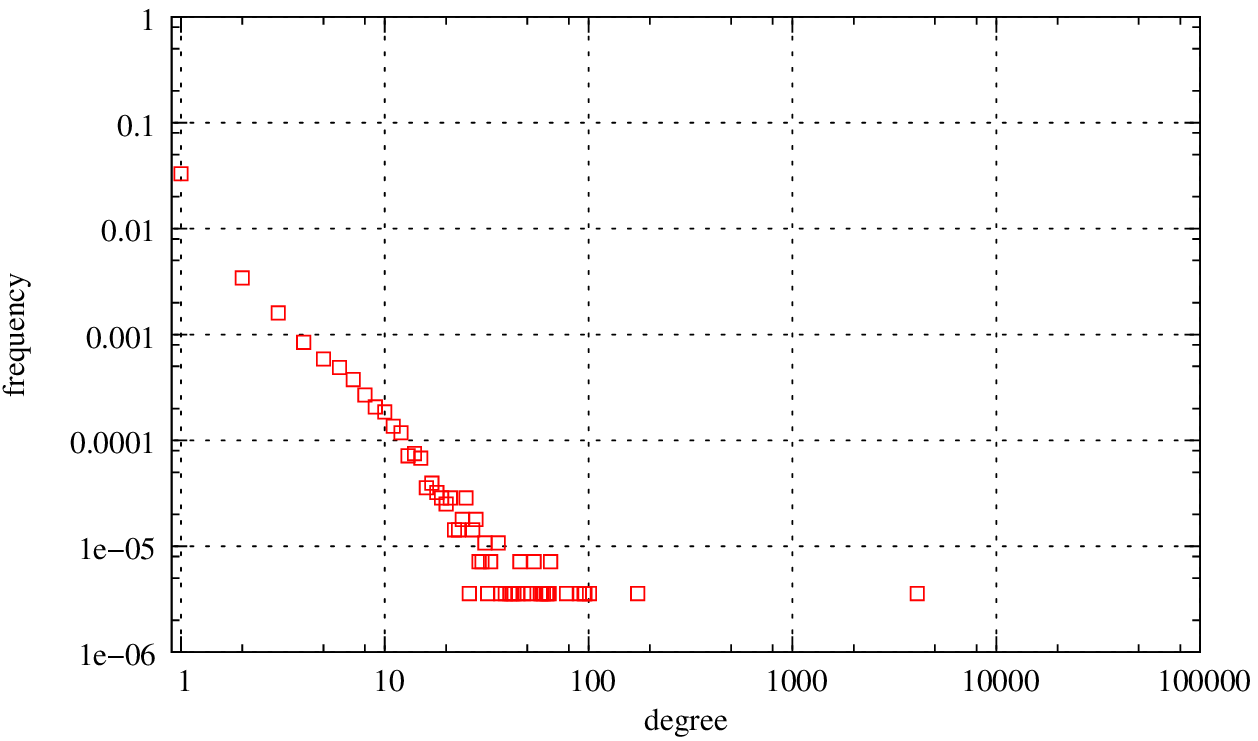}
\caption{Degree distribution when only negative polarity is taken into account.}\label{fig:degreeDistribution:loglog:negative}
\end{subfigure}
\hspace{0.02\textwidth}
\begin{subfigure}[b]{0.48\textwidth}
\includegraphics[width=\textwidth]{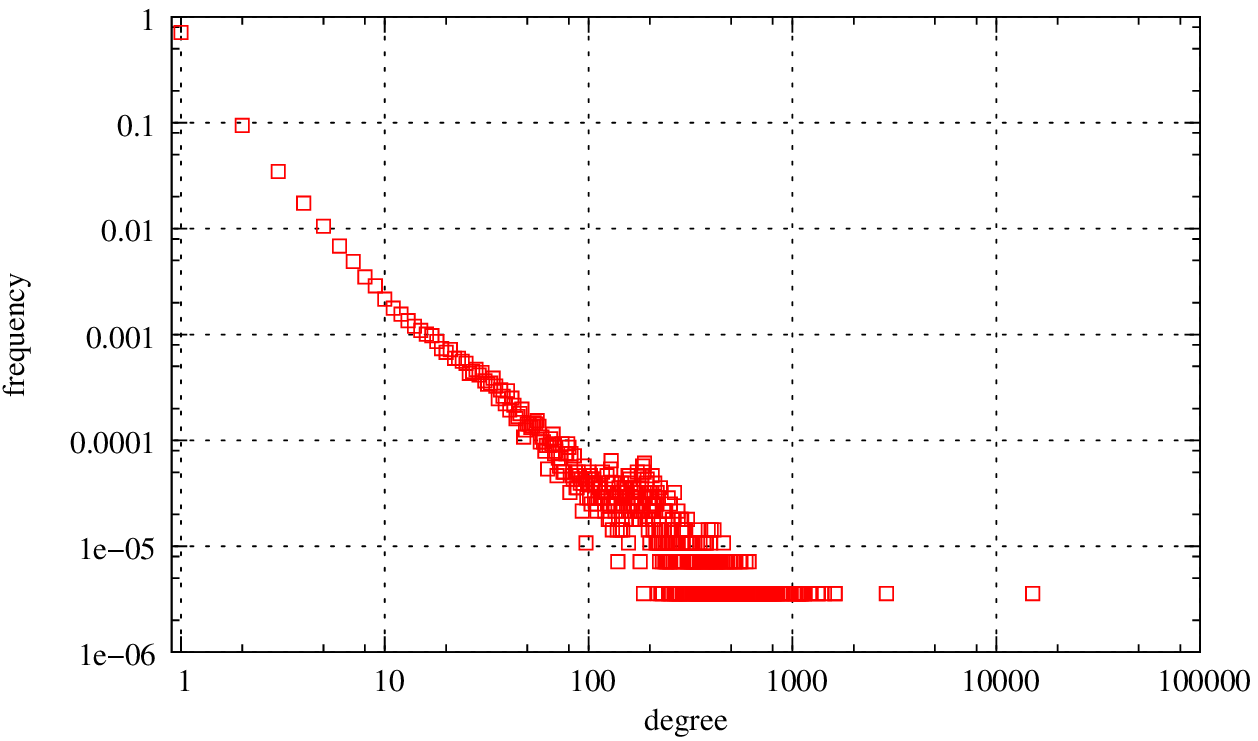}
\caption{Degree distribution when only positive polarity is taken into account.}\label{fig:degreeDistribution:loglog:positive}
\end{subfigure}

\vspace{0.05\columnwidth}

\begin{subfigure}[b]{0.9\textwidth}
\includegraphics[width=\textwidth]{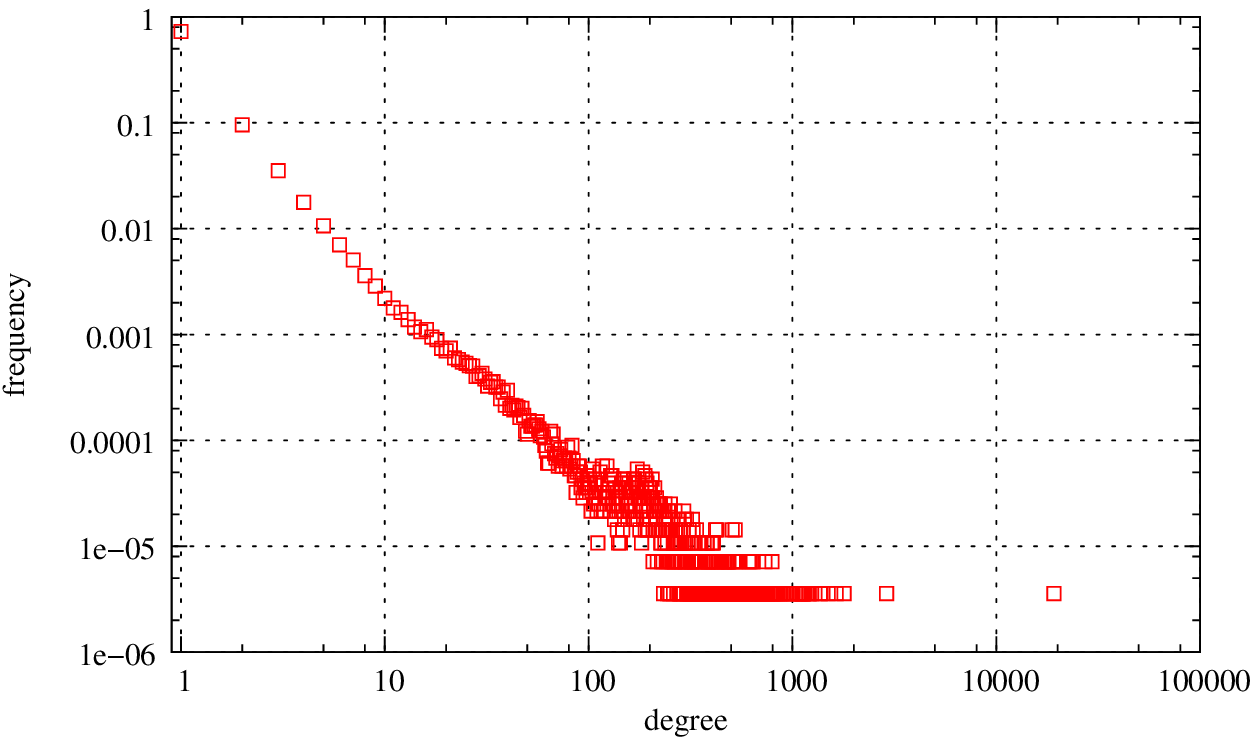}
\caption{Degree distribution when all polarities are taken into account.}\label{fig:degreeDistribution:loglog:all}
\end{subfigure}
\end{center}
\caption{Degree distributions in three different cases for the induced directed multigraph. 
In every case we take into account only the assertions of the English language with positive score. 
The different cases arise if we further want to differentiate and take into account assertions with 
negative polarity only, positive polarity only, or finally arbitrary polarity.}\label{fig:degreeDistribution:loglog}
\end{figure}

\if 0
\begin{table}[ht]
\caption{Fitting the parameters $\upalpha$ and $\upbeta$ for the function $f(x) = 10^\upbeta\cdot x^\upalpha$
for the total degree distribution of the directed multigraphs induced by the assertions of the English language
with positive score in three cases: negative only frequencies are allowed, positive only frequencies are allowed,
and any frequency is allowed (that is, both). Note that in all three cases we neglect the data for total-degrees
up to (and including) five.}\label{tbl:fit}
\begin{center}
%
%
\resizebox{\textwidth}{!}{
\begin{tabular}{|c|c|l|ll|c|c|c|}\hline
polarity & variable & \multicolumn{1}{c|}{value} & \multicolumn{2}{c|}{asymptotic standard error} & iters & $f(x) = 10^\upbeta\cdot x^\upalpha$ & $\log f(x) = \upalpha\cdot x + \upbeta$ \\\hline\hline
\multirow{2}{*}{negative} & 
$\upalpha$ & $-2.25866$  & $\pm 0.04035$ & $(1.786\%)$ & \multirow{2}{*}{27} & \multirow{2}{*}{$0.028976\cdot x^{-2.25866}$} & \multirow{2}{*}{$-2.25866\cdot x - 1.53796$} \\\cline{2-5}
&
$\upbeta$  & $-1.53796$ & $\pm 0.03532$ & $(2.296\%)$ & & & \\\hline\hline
\multirow{2}{*}{positive} & 
$\upalpha$ & $-1.92539$  & $\pm 0.01001$ & $(0.5199\%)$ & \multirow{2}{*}{29} & \multirow{2}{*}{$0.205825\cdot x^{-1.92539}$} & \multirow{2}{*}{$-1.92539\cdot x - 0.686502$} \\\cline{2-5}
&
$\upbeta$  & $-0.686502$ & $\pm 0.009089$ & $(1.324\%)$ & & & \\\hline\hline
\multirow{2}{*}{both} & 
$\upalpha$ & $-1.94403$  & $\pm 0.01036$ & $(0.5328\%)$ & \multirow{2}{*}{38} & \multirow{2}{*}{$0.21899\cdot x^{-1.94403}$} & \multirow{2}{*}{$-1.94403\cdot x - 0.659575$} \\\cline{2-5}
&
$\upbeta$  & $-0.659575$ & $\pm 0.009381$ & $(1.422\%)$ & & & \\\hline
\end{tabular}
}
\end{center}
\end{table}
\fi


The initial segment of the total-degree distribution 
(edges with both positive and negative polarity are allowed) 
is given in Table \ref{tbl:degreeDistribution:total:initial-segment}.

\begin{table}
\caption{The initial segment of the total-degree distribution in the directed multigraph induced
by the assertions of the English language with positive score. The values shown for the frequencies
in the third row are merely the numerical values obtained from the quotient $\frac{\mbox{number of concepts with degree $d$}}{279,497}$\;,
where $279,497$ is the number of nodes for the entire network according to Convention \ref{conv:number-of-concepts}.}\label{tbl:degreeDistribution:total:initial-segment}
\begin{center}
\resizebox{\textwidth}{!}{
\begin{tabular}{|r||r|r|r|r|r|r|r|r|r|r|r|}\hline
   degree &        $0$ &        $1$ &        $2$ &        $3$ &        $4$ &        $5$ &        $6$ &         $7$ &        $8$ &        $9$ & $\ldots$ \\\hline
 concepts &   $16,920$ &  $203,556$ &   $26,775$ &    $9,880$ &    $4,959$ &    $2,968$ &    $1,962$ &     $1,415$ &    $1,007$ &      $802$ & $\ldots$ \\\hline
frequency & $0.060537$ & $0.728294$ & $0.095797$ & $0.035349$ & $0.017743$ & $0.010619$ & $0.007020$ & $0.0050627$ & $0.003603$ & $0.002869$ & $\ldots$ \\\hline
\end{tabular}
}
\end{center}
\end{table}

\subsection{Fitting}
We investigate the three networks presented in Figure \ref{fig:degreeDistribution:loglog}
using the \emph{method of maximum likelihood} suggested in \citep{powerlaw} 
and the relevant tools 
\footnote{ \url{http://tuvalu.santafe.edu/~aaronc/powerlaws/}}
that are available online
\footnote{ We urge the reader to go through \url{http://vserver1.cscs.lsa.umich.edu/~crshalizi/weblog/491.html} as well.}.
The script that we use for power law fitting, 
is the implementation of Tam\'as Nepusz \footnote{ \url{https://github.com/ntamas/plfit}},
version $0.7$. A typical execution of the script for the results presented below.
\begin{code}
\$ plfit -M -p approximate inputFile
\end{code}
Hence we also get the first four central moments of the degree distribution, 
as well as calculate an approximate $p$-value.
Note that in order for the input to make sense all the concepts that are part of the input should
have degree at least $1$. In other words, we have to omit from the input all the isolated vertices.
Detailed results for every case are presented in Table \ref{tbl:powerlaw}.
Using \plplot by Joel Ornstein we obtain the figures shown in Figure \ref{fig:powerlaw:plplot}.

\begin{table}[ht]
\caption{Fitting power law in the degree distributions of \conceptnet on the multigraphs induced by the assertions with
negative polarity only, positive polarity only, or both. The exponent (scaling) is denoted by $\upalpha$, 
$x_{\min}$ is the lower bound to the power law behavior, $\mathcal{L}$ is the maximum log-likelihood, 
$D$ is the Kolmogorov-Smirnov (or KS) statistic, and $p$ is for the $p$-value.}\label{tbl:powerlaw}
\centering
\resizebox{\columnwidth}{!}{
\begin{tabular}{|l||c|r||r|c|c||c|c|r|c|r|}\hline
\multicolumn{1}{|c||}{polarity} & $\upalpha$ & \multicolumn{1}{c||}{$x_{\min}$} & \multicolumn{1}{c|}{$\mathcal{L}$} & $D$ & $p$ & mean & variance & \multicolumn{1}{c|}{std.~dev.} & skewness & \multicolumn{1}{c|}{kurtosis} \\\hline\hline 
negative & $2.77868$ & $10$ & $   -994.91$ & $0.01532$ & $0.0082$ & $2.308$ & $1,450.692$ & $38.088$ & $106.245$ & $ 11,423.245$ \\\hline 
positive & $1.82643$ & $ 5$ & $-66,869.11$ & $0.02699$ & $0.0000$ & $3.729$ & $1,488.787$ & $38.585$ & $239.212$ & $ 90,850.012$ \\\hline 
both     & $1.82572$ & $ 5$ & $-68,098.45$ & $0.02646$ & $0.0000$ & $3.750$ & $2,021.043$ & $44.956$ & $300.041$ & $126,012.584$ \\\hline 
\end{tabular}
}
\end{table}

\begin{figure}[ht]
\begin{center}
\begin{center}
\begin{subfigure}[b]{0.48\textwidth}
\includegraphics[width=\textwidth]{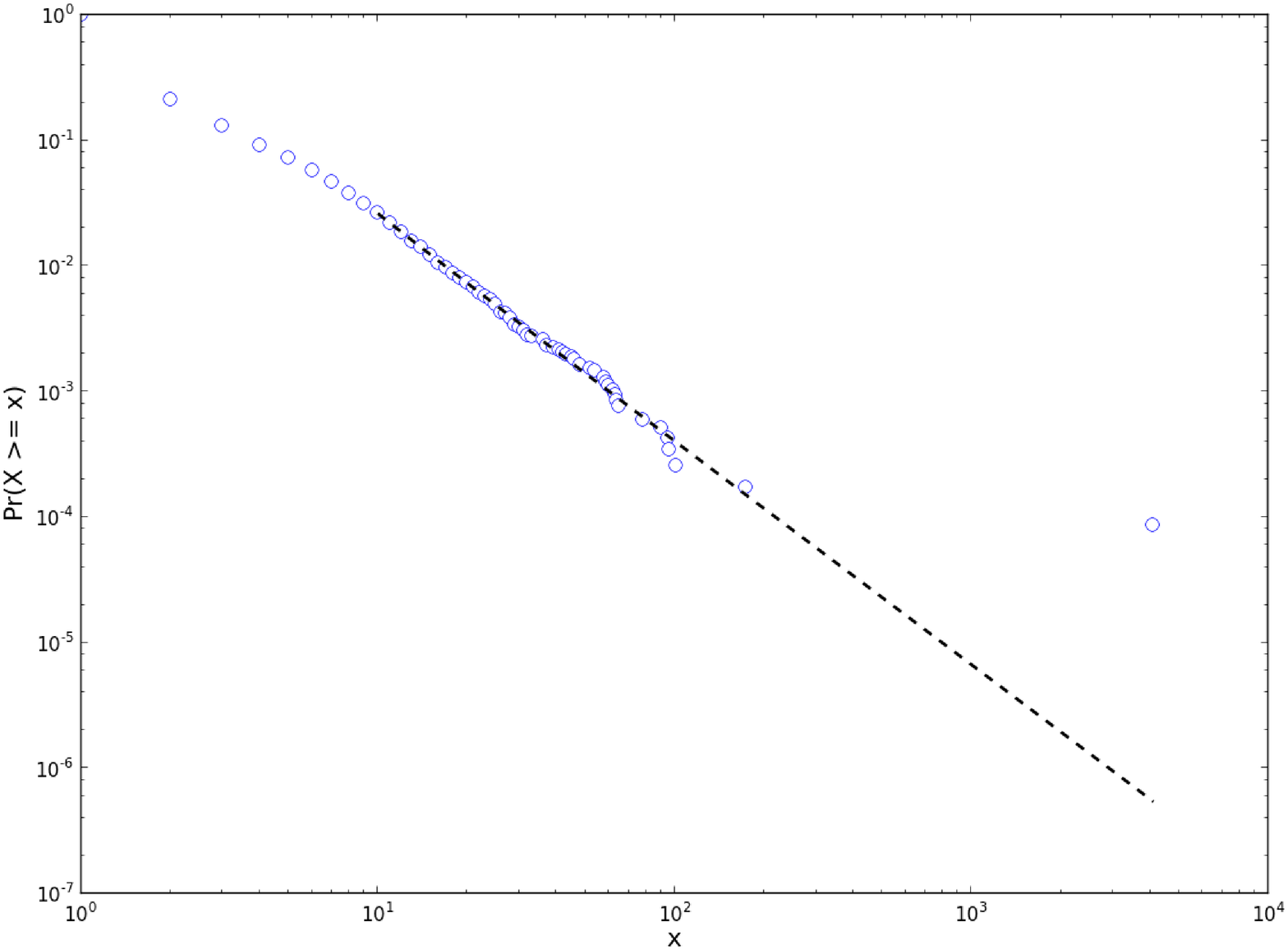}
\caption{Power law fitting when only negative polarity is taken into account.}\label{fig:MLE:negative}
\end{subfigure}
\hspace{0.02\textwidth}
\begin{subfigure}[b]{0.48\textwidth}
\includegraphics[width=\textwidth]{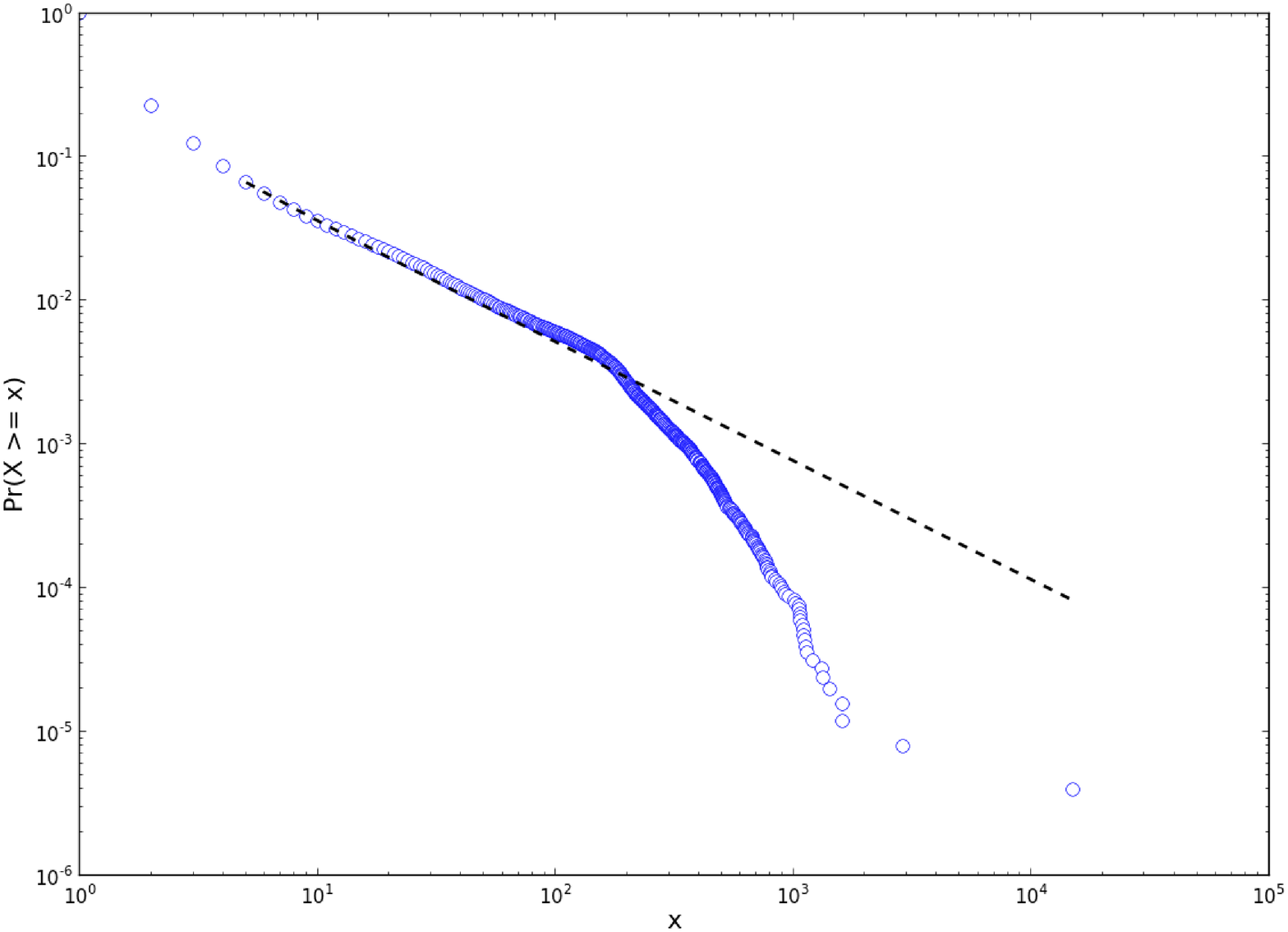}
\caption{Power law fitting when only positive polarity is taken into account.}\label{fig:MLE:positive}
\end{subfigure}

\vspace{0.05\columnwidth}

\begin{subfigure}[b]{0.9\textwidth}
\includegraphics[width=\textwidth]{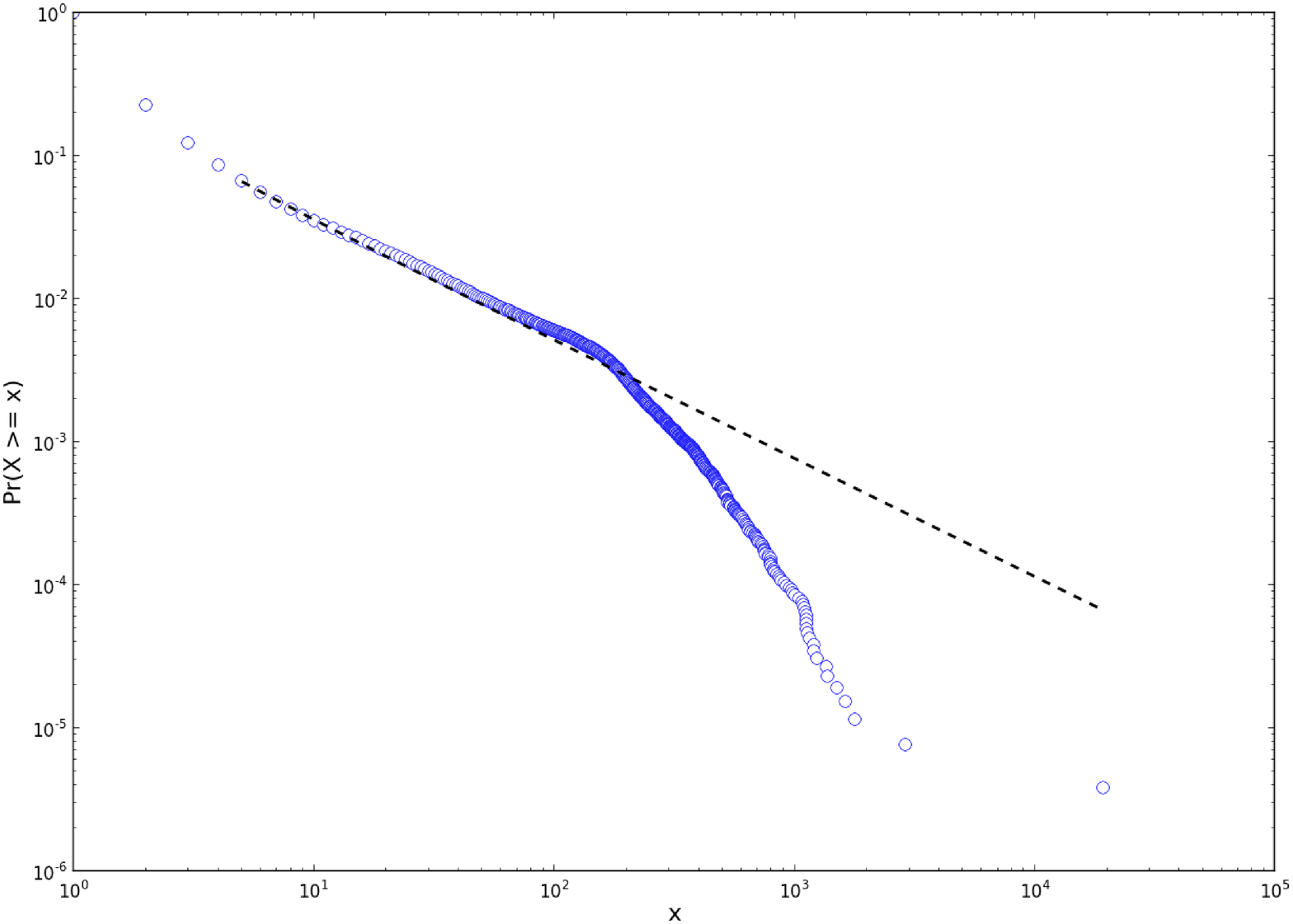}
\caption{Power law fitting when all polarities are taken into account.}\label{fig:MLE:all}
\end{subfigure}
\end{center}
\caption{}\label{fig:MLE}
\end{center}
\caption{Power law fitting in the three major degree distributions of \conceptnet using the
method of maximum likelihood presented in \citep{powerlaw}.}\label{fig:powerlaw:plplot}
\end{figure}

\chapter{Connected Components, Transitivity, and Clustering Coefficient}\label{chapter:components}
For the computations found in this chapter we are going to neglect self-loops in the
directed or undirected graphs induced by the assertions of the English language with positive score.
The reason is that self-loops do not affect the connectivity of the components.
We use the function \texttt{igraph\_clusters} of \igraph \citep{igraph} 
to compute the connected components of the graphs.

\begin{definition}[Global Transitivity \citep{transitivity}]
Transitivity measures the probability that two neighbors of a vertex are connected. 
More precisely, it is the ratio of the triangles and connected triples in the graph.
\end{definition}

\begin{definition}[Average Local Transitivity or Clustering Coefficient \citep{clusteringCoefficient}]
The average local transitivity also measures the probability that two neighbors of a vertex are connected. 
However, in case of the average local transitivity, this probability is calculated for each vertex and then the average is taken. 
Vertices with less than two neighbors require special treatment; 
they will either be left out from the calculation, or they will be considered as having zero transitivity.
Note that this measure is different from the global transitivity measure mentioned above 
as it simply takes the average local transitivity across the whole network. 
See \citep{clusteringCoefficient} for more details.
\end{definition}


Clustering coefficient is an alternative name for transitivity \citep{igraph}. In this document we will
imply the average local transitivity whenever we refer to the clustering coefficient.

\section{Transitivity and Clustering Coefficient}
Table \ref{tbl:transitivity:entire} presents
the transitivity and the clustering coefficient for the undirected graph induced by the assertions of the
English language with positive score neglecting self-loops.

\begin{table}[ht]
\caption{Transitivity and clustering coefficient for the entire graph of \conceptnet
induced by assertions with negative only polarity, positive only polarity, and both polarities.
The first value (\textsc{nan}) for the clustering coefficient gives the result of the calculation
when vertices with less than two neighbors are left out from the calculation,
while the second value (\textsc{zero}) gives the result of the calculation when 
vertices with less than two neighbors are considered as having zero transitivity.
Note that all values are the same both for directed as well as undirected graphs.}\label{tbl:transitivity:entire}
\begin{center}
\begin{tabular}{|l|r|r|r|}\cline{2-4}
                 \multicolumn{1}{r|}{} & \multicolumn{3}{c|}{polarity} \\\cline{2-4}
                 \multicolumn{1}{r|}{} & \multicolumn{1}{c|}{negative} & \multicolumn{1}{c|}{positive} & \multicolumn{1}{c|}{both} \\\hline
Transitivity                           & $0.000351298700593188$ & $0.004964054809387655$ & $0.003881697564836174$ \\\hline
Clustering Coefficient (\textsc{nan})  & $0.098300551193575281$ & $0.190154012754549323$ & $0.196101493828584605$ \\\hline
Clustering Coefficient (\textsc{zero}) & $0.000851478314399245$ & $0.032692554034191273$ & $0.034448280818741697$ \\\hline
\end{tabular}
\end{center}
\end{table}

\section{Negative Polarity: Connected Components}
First we examine the case of the directed and undirected graph induced by the assertions with negative polarity.

\subsection{Weakly Connected Components}
We get $269,167$ weakly connected components, out of which
$267,790$ are isolated vertices. Note that $267,790$ is in complete agreement with
Table \ref{tbl:number-of-edges-isolated-vertices:overall}. 
Among the rest $1,377$ components we can find
components with cardinalities between $2$ and $8,596$.

\paragraph{Distribution of Component Sizes.}
The distribution of the sizes for the various components is shown in 
Table \ref{tbl:distribution:component:negative:weak}.
This distribution presents the cardinalities of the weakly connected components
of the induced directed graph, as well as the cardinalities of the connected
components of the induced undirected graph.

\begin{table}[ht]
\caption{Distribution of sizes for weakly connected components for the induced directed 
graph. This is also the distribution of sizes for the connected 
components of the induced undirected graph. 
}\label{tbl:distribution:component:negative:weak}
\begin{center}
\begin{tabular}{|r||c|c|c|c|c|c|c|c|c|c|c|c|c|c|}\hline
\# of nodes      & $8,596$ & $13$ & $9$ & $8$ & $7$ & $6$ &  $5$ &  $4$ &   $3$ &     $2$ &       $1$ \\\hline
\# of components &     $1$ &  $2$ & $2$ & $2$ & $4$ & $8$ & $22$ & $28$ & $137$ & $1,171$ & $267,790$ \\\hline
\end{tabular}
\end{center}
\end{table}

Figure \ref{fig:weakly-connected:negative:maximal} presents the maximal weakly connected component of size $8,596$. 
Figure \ref{fig:weakly-connected:negative} presents the weakly connected components with sizes $8$, $9$ and $13$.

\begin{figure}[ht]
\begin{center}
\includegraphics[width=0.7\textwidth]{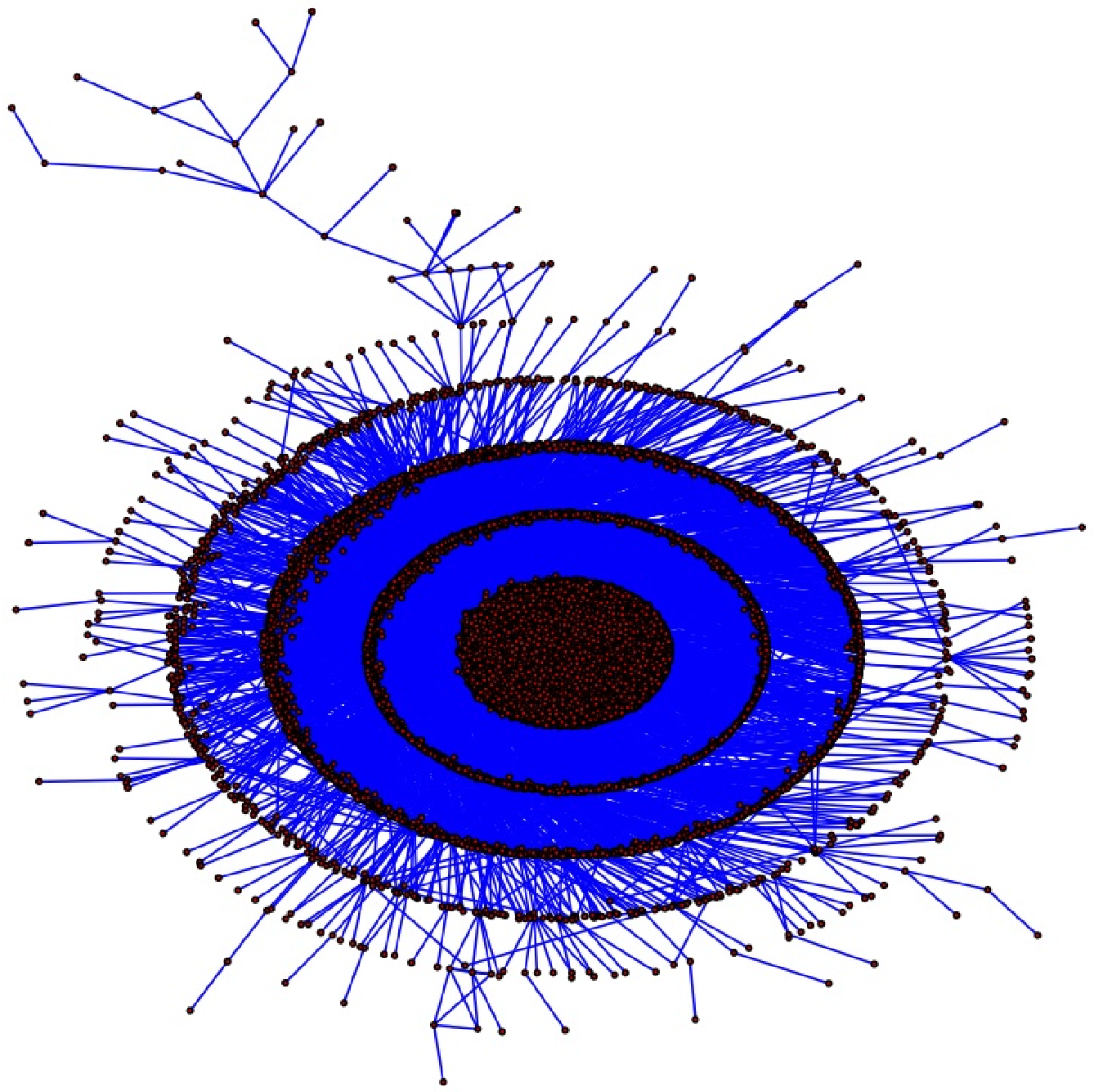}
\end{center}
\caption{The maximal weakly connected component of the graph induced by the assertions with 
negative polarity; see Table \ref{tbl:distribution:component:negative:weak}.
For simplicity we plot the induced undirected graph of that component.}\label{fig:weakly-connected:negative:maximal}
\end{figure}

\begin{figure}[ht]
\begin{center}
\begin{subfigure}[b]{0.3\textwidth}
\begin{center}
\includegraphics[width=\textwidth]{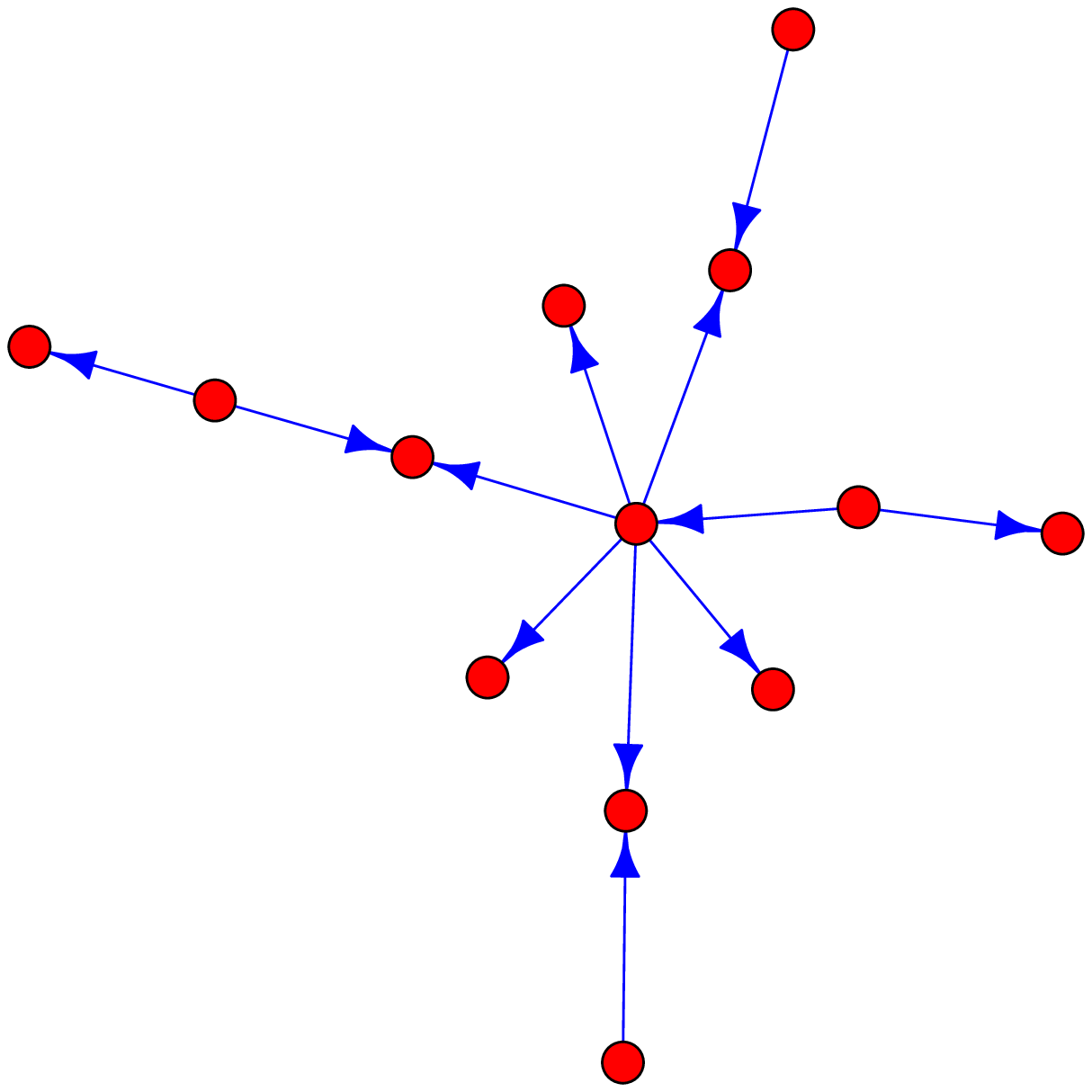}
\end{center}
\caption{may, April, make right, weak, march, will\\
$13$ nodes, $12$ edges.}\label{fig:weakly-connected:negative:13:a}
\end{subfigure}
\hspace{0.01\textwidth}
\begin{subfigure}[b]{0.3\textwidth}
\begin{center}
\includegraphics[width=\textwidth]{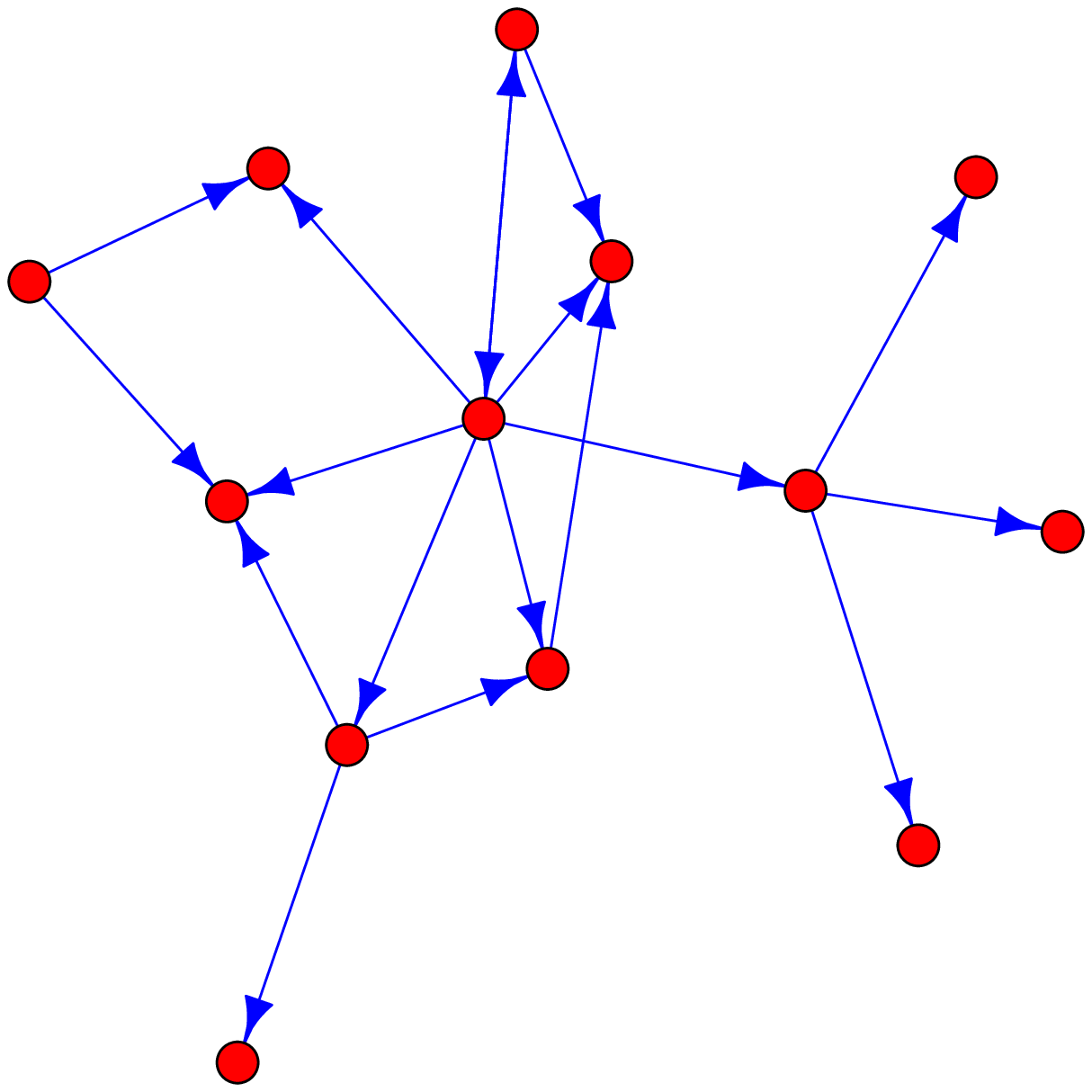}
\end{center}
\caption{division, union, add, addition, subtract, subtraction, multiplication, multiply, divide\\
$13$ nodes, $18$ edges.}\label{fig:weakly-connected:negative:13:b}
\end{subfigure}
\hspace{0.01\textwidth}
\begin{subfigure}[b]{0.3\textwidth}
\begin{center}
\includegraphics[width=\textwidth]{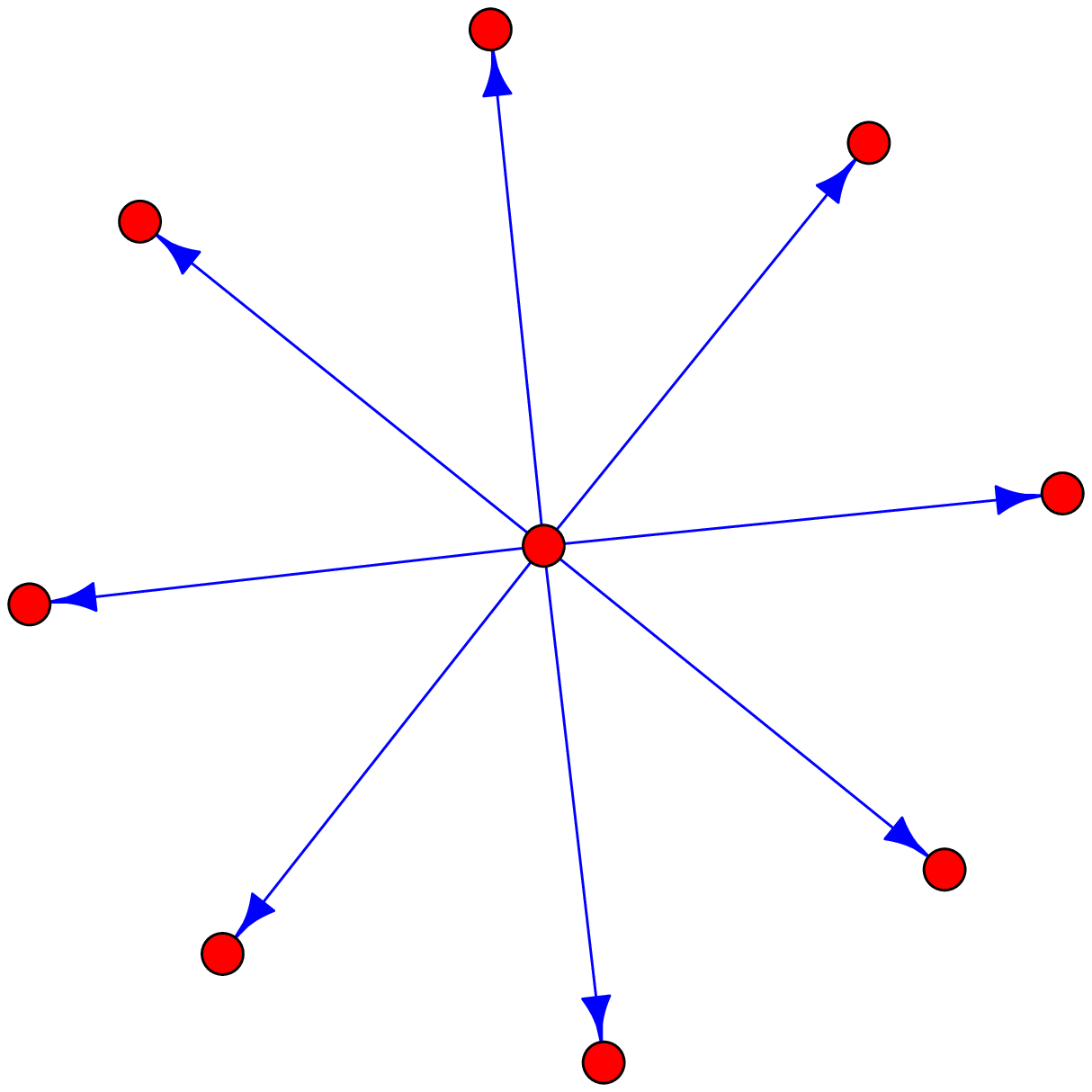}
\end{center}
\caption{if person\\
$9$ nodes, $8$ edges.}\label{fig:weakly-connected:negative:9:a}
\end{subfigure}

\begin{subfigure}[b]{0.3\textwidth}
\begin{center}
\includegraphics[width=\textwidth]{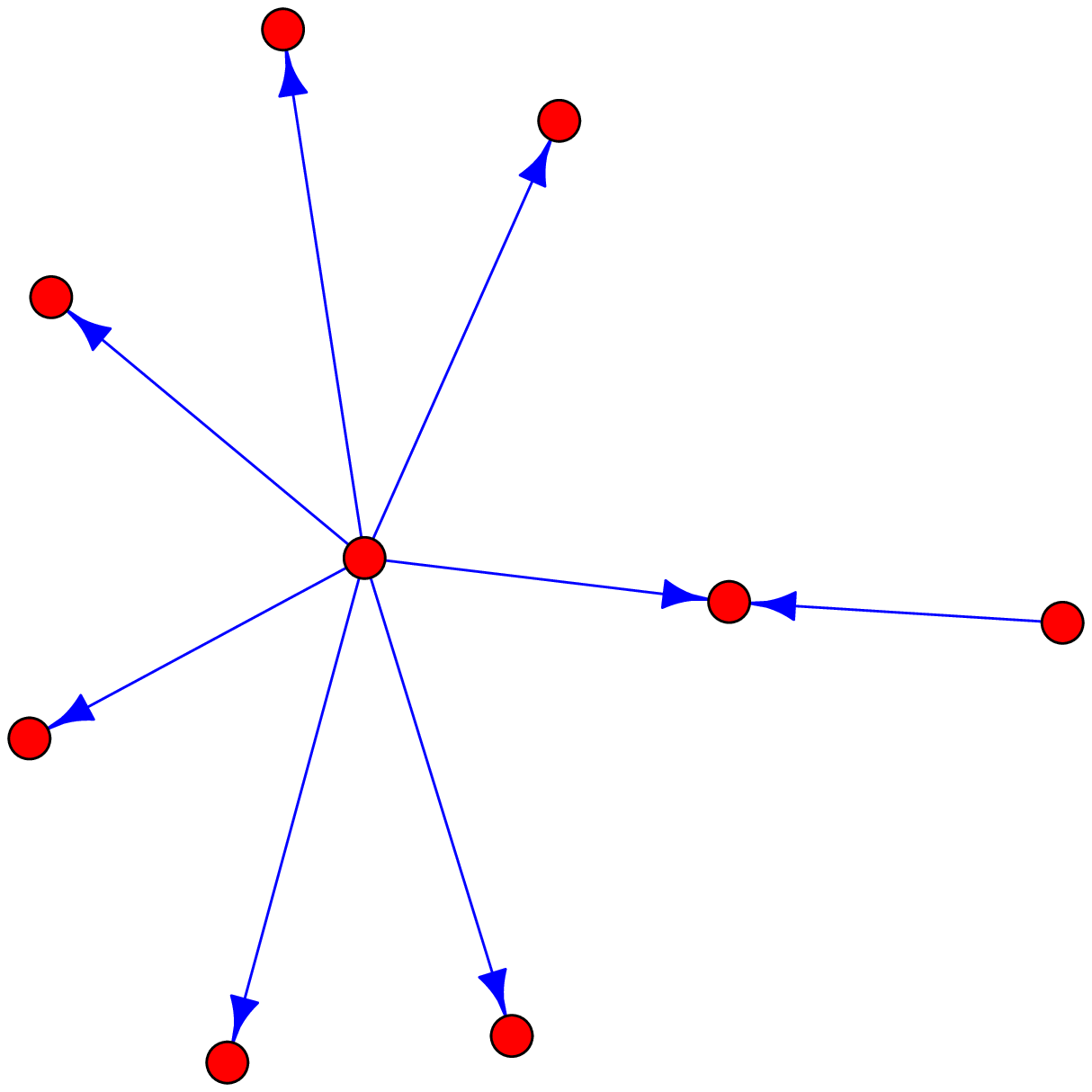}
\end{center}
\caption{topic 'sky, word drop\\
$9$ nodes, $8$ edges.}\label{fig:weakly-connected:negative:9:b}
\end{subfigure}
\hspace{0.01\textwidth}
\begin{subfigure}[b]{0.3\textwidth}
\begin{center}
\includegraphics[width=\textwidth]{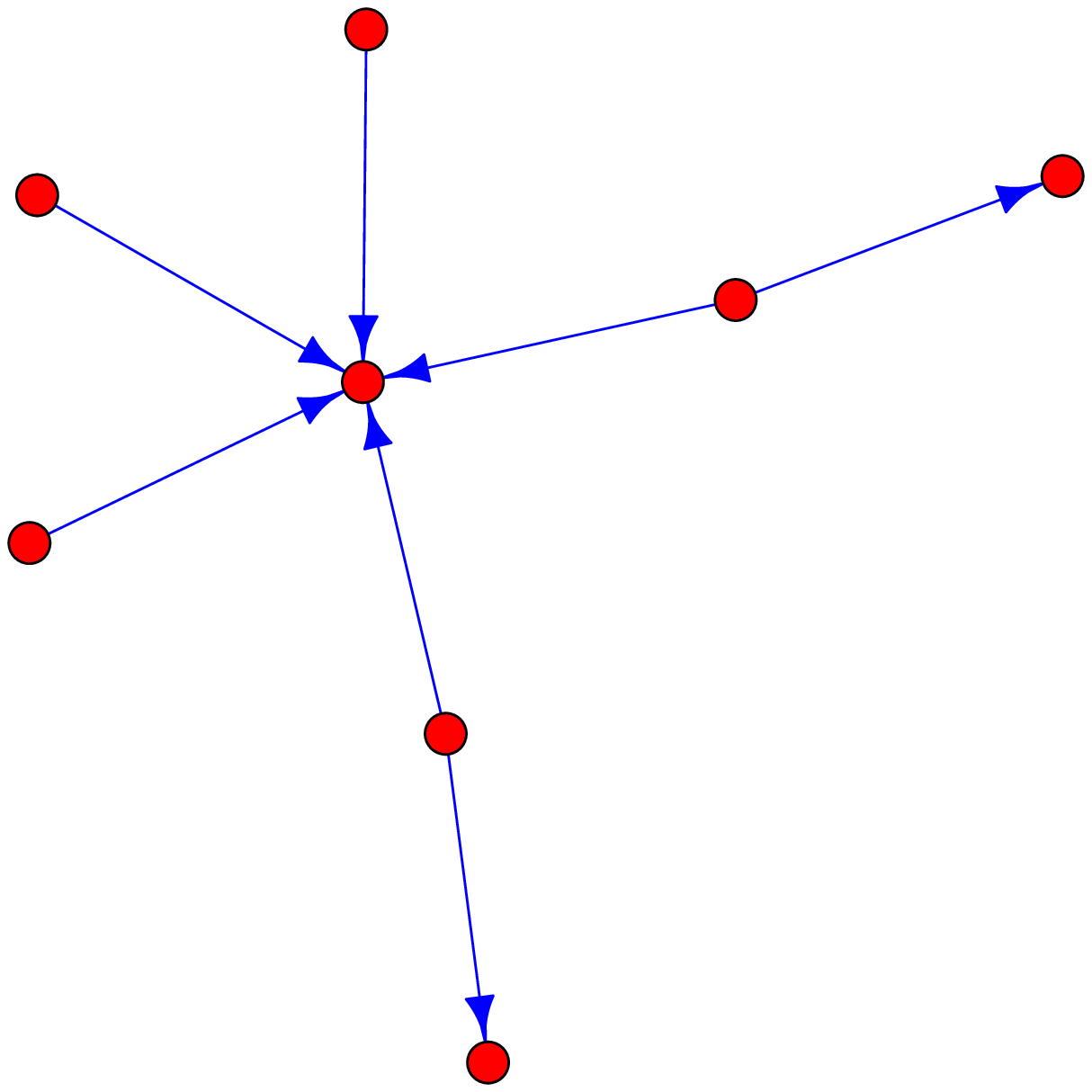}
\end{center}
\caption{comfortable, classroom chair, wicker\\
$8$ nodes, $7$ edges.}\label{fig:weakly-connected:negative:8:a}
\end{subfigure}
\hspace{0.01\textwidth}
\begin{subfigure}[b]{0.3\textwidth}
\begin{center}
\includegraphics[width=\textwidth]{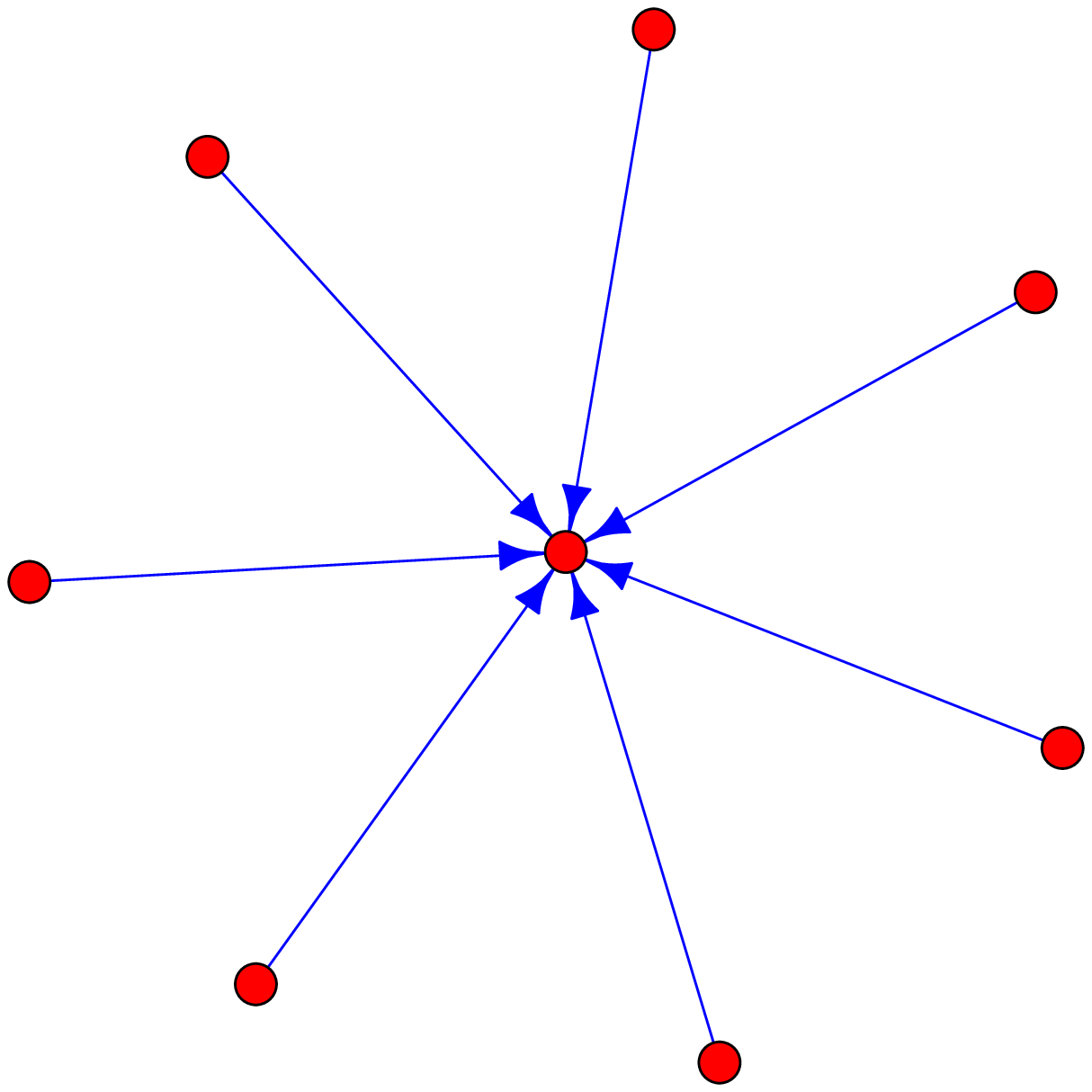}
\end{center}
\caption{good eat\\
$8$ nodes, $7$ edges.}\label{fig:weakly-connected:negative:8:b}
\end{subfigure}
\end{center}

\caption{The induced \emph{directed} subgraphs for some weakly connected components; 
see Table \ref{tbl:distribution:component:negative:weak}. 
The names of the subgraphs are given by the nodes with total degrees different from $1$.
All such nodes are listed in decreasing order of total degree.
In case of ties precedence takes the name of the node
that has larger in-degree.}\label{fig:weakly-connected:negative}
\end{figure}

\subsubsection{Big Weakly Connected Component}
The undirected graph induced by the concepts that appear in the big undirected component 
is composed of $8,596$ nodes and $11,247$ edges.
For information about shortest paths in this component please see Chapter \ref{chapter:shortest-paths}.

\subsubsection{Components of Size $13$}
In the first component of size $13$ 
concept \dbtext{may} (2606) has out-degree $6$ and in-degree $1$.
Concepts \dbtext{April} (2721), \dbtext{make right} (2766), and \dbtext{weak} (21769)
have out-degree $0$ and in-degree $2$.
Concepts \dbtext{march} (2719) and \dbtext{will} (20015)
have out-degree $2$ and in-degree $0$.
Concepts 
\dbtext{February} (2716),
\dbtext{June} (2725),
\dbtext{definite} (37022),
\dbtext{know definition word 'hemisphere} (328106), and
\dbtext{wont} (333527)
have out-degree $0$ and in-degree $1$.
Concepts 
\dbtext{two wrong} (2765) and
\dbtext{feminine woman} (109816)
have out-degree $1$ and in-degree $0$.

In the second component of size $13$
concept \dbtext{division} (14946) has out-degree $7$ and in-degree $1$.
Concepts \dbtext{union} (4832) and \dbtext{add} (54627)
have out-degree $3$ and in-degree $1$.
Concepts \dbtext{addition} (26573) and \dbtext{subtract} (108338)
have out-degree $0$ and in-degree $3$.
Concept \dbtext{subtraction} (161354) has out-degree $1$ and in-degree $2$.
Concept \dbtext{multiplication} (14387) has out-degree $2$ and in-degree $1$
Concept \dbtext{multiply} (25479) has out-degree $0$ and in-degree $2$.
Concept \dbtext{divide} (19901) has out-degree $2$ and in-degree $0$.
The rest four concepts 
\dbtext{intersection} (5593),
\dbtext{minus} (332948),
\dbtext{confederacy} (351157), and
\dbtext{confederate} (369189)
all have out-degree $0$ and in-degree $1$.

\subsubsection{Components of Size $9$}
In the first component of size $9$ concept \dbtext{if person} (48339)
has out-degree $8$ and in-degree $0$.
All the other concepts have out-degree $0$ and in-degree $1$.
These $8$ concepts are
\dbtext{water plant die} (20613),
\dbtext{green card i.n.} (58241),
\dbtext{read never succeed} (74287),
\dbtext{two telephon} (120923),
\dbtext{pay bill go bankrupt} (128794),
\dbtext{go out stay home} (189195),
\dbtext{beat join} (201293), and
\dbtext{license not drive} (428525).

In the second component of size $9$ concept \dbtext{topic 'sky} (311764)
has out-degree $7$ and in-degree $0$.
Concept \dbtext{word drop} (20977) has out-degree $0$ and in-degree $2$.
Concepts 
\dbtext{word metal-frame} (21208),
\dbtext{word aurora} (21870),
\dbtext{word pressure} (22718),
\dbtext{word sagittarius} (22726),
\dbtext{word high} (23450), and
\dbtext{word helium} (23546)
have out-degree $0$ and in-degree $1$.
Finally, concept \dbtext{topic 'liquid} (311723)
has out-degree $1$ and in-degree $0$.

\subsubsection{Components of Size $8$}
In the first component of size $8$ concept \dbtext{comfortable} (371)
has out-degree $0$ and in-degree $5$.
Concepts \dbtext{classroom chair} (26777) and \dbtext{wicker} (34790)
have out-degree $2$ and in-degree $0$.
Concepts \dbtext{sturdy oak} (51878) and \dbtext{build comfort} (175882)
have out-degree $0$ and in-degree $1$.
Finally the concepts \dbtext{sofabed} (4007), \dbtext{chair make outdoor use} (59113),
and \dbtext{sleep couch} (138393)
have out-degree $1$ and in-degree $0$.

In the second component of size $8$ concept \dbtext{good eat} (2543)
has out-degree $0$ and in-degree $7$. All the other concepts have
out-degree $1$ and in-degree $0$.
These $7$ concepts are
\dbtext{hair gel} (3104),
\dbtext{yellow snow} (24319),
\dbtext{cosmetic} (47806),
\dbtext{orange peel} (63084),
\dbtext{crabapple} (103589),
\dbtext{unripe orange} (117785), and
\dbtext{peel orange} (117790).

\subsection{Strongly Connected Components}
We get $278,783$ strongly connected components, out of which
$278,708$ are isolated vertices. 
Among the rest $75$ components we can find
components with cardinalities between $2$ and $592$.

\paragraph{Distribution of Component Sizes.}
The distribution of the sizes for the various components is shown in 
Table \ref{tbl:distribution:component:negative:strong}.
This distribution presents the cardinalities of the strongly connected components
of the induced directed graph.

\begin{table}[ht]
\caption{Distribution of sizes for strongly connected components for the induced directed 
graph.
}\label{tbl:distribution:component:negative:strong}
\begin{center}
\begin{tabular}{|r||c|c|c|c|c|c|c|c|c|c|c|c|}\hline
\# of nodes      & $592$ & $12$ & $10$ & $8$ & $5$ & $4$ & $3$ &  $2$ &       $1$ \\\hline
\# of components &   $1$ &  $1$ &  $2$ & $1$ & $1$ & $4$ & $6$ & $59$ & $278,708$ \\\hline
\end{tabular}
\end{center}
\end{table}

Figure \ref{fig:strongly-connected:negative:maximal} presents the
maximal strongly connected component.

\begin{figure}[ht]
\begin{center}
\includegraphics[width=0.7\textwidth]{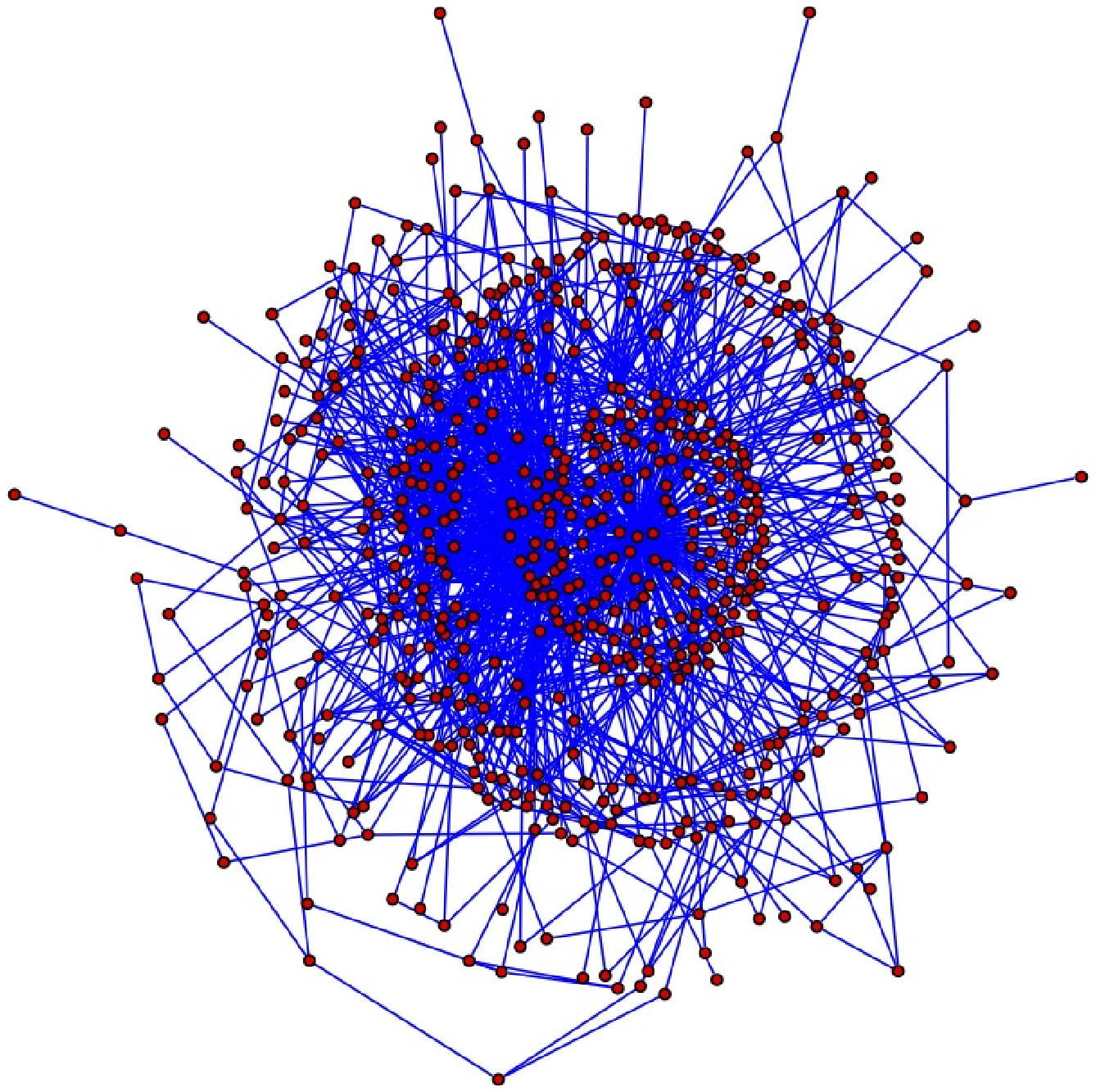}
\end{center}
\caption{The maximal strongly connected component of the graph induced by the assertions with 
negative polarity; see Table \ref{tbl:distribution:component:negative:strong}.
For simplicity we plot the induced undirected graph of that component.}\label{fig:strongly-connected:negative:maximal}
\end{figure}

\subsubsection{Big Strongly Connected Component}
The $592$ concepts found in the big directed component are
\dbtext{man} (7),
\dbtext{person} (9),
\dbtext{rock} (23),
\dbtext{beach} (24),
\dbtext{tree} (33),
\dbtext{work} (35),
\dbtext{actor} (47),
\dbtext{exercise} (61),
\dbtext{pant} (63),
\dbtext{love} (67),
\dbtext{library} (68),
\dbtext{bath} (70),
\dbtext{listen} (75),
\dbtext{wife} (76),
\dbtext{arm} (79),
\dbtext{human} (80),
\dbtext{run marathon} (101),
\dbtext{drink} (120),
\dbtext{examination} (121),
\dbtext{fun} (134),
\dbtext{it} (137),
\dbtext{paper} (149),
\dbtext{destroy} (150),
\dbtext{bed} (156),
\dbtext{dirty} (170),
\dbtext{dream} (172),
\dbtext{shower} (173),
\dbtext{child} (178),
\dbtext{smoke} (188),
\dbtext{chicken} (191),
\dbtext{blind} (233),
\dbtext{ball} (263),
\dbtext{mother} (301),
\dbtext{party} (307),
\dbtext{rest} (310),
\dbtext{remember} (325),
\dbtext{forget} (326),
\dbtext{housework} (343),
\dbtext{clean} (344),
\dbtext{street} (350),
\dbtext{watch tv} (351),
\dbtext{park} (365),
\dbtext{trouble} (366),
\dbtext{wood} (370),
\dbtext{play} (372),
\dbtext{bus} (377),
\dbtext{talk} (394),
\dbtext{go bed} (406),
\dbtext{sleep} (425),
\dbtext{eat} (432),
\dbtext{nothing} (466),
\dbtext{computer} (467),
\dbtext{rich} (469),
\dbtext{lover} (472),
\dbtext{buy} (475),
\dbtext{hunger} (478),
\dbtext{milk} (481),
\dbtext{sometimes} (526),
\dbtext{car} (529),
\dbtext{dog} (537),
\dbtext{music} (542),
\dbtext{film} (544),
\dbtext{zoo} (547),
\dbtext{dress} (562),
\dbtext{checkbook holder} (563),
\dbtext{bottle} (565),
\dbtext{better} (570),
\dbtext{live} (580),
\dbtext{one} (581),
\dbtext{aluminum} (590),
\dbtext{chair} (596),
\dbtext{skirt} (601),
\dbtext{drug} (610),
\dbtext{cat} (616),
\dbtext{gun} (635),
\dbtext{country} (640),
\dbtext{sell} (649),
\dbtext{house} (652),
\dbtext{fish} (655),
\dbtext{lake} (660),
\dbtext{baby} (678),
\dbtext{beauty} (702),
\dbtext{plant} (716),
\dbtext{silence} (730),
\dbtext{hide} (869),
\dbtext{girl} (876),
\dbtext{muscle} (891),
\dbtext{woman} (895),
\dbtext{animal} (902),
\dbtext{family} (915),
\dbtext{moon} (924),
\dbtext{robot} (935),
\dbtext{bird} (962),
\dbtext{death} (977),
\dbtext{play sport} (983),
\dbtext{sick} (991),
\dbtext{drive car} (1005),
\dbtext{bathroom} (1007),
\dbtext{city} (1013),
\dbtext{water} (1016),
\dbtext{truck} (1028),
\dbtext{desk} (1043),
\dbtext{office} (1044),
\dbtext{home} (1045),
\dbtext{bat} (1057),
\dbtext{penny} (1071),
\dbtext{couch} (1072),
\dbtext{build} (1104),
\dbtext{spoon} (1116),
\dbtext{travel} (1143),
\dbtext{eye} (1160),
\dbtext{see} (1161),
\dbtext{fail} (1167),
\dbtext{nose} (1171),
\dbtext{smell} (1172),
\dbtext{well} (1201),
\dbtext{pen} (1205),
\dbtext{mine} (1210),
\dbtext{die} (1227),
\dbtext{money} (1240),
\dbtext{bill} (1245),
\dbtext{snow} (1247),
\dbtext{leg} (1252),
\dbtext{triangle} (1257),
\dbtext{dead} (1279),
\dbtext{lady} (1281),
\dbtext{mouse} (1284),
\dbtext{cry} (1291),
\dbtext{television} (1298),
\dbtext{hate} (1342),
\dbtext{sea} (1347),
\dbtext{ocean} (1349),
\dbtext{sun} (1353),
\dbtext{sky} (1354),
\dbtext{food} (1359),
\dbtext{lie} (1395),
\dbtext{horse} (1412),
\dbtext{mug} (1422),
\dbtext{friend} (1429),
\dbtext{grocery store} (1447),
\dbtext{read} (1456),
\dbtext{hold} (1464),
\dbtext{kill} (1466),
\dbtext{break} (1476),
\dbtext{guy} (1479),
\dbtext{foot} (1485),
\dbtext{newspaper} (1506),
\dbtext{late} (1520),
\dbtext{hungry} (1533),
\dbtext{drive} (1545),
\dbtext{liquid} (1551),
\dbtext{oil} (1587),
\dbtext{plate} (1604),
\dbtext{smile} (1606),
\dbtext{cow} (1613),
\dbtext{earth} (1633),
\dbtext{dance} (1667),
\dbtext{potato} (1674),
\dbtext{fight} (1675),
\dbtext{curtain} (1694),
\dbtext{glass} (1776),
\dbtext{telephone} (1790),
\dbtext{pain} (1813),
\dbtext{audience} (1816),
\dbtext{soul} (1835),
\dbtext{drop} (1846),
\dbtext{bone} (1852),
\dbtext{meat} (1853),
\dbtext{rain} (1856),
\dbtext{body} (1861),
\dbtext{write} (1893),
\dbtext{pencil} (1953),
\dbtext{book} (2033),
\dbtext{black} (2063),
\dbtext{fart} (2079),
\dbtext{honest} (2087),
\dbtext{profit} (2167),
\dbtext{complete} (2201),
\dbtext{close} (2222),
\dbtext{bad} (2226),
\dbtext{heaven} (2241),
\dbtext{show} (2243),
\dbtext{trash} (2260),
\dbtext{can} (2261),
\dbtext{gold} (2266),
\dbtext{wind} (2284),
\dbtext{hand} (2300),
\dbtext{debt} (2306),
\dbtext{stop} (2358),
\dbtext{road} (2368),
\dbtext{brother} (2383),
\dbtext{boat} (2389),
\dbtext{lose} (2426),
\dbtext{war} (2438),
\dbtext{flower} (2459),
\dbtext{wallet} (2466),
\dbtext{suitcase} (2479),
\dbtext{time} (2494),
\dbtext{problem} (2500),
\dbtext{hell} (2510),
\dbtext{small} (2536),
\dbtext{bicycle} (2554),
\dbtext{need} (2557),
\dbtext{enemy} (2558),
\dbtext{continent} (2580),
\dbtext{iron} (2587),
\dbtext{cookie} (2595),
\dbtext{color} (2611),
\dbtext{white} (2612),
\dbtext{red} (2614),
\dbtext{yellow} (2616),
\dbtext{colour} (2626),
\dbtext{stone} (2631),
\dbtext{vegetable} (2636),
\dbtext{green} (2637),
\dbtext{life} (2638),
\dbtext{murder} (2663),
\dbtext{wrong} (2664),
\dbtext{good} (2666),
\dbtext{evil} (2692),
\dbtext{large} (2771),
\dbtext{shoe} (2790),
\dbtext{go} (2801),
\dbtext{sex} (2825),
\dbtext{wait} (2858),
\dbtext{steak} (2878),
\dbtext{fire} (2895),
\dbtext{exist} (2907),
\dbtext{government} (2932),
\dbtext{beer} (3052),
\dbtext{none} (3387),
\dbtext{carpet} (3450),
\dbtext{bowl} (3463),
\dbtext{freedom} (3492),
\dbtext{born} (3501),
\dbtext{leave} (3571),
\dbtext{coin} (3573),
\dbtext{fruit} (3590),
\dbtext{laugh} (3635),
\dbtext{sister} (3656),
\dbtext{laundry} (3665),
\dbtext{fork} (3671),
\dbtext{planet} (3683),
\dbtext{shirt} (3686),
\dbtext{begin} (3695),
\dbtext{steel} (3907),
\dbtext{sidewalk} (3962),
\dbtext{avenue} (4000),
\dbtext{theatre} (4095),
\dbtext{cup} (4116),
\dbtext{square} (4138),
\dbtext{busy} (4163),
\dbtext{full} (4189),
\dbtext{pleasure} (4231),
\dbtext{god} (4277),
\dbtext{care} (4323),
\dbtext{star} (4324),
\dbtext{watch} (4406),
\dbtext{mind} (4432),
\dbtext{space} (4435),
\dbtext{wealth} (4521),
\dbtext{this} (4539),
\dbtext{place} (4570),
\dbtext{apple} (4596),
\dbtext{pear} (4624),
\dbtext{win} (4676),
\dbtext{mail} (4691),
\dbtext{direct} (4753),
\dbtext{doctor} (4760),
\dbtext{theater} (4770),
\dbtext{river} (4784),
\dbtext{blue} (4808),
\dbtext{charge} (4811),
\dbtext{cheese} (4844),
\dbtext{whale} (4849),
\dbtext{mammal} (4850),
\dbtext{question} (4898),
\dbtext{crap} (4899),
\dbtext{lot} (4905),
\dbtext{coal} (5090),
\dbtext{touch} (5106),
\dbtext{noise} (5363),
\dbtext{husband} (5415),
\dbtext{plastic} (5505),
\dbtext{bug} (5563),
\dbtext{above} (5572),
\dbtext{unknown} (5613),
\dbtext{matter} (5619),
\dbtext{disease} (5645),
\dbtext{table} (5665),
\dbtext{peace} (5670),
\dbtext{key case} (5678),
\dbtext{often} (5700),
\dbtext{sing} (5711),
\dbtext{sand} (5768),
\dbtext{billfold} (5827),
\dbtext{bottom} (5887),
\dbtext{religion} (5915),
\dbtext{long hair} (5916),
\dbtext{closet} (5967),
\dbtext{boy} (5976),
\dbtext{like} (5989),
\dbtext{record} (6029),
\dbtext{find} (6040),
\dbtext{floor} (6062),
\dbtext{right} (6079),
\dbtext{old} (6092),
\dbtext{safety} (6244),
\dbtext{cut} (6250),
\dbtext{honesty} (6288),
\dbtext{slow} (6291),
\dbtext{frustrate} (6309),
\dbtext{adult} (6329),
\dbtext{conflict} (6331),
\dbtext{here} (6352),
\dbtext{bite} (6368),
\dbtext{science} (6395),
\dbtext{air} (6408),
\dbtext{lime} (6416),
\dbtext{banana} (6422),
\dbtext{metal} (6491),
\dbtext{do} (6503),
\dbtext{open} (6539),
\dbtext{quiet} (6583),
\dbtext{big} (6604),
\dbtext{present} (6681),
\dbtext{roll} (6734),
\dbtext{mess} (6818),
\dbtext{mineral} (6835),
\dbtext{clock} (6860),
\dbtext{black hole} (6876),
\dbtext{chaos} (6892),
\dbtext{distance} (6929),
\dbtext{many} (6989),
\dbtext{competitive activity} (7019),
\dbtext{safe} (7045),
\dbtext{still} (7048),
\dbtext{violence} (7055),
\dbtext{round} (7057),
\dbtext{computer language} (7112),
\dbtext{mercury} (7120),
\dbtext{art} (7424),
\dbtext{rust} (7512),
\dbtext{top} (7514),
\dbtext{wine} (7522),
\dbtext{jar} (7524),
\dbtext{crowd} (7763),
\dbtext{draw} (7764),
\dbtext{much} (7917),
\dbtext{thing} (7936),
\dbtext{energy} (7982),
\dbtext{land} (8060),
\dbtext{few} (8145),
\dbtext{musician} (8244),
\dbtext{little} (8268),
\dbtext{change} (8313),
\dbtext{ear} (8314),
\dbtext{bread} (8404),
\dbtext{dna} (8405),
\dbtext{pick} (8494),
\dbtext{gasoline} (8502),
\dbtext{petrol} (8691),
\dbtext{move} (8737),
\dbtext{try} (8794),
\dbtext{decide} (8824),
\dbtext{tin} (8891),
\dbtext{finish} (8996),
\dbtext{fear} (9006),
\dbtext{poverty} (9116),
\dbtext{island} (9131),
\dbtext{shade} (9151),
\dbtext{fly} (9215),
\dbtext{hear} (9269),
\dbtext{egg} (9339),
\dbtext{penis} (9458),
\dbtext{vagina} (9464),
\dbtext{two} (9549),
\dbtext{dad} (9672),
\dbtext{health} (9745),
\dbtext{pass} (9934),
\dbtext{wash} (10170),
\dbtext{sock} (10193),
\dbtext{head} (10228),
\dbtext{work hard} (10313),
\dbtext{his} (10419),
\dbtext{fill} (10468),
\dbtext{great} (10478),
\dbtext{end} (10507),
\dbtext{know} (13183),
\dbtext{program language} (13345),
\dbtext{daughter} (13446),
\dbtext{supermarket} (13550),
\dbtext{danger} (13607),
\dbtext{servant} (13683),
\dbtext{silver} (13722),
\dbtext{pie} (13747),
\dbtext{machine} (13790),
\dbtext{gas} (13908),
\dbtext{taste} (14093),
\dbtext{ant} (14190),
\dbtext{fix} (14209),
\dbtext{lemon} (14212),
\dbtext{gerbil} (14223),
\dbtext{dollar} (14251),
\dbtext{want} (14319),
\dbtext{galaxy} (14379),
\dbtext{circle} (14472),
\dbtext{cake} (14522),
\dbtext{blood} (14713),
\dbtext{law} (14805),
\dbtext{copy} (14847),
\dbtext{chick} (14872),
\dbtext{illusion} (14991),
\dbtext{cent} (14994),
\dbtext{orange} (15004),
\dbtext{large bird} (15149),
\dbtext{dirt} (15359),
\dbtext{son} (15379),
\dbtext{fast} (15507),
\dbtext{point} (15518),
\dbtext{choose} (15533),
\dbtext{fact} (15578),
\dbtext{be} (16974),
\dbtext{software} (17383),
\dbtext{common} (17473),
\dbtext{brain} (17555),
\dbtext{liar} (17830),
\dbtext{stay} (18183),
\dbtext{sickness} (18244),
\dbtext{performance} (18289),
\dbtext{motion} (18365),
\dbtext{whole} (18374),
\dbtext{opinion} (18525),
\dbtext{out} (18546),
\dbtext{in} (18553),
\dbtext{return} (18569),
\dbtext{ill} (18575),
\dbtext{victory} (18635),
\dbtext{fantasy} (18637),
\dbtext{propose woman} (18678),
\dbtext{ignorance} (18746),
\dbtext{ride} (18753),
\dbtext{free} (19126),
\dbtext{past} (19235),
\dbtext{new} (19512),
\dbtext{part} (19708),
\dbtext{course} (19871),
\dbtext{rat} (19911),
\dbtext{own} (19972),
\dbtext{dog die} (20317),
\dbtext{vegetarian} (20339),
\dbtext{owner} (20525),
\dbtext{over} (20622),
\dbtext{real} (20645),
\dbtext{same} (20650),
\dbtext{best} (20709),
\dbtext{necessary} (20908),
\dbtext{box office} (20927),
\dbtext{poor} (20993),
\dbtext{car key} (21233),
\dbtext{angel} (21240),
\dbtext{stage} (21403),
\dbtext{order} (21418),
\dbtext{print} (21683),
\dbtext{microsoft} (21796),
\dbtext{rush} (21894),
\dbtext{empty} (22345),
\dbtext{freeway} (22365),
\dbtext{go break} (22388),
\dbtext{artifact} (22487),
\dbtext{chore} (22621),
\dbtext{clear} (22671),
\dbtext{trip} (22700),
\dbtext{all} (22948),
\dbtext{cash register} (23016),
\dbtext{worse} (23274),
\dbtext{deaf} (23417),
\dbtext{truth} (23426),
\dbtext{conscious} (23506),
\dbtext{compassion} (23996),
\dbtext{food can} (24253),
\dbtext{elevator} (24427),
\dbtext{reality} (24722),
\dbtext{future} (24821),
\dbtext{loss} (24845),
\dbtext{orchestra pit} (24852),
\dbtext{sunshine} (25192),
\dbtext{answer} (25710),
\dbtext{solution} (25749),
\dbtext{rend} (26018),
\dbtext{master} (26090),
\dbtext{lord} (26283),
\dbtext{gentleman} (26487),
\dbtext{below} (27409),
\dbtext{slave} (27415),
\dbtext{brass} (27632),
\dbtext{come} (28590),
\dbtext{imaginary} (28877),
\dbtext{urban} (29003),
\dbtext{nurse} (29051),
\dbtext{away} (29340),
\dbtext{flat tire} (29840),
\dbtext{pest} (29938),
\dbtext{reply} (30251),
\dbtext{neglect} (30447),
\dbtext{lift} (30476),
\dbtext{run treadmill} (30496),
\dbtext{paste} (30733),
\dbtext{inch} (31249),
\dbtext{seek} (31416),
\dbtext{ask} (31437),
\dbtext{urine} (31765),
\dbtext{hatred} (32372),
\dbtext{itch} (33681),
\dbtext{some} (34413),
\dbtext{half} (34484),
\dbtext{checkbook cover} (34526),
\dbtext{ally} (34528),
\dbtext{bob} (34599),
\dbtext{bronze} (34633),
\dbtext{defeat} (34651),
\dbtext{computer virus} (34745),
\dbtext{enter} (36183),
\dbtext{shout} (36617),
\dbtext{park bench} (37143),
\dbtext{fine} (37229),
\dbtext{far} (37745),
\dbtext{miss} (38484),
\dbtext{vague} (38678),
\dbtext{balcony seat} (38825),
\dbtext{youth} (39134),
\dbtext{table cloth} (39246),
\dbtext{lint} (39879),
\dbtext{continue} (43163),
\dbtext{always} (43553),
\dbtext{early} (44789),
\dbtext{start} (44963),
\dbtext{kleenex} (47422),
\dbtext{okay} (47521),
\dbtext{now} (47894),
\dbtext{movie screen} (48200),
\dbtext{real duck} (48788),
\dbtext{monkey wrench} (51349),
\dbtext{recent} (52116),
\dbtext{ancient} (53725),
\dbtext{fondue} (55652),
\dbtext{snub} (56060),
\dbtext{myth} (57756),
\dbtext{agent} (58122),
\dbtext{enough} (60838),
\dbtext{apathy} (65195),
\dbtext{frown} (66280),
\dbtext{sandal} (70731),
\dbtext{bowie} (71734),
\dbtext{insure} (73501),
\dbtext{gain} (73685),
\dbtext{plasma} (75809),
\dbtext{any} (76623),
\dbtext{zero} (79459),
\dbtext{beast} (79780),
\dbtext{cost} (81860),
\dbtext{arrive} (88674),
\dbtext{gray} (93729),
\dbtext{celibate} (96998),
\dbtext{integer} (100946),
\dbtext{graze} (102383),
\dbtext{lease} (102411),
\dbtext{never} (126958),
\dbtext{modern} (131226),
\dbtext{idle} (138412),
\dbtext{illiteracy} (140633),
\dbtext{high octane} (144079),
\dbtext{occasional} (155305),
\dbtext{mistress} (174612),
\dbtext{entire} (177172),
\dbtext{ground} (184976),
\dbtext{rural} (185019),
\dbtext{under} (193674),
\dbtext{transportation device} (200905),
\dbtext{barack obama} (201863),
\dbtext{speedo} (203600),
\dbtext{fidelity} (203658),
\dbtext{norway rat} (203664),
\dbtext{complete thesis} (203671),
\dbtext{pour hot coffee mug} (203692),
\dbtext{low blood pressure} (203696),
\dbtext{cost little money} (203700),
\dbtext{broken} (311852),
\dbtext{written programmer} (328244),
\dbtext{foe} (332239),
\dbtext{mister} (332244), and
\dbtext{obscure} (344371).

Regarding the big strongly connected component with the $592$ nodes, it has $1,849$ edges
(self-loops were omitted from the enumeration).
Hence the average degree is about $6.24662$ after self-loops have been discarded.
Regarding the induced undirected graph that occurs after restricting ourselves
in these $592$ nodes (again, self-loops are omitted), the number of edges
is $1,566$. In other words, the average degree in this case is about
$5.29054$. The transitivity and the clustering coefficient of the big component are
presented in Table \ref{tbl:transitivity:negative:BigDC}.

\begin{table}[ht]
\caption{Transitivity and clustering coefficient for the big directed component of \conceptnet.
The first value (\textsc{nan}) for the clustering coefficient gives the result of the calculation
when vertices with less than two neighbors are left out from the calculation,
while the second value (\textsc{zero}) gives the result of the calculation when 
vertices with less than two neighbors are considered as having zero transitivity.
Note that all values are the same both for directed as well as undirected graphs.}\label{tbl:transitivity:negative:BigDC}
\begin{center}
\begin{tabular}{|r|l|}\hline
Transitivity                           & $0.000351298700593188$ \\\hline
Clustering Coefficient (\textsc{nan})  & $0.098300551193575281$ \\\hline
Clustering Coefficient (\textsc{zero}) & $0.000851478314399245$ \\\hline
\end{tabular}
\end{center}
\end{table}

For information about shortest paths in this component please see Chapter \ref{chapter:shortest-paths}.

\bigskip

Figures \ref{fig:strongly-connected:negative:1}, \ref{fig:strongly-connected:negative:2},
\ref{fig:strongly-connected:negative:3}, and \ref{fig:strongly-connected:negative:4} present
the strongly connected components of sizes $3$-$12$.

\begin{figure}[p]
\begin{center}
\begin{subfigure}[b]{0.48\textwidth}
\begin{center}
\includegraphics[width=\textwidth]{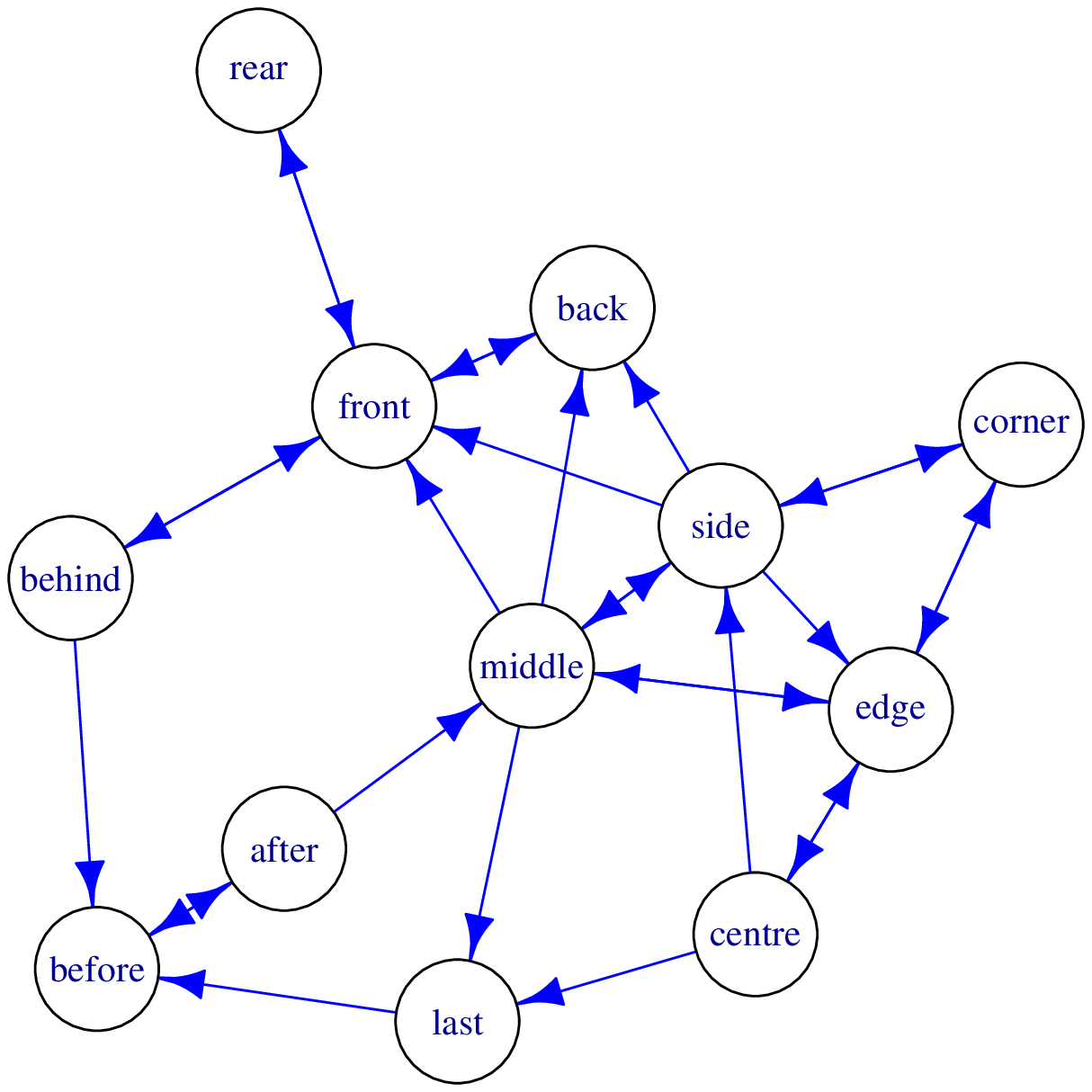}
\end{center}
\caption{$12$ nodes, $29$ edges.}\label{fig:strongly-connected:negative:12}
\end{subfigure}
\hspace{0.01\textwidth}
\begin{subfigure}[b]{0.48\textwidth}
\begin{center}
\includegraphics[width=\textwidth]{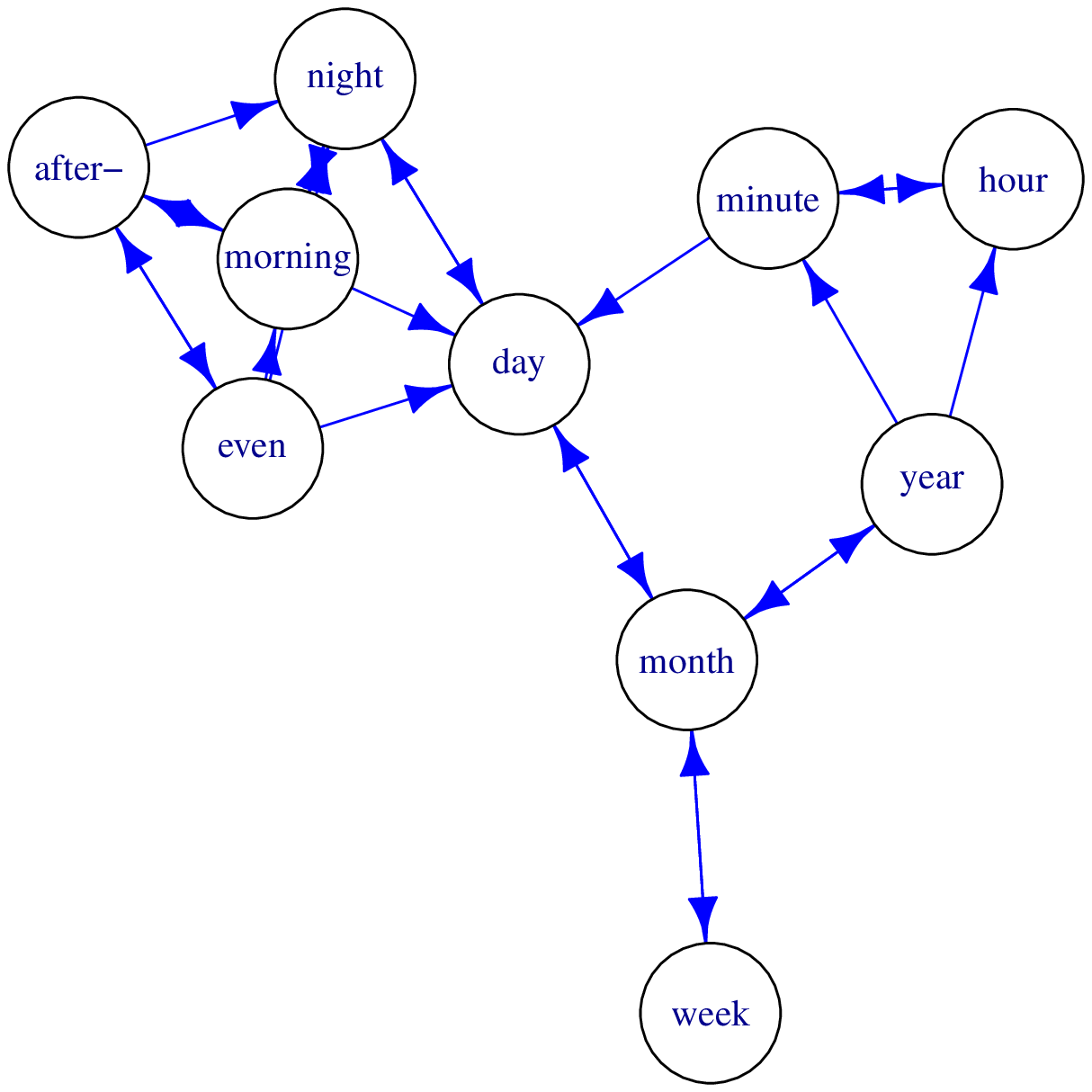}
\end{center}
\caption{$10$ nodes, $24$ edges.}\label{fig:strongly-connected:negative:10:a}
\end{subfigure}

%
%
\begin{subfigure}[b]{0.48\textwidth}
\begin{center}
\includegraphics[width=\textwidth]{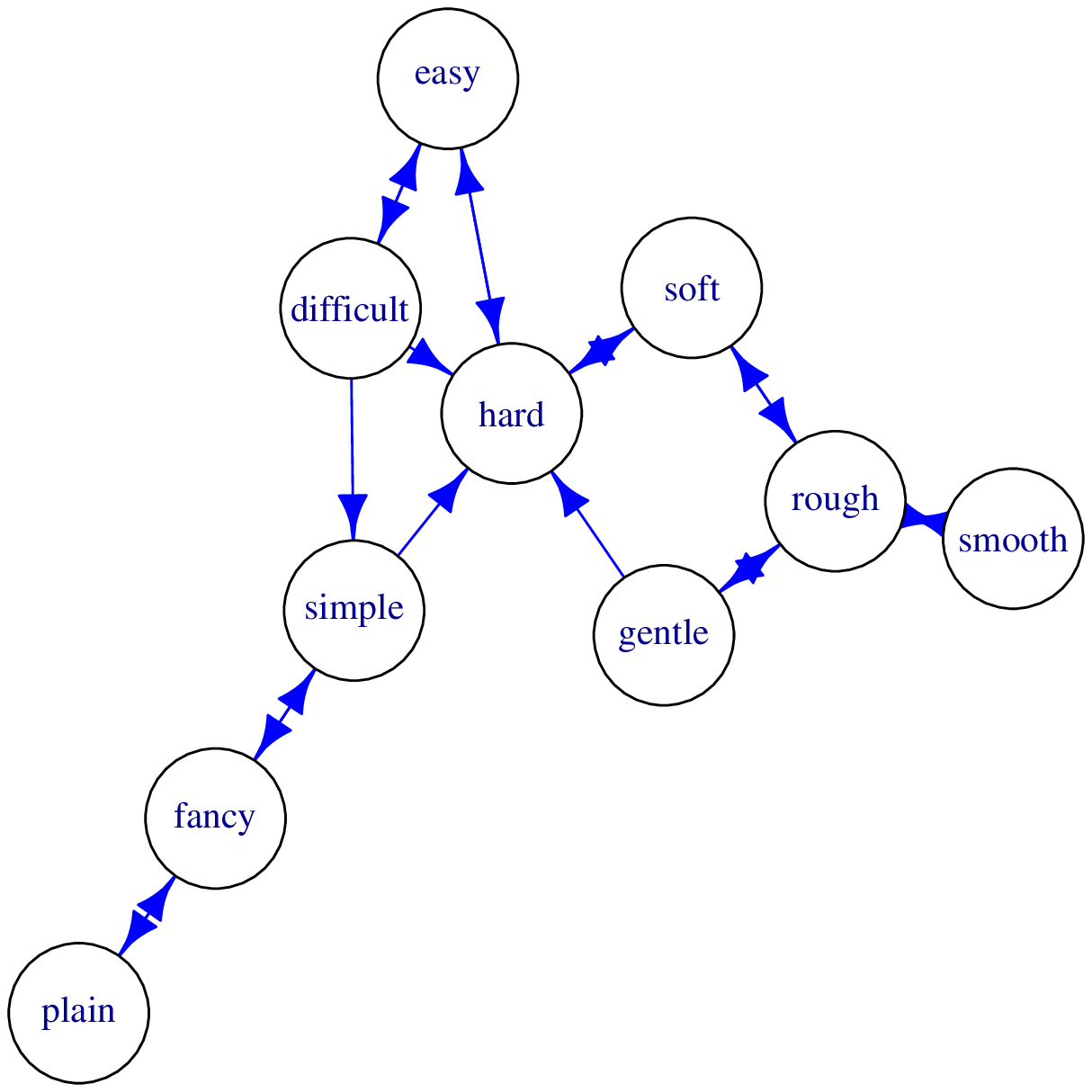}
\end{center}
\caption{$10$ nodes, $20$ edges.}\label{fig:strongly-connected:negative:10:b}
\end{subfigure}
\hspace{0.01\textwidth}
\begin{subfigure}[b]{0.48\textwidth}
\begin{center}
\includegraphics[width=\textwidth]{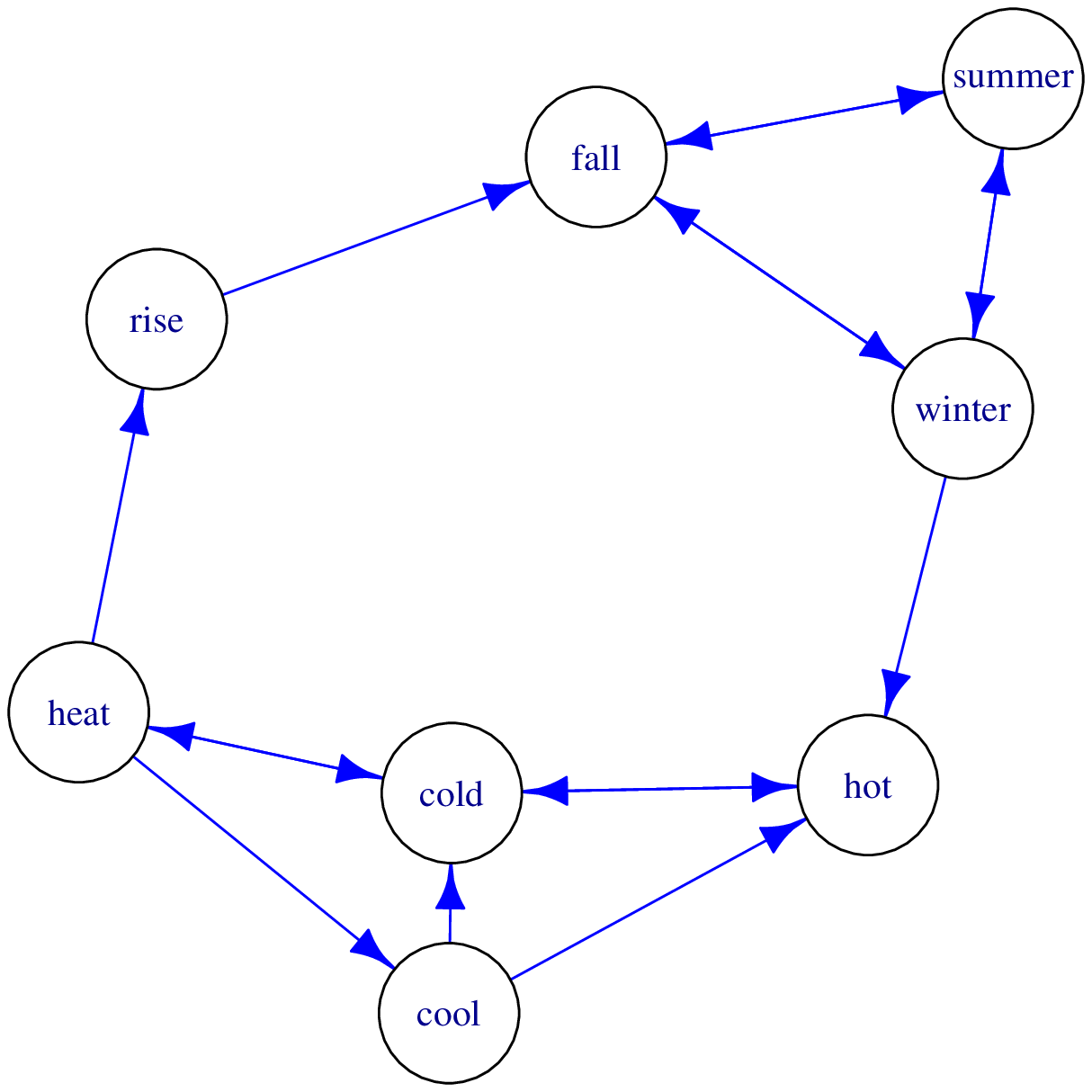}
\end{center}
\caption{$8$ nodes, $16$ edges.}\label{fig:strongly-connected:negative:8}
\end{subfigure}
\end{center}
\caption{The strongly connected components of size $8$-$12$ induced by assertions with negative polarity; 
see Table \ref{tbl:distribution:component:negative:strong}.}\label{fig:strongly-connected:negative:1}
\end{figure}

\begin{figure}[p]
\begin{center}
\begin{subfigure}[b]{0.48\textwidth}
\begin{center}
\includegraphics[width=\textwidth]{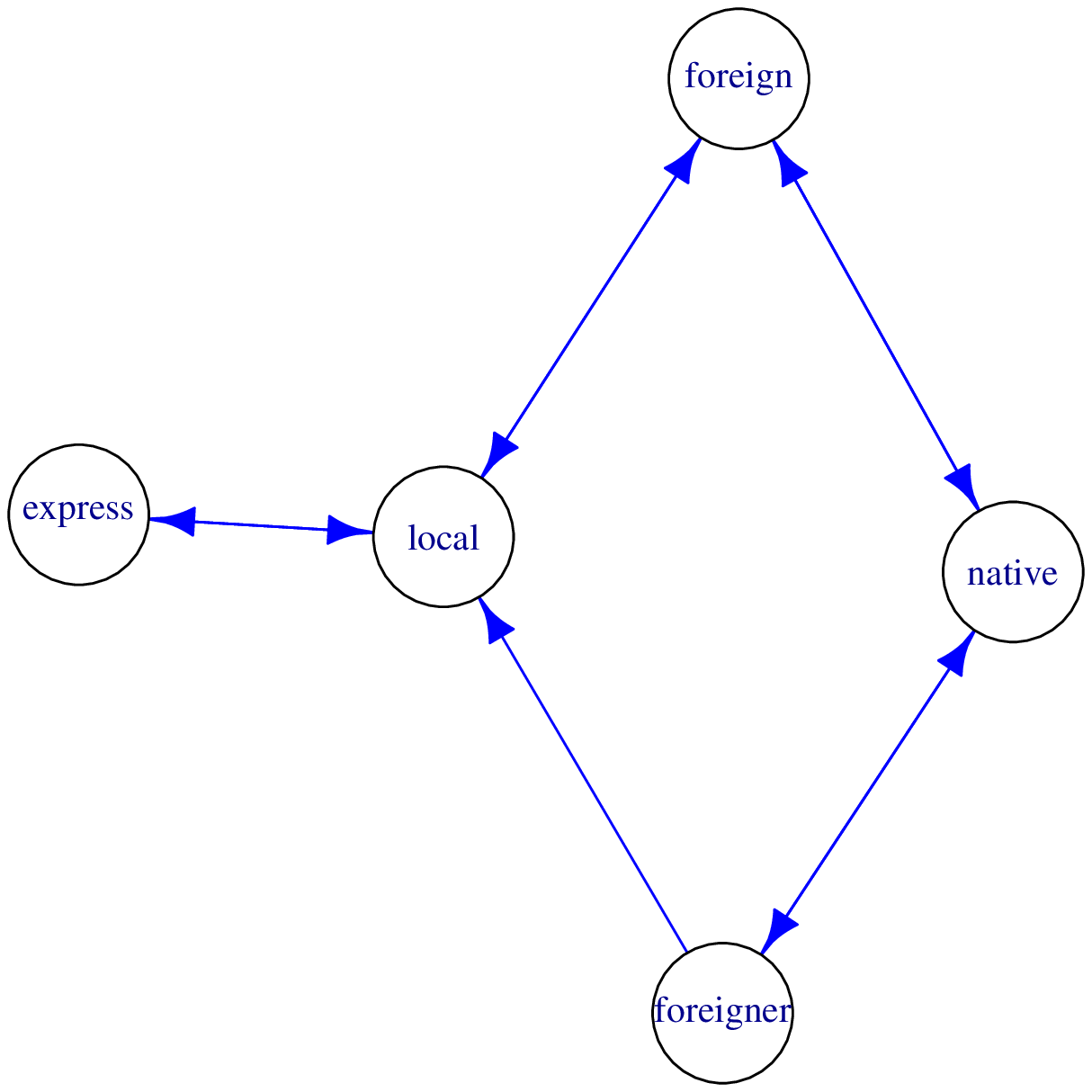}
\end{center}
\caption{$5$ nodes, $9$ edges.}\label{fig:strongly-connected:negative:5}
\end{subfigure}
\hspace{0.01\textwidth}
\begin{subfigure}[b]{0.48\textwidth}
\begin{center}
\includegraphics[width=\textwidth]{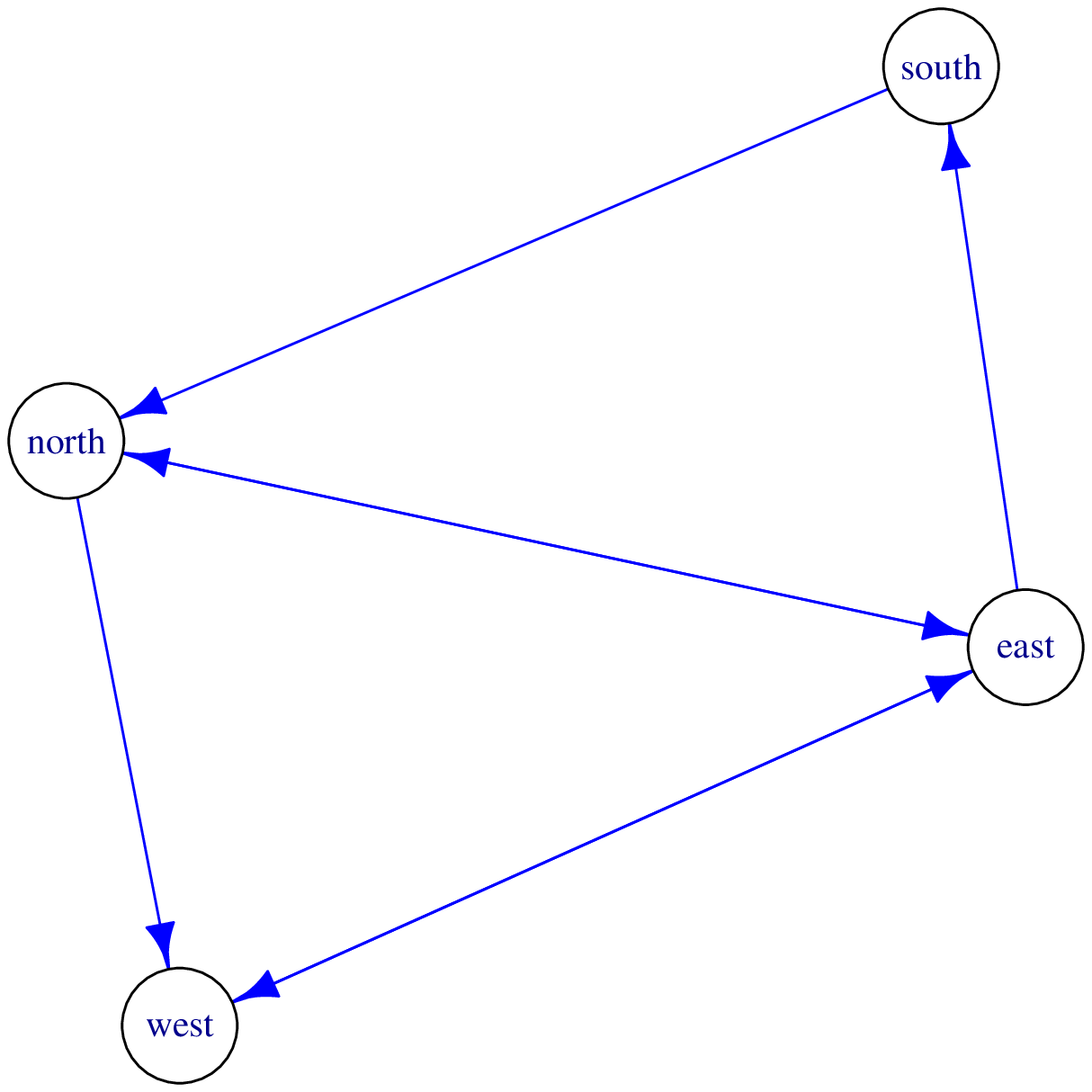}
\end{center}
\caption{$4$ nodes, $7$ edges.}\label{fig:strongly-connected:negative:4:a}
\end{subfigure}

\begin{subfigure}[b]{0.48\textwidth}
\begin{center}
\includegraphics[width=\textwidth]{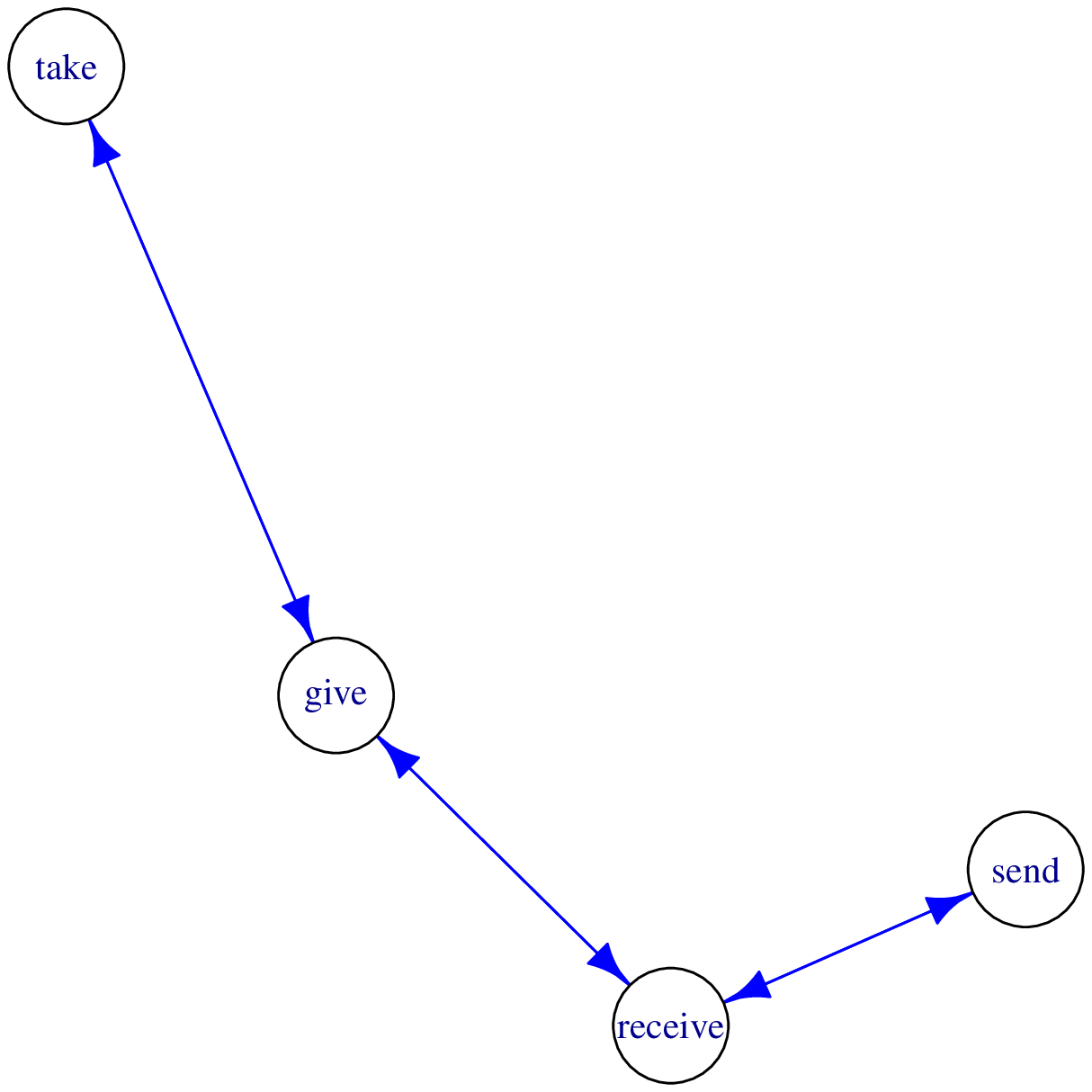}
\end{center}
\caption{$4$ nodes, $6$ edges.}\label{fig:strongly-connected:negative:4:b}
\end{subfigure}
\hspace{0.01\textwidth}
\begin{subfigure}[b]{0.48\textwidth}
\begin{center}
\includegraphics[width=\textwidth]{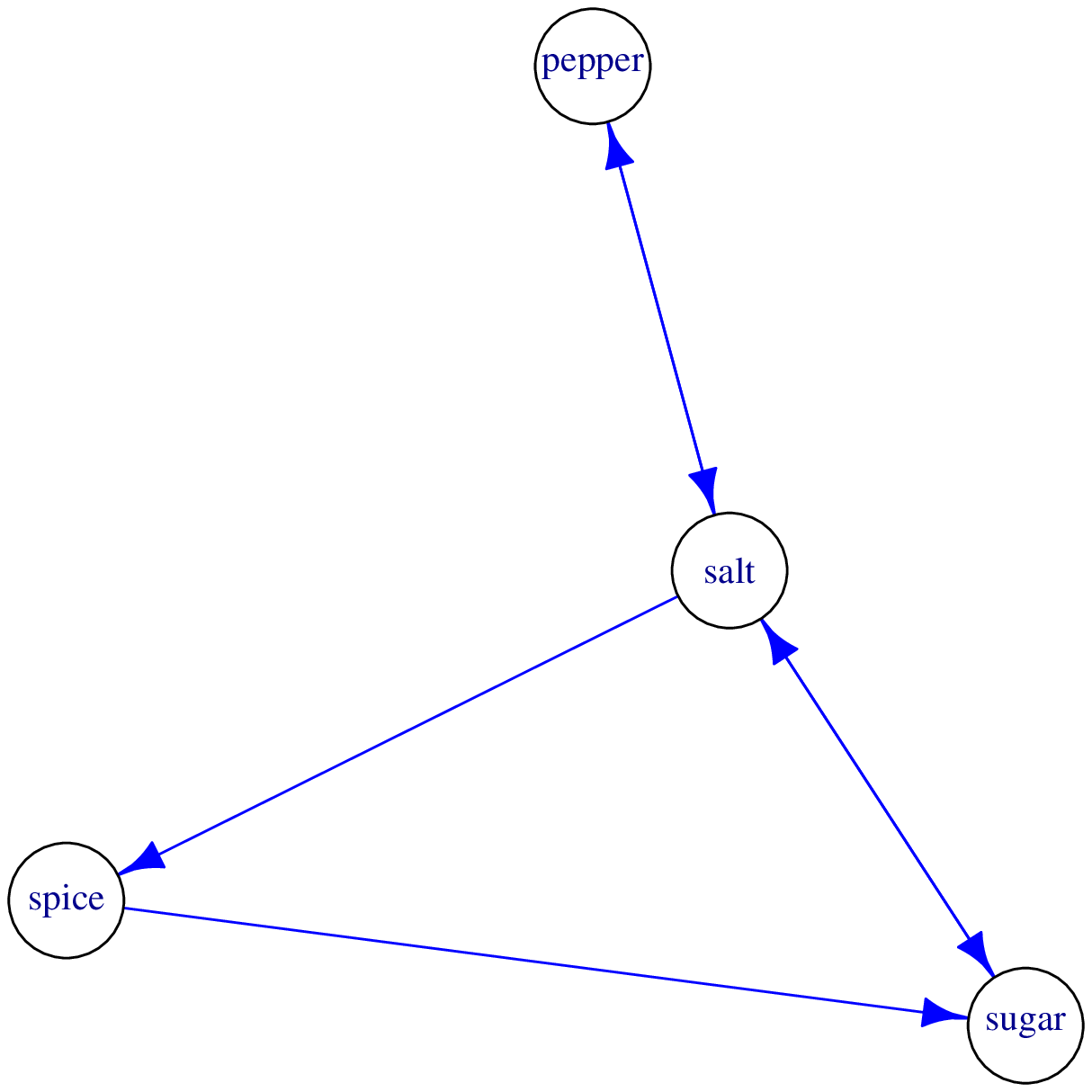}
\end{center}
\caption{$4$ nodes, $6$ edges.}\label{fig:strongly-connected:negative:4:c}
\end{subfigure}
\end{center}
\caption{Strongly connected components with sizes $4$-$5$ induced by assertions with negative polarity; 
see Table \ref{tbl:distribution:component:negative:strong}.}\label{fig:strongly-connected:negative:2}
\end{figure}

\begin{figure}[p]
\begin{center}
\begin{subfigure}[b]{0.48\textwidth}
\begin{center}
\includegraphics[width=\textwidth]{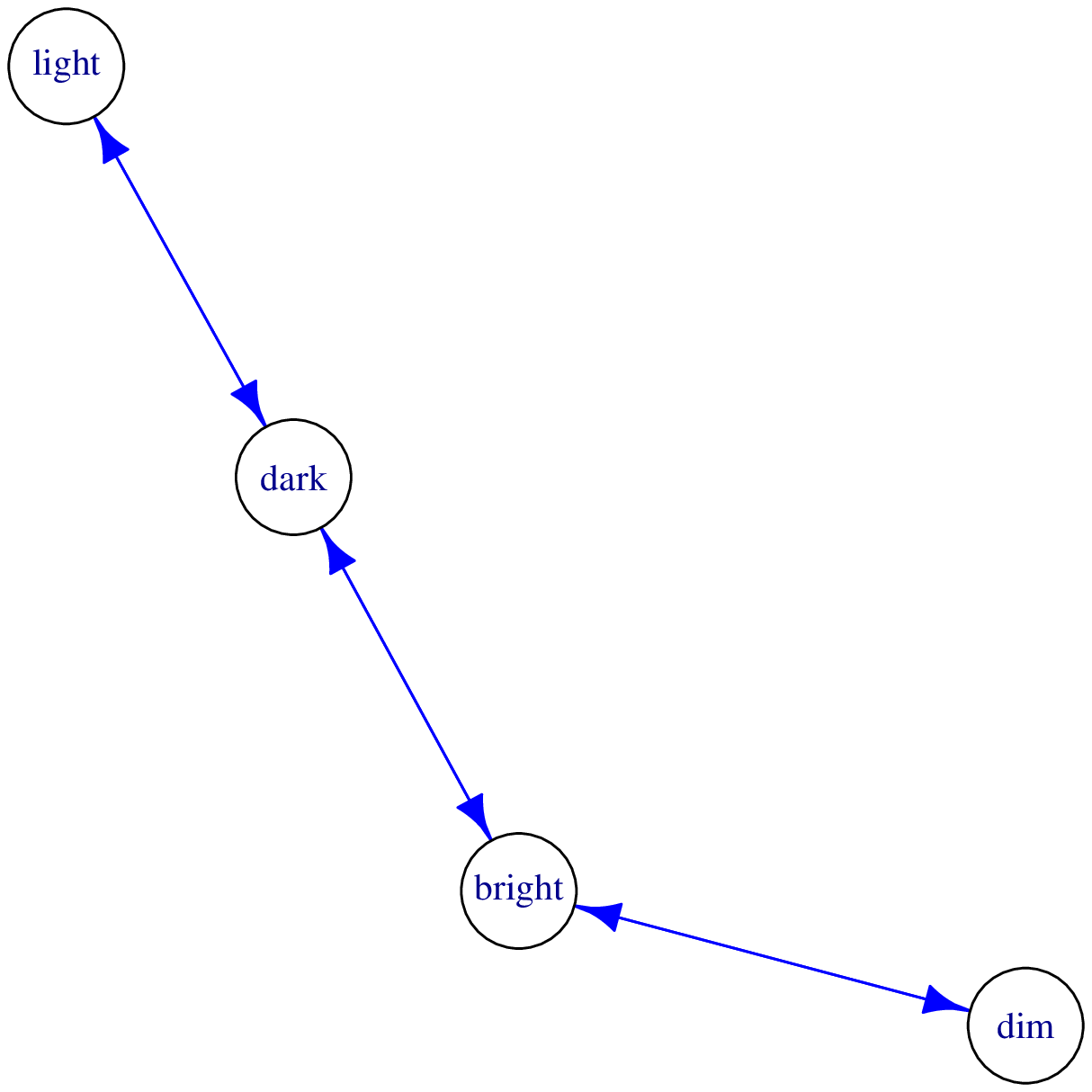}
\end{center}
\caption{$4$ nodes, $6$ edges.}\label{fig:strongly-connected:negative:4:d}
\end{subfigure}
\hspace{0.01\textwidth}
\begin{subfigure}[b]{0.48\textwidth}
\begin{center}
\includegraphics[width=\textwidth]{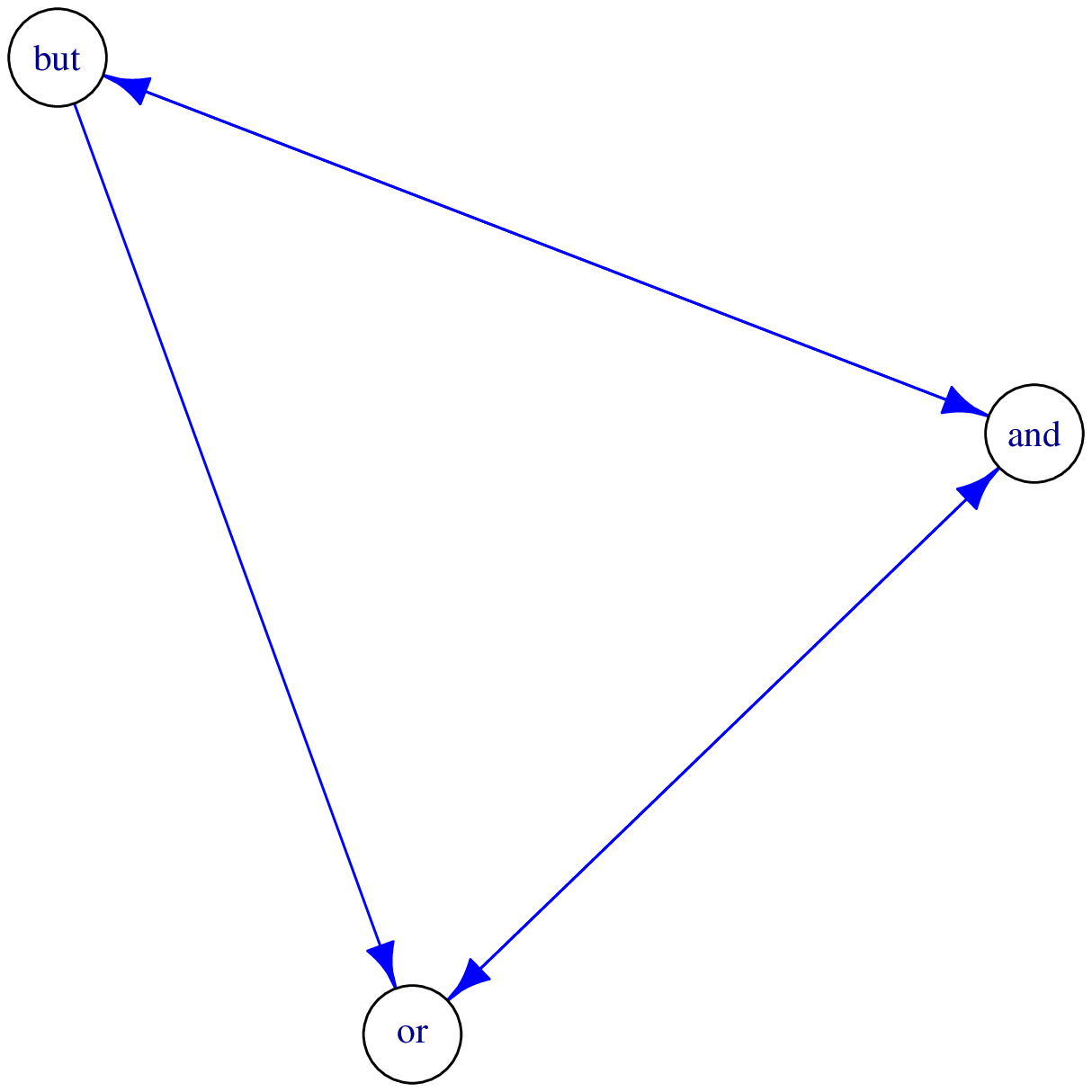}
\end{center}
\caption{$3$ nodes, $5$ edges.}\label{fig:strongly-connected:negative:3:a}
\end{subfigure}

\begin{subfigure}[b]{0.48\textwidth}
\begin{center}
\includegraphics[width=\textwidth]{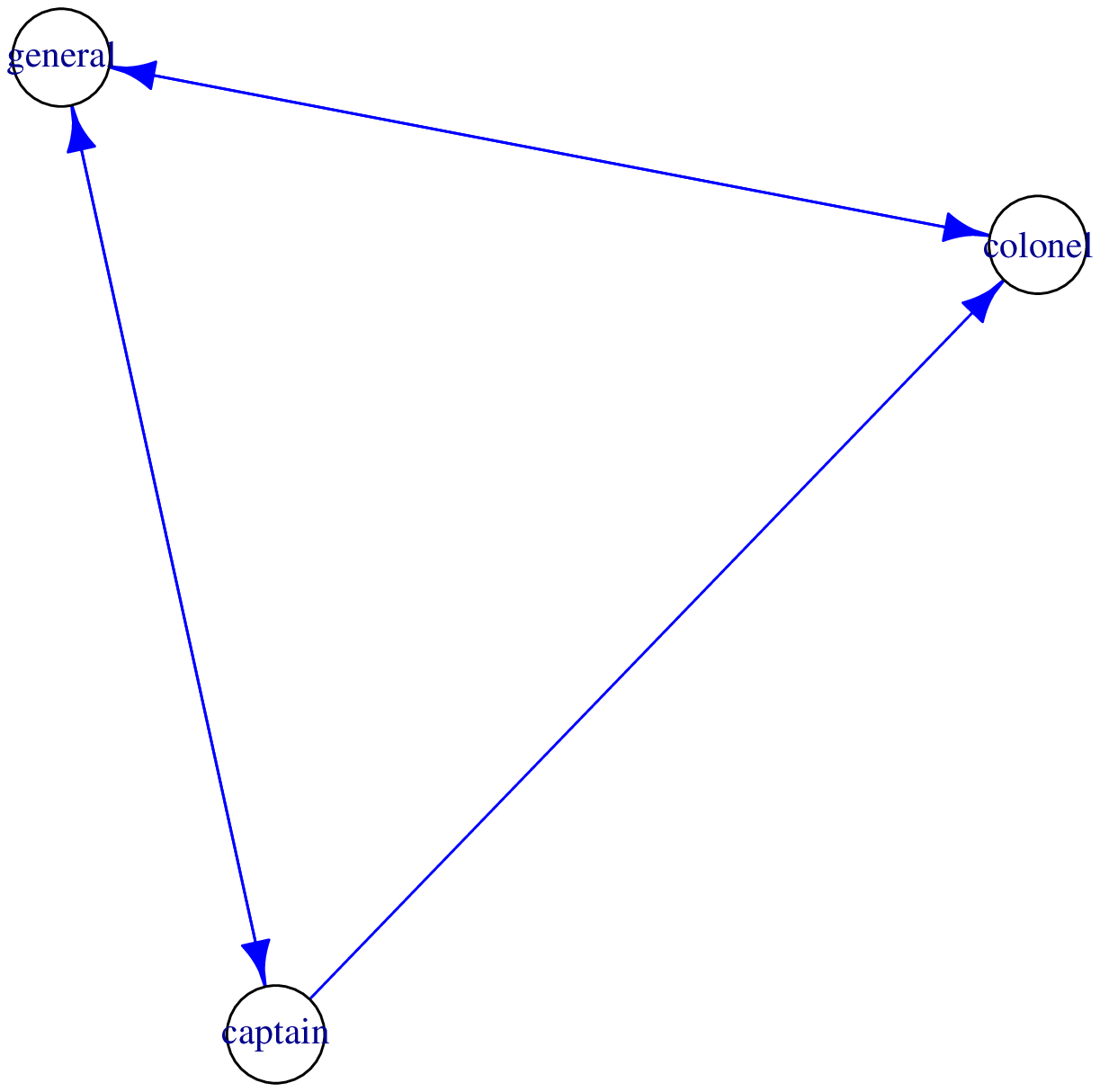}
\end{center}
\caption{$3$ nodes, $5$ edges.}\label{fig:strongly-connected:negative:3:b}
\end{subfigure}
\hspace{0.01\textwidth}
\begin{subfigure}[b]{0.48\textwidth}
\begin{center}
\includegraphics[width=\textwidth]{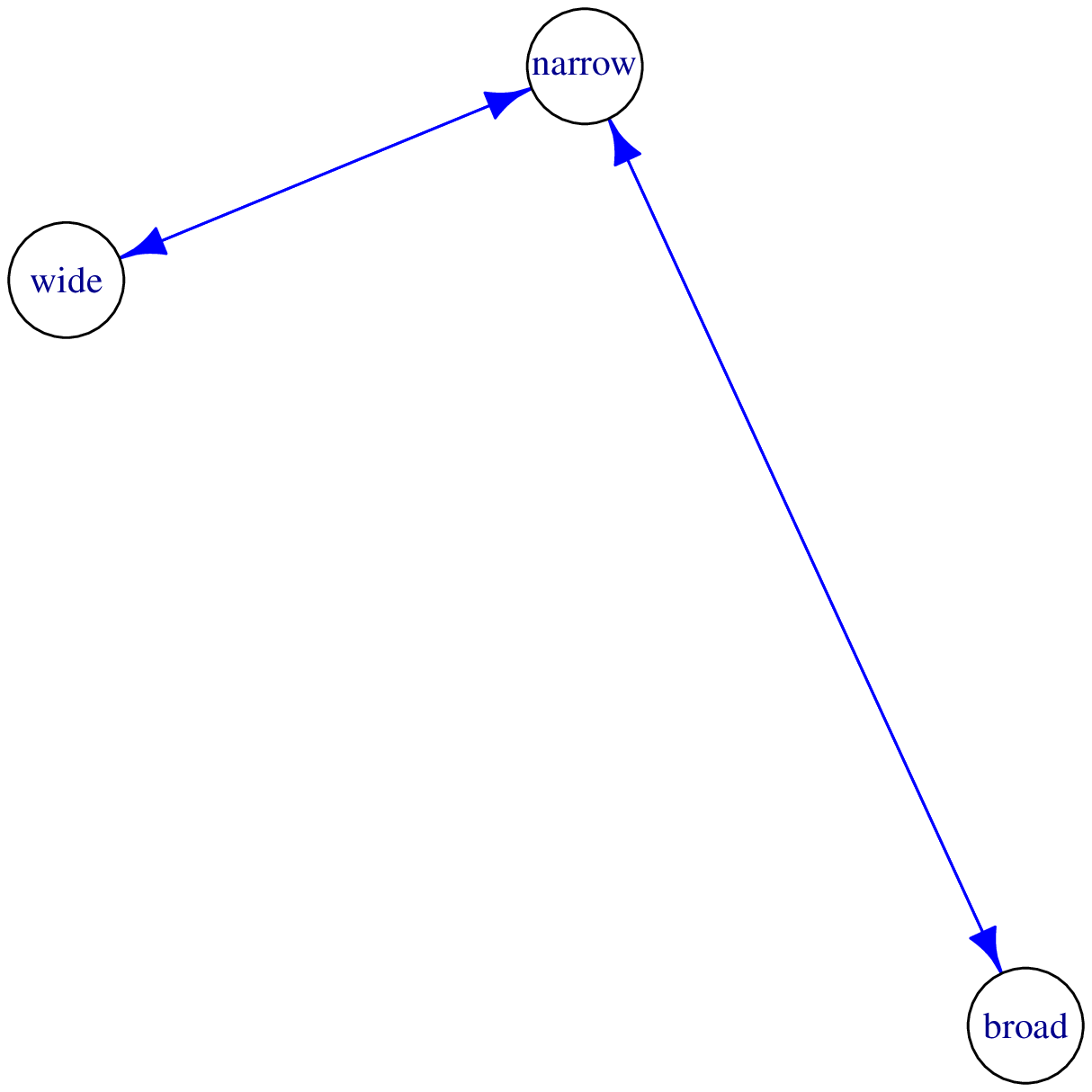}
\end{center}
\caption{$3$ nodes, $4$ edges.}\label{fig:strongly-connected:negative:3:c}
\end{subfigure}
\end{center}
\caption{The strongly connected components with sizes $3$-$4$ induced by assertions with negative polarity; 
see Table \ref{tbl:distribution:component:negative:strong}.}\label{fig:strongly-connected:negative:3}
\end{figure}

\begin{figure}[p]
\begin{center}
\begin{subfigure}[b]{0.48\textwidth}
\begin{center}
\includegraphics[width=\textwidth]{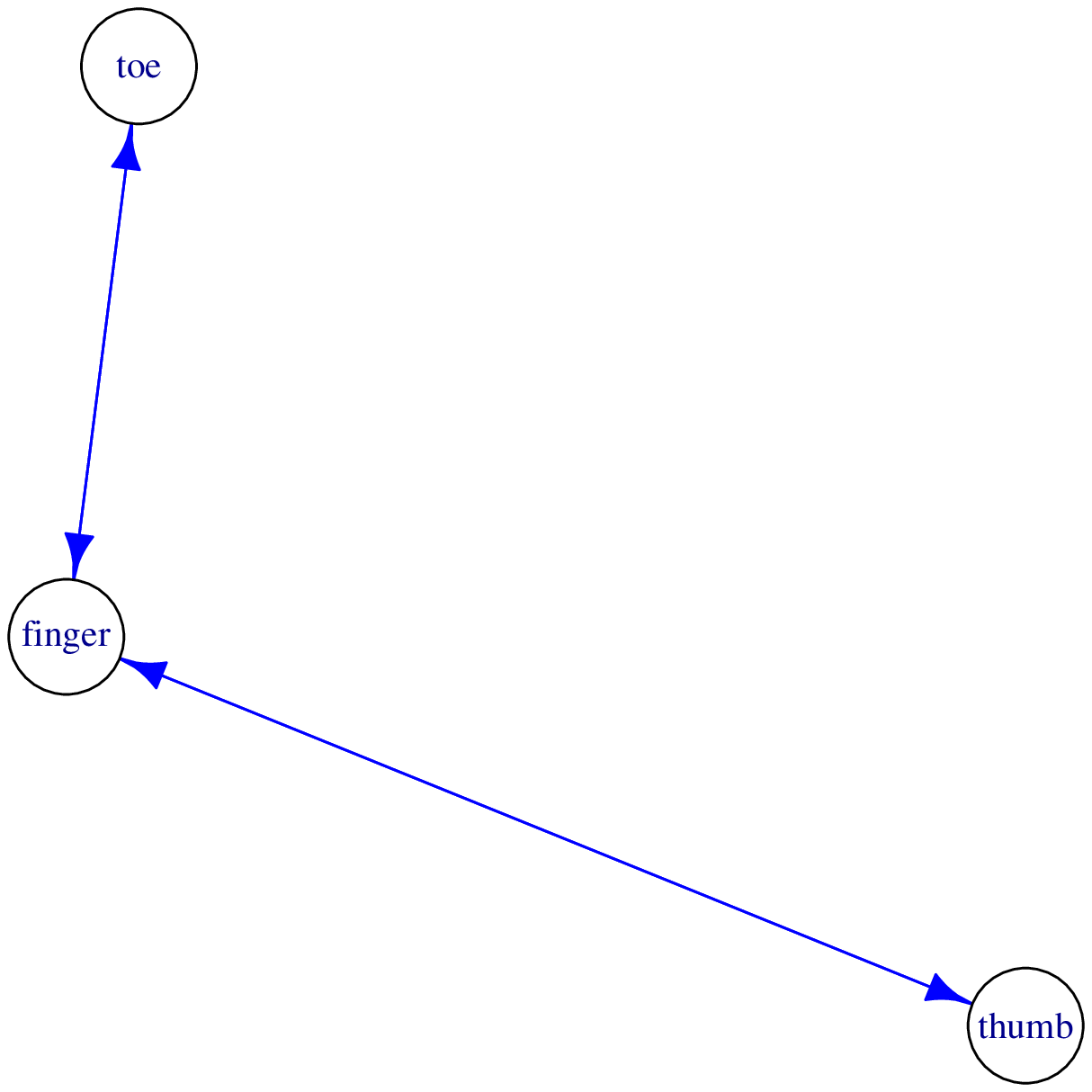}
\end{center}
\caption{$4$ nodes, $4$ edges.}\label{fig:strongly-connected:negative:3:d}
\end{subfigure}
\hspace{0.01\textwidth}
\begin{subfigure}[b]{0.48\textwidth}
\begin{center}
\includegraphics[width=\textwidth]{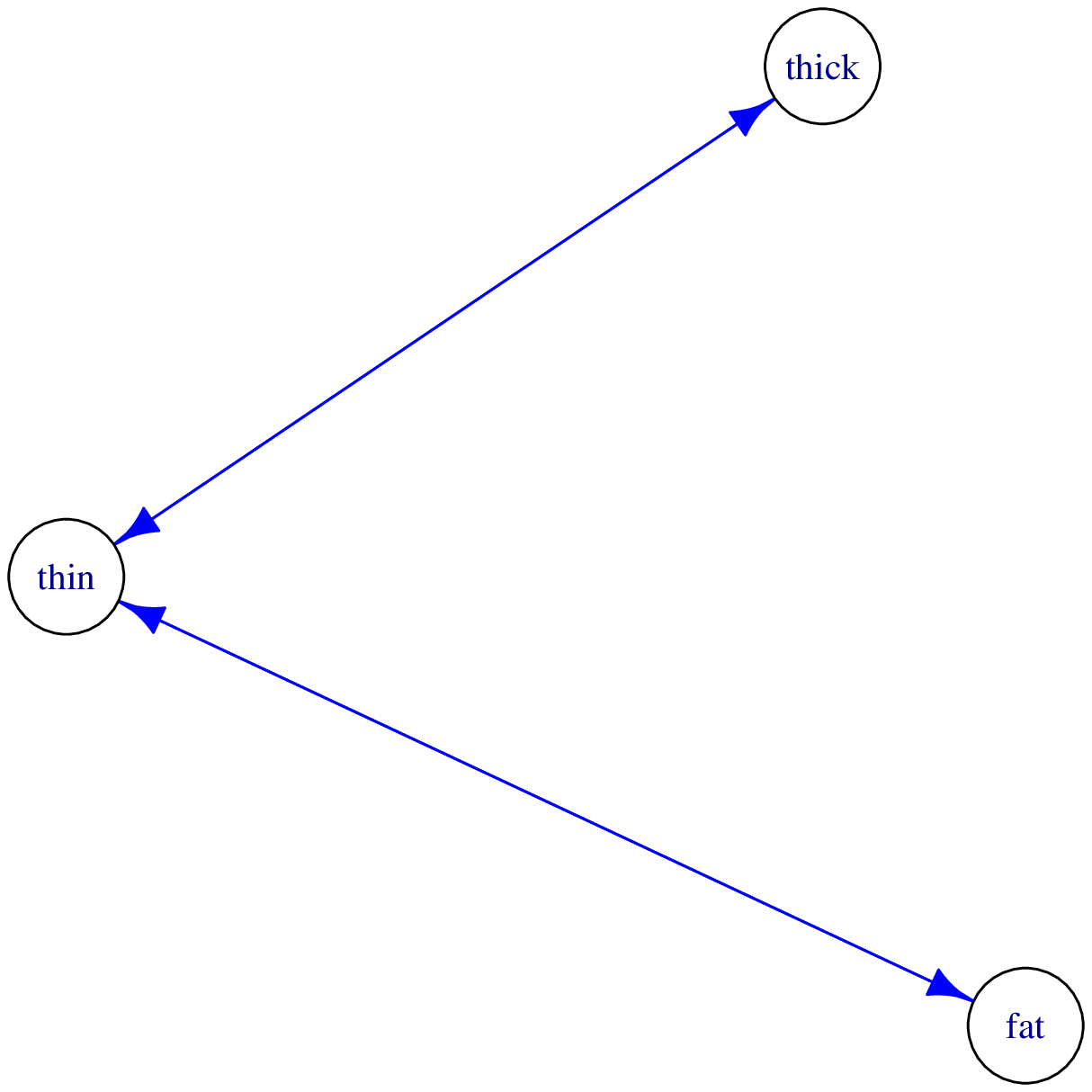}
\end{center}
\caption{$3$ nodes, $4$ edges.}\label{fig:strongly-connected:negative:3:e}
\end{subfigure}

\begin{subfigure}[b]{0.48\textwidth}
\begin{center}
\includegraphics[width=\textwidth]{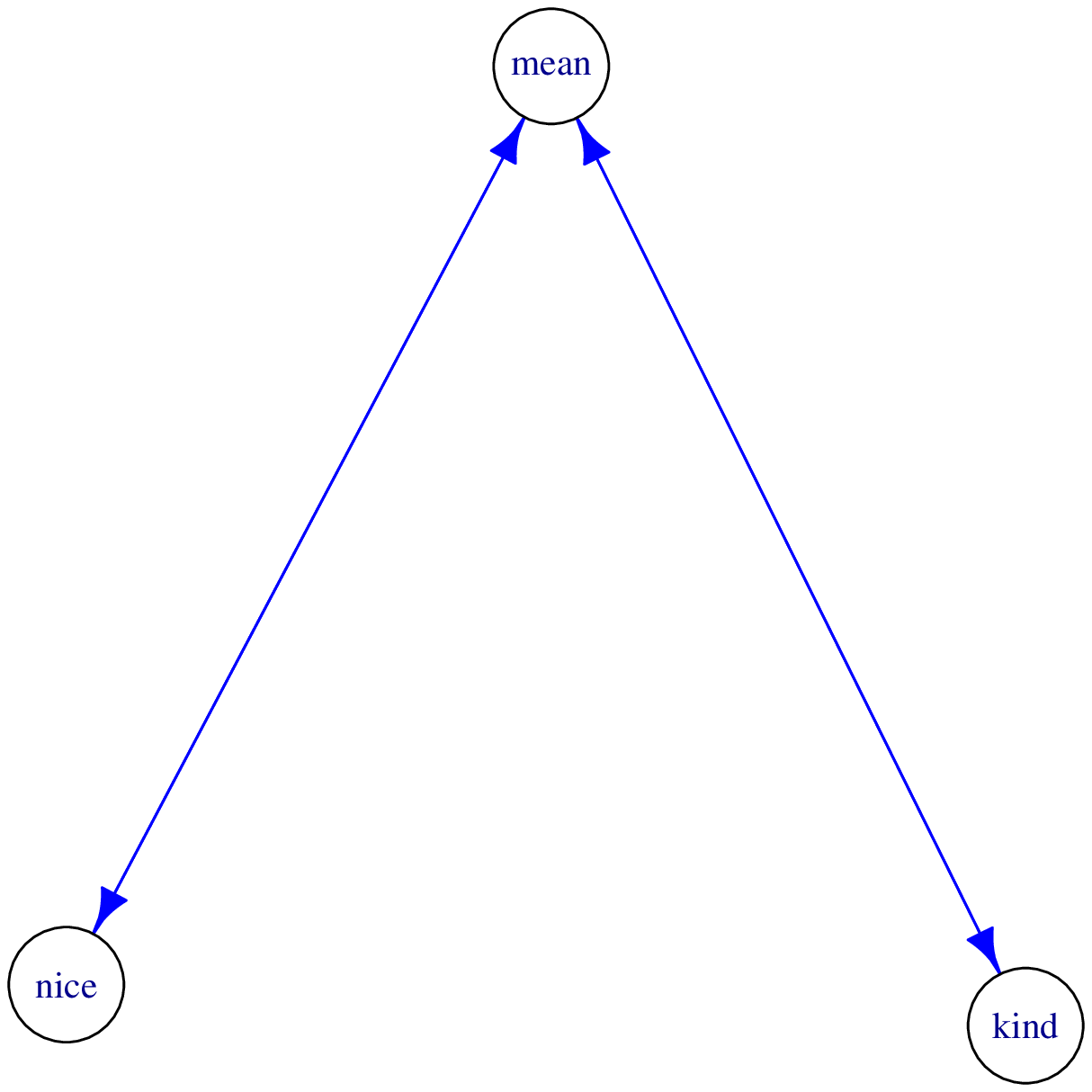}
\end{center}
\caption{$3$ nodes, $4$ edges.}\label{fig:strongly-connected:negative:3:f}
\end{subfigure}
\end{center}

\caption{Strongly connected components of size $3$ induced by assertions with negative polarity; 
see Table \ref{tbl:distribution:component:negative:strong}.}\label{fig:strongly-connected:negative:4}
\end{figure}

\subsubsection{Component of Size $12$}
In the strongly connected component of size $12$
we can find the concepts
\dbtext{front} (2423),
\dbtext{back} (15583),
\dbtext{side} (17836),
\dbtext{last} (23202),
\dbtext{edge} (24347),
\dbtext{corner} (29067),
\dbtext{after} (31656),
\dbtext{behind} (46824),
\dbtext{middle} (52077),
\dbtext{before} (108544),
\dbtext{rear} (141086), and
\dbtext{centre} (202139).
Figure \ref{fig:strongly-connected:negative:12} presents
the induced directed graph of that component.

\subsubsection{Component of Size $10$}
In the first strongly connected component of size $10$
we can find the concepts
\dbtext{year} (2709),
\dbtext{week} (2757),
\dbtext{day} (2759),
\dbtext{hour} (2762),
\dbtext{minute} (2764),
\dbtext{night} (8677),
\dbtext{morning} (15749),
\dbtext{afternoon} (15914),
\dbtext{even} (15946), and
\dbtext{month} (25290).
Figure \ref{fig:strongly-connected:negative:10:a} presents
the induced directed graph of that component.

In the second strongly connected component of size $10$
we can find the concepts
\dbtext{difficult} (195),
\dbtext{plain} (1155),
\dbtext{soft} (2842),
\dbtext{hard} (7545),
\dbtext{simple} (15368),
\dbtext{easy} (19144),
\dbtext{smooth} (24330),
\dbtext{fancy} (24730),
\dbtext{rough} (34315), and
\dbtext{gentle} (55184).
Figure \ref{fig:strongly-connected:negative:10:b} presents
the induced directed graph of that component.

\subsubsection{Component of size $8$}
In the strongly connected component of size $8$
we can find the concepts
\dbtext{cold} (912),
\dbtext{winter} (1431),
\dbtext{summer} (1437),
\dbtext{hot} (1438),
\dbtext{rise} (5930),
\dbtext{heat} (7301),
\dbtext{cool} (7306), and
\dbtext{fall} (9975).
Figure \ref{fig:strongly-connected:negative:8} presents
the induced directed graph of that component.

\subsubsection{Component of Size $5$}
In the strongly connected component of size $5$
we can find the concepts
\dbtext{local} (60886),
\dbtext{foreigner} (62358),
\dbtext{native} (94333),
\dbtext{express} (141657), and
\dbtext{foreign} (333670).
Figure \ref{fig:strongly-connected:negative:5} presents
the induced directed graph of that component.

\subsubsection{Components of Size $4$}
In the first strongly connected component of size $4$
we can find the concepts
\dbtext{south} (6265),
\dbtext{west} (9659),
\dbtext{north} (22569), and
\dbtext{east} (42579).
Figure \ref{fig:strongly-connected:negative:4:a} presents
the induced directed graph of that component.

In the second strongly connected component of size $4$
we can find the concepts
\dbtext{receive} (15790),
\dbtext{take} (17431),
\dbtext{give} (43731), and
\dbtext{send} (162951).
Figure \ref{fig:strongly-connected:negative:4:b} presents
the induced directed graph of that component.

In the third strongly connected component of size $4$
we can find the concepts
\dbtext{sugar} (1446),
\dbtext{salt} (1817),
\dbtext{pepper} (4326), and
\dbtext{spice} (8644).
Figure \ref{fig:strongly-connected:negative:4:c} presents
the induced directed graph of that component.

In the fourth strongly connected component of size $4$
we can find the concepts
\dbtext{light} (1716),
\dbtext{bright} (1717),
\dbtext{dark} (6376), and
\dbtext{dim} (101382).
Figure \ref{fig:strongly-connected:negative:4:d} presents
the induced directed graph of that component.

\subsubsection{Components of Size $3$}
In the first strongly connected component of size $3$
we can find the concepts
\dbtext{but} (35882),
\dbtext{and} (40224), and
\dbtext{or} (40776).
Figure \ref{fig:strongly-connected:negative:3:a} presents
the induced directed graph of that component.

In the second strongly connected component of size $3$
we can find the concepts
\dbtext{general} (6836),
\dbtext{captain} (23817), and
\dbtext{colonel} (332231).
Figure \ref{fig:strongly-connected:negative:3:b} presents
the induced directed graph of that component.

In the third strongly connected component of size $3$
we can find the concepts
\dbtext{narrow} (17316),
\dbtext{wide} (27291), and
\dbtext{broad} (48158).
Figure \ref{fig:strongly-connected:negative:3:c} presents
the induced directed graph of that component.

In the fourth strongly connected component of size $3$
we can find the concepts
\dbtext{finger} (3399),
\dbtext{toe} (5571), and
\dbtext{thumb} (15862).
Figure \ref{fig:strongly-connected:negative:3:d} presents
the induced directed graph of that component.

In the fifth strongly connected component of size $3$
we can find the concepts
\dbtext{fat} (1763),
\dbtext{thin} (9272), and
\dbtext{thick} (56754).
Figure \ref{fig:strongly-connected:negative:3:e} presents
the induced directed graph of that component.

In the sixth strongly connected component of size $3$
we can find the concepts
\dbtext{nice} (2028),
\dbtext{mean} (6744), and
\dbtext{kind} (31540).
Figure \ref{fig:strongly-connected:negative:3:f} presents
the induced directed graph of that component.

\section{Positive Polarity: Connected Components}
In this section we examine the weakly and strongly connected components
of the graphs induced by assertions with positive polarity only.

\subsection{Weakly Connected Components}
We get $38,153$ weakly connected components, out of which
$22,651$ are isolated vertices. Note that $22,651$ is in complete agreement with
Table \ref{tbl:number-of-edges-isolated-vertices:overall}. 
Among the rest $15,502$ components we can find
components with cardinalities between $2$ and $223,679$.

\paragraph{Distribution of Component Sizes.}
The distribution of the sizes for the various components is shown in 
Table \ref{tbl:distribution:component:positive:weak}.
This distribution presents the cardinalities of the weakly connected components
of the induced directed graph, as well as the cardinalities of the connected
components of the induced undirected graph.
For the induced graphs we consider assertions with positive score in the English language
and we allow all frequencies in the edges.

\begin{table}[ht]
\caption{Distribution of sizes for weakly connected components for the directed 
graph induced by the assertions with positive polarity only. 
This is also the distribution of sizes for the connected 
components of the induced undirected graph. 
}\label{tbl:distribution:component:positive:weak}
\begin{center}
\resizebox{\textwidth}{!}{
\begin{tabular}{|r||c|c|c|c|c|c|c|c|c|c|c|c|c|c|c|c|c|c|c|c|c|}\hline
\# of nodes      & $223,679$ & $55$ & $32$ & $31$ & $30$ & $22$ & $18$ & $16$ & $14$ & $12$ & $11$ & $10$ & $9$ &  $8$ &  $7$ &  $6$ &  $5$ &   $4$ &   $3$ &      $2$ &      $1$ \\\hline
\# of components &       $1$ &  $1$ &  $1$ &  $1$ &  $1$ &  $2$ &  $1$ &  $1$ &  $4$ &  $1$ &  $3$ &  $3$ & $4$ & $11$ & $14$ & $26$ & $81$ & $196$ & $943$ & $14,207$ & $22,651$ \\\hline
\end{tabular}
}
\end{center}
\end{table}

Figure \ref{fig:weakly-connected:positive} presents the weakly connected components with sizes
$11$-$55$. Note that in Chapter \ref{chapter:shortest-paths} we will explore the longest geodesic paths 
of the induced directed and undirected graphs and we will see that that in every case
such a path is at least $15$. Hence, Figure \ref{fig:weakly-connected:positive} apart from 
giving an overview of some of the weakly connected components, it also shows that the longest
geodesic paths do not come from any of these components.

\begin{figure}[p]
\begin{center}
\begin{subfigure}[b]{0.23\textwidth}
\begin{center}
\includegraphics[width=\textwidth]{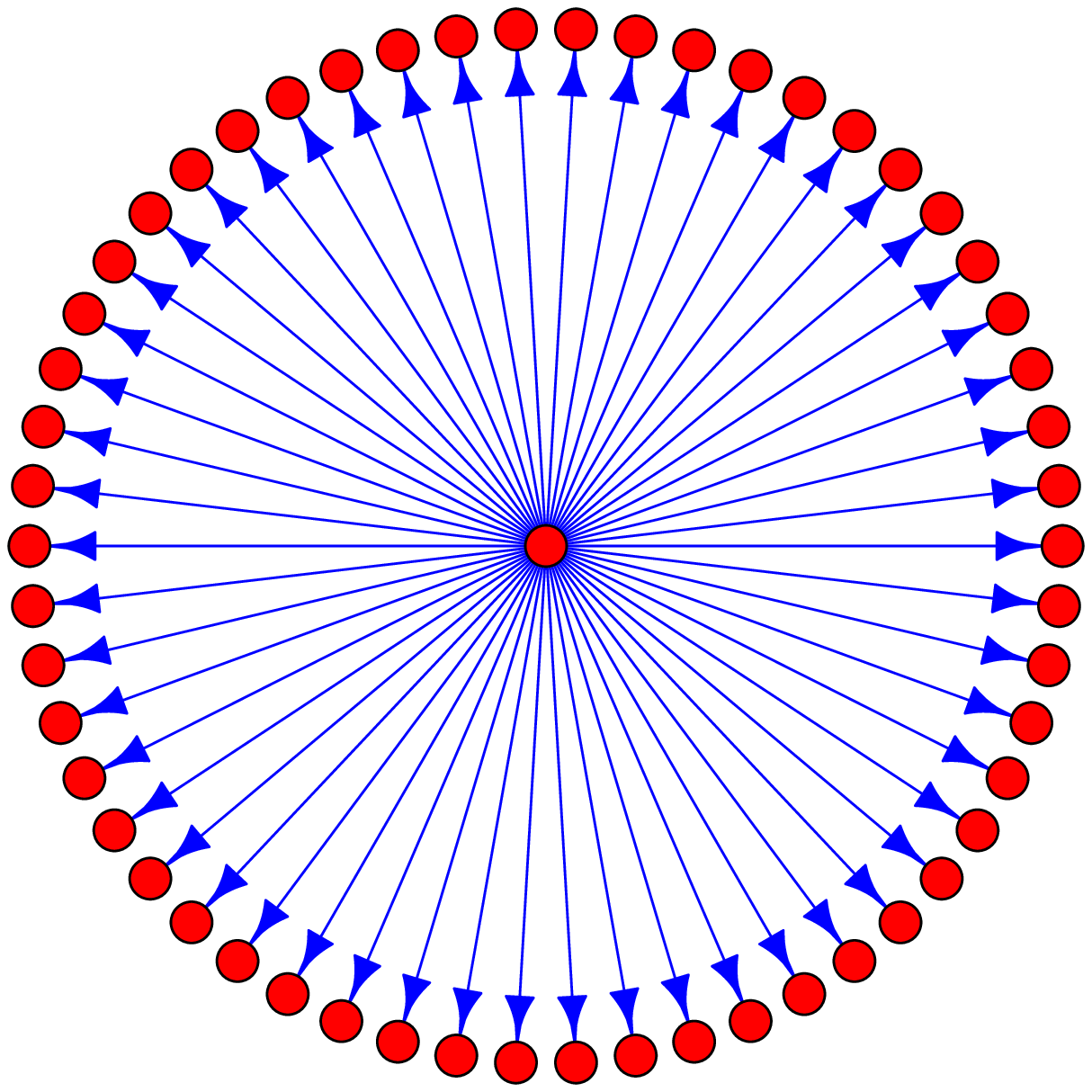}
\end{center}
\caption{medical specialty\\$55$ nodes, $54$ edges.}\label{fig:weakly-connected:positive:55}
\end{subfigure}
\hspace{0.01\textwidth}
\begin{subfigure}[b]{0.23\textwidth}
\begin{center}
\includegraphics[width=\textwidth]{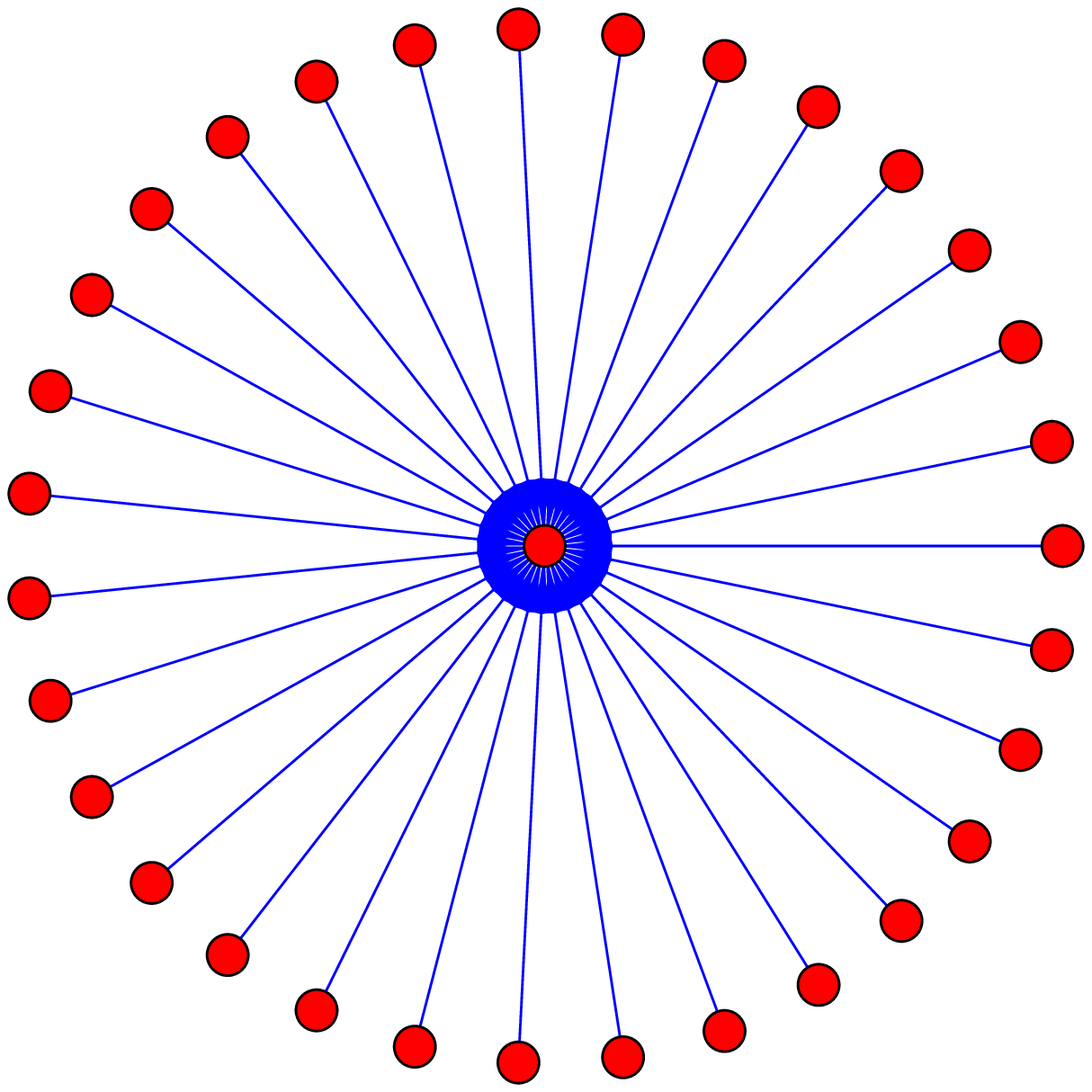}
\end{center}
\caption{pacific ocean 0 m\\$32$ nodes, $31$ edges.}\label{fig:weakly-connected:positive:32}
\end{subfigure}
\hspace{0.01\textwidth}
\begin{subfigure}[b]{0.23\textwidth}
\begin{center}
\includegraphics[width=\textwidth]{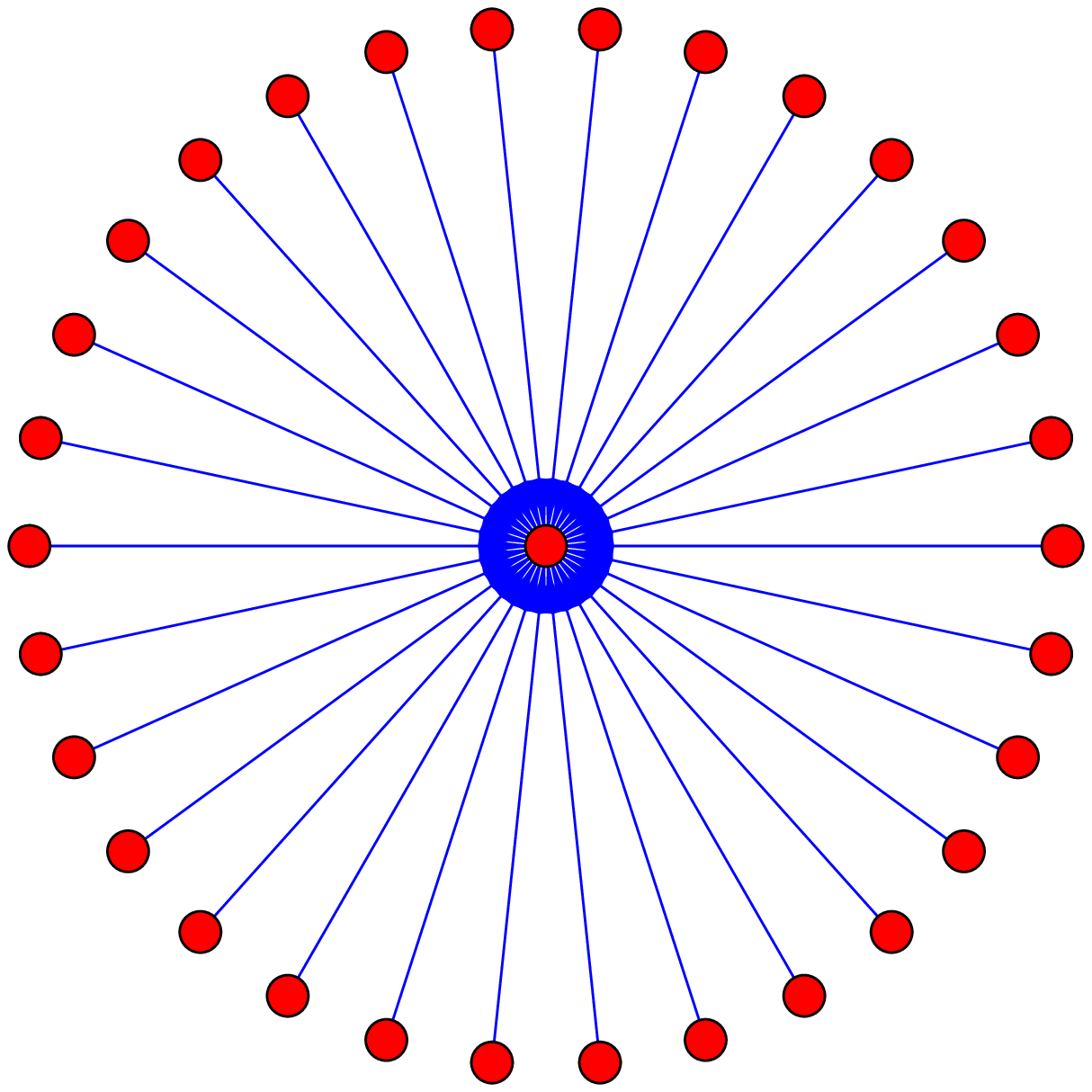}
\end{center}
\caption{atlantic ocean 0 m\\$31$ nodes, $30$ edges.}\label{fig:weakly-connected:positive:31}
\end{subfigure}
\hspace{0.01\textwidth}
\begin{subfigure}[b]{0.23\textwidth}
\begin{center}
\includegraphics[width=\textwidth]{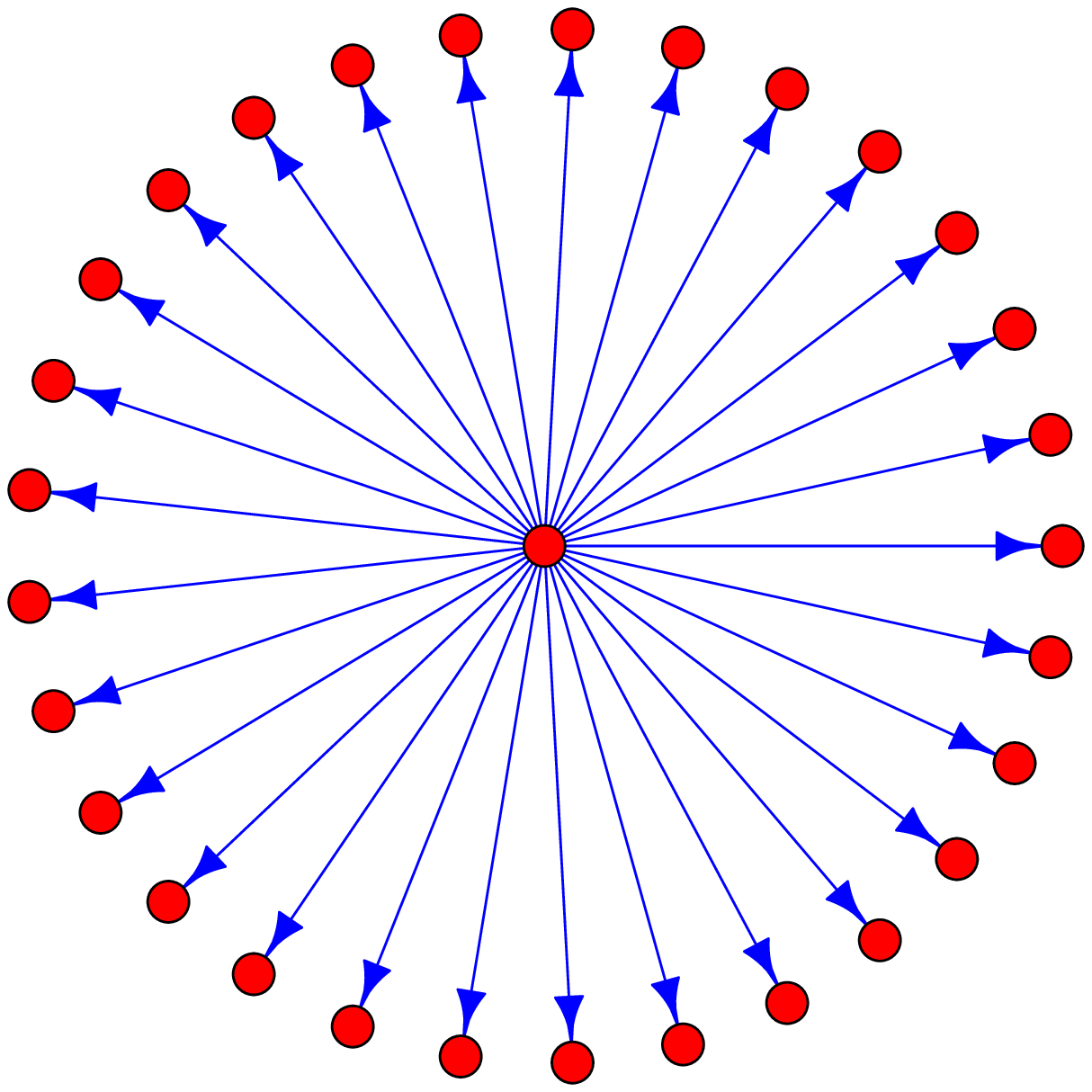}
\end{center}
\caption{haha\\$30$ nodes, $29$ edges.}\label{fig:weakly-connected:positive:30}
\end{subfigure}

\begin{subfigure}[b]{0.23\textwidth}
\begin{center}
\includegraphics[width=\textwidth]{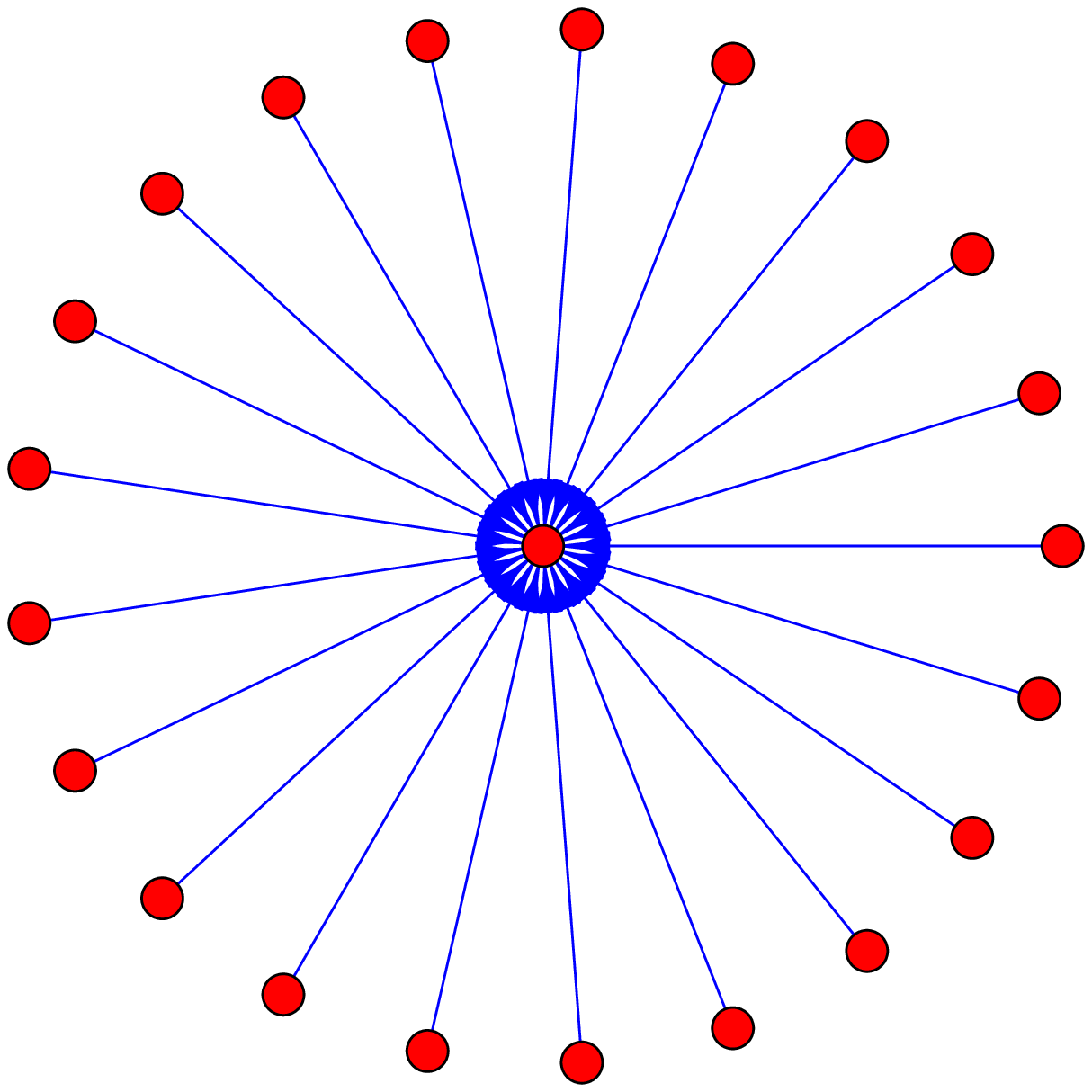}
\end{center}
\caption{indian ocean 0 m\\$22$ nodes, $21$ edges.}\label{fig:weakly-connected:positive:22a}
\end{subfigure}
\hspace{0.01\textwidth}
\begin{subfigure}[b]{0.23\textwidth}
\begin{center}
\includegraphics[width=\textwidth]{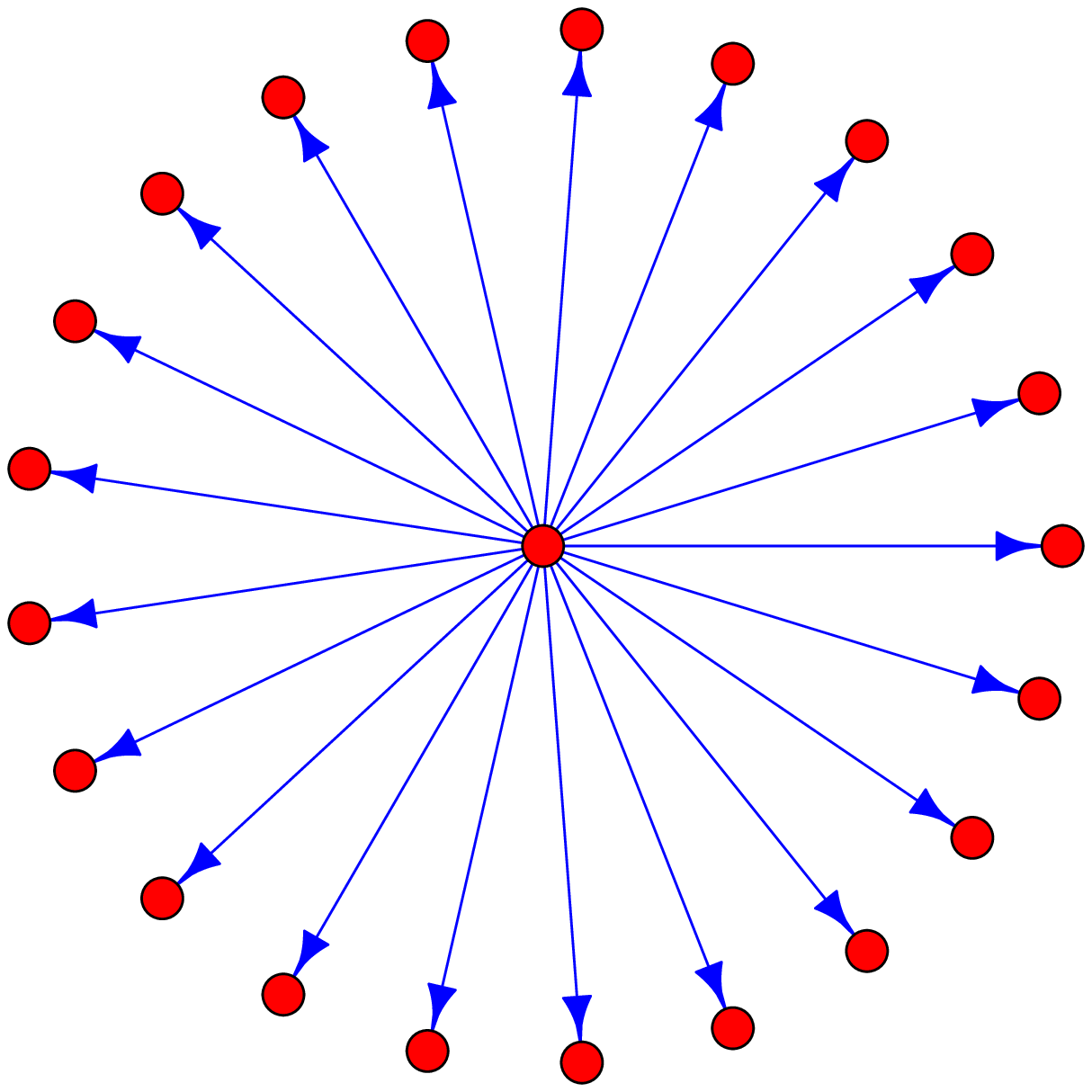}
\end{center}
\caption{space shuttle acronym\\$22$ nodes, $21$ edges.}\label{fig:weakly-connected:positive:22b}
\end{subfigure}
\hspace{0.01\textwidth}
\begin{subfigure}[b]{0.23\textwidth}
\begin{center}
\includegraphics[width=\textwidth]{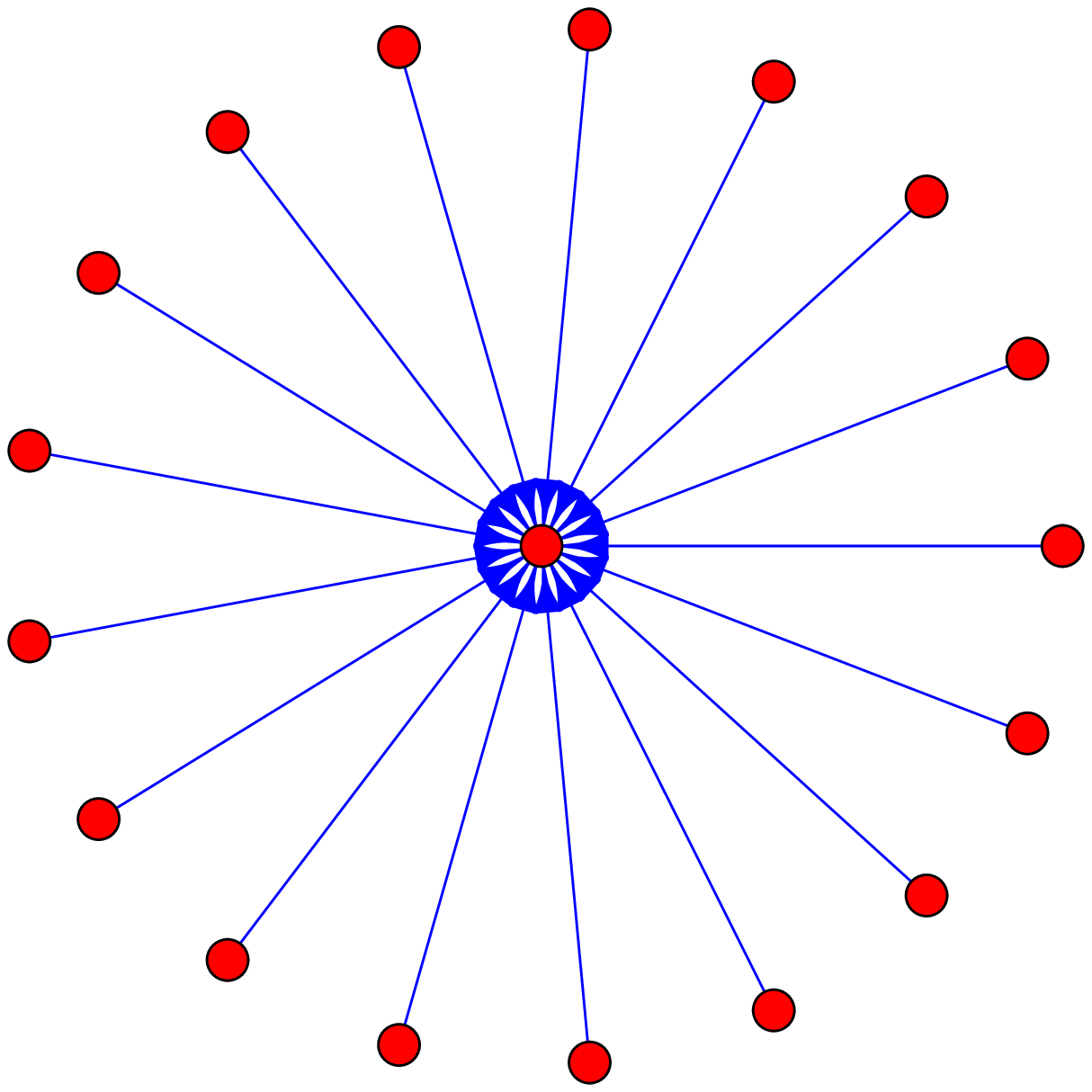}
\end{center}
\caption{caribbean sea 0 m\\$18$ nodes, $17$ edges.}\label{fig:weakly-connected:positive:18}
\end{subfigure}
\hspace{0.01\textwidth}
\begin{subfigure}[b]{0.23\textwidth}
\begin{center}
\includegraphics[width=\textwidth]{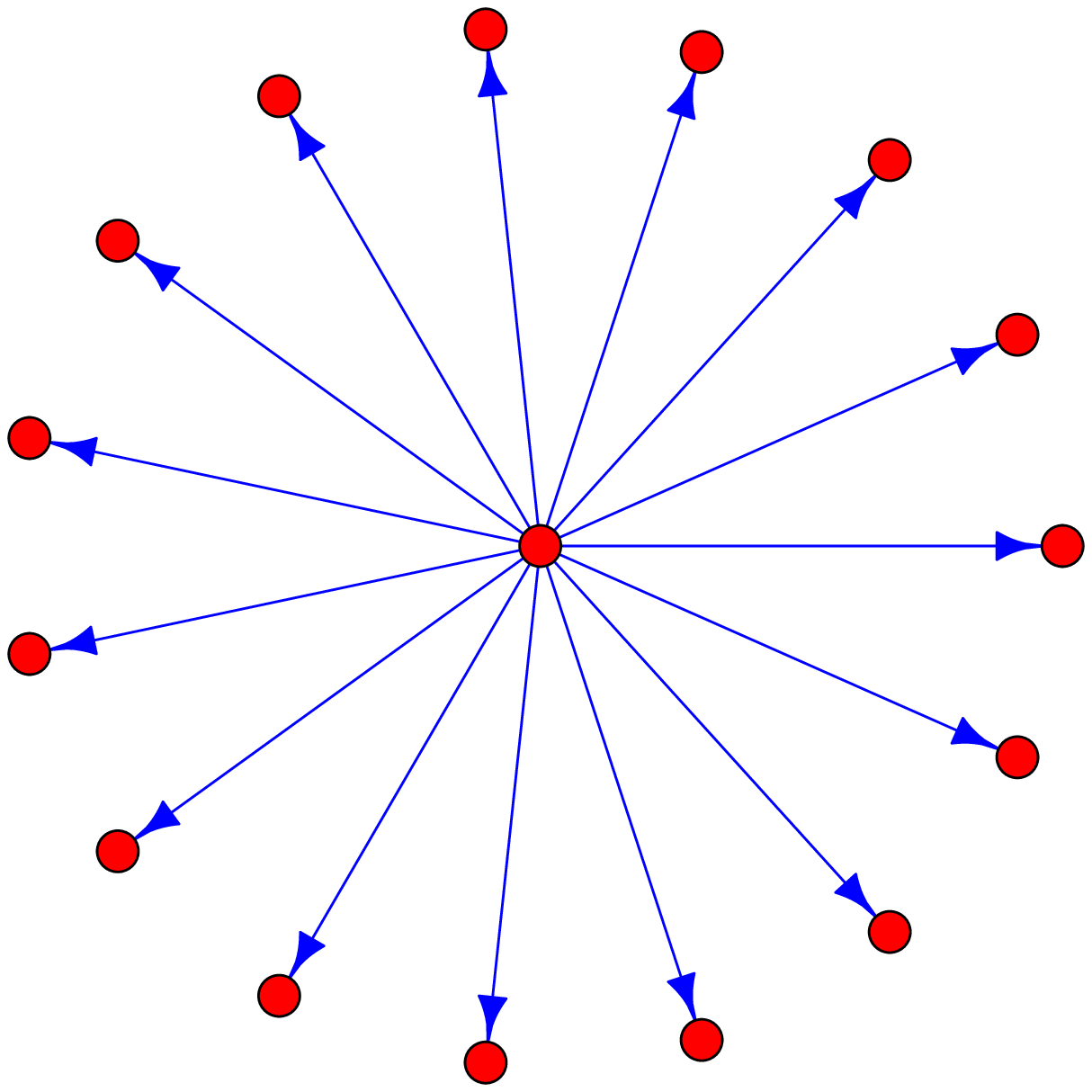}
\end{center}
\caption{another say safe\\$16$ nodes, $15$ edges.}\label{fig:weakly-connected:positive:16}
\end{subfigure}

\begin{subfigure}[b]{0.23\textwidth}
\begin{center}
\includegraphics[width=\textwidth]{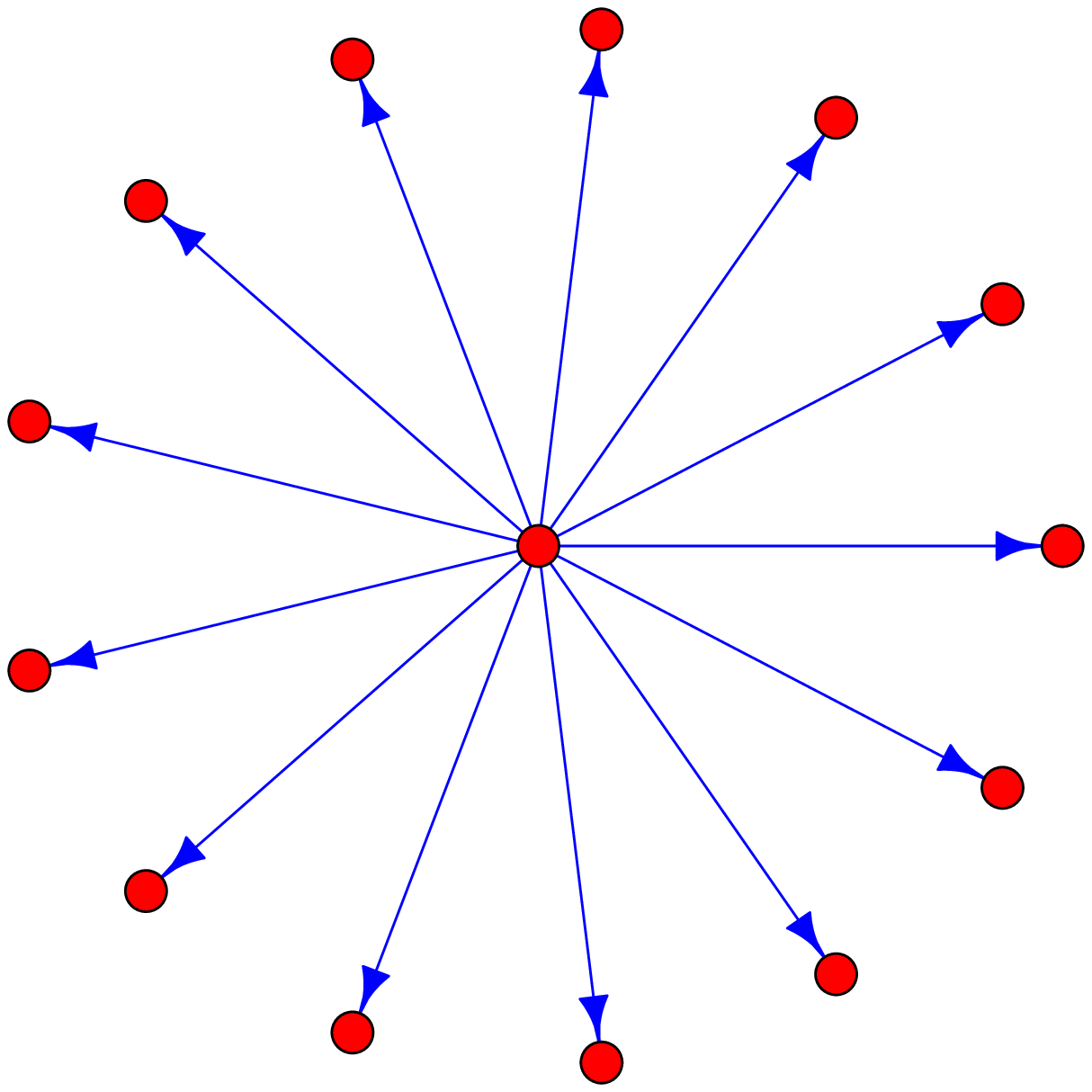}
\end{center}
\caption{alani\\$14$ nodes, $13$ edges.}\label{fig:weakly-connected:positive:14a}
\end{subfigure}
\hspace{0.01\textwidth}
\begin{subfigure}[b]{0.23\textwidth}
\begin{center}
\includegraphics[width=\textwidth]{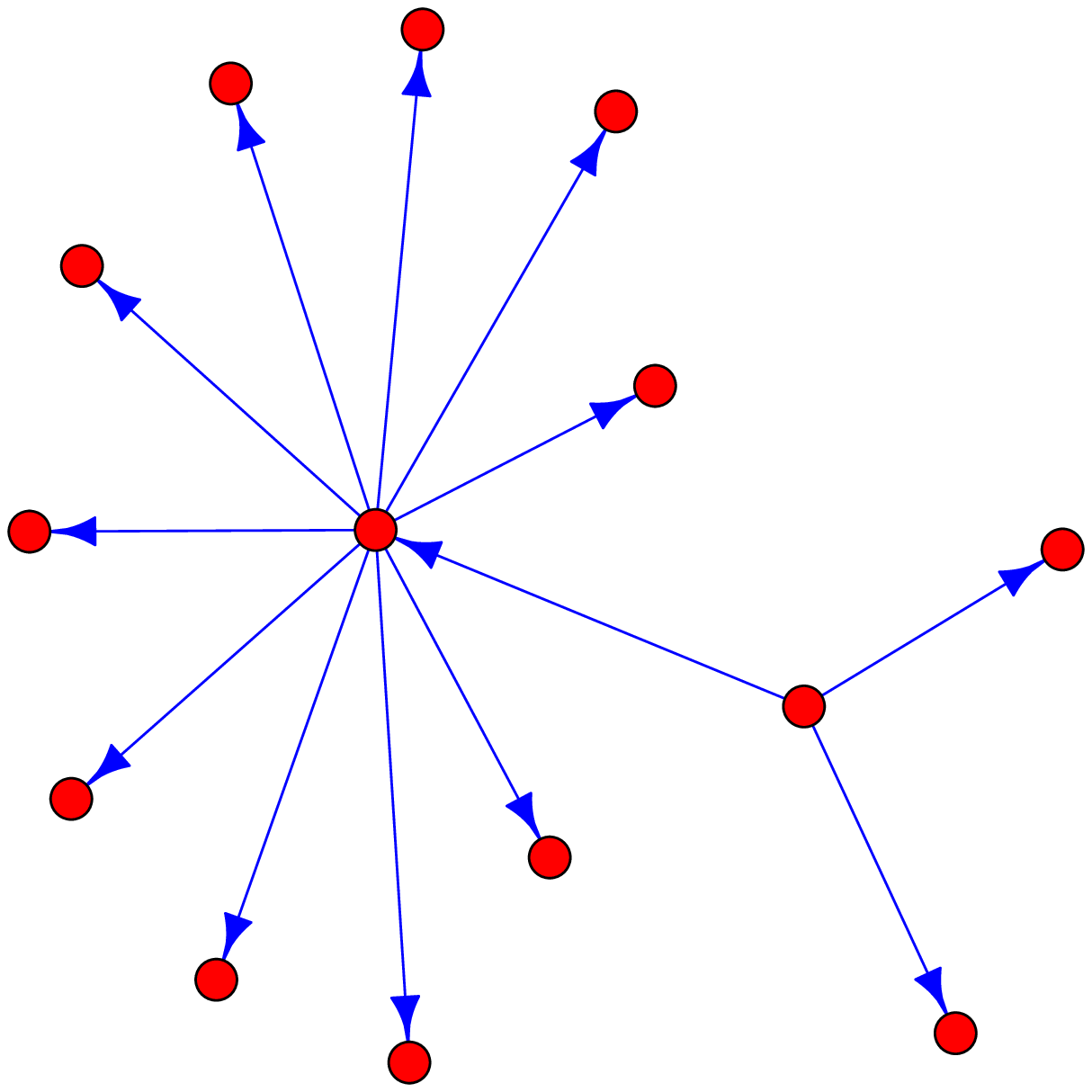}
\end{center}
\caption{different culture, different country\\$14$ nodes, $13$ edges.}\label{fig:weakly-connected:positive:14b}
\end{subfigure}
\hspace{0.01\textwidth}
\begin{subfigure}[b]{0.23\textwidth}
\begin{center}
\includegraphics[width=\textwidth]{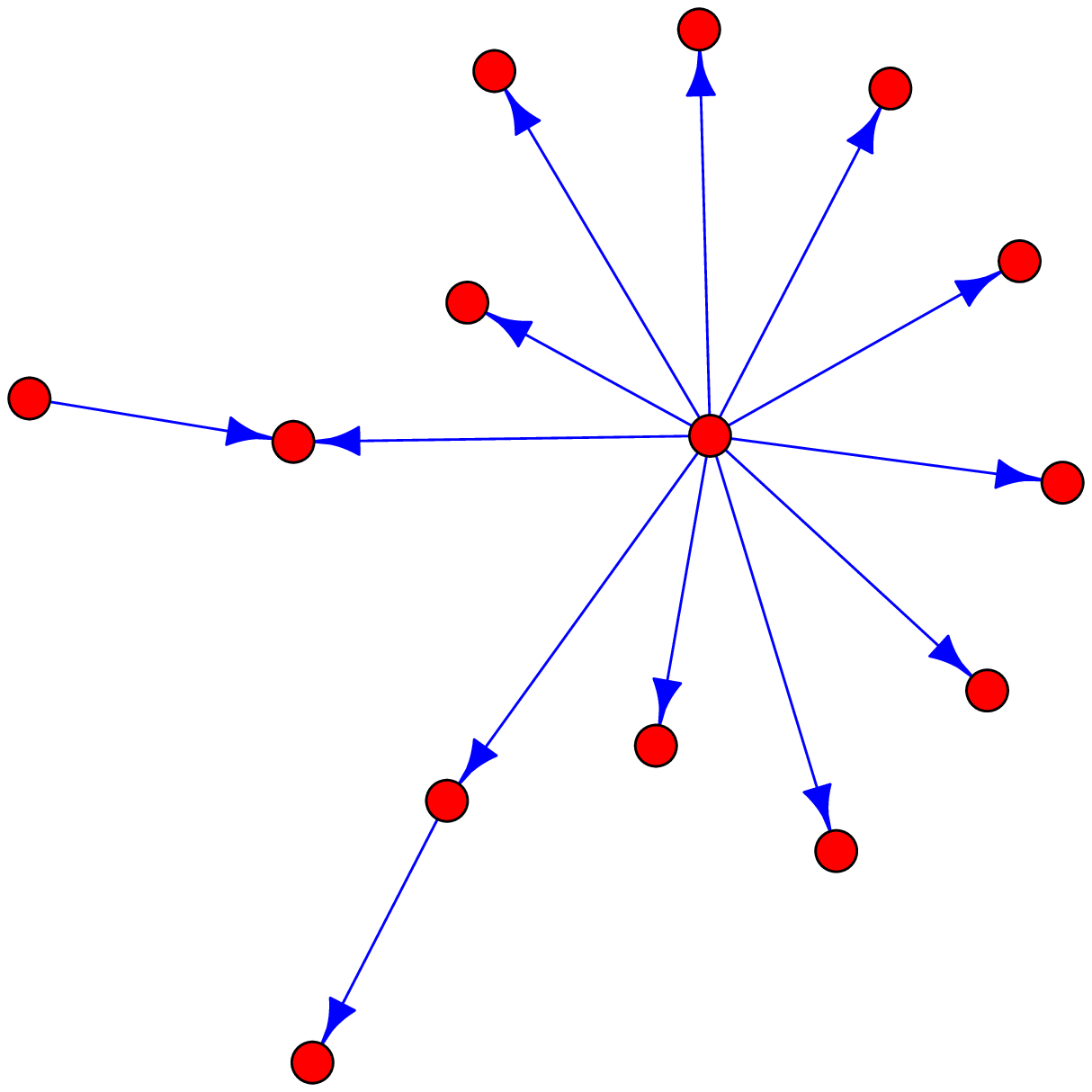}
\end{center}
\caption{type catheter, two channel, female catheter\\$14$ nodes, $13$ edges.}\label{fig:weakly-connected:positive:14c}
\end{subfigure}
\hspace{0.01\textwidth}
\begin{subfigure}[b]{0.23\textwidth}
\begin{center}
\includegraphics[width=\textwidth]{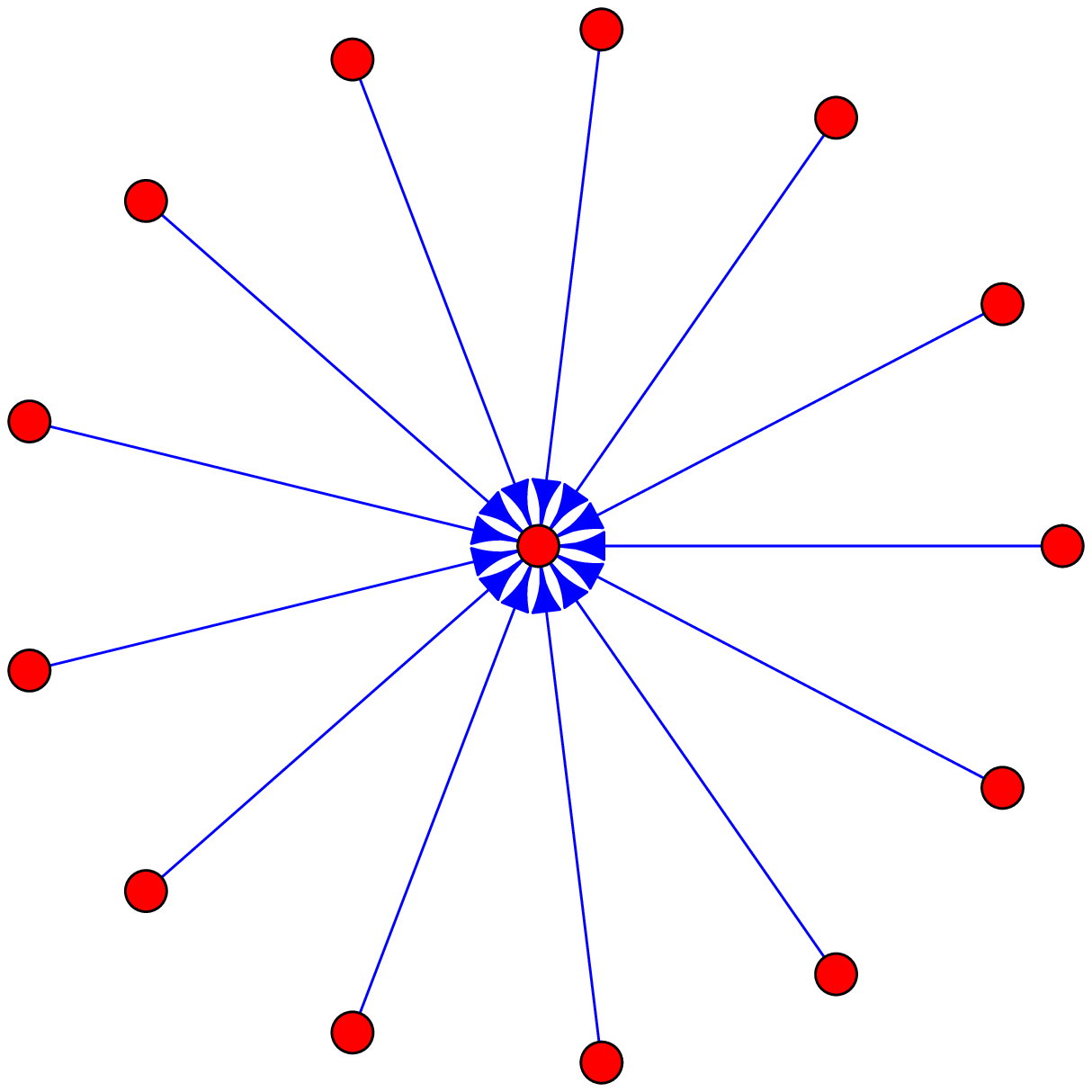}
\end{center}
\caption{rnum virus\\$14$ nodes, $13$ edges.}\label{fig:weakly-connected:positive:14d}
\end{subfigure}

\begin{subfigure}[b]{0.23\textwidth}
\begin{center}
\includegraphics[width=\textwidth]{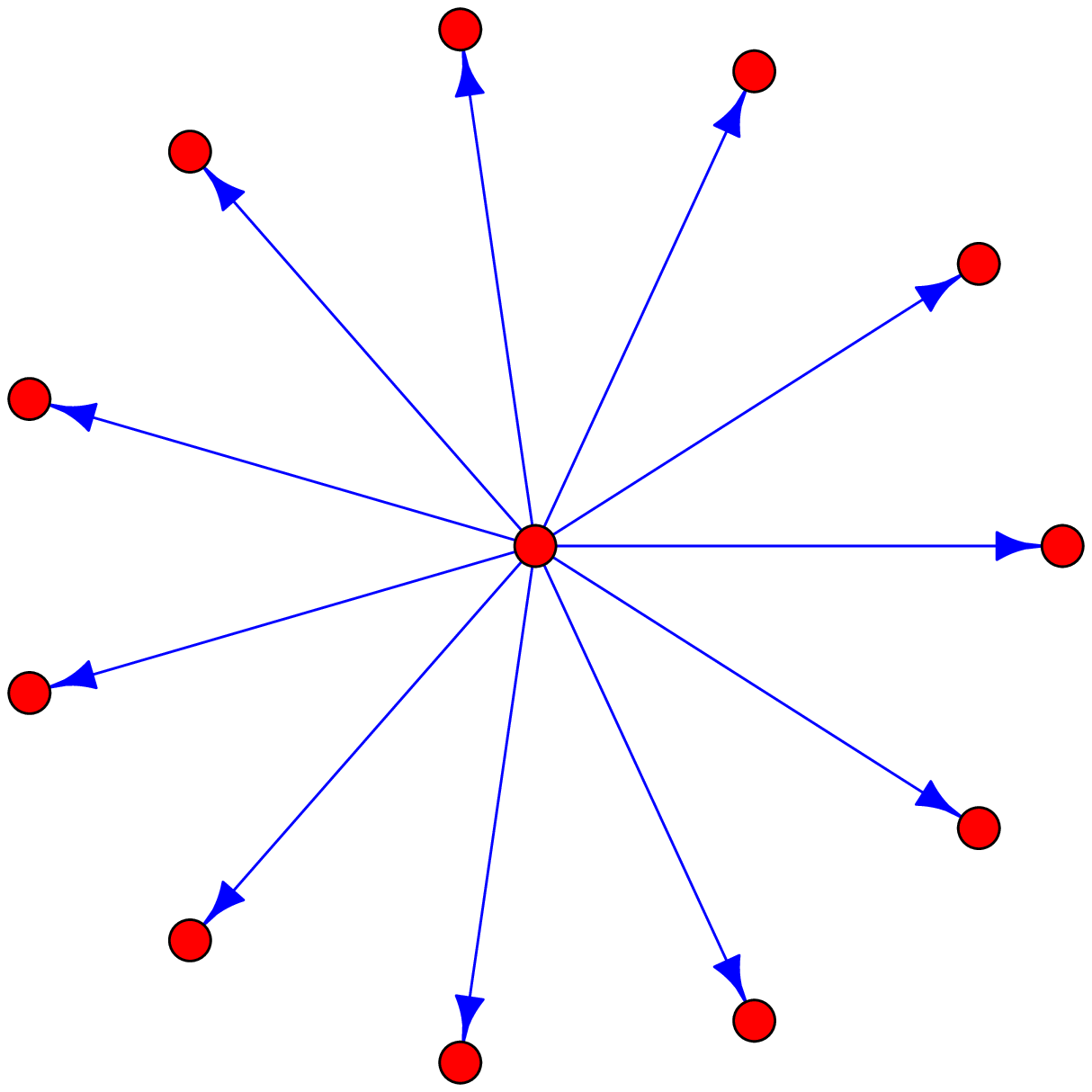}
\end{center}
\caption{dirge\\$12$ nodes, $11$ edges.}\label{fig:weakly-connected:positive:12}
\end{subfigure}
\hspace{0.01\textwidth}
\begin{subfigure}[b]{0.23\textwidth}
\begin{center}
\includegraphics[width=\textwidth]{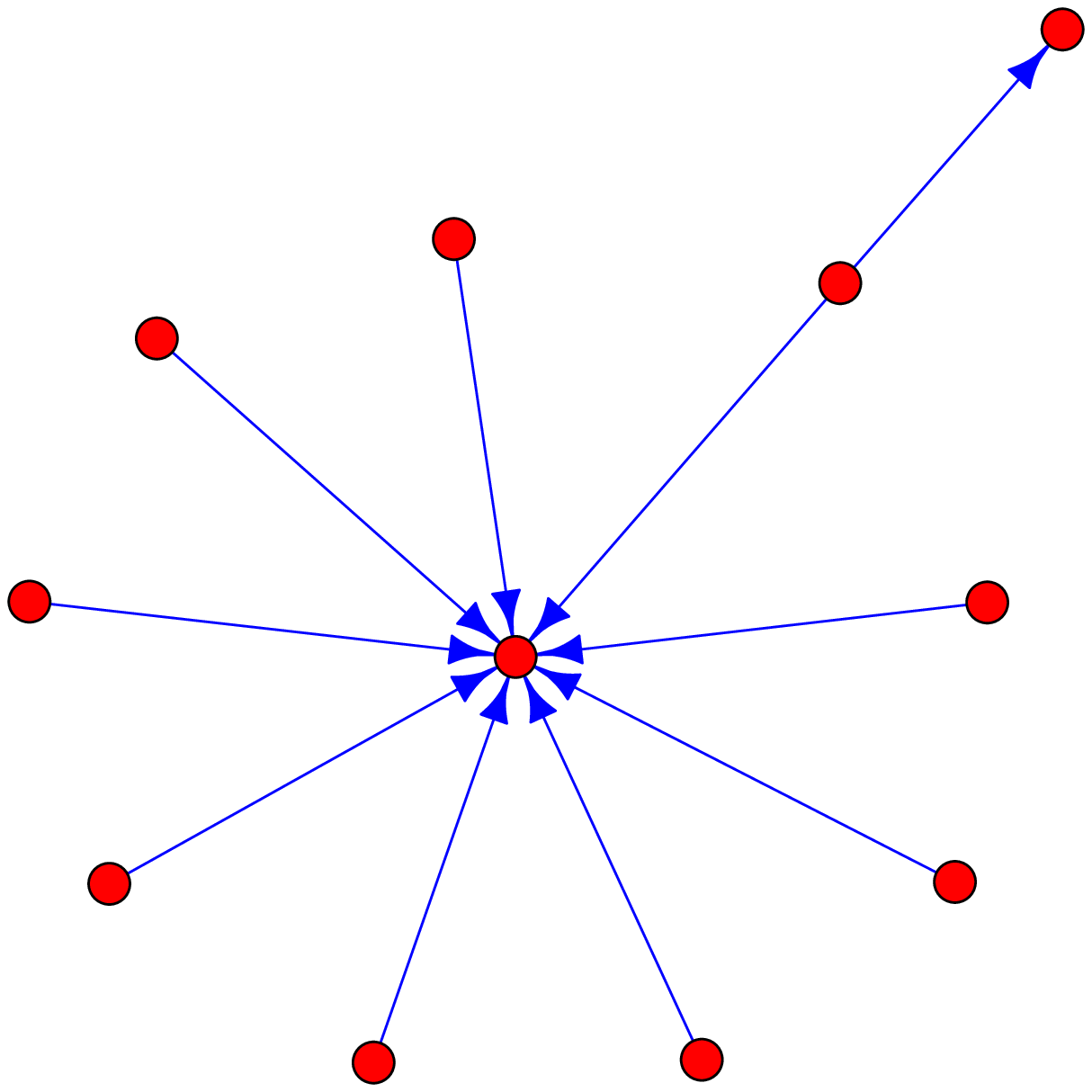}
\end{center}
\caption{darkish region mar, margaritifer sinus\\$11$ nodes, $10$ edges.}\label{fig:weakly-connected:positive:11a}
\end{subfigure}
\hspace{0.01\textwidth}
\begin{subfigure}[b]{0.23\textwidth}
\begin{center}
\includegraphics[width=\textwidth]{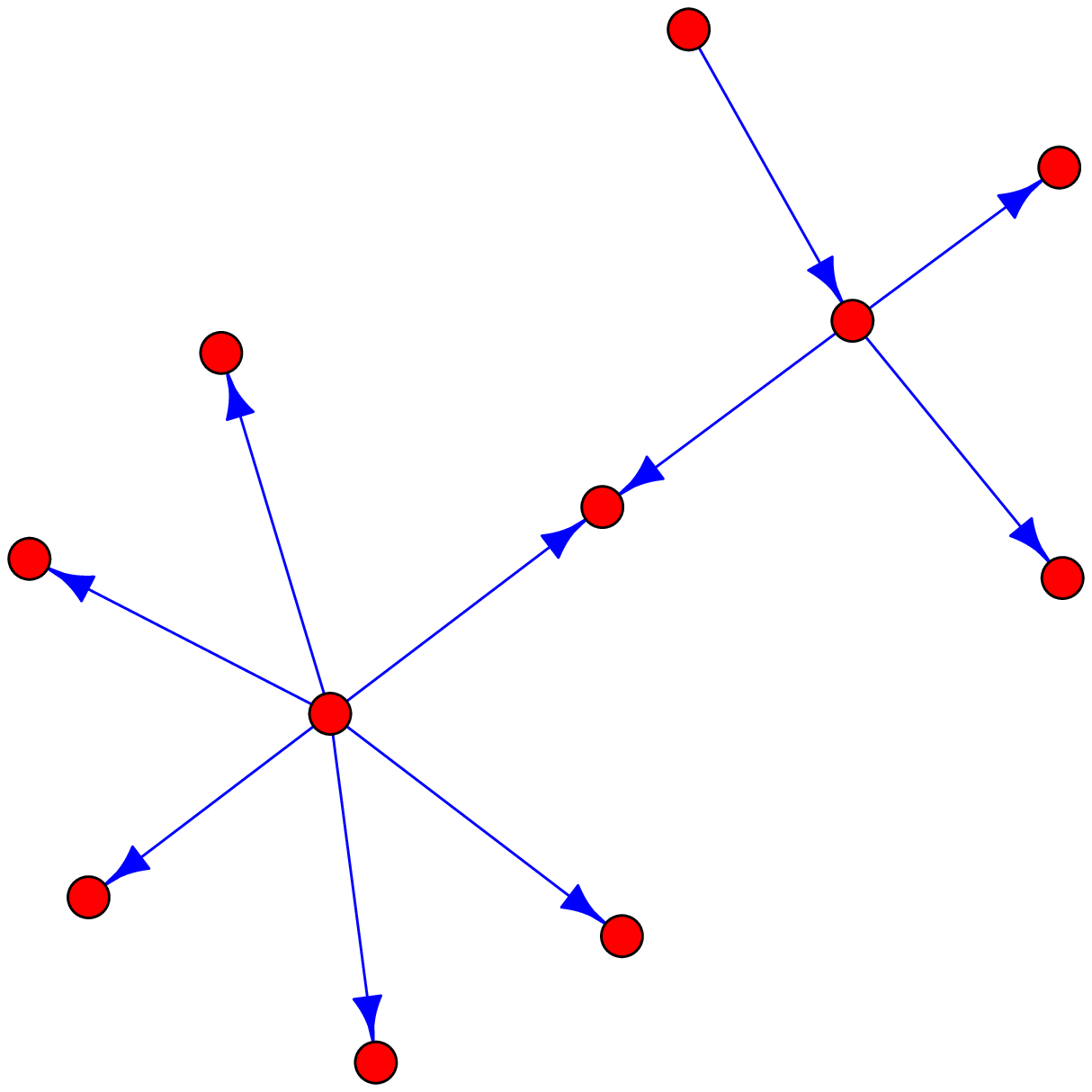}
\end{center}
\caption{hydrogen peroxide, h2o2, powerful oxidizer\\$11$ nodes, $10$ edges.}\label{fig:weakly-connected:positive:11b}
\end{subfigure}
\hspace{0.01\textwidth}
\begin{subfigure}[b]{0.23\textwidth}
\begin{center}
\includegraphics[width=\textwidth]{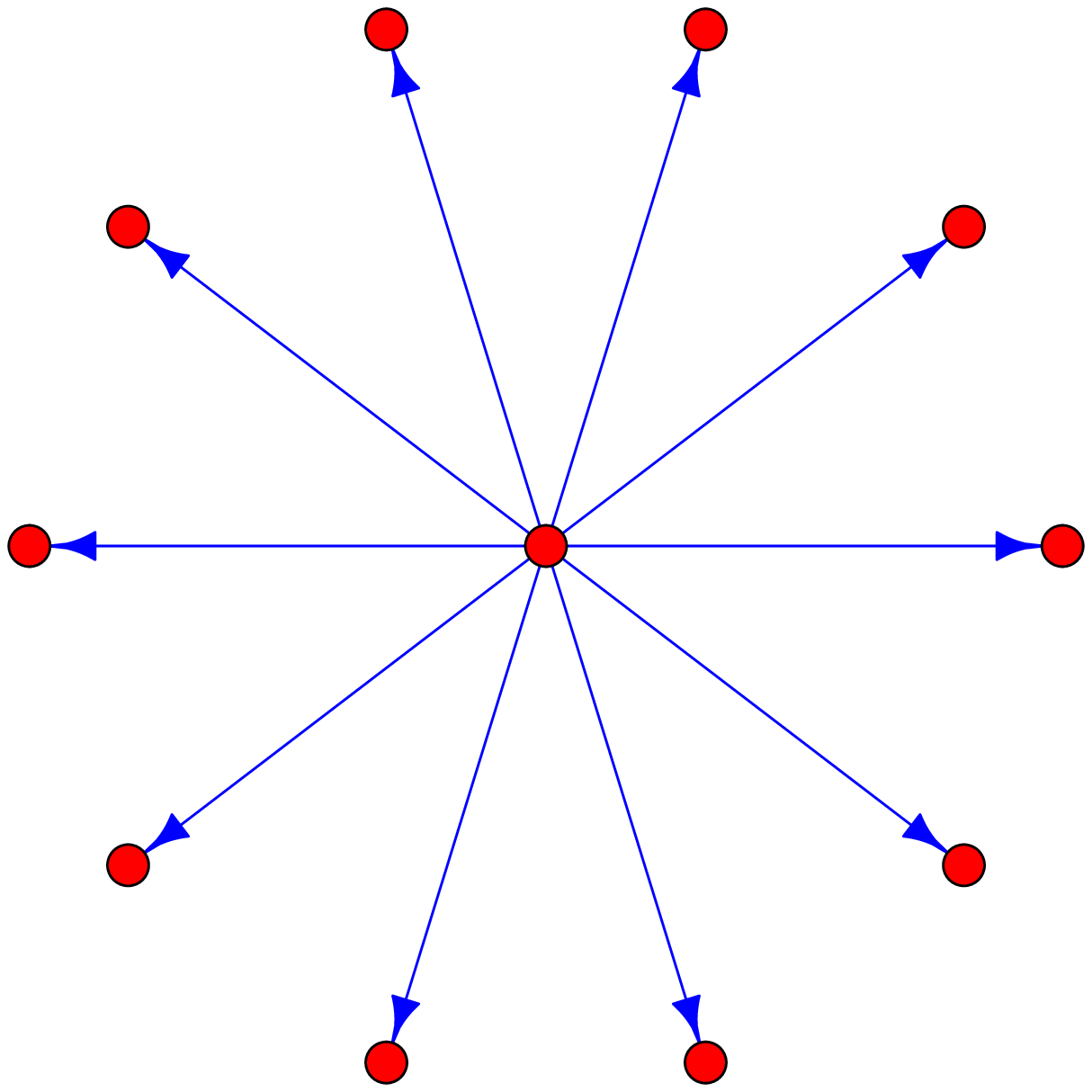}
\end{center}
\caption{sulfa drug\\$11$ nodes, $10$ edges.}\label{fig:weakly-connected:positive:11c}
\end{subfigure}
\end{center}

\caption{Weakly connected components that arise in the \emph{directed} graph 
induced by the assertions with positive polarity;
see Table \ref{tbl:distribution:component:positive:weak}. 
The names of the subgraphs are given by the nodes with total degrees different from $1$.
All such nodes are listed in decreasing order of total degree.
In case of a tie (Figure \ref{fig:weakly-connected:positive:14c}) precedence takes the name of the node
that has larger in-degree (two channel vs.~female catheter).}\label{fig:weakly-connected:positive}
\end{figure}

\subsubsection{Big Weakly Connected Component}
The undirected graph induced by the concepts that appear in the big undirected component 
is composed of $223,679$ nodes and $383,698$ edges.
For information about shortest paths in this component please see Chapter \ref{chapter:shortest-paths}.

\subsubsection{Component of Size $55$}
The component of size $55$ is a star about \emph{medical specialties}.
Concept \dbtext{medical specialty} (171593) has out-degree $54$ and in-degree $0$.
All the other concepts have out-degree equal to $0$ and in-degree equal to $1$.
These $54$ concepts are
\dbtext{concern anesthesia anesthesiology} (171594),
\dbtext{concern bactereia bacteriology} (171595),
\dbtext{concern birth obstetrics} (171596),
\dbtext{concern body function physiology} (171597),
\dbtext{concern body move\-ment kinesiology} (171598),
\dbtext{concern cell cytology} (171599),
\dbtext{concern child pediatrics} (171600),
\dbtext{con\-cern digestive system gastroenterology} (171601),
\dbtext{concern disease cause etiology} (171602),
\dbtext{concern disease classification nosology} (171603),
\dbtext{concern disease identification diagnostic} (171604),
\dbtext{con\-cern ear otology} (171605),
\dbtext{concern epidemic epidemiology} (171606),
\dbtext{concern contagious disease ep\-i\-de\-mi\-ol\-o\-gy} (171607),
\dbtext{concern eye ophthalmology} (171608),
\dbtext{concern gland adenology} (171609),
\dbtext{concern gum periodontics} (171610),
\dbtext{concern hear audiology} (171611),
\dbtext{concern heart cardiology} (171612),
\dbtext{con\-cern hernia herniology} (171613),
\dbtext{concern intestine entrology} (171614),
\dbtext{concern joint arthrology} (171615),
\dbtext{concern joint rheumatology} (171616),
\dbtext{concern kidney nephrology} (171617),
\dbtext{concern liver hepetology} (171618),
\dbtext{concern liver hepatology} (171619),
\dbtext{concern mental disorder psychiatry} (171620),
\dbtext{concern mouth stomatology} (171621),
\dbtext{concern mouth oralogy} (171622),
\dbtext{concern muscle myology} (171623),
\dbtext{concern muscle orthopedic} (171624),
\dbtext{concern nervous system neurology} (171625),
\dbtext{concern nervous sys\-tem neu\-ro\-pha\-thol\-o\-gy} (171626),
\dbtext{concern newborn neonatology} (171627),
\dbtext{concern nose rhinology} (171628),
\dbtext{concern parasite parasitology} (171629),
\dbtext{concern poison toxicology} (171630),
\dbtext{concern toxin tox\-i\-col\-o\-gy} (171631),
\dbtext{concern rheumatic disease rheumatology} (171632),
\dbtext{concern serum serology} (171633),
\dbtext{concern skin der\-ma\-tol\-o\-gy} (171634),
\dbtext{concern skull craniology} (171635),
\dbtext{concern stomach gastrology} (171636),
\dbtext{concern symptom symptomology} (171638),
\dbtext{concern tissue histology} (171642),
\dbtext{concern tumor oncology} (171643),
\dbtext{concern ulcer helcology} (171644),
\dbtext{concern vein phlebology} (171645),
\dbtext{concern virus virology} (171646),
\dbtext{concern x-ray radiology} (171647),
\dbtext{concern radiation therapy radiology} (171648),
\dbtext{concern dentistry tooth} (325385),
\dbtext{concern tooth straighten orthodontics} (325386),
and
\dbtext{concern tooth dentistry} (325387).

\subsubsection{Component of Size $32$}
The component of size $32$ is about the \emph{sea level of the pacific ocean}.
Concept \dbtext{pacific ocean 0 m} (5019) has out-degree $0$ and in-degree $31$.
All the other concepts have out-degree equal to $1$ and in-degree equal to $0$.
These $31$ concepts are
\dbtext{low point american samoa} (5018),
\dbtext{low point baker island} (5044),
\dbtext{low point chile} (5082),
\dbtext{low point colombia} (5086),
\dbtext{low point cook island} (5088),
\dbtext{low point costa rica} (5091),
\dbtext{low point ecuador} (5102),
\dbtext{low point el salvador} (5107),
\dbtext{low point fiji} (5117),
\dbtext{low point guam} (5132),
\dbtext{low point guatemala} (5133),
\dbtext{low point jarvis island} (5157),
\dbtext{low point kingman reef} (5163),
\dbtext{low point kiribati} (5164),
\dbtext{low point marshall island} (5187),
\dbtext{low point midway island} (5197),
\dbtext{low point nauru} (5220),
\dbtext{low point new zealand} (5225),
\dbtext{low point nicaragua} (5226),
\dbtext{low point niue} (5231),
\dbtext{low point norfolk island} (5232),
\dbtext{low point palau} (5241),
\dbtext{low point panama} (5243),
\dbtext{low point peru} (5247),
\dbtext{low point samoa} (5266),
\dbtext{low point solomon island} (5281),
\dbtext{low point tokelau} (5306),
\dbtext{low point tonga} (5308),
\dbtext{low point tuvalu} (5314),
\dbtext{low point vanuatu} (5330),
and
\dbtext{low point wake island} (5334).

\subsubsection{Component of Size $31$}
The component of size $31$ is about the \emph{sea level of the atlantic ocean}.
Concept \dbtext{atlantic ocean 0 m} (5022) has out-degree $0$ and in-degree $30$.
All the other concepts have out-degree equal to $1$ and in-degree equal to $0$.
These $30$ concepts are
\dbtext{low point angola} (5021),
\dbtext{low point barbados} (5046),
\dbtext{low point benin} (5055),
\dbtext{low point bermuda} (5056),
\dbtext{low point brazil} (5064),
\dbtext{low point cameroon} (5075),
\dbtext{low point canada} (5076),
\dbtext{low point cape verde} (5078),
\dbtext{low point french guiana} (5119),
\dbtext{low point gabon} (5120),
\dbtext{low point gha\-na} (5125),
\dbtext{low point greenland} (5129),
\dbtext{low point guernsey} (5134),
\dbtext{low point guinea} (5136),
\dbtext{low point guinea-bissau} (5137),
\dbtext{low point guyana} (5138),
\dbtext{low point iceland} (5143),
\dbtext{low point ireland} (5150),
\dbtext{low point jersey} (5158),
\dbtext{low point liberia} (5173),
\dbtext{low point namibia} (5218),
\dbtext{low point nigeria} (5230),
\dbtext{low point portugal} (5253),
\dbtext{low point saint helena} (5263),
\dbtext{low point senegal} (5270),
\dbtext{low point si\-er\-ra leone} (5274),
\dbtext{low point south africa} (5283),
\dbtext{low point spain} (5287),
\dbtext{low point togo} (5305),
and
\dbtext{low point uruguay} (5327).

\subsubsection{Component of Size $30$}
The component of size $30$ is about the endangered \emph{haha plant species}.
Concept \dbtext{haha} (13162) has out-degree $29$ and in-degree $0$.
All the other concepts have out-degree equal to $0$ and in-degree equal to $1$.
These $29$ concepts are
\dbtext{cyanea acuminata} (13163),
\dbtext{cyanea asarifolia} (13164),
\dbtext{cyanea copelandius copelandius} (13165),
\dbtext{cyanea copelandius haleakalaensis} (13166),
\dbtext{cyanea crispa} (13167),
\dbtext{cyanea dunbarius} (13168),
\dbtext{cyanea grimesiana grimesiana} (13172),
\dbtext{cyanea grimesiana obata} (13174),
\dbtext{cyanea hamatiflora hamatiflo\-ra} (13176),
\dbtext{cyanea humboldtiana} (13177),
\dbtext{cyanea koolauensis} (13178),
\dbtext{cyanea lobata} (13179),
\dbtext{cyanea longiflora} (13180),
\dbtext{cyanea mceldowneyi} (13184),
\dbtext{cyanea pinnatifida} (13185),
\dbtext{cyanea platyphylla} (13186),
\dbtext{cyanea procera} (13187),
\dbtext{cyanea recta} (13188),
\dbtext{cyanea remyi} (13189),
\dbtext{cyanea stictophylla} (13192),
\dbtext{cyanea superba} (13193),
\dbtext{cyanea truncata} (13194),
\dbtext{cyanea undulata} (13195),
\dbtext{cyanea glabrum} (311277),
\dbtext{cyanea hamatiflora carlsonie} (311278),
\dbtext{cyanea macrostegia gibsonie} (311279),
\dbtext{cyanea mannie} (311280),
\dbtext{cyanea st-johnie} (311281),
and
\dbtext{cyanea shipmannie} (311282).

\subsubsection{Components of Size $22$}
The first component of size $22$ is about the \emph{sea level of the indian ocean}.
Concept \dbtext{indian ocean 0 m} (5027) has out-degree $0$ and in-degree $21$.
All the other concepts have out-degree equal to $1$ and in-degree equal to $0$.
These $21$ concepts are
\dbtext{low point antarctica} (5026),
\dbtext{low point bangladesh} (5045),
\dbtext{low point christmas island} (5085),
\dbtext{low point comoro} (5087),
\dbtext{low point europa island} (5115),
\dbtext{low point glorioso island} (5127),
\dbtext{low point india} (5144),
\dbtext{low point indonesia} (5147),
\dbtext{low point kenya} (5162),
\dbtext{low point mad\-a\-gas\-car} (5181),
\dbtext{low point malaysia} (5182),
\dbtext{low point maldive} (5183),
\dbtext{low point mauritius} (5191),
\dbtext{low point mayotte} (5193),
\dbtext{low point mozambique} (5216),
\dbtext{low point pakistan} (5240),
\dbtext{low point reunion} (5256),
\dbtext{low point seychelles} (5273),
\dbtext{low point somalia} (5282),
\dbtext{low point sri lanka} (5288),
and
\dbtext{low point tanzania} (5303).

The second component of size $22$ is about \emph{space shuttle acronyms}.
Concept \dbtext{space shuttle acronym} (172559) has out-degree $21$ and in-degree $0$.
All the other concepts have out-degree equal to $0$ and in-degree equal to $1$.
These $21$ concepts are
\dbtext{adi attitude direction indicator} (172560),
\dbtext{apu auxiliary powewr unit} (172561),
\dbtext{css control stick steer} (172562),
\dbtext{dcm display control module} (172563),
\dbtext{eva extravehicular activity} (172564),
\dbtext{hsus horizontal situation indicator} (172565),
\dbtext{iva intravehicu\-lar activity} (172566),
\dbtext{lcc launch control center} (172567),
\dbtext{lo loss signal} (172568),
\dbtext{mcc mission con\-trol center} (172569),
\dbtext{meet mission elapse time} (172570),
\dbtext{mlp mobile launch platform} (172571),
\dbtext{mmu man maneuver unit} (172572),
\dbtext{om orbital maneuver system} (172573),
\dbtext{pam payload assist module} (172574),
\dbtext{plss portable life support system} (172575),
\dbtext{rc reaction control system} (172576),
\dbtext{rm remote ma\-nip\-u\-lator system} (172577),
\dbtext{srb sol\-id rocket booster} (172578),
\dbtext{tp thermal protection system} (172579),
and
\dbtext{wc waste collection system} (172580).

\subsubsection{Component of Size $18$}
The component of size $18$ is about the \emph{sea level of the caribbean sea}.
Concept \dbtext{caribbean sea 0 m} (5024) has out-degree $0$ and in-degree $17$.
All the other concepts have out-degree equal to $1$ and in-degree equal to $0$.
These $17$ concepts are
\dbtext{low point anguilla} (5023),
\dbtext{low point aruba} (5033),
\dbtext{low point belize} (5052),
\dbtext{low point cayman island} (5079),
\dbtext{low point cuba} (5093),
\dbtext{low point dominica} (5101),
\dbtext{low point grenada} (5130),
\dbtext{low point guadeloupe} (5131),
\dbtext{low point haiti} (5139),
\dbtext{low point honduras} (5140),
\dbtext{low point jamaica} (5154),
\dbtext{low point martinique} (5190),
\dbtext{low point montserrat} (5213),
\dbtext{low point puerto rico} (5254),
\dbtext{low point saint lucia} (5264),
\dbtext{low point venezuela} (5331),
and
\dbtext{low point virgin island} (5333).

\subsubsection{Component of Size $16$}
The component of size $16$ is about \emph{saying things in a safe way}.
Concept \dbtext{another say safe} (163403) has out-degree $15$ and in-degree $0$.
All the other concepts have out-degree equal to $0$ and in-degree equal to $1$.
These $15$ concepts are
\dbtext{say perfectly safe} (324626),
\dbtext{say absolutely safe} (324627),
\dbtext{say really safe} (324628),
\dbtext{say truly safe} (324629),
\dbtext{say obviously safe} (324630),
\dbtext{say undeniably safe} (324631),
\dbtext{say veritably safe} (324632),
\dbtext{say remarkably safe} (324633),
\dbtext{say notably safe} (324634),
\dbtext{say strikingly safe} (324635),
\dbtext{say markedly safe} (324636),
\dbtext{say eminently safe} (324638),
\dbtext{say greatly safe} (324639),
\dbtext{say vastly safe} (324640),
\dbtext{say hugely safe} (324641).

\subsubsection{Components of Size $14$}
The first component of size $14$ is about the plant species which is known in Hawaii as \emph{alani}.
Concept \dbtext{alani} (12772) has out-degree $13$ and in-degree $0$.
All the other concepts have out-degree equal to $0$ and in-degree equal to $1$.
These $13$ concepts are
\dbtext{melicope adscenden} (12773),
\dbtext{melicope balloui} (12774),
\dbtext{melicope haupuensis} (12775),
\dbtext{melicope lydgatei} (12777),
\dbtext{melicope mucronulata} (12778),
\dbtext{melicope munroi} (12779),
\dbtext{melicope ovali} (12780),
\dbtext{melicope pallida} (12781),
\dbtext{melicope quadrangularis} (12783),
\dbtext{melicope reflexa} (12784),
\dbtext{melicope zahlbruckneri} (12787),
\dbtext{melicope saint-johnie} (311223),
and
\dbtext{melicope knudsenie} (311225).

The second component of size $14$ revolves around differences that \emph{different cultures} have.
Concept \dbtext{different culture} (17023) has out-degree $10$ and in-degree $1$.
Concept \dbtext{different country} (76553) has out-degree $3$ and in-degree $0$.
All the other concepts have out-degree equal to $0$ and in-degree equal to $1$.
These $12$ concepts are
\dbtext{different tradition} (46475),
\dbtext{different idea taste beauty} (72184),
\dbtext{different law} (76554),
\dbtext{different form art} (89948),
\dbtext{different tonal scale} (90582),
\dbtext{different type jewelry} (91407),
\dbtext{different value system} (100103),
\dbtext{different currency money} (117726),
\dbtext{different tradition celebrate birth\-day} (131995),
\dbtext{different concept fairness} (175218),
\dbtext{different custom} (311469),
and
\dbtext{different custom talk} (316209).

The third component of size $14$ has as central notion different \emph{types of catheters}.
Concept \dbtext{type catheter} (169453) has out-degree $11$ and in-degree $0$.
Concept \dbtext{two channel} (169456) has out-degree $0$ and in-degree $2$.
Concept \dbtext{female catheter} (169462) has out-degree $1$ and in-degree $1$.
Concept \dbtext{double-current catheter} (169457) has out-degree $1$ and in-degree $0$.
All the other $10$ concepts have out-degree equal to $0$ and in-degree equal to $1$.
These $10$ concepts are 
\dbtext{uterine catheter} (169454),
\dbtext{cardiac catheter} (169455),
\dbtext{elbow catheter} (169458),
\dbtext{insert through female urethra} (169463),
\dbtext{dilate laryngeal stricture} (169465),
\dbtext{effect blad\-der drainage} (169466),
\dbtext{foley catheter} (169470),
\dbtext{itard catheter} (169471),
\dbtext{bozeman catheter} (325202),
and
\dbtext{mercy catheter} (325203).

The fourth component of size $14$ is about \emph{rnum virus}.
Concept \dbtext{rnum virus} (226834) has out-degree $0$ and in-degree $13$.
All the other $13$ concepts have out-degree equal to $1$ and in-degree equal to $0$.
These $13$ concepts are 
\dbtext{retrovirid} (226833),
\dbtext{arenavirid} (226835),
\dbtext{picornavirid} (226836),
\dbtext{calicivirid} (226837),
\dbtext{bunyavirid} (226838),
\dbtext{orthomyxovirid} (226839),
\dbtext{paramyxovirid} (226840),
\dbtext{rhabdovirid} (226841),
\dbtext{pilovirid} (226842),
\dbtext{togavirid} (226843),
\dbtext{flavivirid} (226844),
\dbtext{coronavirid} (226845),
and
\dbtext{reovirid} (226887).

\subsubsection{Component of Size $12$}
The component of size $12$ is about the notion of \emph{dirge}.
Concept \dbtext{dirge} (173532) has out-degree $11$ and in-degree $0$.
All the other concepts have out-degree equal to $0$ and in-degree equal to $1$.
These $11$ concepts are
\dbtext{slow mournful piece music} (173533),
\dbtext{hymn lamentation grief} (173534),
\dbtext{accompany funeral} (173535),
\dbtext{accompany memorial rite} (173536),
\dbtext{any slow solemn piece music} (173537),
\dbtext{death melody} (357753),
\dbtext{fu\-ne\-ral march} (361199),
\dbtext{funeral music} (361200),
\dbtext{funeral song} (361202),
\dbtext{mournful song} (368168),
\dbtext{death song} (384749).

\subsubsection{Components of Size $11$}
The first component of size $11$
is about \emph{dark regions on Mars}
\footnote{ See also \url{http://conceptnet5.media.mit.edu/web/c/en/darkish_region_on_mar}.}.
Concept \dbtext{darkish region mar} (106353) has out-degree $0$ and in-degree $9$.
Concept \dbtext{margaritifer sinus} (106363) has out-degree $2$ and in-degree $0$.
Concept \dbtext{darkish area mar} (106364) has out-degree $0$ and in-degree $1$.
All the other concepts have out-degree equal to $1$ and in-degree equal to $0$.
These $8$ concepts are
\dbtext{nilokera} (106352),
\dbtext{iapygia} (106354),
\dbtext{mare hadriaticum} (106355),
\dbtext{hellespontu} (106356),
\dbtext{propoutis} (106371),
\dbtext{noctis lacus} (106374),
\dbtext{tithonius lacus} (106379),
and
\dbtext{chrysokera} (106380).

In the second component of size $11$ the concept with the highest degree
is \emph{hydrogen peroxide}.
Concept \dbtext{hydrogen peroxide} (122047) has out-degree $6$ and in-degree $0$.
Concept \dbtext{h2o2} (122044) has out-degree $3$ and in-degree $1$.
Concept \dbtext{powerful oxidizer} (122048) has out-degree $0$ and in-degree $2$.
Concept \dbtext{chemical formula hydrogen peroxide} (122043)
has out-degree $1$ and in-degree $0$.
All the other concepts have out-degree equal to $0$ and in-degree equal to $1$.
These $7$ concepts are
\dbtext{natural metabolite many organism} (122052),
\dbtext{miscible water} (122053),
\dbtext{deodorize bleach agent} (122058),
\dbtext{clear colorless} (122059),
\dbtext{characteristic pungent odor} (122060),
\dbtext{sell water solution} (122061),
and
\dbtext{mild disinfectant} (122063).

The third component of size $11$ is about \emph{sulfa drug}.
Concept \dbtext{sulfa drug} (171559) has out-degree $10$ and in-degree $0$.
All the other concepts have out-degree equal to $0$ and in-degree equal to $1$.
These $10$ concepts are
\dbtext{derive sulfanilamide} (171560),
\dbtext{treat infection} (171561),
\dbtext{treat conjunctivitis} (171562),
\dbtext{treat bronchitis} (171563),
\dbtext{treat leprosy} (171564),
\dbtext{treat malaria} (171565),
\dbtext{treat dysentery} (171566),
\dbtext{treat gastroenteritis} (171567),
\dbtext{treat urinary infection} (171568),
and
\dbtext{prevent growth bacterium} (171569).

\suppressfloats[b]

\subsection{Strongly Connected Components}
We have $265,696$ strongly connected components, out of which
$265,596$ are isolated vertices. Among the rest $100$ components we can find
one component of size $13,700$, three components of size $3$,
and ninety six components of size $2$.

\medskip

Note that the numbers presented here for strongly connected components
refer to the case of the directed graph only since in the undirected case
we have the notion of connected components which is the same as the weakly
connected components of the directed graph.
Those were presented earlier.

\paragraph{Distribution of Component Sizes.} 
The distribution of the sizes for the various components is shown in 
Table \ref{tbl:distribution:component:positive:strong}.

\begin{table}[h]
\caption{Distribution of sizes for strongly connected components
for the induced directed graph.}\label{tbl:distribution:component:positive:strong}
\begin{center}
\begin{tabular}{|r||c|c|c|c|}\hline
\# of nodes per component & $13,700$ & $3$ &  $2$ &       $1$ \\\hline
\# of components          &      $1$ & $3$ & $96$ & $265,596$ \\\hline
\end{tabular}
\end{center}
\end{table}

\subsubsection{Big Strongly Connected Component}
Regarding the big strongly connected component with the $13,700$ nodes, it has $120,865$ edges
(self-loops were omitted from the enumeration).
Hence the average degree is about $17.64453$ after self-loops have been discarded.
Regarding the induced undirected graph that occurs after restricting ourselves
in these $13,700$ nodes (again, self-loops are omitted), the number of edges
is $109,378$. In other words, the average degree in this case is about
$15.96759$. The transitivity and the clustering coefficient of the big component are
presented in Table \ref{tbl:transitivity:positive:BigDC}.

\begin{table}[ht]
\caption{Transitivity and clustering coefficient for the big directed component of \conceptnet.
The first value (\textsc{nan}) for the clustering coefficient gives the result of the calculation
when vertices with less than two neighbors are left out from the calculation,
while the second value (\textsc{zero}) gives the result of the calculation when 
vertices with less than two neighbors are considered as having zero transitivity.
Note that all values are the same both for directed as well as undirected graphs.}\label{tbl:transitivity:positive:BigDC}
\begin{center}
\begin{tabular}{|r|l|}\hline
Transitivity                           & $0.045365818173714129$ \\\hline
Clustering Coefficient (\textsc{nan})  & $0.219425693644797526$ \\\hline
Clustering Coefficient (\textsc{zero}) & $0.195080653182017061$ \\\hline
\end{tabular}
\end{center}
\end{table}

For information about shortest paths in this component please see Chapter \ref{chapter:shortest-paths}.

\begin{figure}[ht]
\begin{center}
\includegraphics[width=0.7\textwidth]{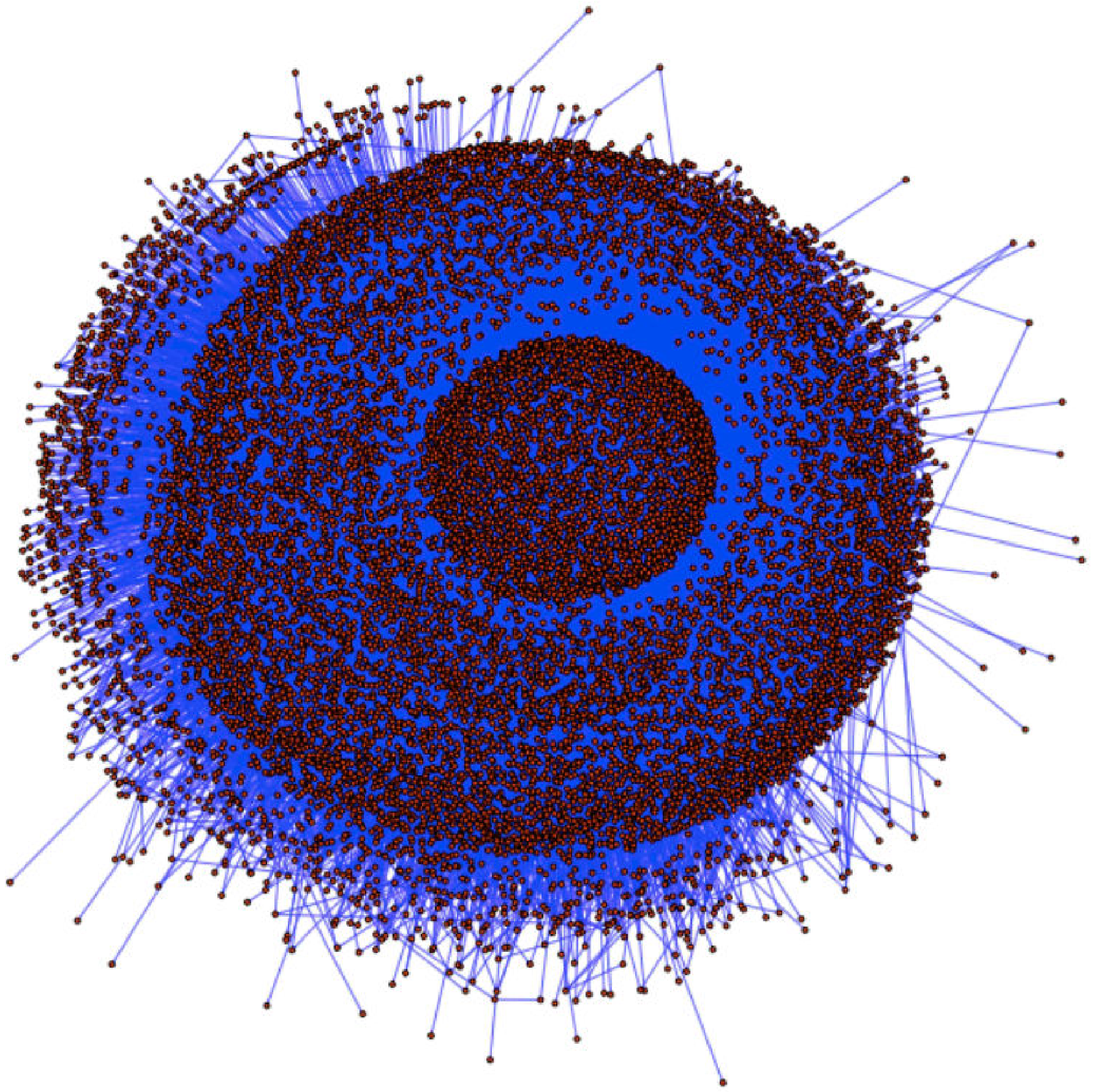}
\end{center}
\caption{The maximal strongly connected component; see Table \ref{tbl:distribution:component:positive:strong}.
For simplicity we plot the induced undirected graph of that component (in low resolution).}
\end{figure}

\subsubsection{Components of Size 3}
The first strongly connected component of size $3$ is composed of the concepts
\dbtext{first floor} (1598), \dbtext{second floor} (9162), and \dbtext{third floor} (141542).

The second strongly connected component of size $3$ is composed of the concepts
\dbtext{primary color} (9707), 
\dbtext{red yellow blue} (15197), and
\dbtext{three primary color} (32853).

The third strongly connected component of size $3$ is composed of the concepts
\dbtext{capital unite state} (3370),
\dbtext{washington dc} (3371), and
\dbtext{washington d.c} (5028).

\begin{figure}[ht]
\begin{center}
\includegraphics[width=0.23\textwidth]{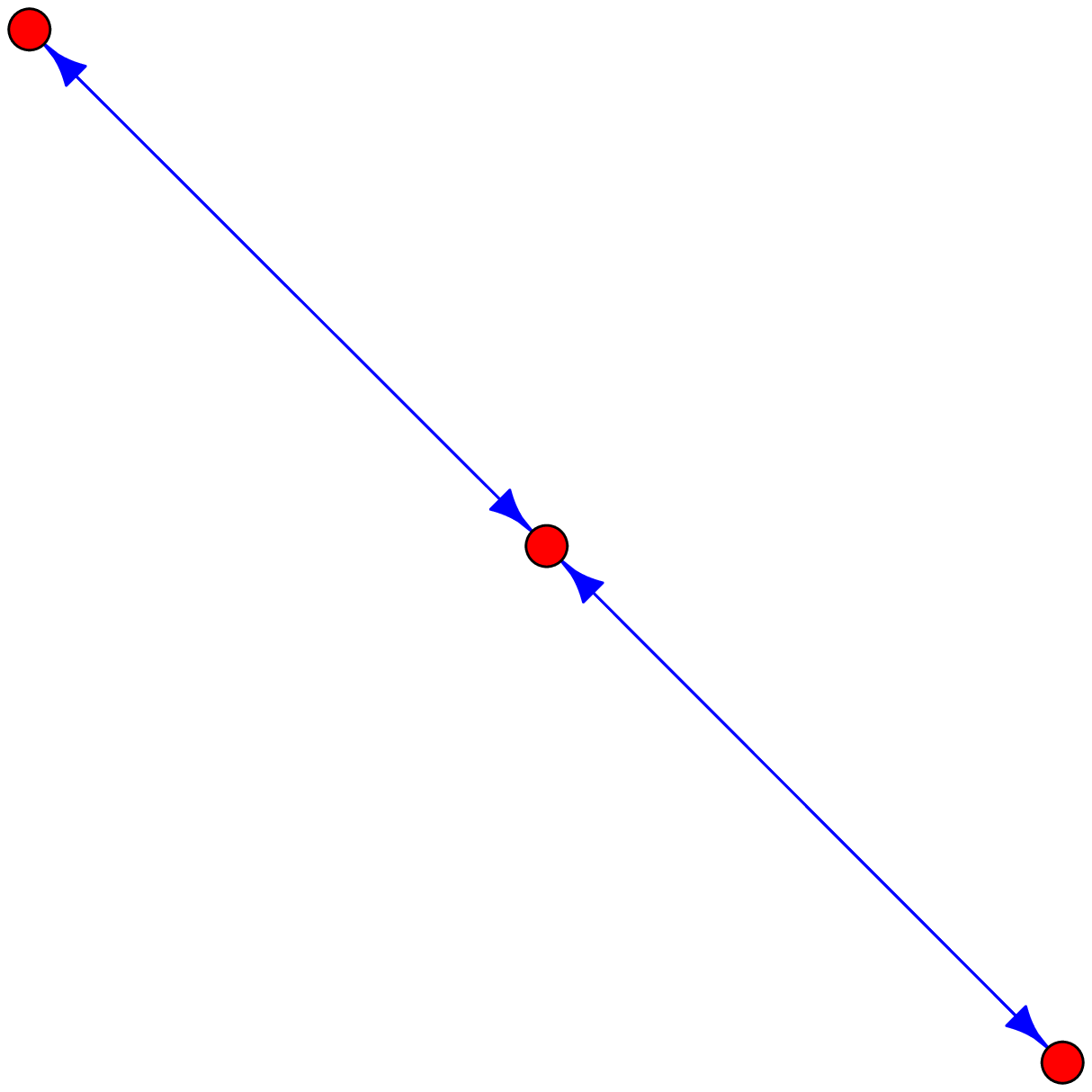}
\end{center}
\caption{The three strongly connected components of size 3 look identical 
(self-loops have been neglected).}\label{fig:strongly-connected:positive:3}
\end{figure}

\section{Both Polarities}
In this section we examine the weakly and strongly connected components of the
directed graph induced by the assertions with both polarities; that is, both negative and positive.

\subsection{Weakly Connected Components}
We get $32,702$ weakly connected components, out of which
$16,922$ are isolated vertices. Note that $16,922$ is in complete agreement with
Table \ref{tbl:number-of-edges-isolated-vertices:overall}. 
Among the rest $15,780$ components we can find
components with cardinalities between $2$ and $228,784$.

\paragraph{Distribution of Component Sizes.}
The distribution of the sizes for the various components is shown in 
Table \ref{tbl:distribution:component:weak}.
This distribution presents the cardinalities of the weakly connected components
of the induced directed graph, as well as the cardinalities of the connected
components of the induced undirected graph.
For the induced graphs we consider assertions with positive score in the English language
and we allow all frequencies in the edges.

\begin{table}[ht]
\caption{Distribution of sizes for weakly connected components for the induced directed 
graph. This is also the distribution of sizes for the connected 
components of the induced undirected graph. 
}\label{tbl:distribution:component:weak}
\begin{center}
\resizebox{\textwidth}{!}{
\begin{tabular}{|r||c|c|c|c|c|c|c|c|c|c|c|c|c|c|c|c|c|c|c|c|c|}\hline
\# of nodes      & $228,784$ & $55$ & $32$ & $31$ & $30$ & $22$ & $18$ & $16$ & $14$ & $12$ & $11$ & $10$ & $9$ & $8$  & $7$  & $6$  & $5$  & $4$   & $3$   & $2$      & $1$     \\\hline
\# of components & $1$       & $1$  & $1$  & $1$  & $1$  & $2$  & $1$  & $1$  & $4$  & $1$  & $3$  & $2$  & $5$ & $11$ & $16$ & $27$ & $85$ & $204$ & $970$ & $14,443$ & $16,922$ \\\hline
\end{tabular}
}
\end{center}
\end{table}

\subsubsection{Big Weakly Connected Component}
The undirected graph induced by the concepts that appear in the big undirected component 
is composed of $228,784$ nodes and $394,554$ edges.
For information about shortest paths in this component please see Chapter \ref{chapter:shortest-paths}.

\subsubsection{Components of Sizes $11$-$55$}
The weakly connected components of sizes $11$-$55$
are precisely the same as those mentioned as weakly connected components
that arise in the directed graph induced by the assertions with positive polarity only.

\if 0
\begin{figure}[ht]
\begin{center}
\hspace{\fill}
\begin{subfigure}[b]{0.23\textwidth}
\begin{center}
\includegraphics[width=\textwidth]{figs/allScores/components/star10a.eps}
\end{center}
\caption{ha\`{}iwale\\$10$ nodes, $9$ edges.}\label{fig:weakly-connected:10a}
\end{subfigure}
\hspace{\fill}
\begin{subfigure}[b]{0.23\textwidth}
\begin{center}
\includegraphics[width=\textwidth]{figs/allScores/components/star10b.eps}
\end{center}
\caption{work gym\\$10$ nodes, $9$ edges.}\label{fig:weakly-connected:10b}
\end{subfigure}
\hspace{\fill}
\phantom{i}
\end{center}
\caption{The induced \emph{directed} subgraphs for the weakly connected components
composed of $10$ nodes; see also Table \ref{tbl:distribution:component:weak}.
The name for each subgraph is given by the node that has total degree different from $1$.}\label{fig:weakly-connected-10nodes}
\end{figure}
\fi

\suppressfloats[b]

\subsection{Strongly Connected Components}
We have $265,374$ strongly connected components, out of which
$265,276$ are isolated vertices. Among the rest $98$ components we can find
one component of size $14,025$, two components of size $3$,
and ninety five components of size $2$.

\medskip

Note that the numbers presented here for strongly connected components
refer to the case of the directed graph only since in the undirected case
we have the notion of connected components which is the same as the weakly
connected components of the directed graph and which were presented earlier.

\paragraph{Distribution of Component Sizes.} 
The distribution of the sizes for the various components is shown in 
Table \ref{tbl:distribution:component:strong}.

\begin{table}[ht]
\caption{Distribution of sizes for strongly connected components
for the induced directed graph.}\label{tbl:distribution:component:strong}
\begin{center}
\begin{tabular}{|r||c|c|c|c|}\hline
\# of nodes per component & $14,025$ & $3$ &  $2$ &       $1$ \\\hline
\# of components          &      $1$ & $2$ & $95$ & $265,276$ \\\hline
\end{tabular}
\end{center}
\end{table}

\subsubsection{Big Strongly Connected Component}
Regarding the big strongly connected component with the $14,025$ nodes, it has $126,151$ edges
(self-loops were omitted from the enumeration).
Hence the average degree is about $17.98945$ after self-loops have been discarded.
Regarding the induced undirected graph that occurs after restricting ourselves
in these $14,025$ nodes (again, self-loops are omitted), the number of edges
is $114,294$. In other words, the average degree in this case is about
$16.29861$. The transitivity and the clustering coefficient of the big component are
presented in Table \ref{tbl:transitivity:BigDC}.

\begin{table}[ht]
\caption{Transitivity and clustering coefficient for the big directed component of \conceptnet.
The first value (\textsc{nan}) for the clustering coefficient gives the result of the calculation
when vertices with less than two neighbors are left out from the calculation,
while the second value (\textsc{zero}) gives the result of the calculation when 
vertices with less than two neighbors are considered as having zero transitivity.
Note that all values are the same both for directed as well as undirected graphs.}\label{tbl:transitivity:BigDC}
\begin{center}
\begin{tabular}{|r|l|}\hline
Transitivity                           & $0.042730645545158707$ \\\hline
Clustering Coefficient (\textsc{nan})  & $0.228343346540729242$ \\\hline
Clustering Coefficient (\textsc{zero}) & $0.203807630088901875$ \\\hline
\end{tabular}
\end{center}
\end{table}

For information about shortest paths in this component please see Chapter \ref{chapter:shortest-paths}.

\begin{figure}[ht]
\begin{center}
\includegraphics[width=0.7\textwidth]{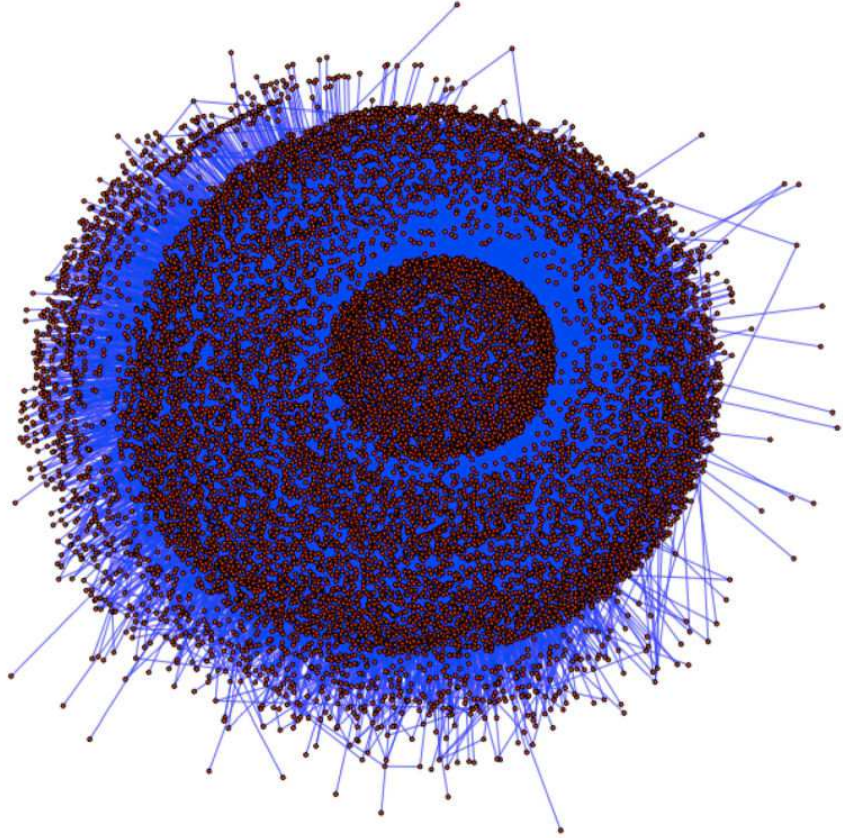}
\end{center}
\caption{The maximal strongly connected component; see Table \ref{tbl:distribution:component:strong}.
For simplicity we plot the induced undirected graph of that component (in low resolution).}
\end{figure}

\subsubsection{Components of Size 3}
The first strongly connected component of size $3$ is composed of the concepts
\dbtext{first floor} (1598), \dbtext{second floor} (9162), \dbtext{third floor} (141542).

The second strongly connected component of size $3$ is composed of the concepts
\dbtext{primary color} (9707), 
\dbtext{red yellow blue} (15197), 
\dbtext{three primary color} (32853).

The figures of these two components of size $3$ of course have not changed from the case
where they appeared as strongly connected components induced by assertions with positive
polarity only. As a reminder, Figure \ref{fig:strongly-connected:positive:3} presents the components.

\if 0
\begin{figure}[ht]
\begin{center}
\includegraphics[width=0.23\textwidth]{figs/allScores/components/fruchterman-reingold3.eps}
\end{center}
\caption{The two strongly connected components of size 3 look identical 
(self-loops have been neglected).}\label{fig:strongly-connected:3}
\end{figure}
\fi

\chapter{Cores}\label{chapter:cores}
We restrict on edges with positive score and allow all frequencies (that is both positive and negative polarity).
We distinguish three main cases on whether we allow edges with negative only polarity, positive only polarity,
or finally both polarities. 

\section{Negative Polarity}
We distinguish cases based on whether we allow self-loops or not.

\subsection{Loops are Neglected}
Table \ref{tbl:coreness:distribution:negative-polarity:no-loops}
presents the distribution of the vertices with specific coreness in the case where self-loops have been neglected.
Table \ref{tbl:core-decomposition:negative:no-loops} presents the number of vertices with coreness above a certain
threshold, as well as the number of edges and the average degree in every induced graph; 
whether that is a multigraph, a directed graph, or an undirected graph.

\begin{table}[ht]
\caption{Distribution of vertices with specific coreness. We only consider assertions with positive score in the English language.
The polarity is negative. Self-loops are neglected.}\label{tbl:coreness:distribution:negative-polarity:no-loops}
\begin{center}
\begin{tabular}{|r||r|r|r|r|r|r|r|}\hline
coreness &       $0$ &     $1$ &   $2$ &   $3$ &   $4$ &   $5$ &  $6$ \\\hline
vertices & $267,790$ & $9,952$ & $935$ & $473$ & $172$ & $107$ & $68$ \\\hline
\end{tabular}
\end{center}
\end{table}

\begin{table}[ht]
\caption{Number of vertices, edges, and the average degree of the induced subgraphs
in the case where we allow edges with negative polarity only. Self-loops are neglected.}\label{tbl:core-decomposition:negative:no-loops}
\begin{center}
\begin{tabular}{|c||r||r|r||r|r||r|r||}\hline
\multirow{2}{*}{coreness} & \multirow{2}{*}{vertices} & \multicolumn{2}{c||}{directed multigraph} & \multicolumn{2}{c||}{directed graph} & \multicolumn{2}{c||}{undirected graph} \\\cline{3-8}
          &          & \multicolumn{1}{c|}{edges} & avg.~degree & \multicolumn{1}{c|}{edges} & avg.~degree & \multicolumn{1}{c|}{edges} & avg.~degree \\\hline\hline
$\geq  0$ & $279497$ & $13497$ & $ 0.096581$ & $13387$ & $0.095794$ & $12989$ & $0.092946$ \\\hline
$\geq  1$ &  $11707$ & $13497$ & $ 2.305800$ & $13387$ & $2.287008$ & $12989$ & $2.219014$ \\\hline
$\geq  2$ &   $1755$ & $ 4839$ & $ 5.514530$ & $ 4747$ & $5.409687$ & $ 4411$ & $5.026781$ \\\hline
$\geq  3$ &    $820$ & $ 3006$ & $ 7.331707$ & $ 2930$ & $7.146341$ & $ 2710$ & $6.609756$ \\\hline
$\geq  4$ &    $347$ & $ 1593$ & $ 9.181556$ & $ 1540$ & $8.876081$ & $ 1447$ & $8.340058$ \\\hline
$\geq  5$ &    $175$ & $  911$ & $10.411429$ & $  867$ & $9.908571$ & $  819$ & $9.360000$ \\\hline
$\geq  6$ &     $68$ & $  348$ & $10.235294$ & $  331$ & $9.735294$ & $  308$ & $9.058824$ \\\hline
\end{tabular}
\end{center}
\end{table}

The $68$ concepts that we find in the innermost core are
\dbtext{person} (9),
\dbtext{tree} (33),
\dbtext{exercise} (61),
\dbtext{library} (68),
\dbtext{bath} (70),
\dbtext{human} (80),
\dbtext{walk} (97),
\dbtext{drink} (120),
\dbtext{examination} (121),
\dbtext{fun} (134),
\dbtext{bed} (156),
\dbtext{park} (365),
\dbtext{talk} (394),
\dbtext{eat} (432),
\dbtext{computer} (467),
\dbtext{car} (529),
\dbtext{dog} (537),
\dbtext{music} (542),
\dbtext{cat} (616),
\dbtext{house} (652),
\dbtext{fish} (655),
\dbtext{plant} (716),
\dbtext{animal} (902),
\dbtext{bird} (962),
\dbtext{drive car} (1005),
\dbtext{desk} (1043),
\dbtext{office} (1044),
\dbtext{home} (1045),
\dbtext{kitchen} (1078),
\dbtext{eye} (1160),
\dbtext{die} (1227),
\dbtext{money} (1240),
\dbtext{mouse} (1284),
\dbtext{television} (1298),
\dbtext{food} (1359),
\dbtext{horse} (1412),
\dbtext{hot} (1438),
\dbtext{read} (1456),
\dbtext{drive} (1545),
\dbtext{potato} (1674),
\dbtext{telephone} (1790),
\dbtext{audience} (1816),
\dbtext{rain} (1856),
\dbtext{book} (2033),
\dbtext{boat} (2389),
\dbtext{time} (2494),
\dbtext{fire} (2895),
\dbtext{god} (4277),
\dbtext{space} (4435),
\dbtext{cabinet} (5663),
\dbtext{table} (5665),
\dbtext{long hair} (5916),
\dbtext{metal} (6491),
\dbtext{way} (6679),
\dbtext{competitive activity} (7019),
\dbtext{ear} (8314),
\dbtext{gasoline} (8502),
\dbtext{fly} (9215),
\dbtext{program language} (13345),
\dbtext{gerbil} (14223),
\dbtext{software} (17383),
\dbtext{brain} (17555),
\dbtext{cash register} (23016),
\dbtext{conscious} (23506),
\dbtext{singular} (33174),
\dbtext{transportation device} (200905),
\dbtext{speedo} (203600),
and
\dbtext{fidelity} (203658).

\subsection{Loops are Retained}
Table \ref{tbl:coreness:distribution:negative-polarity:loops}
presents the distribution of the vertices with specific coreness in the case where self-loops are retained.
Table \ref{tbl:core-decomposition:negative:loops} presents the number of vertices with coreness above a certain
threshold, as well as the number of edges and the average degree in every induced graph; 
whether that is a multigraph, a directed graph, or an undirected graph.

\begin{table}[ht]
\caption{Distribution of vertices with specific coreness. We only consider assertions with positive score in the English language.
The polarity is negative. Self-loops are retained.}\label{tbl:coreness:distribution:negative-polarity:loops}
\begin{center}
\begin{tabular}{|r||r|r|r|r|r|r|r|}\hline
coreness &       $0$ &     $1$ &   $2$ &   $3$ &   $4$ &   $5$ &  $6$ \\\hline
vertices & $267,790$ & $9,949$ & $934$ & $477$ & $170$ &  $91$ & $86$ \\\hline
\end{tabular}
\end{center}
\end{table}

\begin{table}[ht]
\caption{Number of vertices, edges, and the average degree of the induced subgraphs
in the case where we allow edges with negative polarity only. Self-loops are retained.}\label{tbl:core-decomposition:negative:loops}
\begin{center}
\begin{tabular}{|c||r||r|r||r|r||r|r||}\hline
\multirow{2}{*}{coreness} & \multirow{2}{*}{vertices} & \multicolumn{2}{c||}{directed multigraph} & \multicolumn{2}{c||}{directed graph} & \multicolumn{2}{c||}{undirected graph} \\\cline{3-8}
          &          & \multicolumn{1}{c|}{edges} & avg.~degree & \multicolumn{1}{c|}{edges} & avg.~degree & \multicolumn{1}{c|}{edges} & avg.~degree \\\hline\hline
$\geq  0$ & $279497$ & $13510$ & $ 0.096674$ & $13399$ & $0.095879$ & $13001$ & $0.093031$ \\\hline
$\geq  1$ & $ 11707$ & $13510$ & $ 2.308021$ & $13399$ & $2.289058$ & $13001$ & $2.221064$ \\\hline
$\geq  2$ & $  1758$ & $ 4853$ & $ 5.521047$ & $ 4760$ & $5.415245$ & $ 4424$ & $5.032992$ \\\hline
$\geq  3$ & $   824$ & $ 3025$ & $ 7.342233$ & $ 2948$ & $7.155340$ & $ 2727$ & $6.618932$ \\\hline
$\geq  4$ & $   347$ & $ 1601$ & $ 9.227666$ & $ 1547$ & $8.916427$ & $ 1454$ & $8.380403$ \\\hline
$\geq  5$ & $   177$ & $  926$ & $10.463277$ & $  881$ & $9.954802$ & $  833$ & $9.412429$ \\\hline
$\geq  6$ & $    86$ & $  457$ & $10.627907$ & $  428$ & $9.953488$ & $  401$ & $9.325581$ \\\hline
\end{tabular}
\end{center}
\end{table}

In both cases the maximum coreness is equal to $6$. The core in this case contains all the concepts mentioned
earlier (case where self-loops were neglected), as well as the concepts
\dbtext{man} (7),
\dbtext{work} (35),
\dbtext{it} (137),
\dbtext{child} (178),
\dbtext{rest} (310),
\dbtext{housework} (343),
\dbtext{sleep} (425),
\dbtext{drawer} (495),
\dbtext{baby} (678),
\dbtext{water} (1016),
\dbtext{see} (1161),
\dbtext{speak} (1305),
\dbtext{lie} (1395),
\dbtext{write} (1893),
\dbtext{wet} (2456),
\dbtext{sex} (2825),
\dbtext{wait} (2858),
and
\dbtext{eye up down} (32844).

\section{Positive Polarity}
We distinguish cases based on whether we allow self-loops or not.

\subsection{Loops are Neglected}
Table \ref{tbl:coreness:distribution:positive-polarity:no-loops}
presents the distribution of the vertices with specific coreness in the case where self-loops have been neglected.
Table \ref{tbl:core-decomposition:positive:no-loops} presents the number of vertices with coreness above a certain
threshold, as well as the number of edges and the average degree in every induced graph; 
whether that is a multigraph, a directed graph, or an undirected graph.

\begin{table}[ht]
\caption{Distribution of vertices with specific coreness. We only consider assertions with positive score in the English language.
The polarity is positive. Self-loops are neglected.}\label{tbl:coreness:distribution:positive-polarity:no-loops}
\begin{center}
\resizebox{\textwidth}{!}{
\begin{tabular}{|r||r|r|r|r|r|r|r|r|r|r|r|r|r|r|}\hline
coreness &     0 &      1 &     2 &    3 &    4 &    5 &    6 &    7 &   8 &   9 &  10 &  11 &  12 &  13 \\\hline
vertices & 22651 & 215187 & 19847 & 6948 & 3381 & 2091 & 1488 & 1154 & 867 & 701 & 548 & 474 & 414 & 339 \\\hline
\end{tabular}
}
\end{center}

\begin{center}
\begin{tabular}{|r||r|r|r|r|r|r|r|r|r|r|r|r|r|}\hline
coreness &  14 &  15 &  16 &  17 &  18 &  19 &  20 &  21 &  22 &  23 &  24 &  25 &  26 \\\hline
vertices & 302 & 258 & 230 & 233 & 211 & 166 & 142 & 156 & 195 & 156 & 191 & 298 & 869 \\\hline
\end{tabular}
\end{center}
\end{table}

\begin{table}[ht]
\caption{Number of vertices, edges, and the average degree of the induced subgraphs
in the case where we allow edges with positive polarity only. Self-loops are neglected.}\label{tbl:core-decomposition:positive:no-loops}
\begin{center}
\begin{tabular}{|c||r||r|r||r|r||r|r||}\hline
\multirow{2}{*}{coreness} & \multirow{2}{*}{vertices} & \multicolumn{2}{c||}{directed multigraph} & \multicolumn{2}{c||}{directed graph} & \multicolumn{2}{c||}{undirected graph} \\\cline{3-8}
          &          & \multicolumn{1}{c|}{edges} & avg.~degree & \multicolumn{1}{c|}{edges} & avg.~degree & \multicolumn{1}{c|}{edges} & avg.~degree \\\hline\hline
$\geq  0$ & $279497$ & $478499$ & $ 3.424001$ &  $412956$ & $ 2.954994$ &  $401367$ & $ 2.872067$ \\\hline
$\geq  1$ & $256846$ & $478499$ & $ 3.725960$ &  $412956$ & $ 3.215592$ &  $401367$ & $ 3.125351$ \\\hline
$\geq  2$ & $ 41659$ & $265649$ & $12.753499$ &  $211716$ & $10.164238$ &  $201678$ & $ 9.682326$ \\\hline
$\geq  3$ & $ 21812$ & $220926$ & $20.257290$ &  $172260$ & $15.794975$ &  $162691$ & $14.917568$ \\\hline
$\geq  4$ & $ 14864$ & $196715$ & $26.468649$ &  $151300$ & $20.357912$ &  $142112$ & $19.121636$ \\\hline
$\geq  5$ & $ 11483$ & $180655$ & $31.464774$ &  $137587$ & $23.963598$ &  $128731$ & $22.421144$ \\\hline
$\geq  6$ & $  9392$ & $168135$ & $35.803876$ &  $126962$ & $27.036201$ &  $118389$ & $25.210605$ \\\hline
$\geq  7$ & $  7904$ & $157301$ & $39.802885$ &  $117870$ & $29.825405$ &  $109561$ & $27.722925$ \\\hline
$\geq  8$ & $  6750$ & $147343$ & $43.657185$ &  $109592$ & $32.471704$ &  $101564$ & $30.093037$ \\\hline
$\geq  9$ & $  5883$ & $138622$ & $47.126296$ &  $102469$ & $34.835628$ &  $ 94709$ & $32.197518$ \\\hline
$\geq 10$ & $  5182$ & $130577$ & $50.396372$ &  $ 95907$ & $37.015438$ &  $ 88462$ & $34.142030$ \\\hline
$\geq 11$ & $  4634$ & $123595$ & $53.342685$ &  $ 90225$ & $38.940440$ &  $ 83039$ & $35.839016$ \\\hline
$\geq 12$ & $  4160$ & $116859$ & $56.182212$ &  $ 84803$ & $40.770673$ &  $ 77882$ & $37.443269$ \\\hline
$\geq 13$ & $  3746$ & $110365$ & $58.924186$ &  $ 79617$ & $42.507742$ &  $ 72978$ & $38.963161$ \\\hline
$\geq 14$ & $  3407$ & $104665$ & $61.441151$ &  $ 75015$ & $44.035809$ &  $ 68613$ & $40.277664$ \\\hline
$\geq 15$ & $  3105$ & $ 98979$ & $63.754589$ &  $ 70526$ & $45.427375$ &  $ 64412$ & $41.489211$ \\\hline
$\geq 16$ & $  2847$ & $ 93671$ & $65.803302$ &  $ 66458$ & $46.686336$ &  $ 60583$ & $42.559185$ \\\hline
$\geq 17$ & $  2617$ & $ 88669$ & $67.763852$ &  $ 62580$ & $47.825755$ &  $ 56939$ & $43.514712$ \\\hline
$\geq 18$ & $  2384$ & $ 83213$ & $69.809564$ &  $ 58343$ & $48.945470$ &  $ 53011$ & $44.472315$ \\\hline
$\geq 19$ & $  2173$ & $ 77733$ & $71.544409$ &  $ 54297$ & $49.974229$ &  $ 49265$ & $45.342844$ \\\hline
$\geq 20$ & $  2007$ & $ 73342$ & $73.086198$ &  $ 50929$ & $50.751370$ &  $ 46145$ & $45.984056$ \\\hline
$\geq 21$ & $  1865$ & $ 69363$ & $74.383914$ &  $ 47915$ & $51.383378$ &  $ 43330$ & $46.466488$ \\\hline
$\geq 22$ & $  1709$ & $ 64691$ & $75.706261$ &  $ 44442$ & $52.009362$ &  $ 40099$ & $46.926858$ \\\hline
$\geq 23$ & $  1514$ & $ 58327$ & $77.050198$ &  $ 39828$ & $52.612946$ &  $ 35870$ & $47.384412$ \\\hline
$\geq 24$ & $  1358$ & $ 52859$ & $77.848306$ &  $ 35945$ & $52.938144$ &  $ 32314$ & $47.590574$ \\\hline
$\geq 25$ & $  1167$ & $ 45989$ & $78.815767$ &  $ 30980$ & $53.093402$ &  $ 27810$ & $47.660668$ \\\hline
$\geq 26$ & $   869$ & $ 34394$ & $79.157652$ &  $ 22898$ & $52.699655$ &  $ 20526$ & $47.240506$ \\\hline
\end{tabular}
\end{center}
\end{table}

The $869$ concepts that we find in the innermost core are
\dbtext{something} (5),
\dbtext{man} (7),
\dbtext{person} (9),
\dbtext{type} (11),
\dbtext{train} (19),
\dbtext{town} (21),
\dbtext{rock} (23),
\dbtext{beach} (24),
\dbtext{tree} (33),
\dbtext{work} (35),
\dbtext{write program} (38),
\dbtext{monkey} (42),
\dbtext{soup} (43),
\dbtext{go concert} (44),
\dbtext{hear music} (45),
\dbtext{weasel} (48),
\dbtext{word} (51),
\dbtext{exercise} (61),
\dbtext{pant} (63),
\dbtext{love} (67),
\dbtext{library} (68),
\dbtext{bath} (70),
\dbtext{school} (73),
\dbtext{listen} (75),
\dbtext{kitten} (78),
\dbtext{arm} (79),
\dbtext{human} (80),
\dbtext{go performance} (86),
\dbtext{plane} (89),
\dbtext{class} (93),
\dbtext{take walk} (96),
\dbtext{walk} (97),
\dbtext{entertain} (100),
\dbtext{run marathon} (101),
\dbtext{beaver} (103),
\dbtext{wait line} (106),
\dbtext{attend lecture} (108),
\dbtext{drink} (120),
\dbtext{study} (122),
\dbtext{go walk} (128),
\dbtext{play basketball} (133),
\dbtext{fun} (134),
\dbtext{it} (137),
\dbtext{paper} (149),
\dbtext{bore} (152),
\dbtext{bed} (156),
\dbtext{wait table} (157),
\dbtext{go see film} (159),
\dbtext{go work} (161),
\dbtext{watch tv show} (163),
\dbtext{dirty} (170),
\dbtext{wake up morning} (171),
\dbtext{dream} (172),
\dbtext{shower} (173),
\dbtext{child} (178),
\dbtext{smoke} (188),
\dbtext{chicken} (191),
\dbtext{go fish} (193),
\dbtext{state} (196),
\dbtext{tell story} (199),
\dbtext{surf web} (203),
\dbtext{gym} (206),
\dbtext{play football} (209),
\dbtext{office build} (210),
\dbtext{movie} (213),
\dbtext{wiener dog} (220),
\dbtext{go restaurant} (225),
\dbtext{visit museum} (228),
\dbtext{study subject} (234),
\dbtext{live life} (236),
\dbtext{go sport event} (241),
\dbtext{go play} (242),
\dbtext{sit} (243),
\dbtext{play soccer} (252),
\dbtext{go jog} (260),
\dbtext{take shower} (261),
\dbtext{play ball} (262),
\dbtext{ball} (263),
\dbtext{eat food} (264),
\dbtext{watch movie} (265),
\dbtext{watch film} (269),
\dbtext{stretch} (271),
\dbtext{play frisbee} (274),
\dbtext{go school} (276),
\dbtext{box} (279),
\dbtext{object} (280),
\dbtext{surprise} (289),
\dbtext{paint picture} (291),
\dbtext{mother} (301),
\dbtext{go film} (305),
\dbtext{party} (307),
\dbtext{rest} (310),
\dbtext{listen radio} (311),
\dbtext{coffee} (314),
\dbtext{kiss} (316),
\dbtext{remember} (325),
\dbtext{candle} (327),
\dbtext{housework} (343),
\dbtext{clean} (344),
\dbtext{lunch} (345),
\dbtext{street} (350),
\dbtext{watch tv} (351),
\dbtext{fungus} (354),
\dbtext{attend school} (355),
\dbtext{play tennis} (357),
\dbtext{park} (365),
\dbtext{trouble} (366),
\dbtext{snake} (369),
\dbtext{wood} (370),
\dbtext{comfortable} (371),
\dbtext{play} (372),
\dbtext{take bus} (376),
\dbtext{bus} (377),
\dbtext{conversation} (390),
\dbtext{talk} (394),
\dbtext{take course} (400),
\dbtext{learn} (401),
\dbtext{plan} (408),
\dbtext{think} (412),
\dbtext{go run} (423),
\dbtext{sleep} (425),
\dbtext{hang out bar} (427),
\dbtext{plan vacation} (429),
\dbtext{go see play} (431),
\dbtext{eat} (432),
\dbtext{attend class} (433),
\dbtext{go swim} (442),
\dbtext{bridge} (444),
\dbtext{cloud} (446),
\dbtext{ride bike} (460),
\dbtext{nothing} (466),
\dbtext{computer} (467),
\dbtext{line} (474),
\dbtext{buy} (475),
\dbtext{eat restaurant} (479),
\dbtext{milk} (481),
\dbtext{tv} (483),
\dbtext{stress} (486),
\dbtext{drawer} (495),
\dbtext{storage} (496),
\dbtext{boredom} (519),
\dbtext{ticket} (522),
\dbtext{car} (529),
\dbtext{vehicle} (530),
\dbtext{dog} (537),
\dbtext{music} (542),
\dbtext{zoo} (547),
\dbtext{use television} (560),
\dbtext{dress} (562),
\dbtext{bottle} (565),
\dbtext{live} (580),
\dbtext{one} (581),
\dbtext{turn} (583),
\dbtext{material} (591),
\dbtext{chair} (596),
\dbtext{entertainment} (607),
\dbtext{cat} (616),
\dbtext{hat} (629),
\dbtext{country} (640),
\dbtext{listen music} (642),
\dbtext{enjoyment} (643),
\dbtext{market} (648),
\dbtext{house} (652),
\dbtext{fish} (655),
\dbtext{lake} (660),
\dbtext{baby} (678),
\dbtext{hurt} (686),
\dbtext{hotel} (688),
\dbtext{plant} (716),
\dbtext{game} (732),
\dbtext{hospital} (865),
\dbtext{bank} (867),
\dbtext{hide} (869),
\dbtext{girl} (876),
\dbtext{student} (886),
\dbtext{muscle} (891),
\dbtext{woman} (895),
\dbtext{animal} (902),
\dbtext{church} (904),
\dbtext{cold} (912),
\dbtext{family} (915),
\dbtext{go movie} (920),
\dbtext{moon} (924),
\dbtext{enlightenment} (926),
\dbtext{pet} (933),
\dbtext{cook} (946),
\dbtext{shop} (948),
\dbtext{stand line} (958),
\dbtext{letter} (960),
\dbtext{bird} (962),
\dbtext{attend classical concert} (972),
\dbtext{death} (977),
\dbtext{play sport} (983),
\dbtext{eat dinner} (984),
\dbtext{effort} (1000),
\dbtext{concert} (1001),
\dbtext{drive car} (1005),
\dbtext{bathroom} (1007),
\dbtext{city} (1013),
\dbtext{traveling} (1014),
\dbtext{shark} (1015),
\dbtext{water} (1016),
\dbtext{rosebush} (1031),
\dbtext{yard} (1032),
\dbtext{knowledge} (1040),
\dbtext{desk} (1043),
\dbtext{office} (1044),
\dbtext{home} (1045),
\dbtext{sloth} (1047),
\dbtext{teach} (1052),
\dbtext{bat} (1057),
\dbtext{call} (1061),
\dbtext{couch} (1072),
\dbtext{kitchen} (1078),
\dbtext{lizard} (1084),
\dbtext{laugh joke} (1095),
\dbtext{run} (1102),
\dbtext{build} (1104),
\dbtext{restaurant} (1111),
\dbtext{spoon} (1116),
\dbtext{butter} (1118),
\dbtext{read book} (1121),
\dbtext{education} (1122),
\dbtext{beautiful} (1124),
\dbtext{take note} (1136),
\dbtext{travel} (1143),
\dbtext{key} (1151),
\dbtext{electricity} (1153),
\dbtext{go store} (1157),
\dbtext{eye} (1160),
\dbtext{see} (1161),
\dbtext{story} (1164),
\dbtext{nose} (1171),
\dbtext{smell} (1172),
\dbtext{stand} (1183),
\dbtext{well} (1201),
\dbtext{pen} (1205),
\dbtext{go sleep} (1207),
\dbtext{tire} (1221),
\dbtext{attention} (1224),
\dbtext{die} (1227),
\dbtext{fall asleep} (1234),
\dbtext{money} (1240),
\dbtext{bill} (1245),
\dbtext{snow} (1247),
\dbtext{weather} (1248),
\dbtext{leg} (1252),
\dbtext{everything} (1262),
\dbtext{run errand} (1274),
\dbtext{patience} (1275),
\dbtext{mouse} (1284),
\dbtext{spend money} (1286),
\dbtext{cry} (1291),
\dbtext{pay bill} (1292),
\dbtext{earn money} (1293),
\dbtext{television} (1298),
\dbtext{speak} (1305),
\dbtext{magazine} (1310),
\dbtext{take bath} (1316),
\dbtext{hole} (1318),
\dbtext{nature} (1324),
\dbtext{band} (1330),
\dbtext{bald eagle} (1331),
\dbtext{nest} (1332),
\dbtext{drink water} (1333),
\dbtext{crab} (1334),
\dbtext{paint} (1338),
\dbtext{ficus} (1339),
\dbtext{sea} (1347),
\dbtext{anemone} (1348),
\dbtext{ocean} (1349),
\dbtext{sun} (1353),
\dbtext{sky} (1354),
\dbtext{fatigue} (1357),
\dbtext{food} (1359),
\dbtext{grape} (1366),
\dbtext{take break} (1368),
\dbtext{bedroom} (1372),
\dbtext{hike} (1383),
\dbtext{drink alcohol} (1386),
\dbtext{lie} (1395),
\dbtext{play chess} (1398),
\dbtext{horse} (1412),
\dbtext{store} (1414),
\dbtext{friend} (1429),
\dbtext{hot} (1438),
\dbtext{airport} (1439),
\dbtext{anger} (1441),
\dbtext{sugar} (1446),
\dbtext{grocery store} (1447),
\dbtext{read} (1456),
\dbtext{curiosity} (1460),
\dbtext{basket} (1463),
\dbtext{hold} (1464),
\dbtext{kill} (1466),
\dbtext{pay} (1473),
\dbtext{swim} (1475),
\dbtext{break} (1476),
\dbtext{foot} (1485),
\dbtext{verb} (1490),
\dbtext{refrigerator} (1503),
\dbtext{newspaper} (1506),
\dbtext{rice} (1510),
\dbtext{drive} (1545),
\dbtext{surface} (1550),
\dbtext{liquid} (1551),
\dbtext{meadow} (1558),
\dbtext{camp} (1566),
\dbtext{use computer} (1576),
\dbtext{window} (1577),
\dbtext{oil} (1587),
\dbtext{cover} (1592),
\dbtext{take film} (1595),
\dbtext{plate} (1604),
\dbtext{dinner} (1605),
\dbtext{smile} (1606),
\dbtext{den} (1610),
\dbtext{cow} (1613),
\dbtext{earth} (1633),
\dbtext{garage} (1647),
\dbtext{fiddle} (1652),
\dbtext{we} (1653),
\dbtext{garden} (1660),
\dbtext{wrestle} (1665),
\dbtext{see new} (1666),
\dbtext{dance} (1667),
\dbtext{poop} (1672),
\dbtext{potato} (1674),
\dbtext{fight} (1675),
\dbtext{outside} (1676),
\dbtext{job} (1677),
\dbtext{smart} (1678),
\dbtext{play baseball} (1687),
\dbtext{frog} (1692),
\dbtext{napkin} (1698),
\dbtext{excite} (1704),
\dbtext{light} (1716),
\dbtext{salad} (1720),
\dbtext{fox} (1746),
\dbtext{forest} (1747),
\dbtext{attend rock concert} (1754),
\dbtext{hear news} (1758),
\dbtext{glass} (1776),
\dbtext{cupboard} (1777),
\dbtext{contemplate} (1784),
\dbtext{telephone} (1790),
\dbtext{marmot} (1796),
\dbtext{mountain} (1797),
\dbtext{pain} (1813),
\dbtext{audience} (1816),
\dbtext{salt} (1817),
\dbtext{motel} (1827),
\dbtext{drop} (1846),
\dbtext{bone} (1852),
\dbtext{meat} (1853),
\dbtext{bookstore} (1854),
\dbtext{rain} (1856),
\dbtext{understand} (1858),
\dbtext{body} (1861),
\dbtext{use} (1867),
\dbtext{ferret} (1880),
\dbtext{small dog} (1882),
\dbtext{write} (1893),
\dbtext{cloth} (1903),
\dbtext{factory} (1917),
\dbtext{bottle wine} (1918),
\dbtext{doll} (1931),
\dbtext{stay healthy} (1932),
\dbtext{pencil} (1953),
\dbtext{research} (1978),
\dbtext{learn new} (1983),
\dbtext{wheel} (1995),
\dbtext{lemur} (1998),
\dbtext{sweat} (2002),
\dbtext{name} (2003),
\dbtext{nice} (2028),
\dbtext{book} (2033),
\dbtext{museum} (2036),
\dbtext{pool} (2049),
\dbtext{headache} (2062),
\dbtext{black} (2063),
\dbtext{canada} (2076),
\dbtext{fart} (2079),
\dbtext{instrument} (2086),
\dbtext{read newspaper} (2102),
\dbtext{sport} (2130),
\dbtext{understand better} (2163),
\dbtext{bad} (2226),
\dbtext{show} (2243),
\dbtext{trash} (2260),
\dbtext{can} (2261),
\dbtext{a} (2263),
\dbtext{wind} (2284),
\dbtext{hand} (2300),
\dbtext{write story} (2335),
\dbtext{pee} (2354),
\dbtext{stop} (2358),
\dbtext{picture} (2360),
\dbtext{transportation} (2364),
\dbtext{road} (2368),
\dbtext{fall down} (2369),
\dbtext{seat} (2374),
\dbtext{boat} (2389),
\dbtext{wild} (2391),
\dbtext{practice} (2399),
\dbtext{help} (2410),
\dbtext{clothe} (2415),
\dbtext{dish} (2419),
\dbtext{train station} (2424),
\dbtext{lose} (2426),
\dbtext{war} (2438),
\dbtext{mall} (2447),
\dbtext{close eye} (2449),
\dbtext{wet} (2456),
\dbtext{flower} (2459),
\dbtext{wallet} (2466),
\dbtext{room} (2480),
\dbtext{satisfaction} (2483),
\dbtext{time} (2494),
\dbtext{answer question} (2512),
\dbtext{perform} (2523),
\dbtext{cell} (2535),
\dbtext{small} (2536),
\dbtext{bicycle} (2554),
\dbtext{new york} (2556),
\dbtext{need} (2557),
\dbtext{farm} (2562),
\dbtext{sink} (2563),
\dbtext{pocket} (2566),
\dbtext{everyone} (2589),
\dbtext{go somewhere} (2592),
\dbtext{color} (2611),
\dbtext{white} (2612),
\dbtext{red} (2614),
\dbtext{stone} (2631),
\dbtext{vegetable} (2636),
\dbtext{green} (2637),
\dbtext{life} (2638),
\dbtext{burn} (2644),
\dbtext{sound} (2660),
\dbtext{good} (2666),
\dbtext{play card} (2667),
\dbtext{large} (2771),
\dbtext{shoe} (2790),
\dbtext{go} (2801),
\dbtext{scale} (2817),
\dbtext{sex} (2825),
\dbtext{soft} (2842),
\dbtext{wait} (2858),
\dbtext{buy ticket} (2866),
\dbtext{steak} (2878),
\dbtext{gain knowledge} (2890),
\dbtext{fire} (2895),
\dbtext{news} (2905),
\dbtext{beer} (3052),
\dbtext{interest} (3086),
\dbtext{finger} (3399),
\dbtext{feel} (3404),
\dbtext{knife} (3405),
\dbtext{dangerous} (3439),
\dbtext{sit down} (3442),
\dbtext{marmoset} (3443),
\dbtext{carpet} (3450),
\dbtext{bowl} (3463),
\dbtext{australia} (3494),
\dbtext{ski} (3524),
\dbtext{surf} (3525),
\dbtext{corn} (3531),
\dbtext{fridge} (3535),
\dbtext{soap} (3536),
\dbtext{expensive} (3546),
\dbtext{teacher} (3556),
\dbtext{leave} (3571),
\dbtext{coin} (3573),
\dbtext{number} (3576),
\dbtext{fruit} (3590),
\dbtext{happiness} (3603),
\dbtext{exhaustion} (3605),
\dbtext{sit chair} (3608),
\dbtext{laugh} (3635),
\dbtext{heavy} (3663),
\dbtext{map} (3668),
\dbtext{fork} (3671),
\dbtext{cuba} (3797),
\dbtext{france} (3826),
\dbtext{italy} (3881),
\dbtext{steel} (3907),
\dbtext{piano} (4010),
\dbtext{wall} (4030),
\dbtext{club} (4076),
\dbtext{theatre} (4095),
\dbtext{unite state} (4102),
\dbtext{cup} (4116),
\dbtext{hill} (4124),
\dbtext{square} (4138),
\dbtext{relax} (4187),
\dbtext{apple tree} (4194),
\dbtext{shelf} (4203),
\dbtext{waste time} (4217),
\dbtext{pleasure} (4231),
\dbtext{relaxation} (4254),
\dbtext{god} (4277),
\dbtext{care} (4323),
\dbtext{friend house} (4329),
\dbtext{procreate} (4344),
\dbtext{airplane} (4359),
\dbtext{watch} (4406),
\dbtext{space} (4435),
\dbtext{phone} (4517),
\dbtext{this} (4539),
\dbtext{place} (4570),
\dbtext{radio} (4587),
\dbtext{tool} (4595),
\dbtext{apple} (4596),
\dbtext{mouth} (4628),
\dbtext{funny} (4647),
\dbtext{win} (4676),
\dbtext{go mall} (4699),
\dbtext{bag} (4743),
\dbtext{doctor} (4760),
\dbtext{theater} (4770),
\dbtext{river} (4784),
\dbtext{blue} (4808),
\dbtext{grass} (4815),
\dbtext{cheese} (4844),
\dbtext{mammal} (4850),
\dbtext{bean} (4896),
\dbtext{lot} (4905),
\dbtext{hair} (4957),
\dbtext{flirt} (4969),
\dbtext{pass time} (5077),
\dbtext{make} (5239),
\dbtext{noise} (5363),
\dbtext{measure} (5370),
\dbtext{shape} (5400),
\dbtext{flat} (5450),
\dbtext{utah} (5454),
\dbtext{plastic} (5505),
\dbtext{container} (5516),
\dbtext{climb} (5526),
\dbtext{wash hand} (5539),
\dbtext{go home} (5555),
\dbtext{bar} (5558),
\dbtext{bug} (5563),
\dbtext{view} (5574),
\dbtext{live room} (5581),
\dbtext{toilet} (5616),
\dbtext{love else} (5621),
\dbtext{tooth} (5622),
\dbtext{drunk} (5628),
\dbtext{cabinet} (5663),
\dbtext{table} (5665),
\dbtext{furniture} (5668),
\dbtext{peace} (5670),
\dbtext{lamp} (5671),
\dbtext{pizza} (5708),
\dbtext{sing} (5711),
\dbtext{buy beer} (5734),
\dbtext{dust} (5736),
\dbtext{sand} (5768),
\dbtext{internet} (5811),
\dbtext{kid} (5854),
\dbtext{hall} (5865),
\dbtext{dictionary} (5905),
\dbtext{rise} (5930),
\dbtext{closet} (5967),
\dbtext{boy} (5976),
\dbtext{like} (5989),
\dbtext{date} (5999),
\dbtext{door} (6022),
\dbtext{record} (6029),
\dbtext{find} (6040),
\dbtext{floor} (6062),
\dbtext{song} (6068),
\dbtext{play game} (6081),
\dbtext{meet} (6085),
\dbtext{not} (6150),
\dbtext{activity} (6207),
\dbtext{basement} (6220),
\dbtext{sofa} (6231),
\dbtext{cut} (6250),
\dbtext{page} (6264),
\dbtext{company} (6274),
\dbtext{bite} (6368),
\dbtext{dark} (6376),
\dbtext{science} (6395),
\dbtext{college} (6396),
\dbtext{world} (6404),
\dbtext{air} (6408),
\dbtext{sheep} (6424),
\dbtext{statue} (6436),
\dbtext{metal} (6491),
\dbtext{jog} (6511),
\dbtext{open} (6539),
\dbtext{warm} (6561),
\dbtext{quiet} (6583),
\dbtext{big} (6604),
\dbtext{high} (6606),
\dbtext{squirrel} (6609),
\dbtext{alcohol} (6616),
\dbtext{skill} (6644),
\dbtext{hobby} (6671),
\dbtext{birthday} (6705),
\dbtext{university} (6708),
\dbtext{roll} (6734),
\dbtext{tiredness} (6738),
\dbtext{mean} (6744),
\dbtext{communication} (6769),
\dbtext{drink coffee} (6817),
\dbtext{general} (6836),
\dbtext{clock} (6860),
\dbtext{read magazine} (7049),
\dbtext{round} (7057),
\dbtext{good time} (7209),
\dbtext{good health} (7268),
\dbtext{act} (7272),
\dbtext{play hockey} (7283),
\dbtext{heat} (7301),
\dbtext{cool} (7306),
\dbtext{eat ice cream} (7359),
\dbtext{learn language} (7364),
\dbtext{dive} (7367),
\dbtext{skin} (7399),
\dbtext{go zoo} (7405),
\dbtext{go internet} (7420),
\dbtext{art} (7424),
\dbtext{noun} (7478),
\dbtext{top} (7514),
\dbtext{wine} (7522),
\dbtext{jar} (7524),
\dbtext{hard} (7545),
\dbtext{cash} (7584),
\dbtext{put} (7625),
\dbtext{important} (7681),
\dbtext{duck} (7686),
\dbtext{toy} (7701),
\dbtext{ring} (7720),
\dbtext{read child} (7755),
\dbtext{crowd} (7763),
\dbtext{draw} (7764),
\dbtext{edible} (7792),
\dbtext{enjoy yourself} (7798),
\dbtext{wyom} (7836),
\dbtext{see movie} (7891),
\dbtext{thing} (7936),
\dbtext{energy} (7982),
\dbtext{land} (8060),
\dbtext{rug} (8135),
\dbtext{pot} (8213),
\dbtext{kill person} (8251),
\dbtext{emotion} (8261),
\dbtext{little} (8268),
\dbtext{clean house} (8295),
\dbtext{change} (8313),
\dbtext{ear} (8314),
\dbtext{alive} (8379),
\dbtext{bread} (8404),
\dbtext{fit} (8548),
\dbtext{view video} (8571),
\dbtext{play poker} (8588),
\dbtext{excitement} (8614),
\dbtext{field} (8720),
\dbtext{move} (8737),
\dbtext{fly airplane} (8753),
\dbtext{ride horse} (8755),
\dbtext{wave} (8813),
\dbtext{stay bed} (8815),
\dbtext{look} (8821),
\dbtext{voice} (8828),
\dbtext{face} (8835),
\dbtext{lawn} (8860),
\dbtext{event} (8862),
\dbtext{tin} (8891),
\dbtext{happy} (8925),
\dbtext{find information} (8931),
\dbtext{fear} (9006),
\dbtext{oven} (9066),
\dbtext{long} (9087),
\dbtext{go vacation} (9089),
\dbtext{breathe} (9104),
\dbtext{shade} (9151),
\dbtext{carry} (9178),
\dbtext{recreation} (9180),
\dbtext{fly} (9215),
\dbtext{test} (9242),
\dbtext{enjoy} (9244),
\dbtext{hear} (9269),
\dbtext{organization} (9275),
\dbtext{jump} (9278),
\dbtext{ride bicycle} (9319),
\dbtext{egg} (9339),
\dbtext{building} (9384),
\dbtext{bee} (9700),
\dbtext{health} (9745),
\dbtext{communicate} (9747),
\dbtext{business} (9787),
\dbtext{make money} (9788),
\dbtext{become tire} (9805),
\dbtext{action} (9908),
\dbtext{pass} (9934),
\dbtext{fall} (9975),
\dbtext{resturant} (10012),
\dbtext{wash} (10170),
\dbtext{sock} (10193),
\dbtext{bear} (10208),
\dbtext{bell} (10210),
\dbtext{head} (10228),
\dbtext{lose weight} (10298),
\dbtext{jump up down} (10301),
\dbtext{watch television} (10343),
\dbtext{sign} (10388),
\dbtext{count} (10461),
\dbtext{healthy} (10482),
\dbtext{end} (10507),
\dbtext{group} (12400),
\dbtext{know} (13183),
\dbtext{pantry} (13248),
\dbtext{learn subject} (13303),
\dbtext{bullet} (13342),
\dbtext{degree} (13403),
\dbtext{note} (13429),
\dbtext{card} (13442),
\dbtext{supermarket} (13550),
\dbtext{joy} (13641),
\dbtext{stand up} (13725),
\dbtext{machine} (13790),
\dbtext{information} (13861),
\dbtext{read letter} (13879),
\dbtext{lay} (13886),
\dbtext{jump rope} (13894),
\dbtext{gas} (13908),
\dbtext{celebrate} (13996),
\dbtext{roof} (14069),
\dbtext{brown} (14263),
\dbtext{circle} (14472),
\dbtext{cake} (14522),
\dbtext{solid} (15343),
\dbtext{dirt} (15359),
\dbtext{point} (15518),
\dbtext{useful} (15524),
\dbtext{handle} (15706),
\dbtext{adjective} (15912),
\dbtext{alaska} (15970),
\dbtext{michigan} (15975),
\dbtext{maryland} (15980),
\dbtext{maine} (15996),
\dbtext{delaware} (16177),
\dbtext{kansa} (16333),
\dbtext{department} (16725),
\dbtext{be} (16974),
\dbtext{steam} (17055),
\dbtext{pretty} (17204),
\dbtext{sadness} (17314),
\dbtext{bike} (17583),
\dbtext{side} (17836),
\dbtext{decoration} (18070),
\dbtext{watch musician perform} (18250),
\dbtext{stapler} (18341),
\dbtext{motion} (18365),
\dbtext{feel better} (18399),
\dbtext{classroom} (18421),
\dbtext{compete} (18538),
\dbtext{out} (18546),
\dbtext{feel good} (18562),
\dbtext{accident} (18579),
\dbtext{transport} (18619),
\dbtext{stay fit} (18712),
\dbtext{injury} (18717),
\dbtext{ride} (18753),
\dbtext{play piano} (19011),
\dbtext{step} (19524),
\dbtext{apartment} (19557),
\dbtext{part} (19708),
\dbtext{bush} (19864),
\dbtext{course} (19871),
\dbtext{learn world} (19935),
\dbtext{countryside} (19993),
\dbtext{see exhibit} (20008),
\dbtext{power} (20085),
\dbtext{same} (20650),
\dbtext{release energy} (20692),
\dbtext{see art} (20765),
\dbtext{see excite story} (20985),
\dbtext{stage} (21403),
\dbtext{any large city} (21865),
\dbtext{comfort} (22238),
\dbtext{orgasm} (22445),
\dbtext{trip} (22700),
\dbtext{laughter} (22777),
\dbtext{express yourself} (23577),
\dbtext{discover truth} (24279),
\dbtext{edge} (24347),
\dbtext{see favorite show} (24507),
\dbtext{case} (24649),
\dbtext{go party} (24657),
\dbtext{grow} (24688),
\dbtext{competition} (24712),
\dbtext{express information} (24906),
\dbtext{board} (24939),
\dbtext{climb mountain} (24954),
\dbtext{attend meet} (25060),
\dbtext{sunshine} (25192),
\dbtext{fly kite} (25205),
\dbtext{examine} (25210),
\dbtext{race} (25233),
\dbtext{meet friend} (25238),
\dbtext{read news} (25239),
\dbtext{shock} (25396),
\dbtext{flea} (25677),
\dbtext{return work} (25747),
\dbtext{see band} (25769),
\dbtext{visit art gallery} (26118),
\dbtext{earn live} (26632),
\dbtext{punch} (26708),
\dbtext{cool off} (26965),
\dbtext{watch television show} (27279),
\dbtext{socialize} (27285),
\dbtext{skate} (27495),
\dbtext{movement} (27707),
\dbtext{create art} (27886),
\dbtext{crossword puzzle} (28017),
\dbtext{enjoy film} (28066),
\dbtext{go pub} (28343),
\dbtext{feel happy} (28593),
\dbtext{play lacrosse} (28752),
\dbtext{corner} (29067),
\dbtext{socialis} (29314),
\dbtext{away} (29340),
\dbtext{physical activity} (29359),
\dbtext{get} (29712),
\dbtext{short} (30110),
\dbtext{many person} (30864),
\dbtext{outdoors} (30992),
\dbtext{stick} (31425),
\dbtext{singular} (33174),
\dbtext{find house} (33328),
\dbtext{find outside} (34925),
\dbtext{winery} (36809),
\dbtext{branch} (37065),
\dbtext{polish} (38832),
\dbtext{wax} (39314),
\dbtext{make person laugh} (69984),
\dbtext{make friend} (71547),
\dbtext{chat friend} (81516),
\dbtext{meet person} (119411),
\dbtext{meet interest person} (123750),
\dbtext{general term} (172489),
\dbtext{generic} (179027),
\dbtext{ground} (184976),
\dbtext{get drunk} (310177),
\dbtext{eaten} (310995),
\dbtext{friend over} (311108),
\dbtext{get exercise} (311524),
\dbtext{get tire} (311724),
\dbtext{enjoy company friend} (311972),
\dbtext{neighbor house} (312175),
\dbtext{play game friend} (312284),
\dbtext{get physical activity} (312389),
\dbtext{go opus} (312412),
\dbtext{get shape} (312438),
\dbtext{sit quietly} (312805),
\dbtext{do it} (313139),
\dbtext{get fit} (323709),
\dbtext{usually} (328606),
\dbtext{unit} (332537),
\dbtext{generic term} (332695),
\dbtext{teach other person} (427795),
\dbtext{entertain person} (427797),
and
\dbtext{see person play game} (427799).

\subsection{Loops are Retained}
Table \ref{tbl:coreness:distribution:positive-polarity:loops}
presents the distribution of the vertices with specific coreness in the case where self-loops are retained.
Table \ref{tbl:core-decomposition:positive:loops} presents the number of vertices with coreness above a certain
threshold, as well as the number of edges and the average degree in every induced graph; 
whether that is a multigraph, a directed graph, or an undirected graph.

\begin{table}[ht]
\caption{Distribution of vertices with specific coreness. We only consider assertions with positive score in the English language.
The polarity is positive. Self-loops are retained.}\label{tbl:coreness:distribution:positive-polarity:loops}
\begin{center}
\resizebox{\textwidth}{!}{
\begin{tabular}{|r||r|r|r|r|r|r|r|r|r|r|r|r|r|r|}\hline
coreness &     0 &      1 &     2 &    3 &    4 &    5 &    6 &    7 &   8 &   9 &  10 &  11 &  12 &  13 \\\hline
vertices & 22649 & 215183 & 19841 & 6955 & 3383 & 2091 & 1486 & 1156 & 868 & 701 & 548 & 475 & 410 & 339 \\\hline
\end{tabular}
}
\end{center}

\begin{center}
\begin{tabular}{|r||r|r|r|r|r|r|r|r|r|r|r|r|r|}\hline
coreness &  14 &  15 &  16 &  17 &  18 &  19 &  20 &  21 &  22 &  23 &  24 &  25 &  26 \\\hline
vertices & 302 & 261 & 227 & 234 & 213 & 163 & 142 & 149 & 182 & 170 & 172 & 295 & 902 \\\hline
\end{tabular}
\end{center}
\end{table}

\begin{table}[ht]
\caption{Number of vertices, edges, and the average degree of the induced subgraphs
in the case where we allow edges with positive polarity only. Self-loops are retained.}\label{tbl:core-decomposition:positive:loops}
\begin{center}
\begin{tabular}{|c||r||r|r||r|r||r|r||}\hline
\multirow{2}{*}{coreness} & \multirow{2}{*}{vertices} & \multicolumn{2}{c||}{directed multigraph} & \multicolumn{2}{c||}{directed graph} & \multicolumn{2}{c||}{undirected graph} \\\cline{3-8}
          &          & \multicolumn{1}{c|}{edges} & avg.~degree & \multicolumn{1}{c|}{edges} & avg.~degree & \multicolumn{1}{c|}{edges} & avg.~degree \\\hline\hline
$\geq  0$ & $279497$ & $478879$ & $ 3.426720$ &  $413216$ & $ 2.956855$ &  $401627$ & $ 2.873927$ \\\hline
$\geq  1$ & $256848$ & $478879$ & $ 3.728890$ &  $413216$ & $ 3.217592$ &  $401627$ & $ 3.127352$ \\\hline
$\geq  2$ & $ 41665$ & $266033$ & $12.770095$ &  $211980$ & $10.175447$ &  $201942$ & $ 9.693604$ \\\hline
$\geq  3$ & $ 21824$ & $221325$ & $20.282716$ &  $172537$ & $15.811675$ &  $162968$ & $14.934751$ \\\hline
$\geq  4$ & $ 14869$ & $197105$ & $26.512207$ &  $151564$ & $20.386576$ &  $142375$ & $19.150582$ \\\hline
$\geq  5$ & $ 11486$ & $181040$ & $31.523594$ &  $137846$ & $24.002438$ &  $128989$ & $22.460212$ \\\hline
$\geq  6$ & $  9395$ & $168517$ & $35.873763$ &  $127222$ & $27.082916$ &  $118649$ & $25.257903$ \\\hline
$\geq  7$ & $  7909$ & $157701$ & $39.878872$ &  $118143$ & $29.875585$ &  $109834$ & $27.774434$ \\\hline
$\geq  8$ & $  6753$ & $147724$ & $43.750629$ &  $109853$ & $32.534577$ &  $101825$ & $30.156967$ \\\hline
$\geq  9$ & $  5885$ & $138997$ & $47.237723$ &  $102724$ & $34.910450$ &  $ 94964$ & $32.273237$ \\\hline
$\geq 10$ & $  5184$ & $130954$ & $50.522377$ &  $ 96163$ & $37.099923$ &  $ 88718$ & $34.227623$ \\\hline
$\geq 11$ & $  4636$ & $123983$ & $53.487058$ &  $ 90485$ & $39.035807$ &  $ 83297$ & $35.934858$ \\\hline
$\geq 12$ & $  4161$ & $117232$ & $56.347993$ &  $ 85052$ & $40.880558$ &  $ 78130$ & $37.553473$ \\\hline
$\geq 13$ & $  3751$ & $110801$ & $59.078113$ &  $ 79917$ & $42.611037$ &  $ 73273$ & $39.068515$ \\\hline
$\geq 14$ & $  3412$ & $105105$ & $61.609027$ &  $ 75313$ & $44.145955$ &  $ 68909$ & $40.392145$ \\\hline
$\geq 15$ & $  3110$ & $ 99411$ & $63.929904$ &  $ 70829$ & $45.549196$ &  $ 64711$ & $41.614791$ \\\hline
$\geq 16$ & $  2849$ & $ 94086$ & $66.048438$ &  $ 66719$ & $46.836785$ &  $ 60840$ & $42.709723$ \\\hline
$\geq 17$ & $  2622$ & $ 89162$ & $68.010679$ &  $ 62895$ & $47.974828$ &  $ 57244$ & $43.664378$ \\\hline
$\geq 18$ & $  2388$ & $ 83643$ & $70.052764$ &  $ 58639$ & $49.111390$ &  $ 53301$ & $44.640704$ \\\hline
$\geq 19$ & $  2175$ & $ 78126$ & $71.840000$ &  $ 54557$ & $50.167356$ &  $ 49521$ & $45.536552$ \\\hline
$\geq 20$ & $  2012$ & $ 73836$ & $73.395626$ &  $ 51253$ & $50.947316$ &  $ 46459$ & $46.181909$ \\\hline
$\geq 21$ & $  1870$ & $ 69849$ & $74.704813$ &  $ 48240$ & $51.593583$ &  $ 43646$ & $46.680214$ \\\hline
$\geq 22$ & $  1721$ & $ 65389$ & $75.989541$ &  $ 44928$ & $52.211505$ &  $ 40560$ & $47.135386$ \\\hline
$\geq 23$ & $  1539$ & $ 59479$ & $77.295647$ &  $ 40643$ & $52.817414$ &  $ 36627$ & $47.598441$ \\\hline
$\geq 24$ & $  1369$ & $ 53549$ & $78.230825$ &  $ 36419$ & $53.205259$ &  $ 32764$ & $47.865595$ \\\hline
$\geq 25$ & $  1197$ & $ 47392$ & $79.184628$ &  $ 31966$ & $53.410192$ &  $ 28715$ & $47.978279$ \\\hline
$\geq 26$ & $   902$ & $ 35985$ & $79.789357$ &  $ 23980$ & $53.170732$ &  $ 21509$ & $47.691796$ \\\hline
\end{tabular}
\end{center}
\end{table}

In both cases the maximum coreness is equal to $26$. The core in this case contains all the concepts mentioned
earlier (case where self-loops were neglected), as well as the concepts
\dbtext{eat lunch} (969),
\dbtext{buy food} (1068),
\dbtext{eat fast food restaurant} (1407),
\dbtext{football} (1448),
\dbtext{water plant} (1470),
\dbtext{hungry} (1533),
\dbtext{eat breakfast} (1540),
\dbtext{clean room} (1981),
\dbtext{wash clothe} (2121),
\dbtext{suitcase} (2479),
\dbtext{iron} (2587),
\dbtext{idea} (2837),
\dbtext{coat} (4020),
\dbtext{order food} (4424),
\dbtext{eat vegetable} (4895),
\dbtext{touch} (5106),
\dbtext{pray} (5292),
\dbtext{look better} (6191),
\dbtext{wool} (6425),
\dbtext{rabbit} (7815),
\dbtext{clean clothe} (8216),
\dbtext{sneeze} (8538),
\dbtext{analyse} (10415),
\dbtext{taste} (14093),
\dbtext{knit} (14683),
\dbtext{son} (15379),
\dbtext{sense} (18386),
\dbtext{memory} (18563),
\dbtext{inspiration} (18885),
\dbtext{awake} (26369),
\dbtext{butt} (27369),
\dbtext{find truth} (29101),
and
\dbtext{stitch} (50513).

\section{Both Polarities}
We distinguish cases based on whether we allow self-loops or not.

\subsection{Loops are Neglected}
Table \ref{tbl:coreness:distribution:both-polarities:no-loops}
presents the distribution of the vertices with specific coreness in the case where self-loops have been neglected.
Table \ref{tbl:core-decomposition:both:no-loops} presents the number of vertices with coreness above a certain
threshold, as well as the number of edges and the average degree in every induced graph; 
whether that is a multigraph, a directed graph, or an undirected graph.

\begin{table}[ht]
\caption{Distribution of vertices with specific coreness. We only consider assertions with positive score in the English language.
The polarity can be anything. Self-loops are neglected.}\label{tbl:coreness:distribution:both-polarities:no-loops}
\begin{center}
\resizebox{\textwidth}{!}{
\begin{tabular}{|r||r|r|r|r|r|r|r|r|r|r|r|r|r|r|}\hline
coreness &     0 &      1 &     2 &    3 &    4 &    5 &    6 &    7 &   8 &   9 &  10 &  11 &  12 &  13 \\\hline
vertices & 16922 & 219999 & 20265 & 7122 & 3429 & 2151 & 1520 & 1138 & 893 & 713 & 545 & 492 & 416 & 358 \\\hline
\end{tabular}
}
\end{center}

\begin{center}
\begin{tabular}{|r||r|r|r|r|r|r|r|r|r|r|r|r|r|r|}\hline
coreness &  14 &  15 &  16 &  17 &  18 &  19 &  20 &  21 &  22 &  23 &  24 &  25 &  26 &  27 \\\hline
vertices & 293 & 265 & 258 & 219 & 233 & 172 & 140 & 150 & 196 & 133 & 180 & 166 & 424 & 705 \\\hline
\end{tabular}
\end{center}
\end{table}

\begin{table}[ht]
\caption{Number of vertices, edges, and the average degree of the induced subgraphs
in the case where we allow edges with any polarity. Self-loops are neglected.}\label{tbl:core-decomposition:both:no-loops}
\begin{center}
\begin{tabular}{|c||r||r|r||r|r||r|r||}\hline
\multirow{2}{*}{coreness} & \multirow{2}{*}{vertices} & \multicolumn{2}{c||}{directed multigraph} & \multicolumn{2}{c||}{directed graph} & \multicolumn{2}{c||}{undirected graph} \\\cline{3-8}
          &          & \multicolumn{1}{c|}{edges} & avg.~degree & \multicolumn{1}{c|}{edges} & avg.~degree & \multicolumn{1}{c|}{edges} & avg.~degree \\\hline\hline
$\geq  0$ &   279497 & 491996 &    3.520582 & 424525 &    3.037779 & 412569 &    2.952225 \\\hline
$\geq  1$ &   262575 & 491996 &    3.747470 & 424525 &    3.233552 & 412569 &    3.142485 \\\hline
$\geq  2$ &    42576 & 274660 &   12.902104 & 218759 &   10.276165 & 208346 &    9.787016 \\\hline
$\geq  3$ &    22311 & 229111 &   20.537941 & 178510 &   16.001972 & 168552 &   15.109318 \\\hline
$\geq  4$ &    15189 & 204384 &   26.912107 & 157027 &   20.676411 & 147455 &   19.416025 \\\hline
$\geq  5$ &    11760 & 188168 &   32.001361 & 143145 &   24.344388 & 133885 &   22.769558 \\\hline
$\geq  6$ &     9609 & 175298 &   36.486211 & 132209 &   27.517744 & 123250 &   25.653034 \\\hline
$\geq  7$ &     8089 & 164224 &   40.604277 & 122932 &   30.394857 & 114234 &   28.244282 \\\hline
$\geq  8$ &     6951 & 154411 &   44.428428 & 114757 &   33.018846 & 106341 &   30.597324 \\\hline
$\geq  9$ &     6058 & 145511 &   48.039287 & 107443 &   35.471443 &  99289 &   32.779465 \\\hline
$\geq 10$ &     5345 & 137380 &   51.405051 & 100778 &   37.709261 &  92940 &   34.776427 \\\hline
$\geq 11$ &     4800 & 130499 &   54.374583 &  95130 &   39.637500 &  87548 &   36.478333 \\\hline
$\geq 12$ &     4308 & 123564 &   57.364903 &  89514 &   41.557103 &  82188 &   38.155989 \\\hline
$\geq 13$ &     3892 & 116967 &   60.106372 &  84290 &   43.314491 &  77254 &   39.698869 \\\hline
$\geq 14$ &     3534 & 111010 &   62.823995 &  79440 &   44.957555 &  72638 &   41.108093 \\\hline
$\geq 15$ &     3241 & 105484 &   65.093490 &  75108 &   46.348658 &  68570 &   42.314101 \\\hline
$\geq 16$ &     2976 & 100050 &   67.237903 &  70901 &   47.648522 &  64627 &   43.432124 \\\hline
$\geq 17$ &     2718 &  94354 &   69.428992 &  66510 &   48.940397 &  60533 &   44.542311 \\\hline
$\geq 18$ &     2499 &  89258 &   71.434974 &  62555 &   50.064026 &  56844 &   45.493397 \\\hline
$\geq 19$ &     2266 &  83297 &   73.518976 &  58094 &   51.274492 &  52699 &   46.512798 \\\hline
$\geq 20$ &     2094 &  78680 &   75.148042 &  54598 &   52.147087 &  49464 &   47.243553 \\\hline
$\geq 21$ &     1954 &  74761 &   76.520983 &  51624 &   52.839304 &  46693 &   47.792221 \\\hline
$\geq 22$ &     1804 &  70232 &   77.862528 &  48259 &   53.502217 &  43580 &   48.314856 \\\hline
$\geq 23$ &     1608 &  63809 &   79.364428 &  43626 &   54.261194 &  39333 &   48.921642 \\\hline
$\geq 24$ &     1475 &  59301 &   80.408136 &  40348 &   54.709153 &  36302 &   49.223051 \\\hline
$\geq 25$ &     1295 &  52717 &   81.416216 &  35686 &   55.113514 &  32054 &   49.504247 \\\hline
$\geq 26$ &     1129 &  46745 &   82.807795 &  31213 &   55.293180 &  27972 &   49.551816 \\\hline
$\geq 27$ &      705 &  29273 &   83.043972 &  19212 &   54.502128 &  17144 &   48.635461 \\\hline
\end{tabular}
\end{center}
\end{table}

The $705$ concepts that we find in the innermost core are
\dbtext{something} (5),
\dbtext{man} (7),
\dbtext{$\langle$censored f-word$\rangle$} (8),
\dbtext{person} (9),
\dbtext{type} (11),
\dbtext{train} (19),
\dbtext{town} (21),
\dbtext{rock} (23),
\dbtext{beach} (24),
\dbtext{tree} (33),
\dbtext{work} (35),
\dbtext{monkey} (42),
\dbtext{soup} (43),
\dbtext{go concert} (44),
\dbtext{hear music} (45),
\dbtext{weasel} (48),
\dbtext{word} (51),
\dbtext{exercise} (61),
\dbtext{love} (67),
\dbtext{library} (68),
\dbtext{bath} (70),
\dbtext{school} (73),
\dbtext{listen} (75),
\dbtext{kitten} (78),
\dbtext{arm} (79),
\dbtext{human} (80),
\dbtext{go performance} (86),
\dbtext{plane} (89),
\dbtext{class} (93),
\dbtext{take walk} (96),
\dbtext{walk} (97),
\dbtext{entertain} (100),
\dbtext{run marathon} (101),
\dbtext{beaver} (103),
\dbtext{wait line} (106),
\dbtext{attend lecture} (108),
\dbtext{drink} (120),
\dbtext{study} (122),
\dbtext{go walk} (128),
\dbtext{play basketball} (133),
\dbtext{fun} (134),
\dbtext{it} (137),
\dbtext{paper} (149),
\dbtext{bore} (152),
\dbtext{bed} (156),
\dbtext{wait table} (157),
\dbtext{go see film} (159),
\dbtext{go work} (161),
\dbtext{watch tv show} (163),
\dbtext{dirty} (170),
\dbtext{wake up morning} (171),
\dbtext{dream} (172),
\dbtext{shower} (173),
\dbtext{child} (178),
\dbtext{smoke} (188),
\dbtext{chicken} (191),
\dbtext{go fish} (193),
\dbtext{state} (196),
\dbtext{tell story} (199),
\dbtext{surf web} (203),
\dbtext{gym} (206),
\dbtext{play football} (209),
\dbtext{office build} (210),
\dbtext{movie} (213),
\dbtext{wiener dog} (220),
\dbtext{visit museum} (228),
\dbtext{live life} (236),
\dbtext{go play} (242),
\dbtext{sit} (243),
\dbtext{play soccer} (252),
\dbtext{go jog} (260),
\dbtext{take shower} (261),
\dbtext{ball} (263),
\dbtext{watch movie} (265),
\dbtext{watch film} (269),
\dbtext{stretch} (271),
\dbtext{play frisbee} (274),
\dbtext{go school} (276),
\dbtext{box} (279),
\dbtext{object} (280),
\dbtext{surprise} (289),
\dbtext{mother} (301),
\dbtext{go film} (305),
\dbtext{party} (307),
\dbtext{rest} (310),
\dbtext{listen radio} (311),
\dbtext{coffee} (314),
\dbtext{kiss} (316),
\dbtext{remember} (325),
\dbtext{housework} (343),
\dbtext{clean} (344),
\dbtext{lunch} (345),
\dbtext{street} (350),
\dbtext{watch tv} (351),
\dbtext{fungus} (354),
\dbtext{attend school} (355),
\dbtext{play tennis} (357),
\dbtext{park} (365),
\dbtext{trouble} (366),
\dbtext{snake} (369),
\dbtext{wood} (370),
\dbtext{play} (372),
\dbtext{take bus} (376),
\dbtext{bus} (377),
\dbtext{conversation} (390),
\dbtext{talk} (394),
\dbtext{learn} (401),
\dbtext{plan} (408),
\dbtext{think} (412),
\dbtext{go run} (423),
\dbtext{sleep} (425),
\dbtext{hang out bar} (427),
\dbtext{go see play} (431),
\dbtext{eat} (432),
\dbtext{attend class} (433),
\dbtext{bridge} (444),
\dbtext{cloud} (446),
\dbtext{ride bike} (460),
\dbtext{nothing} (466),
\dbtext{computer} (467),
\dbtext{line} (474),
\dbtext{buy} (475),
\dbtext{milk} (481),
\dbtext{tv} (483),
\dbtext{stress} (486),
\dbtext{drawer} (495),
\dbtext{boredom} (519),
\dbtext{ticket} (522),
\dbtext{car} (529),
\dbtext{vehicle} (530),
\dbtext{dog} (537),
\dbtext{music} (542),
\dbtext{zoo} (547),
\dbtext{use television} (560),
\dbtext{dress} (562),
\dbtext{bottle} (565),
\dbtext{live} (580),
\dbtext{one} (581),
\dbtext{turn} (583),
\dbtext{material} (591),
\dbtext{chair} (596),
\dbtext{entertainment} (607),
\dbtext{cat} (616),
\dbtext{hat} (629),
\dbtext{country} (640),
\dbtext{listen music} (642),
\dbtext{enjoyment} (643),
\dbtext{market} (648),
\dbtext{house} (652),
\dbtext{fish} (655),
\dbtext{lake} (660),
\dbtext{baby} (678),
\dbtext{hurt} (686),
\dbtext{hotel} (688),
\dbtext{plant} (716),
\dbtext{game} (732),
\dbtext{hospital} (865),
\dbtext{bank} (867),
\dbtext{girl} (876),
\dbtext{student} (886),
\dbtext{muscle} (891),
\dbtext{woman} (895),
\dbtext{animal} (902),
\dbtext{church} (904),
\dbtext{cold} (912),
\dbtext{family} (915),
\dbtext{go movie} (920),
\dbtext{moon} (924),
\dbtext{pet} (933),
\dbtext{cook} (946),
\dbtext{shop} (948),
\dbtext{stand line} (958),
\dbtext{letter} (960),
\dbtext{bird} (962),
\dbtext{attend classical concert} (972),
\dbtext{death} (977),
\dbtext{play sport} (983),
\dbtext{concert} (1001),
\dbtext{drive car} (1005),
\dbtext{bathroom} (1007),
\dbtext{city} (1013),
\dbtext{traveling} (1014),
\dbtext{water} (1016),
\dbtext{yard} (1032),
\dbtext{knowledge} (1040),
\dbtext{desk} (1043),
\dbtext{office} (1044),
\dbtext{home} (1045),
\dbtext{sloth} (1047),
\dbtext{teach} (1052),
\dbtext{bat} (1057),
\dbtext{call} (1061),
\dbtext{couch} (1072),
\dbtext{kitchen} (1078),
\dbtext{lizard} (1084),
\dbtext{run} (1102),
\dbtext{build} (1104),
\dbtext{restaurant} (1111),
\dbtext{butter} (1118),
\dbtext{read book} (1121),
\dbtext{education} (1122),
\dbtext{beautiful} (1124),
\dbtext{take note} (1136),
\dbtext{travel} (1143),
\dbtext{key} (1151),
\dbtext{electricity} (1153),
\dbtext{go store} (1157),
\dbtext{eye} (1160),
\dbtext{see} (1161),
\dbtext{story} (1164),
\dbtext{nose} (1171),
\dbtext{smell} (1172),
\dbtext{stand} (1183),
\dbtext{pen} (1205),
\dbtext{go sleep} (1207),
\dbtext{tire} (1221),
\dbtext{die} (1227),
\dbtext{fall asleep} (1234),
\dbtext{money} (1240),
\dbtext{bill} (1245),
\dbtext{snow} (1247),
\dbtext{leg} (1252),
\dbtext{everything} (1262),
\dbtext{patience} (1275),
\dbtext{mouse} (1284),
\dbtext{spend money} (1286),
\dbtext{cry} (1291),
\dbtext{television} (1298),
\dbtext{speak} (1305),
\dbtext{magazine} (1310),
\dbtext{hole} (1318),
\dbtext{nature} (1324),
\dbtext{bald eagle} (1331),
\dbtext{nest} (1332),
\dbtext{drink water} (1333),
\dbtext{crab} (1334),
\dbtext{paint} (1338),
\dbtext{ficus} (1339),
\dbtext{sea} (1347),
\dbtext{ocean} (1349),
\dbtext{sun} (1353),
\dbtext{sky} (1354),
\dbtext{fatigue} (1357),
\dbtext{food} (1359),
\dbtext{grape} (1366),
\dbtext{take break} (1368),
\dbtext{bedroom} (1372),
\dbtext{hike} (1383),
\dbtext{lie} (1395),
\dbtext{play chess} (1398),
\dbtext{horse} (1412),
\dbtext{store} (1414),
\dbtext{friend} (1429),
\dbtext{hot} (1438),
\dbtext{airport} (1439),
\dbtext{anger} (1441),
\dbtext{sugar} (1446),
\dbtext{grocery store} (1447),
\dbtext{read} (1456),
\dbtext{curiosity} (1460),
\dbtext{basket} (1463),
\dbtext{hold} (1464),
\dbtext{kill} (1466),
\dbtext{pay} (1473),
\dbtext{swim} (1475),
\dbtext{break} (1476),
\dbtext{foot} (1485),
\dbtext{verb} (1490),
\dbtext{refrigerator} (1503),
\dbtext{newspaper} (1506),
\dbtext{rice} (1510),
\dbtext{drive} (1545),
\dbtext{surface} (1550),
\dbtext{liquid} (1551),
\dbtext{meadow} (1558),
\dbtext{camp} (1566),
\dbtext{use computer} (1576),
\dbtext{window} (1577),
\dbtext{oil} (1587),
\dbtext{cover} (1592),
\dbtext{take film} (1595),
\dbtext{plate} (1604),
\dbtext{dinner} (1605),
\dbtext{smile} (1606),
\dbtext{den} (1610),
\dbtext{cow} (1613),
\dbtext{earth} (1633),
\dbtext{garage} (1647),
\dbtext{we} (1653),
\dbtext{garden} (1660),
\dbtext{see new} (1666),
\dbtext{dance} (1667),
\dbtext{potato} (1674),
\dbtext{fight} (1675),
\dbtext{outside} (1676),
\dbtext{job} (1677),
\dbtext{play baseball} (1687),
\dbtext{napkin} (1698),
\dbtext{light} (1716),
\dbtext{salad} (1720),
\dbtext{fox} (1746),
\dbtext{forest} (1747),
\dbtext{hear news} (1758),
\dbtext{glass} (1776),
\dbtext{cupboard} (1777),
\dbtext{telephone} (1790),
\dbtext{marmot} (1796),
\dbtext{mountain} (1797),
\dbtext{pain} (1813),
\dbtext{audience} (1816),
\dbtext{salt} (1817),
\dbtext{motel} (1827),
\dbtext{drop} (1846),
\dbtext{bone} (1852),
\dbtext{meat} (1853),
\dbtext{bookstore} (1854),
\dbtext{rain} (1856),
\dbtext{understand} (1858),
\dbtext{body} (1861),
\dbtext{use} (1867),
\dbtext{ferret} (1880),
\dbtext{small dog} (1882),
\dbtext{write} (1893),
\dbtext{cloth} (1903),
\dbtext{bottle wine} (1918),
\dbtext{doll} (1931),
\dbtext{pencil} (1953),
\dbtext{research} (1978),
\dbtext{learn new} (1983),
\dbtext{wheel} (1995),
\dbtext{sweat} (2002),
\dbtext{nice} (2028),
\dbtext{book} (2033),
\dbtext{museum} (2036),
\dbtext{headache} (2062),
\dbtext{black} (2063),
\dbtext{canada} (2076),
\dbtext{fart} (2079),
\dbtext{read newspaper} (2102),
\dbtext{sport} (2130),
\dbtext{bad} (2226),
\dbtext{show} (2243),
\dbtext{trash} (2260),
\dbtext{wind} (2284),
\dbtext{hand} (2300),
\dbtext{write story} (2335),
\dbtext{stop} (2358),
\dbtext{picture} (2360),
\dbtext{transportation} (2364),
\dbtext{road} (2368),
\dbtext{fall down} (2369),
\dbtext{seat} (2374),
\dbtext{boat} (2389),
\dbtext{practice} (2399),
\dbtext{help} (2410),
\dbtext{clothe} (2415),
\dbtext{dish} (2419),
\dbtext{train station} (2424),
\dbtext{lose} (2426),
\dbtext{war} (2438),
\dbtext{mall} (2447),
\dbtext{wet} (2456),
\dbtext{flower} (2459),
\dbtext{wallet} (2466),
\dbtext{room} (2480),
\dbtext{time} (2494),
\dbtext{answer question} (2512),
\dbtext{perform} (2523),
\dbtext{cell} (2535),
\dbtext{small} (2536),
\dbtext{bicycle} (2554),
\dbtext{new york} (2556),
\dbtext{need} (2557),
\dbtext{farm} (2562),
\dbtext{pocket} (2566),
\dbtext{everyone} (2589),
\dbtext{go somewhere} (2592),
\dbtext{color} (2611),
\dbtext{white} (2612),
\dbtext{red} (2614),
\dbtext{stone} (2631),
\dbtext{vegetable} (2636),
\dbtext{green} (2637),
\dbtext{life} (2638),
\dbtext{burn} (2644),
\dbtext{sound} (2660),
\dbtext{good} (2666),
\dbtext{play card} (2667),
\dbtext{large} (2771),
\dbtext{shoe} (2790),
\dbtext{go} (2801),
\dbtext{scale} (2817),
\dbtext{sex} (2825),
\dbtext{wait} (2858),
\dbtext{buy ticket} (2866),
\dbtext{steak} (2878),
\dbtext{gain knowledge} (2890),
\dbtext{fire} (2895),
\dbtext{beer} (3052),
\dbtext{interest} (3086),
\dbtext{finger} (3399),
\dbtext{feel} (3404),
\dbtext{knife} (3405),
\dbtext{dangerous} (3439),
\dbtext{sit down} (3442),
\dbtext{carpet} (3450),
\dbtext{bowl} (3463),
\dbtext{australia} (3494),
\dbtext{ski} (3524),
\dbtext{corn} (3531),
\dbtext{fridge} (3535),
\dbtext{soap} (3536),
\dbtext{expensive} (3546),
\dbtext{leave} (3571),
\dbtext{coin} (3573),
\dbtext{number} (3576),
\dbtext{fruit} (3590),
\dbtext{happiness} (3603),
\dbtext{sit chair} (3608),
\dbtext{laugh} (3635),
\dbtext{heavy} (3663),
\dbtext{map} (3668),
\dbtext{philippine} (3998),
\dbtext{wall} (4030),
\dbtext{theatre} (4095),
\dbtext{unite state} (4102),
\dbtext{cup} (4116),
\dbtext{hill} (4124),
\dbtext{square} (4138),
\dbtext{relax} (4187),
\dbtext{apple tree} (4194),
\dbtext{shelf} (4203),
\dbtext{pleasure} (4231),
\dbtext{relaxation} (4254),
\dbtext{god} (4277),
\dbtext{care} (4323),
\dbtext{friend house} (4329),
\dbtext{procreate} (4344),
\dbtext{airplane} (4359),
\dbtext{watch} (4406),
\dbtext{space} (4435),
\dbtext{phone} (4517),
\dbtext{this} (4539),
\dbtext{place} (4570),
\dbtext{radio} (4587),
\dbtext{tool} (4595),
\dbtext{apple} (4596),
\dbtext{mouth} (4628),
\dbtext{win} (4676),
\dbtext{go mall} (4699),
\dbtext{bag} (4743),
\dbtext{doctor} (4760),
\dbtext{theater} (4770),
\dbtext{river} (4784),
\dbtext{blue} (4808),
\dbtext{grass} (4815),
\dbtext{cheese} (4844),
\dbtext{mammal} (4850),
\dbtext{lot} (4905),
\dbtext{hair} (4957),
\dbtext{flirt} (4969),
\dbtext{pass time} (5077),
\dbtext{make} (5239),
\dbtext{noise} (5363),
\dbtext{shape} (5400),
\dbtext{flat} (5450),
\dbtext{utah} (5454),
\dbtext{plastic} (5505),
\dbtext{container} (5516),
\dbtext{climb} (5526),
\dbtext{bar} (5558),
\dbtext{bug} (5563),
\dbtext{live room} (5581),
\dbtext{drunk} (5628),
\dbtext{cabinet} (5663),
\dbtext{table} (5665),
\dbtext{furniture} (5668),
\dbtext{pizza} (5708),
\dbtext{sing} (5711),
\dbtext{dust} (5736),
\dbtext{sand} (5768),
\dbtext{kid} (5854),
\dbtext{hall} (5865),
\dbtext{closet} (5967),
\dbtext{boy} (5976),
\dbtext{like} (5989),
\dbtext{date} (5999),
\dbtext{door} (6022),
\dbtext{record} (6029),
\dbtext{find} (6040),
\dbtext{floor} (6062),
\dbtext{song} (6068),
\dbtext{play game} (6081),
\dbtext{meet} (6085),
\dbtext{not} (6150),
\dbtext{ice cream} (6157),
\dbtext{activity} (6207),
\dbtext{basement} (6220),
\dbtext{storm} (6222),
\dbtext{sofa} (6231),
\dbtext{cut} (6250),
\dbtext{page} (6264),
\dbtext{company} (6274),
\dbtext{dark} (6376),
\dbtext{science} (6395),
\dbtext{college} (6396),
\dbtext{world} (6404),
\dbtext{air} (6408),
\dbtext{statue} (6436),
\dbtext{metal} (6491),
\dbtext{jog} (6511),
\dbtext{open} (6539),
\dbtext{warm} (6561),
\dbtext{big} (6604),
\dbtext{squirrel} (6609),
\dbtext{alcohol} (6616),
\dbtext{skill} (6644),
\dbtext{hobby} (6671),
\dbtext{university} (6708),
\dbtext{roll} (6734),
\dbtext{communication} (6769),
\dbtext{general} (6836),
\dbtext{clock} (6860),
\dbtext{competitive activity} (7019),
\dbtext{read magazine} (7049),
\dbtext{round} (7057),
\dbtext{good time} (7209),
\dbtext{act} (7272),
\dbtext{play hockey} (7283),
\dbtext{heat} (7301),
\dbtext{cool} (7306),
\dbtext{dive} (7367),
\dbtext{go zoo} (7405),
\dbtext{art} (7424),
\dbtext{noun} (7478),
\dbtext{wine} (7522),
\dbtext{jar} (7524),
\dbtext{hard} (7545),
\dbtext{put} (7625),
\dbtext{important} (7681),
\dbtext{duck} (7686),
\dbtext{toy} (7701),
\dbtext{ring} (7720),
\dbtext{crowd} (7763),
\dbtext{draw} (7764),
\dbtext{edible} (7792),
\dbtext{see movie} (7891),
\dbtext{thing} (7936),
\dbtext{energy} (7982),
\dbtext{land} (8060),
\dbtext{rug} (8135),
\dbtext{kill person} (8251),
\dbtext{emotion} (8261),
\dbtext{change} (8313),
\dbtext{ear} (8314),
\dbtext{alive} (8379),
\dbtext{bread} (8404),
\dbtext{fit} (8548),
\dbtext{view video} (8571),
\dbtext{play poker} (8588),
\dbtext{excitement} (8614),
\dbtext{field} (8720),
\dbtext{move} (8737),
\dbtext{fly airplane} (8753),
\dbtext{ride horse} (8755),
\dbtext{wave} (8813),
\dbtext{look} (8821),
\dbtext{voice} (8828),
\dbtext{face} (8835),
\dbtext{happy} (8925),
\dbtext{find information} (8931),
\dbtext{fear} (9006),
\dbtext{oven} (9066),
\dbtext{long} (9087),
\dbtext{go vacation} (9089),
\dbtext{breathe} (9104),
\dbtext{shade} (9151),
\dbtext{carry} (9178),
\dbtext{recreation} (9180),
\dbtext{fly} (9215),
\dbtext{enjoy} (9244),
\dbtext{hear} (9269),
\dbtext{jump} (9278),
\dbtext{ride bicycle} (9319),
\dbtext{egg} (9339),
\dbtext{building} (9384),
\dbtext{bee} (9700),
\dbtext{health} (9745),
\dbtext{communicate} (9747),
\dbtext{business} (9787),
\dbtext{make money} (9788),
\dbtext{action} (9908),
\dbtext{pass} (9934),
\dbtext{fall} (9975),
\dbtext{resturant} (10012),
\dbtext{wash} (10170),
\dbtext{sock} (10193),
\dbtext{bear} (10208),
\dbtext{head} (10228),
\dbtext{jump up down} (10301),
\dbtext{watch television} (10343),
\dbtext{sign} (10388),
\dbtext{count} (10461),
\dbtext{know} (13183),
\dbtext{pantry} (13248),
\dbtext{learn subject} (13303),
\dbtext{degree} (13403),
\dbtext{note} (13429),
\dbtext{card} (13442),
\dbtext{supermarket} (13550),
\dbtext{joy} (13641),
\dbtext{stand up} (13725),
\dbtext{machine} (13790),
\dbtext{information} (13861),
\dbtext{lay} (13886),
\dbtext{jump rope} (13894),
\dbtext{gas} (13908),
\dbtext{celebrate} (13996),
\dbtext{gerbil} (14223),
\dbtext{brown} (14263),
\dbtext{circle} (14472),
\dbtext{cake} (14522),
\dbtext{dirt} (15359),
\dbtext{son} (15379),
\dbtext{adjective} (15912),
\dbtext{michigan} (15975),
\dbtext{maine} (15996),
\dbtext{kansa} (16333),
\dbtext{be} (16974),
\dbtext{steam} (17055),
\dbtext{pretty} (17204),
\dbtext{sadness} (17314),
\dbtext{software} (17383),
\dbtext{decoration} (18070),
\dbtext{watch musician perform} (18250),
\dbtext{stapler} (18341),
\dbtext{motion} (18365),
\dbtext{classroom} (18421),
\dbtext{out} (18546),
\dbtext{feel good} (18562),
\dbtext{accident} (18579),
\dbtext{transport} (18619),
\dbtext{injury} (18717),
\dbtext{ride} (18753),
\dbtext{play piano} (19011),
\dbtext{step} (19524),
\dbtext{apartment} (19557),
\dbtext{part} (19708),
\dbtext{learn world} (19935),
\dbtext{countryside} (19993),
\dbtext{see exhibit} (20008),
\dbtext{same} (20650),
\dbtext{release energy} (20692),
\dbtext{see art} (20765),
\dbtext{stage} (21403),
\dbtext{any large city} (21865),
\dbtext{comfort} (22238),
\dbtext{orgasm} (22445),
\dbtext{trip} (22700),
\dbtext{laughter} (22777),
\dbtext{see favorite show} (24507),
\dbtext{case} (24649),
\dbtext{go party} (24657),
\dbtext{grow} (24688),
\dbtext{competition} (24712),
\dbtext{board} (24939),
\dbtext{climb mountain} (24954),
\dbtext{fly kite} (25205),
\dbtext{examine} (25210),
\dbtext{meet friend} (25238),
\dbtext{visit art gallery} (26118),
\dbtext{cool off} (26965),
\dbtext{watch television show} (27279),
\dbtext{socialize} (27285),
\dbtext{skate} (27495),
\dbtext{movement} (27707),
\dbtext{crossword puzzle} (28017),
\dbtext{enjoy film} (28066),
\dbtext{play lacrosse} (28752),
\dbtext{corner} (29067),
\dbtext{away} (29340),
\dbtext{physical activity} (29359),
\dbtext{get} (29712),
\dbtext{short} (30110),
\dbtext{outdoors} (30992),
\dbtext{stick} (31425),
\dbtext{singular} (33174),
\dbtext{make friend} (71547),
\dbtext{chat friend} (81516),
\dbtext{meet person} (119411),
\dbtext{general term} (172489),
\dbtext{ground} (184976),
\dbtext{eaten} (310995),
\dbtext{friend over} (311108),
\dbtext{get exercise} (311524),
\dbtext{get tire} (311724),
\dbtext{enjoy company friend} (311972),
\dbtext{opus} (311995),
\dbtext{neighbor house} (312175),
\dbtext{play game friend} (312284),
\dbtext{go opus} (312412),
\dbtext{sit quietly} (312805),
\dbtext{usually} (328606),
\dbtext{entertain person} (427797),
and
\dbtext{see person play game} (427799).

\subsection{Loops are Retained}
Table \ref{tbl:coreness:distribution:both-polarities:loops}
presents the distribution of the vertices with specific coreness in the case where self-loops are retained.
Table \ref{tbl:core-decomposition:both:loops} presents the number of vertices with coreness above a certain
threshold, as well as the number of edges and the average degree in every induced graph; 
whether that is a multigraph, a directed graph, or an undirected graph.

\begin{table}[ht]
\caption{Distribution of vertices with specific coreness. We only consider assertions with positive score in the English language.
The polarity can be anything. Self-loops are retained.}\label{tbl:coreness:distribution:both-polarities:loops}
\begin{center}
\resizebox{\textwidth}{!}{
\begin{tabular}{|r||r|r|r|r|r|r|r|r|r|r|r|r|r|r|}\hline
coreness &     0 &      1 &     2 &    3 &    4 &    5 &    6 &    7 &   8 &   9 &  10 &  11 &  12 &  13 \\\hline
vertices & 16920 & 219994 & 20259 & 7130 & 3431 & 2152 & 1517 & 1140 & 895 & 711 & 545 & 495 & 413 & 356 \\\hline
\end{tabular}
}
\end{center}

\begin{center}
\begin{tabular}{|r||r|r|r|r|r|r|r|r|r|r|r|r|r|r|}\hline
coreness &  14 &  15 &  16 &  17 &  18 &  19 &  20 &  21 &  22 &  23 &  24 &  25 &  26 &  27 \\\hline
vertices & 292 & 269 & 256 & 219 & 234 & 173 & 136 & 139 & 201 & 139 & 174 & 166 & 258 & 883 \\\hline
\end{tabular}
\end{center}
\end{table}

\begin{table}[ht]
\caption{Number of vertices, edges, and the average degree of the induced subgraphs
in the case where we allow edges with any polarity. Self-loops are retained.}\label{tbl:core-decomposition:both:loops}
\begin{center}
\begin{tabular}{|c||r||r|r||r|r||r|r||}\hline
\multirow{2}{*}{coreness} & \multirow{2}{*}{vertices} & \multicolumn{2}{c||}{directed multigraph} & \multicolumn{2}{c||}{directed graph} & \multicolumn{2}{c||}{undirected graph} \\\cline{3-8}
          &          & \multicolumn{1}{c|}{edges} & avg.~degree & \multicolumn{1}{c|}{edges} & avg.~degree & \multicolumn{1}{c|}{edges} & avg.~degree \\\hline\hline
$\geq  0$ &   279497 & 492389 &  3.523394 & 424790 &  3.039675 & 412834 &  2.954121 \\\hline
$\geq  1$ &   262577 & 492389 &  3.750435 & 424790 &  3.235546 & 412834 &  3.144480 \\\hline
$\geq  2$ &    42583 & 275058 & 12.918676 & 219029 & 10.287157 & 208616 &  9.798088 \\\hline
$\geq  3$ &    22324 & 229524 & 20.562982 & 178793 & 16.018008 & 168835 & 15.125873 \\\hline
$\geq  4$ &    15194 & 204786 & 26.956167 & 157295 & 20.704884 & 147722 & 19.444781 \\\hline
$\geq  5$ &    11763 & 188565 & 32.060699 & 143408 & 24.382896 & 134147 & 22.808297 \\\hline
$\geq  6$ &     9611 & 175686 & 36.559359 & 132468 & 27.565914 & 123509 & 25.701592 \\\hline
$\geq  7$ &     8094 & 164638 & 40.681492 & 123210 & 30.444774 & 114512 & 28.295528 \\\hline
$\geq  8$ &     6954 & 154806 & 44.522865 & 115022 & 33.080817 & 106606 & 30.660339 \\\hline
$\geq  9$ &     6059 & 145891 & 48.156792 & 107695 & 35.548770 &  99541 & 32.857237 \\\hline
$\geq 10$ &     5348 & 137781 & 51.526178 & 101049 & 37.789454 &  93210 & 34.857891 \\\hline
$\geq 11$ &     4803 & 130911 & 54.512180 &  95405 & 39.727254 &  87820 & 36.568811 \\\hline
$\geq 12$ &     4308 & 123932 & 57.535747 &  89755 & 41.668988 &  82429 & 38.267874 \\\hline
$\geq 13$ &     3895 & 117384 & 60.274198 &  84570 & 43.424904 &  77531 & 39.810526 \\\hline
$\geq 14$ &     3539 & 111471 & 62.995762 &  79745 & 45.066403 &  72940 & 41.220684 \\\hline
$\geq 15$ &     3247 & 105947 & 65.258392 &  75430 & 46.461349 &  68888 & 42.431783 \\\hline
$\geq 16$ &     2978 & 100447 & 67.459369 &  71165 & 47.793821 &  64889 & 43.578912 \\\hline
$\geq 17$ &     2722 &  94828 & 69.675239 &  66810 & 49.088905 &  60827 & 44.692873 \\\hline
$\geq 18$ &     2503 &  89701 & 71.674790 &  62857 & 50.225330 &  57139 & 45.656412 \\\hline
$\geq 19$ &     2269 &  83728 & 73.801675 &  58377 & 51.456148 &  52977 & 46.696342 \\\hline
$\geq 20$ &     2096 &  79106 & 75.482824 &  54865 & 52.352099 &  49726 & 47.448473 \\\hline
$\geq 21$ &     1960 &  75291 & 76.827551 &  51981 & 53.041837 &  47035 & 47.994898 \\\hline
$\geq 22$ &     1821 &  71103 & 78.092257 &  48864 & 53.667216 &  44154 & 48.494234 \\\hline
$\geq 23$ &     1620 &  64538 & 79.676543 &  44126 & 54.476543 &  39808 & 49.145679 \\\hline
$\geq 24$ &     1481 &  59875 & 80.857529 &  40707 & 54.972316 &  36644 & 49.485483 \\\hline
$\geq 25$ &     1307 &  53462 & 81.808722 &  36189 & 55.377200 &  32534 & 49.784239 \\\hline
$\geq 26$ &     1141 &  47526 & 83.305872 &  31729 & 55.616126 &  28456 & 49.879053 \\\hline
$\geq 27$ &      883 &  37043 & 83.902605 &  24470 & 55.424689 &  21918 & 49.644394 \\\hline
\end{tabular}
\end{center}
\end{table}

In both cases the maximum coreness is equal to $27$. The core in this case contains all the concepts mentioned
earlier (case where self-loops were neglected), as well as the concepts
\dbtext{write program} (38),
\dbtext{pant} (63),
\dbtext{examination} (121),
\dbtext{study subject} (234),
\dbtext{go sport event} (241),
\dbtext{eat food} (264),
\dbtext{paint picture} (291),
\dbtext{candle} (327),
\dbtext{take course} (400),
\dbtext{storage} (496),
\dbtext{sometimes} (526),
\dbtext{gun} (635),
\dbtext{hide} (869),
\dbtext{enlightenment} (926),
\dbtext{effort} (1000),
\dbtext{shark} (1015),
\dbtext{rosebush} (1031),
\dbtext{laugh joke} (1095),
\dbtext{spoon} (1116),
\dbtext{well} (1201),
\dbtext{weather} (1248),
\dbtext{dead} (1279),
\dbtext{take bath} (1316),
\dbtext{purse} (1322),
\dbtext{anemone} (1348),
\dbtext{drink alcohol} (1386),
\dbtext{football} (1448),
\dbtext{water plant} (1470),
\dbtext{ice} (1634),
\dbtext{fiddle} (1652),
\dbtext{wrestle} (1665),
\dbtext{poop} (1672),
\dbtext{smart} (1678),
\dbtext{frog} (1692),
\dbtext{excite} (1704),
\dbtext{contemplate} (1784),
\dbtext{subway} (1804),
\dbtext{factory} (1917),
\dbtext{stay healthy} (1932),
\dbtext{lemur} (1998),
\dbtext{name} (2003),
\dbtext{pool} (2049),
\dbtext{instrument} (2086),
\dbtext{understand better} (2163),
\dbtext{can} (2261),
\dbtext{a} (2263),
\dbtext{sweet} (2330),
\dbtext{pee} (2354),
\dbtext{candy} (2386),
\dbtext{close eye} (2449),
\dbtext{suitcase} (2479),
\dbtext{satisfaction} (2483),
\dbtext{problem} (2500),
\dbtext{math} (2506),
\dbtext{sink} (2563),
\dbtext{iron} (2587),
\dbtext{cookie} (2595),
\dbtext{idea} (2837),
\dbtext{soft} (2842),
\dbtext{news} (2905),
\dbtext{surf} (3525),
\dbtext{teacher} (3556),
\dbtext{exhaustion} (3605),
\dbtext{fork} (3671),
\dbtext{planet} (3683),
\dbtext{france} (3826),
\dbtext{italy} (3881),
\dbtext{steel} (3907),
\dbtext{piano} (4010),
\dbtext{coat} (4020),
\dbtext{waste time} (4217),
\dbtext{mind} (4432),
\dbtext{funny} (4647),
\dbtext{eat vegetable} (4895),
\dbtext{bean} (4896),
\dbtext{touch} (5106),
\dbtext{pray} (5292),
\dbtext{measure} (5370),
\dbtext{view} (5574),
\dbtext{toilet} (5616),
\dbtext{program} (5620),
\dbtext{love else} (5621),
\dbtext{tooth} (5622),
\dbtext{disease} (5645),
\dbtext{peace} (5670),
\dbtext{lamp} (5671),
\dbtext{often} (5700),
\dbtext{buy beer} (5734),
\dbtext{internet} (5811),
\dbtext{dictionary} (5905),
\dbtext{rise} (5930),
\dbtext{bite} (6368),
\dbtext{sheep} (6424),
\dbtext{wool} (6425),
\dbtext{quiet} (6583),
\dbtext{high} (6606),
\dbtext{birthday} (6705),
\dbtext{reproduce} (6721),
\dbtext{mean} (6744),
\dbtext{drink coffee} (6817),
\dbtext{freezer} (6822),
\dbtext{good health} (7268),
\dbtext{eat ice cream} (7359),
\dbtext{learn language} (7364),
\dbtext{skin} (7399),
\dbtext{top} (7514),
\dbtext{cash} (7584),
\dbtext{leather} (7629),
\dbtext{read child} (7755),
\dbtext{rabbit} (7815),
\dbtext{pot} (8213),
\dbtext{clean clothe} (8216),
\dbtext{little} (8268),
\dbtext{clean house} (8295),
\dbtext{stay bed} (8815),
\dbtext{lawn} (8860),
\dbtext{event} (8862),
\dbtext{tin} (8891),
\dbtext{test} (9242),
\dbtext{seed} (9375),
\dbtext{cotton} (9729),
\dbtext{become tire} (9805),
\dbtext{lip} (9870),
\dbtext{lose weight} (10298),
\dbtext{healthy} (10482),
\dbtext{end} (10507),
\dbtext{group} (12400),
\dbtext{bullet} (13342),
\dbtext{melt} (13459),
\dbtext{roof} (14069),
\dbtext{taste} (14093),
\dbtext{organ} (14628),
\dbtext{solid} (15343),
\dbtext{point} (15518),
\dbtext{useful} (15524),
\dbtext{handle} (15706),
\dbtext{alaska} (15970),
\dbtext{department} (16725),
\dbtext{brain} (17555),
\dbtext{side} (17836),
\dbtext{chocolate} (18107),
\dbtext{sense} (18386),
\dbtext{feel better} (18399),
\dbtext{compete} (18538),
\dbtext{memory} (18563),
\dbtext{stay fit} (18712),
\dbtext{bush} (19864),
\dbtext{course} (19871),
\dbtext{power} (20085),
\dbtext{edge} (24347),
\dbtext{express information} (24906),
\dbtext{sunshine} (25192),
\dbtext{race} (25233),
\dbtext{flea} (25677),
\dbtext{return work} (25747),
\dbtext{earn live} (26632),
\dbtext{punch} (26708),
\dbtext{butt} (27369),
\dbtext{create art} (27886),
\dbtext{many person} (30864),
\dbtext{find house} (33328),
\dbtext{find outside} (34925),
\dbtext{winery} (36809),
\dbtext{branch} (37065),
\dbtext{polish} (38832),
\dbtext{wax} (39314),
\dbtext{slip} (47533),
\dbtext{agent} (58122),
\dbtext{slope} (64669),
\dbtext{make person laugh} (69984),
\dbtext{generic} (179027),
\dbtext{speedo} (203600),
\dbtext{get physical activity} (312389),
\dbtext{get shape} (312438),
\dbtext{do it} (313139),
\dbtext{get fit} (323709),
\dbtext{generic term} (332695),
and
\dbtext{teach other person} (427795).

\chapter{Shortest Paths}\label{chapter:shortest-paths}
In this chapter we examine properties of the shortest paths found in \conceptnet.

\section{Average Shortest Path Lengths}
In this section we examine the average path lengths in \conceptnet, both for the entire graphs
as well as for the big connected components that arise.
The number of vertices in every case is $279,497$.
Recall from Chapter \ref{chapter:components} that the entire graphs either by allowing assertions
with negative only polarity, or positive only polarity, or both, are disconnected.
Hence, in the calculation of the average path lengths we compute the average of the shortest paths
within the components; that is, 
the average of all the pairs of vertices that can be reached by at least one path.

\subsection{Negative Polarity}
Regarding the graph induced by the assertions of the English language with positive score and 
negative polarity we can observe the following.
The average path length for the directed graph is about $6.737$.
The average path length for the undirected graph is about $3.863$.
As a reminder, the number of edges of the directed graph (self-loops are omitted) is $13,387$,
while the number of edges of the undirected graph (again omitting self-loops) is $12,989$.

\subsubsection{Big Weakly Connected Component}
The average path length of the big weakly connected component found in the graph induced by the
assertions of the English language with positive score and negative polarity is about $3.864$.

\subsubsection{Big Strongly Connected Component}
The average path length of the big strongly connected component found in the graph induced by the
assertions of the English language with positive score and negative polarity is about $6.428$.
If we consider the same component as an undirected graph, then the average path length is about $3.537$.

\subsection{Positive Polarity}
Regarding the graph induced by the assertions of the English language with positive score and positive
polarity we can observe the following.
The average path length for the directed graph is about $4.811$.
The average path length for the undirected graph is about $4.330$.
As a reminder, the number of edges of the directed graph (self-loops are omitted) is $412,956$,
while the number of edges of the undirected graph (again omitting self-loops) is $401,367$.

\subsubsection{Big Weakly Connected Component}
The average path length of the big weakly connected component found in the graph induced by the
assertions of the English language with positive score and positive polarity is about $4.330$.

\subsubsection{Big Strongly Connected Component}
The average path length of the big strongly connected component found in the graph induced by the
assertions of the English language with positive score and positive polarity is about $4.205$.
If we consider the same component as an undirected graph, then the average path length is about $3.337$.

\subsection{Both Polarities}
Regarding the graph induced by the assertions of the English language with positive score and any
polarity we can observe the following.
The average path length for the directed graph is about $4.772$.
The average path length for the undirected graph is about $4.280$.
As a reminder, the number of edges of the directed graph (self-loops are omitted) is $424,525$,
while the number of edges of the undirected graph (again omitting self-loops) is $412,569$.

\subsubsection{Big Weakly Connected Component}
The average path length of the big weakly connected component found in the graph induced by the
assertions of the English language with positive score and any polarity is about $4.280$.

\subsubsection{Big Strongly Connected Component}
The average path length of the big strongly connected component found in the graph induced by the
assertions of the English language with positive score and any polarity is about $4.167$.
If we consider the same component as an undirected graph, then the average path length is about $3.291$.

\section{Path Length Distributions}
In this section we examine the distributions of the shortest path lengths in \conceptnet, 
both for the entire graph, as well as the big connected components that arise in every case.
Again we distinguish cases based on the polarity that we allow on the edges.

\subsection{Negative Polarity}
Table \ref{tbl:shortest-paths:histogram:negative} gives the distribution of the shortest paths
in the directed and undirected graph induced by the assertions of the English language with positive
score and negative polarity. It also presents the number of pairs for which the second vertex is unreachable
from the first one.

\begin{table}[ht]
\caption{Distribution of shortest paths in the graph induced by the assertions with positive score
and negative polarity in \conceptnet. The table on the left presents the case of the directed
graph, while the table on the right presents the case of the undirected graph.
The length is equal to $\infty$ for a pair of vertices when the second vertex is unreachable from the
first vertex.}\label{tbl:shortest-paths:histogram:negative}
\hspace{\fill}
\begin{tabular}{|r|r|}\hline
\multicolumn{2}{|c|}{directed graph} \\\hline
\multicolumn{1}{|c|}{path}   & \multicolumn{1}{c|}{number of} \\
\multicolumn{1}{|c|}{length} & \multicolumn{1}{c|}{shortest paths} \\\hline
$ 1$ & $   13,387$ \\\hline
$ 2$ & $  124,135$ \\\hline
$ 3$ & $  482,551$ \\\hline
$ 4$ & $  977,349$ \\\hline
$ 5$ & $1,539,103$ \\\hline
$ 6$ & $1,461,467$ \\\hline
$ 7$ & $1,400,197$ \\\hline
$ 8$ & $  936,127$ \\\hline
$ 9$ & $  856,899$ \\\hline
$10$ & $  510,899$ \\\hline
$11$ & $  271,808$ \\\hline
$12$ & $  171,242$ \\\hline
$13$ & $   98,542$ \\\hline
$14$ & $   71,825$ \\\hline
$15$ & $   36,628$ \\\hline
$16$ & $   15,213$ \\\hline
$17$ & $    4,973$ \\\hline
$18$ & $    1,953$ \\\hline
$19$ & $      841$ \\\hline
$20$ & $      424$ \\\hline
$21$ & $      165$ \\\hline
$22$ & $       51$ \\\hline
$23$ & $        9$ \\\hline
$24$ & $        1$ \\\hline
\multicolumn{2}{c}{} \\\hline
$\infty$ & $78,109,317,723$ \\\hline
\multicolumn{2}{c}{} \\\cline{2-2}
\multicolumn{1}{r|}{sum} & $78,118,293,512$ \\\cline{2-2}
\end{tabular}
\hspace{\fill}
\begin{tabular}{|r|r|}\hline
\multicolumn{2}{|c|}{undirected graph} \\\hline
\multicolumn{1}{|c|}{path}   & \multicolumn{1}{c|}{number of} \\
\multicolumn{1}{|c|}{length} & \multicolumn{1}{c|}{shortest paths} \\\hline
$ 1$ & $    12,989$ \\\hline
$ 2$ & $ 8,271,128$ \\\hline
$ 3$ & $ 7,529,595$ \\\hline
$ 4$ & $10,133,416$ \\\hline
$ 5$ & $ 6,074,004$ \\\hline
$ 6$ & $ 3,057,701$ \\\hline
$ 7$ & $ 1,191,562$ \\\hline
$ 8$ & $   400,130$ \\\hline
$ 9$ & $   121,610$ \\\hline
$10$ & $    57,323$ \\\hline
$11$ & $    37,909$ \\\hline
$12$ & $    34,148$ \\\hline
$13$ & $    14,184$ \\\hline
$14$ & $     6,137$ \\\hline
$15$ & $     1,510$ \\\hline
$16$ & $       366$ \\\hline
$17$ & $        48$ \\\hline
$18$ & $         8$ \\\hline
\multicolumn{2}{c}{} \\\hline
$\infty$ & $39,022,202,988$ \\\hline
\multicolumn{2}{c}{} \\\cline{2-2}
\multicolumn{1}{r|}{sum} & $39,059,146,756$ \\\cline{2-2}
\multicolumn{2}{c}{} \\
\multicolumn{2}{c}{} \\
\multicolumn{2}{c}{} \\
\multicolumn{2}{c}{} \\
\multicolumn{2}{c}{} \\
\multicolumn{2}{c}{} \\
\end{tabular}
\hspace{\fill}
\end{table}

\subsubsection{Negative Polarity: Big Weakly Connected Component}
First we examine the big weakly connected component that arises in the graph induced
by the assertions of the English language with positive score and negative polarity.
The component has $8,596$ nodes and $11,247$ undirected edges.
Table \ref{tbl:pathlength:negative:BigUC:histogram} gives the distribution of
shortest path lengths in this big undirected component.

\begin{table}[ht]
\caption{Distribution of undirected shortest path lengths in the big weakly connected component 
induced by the assertions with negative polarity of \conceptnet.}\label{tbl:pathlength:negative:BigUC:histogram}
\hspace{\fill}
\begin{tabular}{|r|r|}\hline
\multicolumn{1}{|c|}{path}   & \multicolumn{1}{c|}{number of} \\
\multicolumn{1}{|c|}{length} & \multicolumn{1}{c|}{shortest paths} \\\hline
$ 1$ & $    11,247$ \\\hline
$ 2$ & $ 8,270,557$ \\\hline
$ 3$ & $ 7,529,480$ \\\hline
$ 4$ & $10,133,389$ \\\hline
$ 5$ & $ 6,074,001$ \\\hline
$ 6$ & $ 3,057,701$ \\\hline
$ 7$ & $ 1,191,562$ \\\hline
$ 8$ & $   400,130$ \\\hline
$ 9$ & $   121,610$ \\\hline
$10$ & $    57,323$ \\\hline
$11$ & $    37,909$ \\\hline
$12$ & $    34,148$ \\\hline
$13$ & $    14,184$ \\\hline
$14$ & $     6,137$ \\\hline
$15$ & $     1,510$ \\\hline
$16$ & $       366$ \\\hline
$17$ & $        48$ \\\hline
$18$ & $         8$ \\\hline
\multicolumn{2}{c}{}\\\cline{2-2}
\multicolumn{1}{r|}{sum} & $36,941,310$ \\\cline{2-2}
\end{tabular}
\hspace{\fill}
\end{table}

\subsubsection{Negative Polarity: Big Strongly Connected Component}
Next we examine the big strongly connected component that arises in the graph induced
by the assertions of the English language with positive score and negative polarity.
The component has $592$ nodes and $1,849$ directed edges
(self-loops were omitted from the enumeration).
The number of edges in the induced undirected graph that occurs after restricting ourselves
in these $592$ nodes (again, self-loops are omitted) is $1,566$. 
Table \ref{tbl:pathlength:negative:BigDC:histogram} gives the distribution of
directed shortest path lengths in this directed component as well as the distribution 
of the undirected shortest path lengths in the undirected graph induced by the concepts 
that appear in the big directed component induced by the assertions with 
negative polarity of \conceptnet.

\begin{table}[ht]
\caption{Distribution of directed shortest path lengths in the big directed component 
induced by the assertions with negative polarity of \conceptnet
as well as the distribution of the undirected shortest path lengths in the undirected
graph induced by the concepts that appear in the big directed component with negative polarity 
of \conceptnet.}\label{tbl:pathlength:negative:BigDC:histogram}
\hspace{\fill}
\begin{tabular}{|r|r|}\hline
\multicolumn{2}{|c|}{directed graph} \\\hline
\multicolumn{1}{|c|}{path}   & \multicolumn{1}{c|}{number of} \\
\multicolumn{1}{|c|}{length} & \multicolumn{1}{c|}{shortest paths} \\\hline
$ 1$ & $ 1,849$ \\\hline
$ 2$ & $ 8,458$ \\\hline
$ 3$ & $24,779$ \\\hline
$ 4$ & $44,834$ \\\hline
$ 5$ & $59,644$ \\\hline
$ 6$ & $58,813$ \\\hline
$ 7$ & $49,665$ \\\hline
$ 8$ & $34,593$ \\\hline
$ 9$ & $24,926$ \\\hline
$10$ & $17,389$ \\\hline
$11$ & $10,916$ \\\hline
$12$ & $ 6,382$ \\\hline
$13$ & $ 3,589$ \\\hline
$14$ & $ 2,260$ \\\hline
$15$ & $ 1,010$ \\\hline
$16$ & $   452$ \\\hline
$17$ & $   162$ \\\hline
$18$ & $    74$ \\\hline
$19$ & $    40$ \\\hline
$20$ & $    20$ \\\hline
$21$ & $    12$ \\\hline
$22$ & $     5$ \\\hline
\multicolumn{2}{c}{}\\\cline{2-2}
\multicolumn{1}{r|}{sum} & $349,872$ \\\cline{2-2}
\end{tabular}
\hspace{\fill}
\begin{tabular}{|r|r|}\hline
\multicolumn{2}{|c|}{undirected graph}\\\hline
\multicolumn{1}{|c|}{path}   & \multicolumn{1}{c|}{number of} \\
\multicolumn{1}{|c|}{length} & \multicolumn{1}{c|}{shortest paths} \\\hline
$1$ & $ 1,566$ \\\hline
$2$ & $24,978$ \\\hline
$3$ & $62,562$ \\\hline
$4$ & $56,424$ \\\hline
$5$ & $23,425$ \\\hline
$6$ & $ 5,274$ \\\hline
$7$ & $   655$ \\\hline
$8$ & $    50$ \\\hline
$9$ & $     2$ \\\hline
\multicolumn{2}{c}{}\\\cline{2-2}
\multicolumn{1}{r|}{sum} & $174,936$ \\\cline{2-2}
\multicolumn{2}{c}{}\\
\multicolumn{2}{c}{}\\
\multicolumn{2}{c}{}\\
\multicolumn{2}{c}{}\\
\multicolumn{2}{c}{}\\
\multicolumn{2}{c}{}\\
\multicolumn{2}{c}{}\\
\multicolumn{2}{c}{}\\
\multicolumn{2}{c}{}\\
\multicolumn{2}{c}{}\\
\multicolumn{2}{c}{}\\
\multicolumn{2}{c}{}\\
\multicolumn{2}{c}{}
\end{tabular}
\hspace{\fill}
\end{table}

\subsection{Positive Polarity}
Table \ref{tbl:shortest-paths:histogram:positive} gives the distribution of the shortest paths
in the directed and undirected graph induced by the assertions of the English language with positive
score and positive polarity. It also presents the number of pairs for which the second vertex is unreachable
from the first one.

\begin{table}[ht]
\caption{Distribution of shortest paths in the graph induced by the assertions with positive score
and positive polarity in \conceptnet. The table on the left presents the case of the directed
graph, while the table on the right presents the case of the undirected graph.
The length is equal to $\infty$ for a pair of vertices when the second vertex is unreachable from the
first vertex.}\label{tbl:shortest-paths:histogram:positive}
\hspace{\fill}
\begin{tabular}{|r|r|}\hline
\multicolumn{2}{|c|}{directed graph} \\\hline
\multicolumn{1}{|c|}{path}   & \multicolumn{1}{c|}{number of} \\
\multicolumn{1}{|c|}{length} & \multicolumn{1}{c|}{shortest paths} \\\hline
$ 1$ & $      412,956$ \\\hline
$ 2$ & $   20,909,748$ \\\hline
$ 3$ & $  354,226,806$ \\\hline
$ 4$ & $1,867,492,200$ \\\hline
$ 5$ & $2,569,306,798$ \\\hline
$ 6$ & $  988,364,189$ \\\hline
$ 7$ & $  197,669,166$ \\\hline
$ 8$ & $   31,493,222$ \\\hline
$ 9$ & $    4,522,183$ \\\hline
$10$ & $      804,884$ \\\hline
$11$ & $      169,392$ \\\hline
$12$ & $       21,064$ \\\hline
$13$ & $        2,175$ \\\hline
$14$ & $          113$ \\\hline
$15$ & $            5$ \\\hline
\multicolumn{2}{c}{} \\\hline
$\infty$ & $72,082,898,611$ \\\hline
\multicolumn{2}{c}{} \\\cline{2-2}
\multicolumn{1}{r|}{sum} & $78,118,293,512$ \\\cline{2-2}
\multicolumn{2}{c}{} \\
\end{tabular}
\hspace{\fill}
\begin{tabular}{|r|r|}\hline
\multicolumn{2}{|c|}{undirected graph} \\\hline
\multicolumn{1}{|c|}{path}   & \multicolumn{1}{c|}{number of} \\
\multicolumn{1}{|c|}{length} & \multicolumn{1}{c|}{shortest paths} \\\hline
$ 1$ & $       401,367$ \\\hline
$ 2$ & $   136,176,653$ \\\hline
$ 3$ & $ 2,601,936,809$ \\\hline
$ 4$ & $12,781,641,328$ \\\hline
$ 5$ & $ 8,094,579,408$ \\\hline
$ 6$ & $ 1,203,650,632$ \\\hline
$ 7$ & $   165,209,595$ \\\hline
$ 8$ & $    26,997,091$ \\\hline
$ 9$ & $     4,242,831$ \\\hline
$10$ & $       765,445$ \\\hline
$11$ & $       390,830$ \\\hline
$12$ & $        62,288$ \\\hline
$13$ & $         5,142$ \\\hline
$14$ & $           658$ \\\hline
$15$ & $           104$ \\\hline
$16$ & $             4$ \\\hline
\multicolumn{2}{c}{} \\\hline
$\infty$ & $14,043,086,571$ \\\hline
\multicolumn{2}{c}{} \\\cline{2-2}
\multicolumn{1}{r|}{sum} & $39,059,146,756$ \\\cline{2-2}
\end{tabular}
\hspace{\fill}
\end{table}

\subsubsection{Positive Polarity: Big Weakly Connected Component}
Here we examine the big weakly connected component that arises in the graph induced
by the assertions of the English language with positive score and positive polarity.
The component has $223,679$ nodes and $383,698$ undirected edges.
Table \ref{tbl:pathlength:positive:BigUC:histogram} gives the distribution of
undirected shortest path lengths in the big undirected component induced by the assertions
with positive polarity of \conceptnet.

\begin{table}[ht]
\caption{Distribution of undirected shortest path lengths in the big weakly connected component 
induced by the assertions with positive polarity of \conceptnet.}\label{tbl:pathlength:positive:BigUC:histogram}
\hspace{\fill}
\begin{tabular}{|r|r|}\hline
\multicolumn{1}{|c|}{path}   & \multicolumn{1}{c|}{number of} \\
\multicolumn{1}{|c|}{length} & \multicolumn{1}{c|}{shortest paths} \\\hline
$ 1$ & $       383,698$ \\\hline
$ 2$ & $   136,170,064$ \\\hline
$ 3$ & $ 2,601,936,595$ \\\hline
$ 4$ & $12,781,641,299$ \\\hline
$ 5$ & $ 8,094,579,405$ \\\hline
$ 6$ & $ 1,203,650,632$ \\\hline
$ 7$ & $   165,209,595$ \\\hline
$ 8$ & $    26,997,091$ \\\hline
$ 9$ & $     4,242,831$ \\\hline
$10$ & $       765,445$ \\\hline
$11$ & $       390,830$ \\\hline
$12$ & $        62,288$ \\\hline
$13$ & $         5,142$ \\\hline
$14$ & $           658$ \\\hline
$15$ & $           104$ \\\hline
$16$ & $             4$ \\\hline
\multicolumn{2}{c}{}\\\cline{2-2}
\multicolumn{1}{r|}{sum} & $25,016,035,681$ \\\cline{2-2}
\end{tabular}
\hspace{\fill}
\end{table}

\subsubsection{Positive Polarity: Big Strongly Connected Component}
Now we examine the big weakly connected component that arises in the graph induced
by the assertions of the English language with positive score and positive polarity.
The component has $13,700$ nodes and $120,865$ edges.
Table \ref{tbl:pathlength:positive:BigDC:histogram} gives the distribution of
directed shortest path lengths in the big directed component induced by the assertions
with positive polarity of \conceptnet as well as the distribution of the undirected
shortest path lengths in the undirected graph induced by the concepts that appear in the
big directed component induced by the assertions with positive polarity of \conceptnet.

\begin{table}[ht]
\caption{Distribution of directed shortest path lengths in the big directed component 
induced by the assertions with positive polarity of \conceptnet
as well as the distribution of the undirected shortest path lengths in the undirected
graph induced by the concepts that appear in the big directed component with positive polarity 
of \conceptnet.}\label{tbl:pathlength:positive:BigDC:histogram}
\hspace{\fill}
\begin{tabular}{|r|r|}\hline
\multicolumn{2}{|c|}{directed graph} \\\hline
\multicolumn{1}{|c|}{path}   & \multicolumn{1}{c|}{number of} \\
\multicolumn{1}{|c|}{length} & \multicolumn{1}{c|}{shortest paths} \\\hline
$ 1$ & $   120,865$ \\\hline
$ 2$ & $ 4,006,764$ \\\hline
$ 3$ & $35,415,728$ \\\hline
$ 4$ & $82,100,213$ \\\hline
$ 5$ & $52,346,292$ \\\hline
$ 6$ & $11,632,181$ \\\hline
$ 7$ & $ 1,709,055$ \\\hline
$ 8$ & $   283,094$ \\\hline
$ 9$ & $    52,735$ \\\hline
$10$ & $     8,503$ \\\hline
$11$ & $       800$ \\\hline
$12$ & $        70$ \\\hline
\multicolumn{2}{c}{}\\\cline{2-2}
\multicolumn{1}{r|}{sum} & $187,676,300$ \\\cline{2-2}
\end{tabular}
\hspace{\fill}
\begin{tabular}{|r|r|}\hline
\multicolumn{2}{|c|}{undirected graph}\\\hline
\multicolumn{1}{|c|}{path}   & \multicolumn{1}{c|}{number of} \\
\multicolumn{1}{|c|}{length} & \multicolumn{1}{c|}{shortest paths} \\\hline
$1$ & $   109,378$ \\\hline
$2$ & $ 8,211,734$ \\\hline
$3$ & $47,622,303$ \\\hline
$4$ & $35,784,328$ \\\hline
$5$ & $ 2,084,925$ \\\hline
$6$ & $    25,409$ \\\hline
$7$ & $        73$ \\\hline
\multicolumn{2}{c}{}\\\cline{2-2}
\multicolumn{1}{r|}{sum} & $93,838,150$ \\\cline{2-2}
\multicolumn{2}{c}{}\\
\multicolumn{2}{c}{}\\
\multicolumn{2}{c}{}\\
\multicolumn{2}{c}{}\\
\multicolumn{2}{c}{}
\end{tabular}
\hspace{\fill}
\end{table}

\subsection{Both Polarities}
Table \ref{tbl:shortest-paths:histogram:both} gives the distribution of the shortest paths
in the directed and undirected graph induced by the assertions of the English language with positive
score and any polarity. It also presents the number of pairs for which the second vertex is unreachable
from the first one.

\begin{table}[ht]
\caption{Distribution of shortest paths in the graph induced by the assertions with positive score
and any polarity in \conceptnet. The table on the left presents the case of the directed
graph, while the table on the right presents the case of the undirected graph.
The length is equal to $\infty$ for a pair of vertices when the second vertex is unreachable from the
first vertex.}\label{tbl:shortest-paths:histogram:both}
\hspace{\fill}
\begin{tabular}{|r|r|}\hline
\multicolumn{2}{|c|}{directed graph} \\\hline
\multicolumn{1}{|c|}{path}   & \multicolumn{1}{c|}{number of} \\
\multicolumn{1}{|c|}{length} & \multicolumn{1}{c|}{shortest paths} \\\hline
$ 1$ & $      424,525$ \\\hline
$ 2$ & $   23,978,858$ \\\hline
$ 3$ & $  406,012,505$ \\\hline
$ 4$ & $2,045,811,557$ \\\hline
$ 5$ & $2,652,013,614$ \\\hline
$ 6$ & $  979,044,479$ \\\hline
$ 7$ & $  192,535,914$ \\\hline
$ 8$ & $   32,167,500$ \\\hline
$ 9$ & $    5,342,778$ \\\hline
$10$ & $    1,023,297$ \\\hline
$11$ & $      200,328$ \\\hline
$12$ & $       24,916$ \\\hline
$13$ & $        2,471$ \\\hline
$14$ & $          132$ \\\hline
$15$ & $            5$ \\\hline
\multicolumn{2}{c}{} \\\hline
$\infty$ & $71,779,710,633$ \\\hline
\multicolumn{2}{c}{} \\\cline{2-2}
\multicolumn{1}{r|}{sum} & $78,118,293,512$ \\\cline{2-2}
\multicolumn{2}{c}{} \\
\end{tabular}
\hspace{\fill}
\begin{tabular}{|r|r|}\hline
\multicolumn{2}{|c|}{undirected graph} \\\hline
\multicolumn{1}{|c|}{path}   & \multicolumn{1}{c|}{number of} \\
\multicolumn{1}{|c|}{length} & \multicolumn{1}{c|}{shortest paths} \\\hline
$ 1$ & $       412,569$ \\\hline
$ 2$ & $   194,269,357$ \\\hline
$ 3$ & $ 3,140,569,387$ \\\hline
$ 4$ & $13,521,818,553$ \\\hline
$ 5$ & $ 7,986,933,052$ \\\hline
$ 6$ & $ 1,141,783,884$ \\\hline
$ 7$ & $   154,810,521$ \\\hline
$ 8$ & $    25,254,401$ \\\hline
$ 9$ & $     3,923,373$ \\\hline
$10$ & $       740,256$ \\\hline
$11$ & $       389,913$ \\\hline
$12$ & $        59,126$ \\\hline
$13$ & $         4,761$ \\\hline
$14$ & $           632$ \\\hline
$15$ & $           104$ \\\hline
$16$ & $             4$ \\\hline
\multicolumn{2}{c}{} \\\hline
$\infty$ & $12,888,176,863$ \\\hline
\multicolumn{2}{c}{} \\\cline{2-2}
\multicolumn{1}{r|}{sum} & $39,059,146,756$ \\\cline{2-2}
\end{tabular}
\hspace{\fill}
\end{table}

\subsubsection{Both Polarities: Big Weakly Connected Component}
The big weakly connected component that arises in the graph induced by the assertions of the English language
with positive score and no restrictions to polarity has $228,784$ nodes and $394,554$ undirected edges.
Table \ref{tbl:pathlength:both:BigUC:histogram} gives the distribution of
undirected shortest path lengths in the big undirected component induced by the assertions
with positive polarity of \conceptnet.

\begin{table}[ht]
\caption{Distribution of undirected shortest path lengths in the big weakly connected component 
induced by the assertions with any polarity of \conceptnet.}\label{tbl:pathlength:both:BigUC:histogram}
\hspace{\fill}
\begin{tabular}{|r|r|}\hline
\multicolumn{1}{|c|}{path}   & \multicolumn{1}{c|}{number of} \\
\multicolumn{1}{|c|}{length} & \multicolumn{1}{c|}{shortest paths} \\\hline
$ 1$ & $       394,554$ \\\hline
$ 2$ & $   194,262,673$ \\\hline
$ 3$ & $ 3,140,569,163$ \\\hline
$ 4$ & $13,521,818,522$ \\\hline
$ 5$ & $ 7,986,933,049$ \\\hline
$ 6$ & $ 1,141,783,884$ \\\hline
$ 7$ & $   154,810,521$ \\\hline
$ 8$ & $    25,254,401$ \\\hline
$ 9$ & $     3,923,373$ \\\hline
$10$ & $       740,256$ \\\hline
$11$ & $       389,913$ \\\hline
$12$ & $        59,126$ \\\hline
$13$ & $         4,761$ \\\hline
$14$ & $           632$ \\\hline
$15$ & $           104$ \\\hline
$16$ & $             4$ \\\hline
\multicolumn{2}{c}{}\\\cline{2-2}
\multicolumn{1}{r|}{sum} & $25,016,035,681$ \\\cline{2-2}
\end{tabular}
\hspace{\fill}
\end{table}

\subsubsection{Both Polarities: Big Strongly Connected Component}
The big weakly connected component that arises in the graph induced by the assertions of the English language
with positive score and no restrictions to polarity has $14,025$ nodes and $126,151$ edges.
Table \ref{tbl:pathlength:both:BigDC:histogram} gives the distribution of
directed shortest path lengths in the big directed component induced by the assertions
with any polarity of \conceptnet as well as the distribution of the undirected
shortest path lengths in the undirected graph induced by the concepts that appear in the
big directed component induced by the assertions with any polarity of \conceptnet.

\begin{table}[ht]
\caption{Distribution of directed shortest path lengths in the big directed component 
induced by the assertions with any polarity of \conceptnet
as well as the distribution of the undirected shortest path lengths in the undirected
graph induced by the concepts that appear in the big directed component with any polarity 
of \conceptnet.}\label{tbl:pathlength:both:BigDC:histogram}
\hspace{\fill}
\begin{tabular}{|r|r|}\hline
\multicolumn{2}{|c|}{directed graph}\\\hline
\multicolumn{1}{|c|}{path}   & \multicolumn{1}{c|}{number of} \\
\multicolumn{1}{|c|}{length} & \multicolumn{1}{c|}{shortest paths} \\\hline
$ 1$ & $   126,151$ \\\hline
$ 2$ & $ 4,499,601$ \\\hline
$ 3$ & $39,414,540$ \\\hline
$ 4$ & $86,974,943$ \\\hline
$ 5$ & $52,332,076$ \\\hline
$ 6$ & $11,279,229$ \\\hline
$ 7$ & $ 1,667,767$ \\\hline
$ 8$ & $   311,440$ \\\hline
$ 9$ & $    68,835$ \\\hline
$10$ & $    10,879$ \\\hline
$11$ & $     1,047$ \\\hline
$12$ & $        92$ \\\hline
\multicolumn{2}{c}{}\\\cline{2-2}
\multicolumn{1}{r|}{sum} & $196,686,600$ \\\cline{2-2}
\end{tabular}
\hspace{\fill}
\begin{tabular}{|r|r|}\hline
\multicolumn{2}{|c|}{undirected graph}\\\hline
\multicolumn{1}{|c|}{path}   & \multicolumn{1}{c|}{number of} \\
\multicolumn{1}{|c|}{length} & \multicolumn{1}{c|}{shortest paths} \\\hline
$1$ & $   114,294$ \\\hline
$2$ & $ 9,938,647$ \\\hline
$3$ & $51,498,460$ \\\hline
$4$ & $34,859,851$ \\\hline
$5$ & $ 1,908,614$ \\\hline
$6$ & $    23,366$ \\\hline
$7$ & $        68$ \\\hline
\multicolumn{2}{c}{}\\\cline{2-2}
\multicolumn{1}{r|}{sum} & $98,343,300$ \\\cline{2-2}
\multicolumn{2}{c}{}\\
\multicolumn{2}{c}{}\\
\multicolumn{2}{c}{}\\
\multicolumn{2}{c}{}\\
\multicolumn{2}{c}{}
\end{tabular}
\hspace{\fill}
\end{table}

\section{Longest Geodesic Paths}\label{section:longest-geodesic}
Chapter \ref{chapter:components} showed that the entire graph is
disconnected. Hence, instead of examining the diameter which 
is formally infinite, we will examine the longest geodesic paths.
For the computations we consider subgraphs with positive score
on the assertions of the English language.

\subsection{Negative Polarity}
In this section we consider the directed and undirected graph
induced by assertions with negative polarity only.

%
%
\subsubsection{Directed Graph}
The longest geodesic path has length $24$ and connects
the concepts \dbtext{farmer} (908) and \dbtext{brass} (27632).
The full sequence of the longest geodesic path is given by
\dbtext{farmer} (908) $\rightarrow$
\dbtext{farm} (2562) $\rightarrow$
\dbtext{zoo} (547) $\rightarrow$
\dbtext{country} (640) $\rightarrow$
\dbtext{urban} (29003) $\rightarrow$
\dbtext{rural} (185019) $\rightarrow$
\dbtext{common} (17473) $\rightarrow$
\dbtext{occasional} (155305) $\rightarrow$
\dbtext{often} (5700) $\rightarrow$
\dbtext{never} (126958) $\rightarrow$
\dbtext{exist} (2907) $\rightarrow$
\dbtext{touch} (5106) $\rightarrow$
\dbtext{see} (1161) $\rightarrow$
\dbtext{computer} (467) $\rightarrow$
\dbtext{human} (80) $\rightarrow$
\dbtext{animal} (902) $\rightarrow$
\dbtext{man} (7) $\rightarrow$
\dbtext{chick} (14872) $\rightarrow$
\dbtext{egg} (9339) $\rightarrow$
\dbtext{chicken} (191) $\rightarrow$
\dbtext{cow} (1613) $\rightarrow$
\dbtext{horse} (1412) $\rightarrow$
\dbtext{gold} (2266) $\rightarrow$
\dbtext{silver} (13722) $\rightarrow$
\dbtext{brass} (27632).
The justification is given by the following sentences.
\begin{enumerate}[noitemsep,leftmargin=8mm,topsep=0.5mm]
 \item farmer is not farm
 \item farm is not zoo
 \item a Zoo is not a kind of country.
 \item country is not urban
 \item urban is not rural
 \item rural is not common
 \item common is not occasional
 \item occasional is not often
 \item often is not never
 \item never is not existing
 \item Some things that exist you can't touch.
 \item touch is not seeing
 \item a saw is not a kind of computer.
 \item A computer should not want to be a human
 \item human is not animal
 \item animal is not man
 \item men is not chicks
 \item chick is not egg
 \item egg is not chicken
 \item chicken is not cow
 \item cow is not horse
 \item horses is generally not gold.
 \item gold is not silver
 \item silver is not brass
\end{enumerate}

\paragraph{Big Strongly Connected Component.}
The diameter of the big directed component is equal to $22$.
The full sequence of the diameter is given by
\dbtext{zoo} (547) $\rightarrow$
\dbtext{country} (640) $\rightarrow$
\dbtext{urban} (29003) $\rightarrow$
\dbtext{rural} (185019) $\rightarrow$
\dbtext{common} (17473) $\rightarrow$
\dbtext{occasional} (155305) $\rightarrow$
\dbtext{often} (5700) $\rightarrow$
\dbtext{never} (126958) $\rightarrow$
\dbtext{exist} (2907) $\rightarrow$
\dbtext{touch} (5106) $\rightarrow$
\dbtext{see} (1161) $\rightarrow$
\dbtext{computer} (467) $\rightarrow$
\dbtext{human} (80) $\rightarrow$
\dbtext{animal} (902) $\rightarrow$
\dbtext{man} (7) $\rightarrow$
\dbtext{chick} (14872) $\rightarrow$
\dbtext{egg} (9339) $\rightarrow$
\dbtext{chicken} (191) $\rightarrow$
\dbtext{cow} (1613) $\rightarrow$
\dbtext{horse} (1412) $\rightarrow$
\dbtext{gold} (2266) $\rightarrow$
\dbtext{silver} (13722) $\rightarrow$
\dbtext{brass} (27632).
The justification is given by the following sentences.
\begin{enumerate}[noitemsep,leftmargin=8mm,topsep=0.5mm]
 \item a Zoo is not a kind of country.
 \item country is not urban
 \item urban is not rural
 \item rural is not common
 \item common is not occasional
 \item occasional is not often
 \item often is not never
 \item never is not existing
 \item Some things that exist you can't touch.
 \item touch is not seeing
 \item a saw is not a kind of computer.
 \item A computer should not want to be a human
 \item human is not animal
 \item animal is not man
 \item men is not chicks
 \item chick is not egg
 \item egg is not chicken
 \item chicken is not cow
 \item cow is not horse
 \item horses is generally not gold.
 \item gold is not silver
 \item silver is not brass
\end{enumerate}

\smallskip

The equivalent undirected graph of this component has diameter equal to $9$.
The full sequence of the diameter in this case is given by
\dbtext{lime} (6416) $\rightarrow$
\dbtext{lemon} (14212) $\rightarrow$
\dbtext{orange} (15004) $\rightarrow$
\dbtext{apple} (4596) $\rightarrow$
\dbtext{computer} (467) $\rightarrow$
\dbtext{person} (9) $\rightarrow$
\dbtext{listen} (75) $\rightarrow$
\dbtext{sometimes} (526) $\rightarrow$
\dbtext{always} (43553) $\rightarrow$
\dbtext{occasional} (155305).
The justification is given by the following sentences.
\begin{enumerate}[noitemsep,leftmargin=8mm,topsep=0.5mm]
 \item lime is not lemon
 \item lemon is not orange
 \item orange is not apple
 \item my computer is not a apple
 \item person does not want to be a computer
 \item a person doesn't want to listen. / the people don't listen, usually
 \item Sometimes we don't listen.
 \item always is not sometimes
 \item occasional is not always
\end{enumerate}

\paragraph{Big Weakly Connected Component.}
In the following section we will see that the longest geodesic path in the
undirected graph induced by the assertions of the English language with 
positive score and 
negative polarity is $18$.
This fact, together with the decomposition of the weakly connected components
of the graph that is induced by the assertions of the English language with negative
polarity and which is presented in Chapter \ref{chapter:components} 
(Table \ref{tbl:distribution:component:negative:weak}),
it follows that the diameter of this component is equal to $18$.
One detailed instance admitting this diameter is given in the following section
which describes the longest geodesic path in the graph induced by the assertions
with negative polarity.

%
%
\subsubsection{Undirected Graph}
The longest geodesic path has length $18$ and connects
the concepts \dbtext{twin} (13665) and \dbtext{height} (96373).
The full sequence of the longest geodesic path is given by
\dbtext{twin} (13665) $\rightarrow$
\dbtext{look alike} (58776) $\rightarrow$
\dbtext{bell} (10210) $\rightarrow$
\dbtext{verb} (1490) $\rightarrow$
\dbtext{subject} (6754) $\rightarrow$
\dbtext{king} (1443) $\rightarrow$
\dbtext{queen} (9693) $\rightarrow$
\dbtext{america} (2852) $\rightarrow$
\dbtext{monarchy} (18801) $\rightarrow$
\dbtext{republic} (46056) $\rightarrow$
\dbtext{dictatorship} (22962) $\rightarrow$
\dbtext{person} (9) $\rightarrow$
\dbtext{late} (1520) $\rightarrow$
\dbtext{recent} (52116) $\rightarrow$
\dbtext{long} (9087) $\rightarrow$
\dbtext{wide} (27291) $\rightarrow$
\dbtext{narrowness} (345590) $\rightarrow$
\dbtext{width} (130163) $\rightarrow$
\dbtext{height} (96373).
The justification is given by the following sentences.
\begin{enumerate}[noitemsep,leftmargin=8mm,topsep=0.5mm]
 \item Twins don't necessarily look alike
 \item all bells do not look alike
 \item \textquotedblleft Bell\textquotedblright is not a verb.
 \item subject is not verb
 \item The king is not a subject.
 \item king is not queen
 \item America does not have a queen.
 \item America is not a monarchy.
 \item republic is not monarchy
 \item republic is not dictatorship
 \item a person doesn't want dictatorship.
 \item person does not want to be late
 \item recent is not late
 \item recent is not long
 \item wide is not long
 \item narrowness is not wide
 \item narrowness is not width
 \item height is not width
\end{enumerate}

\subsection{Positive Polarity}
In this section we consider the directed and undirected graph
induced by assertions with positive polarity only.

%
%
\subsubsection{Directed Graph}
The longest geodesic path has length $15$ and connects
the concepts \dbtext{american alphabet} (40903) and \dbtext{mosque} (177603).
The full sequence for the path is given by
\dbtext{american alphabet} (40903) $\rightarrow$
\dbtext{twenty six letter} (40904) $\rightarrow$
\dbtext{english alphabet} (8492) $\rightarrow$
\dbtext{26 letter} (2622) $\rightarrow$
\dbtext{english language} (2623) $\rightarrow$
\dbtext{confuse} (1871) $\rightarrow$
\dbtext{ask question} (8559) $\rightarrow$
\dbtext{find information} (8931) $\rightarrow$
\dbtext{discover new} (87726) $\rightarrow$
\dbtext{tell many person} (427796) $\rightarrow$
\dbtext{evangelist} (98420) $\rightarrow$
\dbtext{fundamentalist} (176617) $\rightarrow$
\dbtext{taliban} (119866) $\rightarrow$
\dbtext{islamist} (119867) $\rightarrow$
\dbtext{muslim} (8663) $\rightarrow$
\dbtext{mosque} (177603).
The justification is given by the following sentences.
\begin{enumerate}[noitemsep,leftmargin=8mm,topsep=0.5mm]
 \item The American alphabet contains twenty six letters.
 \item There are twenty six letters in the english alphabet
 \item The English alphabet has 26 letters.
 \item There are 26 letters in the english language.
 \item The English language is sometimes confusing.
 \item When you are confused about something you should ask questions.
 \item If you want to find information then you should ask questions
 \item Something that might happen while finding information is that you discover new things
 \item discovering something new would make you want to tell many people about something
 \item telling many people about something is for evangelists.
 \item evangelist is a type of fundamentalist
 \item You are likely to find fundamentalists in the Taliban.
 \item The Taliban are Islamists.
 \item an Islamist is a kind of Muslim.
 \item You are likely to find Muslims in the mosque.
\end{enumerate}

\begin{remark}[Polarity Misclassification]\label{rem:diameter:misclassification}
We note that the longest geodesic path that was originally returned had the concept
\dbtext{eat pork} (20781) as the final node for the path.
However, this was purely a result of misclassification in the database, since the
sentence associated with the edge admitting the connection was 
\dbtext{Muslims can eat anything but pork.}.
The assertion that justifies the edge has ID $177981$, with best frame ID equal to $30$
which implied the form \dbtext{\{1\} can \{2\}}, 
which in turn implies positive polarity as expected during our search.
However, the actual sentence uses the frame for the opposite polarity.

After the above observation we searched in the database manually to see if we could
replace that particular edge with another one that actually has positive polarity.
There are indeed five more sentences with positive polarity and the one with the 
highest score ($2$) was chosen and presented above. 
We further note that among the other four sentences that have positive polarity we 
encounter \dbtext{a Muslim can fast during Ramadan} and \dbtext{muslims can fast for ramadan} 
which connect the concept \dbtext{muslim} (8663) with \dbtext{fast during ramadan} (53518) 
and \dbtext{fast ramadan} (65620) respectively.
\end{remark}

\paragraph{Big Strongly Connected Component.}
The diameter of this big directed component is equal to $12$.
The full sequence of the diameter is given by
\dbtext{sixth day week} (2754) $\rightarrow$
\dbtext{Friday} (2755) $\rightarrow$
\dbtext{day week} (203694) $\rightarrow$
\dbtext{calendar} (1228) $\rightarrow$
\dbtext{house} (652) $\rightarrow$
\dbtext{person} (9) $\rightarrow$
\dbtext{discover new} (87726) $\rightarrow$
\dbtext{tell many person} (427796) $\rightarrow$
\dbtext{evangelist} (98420) $\rightarrow$
\dbtext{fundamentalist} (176617) $\rightarrow$
\dbtext{taliban} (119866) $\rightarrow$
\dbtext{islamist} (119867) $\rightarrow$
\dbtext{muslim} (8663).
The justification is given by the following sentences.
\begin{enumerate}[noitemsep,leftmargin=8mm,topsep=0.5mm]
 \item The sixth day of the week is Friday.
 \item Friday is a kind of day of the week.
 \item calendar has days of the week
 \item You are likely to find Calendar in a house.
 \item a house is created by people / house has people
 \item a person wants to discover new things
 \item discovering something new would make you want to tell many people about something
 \item telling many people about something is for evangelists.
 \item evangelist is a type of fundamentalist
 \item You are likely to find fundamentalists in the Taliban.
 \item The Taliban are Islamists.
 \item an Islamist is a kind of Muslim.
\end{enumerate}

\smallskip

The equivalent undirected graph of this component has diameter equal to $7$.
The full sequence of the diameter in this case is given by
\dbtext{tell punishment} (978) $\rightarrow$
\dbtext{pass sentence} (297) $\rightarrow$
\dbtext{word} (51) $\rightarrow$
\dbtext{person} (9) $\rightarrow$
\dbtext{office build} (210) $\rightarrow$
\dbtext{television studio} (15853) $\rightarrow$
\dbtext{helsinki} (3075) $\rightarrow$
\dbtext{capital finland} (3074).
The justification is given by the following sentences.
\begin{enumerate}[noitemsep,leftmargin=8mm,topsep=0.5mm]
 \item If you want to pass sentence then you should tell somebody their punishment
 \item passing sentence requires words
 \item I  can word this
 \item Somewhere someone can be is the office building
 \item the television studio is part of the office building
 \item You are likely to find a television studio in Helsinki.
 \item helsinki is the capital of finland
\end{enumerate}

\paragraph{Big Weakly Connected Component.}
In the following section we will see that the longest geodesic path in the
undirected graph induced by the assertions of the English language with 
positive score and positive polarity is $16$.
This fact, together with the decomposition of the weakly connected components
of the graph that is induced by the assertions of the English language 
with positive score and positive polarity and which is presented in Chapter \ref{chapter:components} 
(Table \ref{tbl:distribution:component:positive:weak} and Figure \ref{fig:weakly-connected:positive}),
it follows that the diameter of this component is equal to $16$.
One detailed instance admitting this diameter is given in the following section
which describes the longest geodesic path in the graph induced by the assertions
with positive score and polarity.

%
%
\subsubsection{Undirected Graph}
The longest geodesic path in this case is $16$ and connects
the concepts \dbtext{anti-charm quark} (15922) and \dbtext{double-breasted de fursac jacket} (328674).
The full sequence for the path is given by
\dbtext{anti-charm quark} (15922) $\rightarrow$
\dbtext{c c-bar meson} (15620) $\rightarrow$
\dbtext{charm quark} (15616) $\rightarrow$
\dbtext{charm lambda-plus} (15621) $\rightarrow$
\dbtext{down quark} (15659) $\rightarrow$
\dbtext{neutron} (15664) $\rightarrow$
\dbtext{universe} (1639) $\genfrac{}{}{0pt}{}{\nearrow}{\searrow}$
\begin{tabular}{lcl}
\dbtext{something} (5) & $\rightarrow$ & \dbtext{dress} (562)\\
\dbtext{god} (4277)    & $\rightarrow$ & \dbtext{armor} (30372)
\end{tabular}  
$\genfrac{}{}{0pt}{}{\searrow}{\nearrow}$
\dbtext{vest} (15219) $\rightarrow$
\dbtext{waistcoat} (15873) $\rightarrow$
\dbtext{single-breasted three-piece suit} (311438) $\rightarrow$
\dbtext{single-breasted jacket} (311437) $\rightarrow$
\dbtext{single-breasted two-piece suit} (311439) $\rightarrow$
\dbtext{man suit pant} (311447) $\rightarrow$
\dbtext{double-breasted two-piece de fursac suit} (328675) $\rightarrow$
\dbtext{double-breasted de fursac jacket} (328674).
The justification is given by the following sentences.
\begin{enumerate}[noitemsep,leftmargin=8mm,topsep=0.5mm]
 \item an anti-charm quark is part of a c c-bar meson
 \item a charm quark is part of a c c-bar meson
 \item a charm quark is part of a charmed lambda-plus
 \item a down quark is part of a charmed lambda-plus
 \item a down quark is part of a neutron
 \item Something you find in the universe is neutrons
 \item Somewhere something can be is the universe / The universe is created by God.
 \item A dress is something worn on the body / armor is a type of god
 \item vest is to dress / armor is related to vest
 \item vest is a type of waistcoat
 \item a waistcoat is part of a single-breasted three-piece suit
 \item a single-breasted jacket is part of a single-breasted three-piece suit
 \item a single-breasted jacket is part of a single-breasted two-piece suit
 \item some men's suit pants is part of a single-breasted two-piece suit
 \item some men's suit pants is part of a double-breasted two-piece de fursac suit
 \item a double-breasted de fursac jacket is part of a double-breasted two-piece de fursac suit
\end{enumerate}

\subsection{Both Polarities}
Finally, in the case where both polarities are allowed on the induced graphs,
the longest geodesic paths are the same as in the case of the graphs induced
by assertions of positive polarity only.

\subsubsection{Directed Graph}
In the case of the directed graph the path that was returned was different only
in the final edge, where we encountered the connection 
\dbtext{muslim} (8663) $\rightarrow$
\dbtext{believe jesus god} (51958), admitted by the sentence
\dbtext{Muslims do not believe that Jesus is god.}.

\paragraph{Big Strongly Connected Component.}
The diameter of this big directed component is equal to $12$.
The full sequence of the diameter returned by \igraph is given by
\dbtext{sixth day week} (2754) $\rightarrow$
\dbtext{Friday} (2755) $\rightarrow$
\dbtext{day week} (203694) $\rightarrow$
\dbtext{bathroom} (1007) $\rightarrow$
\dbtext{library} (68) $\rightarrow$
\dbtext{learn} (401) $\rightarrow$
\dbtext{pride} (14745) $\rightarrow$
\dbtext{tell many person} (427796) $\rightarrow$
\dbtext{evangelist} (98420) $\rightarrow$
\dbtext{fundamentalist} (176617) $\rightarrow$
\dbtext{taliban} (119866) $\rightarrow$
\dbtext{islamist} (119867) $\rightarrow$
\dbtext{muslim} (8663).
The justification is given by the following sentences.
\begin{enumerate}[noitemsep,leftmargin=8mm,topsep=0.5mm]
 \item The sixth day of the week is Friday.
 \item Friday is a kind of day of the week.
 \item You are not likely to find a day of the week in the bathroom.
 \item You are likely to find a bathroom in a library
 \item library is for learning.
 \item Sometimes learning causes pride.
 \item pride would make you want to tell many people about something
 \item telling many people about something is for evangelists.
 \item evangelist is a type of fundamentalist
 \item You are likely to find fundamentalists in the Taliban.
 \item The Taliban are Islamists.
 \item an Islamist is a kind of Muslim.
\end{enumerate}

\smallskip

In the case of the equivalent undirected graph, the diameter is $7$
and is identical to the one found in the big strongly connected component
that was found in the graph induced by the assertions with positive polarity only.
Please refer to that case for the complete description.

\if 0
The equivalent undirected graph of this component has diameter equal to $7$.
The full sequence of the diameter in this case is given by
\dbtext{tell punishment} (978) $\rightarrow$
\dbtext{pass sentence} (297) $\rightarrow$
\dbtext{word} (51) $\rightarrow$
\dbtext{person} (9) $\rightarrow$
\dbtext{office build} (210) $\rightarrow$
\dbtext{television studio} (15853) $\rightarrow$
\dbtext{helsinki} (3075) $\rightarrow$
\dbtext{capital finland} (3074).
The justification is given by the following sentences.
\begin{enumerate}[noitemsep,leftmargin=8mm,topsep=0.5mm]
 \item If you want to pass sentence then you should tell somebody their punishment
 \item passing sentence requires words
 \item I  can word this
 \item Somewhere someone can be is the office building
 \item the television studio is part of the office building
 \item You are likely to find a television studio in Helsinki.
 \item helsinki is the capital of finland
\end{enumerate}
\fi

\paragraph{Big Weakly Connected Component.}
In the following section we will see that the longest geodesic path in the
undirected graph induced by the assertions of the English language with 
positive score and any polarity is $16$.
This fact, together with the decomposition of the weakly connected components
of the graph that is induced by the assertions of the English language 
with positive score and positive polarity and which is presented in Chapter \ref{chapter:components} 
(Table \ref{tbl:distribution:component:weak} and 
Figure \ref{fig:weakly-connected:positive} \footnote{Even though 
Figure \ref{fig:weakly-connected:positive} refers to the weakly connected components with positive
polarity only, it still suffices for our purposes, since the connected components that could be candidates
for giving a possible different longest geodesic path, are the same.}),
it follows that the diameter of this component is equal to $16$.
One detailed instance admitting this diameter 
has already been given earlier in the examination of the undirected graph induced by the 
vertices that appear in the big strongly connected component of \conceptnet induced
by the assertions of the English language with positive score and polarity.

\subsubsection{Undirected Graph}
In the case of the undirected graph the path that was returned was entirely
identical to the case of the undirected graph induced by assertions of
positive polarity only.

\section{Summary}
Tables \ref{tbl:shortest-path:summary:entire-graphs} and
\ref{tbl:shortest-path:summary:big-components} give a brief summary of the results
related to shortest paths that were presented earlier.

\begin{table}[ht]
\caption{The average shortest path length of the graphs induced by the assertions of the English
language with positive score and various polarities, together with the length of the longest geodesic
path in each graph. Recall that the graphs are disconnected, and hence the diameter is infinite 
in every case. Moreover, the last column indicates whether the length of the longest geodesic 
path is unique in the respective graph or not.}\label{tbl:shortest-path:summary:entire-graphs}
\centering
\begin{tabular}{|l|c|c|c|c|}\hline
\multicolumn{1}{|c|}{\multirow{2}{*}{polarity}} & directed & average & longest & \multicolumn{1}{c|}{\multirow{2}{*}{unique}} \\
         &  graph & shortest path & geodesic path &  \\\hline\hline
negative & \xmark & $3.863$ & $18$ & \xmark \\\hline
negative & \cmark & $6.737$ & $24$ & \cmark \\\hline\hline
positive & \xmark & $4.330$ & $16$ & \xmark \\\hline
positive & \cmark & $4.811$ & $15$ & \xmark \\\hline\hline
both     & \xmark & $4.280$ & $16$ & \xmark \\\hline
both     & \cmark & $4.772$ & $15$ & \xmark \\\hline
\end{tabular}
\end{table}

\begin{table}[ht]
\caption{The average shortest path length of the big components that arise in the graphs
induced by the assertions of the English language with positive score and various polarities, 
together with the length of the diameter in every case. 
The last column indicates whether the diameter is unique in the respective component 
or not.}\label{tbl:shortest-path:summary:big-components}
\centering
\begin{tabular}{|l|c|c|c|c|c|}\hline
\multicolumn{1}{|c|}{\multirow{2}{*}{polarity}} & connected & oriented & average & \multirow{2}{*}{diameter} & \multicolumn{1}{c|}{\multirow{2}{*}{unique}} \\
         & component &  edges & shortest path &  & \\\hline\hline
negative &    weakly & \xmark & $3.864$ & $18$ & \xmark \\\hline
negative &  strongly & \cmark & $6.428$ & $22$ & \xmark \\\hline
negative &  strongly & \xmark & $3.537$ & $ 9$ & \xmark \\\hline\hline
positive &    weakly & \xmark & $4.330$ & $16$ & \xmark \\\hline
positive &  strongly & \cmark & $4.205$ & $12$ & \xmark \\\hline
positive &  strongly & \xmark & $3.337$ & $ 7$ & \xmark \\\hline\hline
both     &    weakly & \xmark & $4.280$ & $16$ & \xmark \\\hline
both     &  strongly & \cmark & $4.167$ & $12$ & \xmark \\\hline
both     &  strongly & \xmark & $3.291$ & $ 7$ & \xmark \\\hline
\end{tabular}
\end{table}



\chapter{Cliques}\label{chapter:cliques}
In this section we give an overview of the maximal cliques that we encounter in \conceptnet.
The edges that are retained are those that come from assertions with positive score.
In every case we examine the induced undirected graph with no self-loops in order 
to determine the cliques. 
Table \ref{tbl:cliques:distribution} presents the number of cliques found in every case.

\begin{table}[ht]
\caption{The number of maximal cliques as well as the distribution of the maximal cliques for 
various frequency ranges and both polarities.
All relations are allowed but the scores of the assertions have to be positive. 
As usual the assertions are those in the English language.}\label{tbl:cliques:distribution}
\begin{center}
\resizebox{\textwidth}{!}{
\begin{tabular}{|c|c||c||r|r|r|r|r|r|r|r|r|r|}\hline
\multirow{2}{*}{polarity}
& range for        & \multicolumn{1}{c||}{number of} & \multicolumn{10}{c|}{number of maximal cliques of size}\\\cline{4-13}
& frequency values & \multicolumn{1}{c||}{maximal cliques} & \multicolumn{1}{c|}{3} & \multicolumn{1}{c|}{4} & \multicolumn{1}{c|}{5} & \multicolumn{1}{c|}{6} & \multicolumn{1}{c|}{7} & \multicolumn{1}{c|}{8} & \multicolumn{1}{c|}{9} & \multicolumn{1}{c|}{10} & \multicolumn{1}{c|}{11} & \multicolumn{1}{c|}{12} \\\hline\hline
\multirow{11}{*}{\rotatebox{90}{negative}}
& \{-10\}             &       0 &      0 &      0 &      0 &     0 &     0 &   0 &   0 &   0 &  0 &  0 \\\cline{2-13}
& \{-10, -9\}         &       0 &      0 &      0 &      0 &     0 &     0 &   0 &   0 &   0 &  0 &  0 \\\cline{2-13}
& \{-10, -9, -8\}     &       0 &      0 &      0 &      0 &     0 &     0 &   0 &   0 &   0 &  0 &  0 \\\cline{2-13}
& \{-10, \ldots, -7\} &       0 &      0 &      0 &      0 &     0 &     0 &   0 &   0 &   0 &  0 &  0 \\\cline{2-13}
& \{-10, \ldots, -6\} &       0 &      0 &      0 &      0 &     0 &     0 &   0 &   0 &   0 &  0 &  0 \\\cline{2-13}
& \{-10, \ldots, -5\} &     835 &    779 &     56 &      0 &     0 &     0 &   0 &   0 &   0 &  0 &  0 \\\cline{2-13}
& \{-10, \ldots, -4\} &     835 &    779 &     56 &      0 &     0 &     0 &   0 &   0 &   0 &  0 &  0 \\\cline{2-13}
& \{-10, \ldots, -3\} &     835 &    779 &     56 &      0 &     0 &     0 &   0 &   0 &   0 &  0 &  0 \\\cline{2-13}
& \{-10, \ldots, -2\} &     836 &    780 &     56 &      0 &     0 &     0 &   0 &   0 &   0 &  0 &  0 \\\cline{2-13}
& \{-10, \ldots, -1\} &     836 &    780 &     56 &      0 &     0 &     0 &   0 &   0 &   0 &  0 &  0 \\\cline{2-13}
& \{-10, \ldots, 0\}  &     836 &    780 &     56 &      0 &     0 &     0 &   0 &   0 &   0 &  0 &  0 \\\hline
\hline
\multirow{11}{*}{\rotatebox{90}{positive}}
& \{0, \ldots, 10\}   & 107,100 & 47,026 & 28,655 & 17,884 & 9,046 & 3,083 & 955 & 314 & 113 & 23 &  1 \\\cline{2-13}
& \{1, \ldots, 10\}   & 107,100 & 47,026 & 28,655 & 17,884 & 9,046 & 3,083 & 955 & 314 & 113 & 23 &  1 \\\cline{2-13}
& \{2, \ldots, 10\}   & 107,100 & 47,026 & 28,655 & 17,884 & 9,046 & 3,083 & 955 & 314 & 113 & 23 &  1 \\\cline{2-13}
& \{3, \ldots, 10\}   & 107,097 & 47,024 & 28,655 & 17,883 & 9,046 & 3,083 & 955 & 314 & 113 & 23 &  1 \\\cline{2-13}
& \{4, \ldots, 10\}   & 107,097 & 47,024 & 28,655 & 17,883 & 9,046 & 3,083 & 955 & 314 & 113 & 23 &  1 \\\cline{2-13}
& \{5, \ldots, 10\}   & 103,946 & 45,997 & 27,805 & 17,181 & 8,620 & 2,956 & 948 & 305 & 112 & 21 &  1 \\\cline{2-13}
& \{6, \ldots, 10\}   &      15 &     15 &      0 &      0 &     0 &     0 &   0 &   0 &   0 &  0 &  0 \\\cline{2-13}
& \{7, \ldots, 10\}   &      15 &     15 &      0 &      0 &     0 &     0 &   0 &   0 &   0 &  0 &  0 \\\cline{2-13}
& \{8, 9, 10\}        &       8 &      8 &      0 &      0 &     0 &     0 &   0 &   0 &   0 &  0 &  0 \\\cline{2-13}
& \{9, 10\}           &       0 &      0 &      0 &      0 &     0 &     0 &   0 &   0 &   0 &  0 &  0 \\\cline{2-13}
& \{10\}              &       0 &      0 &      0 &      0 &     0 &     0 &   0 &   0 &   0 &  0 &  0 \\\hline
\end{tabular}
}
\end{center}
\end{table}

\section{Maximum Clique: All Relations, Positive Polarity}
There is a unique maximum clique when all relations are allowed in the induced
graph of the English assertions with positive score.
The largest maximal clique has size $12$ and relates the concepts 
\dbtext{person}, 
\dbtext{build}, 
\dbtext{house}, 
\dbtext{home}, 
\dbtext{apartment}, 
\dbtext{room}, 
\dbtext{live room}, 
\dbtext{couch}, 
\dbtext{table}, 
\dbtext{chair}, 
\dbtext{cat}, and
\dbtext{dog}. 
The interpretation (\emph{surface form}) of 
\dbtext{live room} should be \dbtext{living room}, or \dbtext{in a living room}, etc., 
\dbtext{build} should be interpreted as \dbtext{a building}, etc.

\section{On the Maximal Cliques with Negative Polarity}
The clique (triangle) that is introduced when the range for the frequency values is expanded from 
$\{-10, \ldots, -3\}$ to $\{-10, \ldots, -2\}$ (see Table \ref{tbl:cliques:distribution}) is composed of the concepts
\dbtext{person} (9), \dbtext{chore} (22621), and \dbtext{fun} (134); inside the parentheses
we can read the IDs of the specific concepts. The justification comes from the sentences
\dbtext{a person doesn't want to do chores.}, \dbtext{People do things that are not fun.},
and \dbtext{chores are rarely fun.}.
Regarding the maximal cliques of size $4$, below we give the list with all $56$ of them.
Again, inside the parentheses we can read the ID of each concept.
{\footnotesize
\begin{verbatim}
scarce (196339), lot (4905), many (6989), much (7917)
second (130981), year (2709), hour (2762), minute (2764)
average (76629), bad (2226), good (2666), best (20709)
where (54686), who (23034), why (5469), when (38265)
middle (52077), start (44963), begin (3695), end (10507)
middle (52077), side (17836), top (7514), bottom (5887)
middle (52077), side (17836), front (2423), back (15583)
far (37745), near (25285), here (6352), away (29340)
sight (18526), smell (1172), taste (14093), touch (5106)
sight (18526), smell (1172), taste (14093), sound (2660)
even (15946), night (8677), morning (15749), afternoon (15914)
even (15946), night (8677), morning (15749), day (2759)
taste (14093), smell (1172), hear (9269), touch (5106)
few (8145), lot (4905), many (6989), much (7917)
many (6989), much (7917), lot (4905), little (8268)
spring (5537), winter (1431), summer (1437), fall (9975)
touch (5106), see (1161), smell (1172), hear (9269)
blue (4808), red (2614), yellow (2616), green (2637)
woman (895), man (7), girl (876), boy (5976)
plant (716), human (80), animal (902), god (4277)
plant (716), human (80), animal (902), die (1227)
person (9), plural (28735), child (178), eye (1160)
person (9), slave (27415), pay (1473), free (19126)
person (9), deaf (23417), listen (75), hear (9269)
person (9), best (20709), bad (2226), good (2666)
person (9), female (15676), man (7), boy (5976)
person (9), program language (13345), computer (467), hot (1438)
person (9), know (13183), right (6079), wrong (2664)
person (9), know (13183), understand (1858), unknown (5613)
person (9), write paper (8025), computer (467), telephone (1790)
person (9), boy (5976), man (7), girl (876)
person (9), wait (2858), money (1240), long hair (5916)
person (9), wallet (2466), money (1240), long hair (5916)
person (9), crime (1803), sleep (425), lie (1395)
person (9), telephone (1790), computer (467), television (1298)
person (9), kill (1466), live (580), die (1227)
person (9), hot (1438), computer (467), television (1298)
person (9), lie (1395), talk (394), dog (537)
person (9), television (1298), computer (467), book (2033)
person (9), clean (344), dirty (170), gerbil (14223)
person (9), clean (344), dirty (170), time (2494)
person (9), bed (156), examination (121), conscious (23506)
person (9), bed (156), examination (121), money (1240)
person (9), examination (121), long hair (5916), money (1240)
person (9), human (80), like play (203698), animal (902)
person (9), human (80), face (8835), money (1240)
person (9), human (80), long hair (5916), money (1240)
person (9), human (80), god (4277), animal (902)
person (9), human (80), die (1227), animal (902)
person (9), human (80), animal (902), fly (9215)
person (9), human (80), computer (467), conscious (23506)
person (9), human (80), computer (467), fly (9215)
person (9), human (80), computer (467), book (2033)
person (9), human (80), computer (467), house (652)
person (9), man (7), animal (902), fly (9215)
person (9), man (7), animal (902), god (4277)
\end{verbatim}
}

\section{On the Maximal Cliques with Positive Polarity}
Table \ref{tbl:cliques:positive:high-frequency} presents the maximal cliques in the case of positive polarity 
and high frequency. In this table, the frequency values are in the set $\{7, 8, 9, 10\}$.
More importantly, the first $8$ cliques presented in Table \ref{tbl:cliques:positive:high-frequency}
are maximal cliques from assertions with very high frequency values; i.e.~the values for the frequencies
are in the set $\{8, 9, 10\}$.

\begin{table}[ht]
\caption{Concepts participating in maximal cliques with positive polarity and high frequency (the 
values of the frequency are in the range $\{7, \ldots, 10\}$).
The cliques are obtained from assertions in the English language with positive score.
Cliques $1$-$8$ are obtained when the frequency values range in $\{8, 9, 10\}$,
while cliques $9$-$15$ are obtained when the frequency values range in $\{7, \ldots, 10\}$.}\label{tbl:cliques:positive:high-frequency}
\begin{center}
\resizebox{\textwidth}{!}{
\begin{tabular}{|r||r|l||c|c|c||c||c||c||c|c||c||c||c|c|c|c|c||c|c|c|}\cline{1-18}\cline{20-20}
\multicolumn{3}{|c||}{concept} & \multicolumn{15}{c||}{clique} & & \multicolumn{1}{|c|}{\multirow{2}{*}{\cmark}} \\\cline{1-18}
\multicolumn{1}{|c||}{\#} & \multicolumn{1}{c|}{id} & \multicolumn{1}{c||}{description}
                              &      1 &      2 &      3 &      4 &      5 &      6 &      7 &      8 &      9 &     10 &     11 &     12 &     13 &     14 &     15 & &   \\\cline{1-18}\cline{20-20}\cline{1-18}\cline{20-20}
 1 &     9 &           person &        &        &        &        &        &        & \cmark & \cmark &        &        & \cmark & \cmark & \cmark & \cmark & \cmark & & 7 \\\cline{1-18}\cline{20-20}
 2 &    33 &             tree &        &        &        &        &        &        &        &        &        & \cmark &        &        &        &        &        & & 1 \\\cline{1-18}\cline{20-20}
 3 &    80 &            human &        &        &        &        &        &        & \cmark & \cmark &        &        & \cmark & \cmark & \cmark & \cmark & \cmark & & 7 \\\cline{1-18}\cline{20-20}
 4 &   137 &               it &        &        &        &        &        &        &        &        & \cmark &        &        &        &        &        &        & & 1 \\\cline{1-18}\cline{20-20}
 5 &   716 &            plant &        &        &        &        &        & \cmark &        &        &        &        &        &        &        &        &        & & 1 \\\cline{1-18}\cline{20-20}
 6 &  1114 & national highway & \cmark & \cmark & \cmark &        &        &        &        &        &        &        &        &        &        &        &        & & 3 \\\cline{1-18}\cline{20-20}
 7 &  1443 &             king &        &        &        & \cmark &        &        &        &        &        &        &        &        &        &        &        & & 1 \\\cline{1-18}\cline{20-20}
 8 &  1577 &           window &        &        &        &        & \cmark &        &        &        &        &        &        &        &        &        &        & & 1 \\\cline{1-18}\cline{20-20}
 9 &  1776 &            glass &        &        &        &        & \cmark &        &        &        &        &        &        &        &        &        &        & & 1 \\\cline{1-18}\cline{20-20}
10 &  2637 &            green &        &        &        &        &        & \cmark &        &        &        & \cmark &        &        &        &        &        & & 2 \\\cline{1-18}\cline{20-20}
11 &  3571 &            leave &        &        &        &        &        & \cmark &        &        &        & \cmark &        &        &        &        &        & & 2 \\\cline{1-18}\cline{20-20}
12 &  3663 &            heavy &        &        &        &        &        &        &        &        & \cmark &        &        &        &        &        &        & & 1 \\\cline{1-18}\cline{20-20}
13 &  6491 &            metal &        &        &        &        &        &        &        &        & \cmark &        &        &        &        &        &        & & 1 \\\cline{1-18}\cline{20-20}
14 &  8689 &      wear clothe &        &        &        &        &        &        &        & \cmark &        &        &        &        &        &        &        & & 1 \\\cline{1-18}\cline{20-20}
15 &  9693 &            queen &        &        &        & \cmark &        &        &        &        &        &        &        &        &        &        &        & & 1 \\\cline{1-18}\cline{20-20}
16 & 18322 & write right hand &        &        &        &        &        &        & \cmark &        &        &        &        &        &        &        &        & & 1 \\\cline{1-18}\cline{20-20}
17 & 18735 &     eat together &        &        &        &        &        &        &        &        &        &        &        &        &        &        & \cmark & & 1 \\\cline{1-18}\cline{20-20}
18 & 20788 &       avoid pain &        &        &        &        &        &        &        &        &        &        &        &        &        & \cmark &        & & 1 \\\cline{1-18}\cline{20-20}
19 & 21317 &        eat table &        &        &        &        &        &        &        &        &        &        &        &        & \cmark &        &        & & 1 \\\cline{1-18}\cline{20-20}
20 & 21364 &   live apartment &        &        &        &        &        &        &        &        &        &        &        & \cmark &        &        &        & & 1 \\\cline{1-18}\cline{20-20}
21 & 22671 &            clear &        &        &        &        & \cmark &        &        &        &        &        &        &        &        &        &        & & 1 \\\cline{1-18}\cline{20-20}
22 & 41958 &      live castle &        &        &        & \cmark &        &        &        &        &        &        &        &        &        &        &        & & 1 \\\cline{1-18}\cline{20-20}
23 & 69743 &  federal highway & \cmark & \cmark & \cmark &        &        &        &        &        &        &        &        &        &        &        &        & & 3 \\\cline{1-18}\cline{20-20}
24 & 69746 &    well maintain &        &        & \cmark &        &        &        &        &        &        &        &        &        &        &        &        & & 1 \\\cline{1-18}\cline{20-20}
25 & 69747 &      wide smooth &        & \cmark &        &        &        &        &        &        &        &        &        &        &        &        &        & & 1 \\\cline{1-18}\cline{20-20}
26 & 69750 &     top concrete & \cmark &        &        &        &        &        &        &        &        &        &        &        &        &        &        & & 1 \\\cline{1-18}\cline{20-20}
27 & 81916 &   disagree other &        &        &        &        &        &        &        &        &        &        & \cmark &        &        &        &        & & 1 \\\cline{1-18}\cline{20-20}
\multicolumn{20}{c}{} \\\cline{4-18}
\multicolumn{3}{r|}{clique size} & 3 & 3 & 3 & 3 & 3 & 3 & 3 & 3 & 3 & 3 & 3 & 3 & 3 & 3 & 3 \\\cline{4-18}
\multicolumn{3}{r|}{frequency range}   & \multicolumn{8}{c||}{8--10} & \multicolumn{7}{c||}{7--10} \\\cline{4-18}
\end{tabular}
}
\end{center}
\end{table}

Table \ref{tbl:cliques:positive:moderate-frequency} presents the largest and second largest 
maximal cliques in the case of positive polarity but with moderate frequency. 
In this table, the frequency values are in the set $\{4, \ldots, 10\}$.
Recall that the largest maximal clique is composed of the $12$ concepts
\dbtext{person}, 
\dbtext{apartment}, 
\dbtext{home}, 
\dbtext{house}, 
\dbtext{build}, 
\dbtext{room}, 
\dbtext{live room}, 
\dbtext{cat}, 
\dbtext{couch}, 
\dbtext{table}, 
\dbtext{dog}, and 
\dbtext{chair}.
This is the first clique presented in Table \ref{tbl:cliques:positive:moderate-frequency}.
Figure \ref{fig:cliques:positive:maxCliques} presents the graph induced by the concepts
that appear in Table \ref{tbl:cliques:positive:moderate-frequency}.

\begin{table}[t]
\caption{
Concepts participating in maximal cliques with positive polarity and moderate frequency (the 
values of the frequency are in the range $\{4, \ldots, 10\}$).
The cliques are obtained from assertions in the English language with positive score.
Moreover, cliques 23 and 24 are obtained when the frequency ranges in $\{4, \ldots, 10\}$,
while all the other cases can also be obtained when the frequency ranges in $\{5, \ldots, 10\}$
as well.}\label{tbl:cliques:positive:moderate-frequency}
\begin{center}
\resizebox{\textwidth}{!}{
\begin{tabular}{|r||r|l||c||c|c|c|c|c|c|c||c|c||c|c|c|c|c|c|c|c|c|c|c||c||c|c||c|r|}\cline{1-27}\cline{29-29}
\multicolumn{3}{|c||}{concept} & \multicolumn{24}{c||}{clique} & & \multicolumn{1}{|c|}{\multirow{2}{*}{\cmark}} \\\cline{1-27}
\multicolumn{1}{|c||}{\#} & \multicolumn{1}{c|}{id} & \multicolumn{1}{c||}{description} &
                                 1 &      2 &      3 &      4 &      5 &      6 &      7 &      8 &      9 &     10 &     11 &     12 &     13 &     14 &     15 &     16 &     17 &     18 &     19 &     20 &     21 &     22 &     23 &     24 &   &     \\\cline{1-27}\cline{29-29}\cline{1-27}\cline{29-29}
  1 &      5 & something  &        &        &        &        &        &        &        &        &        &        & \cmark & \cmark & \cmark & \cmark & \cmark & \cmark & \cmark & \cmark & \cmark & \cmark & \cmark &        & \cmark & \cmark &   &  13 \\\cline{1-27}\cline{29-29}
  2 &      7 & man        &        &        &        &        &        &        &        &        &        &        &        &        &        &        &        &        &        &        &        &        &        & \cmark &        &        &   &   1 \\\cline{1-27}\cline{29-29}
  3 &      9 & person     & \cmark & \cmark & \cmark & \cmark & \cmark & \cmark & \cmark & \cmark & \cmark & \cmark & \cmark & \cmark & \cmark & \cmark & \cmark & \cmark & \cmark & \cmark & \cmark & \cmark & \cmark & \cmark & \cmark & \cmark &   &  24 \\\cline{1-27}\cline{29-29}
  4 &     21 & town       &        &        &        & \cmark & \cmark & \cmark & \cmark & \cmark &        &        &        &        &        &        &        &        &        &        &        & \cmark & \cmark &        & \cmark & \cmark &   &   9 \\\cline{1-27}\cline{29-29}
  5 &     67 & love       &        &        &        &        &        &        &        &        &        &        &        &        &        &        &        &        &        &        &        &        &        & \cmark &        &        &   &   1 \\\cline{1-27}\cline{29-29}
  6 &     68 & library    &        &        &        &        &        &        &        &        &        &        &        & \cmark &        &        & \cmark &        & \cmark &        & \cmark & \cmark & \cmark &        &        &        &   &   6 \\\cline{1-27}\cline{29-29}
  7 &     73 & school     &        &        &        &        &        &        &        &        &        &        &        & \cmark &        &        & \cmark &        & \cmark &        & \cmark & \cmark & \cmark &        &        &        &   &   6 \\\cline{1-27}\cline{29-29}
  8 &     80 & human      &        &        &        &        & \cmark &        & \cmark & \cmark &        &        &        &        &        &        &        &        &        &        &        &        &        & \cmark &        &        &   &   4 \\\cline{1-27}\cline{29-29}
  9 &    156 & bed        &        & \cmark &        &        &        &        &        &        &        &        &        &        & \cmark &        &        &        &        &        &        &        &        & \cmark &        &        &   &   3 \\\cline{1-27}\cline{29-29}
 10 &    178 & child      &        &        &        &        &        &        &        &        &        &        &        &        &        &        &        &        &        &        &        &        &        & \cmark &        &        &   &   1 \\\cline{1-27}\cline{29-29}
 11 &    467 & computer   &        &        &        &        &        &        &        &        &        &        &        &        &        &        &        & \cmark & \cmark &        &        &        & \cmark &        &        &        &   &   3 \\\cline{1-27}\cline{29-29}
 12 &    537 & dog        & \cmark &        & \cmark & \cmark & \cmark & \cmark & \cmark & \cmark &        & \cmark &        &        &        &        &        &        &        &        &        &        &        &        &        &        &   &   8 \\\cline{1-27}\cline{29-29}
 13 &    580 & live       &        &        &        &        &        &        &        & \cmark &        &        &        &        &        &        &        &        &        &        &        &        &        &        &        &        &   &   1 \\\cline{1-27}\cline{29-29}
 14 &    596 & chair      & \cmark &        & \cmark & \cmark & \cmark & \cmark & \cmark &        & \cmark & \cmark &        &        &        & \cmark & \cmark &        &        & \cmark & \cmark & \cmark &        &        & \cmark & \cmark &   &  15 \\\cline{1-27}\cline{29-29}
 15 &    616 & cat        & \cmark & \cmark & \cmark & \cmark &        & \cmark &        &        & \cmark & \cmark &        &        &        &        &        &        &        &        &        &        &        &        &        &        &   &   7 \\\cline{1-27}\cline{29-29}
 16 &    652 & house      & \cmark & \cmark & \cmark & \cmark & \cmark & \cmark & \cmark & \cmark & \cmark & \cmark & \cmark & \cmark & \cmark & \cmark & \cmark & \cmark & \cmark & \cmark & \cmark & \cmark & \cmark &        &        &        &   &  21 \\\cline{1-27}\cline{29-29}
 17 &    688 & hotel      &        &        &        &        &        &        &        &        &        &        &        &        &        &        &        &        &        &        &        &        &        &        &        & \cmark &   &   1 \\\cline{1-27}\cline{29-29}
 18 &    876 & girl       &        &        &        &        &        &        &        &        &        &        &        &        &        &        &        &        &        &        &        &        &        & \cmark &        &        &   &   1 \\\cline{1-27}\cline{29-29}
 19 &    895 & woman      &        &        &        &        &        &        &        &        &        &        &        &        &        &        &        &        &        &        &        &        &        & \cmark &        &        &   &   1 \\\cline{1-27}\cline{29-29}
 20 &   1013 & city       &        &        &        &        &        & \cmark & \cmark & \cmark &        &        &        &        &        &        &        &        &        &        &        & \cmark & \cmark &        & \cmark & \cmark &   &   7 \\\cline{1-27}\cline{29-29}
 21 &   1043 & desk       &        &        &        &        &        &        &        &        & \cmark & \cmark & \cmark & \cmark &        & \cmark & \cmark & \cmark & \cmark & \cmark & \cmark &        &        &        &        &        &   &  10 \\\cline{1-27}\cline{29-29}
 22 &   1045 & home       & \cmark & \cmark & \cmark & \cmark & \cmark & \cmark & \cmark & \cmark & \cmark & \cmark & \cmark &        & \cmark & \cmark &        & \cmark &        & \cmark &        &        &        & \cmark &        &        &   &  16 \\\cline{1-27}\cline{29-29}
 23 &   1072 & couch      & \cmark & \cmark &        &        &        &        &        &        &        &        &        &        &        &        &        &        &        &        &        &        &        &        &        &        &   &   2 \\\cline{1-27}\cline{29-29}
 24 &   1104 & build      & \cmark & \cmark & \cmark & \cmark & \cmark & \cmark & \cmark & \cmark & \cmark & \cmark & \cmark & \cmark & \cmark & \cmark & \cmark & \cmark & \cmark & \cmark & \cmark & \cmark & \cmark &        & \cmark & \cmark &   &  23 \\\cline{1-27}\cline{29-29}
 25 &   1111 & restaurant &        &        &        &        &        &        &        &        &        &        &        &        &        &        &        &        &        &        &        &        &        &        & \cmark & \cmark &   &   2 \\\cline{1-27}\cline{29-29}
 26 &   1372 & bedroom    &        &        & \cmark & \cmark & \cmark &        &        &        & \cmark & \cmark & \cmark &        & \cmark & \cmark &        & \cmark &        & \cmark &        &        &        & \cmark &        &        &   &  11 \\\cline{1-27}\cline{29-29}
 27 &   1414 & store      &        &        &        &        &        &        &        &        &        &        &        &        &        &        &        &        &        &        &        &        &        &        & \cmark &        &   &   1 \\\cline{1-27}\cline{29-29}
 28 &   2033 & book       &        &        &        &        &        &        &        &        &        &        & \cmark & \cmark & \cmark &        &        & \cmark & \cmark &        &        &        & \cmark &        &        &        &   &   6 \\\cline{1-27}\cline{29-29}
 29 &   2480 & room       & \cmark & \cmark & \cmark & \cmark & \cmark & \cmark & \cmark & \cmark & \cmark & \cmark & \cmark & \cmark & \cmark & \cmark & \cmark & \cmark & \cmark & \cmark & \cmark & \cmark & \cmark &        & \cmark & \cmark &   &  23 \\\cline{1-27}\cline{29-29}
 30 &   2825 & sex        &        &        &        &        &        &        &        &        &        &        &        &        &        &        &        &        &        &        &        &        &        & \cmark &        &        &   &   1 \\\cline{1-27}\cline{29-29}
 31 &   4570 & place      &        &        &        &        &        &        &        &        &        &        &        &        &        &        &        &        &        & \cmark & \cmark & \cmark &        &        & \cmark & \cmark &   &   5 \\\cline{1-27}\cline{29-29}
 32 &   5558 & bar        &        &        &        &        &        &        &        &        &        &        &        &        &        &        &        &        &        &        &        &        &        &        & \cmark & \cmark &   &   2 \\\cline{1-27}\cline{29-29}
 33 &   5581 & live room  & \cmark & \cmark &        &        &        &        &        &        &        &        &        &        &        &        &        &        &        &        &        &        &        &        &        &        &   &   2 \\\cline{1-27}\cline{29-29}
 34 &   5665 & table      & \cmark & \cmark & \cmark &        &        &        &        &        & \cmark & \cmark & \cmark & \cmark & \cmark & \cmark & \cmark & \cmark & \cmark & \cmark & \cmark &        &        &        &        &        &   &  14 \\\cline{1-27}\cline{29-29}
 35 &   6062 & floor      &        &        &        &        &        &        &        &        & \cmark &        & \cmark & \cmark & \cmark & \cmark & \cmark &        &        &        &        &        &        &        &        &        &   &   6 \\\cline{1-27}\cline{29-29}
 36 &  19557 & apartment  & \cmark & \cmark & \cmark & \cmark & \cmark & \cmark & \cmark & \cmark &        &        &        &        &        &        &        &        &        &        &        &        &        &        &        &        &   &   8 \\\cline{1-27}\cline{29-29}
\multicolumn{29}{c}{} \\\cline{4-27}
\multicolumn{3}{r|}{clique size} & 12 & 11 & 11 & 11 & 11 & 11 & 11 & 11 & 11 & 11 & 11 & 11 & 11 & 11 & 11 & 11 & 11 & 11 & 11 & 11 & 11 & 11 & 11 & 11 \\\cline{4-27}
\multicolumn{3}{r|}{frequency range}   & \multicolumn{22}{c||}{5--10} & \multicolumn{2}{c||}{4--10} \\\cline{4-27}
\end{tabular}
}
\end{center}
\end{table}

\begin{figure}[ht]
\begin{center}
\includegraphics[width=0.9\columnwidth]{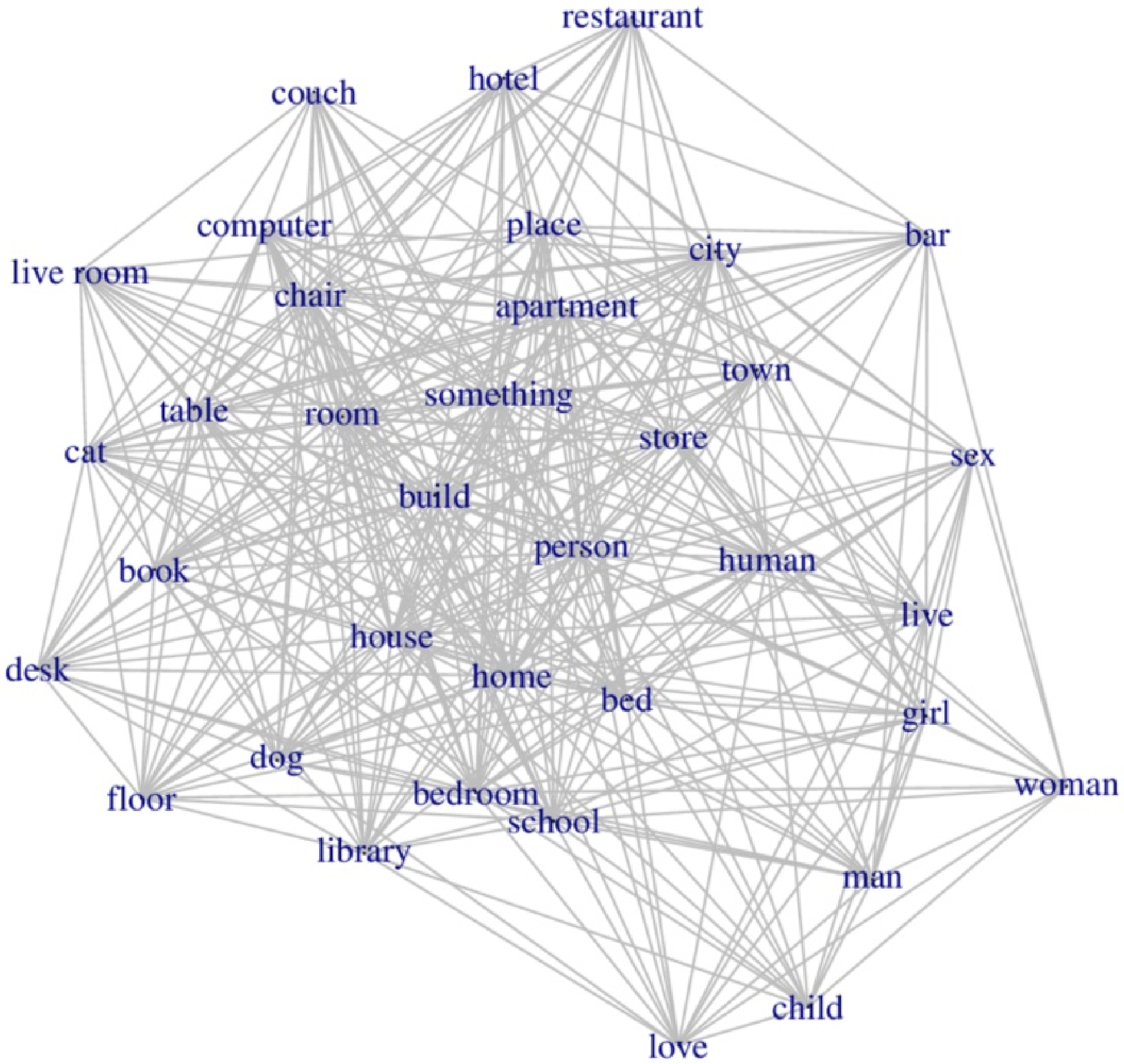}
\end{center}
\caption{The subgraph induced by the concepts that appear in 
Table \ref{tbl:cliques:positive:moderate-frequency}.}\label{fig:cliques:positive:maxCliques}
\end{figure}

\suppressfloats[b]

\section{Maximal Cliques: \texttt{ConceptuallyRelatedTo} Relation}
We restrict our focus on subgraphs composed of edges with positive score only.
In the entire graph, edges representing the relation \dbtext{ConceptuallyRelatedTo} have positive polarity only.
The number of multi-edges is $23010$ and this is the same as the number of
directed edges in the induced directed graph. The number of undirected edges is $21848$.
If we neglect the self-loops the number of multi-edges is $22989$ and this is the same as the number
of directed edges in the induced directed graph. The number of undirected edges is $21827$.

There are $2,199$ maximal cliques of size $3$, $364$ maximal cliques of size $4$, $61$ maximal cliques
of size $5$, and $3$ maximal (maximum) cliques of size $6$. The $3$ maximum cliques (that is of size $6$) are
among the concepts:
\begin{enumerate}[noitemsep,leftmargin=8mm,topsep=0.5mm]
 \item \dbtext{circle} (14472), \dbtext{round} (7057), \dbtext{ball} (263), \dbtext{sphere} (5508), \dbtext{eye} (1160), and \dbtext{head} (10228).
 \item \dbtext{circle} (14472), \dbtext{round} (7057), \dbtext{ball} (263), \dbtext{sphere} (5508), \dbtext{eye} (1160), and \dbtext{egg} (9339).
 \item \dbtext{person} (9), \dbtext{man} (7), \dbtext{female} (15676), \dbtext{girl} (876), \dbtext{woman} (895), and \dbtext{doll} (1931).
\end{enumerate}

\noindent Cliques of size $5$ which are related with the above are the following.
\begin{enumerate}[noitemsep,leftmargin=8mm,topsep=0.5mm]
 \item \dbtext{circle} (14472), \dbtext{round} (7057), \dbtext{ball} (263), \dbtext{sphere} (5508), and \dbtext{drop} (1846).
 \item \dbtext{person} (9), \dbtext{sister} (3656), \dbtext{mother} (301), \dbtext{girl} (876), \dbtext{female} (15676)
 \item \dbtext{person} (9), \dbtext{family} (915), \dbtext{mother} (301), \dbtext{dad} (9672), \dbtext{father} (13663)
 \item \dbtext{person} (9), \dbtext{mother} (301), \dbtext{girl} (876), \dbtext{woman} (895), \dbtext{female} (15676)
 \item \dbtext{person} (9), \dbtext{man} (7), \dbtext{father} (13663), \dbtext{male} (6169), \dbtext{dad} (9672)
 \item \dbtext{person} (9), \dbtext{man} (7), \dbtext{statue} (6436), \dbtext{woman} (895), \dbtext{doll} (1931)
 \item \dbtext{person} (9), \dbtext{man} (7), \dbtext{male} (6169), \dbtext{brother} (2383), \dbtext{boy} (5976)
\end{enumerate}

\medskip

The rest of the cliques of size $5$ are given in raw format below.
{\footnotesize
\begin{verbatim}
fluffy white (339045), cloud (446), sheep (6424), wool (6425), cotton (9729)
ground (184976), land (8060), soil (13912), earth (1633), dirt (15359)
cost (81860), price (14042), buy (475), purchase (18262), payment (25417)
cost (81860), price (14042), buy (475), purchase (18262), pay (1473)
cost (81860), price (14042), buy (475), money (1240), payment (25417)
cost (81860), bill (1245), buy (475), money (1240), payment (25417)
sibling (53730), family (915), brother (2383), sister (3656), daughter (13446)
rectangle (41018), square (4138), paper (149), book (2033), card (13442)
rectangle (41018), square (4138), paper (149), book (2033), page (6264)
fog (32237), smoke (188), mist (16981), cloud (446), steam (17055)
white fluffy (23851), cotton (9729), cloud (446), sheep (6424), wool (6425)
silk (22088), wool (6425), cotton (9729), material (591), fabric (1913)
silk (22088), wool (6425), cotton (9729), material (591), cloth (1903)
fluffy (22025), cotton (9729), wool (6425), cloud (446), sheep (6424)
print (21683), write (1893), paper (149), book (2033), text (4472)
stage (21403), play (372), theatre (4095), act (7272), scene (15813)
injury (18717), hurt (686), pain (1813), wind (2284), cut (6250)
sight (18526), vision (7204), see (1161), look (8821), view (5574)
sight (18526), vision (7204), see (1161), look (8821), eye (1160)
purchase (18262), buy (475), sell (649), sale (13614), price (14042)
purchase (18262), buy (475), sell (649), sale (13614), trade (10511)
steam (17055), smoke (188), cloud (446), white (2612), mist (16981)
grey (15391), bullet (13342), steel (3907), metal (6491), silver (13722)
son (15379), parent (696), mother (301), daughter (13446), father (13663)
son (15379), parent (696), mother (301), daughter (13446), child (178)
son (15379), parent (696), mother (301), dad (9672), father (13663)
silver (13722), metal (6491), tin (8891), steel (3907), iron (2587)
silver (13722), metal (6491), tin (8891), steel (3907), shiny (1382)
daughter (13446), mother (301), female (15676), girl (876), sister (3656)
daughter (13446), mother (301), father (13663), parent (696), family (915)
daughter (13446), mother (301), child (178), parent (696), family (915)
sign (10388), car (529), street (350), drive (1545), road (2368)
wash (10170), bath (70), shower (173), water (1016), soap (3536)
cotton (9729), wool (6425), sheep (6424), cloud (446), white (2612)
hear (9269), music (542), sound (2660), ear (8314), noise (5363)
hear (9269), music (542), sound (2660), ear (8314), listen (75)
globe (9265), sphere (5508), round (7057), ball (263), eye (1160)
ear (8314), head (10228), eye (1160), nose (1171), face (8835)
ship (8013), boat (2389), sail (385), sea (1347), captain (23817)
draw (7764), art (7424), paint (1338), picture (2360), color (2611)
page (6264), paper (149), book (2033), read (1456), write (1893)
door (6022), window (1577), room (2480), house (652), wall (4030)
furniture (5668), wood (370), chair (596), desk (1043), table (5665)
river (4784), water (1016), sea (1347), ocean (1349), blue (4808)
river (4784), water (1016), sea (1347), ocean (1349), lake (660)
text (4472), paper (149), read (1456), write (1893), book (2033)
boat (2389), sea (1347), ocean (1349), fish (655), water (1016)
hand (2300), body (1861), arm (79), leg (1252), foot (1485)
thunder (2274), sky (1354), cloud (446), weather (1248), rain (1856)
ocean (1349), water (1016), sea (1347), fish (655), lake (660)
lady (1281), woman (895), mother (301), girl (876), female (15676)
lady (1281), woman (895), man (7), girl (876), female (15676)
education (1122), school (73), class (93), learn (401), student (886)
parent (696), family (915), dad (9672), mother (301), father (13663)
\end{verbatim}
}

\section{Maximal Cliques: \texttt{IsA} Relation}
We restrict our focus on subgraphs composed of edges with positive score only.
We distinguish two cases; negative and positive polarity.

\paragraph{Negative Polarity.}
The induced directed multigraph and directed graph is composed of $3874$ edges,
while the number of undirected edges is $3498$. Note that self-loops are not taken into account
since these do not affect the number of cliques.
There are $263$ cliques with negative polarity. Out of those, $242$ are of size $3$,
while $21$ are of size $4$. In particular, the maximal cliques of size $4$ are given below.
{\footnotesize
\begin{verbatim}
scarce (196339), lot (4905), many (6989), much (7917)
second (130981), year (2709), hour (2762), minute (2764)
average (76629), bad (2226), good (2666), best (20709)
where (54686), who (23034), why (5469), when (38265)
middle (52077), start (44963), begin (3695), end (10507)
middle (52077), side (17836), top (7514), bottom (5887)
middle (52077), side (17836), front (2423), back (15583)
far (37745), near (25285), here (6352), away (29340)
sight (18526), smell (1172), taste (14093), touch (5106)
sight (18526), smell (1172), taste (14093), sound (2660)
even (15946), night (8677), morning (15749), afternoon (15914)
even (15946), night (8677), morning (15749), day (2759)
taste (14093), smell (1172), hear (9269), touch (5106)
fall (9975), winter (1431), summer (1437), spring (5537)
hear (9269), see (1161), smell (1172), touch (5106)
little (8268), lot (4905), many (6989), much (7917)
few (8145), lot (4905), many (6989), much (7917)
blue (4808), red (2614), yellow (2616), green (2637)
woman (895), man (7), girl (876), boy (5976)
person (9), man (7), boy (5976), female (15676)
person (9), man (7), boy (5976), girl (876)
\end{verbatim}
}

\paragraph{Positive Polarity.}
The induced directed multigraph is composed of $90779$ edges, 
the induced directed graph is composed of $90732$ edges,
while the number of undirected edges is $88654$. 
Note that self-loops are not taken into account
since these do not affect the number of cliques.
There are $10132$ maximal cliques with positive polarity. Out of those, $7698$ are of size $3$,
$2033$ are of size $4$, $359$ are of size $5$, $41$ are of size $6$, and $1$ clique is of size 7.
The maximum clique (that is of size $7$) is
\dbtext{relation} (201900), \dbtext{person} (9), \dbtext{relative} (8531), \dbtext{family} (915), 
\dbtext{brother} (2383), \dbtext{sister} (3656), \dbtext{daughter} (13446).
The maximal cliques of size $6$ related to the above maximum clique are given below.
\begin{enumerate}[noitemsep,leftmargin=8mm,topsep=0.5mm]
 \item \dbtext{relation} (201900), \dbtext{person} (9), \dbtext{relative} (8531), \dbtext{family} (915), \dbtext{father} (13663), \dbtext{dad} (9672)
 \item \dbtext{relation} (201900), \dbtext{person} (9), \dbtext{relative} (8531), \dbtext{family} (915), \dbtext{father} (13663), \dbtext{mother} (301)
\end{enumerate}

\medskip

The rest of the maximal cliques of size $6$ with positive polarity are listed below.
{\footnotesize
\begin{verbatim}
abide (222135), house (652), home (1045), nest (1332), dwell (45162), live place (169946)
ground (184976), land (8060), place (4570), farm (2562), garden (1660), field (8720)
chief (180633), person (9), leader (3561), ruler (4313), king (1443), president (7061)
occasion (126340), birthday (6705), christmas (4290), party (307), event (8862), celebration (29221)
cost (81860), fee (36815), bill (1245), charge (4811), tax (9547), payment (25417)
cost (81860), price (14042), payment (25417), bill (1245), charge (4811), expense (25151)
cost (81860), tax (9547), bill (1245), payment (25417), charge (4811), expense (25151)
twist (71331), shake (5439), dance (1667), move (8737), verb (1490), action (9908)
twist (71331), turn (583), roll (6734), move (8737), verb (1490), action (9908)
twist (71331), turn (583), dance (1667), move (8737), verb (1490), action (9908)
structure (54435), build (1104), house (652), nest (1332), home (1045), dwell (45162)
structure (54435), build (1104), house (652), castle (996), home (1045), dwell (45162)
dwell (45162), house (652), home (1045), build (1104), mansion (25687), castle (996)
possession (40599), own (19972), mine (1210), belong (4322), owner (20525), property (21705)
youth (39134), child (178), person (9), girl (876), boy (5976), young person (14434)
fog (32237), mist (16981), smoke (188), cloud (446), air (6408), steam (17055)
poison (24474), water (1016), drink (120), wine (7522), beverage (10164), liquid (1551)
poison (24474), water (1016), drink (120), wine (7522), beverage (10164), food (1359)
property (21705), noun (7478), place (4570), farm (2562), house (652), home (1045)
primate (20931), animal (902), mammal (4850), human (80), man (7), person (9)
gender (19976), person (9), female (15676), girl (876), woman (895), daughter (13446)
rat (19911), animal (902), mammal (4850), rodent (6841), mouse (1284), hamster (15121)
rat (19911), animal (902), mammal (4850), rodent (6841), mouse (1284), squirrel (6609)
female (15676), woman (895), person (9), sister (3656), girl (876), daughter (13446)
female (15676), woman (895), person (9), human (80), girl (876), chick (14872)
female (15676), woman (895), person (9), human (80), girl (876), daughter (13446)
female (15676), woman (895), person (9), human (80), girl (876), lady (1281)
female (15676), woman (895), person (9), human (80), girl (876), mother (301)
son (15379), person (9), brother (2383), family (915), relative (8531), daughter (13446)
son (15379), person (9), child (178), family (915), daughter (13446), relative (8531)
son (15379), person (9), child (178), family (915), daughter (13446), kid (5854)
young person (14434), person (9), child (178), kid (5854), girl (876), boy (5976)
action (9908), activity (6207), exercise (61), move (8737), walk (97), run (1102)
action (9908), verb (1490), move (8737), pass (9934), drive (1545), go (2801)
cotton (9729), fabric (1913), wool (6425), material (591), cloth (1903), textile (8844)
cotton (9729), fabric (1913), wool (6425), material (591), cloth (1903), linen (1137)
voice (8828), sound (2660), call (1061), talk (394), speak (1305), communication (6769)
field (8720), place (4570), farm (2562), garden (1660), area (1915), land (8060)
this (4539), it (137), live room (5581), house (652), home (1045), place (4570)
\end{verbatim}
}

\section{Maximal Cliques: \texttt{UsedFor} Relation}
We restrict our focus on subgraphs composed of edges with positive score only.
We distinguish two cases; negative and positive polarity.

\paragraph{Negative Polarity.}
The induced directed multigraph, directed graph, and undirected graph is composed of $193$ edges in each case.
Again, self-loops are not taken into account
since these do not affect the number of cliques.
There is only maximal clique in this case and it has size $3$.
It is composed of the concepts \dbtext{gerbil} (14223), \dbtext{exercise} (61), \dbtext{drive car} (1005).

\paragraph{Positive Polarity.}
The induced directed multigraph
as well as the induced directed graph is composed of $50228$ edges,
while the number of undirected edges is $50016$. 
Again, self-loops are not taken into account
since these do not affect the number of cliques.
There are $4427$ maximal cliques with positive polarity. Out of those, $3667$ are of size $3$,
$686$ are of size $4$, $73$ are of size $5$, and $1$ is of size $6$.
The maximum clique (that is of size $6$) is
\dbtext{get drunk} (310177), \dbtext{fun} (134), \dbtext{party} (307), 
\dbtext{drink alcohol} (1386), \dbtext{buy beer} (5734), and \dbtext{celebrate} (13996).
The maximal cliques of size $5$ related to the above maximum clique are given below.
\begin{enumerate}[noitemsep,leftmargin=8mm,topsep=0.5mm]
 \item \dbtext{get drunk} (310177), \dbtext{fun} (134), \dbtext{go party} (24657), \dbtext{buy beer} (5734), \dbtext{celebrate} (13996)
 \item \dbtext{get drunk} (310177), \dbtext{fun} (134), \dbtext{party} (307), \dbtext{socialis} (29314), \dbtext{nightclub} (10669)
\end{enumerate}

\medskip

The rest of the maximal cliques of size $5$ with positive polarity are listed below.
{\footnotesize
\begin{verbatim}
get clean (315026), bath (70), take bath (1316), soap (3536), hygiene (13991)
get shape (312438), improve health (10456), go jog (260), jog (6511), lose weight (10298)
get shape (312438), lose weight (10298), jog (6511), health (9745), go run (423)
get shape (312438), lose weight (10298), jog (6511), health (9745), go jog (260)
get physical activity (312389), get exercise (311524), fun (134), enjoyment (643), play lacrosse (28752)
meet person (119411), socialize (27285), party (307), dance (1667), dance club (21969)
meet person (119411), network (6746), socialis (29314), make friend (71547), hang out bar (427)
meet person (119411), network (6746), socialis (29314), make friend (71547), party (307)
meet person (119411), fun (134), socialis (29314), hang out bar (427), make friend (71547)
meet person (119411), fun (134), socialis (29314), party (307), make friend (71547)
meet person (119411), fun (134), socialis (29314), party (307), nightclub (10669)
meet person (119411), fun (134), dance club (21969), party (307), dance (1667)
meet person (119411), fun (134), dance (1667), party (307), nightclub (10669)
remove dirt (42600), soap (3536), clean (344), wash (10170), bathe (26690)
remove dirt (42600), soap (3536), clean (344), wash (10170), bath (70)
see band (25769), dance (1667), fun (134), enjoyment (643), listen music (642)
see band (25769), dance (1667), fun (134), enjoyment (643), music (542)
see band (25769), dance (1667), fun (134), enjoyment (643), hear music (45)
go party (24657), fun (134), celebrate (13996), buy beer (5734), good time (7209)
dance club (21969), dance (1667), listen music (642), fun (134), party (307)
give information (18633), talk (394), make phone call (402), call (1061), telephone (1790)
classroom (18421), learn (401), class (93), student (886), teach (1052)
purchase (18262), buy (475), store (1414), sale (13614), price (14042)
celebrate (13996), pub (13545), party (307), drink alcohol (1386), buy beer (5734)
celebrate (13996), fun (134), good time (7209), party (307), buy beer (5734)
nightclub (10669), fun (134), party (307), listen music (642), dance (1667)
watch television (10343), relax (4187), rest (310), sit down (3442), beanbag chair (9797)
watch television (10343), relax (4187), rest (310), sleep (425), beanbag chair (9797)
watch television (10343), relax (4187), rest (310), sleep (425), sofa (6231)
exchange information (10018), conversation (390), talk (394), telephone (1790), communicate (9747)
business (9787), telephone (1790), conversation (390), talk (394), make phone call (402)
communicate (9747), talk (394), make phone call (402), telephone (1790), call (1061)
communicate (9747), talk (394), make phone call (402), telephone (1790), conversation (390)
egg (9339), chicken (191), eat (432), cook (946), food (1359)
find information (8931), learn (401), research (1978), surf web (203), computer (467)
move (8737), transportation (2364), drive (1545), car (529), highway (2851)
move (8737), travel (1143), highway (2851), car (529), drive (1545)
see movie (7891), go film (305), fun (134), entertain (100), enjoyment (643)
enjoy yourself (7798), fun (134), sex (2825), procreate (4344), copulate (5623)
toy (7701), play (372), entertainment (607), ball (263), game (732)
toy (7701), play (372), fun (134), ball (263), game (732)
communication (6769), talk (394), make phone call (402), telephone (1790), call (1061)
communication (6769), talk (394), make phone call (402), telephone (1790), conversation (390)
play game (6081), entertainment (607), pass time (5077), surf web (203), computer (467)
play game (6081), fun (134), surf web (203), computer (467), pass time (5077)
play game (6081), fun (134), surf web (203), computer (467), learn (401)
song (6068), listen music (642), sing (5711), fun (134), enjoyment (643)
song (6068), music (542), sing (5711), fun (134), enjoyment (643)
song (6068), entertain (100), fun (134), enjoyment (643), sing (5711)
sing (5711), music (542), fun (134), enjoyment (643), guitar (989)
sing (5711), music (542), fun (134), enjoyment (643), tell story (199)
sing (5711), entertain (100), fun (134), tell story (199), enjoyment (643)
copulate (5623), fun (134), sex (2825), pleasure (4231), procreate (4344)
relax (4187), enjoyment (643), dance (1667), party (307), listen music (642)
literature (3901), learn (401), study (122), read (1456), research (1978)
sport (2130), fun (134), ball (263), play (372), game (732)
book (2033), show (2243), entertain (100), fun (134), enjoyment (643)
book (2033), read (1456), learn (401), study (122), text (4472)
book (2033), read (1456), learn (401), study (122), research (1978)
doll (1931), fun (134), child (178), play (372), learn (401)
dance (1667), fun (134), enjoyment (643), party (307), listen music (642)
dance (1667), fun (134), enjoyment (643), party (307), entertain (100)
go sleep (1207), bed (156), dream (172), go bed (406), relaxation (4254)
go movie (920), fun (134), enjoyment (643), entertain (100), watch movie (265)
student (886), learn (401), school (73), class (93), teach (1052)
computer (467), research (1978), learn (401), study (122), read (1456)
sleep (425), bed (156), dream (172), rest (310), relaxation (4254)
go bed (406), bed (156), relaxation (4254), dream (172), rest (310)
go film (305), fun (134), watch movie (265), entertain (100), enjoyment (643)
tell story (199), fun (134), enjoyment (643), entertain (100), show (2243)
library (68), read (1456), learn (401), study (122), research (1978)
\end{verbatim}
}

\section{Maximal Cliques: \texttt{LocatedNear} Relation}
We restrict our focus on subgraphs composed of edges with positive score only.
There are only edges with positive polarity.
The induced directed multigraph
as well as the induced directed graph is composed of $5043$ edges,
while the number of undirected edges is $4846$. 
Again, self-loops are not taken into account
since these do not affect the number of cliques.
There are $385$ maximal cliques (with positive polarity). Out of those, $358$ are of size $3$,
$23$ are of size $4$, and $4$ are of size $5$.
The maximum cliques are:
\begin{enumerate}[noitemsep,leftmargin=8mm,topsep=0.5mm]
 \item \dbtext{shore} (20212), \dbtext{sea} (1347), \dbtext{ocean} (1349), \dbtext{coast} (6350), \dbtext{wave} (8813)
 \item \dbtext{wave} (8813), \dbtext{beach} (24), \dbtext{sea} (1347), \dbtext{ocean} (1349), \dbtext{coast} (6350)
 \item \dbtext{water} (1016), \dbtext{sea} (1347), \dbtext{ocean} (1349), \dbtext{beach} (24), \dbtext{coast} (6350)
 \item \dbtext{water} (1016), \dbtext{sea} (1347), \dbtext{ocean} (1349), \dbtext{beach} (24), \dbtext{sand} (5768)
\end{enumerate}

\medskip

The $23$ maximal cliques of size $4$ with positive polarity are listed below.
{\footnotesize
\begin{verbatim}
ground (184976), floor (6062), foot (1485), bottom (5887)
ground (184976), plant (716), seed (9375), dirt (15359)
crop (33366), farmer (908), farm (2562), field (8720)
stick (31425), tree (33), wood (370), forest (1747)
cheek (15623), nose (1171), eye (1160), face (8835)
soil (13912), plant (716), garden (1660), seed (9375)
head (10228), ear (8314), eye (1160), face (8835)
bear (10208), forest (1747), tree (33), wood (370)
test (9242), school (73), student (886), teacher (3556)
face (8835), eye (1160), nose (1171), ear (8314)
field (8720), farm (2562), horse (1412), barn (4112)
squirrel (6609), tree (33), wood (370), forest (1747)
air (6408), cloud (446), bird (962), sky (1354)
door (6022), house (652), window (1577), room (2480)
table (5665), plate (1604), dinner (1605), napkin (1698)
soap (3536), bath (70), tub (1006), wash (10170)
soap (3536), bath (70), tub (1006), shower (173)
crab (1334), beach (24), sea (1347), ocean (1349)
water (1016), pier (25602), beach (24), ocean (1349)
water (1016), sea (1347), ocean (1349), mist (16981)
water (1016), sea (1347), ocean (1349), blue (4808)
water (1016), sea (1347), ocean (1349), boat (2389)
water (1016), sea (1347), ocean (1349), fish (655)
\end{verbatim}
}

\section{Maximal Cliques: \texttt{SimilarSize} Relation}
We restrict our focus on subgraphs composed of edges with positive score only.
There are only edges with positive polarity.
The induced directed multigraph
as well as the induced directed graph is composed of $1509$ edges,
while the number of undirected edges is $1459$. 
Again, self-loops are not taken into account
since these do not affect the number of cliques.
There are $55$ maximal cliques (with positive polarity) all of which are of size $3$ and are listed below.
{\footnotesize
\begin{verbatim}
pin head (333090), flea (25677), louse (40958)
footstep (332497), foot (1485), shoe (2790)
person person (311652), human (80), body (1861)
two person (151400), bed (156), mattress (35203)
fist (99732), hand (2300), apple (4596)
handkerchief (63112), napkin (1698), small towel (28420)
cathedral (58941), temple (15854), big build (32344)
dime (52812), penny (1071), cent (14994)
dime (52812), penny (1071), coin (3573)
crumb (47406), salt (1817), flea (25677)
infinity (40966), everything (1262), space (4435)
branch (37065), twig (14431), stick (31425)
tiny (35891), ant (14190), flea (25677)
tiny (35891), ant (14190), little (8268)
golf ball (35075), eye (1160), egg (9339)
singular (33174), eye (1160), egg (9339)
flea (25677), ant (14190), bug (5563)
flea (25677), sand (5768), grain (14893)
flea (25677), sand (5768), dust (5736)
flea (25677), salt (1817), grain (14893)
rat (19911), squirrel (6609), rodent (6841)
thumb (15862), finger (3399), bullet (13342)
quarter (15172), eye (1160), ring (7720)
olive (15042), eye (1160), grape (1366)
crown (15023), hat (629), head (10228)
skunk (14906), cat (616), squirrel (6609)
grain (14893), seed (9375), sand (5768)
grain (14893), seed (9375), rice (1510)
card (13442), paper (149), envelope (5487)
card (13442), paper (149), book (2033)
head (10228), plate (1604), face (8835)
sock (10193), foot (1485), shoe (2790)
cucumber (9642), corn (3531), banana (6422)
two (9549), one (581), eye (1160)
seed (9375), pebble (6018), pill (569)
egg (9339), ball (263), eye (1160)
face (8835), plate (1604), hand (2300)
atmosphere (8084), sky (1354), air (6408)
rabbit (7815), squirrel (6609), rodent (6841)
rabbit (7815), squirrel (6609), cat (616)
wolf (6387), dog (537), fox (1746)
record (6029), plate (1604), frisbee (2597)
envelope (5487), paper (149), letter (960)
cup (4116), drink (120), glass (1776)
bowl (3463), nest (1332), plate (1604)
hand (2300), nest (1332), plate (1604)
ocean (1349), water (1016), sea (1347)
person (9), slave (27415), servant (13683)
person (9), grave (14465), coffin (14874)
person (9), grave (14465), body (1861)
person (9), sister (3656), brother (2383)
person (9), clothe (2415), body (1861)
person (9), human (80), dummy (72290)
person (9), human (80), coffin (14874)
person (9), human (80), body (1861)
\end{verbatim}
}

\section{Maximal Cliques: \texttt{ReceivesAction} Relation}
We restrict our focus on subgraphs composed of edges with positive score only.
There are only edges with positive polarity.
The induced directed multigraph,
the induced directed graph,
as well as the induced undirected graph,
are composed of $10845$ edges.
Again, self-loops are not taken into account
since these do not affect the number of cliques.
There are $23$ maximal cliques (with positive polarity)
all of which are of size $3$.
These maximum cliques are listed below.
{\footnotesize
\begin{verbatim}
emac (405208), close (2222), open (6539)
spoken (312030), english (7102), language (9326)
eaten (310995), bagel (6956), toast (31357)
project onto screen (153271), movie (213), film (544)
fist (99732), close (2222), open (6539)
build brick (80792), house (652), build (1104)
find mall (75852), store (1414), clothe (2415)
find mailbox (69073), letter (960), mail (4691)
pothole (33862), repair (2289), broken (311852)
find house (33328), carpet (3450), floor (6062)
catch hook (27363), catfish (654), fish (655)
puncture (22394), repair (2289), broken (311852)
tune (19043), play (372), instrument (2086)
keep pet (18614), cat (616), animal (902)
feed (12213), cat (616), animal (902)
soda (9270), store (1414), open (6539)
open (6539), close (2222), door (6022)
open (6539), close (2222), window (1577)
open (6539), close (2222), door lock (1250)
open (6539), book (2033), store (1414)
person (9), worker (14094), fire (2895)
person (9), daughter (13446), born (3501)
person (9), kill (1466), murder (2663)
\end{verbatim}
}

\part{Communities}

\chapter{Non-Overlapping Communities}
In this chapter we will go through results based on non-overlapping community finding algorithms.
In every case we apply the community-finding algorithms to graphs (no multiple edges) 
that are undirected and without any self-loops.
So far all the algorithms used have been implemented in \igraph \citep[version $0.6.1$]{igraph}.

We will use $\uppi$ to indicate the coreness of the vertices. Modularity will be denoted by $\upmu$.
The number of communities found by an algorithm in a single run will be denoted by $\upkappa$.
Finally, we will denote with \abs{C} the number of connected components for every graph 
that we are going to examine. The graph will be clear from the context and it will be induced
by vertices that have coreness at least a minimum value.

\section{Negative Polarity}
First we will examine the graphs induced by assertions with negative polarity only.
Table \ref{tbl:communities:cores:negative:overall} gives an overview of the results achieved
by various community-finding algorithms that have been implemented in \igraph.

{
\setlength{\tabcolsep}{3pt}
\begin{table}[ht]
\caption{Overall comparison of the community finding algorithms implemented in \igraph. 
We use $\uppi$ to indicate coreness.
We can see the average number $\overline{\upkappa}$ of communities found per run,
as well as the average modularity $\overline{\upmu}$ 
achieved by each algorithm.
Bold entries in the columns with the modularities indicate the maximum value
achieved among all algorithms per row; that is per subgraph induced by vertices that have
coreness at least a certain lower bound value. The best values for the modularity
are achieved by the spinglass algorithm and the multilevel algorithm.
We do not see a result for spinglass in the last row (that is coreness $\geq 2$)
because the implementation of the algorithm expects a connected graph. Hence we need to run
the algorithm in each connected component. However, the difference in terms of actual computation time
is huge compared to the multilevel algorithm.}\label{tbl:communities:cores:negative:overall}
\begin{center}
\resizebox{\textwidth}{!}{
\begin{tabular}{|r|r|r|r||rl|rl|rl|rl|rl|rl|rl|rl|}\hline
\multicolumn{4}{|c||}{\multirow{2}{*}{subgraph properties}} & 
\multicolumn{2}{c|}{spin} & 
\multicolumn{2}{c|}{eigen-} &
\multicolumn{2}{c|}{\multirow{2}{*}{walktrap}} & \multicolumn{2}{c|}{between-} & 
\multicolumn{2}{c|}{fast} &
\multicolumn{2}{c|}{multi-} & \multicolumn{2}{c|}{label} & 
\multicolumn{2}{c|}{\multirow{2}{*}{infoMAP}} \\
 \multicolumn{4}{|c||}{} & 
 \multicolumn{2}{c|}{glass} & 
 \multicolumn{2}{c|}{vectors} & 
 \multicolumn{2}{c|}{} & 
 \multicolumn{2}{c|}{ness} & 
 \multicolumn{2}{c|}{greedy} & 
 \multicolumn{2}{c|}{level} & 
 \multicolumn{2}{c|}{propagation} &
 \multicolumn{2}{c|}{}\\\hline
 $\uppi\geq$ & \multicolumn{1}{c|}{$\abs{V}$} & \multicolumn{1}{c|}{$\abs{E}$} & \multicolumn{1}{c||}{$\abs{C}$} & 
\multicolumn{1}{c}{$\overline{\upkappa}$} & \multicolumn{1}{c|}{$\overline{\upmu}$} & 
\multicolumn{1}{c}{$\overline{\upkappa}$} & \multicolumn{1}{c|}{$\overline{\upmu}$} & 
\multicolumn{1}{c}{$\overline{\upkappa}$} & \multicolumn{1}{c|}{$\overline{\upmu}$} & 
\multicolumn{1}{c}{$\overline{\upkappa}$} & \multicolumn{1}{c|}{$\overline{\upmu}$} & 
\multicolumn{1}{c}{$\overline{\upkappa}$} & \multicolumn{1}{c|}{$\overline{\upmu}$} & 
\multicolumn{1}{c}{$\overline{\upkappa}$} & \multicolumn{1}{c|}{$\overline{\upmu}$} & 
\multicolumn{1}{c}{$\overline{\upkappa}$} & \multicolumn{1}{c|}{$\overline{\upmu}$} &
\multicolumn{1}{c}{$\overline{\upkappa}$} & \multicolumn{1}{c|}{$\overline{\upmu}$} \\\hline\hline
$ 6$ &    68 &   308 &    1 &  5.8 & 0.283 &    4 & 0.247 &    5 & 0.225 & 22 & 0.163 &    4 & 0.260 &    5 & 0.265 &    1.1 & 0.010 &    1.0 & 0.000 \\\hline
$ 5$ &   175 &   819 &    1 &  7.9 & 0.346 &    7 & 0.255 &   12 & 0.269 & 61 & 0.183 &    6 & 0.321 &    7 & 0.324 &    1.0 & 0.002 &    1.0 & 0.000 \\\hline
$ 4$ &   347 &  1447 &    1 & 10.0 & 0.411 &    8 & 0.296 &   25 & 0.332 & 55 & 0.309 &    9 & 0.384 &    9 & 0.387 &    2.2 & 0.027 &   17.7 & 0.308 \\\hline
$ 3$ &   820 &  2710 &    1 & 20.0 & 0.507 &   17 & 0.352 &   38 & 0.422 & 62 & 0.437 &   13 & 0.467 &   12 & 0.493 &   10.6 & 0.074 &   72.3 & 0.447 \\\hline
$ 2$ &  1755 &  4411 &    3 &   -- &    -- &   24 & 0.414 &  136 & 0.472 & 91 & 0.502 &   31 & 0.542 &   24 & 0.550 &   20.4 & 0.076 &  178.0 & 0.487 \\\hline
$ 1$ & 11707 & 12989 & 1377 &   -- &    -- & 1434 & 0.669 & 1947 & 0.672 & -- &    -- & 1506 & 0.747 & 1435 & 0.758 & 1772.0 & 0.621 & 1911.7 & 0.700 \\\hline
\multicolumn{18}{l}{} \\\cline{5-20}
\multicolumn{4}{r|}{duration/run (secs)} & 
\multicolumn{2}{c|}{$89.0$} & 
\multicolumn{2}{c|}{$98.2$} & 
\multicolumn{2}{c|}{$13.6$} &
\multicolumn{2}{c|}{$952.0$} &
\multicolumn{2}{c|}{$37.0$} & 
\multicolumn{2}{c|}{$0.1$} & 
\multicolumn{2}{c|}{$19.2$} & 
\multicolumn{2}{c|}{$21.8$} \\\cline{5-20}
\end{tabular}
}
\end{center}
\end{table}
}

\subsection{Spin Glass}
The algorithm used is the one implemented in \igraph 
which is based on \citep{communities:spinglass}.
Table \ref{tbl:communities:negative:spinglass:cores} presents the results when we apply the algorithm for 
subgraphs in which we include vertices with successively lower coreness and allow negative polarity on the edges only.

\begin{table}[ht]
\caption{Applying the Spin Glass algorithm for community finding implemented in \igraph by successively including
vertices with lower coreness on the undirected graph induced by the assertions with negative polarity (self-loops are removed).
In every row we have the number of vertices and the number of edges of each such subgraph together with
the number of components ($\abs{C}$) that we find in that subgraph. 
The next three columns present the number of communities found by the algorithm;
the average among all runs, the minimum, and the maximum.
The next three columns present the modularity achieved by the algorithm due to the cut induced by the communities;
the average among all runs, the minimum, and the maximum.
The entire computation lasted about $890.25$ for $10$ runs;
that is about $89.03$ seconds per run.}\label{tbl:communities:negative:spinglass:cores}
\begin{center}
\begin{tabular}{|c|r|r|c||r|r|r||r|r|r||}\hline
\multirow{2}{*}{coreness} & \multicolumn{1}{c|}{\multirow{2}{*}{$\abs{V}$}} & \multicolumn{1}{c|}{\multirow{2}{*}{$\abs{E}$}} & \multirow{2}{*}{$\abs{C}$} & \multicolumn{3}{c||}{communities found} & \multicolumn{3}{c||}{modularity} \\\cline{5-10}
          &        &        &   & \multicolumn{1}{c|}{avg} & \multicolumn{1}{c|}{min} & \multicolumn{1}{c||}{max} & \multicolumn{1}{c|}{avg} & \multicolumn{1}{c|}{min} & \multicolumn{1}{c||}{max}  \\\hline\hline
$\geq  6$ &     68 &    308 &    1 &  5.800 &    5 &    7 & 0.283301 & 0.281013 & 0.285530 \\\hline
$\geq  5$ &    175 &    819 &    1 &  7.900 &    6 &    9 & 0.345868 & 0.343487 & 0.347246 \\\hline
$\geq  4$ &    347 &   1447 &    1 & 10.000 &    9 &   11 & 0.410580 & 0.406916 & 0.412461 \\\hline
$\geq  3$ &    820 &   2710 &    1 & 20.000 &   17 &   23 & 0.507353 & 0.505082 & 0.510296 \\\hline
$\geq  2$ &   1755 &   4411 &    3 &     -- &   -- &   -- &       -- &       -- &       -- \\\hline
$\geq  1$ &  11707 &  12989 & 1377 &     -- &   -- &   -- &       -- &       -- &       -- \\\hline
\end{tabular}
\end{center}
\end{table}

\subsubsection{Spin Glass: Communities in the Inner-Most Core}
An instance generating $6$ communities with modularity equal to $0.283142$ is shown below.
\paragraph{Community 1 (size is 5).}
\dbtext{exercise}, \dbtext{fun}, \dbtext{drive car}, \dbtext{gasoline}, \dbtext{fidelity}

\paragraph{Community 2 (size is 12).} 
\dbtext{library}, \dbtext{bath}, \dbtext{park}, \dbtext{eat}, \dbtext{office}, \dbtext{home}, \dbtext{eye}, \dbtext{time}, 
\dbtext{space}, \dbtext{program language}, \dbtext{cash register}, \dbtext{singular}

\paragraph{Community 3 (size is 12).} 
\dbtext{talk}, \dbtext{dog}, \dbtext{cat}, \dbtext{fish}, \dbtext{animal}, \dbtext{bird}, 
\dbtext{die}, \dbtext{mouse}, \dbtext{horse}, \dbtext{read}, \dbtext{ear}, \dbtext{fly}

\paragraph{Community 4 (size is 5).} 
\dbtext{tree}, \dbtext{walk}, \dbtext{plant}, \dbtext{god}, \dbtext{transportation device}

\paragraph{Community 5 (size is 18).} 
\dbtext{drink}, \dbtext{car}, \dbtext{music}, \dbtext{desk}, \dbtext{kitchen}, \dbtext{television}, \dbtext{food}, 
\dbtext{drive}, \dbtext{telephone}, \dbtext{audience}, \dbtext{boat}, \dbtext{cabinet}, \dbtext{table}, \dbtext{way}, 
\dbtext{competitive activity}, \dbtext{gerbil}, \dbtext{software}, \dbtext{speedo}

\paragraph{Community 6 (size is 16).} 
\dbtext{person}, \dbtext{human}, \dbtext{examination}, \dbtext{bed}, \dbtext{computer}, \dbtext{house}, 
\dbtext{money}, \dbtext{hot}, \dbtext{potato}, \dbtext{rain}, \dbtext{book}, \dbtext{fire}, \dbtext{long hair}, 
\dbtext{metal}, \dbtext{brain}, \dbtext{conscious}

\subsection{Eigenvectors}
The algorithm used is the one implemented in \igraph 
which is based on \citep{communities:eigenvectors}.
Table \ref{tbl:communities:negative:eigenvector:cores} presents the results when we apply the algorithm for 
subgraphs in which we include vertices with successively lower coreness and allow negative polarity on the edges only.
The computation lasted $196.3$ seconds for $2$ runs.

\begin{table}[ht]
\caption{Applying the leading eigenvector algorithm for community finding implemented in \igraph by successively including
vertices with lower coreness on the undirected graph induced by the assertions with negative polarity (self-loops are removed).
In every row we have the number of vertices and the number of edges of each such subgraph together with
the number of components ($\abs{C}$) that we find in that subgraph. 
The next three columns present the number of communities found by the algorithm;
the average among all runs, the minimum, and the maximum.
The next three columns present the modularity achieved by the algorithm due to the cut induced by the communities;
the average among all runs, the minimum, and the maximum.
The entire computation lasted $196.3$ seconds for $2$ runs;
that is about $98.15$ seconds per run.}\label{tbl:communities:negative:eigenvector:cores}
\begin{center}
\begin{tabular}{|c|r|r|c||r|r|r||r|r|r||}\hline
\multirow{2}{*}{coreness} & \multicolumn{1}{c|}{\multirow{2}{*}{$\abs{V}$}} & \multicolumn{1}{c|}{\multirow{2}{*}{$\abs{E}$}} & \multirow{2}{*}{$\abs{C}$} & \multicolumn{3}{c||}{communities found} & \multicolumn{3}{c||}{modularity} \\\cline{5-10}
          &        &        &   & \multicolumn{1}{c|}{avg} & \multicolumn{1}{c|}{min} & \multicolumn{1}{c||}{max} & \multicolumn{1}{c|}{avg} & \multicolumn{1}{c|}{min} & \multicolumn{1}{c||}{max} \\\hline\hline
$\geq  6$ &     68 &    308 &    1 &    4.000 &    4 &    4 & 0.246674 & 0.246674 & 0.246674 \\\hline
$\geq  5$ &    175 &    819 &    1 &    7.000 &    7 &    7 & 0.254509 & 0.254509 & 0.254509 \\\hline
$\geq  4$ &    347 &   1447 &    1 &    8.000 &    8 &    8 & 0.295650 & 0.295650 & 0.295650 \\\hline
$\geq  3$ &    820 &   2710 &    1 &   17.000 &   17 &   17 & 0.352476 & 0.352476 & 0.352476 \\\hline
$\geq  2$ &   1755 &   4411 &    3 &   24.000 &   24 &   24 & 0.414328 & 0.414328 & 0.414328 \\\hline
$\geq  1$ &  11707 &  12989 & 1377 & 1434.000 & 1434 & 1434 & 0.668898 & 0.668898 & 0.668898 \\\hline
\end{tabular}
\end{center}
\end{table}

\subsubsection{Eigenvectors: Communities in the Inner-Most Core}
We have the following communities.
\paragraph{Community 1 (size is 18).}
\dbtext{person}, \dbtext{bath}, \dbtext{examination}, \dbtext{fun}, \dbtext{computer}, \dbtext{dog}, 
\dbtext{house}, \dbtext{home}, \dbtext{eye}, \dbtext{hot}, \dbtext{rain}, \dbtext{book}, \dbtext{time}, 
\dbtext{space}, \dbtext{long hair}, \dbtext{program language}, \dbtext{brain}, \dbtext{conscious}

\paragraph{Community 2 (size is 17).}
\dbtext{library}, \dbtext{park}, \dbtext{talk}, \dbtext{eat}, \dbtext{cat}, \dbtext{fish}, \dbtext{bird}, 
\dbtext{drive car}, \dbtext{office}, \dbtext{mouse}, \dbtext{drive}, \dbtext{competitive activity}, \dbtext{ear}, 
\dbtext{fly}, \dbtext{gerbil}, \dbtext{cash register}, \dbtext{singular}

\paragraph{Community 3 (size is 15).} 
\dbtext{tree}, \dbtext{exercise}, \dbtext{human}, \dbtext{walk}, \dbtext{bed}, \dbtext{plant}, \dbtext{animal}, 
\dbtext{die}, \dbtext{money}, \dbtext{horse}, \dbtext{fire}, \dbtext{god}, \dbtext{metal}, \dbtext{gasoline}, 
\dbtext{transportation device}

\paragraph{Community 4 (size is 18).} 
\dbtext{drink}, \dbtext{car}, \dbtext{music}, \dbtext{desk}, \dbtext{kitchen}, \dbtext{television}, \dbtext{food}, 
\dbtext{read}, \dbtext{potato}, \dbtext{telephone}, \dbtext{audience}, \dbtext{boat}, \dbtext{cabinet}, \dbtext{table}, 
\dbtext{way}, \dbtext{software}, \dbtext{speedo}, \dbtext{fidelity}

\subsection{Walktrap}
The algorithm used is the one implemented in \igraph 
which is based on \citep{communities:walktrap}.
Table \ref{tbl:communities:negative:walktrap:cores} presents the results when we apply the algorithm for 
subgraphs in which we include vertices with successively lower coreness and allow negative polarity on the edges only.
We use random walks of length $5$ throughout all the runs.

\begin{table}[ht]
\caption{Applying the Walktrap algorithm for community finding implemented in \igraph by successively including
vertices with lower coreness on the undirected graph induced by the assertions with negative polarity (self-loops are removed).
We use $5$ steps for every random walk generated throughout all our runs.
In every row we have the number of vertices and the number of edges of each such subgraph together with
the number of components ($\abs{C}$) that we find in that subgraph. 
The next three columns present the number of communities found by the algorithm;
the average among all runs, the minimum, and the maximum.
The next three columns present the modularity achieved by the algorithm due to the cut induced by the communities;
the average among all runs, the minimum, and the maximum.
The entire computation lasted $27.19$ seconds for 2 runs.}\label{tbl:communities:negative:walktrap:cores}
\begin{center}
\begin{tabular}{|c|r|r|c||r|r|r||r|r|r||}\hline
\multirow{2}{*}{coreness} & \multicolumn{1}{c|}{\multirow{2}{*}{$\abs{V}$}} & \multicolumn{1}{c|}{\multirow{2}{*}{$\abs{E}$}} & \multirow{2}{*}{$\abs{C}$} & \multicolumn{3}{c||}{communities found} & \multicolumn{3}{c||}{modularity} \\\cline{5-10}
          &        &        &   & \multicolumn{1}{c|}{avg} & \multicolumn{1}{c|}{min} & \multicolumn{1}{c||}{max} & \multicolumn{1}{c|}{avg} & \multicolumn{1}{c|}{min} & \multicolumn{1}{c||}{max} \\\hline\hline
$\geq  6$ &     68 &    308 &    1 &    5.000 &    5 &    5 & 0.225064 & 0.225064 & 0.225064 \\\hline
$\geq  5$ &    175 &    819 &    1 &   12.000 &   12 &   12 & 0.269032 & 0.269032 & 0.269032 \\\hline
$\geq  4$ &    347 &   1447 &    1 &   25.000 &   25 &   25 & 0.331810 & 0.331810 & 0.331810 \\\hline
$\geq  3$ &    820 &   2710 &    1 &   38.000 &   38 &   38 & 0.421854 & 0.421854 & 0.421854 \\\hline
$\geq  2$ &   1755 &   4411 &    3 &  136.000 &  136 &  136 & 0.471808 & 0.471808 & 0.471808 \\\hline
$\geq  1$ &  11707 &  12989 & 1377 & 1947.000 & 1947 & 1947 & 0.671694 & 0.671694 & 0.671694 \\\hline
\end{tabular}
\end{center}
\end{table}

\subsubsection{Walktrap: Communities in the Inner-Most Core}
We have the following communities.
\paragraph{Community 1 (size is 13).} 
\dbtext{tree}, \dbtext{walk}, \dbtext{examination}, \dbtext{bed}, \dbtext{plant}, \dbtext{die}, \dbtext{money}, \dbtext{mouse}, 
\dbtext{god}, \dbtext{long hair}, \dbtext{brain}, \dbtext{conscious}, \dbtext{transportation device}

\paragraph{Community 2 (size is 22).} \dbtext{person}, \dbtext{exercise}, \dbtext{bath}, \dbtext{human}, \dbtext{fun}, \dbtext{computer}, 
\dbtext{music}, \dbtext{house}, \dbtext{drive car}, \dbtext{eye}, \dbtext{hot}, \dbtext{potato}, \dbtext{telephone}, \dbtext{rain}, 
\dbtext{book}, \dbtext{time}, \dbtext{fire}, \dbtext{cabinet}, \dbtext{table}, \dbtext{metal}, \dbtext{way}, \dbtext{gasoline}

\paragraph{Community 3 (size is 12).} \dbtext{talk}, \dbtext{eat}, \dbtext{dog}, \dbtext{cat}, \dbtext{fish}, \dbtext{animal}, 
\dbtext{bird}, \dbtext{horse}, \dbtext{read}, \dbtext{ear}, \dbtext{fly}, \dbtext{fidelity}

\paragraph{Community 4 (size is 13).} \dbtext{drink}, \dbtext{car}, \dbtext{desk}, \dbtext{kitchen}, \dbtext{television}, \dbtext{food}, 
\dbtext{drive}, \dbtext{audience}, \dbtext{boat}, \dbtext{competitive activity}, \dbtext{gerbil}, \dbtext{software}, \dbtext{speedo}

\paragraph{Community 5 (size is 8).} \dbtext{library}, \dbtext{park}, \dbtext{office}, \dbtext{home}, \dbtext{space}, 
\dbtext{program language}, \dbtext{cash register}, \dbtext{singular}

\subsection{Betweenness}
The algorithm used is the one implemented in \igraph 
which is based on \citep{communities:betweenness}.
The idea of the algorithm is described below; it is taken from \igraph documentation online.
\begin{quote}
The idea is that the betweenness of the edges connecting two communities is typically high, 
as many of the shortest paths between nodes in separate communities go through them. 
So we gradually remove the edge with highest betweenness from the network, 
and recalculate edge betweenness after every removal. 
This way sooner or later the network falls off to two components, 
then after a while one of these components falls off to two smaller components, 
etc.~until all edges are removed. 
This is a divisive hierarchical approach, the result is a dendrogram. 
\end{quote}
The algorithm has complexity 
\OO{\abs{V}\abs{E}^2}, as the betweenness calculation 
requires \OO{\abs{V}\abs{E}} time and we do it $\abs{E}-1$ times.
Hence, we applied the algorithm only on the subgraph induced by the vertices with maximum
coreness \footnote{ Recall that the subgraph has no self-loops.} .

Table \ref{tbl:communities:negative:betweenness:cores} presents the results when we apply the algorithm for 
subgraphs in which we include vertices with successively lower coreness
and allow negative polarity on the edges only.

\begin{table}[ht]
\caption{Applying the Edge Betweenness algorithm for community finding implemented in \igraph by successively including
vertices with lower coreness on the undirected graph induced by the assertions with negative polarity (self-loops are removed).
In every row we have the number of vertices and the number of edges of each such subgraph together with
the number of components ($\abs{C}$) that we find in that subgraph. 
The next three columns present the number of communities found by the algorithm;
the average among all runs, the minimum, and the maximum.
The next three columns present the modularity achieved by the algorithm due to the cut induced by the communities;
the average among all runs, the minimum, and the maximum.
The entire computation lasted about $951.51$ seconds for a single run.}\label{tbl:communities:negative:betweenness:cores}
\begin{center}
\begin{tabular}{|c|r|r|c||r|r|r||r|r|r||}\hline
\multirow{2}{*}{coreness} & \multicolumn{1}{c|}{\multirow{2}{*}{$\abs{V}$}} & \multicolumn{1}{c|}{\multirow{2}{*}{$\abs{E}$}} & \multirow{2}{*}{$\abs{C}$} & \multicolumn{3}{c||}{communities found} & \multicolumn{3}{c||}{modularity} \\\cline{5-10}
          &        &        &   & \multicolumn{1}{c|}{avg} & \multicolumn{1}{c|}{min} & \multicolumn{1}{c||}{max} & \multicolumn{1}{c|}{avg} & \multicolumn{1}{c|}{min} & \multicolumn{1}{c||}{max} \\\hline\hline
$\geq  6$ &     68 &    308 &    1 &   22.000 &   22 &   22 & 0.163213 & 0.163213 & 0.163213 \\\hline
$\geq  5$ &    175 &    819 &    1 &   61.000 &   61 &   61 & 0.183308 & 0.183308 & 0.183308 \\\hline
$\geq  4$ &    347 &   1447 &    1 &   55.000 &   55 &   55 & 0.309215 & 0.309215 & 0.309215 \\\hline
$\geq  3$ &    820 &   2710 &    1 &   62.000 &   62 &   62 & 0.436608 & 0.436608 & 0.436608 \\\hline
$\geq  2$ &   1755 &   4411 &    3 &   91.000 &   91 &   91 & 0.502412 & 0.502412 & 0.502412 \\\hline
$\geq  1$ &  11707 &  12989 & 1377 &       -- &   -- &   -- &       -- &       -- &       -- \\\hline
\end{tabular}
\end{center}
\end{table}

\subsubsection{Betweenness: Communities in the Inner-Most Core}
We have the following communities.
\setlength{\columnseprule}{.2pt}
\begin{multicols}{2}
{ 
\paragraph{Community 1 (size is 9).} 
\dbtext{person}, \dbtext{human}, \dbtext{plant}, \dbtext{animal}, \dbtext{die}, \dbtext{money}, \dbtext{potato}, 
\dbtext{god}, \dbtext{long hair}
\paragraph{Community 2 (size is 2).} \dbtext{tree}, \dbtext{brain}
\paragraph{Community 3 (size is 1).} \dbtext{exercise}
\paragraph{Community 4 (size is 26).} \dbtext{library}, \dbtext{drink}, \dbtext{park}, \dbtext{talk}, \dbtext{car}, 
\dbtext{cat}, \dbtext{fish}, \dbtext{bird}, \dbtext{drive car}, \dbtext{desk}, \dbtext{kitchen}, \dbtext{eye}, 
\dbtext{mouse}, \dbtext{television}, \dbtext{food}, \dbtext{drive}, \dbtext{audience}, \dbtext{book}, \dbtext{boat}, 
\dbtext{table}, \dbtext{competitive activity}, \dbtext{ear}, \dbtext{fly}, \dbtext{gerbil}, \dbtext{software}, \dbtext{speedo}
\paragraph{Community 5 (size is 1).} \dbtext{bath}
\paragraph{Community 6 (size is 2).} \dbtext{walk}, \dbtext{fun}
\paragraph{Community 7 (size is 1).} \dbtext{examination}
\paragraph{Community 8 (size is 2).} \dbtext{bed}, \dbtext{conscious}
\paragraph{Community 9 (size is 1).} \dbtext{eat}
\paragraph{Community 10 (size is 9).} \dbtext{computer}, \dbtext{telephone}, \dbtext{time}, \dbtext{space}, 
\dbtext{cabinet}, \dbtext{metal}, \dbtext{way}, \dbtext{gasoline}, \dbtext{program language}
\paragraph{Community 11 (size is 1).} \dbtext{dog}
\paragraph{Community 12 (size is 1).} \dbtext{music}
\paragraph{Community 13 (size is 2).} \dbtext{house}, \dbtext{home}
\paragraph{Community 14 (size is 1).} \dbtext{office}
\paragraph{Community 15 (size is 1).} \dbtext{horse}
\paragraph{Community 16 (size is 2).} \dbtext{hot}, \dbtext{fire}
\paragraph{Community 17 (size is 1).} \dbtext{read}
\paragraph{Community 18 (size is 1).} \dbtext{rain}
\paragraph{Community 19 (size is 1).} \dbtext{cash register}
\paragraph{Community 20 (size is 1).} \dbtext{singular}
\paragraph{Community 21 (size is 1).} \dbtext{transportation device}
\paragraph{Community 22 (size is 1).} \dbtext{fidelity}
}
\end{multicols}

\subsection{Fast Greedy}
The algorithm used is the one implemented in \igraph 
which is based on \citep{communities:fast-greedy}.
According to igraph version $0.6.1$ which was used at the time of the writing,
some improvements mentioned in \citep{communities:fast-greedy:improvements}
have also been implemented. 
Table \ref{tbl:communities:negative:fast-greedy:cores} presents the results when we apply the algorithm for 
subgraphs in which we include vertices with successively lower coreness
and allow negative polarity on the edges only.

\begin{table}[ht]
\caption{Applying the Fast Greedy algorithm for community finding implemented in \igraph by successively including
vertices with lower coreness on the undirected graph induced by the assertions with negative polarity (self-loops are removed).
In every row we have the number of vertices and the number of edges of each such subgraph together with
the number of components ($\abs{C}$) that we find in that subgraph. 
The next three columns present the number of communities found by the algorithm;
the average among all runs, the minimum, and the maximum.
The next three columns present the modularity achieved by the algorithm due to the cut induced by the communities;
the average among all runs, the minimum, and the maximum.
The entire computation lasted about $370.2$ seconds for $10$ runs; 
that is about $37.02$ seconds per run.}\label{tbl:communities:negative:fast-greedy:cores}
\begin{center}
\begin{tabular}{|c|r|r|c||r|r|r||r|r|r||}\hline
\multirow{2}{*}{coreness} & \multicolumn{1}{c|}{\multirow{2}{*}{$\abs{V}$}} & \multicolumn{1}{c|}{\multirow{2}{*}{$\abs{E}$}} & \multirow{2}{*}{$\abs{C}$} & \multicolumn{3}{c||}{communities found} & \multicolumn{3}{c||}{modularity} \\\cline{5-10}
          &        &        &   & \multicolumn{1}{c|}{avg} & \multicolumn{1}{c|}{min} & \multicolumn{1}{c||}{max} & \multicolumn{1}{c|}{avg} & \multicolumn{1}{c|}{min} & \multicolumn{1}{c||}{max} \\\hline\hline
$\geq  6$ &     68 &    308 &    1 &    4.000 &    4 &    4 & 0.260410 & 0.260410 & 0.260410 \\\hline
$\geq  5$ &    175 &    819 &    1 &    6.000 &    6 &    6 & 0.320716 & 0.320716 & 0.320716 \\\hline
$\geq  4$ &    347 &   1447 &    1 &    9.000 &    9 &    9 & 0.383727 & 0.383727 & 0.383727 \\\hline
$\geq  3$ &    820 &   2710 &    1 &   13.000 &   13 &   13 & 0.467128 & 0.467128 & 0.467128 \\\hline
$\geq  2$ &   1755 &   4411 &    3 &   31.000 &   31 &   31 & 0.542274 & 0.542274 & 0.542274 \\\hline
$\geq  1$ &  11707 &  12989 & 1377 & 1506.000 & 1506 & 1506 & 0.747473 & 0.747473 & 0.747473 \\\hline
\end{tabular}
\end{center}
\end{table}

\subsubsection{Fast Greedy: Communities in the Inner-Most Core}
We have the following communities.
\paragraph{Community 1 (size is 19).}
\dbtext{person}, \dbtext{exercise}, \dbtext{library}, \dbtext{fun}, \dbtext{park}, \dbtext{house}, \dbtext{office}, 
\dbtext{home}, \dbtext{eye}, \dbtext{horse}, \dbtext{hot}, \dbtext{rain}, \dbtext{time}, \dbtext{fire}, \dbtext{space}, 
\dbtext{program language}, \dbtext{cash register}, \dbtext{singular}, \dbtext{fidelity}

\paragraph{Community 2 (size is 17).} \dbtext{drink}, \dbtext{eat}, \dbtext{music}, \dbtext{desk}, \dbtext{kitchen}, 
\dbtext{television}, \dbtext{food}, \dbtext{drive}, \dbtext{audience}, \dbtext{book}, \dbtext{cabinet}, \dbtext{metal}, 
\dbtext{way}, \dbtext{competitive activity}, \dbtext{gasoline}, \dbtext{software}, \dbtext{speedo}

\paragraph{Community 3 (size is 18).} \dbtext{tree}, \dbtext{human}, \dbtext{walk}, \dbtext{examination}, \dbtext{bed}, 
\dbtext{computer}, \dbtext{plant}, \dbtext{die}, \dbtext{money}, \dbtext{mouse}, \dbtext{potato}, \dbtext{telephone}, 
\dbtext{god}, \dbtext{table}, \dbtext{long hair}, \dbtext{brain}, \dbtext{conscious}, \dbtext{transportation device}

\paragraph{Community 4 (size is 14).} \dbtext{bath}, \dbtext{talk}, \dbtext{car}, \dbtext{dog}, \dbtext{cat}, \dbtext{fish}, 
\dbtext{animal}, \dbtext{bird}, \dbtext{drive car}, \dbtext{read}, \dbtext{boat}, \dbtext{ear}, \dbtext{fly}, \dbtext{gerbil}

\subsection{Multilevel}
The algorithm used is the one implemented in \igraph 
which is based on \citep{communities:multilevel}.
Table \ref{tbl:communities:negative:multilevel:cores} presents the results when we apply the algorithm for 
subgraphs in which we include vertices with successively lower coreness
and allow negative polarity on the edges only.

\begin{table}[ht]
\caption{Applying the Multilevel algorithm for community finding implemented in \igraph by successively including
vertices with lower coreness on the undirected graph induced by the assertions with negative polarity (self-loops are removed).
In every row we have the number of vertices and the number of edges of each such subgraph together with
the number of components ($\abs{C}$) that we find in that subgraph. 
The next three columns present the number of communities found by the algorithm;
the average among all runs, the minimum, and the maximum.
The next three columns present the modularity achieved by the algorithm due to the cut induced by the communities;
the average among all runs, the minimum, and the maximum.
The entire computation lasted $12.7$ seconds for $100$ runs; 
that is about $0.127$ seconds per run.}\label{tbl:communities:negative:multilevel:cores}
\begin{center}
\begin{tabular}{|c|r|r|c||r|r|r||r|r|r||}\hline
\multirow{2}{*}{coreness} & \multicolumn{1}{c|}{\multirow{2}{*}{$\abs{V}$}} & \multicolumn{1}{c|}{\multirow{2}{*}{$\abs{E}$}} & \multirow{2}{*}{$\abs{C}$} & \multicolumn{3}{c||}{communities found} & \multicolumn{3}{c||}{modularity} \\\cline{5-10}
          &        &        &   & \multicolumn{1}{c|}{avg} & \multicolumn{1}{c|}{min} & \multicolumn{1}{c||}{max} & \multicolumn{1}{c|}{avg} & \multicolumn{1}{c|}{min} & \multicolumn{1}{c||}{max} \\\hline\hline
$\geq  6$ &     68 &    308 &    1 &    5.000 &    5 &    5 & 0.264953 & 0.264953 & 0.264953 \\\hline
$\geq  5$ &    175 &    819 &    1 &    7.000 &    7 &    7 & 0.323902 & 0.323902 & 0.323902 \\\hline
$\geq  4$ &    347 &   1447 &    1 &    9.000 &    9 &    9 & 0.386802 & 0.386802 & 0.386802 \\\hline
$\geq  3$ &    820 &   2710 &    1 &   12.000 &   12 &   12 & 0.492571 & 0.492571 & 0.492571 \\\hline
$\geq  2$ &   1755 &   4411 &    3 &   24.000 &   24 &   24 & 0.550032 & 0.550032 & 0.550032 \\\hline
$\geq  1$ &  11707 &  12989 & 1377 & 1435.000 & 1435 & 1435 & 0.758024 & 0.758024 & 0.758024 \\\hline
\end{tabular}
\end{center}
\end{table}

\subsubsection{Multilevel: Communities in the Inner-Most Core}
We have the following communities.
\paragraph{Community 1 (size is 20).}
\dbtext{person}, \dbtext{tree}, \dbtext{human}, \dbtext{examination}, \dbtext{bed}, \dbtext{computer}, 
\dbtext{house}, \dbtext{home}, \dbtext{money}, \dbtext{hot}, \dbtext{rain}, \dbtext{book}, \dbtext{time}, 
\dbtext{fire}, \dbtext{space}, \dbtext{long hair}, \dbtext{metal}, \dbtext{program language}, \dbtext{brain}, 
\dbtext{conscious}

\paragraph{Community 2 (size is 15).}
\dbtext{bath}, \dbtext{drink}, \dbtext{talk}, \dbtext{eat}, \dbtext{dog}, \dbtext{cat}, \dbtext{fish}, \dbtext{bird}, 
\dbtext{mouse}, \dbtext{horse}, \dbtext{read}, \dbtext{competitive activity}, \dbtext{ear}, \dbtext{fly}, \dbtext{fidelity}

\paragraph{Community 3 (size is 19).}
\dbtext{library}, \dbtext{fun}, \dbtext{car}, \dbtext{music}, \dbtext{desk}, \dbtext{kitchen}, \dbtext{eye}, 
\dbtext{television}, \dbtext{food}, \dbtext{potato}, \dbtext{telephone}, \dbtext{audience}, \dbtext{boat}, 
\dbtext{cabinet}, \dbtext{table}, \dbtext{way}, \dbtext{software}, \dbtext{cash register}, \dbtext{speedo}

\paragraph{Community 4 (size is 8).} 
\dbtext{exercise}, \dbtext{plant}, \dbtext{animal}, \dbtext{drive car}, \dbtext{die}, \dbtext{god}, 
\dbtext{gasoline}, \dbtext{transportation device}

\paragraph{Community 5 (size is 6).}
\dbtext{walk}, \dbtext{park}, \dbtext{office}, \dbtext{drive}, \dbtext{gerbil}, \dbtext{singular}

\subsection{Label Propagation}
The algorithm used is the one implemented in \igraph 
which is based on \citep{communities:lp}.
Table \ref{tbl:communities:negative:lp:cores} presents the results when we apply the algorithm for 
subgraphs in which we include vertices with successively lower coreness
and allow negative polarity on the edges only.

\begin{table}[ht]
\caption{Applying the Label Propagation algorithm for community finding implemented in \igraph by successively including
vertices with lower coreness on the undirected graph induced by the assertions with negative polarity (self-loops are removed).
In every row we have the number of vertices and the number of edges of each such subgraph together with
the number of components ($\abs{C}$) that we find in that subgraph. 
The next three columns present the number of communities found by the algorithm;
the average among all runs, the minimum, and the maximum.
The next three columns present the modularity achieved by the algorithm due to the cut induced by the communities;
the average among all runs, the minimum, and the maximum.
Finally the last two columns
present in how many runs the algorithm computed as many communities as we had components in that subgraph.
The entire computation lasted $1917.0$ seconds for $100$ runs; 
that is about $19.17$ seconds per run.}\label{tbl:communities:negative:lp:cores}
\begin{center}
\begin{tabular}{|c|r|r|c||r|r|r||r|r|r||r|r|}\hline
\multirow{2}{*}{coreness} & \multicolumn{1}{c|}{\multirow{2}{*}{$\abs{V}$}} & \multicolumn{1}{c|}{\multirow{2}{*}{$\abs{E}$}} & \multirow{2}{*}{$\abs{C}$} & \multicolumn{3}{c||}{communities found} & \multicolumn{3}{c||}{modularity} & \multicolumn{2}{c|}{agreement} \\\cline{5-12}
          &        &        &   & \multicolumn{1}{c|}{avg} & \multicolumn{1}{c|}{min} & \multicolumn{1}{c||}{max} & \multicolumn{1}{c|}{avg} & \multicolumn{1}{c|}{min} & \multicolumn{1}{c||}{max} & \multicolumn{1}{c|}{Y} & \multicolumn{1}{c|}{N} \\\hline\hline
$\geq  6$ &     68 &    308 &    1 &    1.050 &    1 &    2 & 0.010145 & 0.000000 & 0.223894 &  95 &   5 \\\hline
$\geq  5$ &    175 &    819 &    1 &    1.010 &    1 &    2 & 0.002370 & 0.000000 & 0.236981 &  99 &   1 \\\hline
$\geq  4$ &    347 &   1447 &    1 &    2.210 &    2 &    5 & 0.027395 & 0.010952 & 0.284524 &   0 & 100 \\\hline
$\geq  3$ &    820 &   2710 &    1 &   10.590 &    7 &   16 & 0.074087 & 0.033087 & 0.371248 &   0 & 100 \\\hline
$\geq  2$ &   1755 &   4411 &    3 &   20.410 &   15 &   34 & 0.075914 & 0.050672 & 0.410376 &   0 & 100 \\\hline
$\geq  1$ &  11707 &  12989 & 1377 & 1772.040 & 1691 & 1861 & 0.620714 & 0.526809 & 0.702329 &   0 & 100 \\\hline
\end{tabular}
\end{center}
\end{table}

\subsubsection{Label Propagation: Communities in the Inner-Most Core}
An example where Label Propagation identifies two communities in the innermost core is shown below.
\paragraph{Community 1 (size is 42).}
\dbtext{person}, \dbtext{tree}, \dbtext{exercise}, \dbtext{library}, \dbtext{bath}, \dbtext{human}, \dbtext{walk}, 
\dbtext{examination}, \dbtext{fun}, \dbtext{bed}, \dbtext{park}, \dbtext{computer}, \dbtext{music}, \dbtext{house}, 
\dbtext{plant}, \dbtext{drive car}, \dbtext{office}, \dbtext{home}, \dbtext{eye}, \dbtext{money}, \dbtext{hot}, 
\dbtext{potato}, \dbtext{telephone}, \dbtext{rain}, \dbtext{book}, \dbtext{time}, \dbtext{fire}, \dbtext{god}, 
\dbtext{space}, \dbtext{cabinet}, \dbtext{table}, \dbtext{long hair}, \dbtext{metal}, \dbtext{way}, \dbtext{gasoline}, 
\dbtext{program language}, \dbtext{gerbil}, \dbtext{software}, \dbtext{brain}, \dbtext{conscious}, \dbtext{singular}, 
\dbtext{transportation device}

\paragraph{Community 2 (size is 26).} 
\dbtext{drink}, \dbtext{talk}, \dbtext{eat}, \dbtext{car}, \dbtext{dog}, \dbtext{cat}, \dbtext{fish}, \dbtext{animal}, 
\dbtext{bird}, \dbtext{desk}, \dbtext{kitchen}, \dbtext{die}, \dbtext{mouse}, \dbtext{television}, \dbtext{food}, 
\dbtext{horse}, \dbtext{read}, \dbtext{drive}, \dbtext{audience}, \dbtext{boat}, \dbtext{competitive activity}, 
\dbtext{ear}, \dbtext{fly}, \dbtext{cash register}, \dbtext{speedo}, \dbtext{fidelity}

\subsection{InfoMAP}
The algorithm used is the one implemented in \igraph 
which is based on \citep{communities:infoMAP};
see also \citep{communities:infoMAP:additional}.
Table \ref{tbl:communities:negative:infoMAP:cores} presents the results when we apply the algorithm for 
subgraphs in which we include vertices with successively lower coreness
and allow edges with negative polarity only.

\begin{table}[ht]
\caption{Applying the InfoMAP algorithm for community finding implemented in \igraph by successively including
vertices with lower coreness on the undirected graph induced by the assertions with negative polarity (self-loops are removed).
In every row we have the number of vertices and the number of edges of each such subgraph together with
the number of components ($\abs{C}$) that we find in that subgraph. 
The next three columns present the number of communities found by the algorithm;
the average among all runs, the minimum, and the maximum.
The next three columns present the codelength of the partitioning found by the algorithm;
the average among all runs, the minimum, and the maximum.
The next three columns present the modularity achieved by the algorithm due to the cut induced by the communities;
the average among all runs, the minimum, and the maximum.
Finally the last two columns
present in how many runs the algorithm computed as many communities as we had components in that subgraph.
The entire computation lasted $217.97$ seconds for $10$ runs; 
that is about $21.80$ seconds per run.}\label{tbl:communities:negative:infoMAP:cores}
\begin{center}
\resizebox{\textwidth}{!}{
\begin{tabular}{|r|r|r|c||r|r|r||r|r|r||r|r|r||r|r|}\cline{14-15}
\multicolumn{13}{c|}{} & \multicolumn{2}{c|}{agree-} \\\cline{1-13}
$\uppi$ & \multicolumn{1}{c|}{\multirow{2}{*}{$\abs{V}$}} & \multicolumn{1}{c|}{\multirow{2}{*}{$\abs{E}$}} & \multirow{2}{*}{$\abs{C}$} & \multicolumn{3}{c||}{communities found} & \multicolumn{3}{c||}{codelength} & \multicolumn{3}{c||}{modularity} & \multicolumn{2}{c|}{ment} \\\cline{5-15}
$\geq$ &      &        &   & \multicolumn{1}{c|}{avg} & \multicolumn{1}{c|}{min} & \multicolumn{1}{c||}{max} & \multicolumn{1}{c|}{avg} & \multicolumn{1}{c|}{min} & \multicolumn{1}{c||}{max} & \multicolumn{1}{c|}{avg} & \multicolumn{1}{c|}{min} & \multicolumn{1}{c||}{max} & \multicolumn{1}{c|}{Y} & \multicolumn{1}{c|}{N} \\\hline\hline
$6$ &     68 &    308 &    1 &    1.000 &    1 &    1 & 5.993573 & 5.993573 & 5.993573 & 0.000000 & 0.000000 & 0.000000 &  10 &   0 \\\hline
$5$ &    175 &    819 &    1 &    1.000 &    1 &    1 & 7.279672 & 7.279672 & 7.279672 & 0.000000 & 0.000000 & 0.000000 &  10 &   0 \\\hline
$4$ &    347 &   1447 &    1 &   17.700 &   10 &   26 & 7.970802 & 7.949532 & 7.981013 & 0.307898 & 0.193377 & 0.385923 &   0 &  10 \\\hline
$3$ &    820 &   2710 &    1 &   72.300 &   70 &   76 & 8.268610 & 8.263414 & 8.277101 & 0.447178 & 0.441671 & 0.454124 &   0 &  10 \\\hline
$2$ &   1755 &   4411 &    3 &  178.000 &  170 &  183 & 8.330739 & 8.328037 & 8.334383 & 0.487219 & 0.482917 & 0.489891 &   0 &  10 \\\hline
$1$ &  11707 &  12989 & 1377 & 1911.700 & 1909 & 1916 & 6.828170 & 6.827590 & 6.828669 & 0.699840 & 0.698775 & 0.700512 &   0 &  10 \\\hline
\end{tabular}
}
\end{center}
\end{table}

\subsubsection{InfoMAP: Communities in the Inner-Most Core}
The entire core is one big community. This is the result in all of our runs.

\section{Positive Polarity}
In this section we will examine the graphs induced by assertions with positive polarity only.
Table \ref{tbl:communities:cores:positive:overall} gives an overview of the results achieved
by various community-finding algorithms that have been implemented in \igraph.

{
\setlength{\tabcolsep}{3pt}
\begin{table}[ht]
\caption{Overall comparison of the community finding algorithms implemented in \igraph. 
We use $\uppi$ to indicate coreness.
We can see the average number $\overline{\upkappa}$ of communities found per run,
as well as the average modularity $\overline{\upmu}$ 
achieved by each algorithm.
Bold entries in the columns with the modularities indicate the maximum value
achieved among all algorithms per row; that is per subgraph induced by vertices that have
coreness at least a certain lower bound value. The best values for the modularity
are achieved by the spinglass algorithm and the multilevel algorithm.
We do not see a result for spinglass in the last row (that is coreness $\geq 2$)
because the implementation of the algorithm expects a connected graph. Hence we need to run
the algorithm in each connected component. However, the difference in terms of actual computation time
is huge compared to the multilevel algorithm.}\label{tbl:communities:cores:positive:overall}
\begin{center}
\resizebox{\textwidth}{!}{
\begin{tabular}{|r|r|r|r||rl|rl|rl|rl|rl|rl|rl|rl|}\hline
\multicolumn{4}{|c||}{\multirow{2}{*}{subgraph properties}} & 
\multicolumn{2}{c|}{spin} & 
\multicolumn{2}{c|}{eigen-} &
\multicolumn{2}{c|}{\multirow{2}{*}{walktrap}} & \multicolumn{2}{c|}{between-} & 
\multicolumn{2}{c|}{fast} &
\multicolumn{2}{c|}{multi-} & \multicolumn{2}{c|}{label} & 
\multicolumn{2}{c|}{\multirow{2}{*}{infoMAP}} \\
 \multicolumn{4}{|c||}{} & 
 \multicolumn{2}{c|}{glass} & 
 \multicolumn{2}{c|}{vectors} & 
 \multicolumn{2}{c|}{} & 
 \multicolumn{2}{c|}{ness} & 
 \multicolumn{2}{c|}{greedy} & 
 \multicolumn{2}{c|}{level} & 
 \multicolumn{2}{c|}{propagation} &
 \multicolumn{2}{c|}{}\\\hline
 $\uppi\geq$ & \multicolumn{1}{c|}{$\abs{V}$} & \multicolumn{1}{c|}{$\abs{E}$} & \multicolumn{1}{c||}{$\abs{C}$} & 
\multicolumn{1}{c}{$\overline{\upkappa}$} & \multicolumn{1}{c|}{$\overline{\upmu}$} & 
\multicolumn{1}{c}{$\overline{\upkappa}$} & \multicolumn{1}{c|}{$\overline{\upmu}$} & 
\multicolumn{1}{c}{$\overline{\upkappa}$} & \multicolumn{1}{c|}{$\overline{\upmu}$} & 
\multicolumn{1}{c}{$\overline{\upkappa}$} & \multicolumn{1}{c|}{$\overline{\upmu}$} & 
\multicolumn{1}{c}{$\overline{\upkappa}$} & \multicolumn{1}{c|}{$\overline{\upmu}$} & 
\multicolumn{1}{c}{$\overline{\upkappa}$} & \multicolumn{1}{c|}{$\overline{\upmu}$} & 
\multicolumn{1}{c}{$\overline{\upkappa}$} & \multicolumn{1}{c|}{$\overline{\upmu}$} &
\multicolumn{1}{c}{$\overline{\upkappa}$} & \multicolumn{1}{c|}{$\overline{\upmu}$} \\\hline\hline
$26$ &    869 &  20526 & 1 &  6 & 0.323 &  4 & 0.304 &   3 & 0.275 & 42  & 0.268508 &   4 & 0.287 &  5 & 0.322 &  1.00 & 0.000 &    1.0 & 0.000 \\\hline
$25$ &   1167 &  27810 & 1 &  7 & 0.309 &  5 & 0.300 &   3 & 0.282 & --  &       -- &   3 & 0.295 &  6 & 0.320 &  1.01 & 0.003 &    1.0 & 0.000 \\\hline
$24$ &   1358 &  32314 & 1 &  8 & 0.315 &  4 & 0.290 &   3 & 0.282 & --  &       -- &   4 & 0.285 &  7 & 0.321 &  1.06 & 0.017 &    1.0 & 0.000 \\\hline
$23$ &   1514 &  35870 & 1 &  8 & 0.316 &  3 & 0.283 &   4 & 0.275 & --  &       -- &   4 & 0.284 &  6 & 0.322 &  1.00 & 0.000 &    1.0 & 0.000 \\\hline
$22$ &   1709 &  40099 & 1 &  6 & 0.320 &  4 & 0.287 &   3 & 0.287 & --  &       -- &   5 & 0.283 &  7 & 0.323 &  1.01 & 0.003 &    1.0 & 0.000 \\\hline
$21$ &   1865 &  43330 & 1 & 10 & 0.320 &  4 & 0.286 &   3 & 0.283 & --  &       -- &   4 & 0.292 &  8 & 0.314 &  1.00 & 0.000 &    1.0 & 0.000 \\\hline
$20$ &   2007 &  46145 & 1 &  7 & 0.328 &  4 & 0.290 &   4 & 0.285 & --  &       -- &   3 & 0.291 &  7 & 0.321 &  1.03 & 0.008 &    1.0 & 0.000 \\\hline
$19$ &   2173 &  49265 & 1 & 12 & 0.323 &  5 & 0.293 &   3 & 0.274 & --  &       -- &   4 & 0.294 &  7 & 0.321 &  1.01 & 0.003 &    1.0 & 0.000 \\\hline
$18$ &   2384 &  53011 & 1 & 10 & 0.326 &  4 & 0.292 &   4 & 0.292 & --  &       -- &   6 & 0.287 &  8 & 0.329 &  1.01 & 0.003 &    1.0 & 0.000 \\\hline
$17$ &   2617 &  56939 & 1 & 12 & 0.328 &  5 & 0.293 &   5 & 0.279 & --  &       -- &   6 & 0.285 &  9 & 0.331 &  1.00 & 0.000 &    2.0 & 0.009 \\\hline
$16$ &   2847 &  60583 & 1 & 10 & 0.331 &  4 & 0.279 &   4 & 0.285 & --  &       -- &   6 & 0.295 &  6 & 0.331 &  1.02 & 0.006 &    2.0 & 0.012 \\\hline
$15$ &   3105 &  64412 & 1 & 11 & 0.334 &  4 & 0.280 &   4 & 0.284 & --  &       -- &   6 & 0.292 &  7 & 0.336 &  1.00 & 0.000 &    2.0 & 0.012 \\\hline
$14$ &   3407 &  68613 & 1 & 14 & 0.337 &  5 & 0.272 &   6 & 0.306 & --  &       -- &   5 & 0.303 &  9 & 0.348 &  1.01 & 0.003 &    3.2 & 0.039 \\\hline
$13$ &   3746 &  72978 & 1 & 12 & 0.341 &  5 & 0.275 &   5 & 0.291 & --  &       -- &   6 & 0.300 &  9 & 0.344 &  1.00 & 0.000 &    2.0 & 0.014 \\\hline
$12$ &   4160 &  77882 & 1 & 14 & 0.344 &  6 & 0.283 &   8 & 0.290 & --  &       -- &  10 & 0.292 & 10 & 0.349 &  1.00 & 0.000 &    3.3 & 0.042 \\\hline
$11$ &   4634 &  83039 & 1 &  7 & 0.351 &  5 & 0.303 &   8 & 0.320 & --  &       -- &  12 & 0.298 & 10 & 0.348 &  1.01 & 0.003 &   18.0 & 0.188 \\\hline
$10$ &   5182 &  88462 & 1 &  9 & 0.357 &  4 & 0.281 &  10 & 0.311 & --  &       -- &  10 & 0.307 & 10 & 0.362 &  1.00 & 0.000 &   34.5 & 0.269 \\\hline
$ 9$ &   5883 &  94709 & 1 & 11 & 0.363 &  4 & 0.275 &  13 & 0.319 & --  &       -- &  14 & 0.319 &  9 & 0.364 &  1.02 & 0.003 &   55.9 & 0.347 \\\hline
$ 8$ &   6750 & 101564 & 1 & 10 & 0.372 &  4 & 0.273 &  17 & 0.323 & --  &       -- &  13 & 0.319 & 10 & 0.368 &  1.00 & 0.000 &   81.1 & 0.345 \\\hline
$ 7$ &   7904 & 109561 & 1 &  7 & 0.384 &  2 & 0.267 &  15 & 0.328 & --  &       -- &  18 & 0.326 & 10 & 0.375 &  1.02 & 0.002 &  117.1 & 0.348 \\\hline
$ 6$ &   9392 & 118389 & 1 & 11 & 0.393 &  4 & 0.283 &  17 & 0.341 & --  &       -- &  17 & 0.334 & 10 & 0.387 &  1.00 & 0.000 &  166.8 & 0.350 \\\hline
$ 5$ &  11483 & 128731 & 1 &  8 & 0.389 &  3 & 0.290 &  31 & 0.339 & --  &       -- &  24 & 0.333 & 12 & 0.392 &  1.00 & 0.000 &  251.6 & 0.353 \\\hline
$ 4$ &  14864 & 142112 & 1 & 12 & 0.413 &  3 & 0.290 &  54 & 0.334 & --  &       -- &  37 & 0.339 & 11 & 0.402 &  1.12 & 0.000 &  407.9 & 0.350 \\\hline
$ 3$ &  21812 & 162691 & 1 & 15 & 0.423 &  4 & 0.288 & 223 & 0.343 & --  &       -- &  61 & 0.373 & 13 & 0.419 &  2.63 & 0.006 &  747.5 & 0.348 \\\hline
$ 2$ &  41659 & 201678 & 4 & -- &    -- & 12 & 0.248 &  -- &    -- & --  &       -- & 252 & 0.409 & 25 & 0.449 & 12.71 & 0.010 & 1753.5 & 0.351 \\\hline
\multicolumn{18}{l}{} \\\cline{5-20}
\multicolumn{4}{r|}{duration/run (secs)} & 
\multicolumn{2}{c|}{$22,443.0$} & 
\multicolumn{2}{c|}{$47.5$} & 
\multicolumn{2}{c|}{$572.4$} &
\multicolumn{2}{c|}{$8,671.2$} &
\multicolumn{2}{c|}{$625.6$} & 
\multicolumn{2}{c|}{$6.9$} & 
\multicolumn{2}{c|}{$20.7$} & 
\multicolumn{2}{c|}{$1,337.6$} \\\cline{5-20}
\end{tabular}
}
\end{center}
\end{table}
}

\subsection{Spin Glass}
The algorithm used is the one implemented in \igraph 
which is based on \citep{communities:spinglass}.
Table \ref{tbl:communities:positive:spinglass:cores} presents the results when we apply the algorithm for 
subgraphs in which we include vertices with successively lower coreness and allow positive polarity on the edges only.

\begin{table}[ht]
\caption{Applying the Spin Glass algorithm for community finding implemented in \igraph by successively including
vertices with lower coreness on the undirected graph induced by the assertions with positive polarity (self-loops are removed).
In every row we have the number of vertices and the number of edges of each such subgraph together with
the number of components ($\abs{C}$) that we find in that subgraph. 
The next three columns present the number of communities found by the algorithm;
the average among all runs, the minimum, and the maximum.
The next three columns present the modularity achieved by the algorithm due to the cut induced by the communities;
the average among all runs, the minimum, and the maximum.
The entire computation lasted $22,442.95$ seconds for a single run.}\label{tbl:communities:positive:spinglass:cores}
\begin{center}
\begin{tabular}{|c|r|r|c||r|r|r||r|r|r||}\hline
\multirow{2}{*}{coreness} & \multicolumn{1}{c|}{\multirow{2}{*}{$\abs{V}$}} & \multicolumn{1}{c|}{\multirow{2}{*}{$\abs{E}$}} & \multirow{2}{*}{$\abs{C}$} & \multicolumn{3}{c||}{communities found} & \multicolumn{3}{c||}{modularity} \\\cline{5-10}
          &        &        &   & \multicolumn{1}{c|}{avg} & \multicolumn{1}{c|}{min} & \multicolumn{1}{c||}{max} & \multicolumn{1}{c|}{avg} & \multicolumn{1}{c|}{min} & \multicolumn{1}{c||}{max}  \\\hline\hline
$\geq 26$ &    869 &  20526 &    1 &  6.000 &    6 &    6 & 0.323492 & 0.323492 & 0.323492 \\\hline
$\geq 25$ &   1167 &  27810 &    1 &  7.000 &    7 &    7 & 0.309399 & 0.309399 & 0.309399 \\\hline
$\geq 24$ &   1358 &  32314 &    1 &  8.000 &    8 &    8 & 0.315379 & 0.315379 & 0.315379 \\\hline
$\geq 23$ &   1514 &  35870 &    1 &  8.000 &    8 &    8 & 0.316294 & 0.316294 & 0.316294 \\\hline
$\geq 22$ &   1709 &  40099 &    1 &  6.000 &    6 &    6 & 0.320150 & 0.320150 & 0.320150 \\\hline
$\geq 21$ &   1865 &  43330 &    1 & 10.000 &   10 &   10 & 0.320400 & 0.320400 & 0.320400 \\\hline
$\geq 20$ &   2007 &  46145 &    1 &  7.000 &    7 &    7 & 0.328107 & 0.328107 & 0.328107 \\\hline
$\geq 19$ &   2173 &  49265 &    1 & 12.000 &   12 &   12 & 0.323378 & 0.323378 & 0.323378 \\\hline
$\geq 18$ &   2384 &  53011 &    1 & 10.000 &   10 &   10 & 0.326275 & 0.326275 & 0.326275 \\\hline
$\geq 17$ &   2617 &  56939 &    1 & 12.000 &   12 &   12 & 0.327978 & 0.327978 & 0.327978 \\\hline
$\geq 16$ &   2847 &  60583 &    1 & 10.000 &   10 &   10 & 0.331368 & 0.331368 & 0.331368 \\\hline
$\geq 15$ &   3105 &  64412 &    1 & 11.000 &   11 &   11 & 0.334370 & 0.334370 & 0.334370 \\\hline
$\geq 14$ &   3407 &  68613 &    1 & 14.000 &   14 &   14 & 0.337464 & 0.337464 & 0.337464 \\\hline
$\geq 13$ &   3746 &  72978 &    1 & 12.000 &   12 &   12 & 0.340777 & 0.340777 & 0.340777 \\\hline
$\geq 12$ &   4160 &  77882 &    1 & 14.000 &   14 &   14 & 0.343900 & 0.343900 & 0.343900 \\\hline
$\geq 11$ &   4634 &  83039 &    1 &  7.000 &    7 &    7 & 0.351369 & 0.351369 & 0.351369 \\\hline
$\geq 10$ &   5182 &  88462 &    1 &  9.000 &    9 &    9 & 0.357447 & 0.357447 & 0.357447 \\\hline
$\geq  9$ &   5883 &  94709 &    1 & 11.000 &   11 &   11 & 0.362553 & 0.362553 & 0.362553 \\\hline
$\geq  8$ &   6750 & 101564 &    1 & 10.000 &   10 &   10 & 0.371669 & 0.371669 & 0.371669 \\\hline
$\geq  7$ &   7904 & 109561 &    1 &  7.000 &    7 &    7 & 0.383777 & 0.383777 & 0.383777 \\\hline
$\geq  6$ &   9392 & 118389 &    1 & 11.000 &   11 &   11 & 0.393008 & 0.393008 & 0.393008 \\\hline
$\geq  5$ &  11483 & 128731 &    1 &  8.000 &    8 &    8 & 0.388912 & 0.388912 & 0.388912 \\\hline
$\geq  4$ &  14864 & 142112 &    1 & 12.000 &   12 &   12 & 0.413350 & 0.413350 & 0.413350 \\\hline
$\geq  3$ &  21812 & 162691 &    1 & 15.000 &   15 &   15 & 0.423025 & 0.423025 & 0.423025 \\\hline
$\geq  2$ &  41659 & 201678 &    4 &     -- &   -- &   -- &       -- &       -- &       -- \\\hline
\end{tabular}
\end{center}
\end{table}

\subsubsection{Spin Glass: Communities in the Inner-Most Core}
An instance of $7$ communities with modularity $0.313939$ is shown below.
\paragraph{Community 1 (size is 1).} 
\dbtext{power}

\paragraph{Community 2 (size is 224).} 
\dbtext{rock}, \dbtext{beach}, \dbtext{tree}, \dbtext{monkey}, \dbtext{weasel}, \dbtext{pant}, \dbtext{kitten}, \dbtext{arm}, \dbtext{human}, 
\dbtext{beaver}, \dbtext{it}, \dbtext{smoke}, \dbtext{chicken}, \dbtext{state}, \dbtext{ball}, \dbtext{fungus}, \dbtext{park}, \dbtext{trouble}, 
\dbtext{snake}, \dbtext{wood}, \dbtext{bridge}, \dbtext{cloud}, \dbtext{nothing}, \dbtext{dog}, \dbtext{zoo}, \dbtext{live}, \dbtext{one}, 
\dbtext{cat}, \dbtext{hat}, \dbtext{country}, \dbtext{fish}, \dbtext{lake}, \dbtext{baby}, \dbtext{plant}, \dbtext{hide}, \dbtext{animal}, 
\dbtext{cold}, \dbtext{moon}, \dbtext{pet}, \dbtext{bird}, \dbtext{shark}, \dbtext{water}, \dbtext{rosebush}, \dbtext{yard}, \dbtext{sloth}, 
\dbtext{bat}, \dbtext{lizard}, \dbtext{beautiful}, \dbtext{eye}, \dbtext{nose}, \dbtext{smell}, \dbtext{well}, \dbtext{bill}, \dbtext{snow}, 
\dbtext{weather}, \dbtext{leg}, \dbtext{everything}, \dbtext{mouse}, \dbtext{hole}, \dbtext{nature}, \dbtext{bald eagle}, \dbtext{nest}, 
\dbtext{crab}, \dbtext{ficus}, \dbtext{sea}, \dbtext{anemone}, \dbtext{ocean}, \dbtext{sun}, \dbtext{sky}, \dbtext{grape}, \dbtext{horse}, 
\dbtext{hot}, \dbtext{kill}, \dbtext{foot}, \dbtext{meadow}, \dbtext{camp}, \dbtext{den}, \dbtext{cow}, \dbtext{earth}, \dbtext{garden}, 
\dbtext{poop}, \dbtext{outside}, \dbtext{frog}, \dbtext{light}, \dbtext{fox}, \dbtext{forest}, \dbtext{marmot}, \dbtext{mountain}, \dbtext{drop}, 
\dbtext{bone}, \dbtext{rain}, \dbtext{body}, \dbtext{ferret}, \dbtext{small dog}, \dbtext{doll}, \dbtext{lemur}, \dbtext{name}, \dbtext{nice}, 
\dbtext{museum}, \dbtext{black}, \dbtext{canada}, \dbtext{bad}, \dbtext{wind}, \dbtext{hand}, \dbtext{pee}, \dbtext{road}, \dbtext{boat}, 
\dbtext{wild}, \dbtext{war}, \dbtext{wet}, \dbtext{flower}, \dbtext{small}, \dbtext{new york}, \dbtext{farm}, \dbtext{color}, \dbtext{red}, 
\dbtext{stone}, \dbtext{green}, \dbtext{life}, \dbtext{burn}, \dbtext{large}, \dbtext{soft}, \dbtext{fire}, \dbtext{finger}, \dbtext{dangerous}, 
\dbtext{marmoset}, \dbtext{australia}, \dbtext{leave}, \dbtext{heavy}, \dbtext{cuba}, \dbtext{france}, \dbtext{italy}, \dbtext{unite state}, 
\dbtext{hill}, \dbtext{apple tree}, \dbtext{god}, \dbtext{space}, \dbtext{mouth}, \dbtext{river}, \dbtext{blue}, \dbtext{grass}, \dbtext{mammal}, 
\dbtext{lot}, \dbtext{hair}, \dbtext{measure}, \dbtext{utah}, \dbtext{bug}, \dbtext{tooth}, \dbtext{sand}, \dbtext{dictionary}, \dbtext{rise}, 
\dbtext{not}, \dbtext{bite}, \dbtext{dark}, \dbtext{science}, \dbtext{world}, \dbtext{air}, \dbtext{sheep}, \dbtext{statue}, \dbtext{warm}, 
\dbtext{big}, \dbtext{high}, \dbtext{squirrel}, \dbtext{mean}, \dbtext{general}, \dbtext{heat}, \dbtext{cool}, \dbtext{skin}, \dbtext{art}, 
\dbtext{noun}, \dbtext{hard}, \dbtext{duck}, \dbtext{ring}, \dbtext{wyom}, \dbtext{thing}, \dbtext{land}, \dbtext{kill person}, \dbtext{little}, 
\dbtext{ear}, \dbtext{alive}, \dbtext{field}, \dbtext{wave}, \dbtext{face}, \dbtext{lawn}, \dbtext{long}, \dbtext{shade}, \dbtext{fly}, \dbtext{bee}, 
\dbtext{bear}, \dbtext{head}, \dbtext{bullet}, \dbtext{degree}, \dbtext{gas}, \dbtext{brown}, \dbtext{dirt}, \dbtext{adjective}, \dbtext{alaska}, 
\dbtext{michigan}, \dbtext{maryland}, \dbtext{maine}, \dbtext{delaware}, \dbtext{kansa}, \dbtext{be}, \dbtext{steam}, \dbtext{pretty}, 
\dbtext{decoration}, \dbtext{step}, \dbtext{part}, \dbtext{bush}, \dbtext{countryside}, \dbtext{same}, \dbtext{grow}, \dbtext{sunshine}, 
\dbtext{flea}, \dbtext{short}, \dbtext{outdoors}, \dbtext{stick}, \dbtext{singular}, \dbtext{find outside}, \dbtext{branch}, \dbtext{wax}, 
\dbtext{generic}, \dbtext{ground}, \dbtext{unit}

\paragraph{Community 3 (size is 1).} 
\dbtext{out}

\paragraph{Community 4 (size is 1).} 
\dbtext{course}

\paragraph{Community 5 (size is 40).} 
\dbtext{type}, \dbtext{word}, \dbtext{movie}, \dbtext{music}, \dbtext{game}, \dbtext{family}, \dbtext{concert}, \dbtext{call}, \dbtext{story}, 
\dbtext{speak}, \dbtext{band}, \dbtext{audience}, \dbtext{write}, \dbtext{pool}, \dbtext{show}, \dbtext{sound}, \dbtext{news}, \dbtext{club}, 
\dbtext{theatre}, \dbtext{theater}, \dbtext{noise}, \dbtext{view}, \dbtext{record}, \dbtext{song}, \dbtext{company}, \dbtext{communication}, 
\dbtext{act}, \dbtext{crowd}, \dbtext{voice}, \dbtext{event}, \dbtext{hear}, \dbtext{organization}, \dbtext{pass}, \dbtext{end}, \dbtext{group}, 
\dbtext{point}, \dbtext{stage}, \dbtext{race}, \dbtext{many person}, \dbtext{general term}

\paragraph{Community 6 (size is 234).} \dbtext{something}, \dbtext{town}, \dbtext{soup}, \dbtext{library}, \dbtext{school}, \dbtext{plane}, 
\dbtext{class}, \dbtext{drink}, \dbtext{paper}, \dbtext{bed}, \dbtext{dirty}, \dbtext{gym}, \dbtext{office build}, \dbtext{wiener dog}, \dbtext{box}, 
\dbtext{object}, \dbtext{mother}, \dbtext{coffee}, \dbtext{candle}, \dbtext{street}, \dbtext{bus}, \dbtext{eat}, \dbtext{computer}, \dbtext{line}, 
\dbtext{milk}, \dbtext{tv}, \dbtext{drawer}, \dbtext{storage}, \dbtext{car}, \dbtext{vehicle}, \dbtext{bottle}, \dbtext{turn}, \dbtext{material}, 
\dbtext{chair}, \dbtext{market}, \dbtext{house}, \dbtext{hotel}, \dbtext{hospital}, \dbtext{bank}, \dbtext{girl}, \dbtext{church}, \dbtext{cook}, 
\dbtext{shop}, \dbtext{letter}, \dbtext{bathroom}, \dbtext{city}, \dbtext{desk}, \dbtext{office}, \dbtext{home}, \dbtext{couch}, \dbtext{kitchen}, 
\dbtext{build}, \dbtext{restaurant}, \dbtext{spoon}, \dbtext{butter}, \dbtext{key}, \dbtext{electricity}, \dbtext{stand}, \dbtext{pen}, 
\dbtext{television}, \dbtext{magazine}, \dbtext{paint}, \dbtext{food}, \dbtext{bedroom}, \dbtext{store}, \dbtext{airport}, \dbtext{sugar}, 
\dbtext{grocery store}, \dbtext{basket}, \dbtext{hold}, \dbtext{refrigerator}, \dbtext{newspaper}, \dbtext{rice}, \dbtext{surface}, \dbtext{liquid}, 
\dbtext{window}, \dbtext{oil}, \dbtext{cover}, \dbtext{plate}, \dbtext{dinner}, \dbtext{garage}, \dbtext{potato}, \dbtext{napkin}, \dbtext{salad}, 
\dbtext{glass}, \dbtext{cupboard}, \dbtext{telephone}, \dbtext{salt}, \dbtext{motel}, \dbtext{meat}, \dbtext{bookstore}, \dbtext{use}, \dbtext{cloth}, 
\dbtext{factory}, \dbtext{bottle wine}, \dbtext{pencil}, \dbtext{wheel}, \dbtext{book}, \dbtext{instrument}, \dbtext{trash}, \dbtext{can}, \dbtext{a}, 
\dbtext{picture}, \dbtext{seat}, \dbtext{clothe}, \dbtext{dish}, \dbtext{train station}, \dbtext{mall}, \dbtext{wallet}, \dbtext{room}, \dbtext{cell}, 
\dbtext{bicycle}, \dbtext{sink}, \dbtext{pocket}, \dbtext{white}, \dbtext{vegetable}, \dbtext{shoe}, \dbtext{scale}, \dbtext{steak}, \dbtext{beer}, 
\dbtext{knife}, \dbtext{carpet}, \dbtext{bowl}, \dbtext{corn}, \dbtext{fridge}, \dbtext{soap}, \dbtext{expensive}, \dbtext{coin}, \dbtext{number}, 
\dbtext{fruit}, \dbtext{map}, \dbtext{fork}, \dbtext{steel}, \dbtext{piano}, \dbtext{wall}, \dbtext{cup}, \dbtext{square}, \dbtext{shelf}, 
\dbtext{friend house}, \dbtext{airplane}, \dbtext{phone}, \dbtext{this}, \dbtext{place}, \dbtext{radio}, \dbtext{tool}, \dbtext{apple}, 
\dbtext{bag}, \dbtext{doctor}, \dbtext{cheese}, \dbtext{bean}, \dbtext{make}, \dbtext{flat}, \dbtext{plastic}, \dbtext{container}, \dbtext{bar}, 
\dbtext{live room}, \dbtext{toilet}, \dbtext{cabinet}, \dbtext{table}, \dbtext{furniture}, \dbtext{lamp}, \dbtext{pizza}, \dbtext{dust}, 
\dbtext{hall}, \dbtext{closet}, \dbtext{boy}, \dbtext{door}, \dbtext{floor}, \dbtext{meet}, \dbtext{basement}, \dbtext{sofa}, \dbtext{cut}, 
\dbtext{page}, \dbtext{college}, \dbtext{metal}, \dbtext{open}, \dbtext{alcohol}, \dbtext{university}, \dbtext{roll}, \dbtext{clock}, 
\dbtext{round}, \dbtext{top}, \dbtext{wine}, \dbtext{jar}, \dbtext{put}, \dbtext{toy}, \dbtext{draw}, \dbtext{edible}, \dbtext{rug}, 
\dbtext{pot}, \dbtext{change}, \dbtext{bread}, \dbtext{tin}, \dbtext{oven}, \dbtext{carry}, \dbtext{test}, \dbtext{egg}, \dbtext{building}, 
\dbtext{business}, \dbtext{resturant}, \dbtext{wash}, \dbtext{sock}, \dbtext{bell}, \dbtext{sign}, \dbtext{pantry}, \dbtext{note}, 
\dbtext{card}, \dbtext{supermarket}, \dbtext{machine}, \dbtext{roof}, \dbtext{circle}, \dbtext{cake}, \dbtext{solid}, \dbtext{useful}, 
\dbtext{handle}, \dbtext{department}, \dbtext{side}, \dbtext{stapler}, \dbtext{classroom}, \dbtext{transport}, \dbtext{apartment}, 
\dbtext{any large city}, \dbtext{comfort}, \dbtext{edge}, \dbtext{case}, \dbtext{board}, \dbtext{corner}, \dbtext{find house}, \dbtext{winery}, 
\dbtext{polish}, \dbtext{eaten}, \dbtext{neighbor house}, \dbtext{usually}, \dbtext{generic term}

\paragraph{Community 7 (size is 368).} 
\dbtext{man}, \dbtext{person}, \dbtext{train}, \dbtext{work}, \dbtext{write program}, \dbtext{go concert}, \dbtext{hear music}, \dbtext{exercise}, 
\dbtext{love}, \dbtext{bath}, \dbtext{listen}, \dbtext{go performance}, \dbtext{take walk}, \dbtext{walk}, \dbtext{entertain}, \dbtext{run marathon}, 
\dbtext{wait line}, \dbtext{attend lecture}, \dbtext{study}, \dbtext{go walk}, \dbtext{play basketball}, \dbtext{fun}, \dbtext{bore}, 
\dbtext{wait table}, \dbtext{go see film}, \dbtext{go work}, \dbtext{watch tv show}, \dbtext{wake up morning}, \dbtext{dream}, \dbtext{shower}, 
\dbtext{child}, \dbtext{go fish}, \dbtext{tell story}, \dbtext{surf web}, \dbtext{play football}, \dbtext{go restaurant}, \dbtext{visit museum}, 
\dbtext{study subject}, \dbtext{live life}, \dbtext{go sport event}, \dbtext{go play}, \dbtext{sit}, \dbtext{play soccer}, \dbtext{go jog}, 
\dbtext{take shower}, \dbtext{play ball}, \dbtext{eat food}, \dbtext{watch movie}, \dbtext{watch film}, \dbtext{stretch}, \dbtext{play frisbee}, 
\dbtext{go school}, \dbtext{surprise}, \dbtext{paint picture}, \dbtext{go film}, \dbtext{party}, \dbtext{rest}, \dbtext{listen radio}, 
\dbtext{kiss}, \dbtext{remember}, \dbtext{housework}, \dbtext{clean}, \dbtext{lunch}, \dbtext{watch tv}, \dbtext{attend school}, 
\dbtext{play tennis}, \dbtext{comfortable}, \dbtext{play}, \dbtext{take bus}, \dbtext{conversation}, \dbtext{talk}, \dbtext{take course}, 
\dbtext{learn}, \dbtext{plan}, \dbtext{think}, \dbtext{go run}, \dbtext{sleep}, \dbtext{hang out bar}, \dbtext{plan vacation}, 
\dbtext{go see play}, \dbtext{attend class}, \dbtext{go swim}, \dbtext{ride bike}, \dbtext{buy}, \dbtext{eat restaurant}, \dbtext{stress}, 
\dbtext{boredom}, \dbtext{ticket}, \dbtext{use television}, \dbtext{dress}, \dbtext{entertainment}, \dbtext{listen music}, 
\dbtext{enjoyment}, \dbtext{hurt}, \dbtext{student}, \dbtext{muscle}, \dbtext{woman}, \dbtext{go movie}, \dbtext{enlightenment}, 
\dbtext{stand line}, \dbtext{attend classical concert}, \dbtext{death}, \dbtext{play sport}, \dbtext{eat dinner}, \dbtext{effort}, 
\dbtext{drive car}, \dbtext{traveling}, \dbtext{knowledge}, \dbtext{teach}, \dbtext{laugh joke}, \dbtext{run}, \dbtext{read book}, 
\dbtext{education}, \dbtext{take note}, \dbtext{travel}, \dbtext{go store}, \dbtext{see}, \dbtext{go sleep}, \dbtext{tire}, \dbtext{attention}, 
\dbtext{die}, \dbtext{fall asleep}, \dbtext{money}, \dbtext{run errand}, \dbtext{patience}, \dbtext{spend money}, \dbtext{cry}, 
\dbtext{pay bill}, \dbtext{earn money}, \dbtext{take bath}, \dbtext{drink water}, \dbtext{fatigue}, \dbtext{take break}, \dbtext{hike}, 
\dbtext{drink alcohol}, \dbtext{lie}, \dbtext{play chess}, \dbtext{friend}, \dbtext{anger}, \dbtext{read}, \dbtext{curiosity}, \dbtext{pay}, 
\dbtext{swim}, \dbtext{break}, \dbtext{verb}, \dbtext{drive}, \dbtext{use computer}, \dbtext{take film}, \dbtext{smile}, \dbtext{fiddle}, 
\dbtext{we}, \dbtext{wrestle}, \dbtext{see new}, \dbtext{dance}, \dbtext{fight}, \dbtext{job}, \dbtext{smart}, \dbtext{play baseball}, 
\dbtext{excite}, \dbtext{attend rock concert}, \dbtext{hear news}, \dbtext{contemplate}, \dbtext{pain}, \dbtext{understand}, 
\dbtext{stay healthy}, \dbtext{research}, \dbtext{learn new}, \dbtext{sweat}, \dbtext{headache}, \dbtext{fart}, \dbtext{read newspaper}, 
\dbtext{sport}, \dbtext{understand better}, \dbtext{write story}, \dbtext{stop}, \dbtext{transportation}, \dbtext{fall down}, \dbtext{practice}, 
\dbtext{help}, \dbtext{lose}, \dbtext{close eye}, \dbtext{satisfaction}, \dbtext{time}, \dbtext{answer question}, \dbtext{perform}, \dbtext{need}, 
\dbtext{everyone}, \dbtext{go somewhere}, \dbtext{good}, \dbtext{play card}, \dbtext{go}, \dbtext{sex}, \dbtext{wait}, \dbtext{buy ticket}, 
\dbtext{gain knowledge}, \dbtext{interest}, \dbtext{feel}, \dbtext{sit down}, \dbtext{ski}, \dbtext{surf}, \dbtext{teacher}, \dbtext{happiness}, 
\dbtext{exhaustion}, \dbtext{sit chair}, \dbtext{laugh}, \dbtext{relax}, \dbtext{waste time}, \dbtext{pleasure}, \dbtext{relaxation}, 
\dbtext{care}, \dbtext{procreate}, \dbtext{watch}, \dbtext{funny}, \dbtext{win}, \dbtext{go mall}, \dbtext{flirt}, \dbtext{pass time}, 
\dbtext{shape}, \dbtext{climb}, \dbtext{wash hand}, \dbtext{go home}, \dbtext{love else}, \dbtext{drunk}, \dbtext{peace}, \dbtext{sing}, 
\dbtext{buy beer}, \dbtext{internet}, \dbtext{kid}, \dbtext{like}, \dbtext{date}, \dbtext{find}, \dbtext{play game}, \dbtext{activity}, 
\dbtext{jog}, \dbtext{quiet}, \dbtext{skill}, \dbtext{hobby}, \dbtext{birthday}, \dbtext{tiredness}, \dbtext{drink coffee}, 
\dbtext{read magazine}, \dbtext{good time}, \dbtext{good health}, \dbtext{play hockey}, \dbtext{eat ice cream}, \dbtext{learn language}, 
\dbtext{dive}, \dbtext{go zoo}, \dbtext{go internet}, \dbtext{cash}, \dbtext{important}, \dbtext{read child}, \dbtext{enjoy yourself}, 
\dbtext{see movie}, \dbtext{energy}, \dbtext{emotion}, \dbtext{clean house}, \dbtext{fit}, \dbtext{view video}, \dbtext{play poker}, 
\dbtext{excitement}, \dbtext{move}, \dbtext{fly airplane}, \dbtext{ride horse}, \dbtext{stay bed}, \dbtext{look}, \dbtext{happy}, 
\dbtext{find information}, \dbtext{fear}, \dbtext{go vacation}, \dbtext{breathe}, \dbtext{recreation}, \dbtext{enjoy}, \dbtext{jump}, 
\dbtext{ride bicycle}, \dbtext{health}, \dbtext{communicate}, \dbtext{make money}, \dbtext{become tire}, \dbtext{action}, \dbtext{fall}, 
\dbtext{lose weight}, \dbtext{jump up down}, \dbtext{watch television}, \dbtext{count}, \dbtext{healthy}, \dbtext{know}, \dbtext{learn subject}, 
\dbtext{joy}, \dbtext{stand up}, \dbtext{information}, \dbtext{read letter}, \dbtext{lay}, \dbtext{jump rope}, \dbtext{celebrate}, 
\dbtext{sadness}, \dbtext{bike}, \dbtext{watch musician perform}, \dbtext{motion}, \dbtext{feel better}, \dbtext{compete}, \dbtext{feel good}, 
\dbtext{accident}, \dbtext{stay fit}, \dbtext{injury}, \dbtext{ride}, \dbtext{play piano}, \dbtext{learn world}, \dbtext{see exhibit}, 
\dbtext{release energy}, \dbtext{see art}, \dbtext{see excite story}, \dbtext{orgasm}, \dbtext{trip}, \dbtext{laughter}, \dbtext{express yourself}, 
\dbtext{discover truth}, \dbtext{see favorite show}, \dbtext{go party}, \dbtext{competition}, \dbtext{express information}, \dbtext{climb mountain}, 
\dbtext{attend meet}, \dbtext{fly kite}, \dbtext{examine}, \dbtext{meet friend}, \dbtext{read news}, \dbtext{shock}, \dbtext{return work}, 
\dbtext{see band}, \dbtext{visit art gallery}, \dbtext{earn live}, \dbtext{punch}, \dbtext{cool off}, \dbtext{watch television show}, 
\dbtext{socialize}, \dbtext{skate}, \dbtext{movement}, \dbtext{create art}, \dbtext{crossword puzzle}, \dbtext{enjoy film}, \dbtext{go pub}, 
\dbtext{feel happy}, \dbtext{play lacrosse}, \dbtext{socialis}, \dbtext{away}, \dbtext{physical activity}, \dbtext{get}, \dbtext{make person laugh}, 
\dbtext{make friend}, \dbtext{chat friend}, \dbtext{meet person}, \dbtext{meet interest person}, \dbtext{get drunk}, \dbtext{friend over}, 
\dbtext{get exercise}, \dbtext{get tire}, \dbtext{enjoy company friend}, \dbtext{play game friend}, \dbtext{get physical activity}, 
\dbtext{go opus}, \dbtext{get shape}, \dbtext{sit quietly}, \dbtext{do it}, \dbtext{get fit}, \dbtext{teach other person}, 
\dbtext{entertain person}, \dbtext{see person play game}

\subsection{Eigenvectors}
The algorithm used is the one implemented in \igraph 
which is based on \citep{communities:eigenvectors}.
Table \ref{tbl:communities:positive:eigenvector:cores} presents the results when we apply the algorithm for 
subgraphs in which we include vertices with successively lower coreness and allow positive polarity on the edges only.
The computation lasted $94.98$ seconds for $2$ runs.

\begin{table}[ht]
\caption{Applying the leading eigenvector algorithm for community finding implemented in \igraph by successively including
vertices with lower coreness on the undirected graph induced by the assertions with positive polarity (self-loops are removed).
In every row we have the number of vertices and the number of edges of each such subgraph together with
the number of components ($\abs{C}$) that we find in that subgraph. 
The next three columns present the number of communities found by the algorithm;
the average among all runs, the minimum, and the maximum.
The next three columns present the modularity achieved by the algorithm due to the cut induced by the communities;
the average among all runs, the minimum, and the maximum.
The entire computation lasted $94.98$ seconds for $2$ runs;
that is about $47.49$ seconds per run.}\label{tbl:communities:positive:eigenvector:cores}
\begin{center}
\begin{tabular}{|c|r|r|c||r|r|r||r|r|r||}\hline
\multirow{2}{*}{coreness} & \multicolumn{1}{c|}{\multirow{2}{*}{$\abs{V}$}} & \multicolumn{1}{c|}{\multirow{2}{*}{$\abs{E}$}} & \multirow{2}{*}{$\abs{C}$} & \multicolumn{3}{c||}{communities found} & \multicolumn{3}{c||}{modularity} \\\cline{5-10}
          &        &        &   & \multicolumn{1}{c|}{avg} & \multicolumn{1}{c|}{min} & \multicolumn{1}{c||}{max} & \multicolumn{1}{c|}{avg} & \multicolumn{1}{c|}{min} & \multicolumn{1}{c||}{max} \\\hline\hline
$\geq 26$ &    869 &  20526 &    1 &  4.000 &    4 &    4 & 0.304160 & 0.304160 & 0.304160 \\\hline
$\geq 25$ &   1167 &  27810 &    1 &  5.000 &    5 &    5 & 0.300161 & 0.300161 & 0.300161 \\\hline
$\geq 24$ &   1358 &  32314 &    1 &  4.000 &    4 &    4 & 0.289870 & 0.289870 & 0.289870 \\\hline
$\geq 23$ &   1514 &  35870 &    1 &  3.000 &    3 &    3 & 0.282888 & 0.282888 & 0.282888 \\\hline
$\geq 22$ &   1709 &  40099 &    1 &  4.000 &    4 &    4 & 0.287385 & 0.287385 & 0.287385 \\\hline
$\geq 21$ &   1865 &  43330 &    1 &  4.000 &    4 &    4 & 0.286488 & 0.286488 & 0.286488 \\\hline
$\geq 20$ &   2007 &  46145 &    1 &  4.000 &    4 &    4 & 0.289613 & 0.289613 & 0.289613 \\\hline
$\geq 19$ &   2173 &  49265 &    1 &  5.000 &    5 &    5 & 0.293102 & 0.293102 & 0.293102 \\\hline
$\geq 18$ &   2384 &  53011 &    1 &  4.000 &    4 &    4 & 0.292260 & 0.292260 & 0.292260 \\\hline
$\geq 17$ &   2617 &  56939 &    1 &  5.000 &    5 &    5 & 0.293102 & 0.293102 & 0.293102 \\\hline
$\geq 16$ &   2847 &  60583 &    1 &  4.000 &    4 &    4 & 0.278867 & 0.278867 & 0.278867 \\\hline
$\geq 15$ &   3105 &  64412 &    1 &  4.000 &    4 &    4 & 0.280229 & 0.280229 & 0.280229 \\\hline
$\geq 14$ &   3407 &  68613 &    1 &  5.000 &    5 &    5 & 0.271588 & 0.271588 & 0.271588 \\\hline
$\geq 13$ &   3746 &  72978 &    1 &  5.000 &    5 &    5 & 0.274841 & 0.274841 & 0.274841 \\\hline
$\geq 12$ &   4160 &  77882 &    1 &  6.000 &    6 &    6 & 0.282679 & 0.282679 & 0.282679 \\\hline
$\geq 11$ &   4634 &  83039 &    1 &  5.000 &    5 &    5 & 0.302635 & 0.302635 & 0.302635 \\\hline
$\geq 10$ &   5182 &  88462 &    1 &  4.000 &    4 &    4 & 0.281035 & 0.281035 & 0.281035 \\\hline
$\geq  9$ &   5883 &  94709 &    1 &  4.000 &    4 &    4 & 0.275202 & 0.275202 & 0.275202 \\\hline
$\geq  8$ &   6750 & 101564 &    1 &  4.000 &    4 &    4 & 0.273375 & 0.273375 & 0.273375 \\\hline
$\geq  7$ &   7904 & 109561 &    1 &  2.000 &    2 &    2 & 0.266965 & 0.266965 & 0.266965 \\\hline
$\geq  6$ &   9392 & 118389 &    1 &  4.000 &    4 &    4 & 0.282719 & 0.282719 & 0.282719 \\\hline
$\geq  5$ &  11483 & 128731 &    1 &  3.000 &    3 &    3 & 0.289544 & 0.289544 & 0.289544 \\\hline
$\geq  4$ &  14864 & 142112 &    1 &  3.000 &    3 &    3 & 0.290085 & 0.290085 & 0.290085 \\\hline
$\geq  3$ &  21812 & 162691 &    1 &  4.000 &    4 &    4 & 0.287835 & 0.287835 & 0.287835 \\\hline
$\geq  2$ &  41659 & 201678 &    4 & 12.000 &   12 &   12 & 0.247697 & 0.247697 & 0.247697 \\\hline
\end{tabular}
\end{center}
\end{table}

\subsubsection{Eigenvectors: Communities in the Inner-Most Core}
We have the following communities.
\paragraph{Community 1 (size is 209).} 
\dbtext{something}, \dbtext{town}, \dbtext{word}, \dbtext{library}, \dbtext{school}, \dbtext{human}, \dbtext{plane}, \dbtext{it}, \dbtext{paper}, 
\dbtext{bed}, \dbtext{dirty}, \dbtext{gym}, \dbtext{office build}, \dbtext{box}, \dbtext{object}, \dbtext{mother}, \dbtext{coffee}, \dbtext{candle}, 
\dbtext{clean}, \dbtext{street}, \dbtext{bus}, \dbtext{computer}, \dbtext{line}, \dbtext{drawer}, \dbtext{storage}, \dbtext{car}, \dbtext{vehicle}, 
\dbtext{bottle}, \dbtext{material}, \dbtext{chair}, \dbtext{hat}, \dbtext{market}, \dbtext{house}, \dbtext{hotel}, \dbtext{hospital}, \dbtext{bank}, 
\dbtext{girl}, \dbtext{church}, \dbtext{family}, \dbtext{shop}, \dbtext{letter}, \dbtext{bathroom}, \dbtext{city}, \dbtext{desk}, \dbtext{office}, 
\dbtext{home}, \dbtext{couch}, \dbtext{kitchen}, \dbtext{build}, \dbtext{restaurant}, \dbtext{key}, \dbtext{e\-lec\-tric\-i\-ty}, \dbtext{stand}, \dbtext{pen}, 
\dbtext{bill}, \dbtext{magazine}, \dbtext{paint}, \dbtext{bedroom}, \dbtext{store}, \dbtext{airport}, \dbtext{grocery store}, \dbtext{hold}, 
\dbtext{refrigerator}, \dbtext{newspaper}, \dbtext{surface}, \dbtext{window}, \dbtext{cover}, \dbtext{garage}, \dbtext{napkin}, \dbtext{light}, 
\dbtext{glass}, \dbtext{telephone}, \dbtext{motel}, \dbtext{bookstore}, \dbtext{use}, \dbtext{cloth}, \dbtext{factory}, \dbtext{bottle wine}, 
\dbtext{doll}, \dbtext{pencil}, \dbtext{wheel}, \dbtext{name}, \dbtext{book}, \dbtext{museum}, \dbtext{pool}, \dbtext{instrument}, \dbtext{trash}, 
\dbtext{picture}, \dbtext{road}, \dbtext{seat}, \dbtext{boat}, \dbtext{clothe}, \dbtext{train station}, \dbtext{war}, \dbtext{mall}, \dbtext{wallet}, 
\dbtext{room}, \dbtext{cell}, \dbtext{sink}, \dbtext{pocket}, \dbtext{large}, \dbtext{shoe}, \dbtext{scale}, \dbtext{beer}, \dbtext{knife}, 
\dbtext{carpet}, \dbtext{soap}, \dbtext{expensive}, \dbtext{coin}, \dbtext{number}, \dbtext{heavy}, \dbtext{map}, \dbtext{fork}, \dbtext{steel}, 
\dbtext{piano}, \dbtext{wall}, \dbtext{cup}, \dbtext{square}, \dbtext{shelf}, \dbtext{friend house}, \dbtext{airplane}, \dbtext{phone}, \dbtext{this}, 
\dbtext{place}, \dbtext{radio}, \dbtext{tool}, \dbtext{bag}, \dbtext{doctor}, \dbtext{theater}, \dbtext{make}, \dbtext{measure}, \dbtext{flat}, 
\dbtext{plastic}, \dbtext{container}, \dbtext{bar}, \dbtext{live room}, \dbtext{toilet}, \dbtext{cabinet}, \dbtext{table}, \dbtext{furniture}, 
\dbtext{lamp}, \dbtext{dust}, \dbtext{hall}, \dbtext{dictionary}, \dbtext{closet}, \dbtext{boy}, \dbtext{door}, \dbtext{floor}, \dbtext{basement}, 
\dbtext{sofa}, \dbtext{page}, \dbtext{company}, \dbtext{dark}, \dbtext{college}, \dbtext{metal}, \dbtext{open}, \dbtext{alcohol}, \dbtext{university}, 
\dbtext{clock}, \dbtext{noun}, \dbtext{hard}, \dbtext{toy}, \dbtext{ring}, \dbtext{crowd}, \dbtext{draw}, \dbtext{thing}, \dbtext{rug}, \dbtext{change}, 
\dbtext{carry}, \dbtext{organization}, \dbtext{building}, \dbtext{business}, \dbtext{wash}, \dbtext{sock}, \dbtext{bell}, \dbtext{sign}, \dbtext{degree}, 
\dbtext{note}, \dbtext{card}, \dbtext{supermarket}, \dbtext{machine}, \dbtext{roof}, \dbtext{circle}, \dbtext{solid}, \dbtext{point}, \dbtext{useful}, 
\dbtext{handle}, \dbtext{department}, \dbtext{side}, \dbtext{stapler}, \dbtext{classroom}, \dbtext{transport}, \dbtext{step}, \dbtext{apartment}, 
\dbtext{part}, \dbtext{stage}, \dbtext{any large city}, \dbtext{comfort}, \dbtext{case}, \dbtext{board}, \dbtext{corner}, \dbtext{singular}, 
\dbtext{find house}, \dbtext{winery}, \dbtext{polish}, \dbtext{wax}, \dbtext{neighbor house}, \dbtext{usually}, \dbtext{unit}

\paragraph{Community 2 (size is 160).} 
\dbtext{man}, \dbtext{train}, \dbtext{work}, \dbtext{exercise}, \dbtext{take walk}, \dbtext{walk}, \dbtext{run marathon}, \dbtext{go walk}, 
\dbtext{play basketball}, \dbtext{fun}, \dbtext{wait table}, \dbtext{go work}, \dbtext{shower}, \dbtext{child}, \dbtext{smoke}, \dbtext{go fish}, 
\dbtext{play football}, \dbtext{live life}, \dbtext{play soccer}, \dbtext{go jog}, \dbtext{take shower}, \dbtext{play ball}, \dbtext{stretch}, 
\dbtext{play frisbee}, \dbtext{rest}, \dbtext{housework}, \dbtext{play tennis}, \dbtext{play}, \dbtext{take bus}, \dbtext{plan}, \dbtext{go run}, 
\dbtext{sleep}, \dbtext{hang out bar}, \dbtext{go swim}, \dbtext{ride bike}, \dbtext{dress}, \dbtext{one}, \dbtext{turn}, \dbtext{hurt}, \dbtext{game}, 
\dbtext{muscle}, \dbtext{woman}, \dbtext{death}, \dbtext{play sport}, \dbtext{effort}, \dbtext{drive car}, \dbtext{traveling}, \dbtext{run}, 
\dbtext{travel}, \dbtext{go store}, \dbtext{tire}, \dbtext{die}, \dbtext{run errand}, \dbtext{earn money}, \dbtext{drink water}, \dbtext{fatigue}, 
\dbtext{take break}, \dbtext{hike}, \dbtext{lie}, \dbtext{anger}, \dbtext{kill}, \dbtext{swim}, \dbtext{break}, \dbtext{verb}, \dbtext{drive}, 
\dbtext{wrestle}, \dbtext{dance}, \dbtext{fight}, \dbtext{play baseball}, \dbtext{attend rock concert}, \dbtext{pain}, \dbtext{stay healthy}, 
\dbtext{sweat}, \dbtext{sport}, \dbtext{stop}, \dbtext{transportation}, \dbtext{fall down}, \dbtext{practice}, \dbtext{lose}, \dbtext{bicycle}, 
\dbtext{go somewhere}, \dbtext{life}, \dbtext{go}, \dbtext{sex}, \dbtext{ski}, \dbtext{surf}, \dbtext{exhaustion}, \dbtext{procreate}, \dbtext{win}, 
\dbtext{go mall}, \dbtext{shape}, \dbtext{climb}, \dbtext{go home}, \dbtext{play game}, \dbtext{activity}, \dbtext{jog}, \dbtext{skill}, 
\dbtext{tiredness}, \dbtext{good health}, \dbtext{play hockey}, \dbtext{cool}, \dbtext{dive}, \dbtext{cash}, \dbtext{energy}, \dbtext{clean house}, 
\dbtext{fit}, \dbtext{move}, \dbtext{fly airplane}, \dbtext{ride horse}, \dbtext{fear}, \dbtext{go vacation}, \dbtext{breathe}, \dbtext{recreation}, 
\dbtext{jump}, \dbtext{ride bicycle}, \dbtext{health}, \dbtext{become tire}, \dbtext{action}, \dbtext{pass}, \dbtext{fall}, \dbtext{lose weight}, 
\dbtext{jump up down}, \dbtext{count}, \dbtext{healthy}, \dbtext{stand up}, \dbtext{lay}, \dbtext{jump rope}, \dbtext{bike}, \dbtext{motion}, 
\dbtext{feel better}, \dbtext{compete}, \dbtext{feel good}, \dbtext{accident}, \dbtext{stay fit}, \dbtext{injury}, \dbtext{ride}, 
\dbtext{release energy}, \dbtext{trip}, \dbtext{competition}, \dbtext{climb mountain}, \dbtext{fly kite}, \dbtext{race}, \dbtext{shock}, 
\dbtext{return work}, \dbtext{punch}, \dbtext{cool off}, \dbtext{skate}, \dbtext{movement}, \dbtext{go pub}, \dbtext{play lacrosse}, \dbtext{away}, 
\dbtext{physical activity}, \dbtext{get}, \dbtext{get drunk}, \dbtext{get exercise}, \dbtext{get tire}, \dbtext{get physical activity}, 
\dbtext{get shape}, \dbtext{do it}, \dbtext{get fit}

\paragraph{Community 3 (size is 245).} 
\dbtext{rock}, \dbtext{beach}, \dbtext{tree}, \dbtext{monkey}, \dbtext{soup}, \dbtext{weasel}, \dbtext{pant}, \dbtext{kitten}, \dbtext{arm}, 
\dbtext{beaver}, \dbtext{drink}, \dbtext{chicken}, \dbtext{state}, \dbtext{wiener dog}, \dbtext{ball}, \dbtext{eat food}, \dbtext{fungus}, 
\dbtext{park}, \dbtext{snake}, \dbtext{wood}, \dbtext{eat}, \dbtext{bridge}, \dbtext{cloud}, \dbtext{milk}, \dbtext{dog}, \dbtext{zoo}, 
\dbtext{live}, \dbtext{cat}, \dbtext{country}, \dbtext{fish}, \dbtext{lake}, \dbtext{plant}, \dbtext{hide}, \dbtext{animal}, \dbtext{cold}, 
\dbtext{moon}, \dbtext{pet}, \dbtext{cook}, \dbtext{bird}, \dbtext{shark}, \dbtext{water}, \dbtext{rosebush}, \dbtext{yard}, \dbtext{sloth}, 
\dbtext{bat}, \dbtext{lizard}, \dbtext{spoon}, \dbtext{butter}, \dbtext{beautiful}, \dbtext{eye}, \dbtext{nose}, \dbtext{smell}, \dbtext{well}, 
\dbtext{snow}, \dbtext{weather}, \dbtext{leg}, \dbtext{everything}, \dbtext{mouse}, \dbtext{hole}, \dbtext{nature}, \dbtext{bald eagle}, 
\dbtext{nest}, \dbtext{crab}, \dbtext{ficus}, \dbtext{sea}, \dbtext{anemone}, \dbtext{ocean}, \dbtext{sun}, \dbtext{sky}, \dbtext{food}, 
\dbtext{grape}, \dbtext{horse}, \dbtext{hot}, \dbtext{sugar}, \dbtext{basket}, \dbtext{foot}, \dbtext{rice}, \dbtext{liquid}, \dbtext{meadow}, 
\dbtext{camp}, \dbtext{oil}, \dbtext{plate}, \dbtext{dinner}, \dbtext{den}, \dbtext{cow}, \dbtext{earth}, \dbtext{garden}, \dbtext{poop}, 
\dbtext{potato}, \dbtext{outside}, \dbtext{frog}, \dbtext{salad}, \dbtext{fox}, \dbtext{forest}, \dbtext{cupboard}, \dbtext{marmot}, 
\dbtext{mountain}, \dbtext{salt}, \dbtext{drop}, \dbtext{bone}, \dbtext{meat}, \dbtext{rain}, \dbtext{body}, \dbtext{ferret}, \dbtext{small dog}, 
\dbtext{lemur}, \dbtext{black}, \dbtext{canada}, \dbtext{can}, \dbtext{a}, \dbtext{wind}, \dbtext{hand}, \dbtext{pee}, \dbtext{wild}, \dbtext{dish}, 
\dbtext{wet}, \dbtext{flower}, \dbtext{small}, \dbtext{new york}, \dbtext{farm}, \dbtext{color}, \dbtext{white}, \dbtext{red}, \dbtext{stone}, 
\dbtext{vegetable}, \dbtext{green}, \dbtext{burn}, \dbtext{soft}, \dbtext{steak}, \dbtext{fire}, \dbtext{finger}, \dbtext{dangerous}, 
\dbtext{marmoset}, \dbtext{bowl}, \dbtext{australia}, \dbtext{corn}, \dbtext{fridge}, \dbtext{leave}, \dbtext{fruit}, \dbtext{cuba}, 
\dbtext{france}, \dbtext{italy}, \dbtext{unite state}, \dbtext{hill}, \dbtext{apple tree}, \dbtext{god}, \dbtext{space}, \dbtext{apple}, 
\dbtext{mouth}, \dbtext{river}, \dbtext{blue}, \dbtext{grass}, \dbtext{cheese}, \dbtext{mammal}, \dbtext{bean}, \dbtext{lot}, \dbtext{hair}, 
\dbtext{utah}, \dbtext{wash hand}, \dbtext{bug}, \dbtext{tooth}, \dbtext{pizza}, \dbtext{sand}, \dbtext{rise}, \dbtext{not}, \dbtext{cut}, 
\dbtext{bite}, \dbtext{world}, \dbtext{air}, \dbtext{sheep}, \dbtext{statue}, \dbtext{warm}, \dbtext{big}, \dbtext{high}, \dbtext{squirrel}, 
\dbtext{roll}, \dbtext{general}, \dbtext{round}, \dbtext{heat}, \dbtext{skin}, \dbtext{top}, \dbtext{wine}, \dbtext{jar}, \dbtext{put}, 
\dbtext{duck}, \dbtext{edible}, \dbtext{wyom}, \dbtext{land}, \dbtext{pot}, \dbtext{little}, \dbtext{ear}, \dbtext{alive}, \dbtext{bread}, 
\dbtext{field}, \dbtext{wave}, \dbtext{face}, \dbtext{lawn}, \dbtext{tin}, \dbtext{oven}, \dbtext{long}, \dbtext{shade}, \dbtext{fly}, 
\dbtext{egg}, \dbtext{bee}, \dbtext{resturant}, \dbtext{bear}, \dbtext{head}, \dbtext{group}, \dbtext{pantry}, \dbtext{bullet}, \dbtext{gas}, 
\dbtext{brown}, \dbtext{cake}, \dbtext{dirt}, \dbtext{adjective}, \dbtext{alaska}, \dbtext{michigan}, \dbtext{maryland}, \dbtext{maine}, 
\dbtext{delaware}, \dbtext{kansa}, \dbtext{be}, \dbtext{steam}, \dbtext{pretty}, \dbtext{decoration}, \dbtext{out}, \dbtext{bush}, 
\dbtext{course}, \dbtext{countryside}, \dbtext{power}, \dbtext{same}, \dbtext{edge}, \dbtext{grow}, \dbtext{sunshine}, \dbtext{flea}, 
\dbtext{short}, \dbtext{outdoors}, \dbtext{stick}, \dbtext{find outside}, \dbtext{branch}, \dbtext{general term}, \dbtext{generic}, 
\dbtext{ground}, \dbtext{eaten}, \dbtext{generic term}

\paragraph{Community 4 (size is 255).} 
\dbtext{person}, \dbtext{type}, \dbtext{write program}, \dbtext{go concert}, \dbtext{hear music}, \dbtext{love}, \dbtext{bath}, \dbtext{listen}, 
\dbtext{go performance}, \dbtext{class}, \dbtext{entertain}, \dbtext{wait line}, \dbtext{attend lecture}, \dbtext{study}, \dbtext{bore}, 
\dbtext{go see film}, \dbtext{watch tv show}, \dbtext{wake up morning}, \dbtext{dream}, \dbtext{tell story}, \dbtext{surf web}, \dbtext{movie}, 
\dbtext{go restaurant}, \dbtext{visit museum}, \dbtext{study subject}, \dbtext{go sport event}, \dbtext{go play}, \dbtext{sit}, \dbtext{watch movie}, 
\dbtext{watch film}, \dbtext{go school}, \dbtext{surprise}, \dbtext{paint picture}, \dbtext{go film}, \dbtext{party}, \dbtext{listen radio}, 
\dbtext{kiss}, \dbtext{remember}, \dbtext{lunch}, \dbtext{watch tv}, \dbtext{attend school}, \dbtext{trouble}, \dbtext{comfortable}, 
\dbtext{conversation}, \dbtext{talk}, \dbtext{take course}, \dbtext{learn}, \dbtext{think}, \dbtext{plan vacation}, \dbtext{go see play}, 
\dbtext{attend class}, \dbtext{nothing}, \dbtext{buy}, \dbtext{eat restaurant}, \dbtext{tv}, \dbtext{stress}, \dbtext{boredom}, \dbtext{ticket}, 
\dbtext{music}, \dbtext{use television}, \dbtext{entertainment}, \dbtext{listen music}, \dbtext{enjoyment}, \dbtext{baby}, \dbtext{student}, 
\dbtext{go movie}, \dbtext{enlightenment}, \dbtext{stand line}, \dbtext{attend classical concert}, \dbtext{eat dinner}, \dbtext{concert}, 
\dbtext{knowledge}, \dbtext{teach}, \dbtext{call}, \dbtext{laugh joke}, \dbtext{read book}, \dbtext{education}, \dbtext{take note}, \dbtext{see}, 
\dbtext{story}, \dbtext{go sleep}, \dbtext{attention}, \dbtext{fall asleep}, \dbtext{money}, \dbtext{patience}, \dbtext{spend money}, \dbtext{cry}, 
\dbtext{pay bill}, \dbtext{television}, \dbtext{speak}, \dbtext{take bath}, \dbtext{band}, \dbtext{drink alcohol}, \dbtext{play chess}, 
\dbtext{friend}, \dbtext{read}, \dbtext{curiosity}, \dbtext{pay}, \dbtext{use computer}, \dbtext{take film}, \dbtext{smile}, \dbtext{fiddle}, 
\dbtext{we}, \dbtext{see new}, \dbtext{job}, \dbtext{smart}, \dbtext{excite}, \dbtext{hear news}, \dbtext{contemplate}, \dbtext{audience}, 
\dbtext{understand}, \dbtext{write}, \dbtext{research}, \dbtext{learn new}, \dbtext{nice}, \dbtext{headache}, \dbtext{fart}, 
\dbtext{read newspaper}, \dbtext{understand better}, \dbtext{bad}, \dbtext{show}, \dbtext{write story}, \dbtext{help}, \dbtext{close eye}, 
\dbtext{satisfaction}, \dbtext{time}, \dbtext{answer question}, \dbtext{perform}, \dbtext{need}, \dbtext{everyone}, \dbtext{sound}, \dbtext{good}, 
\dbtext{play card}, \dbtext{wait}, \dbtext{buy ticket}, \dbtext{gain knowledge}, \dbtext{news}, \dbtext{interest}, \dbtext{feel}, \dbtext{sit down}, 
\dbtext{teacher}, \dbtext{happiness}, \dbtext{sit chair}, \dbtext{laugh}, \dbtext{club}, \dbtext{theatre}, \dbtext{relax}, \dbtext{waste time}, 
\dbtext{pleas\-ure}, \dbtext{relaxation}, \dbtext{care}, \dbtext{watch}, \dbtext{funny}, \dbtext{flirt}, \dbtext{pass time}, \dbtext{noise}, 
\dbtext{view}, \dbtext{love else}, \dbtext{drunk}, \dbtext{peace}, \dbtext{sing}, \dbtext{buy beer}, \dbtext{internet}, \dbtext{kid}, \dbtext{like}, 
\dbtext{date}, \dbtext{record}, \dbtext{find}, \dbtext{song}, \dbtext{meet}, \dbtext{science}, \dbtext{quiet}, \dbtext{hobby}, \dbtext{birthday}, 
\dbtext{mean}, \dbtext{com\-mu\-ni\-ca\-tion}, \dbtext{drink coffee}, \dbtext{read magazine}, \dbtext{good time}, \dbtext{act}, \dbtext{eat ice cream}, 
\dbtext{learn language}, \dbtext{go zoo}, \dbtext{go internet}, \dbtext{art}, \dbtext{important}, \dbtext{read child}, \dbtext{enjoy yourself}, 
\dbtext{see movie}, \dbtext{kill person}, \dbtext{emotion}, \dbtext{view video}, \dbtext{play poker}, \dbtext{ex\-cite\-ment}, \dbtext{stay bed}, 
\dbtext{look}, \dbtext{voice}, \dbtext{event}, \dbtext{happy}, \dbtext{find information}, \dbtext{test}, \dbtext{enjoy}, \dbtext{hear}, 
\dbtext{com\-mu\-ni\-cate}, \dbtext{make money}, \dbtext{watch television}, \dbtext{end}, \dbtext{know}, \dbtext{learn subject}, \dbtext{joy}, 
\dbtext{information}, \dbtext{read letter}, \dbtext{cel\-e\-brate}, \dbtext{sadness}, \dbtext{watch musician perform}, \dbtext{play piano}, 
\dbtext{learn world}, \dbtext{see exhibit}, \dbtext{see art}, \dbtext{see excite story}, \dbtext{orgasm}, \dbtext{laughter}, 
\dbtext{express yourself}, \dbtext{discover truth}, \dbtext{see favorite show}, \dbtext{go party}, \dbtext{express in\-for\-ma\-tion}, 
\dbtext{attend meet}, \dbtext{examine}, \dbtext{meet friend}, \dbtext{read news}, \dbtext{see band}, \dbtext{visit art gallery}, 
\dbtext{earn live}, \dbtext{watch television show}, \dbtext{socialize}, \dbtext{create art}, \dbtext{crossword puzzle}, \dbtext{enjoy film}, 
\dbtext{feel happy}, \dbtext{socialis}, \dbtext{many person}, \dbtext{make person laugh}, \dbtext{make friend}, \dbtext{chat friend}, 
\dbtext{meet person}, \dbtext{meet interest person}, \dbtext{friend over}, \dbtext{enjoy company friend}, \dbtext{play game friend}, 
\dbtext{go opus}, \dbtext{sit quietly}, \dbtext{teach other person}, \dbtext{entertain person}, \dbtext{see person play game}

\subsection{Walktrap}
The algorithm used is the one implemented in \igraph 
which is based on \citep{communities:walktrap}.
Table \ref{tbl:communities:positive:walktrap:cores} presents the results when we apply the algorithm for 
subgraphs in which we include vertices with successively lower coreness and allow positive polarity on the edges only.
We use random walks of length $5$ throughout all the runs.

\begin{table}[ht]
\caption{Applying the Walktrap algorithm for community finding implemented in \igraph by successively including
vertices with lower coreness on the undirected graph induced by the assertions with positive polarity (self-loops are removed).
We use $5$ steps for every random walk generated throughout all our runs.
In every row we have the number of vertices and the number of edges of each such subgraph together with
the number of components ($\abs{C}$) that we find in that subgraph. 
The next three columns present the number of communities found by the algorithm;
the average among all runs, the minimum, and the maximum.
The next three columns present the modularity achieved by the algorithm due to the cut induced by the communities;
the average among all runs, the minimum, and the maximum.
The entire computation lasted $1,144.71$ seconds for 2 runs.}\label{tbl:communities:positive:walktrap:cores}
\begin{center}
\begin{tabular}{|c|r|r|c||r|r|r||r|r|r||}\hline
\multirow{2}{*}{coreness} & \multicolumn{1}{c|}{\multirow{2}{*}{$\abs{V}$}} & \multicolumn{1}{c|}{\multirow{2}{*}{$\abs{E}$}} & \multirow{2}{*}{$\abs{C}$} & \multicolumn{3}{c||}{communities found} & \multicolumn{3}{c||}{modularity} \\\cline{5-10}
          &        &        &   & \multicolumn{1}{c|}{avg} & \multicolumn{1}{c|}{min} & \multicolumn{1}{c||}{max} & \multicolumn{1}{c|}{avg} & \multicolumn{1}{c|}{min} & \multicolumn{1}{c||}{max} \\\hline\hline
$\geq 26$ &    869 &  20526 &    1 &   3.000 &    3 &    3 & 0.274815 & 0.274815 & 0.274815 \\\hline
$\geq 25$ &   1167 &  27810 &    1 &   3.000 &    3 &    3 & 0.282369 & 0.282369 & 0.282369 \\\hline
$\geq 24$ &   1358 &  32314 &    1 &   3.000 &    3 &    3 & 0.281649 & 0.281649 & 0.281649 \\\hline
$\geq 23$ &   1514 &  35870 &    1 &   4.000 &    4 &    4 & 0.275340 & 0.275340 & 0.275340 \\\hline
$\geq 22$ &   1709 &  40099 &    1 &   3.000 &    3 &    3 & 0.286858 & 0.286858 & 0.286858 \\\hline
$\geq 21$ &   1865 &  43330 &    1 &   3.000 &    3 &    3 & 0.283172 & 0.283172 & 0.283172 \\\hline
$\geq 20$ &   2007 &  46145 &    1 &   4.000 &    4 &    4 & 0.284822 & 0.284822 & 0.284822 \\\hline
$\geq 19$ &   2173 &  49265 &    1 &   3.000 &    3 &    3 & 0.273657 & 0.273657 & 0.273657 \\\hline
$\geq 18$ &   2384 &  53011 &    1 &   4.000 &    4 &    4 & 0.291989 & 0.291989 & 0.291989 \\\hline
$\geq 17$ &   2617 &  56939 &    1 &   5.000 &    5 &    5 & 0.279475 & 0.279475 & 0.279475 \\\hline
$\geq 16$ &   2847 &  60583 &    1 &   4.000 &    4 &    4 & 0.285183 & 0.285183 & 0.285183 \\\hline
$\geq 15$ &   3105 &  64412 &    1 &   4.000 &    4 &    4 & 0.284449 & 0.284449 & 0.284449 \\\hline
$\geq 14$ &   3407 &  68613 &    1 &   6.000 &    6 &    6 & 0.306199 & 0.306199 & 0.306199 \\\hline
$\geq 13$ &   3746 &  72978 &    1 &   5.000 &    5 &    5 & 0.291219 & 0.291219 & 0.291219 \\\hline
$\geq 12$ &   4160 &  77882 &    1 &   8.000 &    8 &    8 & 0.290368 & 0.290368 & 0.290368 \\\hline
$\geq 11$ &   4634 &  83039 &    1 &   8.000 &    8 &    8 & 0.320025 & 0.320025 & 0.320025 \\\hline
$\geq 10$ &   5182 &  88462 &    1 &  10.000 &   10 &   10 & 0.311006 & 0.311006 & 0.311006 \\\hline
$\geq  9$ &   5883 &  94709 &    1 &  13.000 &   13 &   13 & 0.318720 & 0.318720 & 0.318720 \\\hline
$\geq  8$ &   6750 & 101564 &    1 &  17.000 &   17 &   17 & 0.322721 & 0.322721 & 0.322721 \\\hline
$\geq  7$ &   7904 & 109561 &    1 &  15.000 &   15 &   15 & 0.327759 & 0.327759 & 0.327759 \\\hline
$\geq  6$ &   9392 & 118389 &    1 &  17.000 &   17 &   17 & 0.340760 & 0.340760 & 0.340760 \\\hline
$\geq  5$ &  11483 & 128731 &    1 &  31.000 &   31 &   31 & 0.338872 & 0.338872 & 0.338872 \\\hline
$\geq  4$ &  14864 & 142112 &    1 &  54.000 &   54 &   54 & 0.333880 & 0.333880 & 0.333880 \\\hline
$\geq  3$ &  21812 & 162691 &    1 & 223.000 &  223 &  223 & 0.342930 & 0.342930 & 0.342930 \\\hline
$\geq  2$ &  42576 & 208346 &    4 & --      &  -- &  -- & -- & -- & -- \\\hline
\end{tabular}
\end{center}
\end{table}

\begin{remark}[Walktrap Memory Requirements]
The drawback of the implementation is that it requires too much memory in order to run.
Applying the algorithm on the subgraph induced by vertices with coreness at least $3$ required
about $2.1$ GBytes of RAM.
\end{remark}

\subsubsection{Walktrap: Communities in the Inner-Most Core}
We have the following communities.
\paragraph{Community 1 (size is 292).}
\dbtext{town}, \dbtext{rock}, \dbtext{beach}, \dbtext{tree}, \dbtext{monkey}, \dbtext{soup}, \dbtext{weasel}, \dbtext{kitten}, \dbtext{plane}, 
\dbtext{beaver}, \dbtext{paper}, \dbtext{bed}, \dbtext{dirty}, \dbtext{chicken}, \dbtext{state}, \dbtext{office build}, \dbtext{wiener dog}, 
\dbtext{box}, \dbtext{object}, \dbtext{coffee}, \dbtext{candle}, \dbtext{fungus}, \dbtext{park}, \dbtext{snake}, \dbtext{wood}, \dbtext{cloud}, 
\dbtext{milk}, \dbtext{drawer}, \dbtext{storage}, \dbtext{zoo}, \dbtext{bottle}, \dbtext{material}, \dbtext{chair}, \dbtext{cat}, \dbtext{hat}, 
\dbtext{country}, \dbtext{market}, \dbtext{house}, \dbtext{lake}, \dbtext{hotel}, \dbtext{plant}, \dbtext{bank}, \dbtext{animal}, \dbtext{church}, 
\dbtext{moon}, \dbtext{pet}, \dbtext{cook}, \dbtext{bird}, \dbtext{bathroom}, \dbtext{city}, \dbtext{shark}, \dbtext{water}, \dbtext{rosebush}, 
\dbtext{yard}, \dbtext{desk}, \dbtext{office}, \dbtext{home}, \dbtext{sloth}, \dbtext{bat}, \dbtext{couch}, \dbtext{kitchen}, \dbtext{lizard}, 
\dbtext{build}, \dbtext{restaurant}, \dbtext{spoon}, \dbtext{butter}, \dbtext{beautiful}, \dbtext{key}, \dbtext{well}, \dbtext{pen}, \dbtext{snow}, 
\dbtext{weather}, \dbtext{mouse}, \dbtext{magazine}, \dbtext{hole}, \dbtext{nature}, \dbtext{bald eagle}, \dbtext{nest}, \dbtext{crab}, 
\dbtext{ficus}, \dbtext{sea}, \dbtext{anemone}, \dbtext{ocean}, \dbtext{sun}, \dbtext{sky}, \dbtext{food}, \dbtext{grape}, \dbtext{bedroom}, 
\dbtext{horse}, \dbtext{store}, \dbtext{airport}, \dbtext{sugar}, \dbtext{grocery store}, \dbtext{basket}, \dbtext{hold}, \dbtext{refrigerator}, 
\dbtext{newspaper}, \dbtext{rice}, \dbtext{surface}, \dbtext{liquid}, \dbtext{meadow}, \dbtext{window}, \dbtext{oil}, \dbtext{cover}, 
\dbtext{plate}, \dbtext{dinner}, \dbtext{cow}, \dbtext{earth}, \dbtext{garage}, \dbtext{garden}, \dbtext{poop}, \dbtext{potato}, \dbtext{outside}, 
\dbtext{frog}, \dbtext{napkin}, \dbtext{light}, \dbtext{salad}, \dbtext{fox}, \dbtext{forest}, \dbtext{glass}, \dbtext{cupboard}, \dbtext{marmot}, 
\dbtext{mountain}, \dbtext{salt}, \dbtext{motel}, \dbtext{bone}, \dbtext{meat}, \dbtext{bookstore}, \dbtext{ferret}, \dbtext{small dog}, 
\dbtext{cloth}, \dbtext{factory}, \dbtext{bottle wine}, \dbtext{pencil}, \dbtext{lemur}, \dbtext{black}, \dbtext{canada}, \dbtext{trash}, 
\dbtext{can}, \dbtext{a}, \dbtext{picture}, \dbtext{wild}, \dbtext{clothe}, \dbtext{dish}, \dbtext{mall}, \dbtext{flower}, \dbtext{wallet}, 
\dbtext{room}, \dbtext{small}, \dbtext{new york}, \dbtext{farm}, \dbtext{sink}, \dbtext{pocket}, \dbtext{color}, \dbtext{white}, \dbtext{red}, 
\dbtext{stone}, \dbtext{vegetable}, \dbtext{green}, \dbtext{large}, \dbtext{shoe}, \dbtext{scale}, \dbtext{soft}, \dbtext{steak}, \dbtext{beer}, 
\dbtext{knife}, \dbtext{marmoset}, \dbtext{carpet}, \dbtext{bowl}, \dbtext{australia}, \dbtext{corn}, \dbtext{fridge}, \dbtext{soap}, \dbtext{coin}, 
\dbtext{fruit}, \dbtext{fork}, \dbtext{cuba}, \dbtext{france}, \dbtext{italy}, \dbtext{steel}, \dbtext{wall}, \dbtext{unite state}, \dbtext{cup}, 
\dbtext{hill}, \dbtext{square}, \dbtext{apple tree}, \dbtext{shelf}, \dbtext{friend house}, \dbtext{space}, \dbtext{this}, \dbtext{place}, 
\dbtext{apple}, \dbtext{bag}, \dbtext{river}, \dbtext{blue}, \dbtext{grass}, \dbtext{cheese}, \dbtext{mammal}, \dbtext{bean}, \dbtext{flat}, 
\dbtext{utah}, \dbtext{plastic}, \dbtext{container}, \dbtext{bug}, \dbtext{live room}, \dbtext{toilet}, \dbtext{cabinet}, \dbtext{table}, 
\dbtext{furniture}, \dbtext{lamp}, \dbtext{pizza}, \dbtext{dust}, \dbtext{sand}, \dbtext{hall}, \dbtext{dictionary}, \dbtext{rise}, \dbtext{closet}, 
\dbtext{door}, \dbtext{floor}, \dbtext{basement}, \dbtext{dark}, \dbtext{world}, \dbtext{air}, \dbtext{sheep}, \dbtext{statue}, \dbtext{metal}, 
\dbtext{big}, \dbtext{squirrel}, \dbtext{alcohol}, \dbtext{clock}, \dbtext{round}, \dbtext{top}, \dbtext{wine}, \dbtext{jar}, \dbtext{put}, 
\dbtext{duck}, \dbtext{edible}, \dbtext{wyom}, \dbtext{land}, \dbtext{rug}, \dbtext{pot}, \dbtext{bread}, \dbtext{field}, \dbtext{lawn}, 
\dbtext{tin}, \dbtext{oven}, \dbtext{shade}, \dbtext{egg}, \dbtext{building}, \dbtext{bee}, \dbtext{resturant}, \dbtext{wash}, \dbtext{sock}, 
\dbtext{bear}, \dbtext{pantry}, \dbtext{supermarket}, \dbtext{roof}, \dbtext{brown}, \dbtext{cake}, \dbtext{solid}, \dbtext{dirt}, \dbtext{handle}, 
\dbtext{alaska}, \dbtext{michigan}, \dbtext{maryland}, \dbtext{maine}, \dbtext{delaware}, \dbtext{kansa}, \dbtext{department}, \dbtext{pretty}, 
\dbtext{decoration}, \dbtext{stapler}, \dbtext{apartment}, \dbtext{bush}, \dbtext{countryside}, \dbtext{any large city}, \dbtext{case}, 
\dbtext{sunshine}, \dbtext{corner}, \dbtext{outdoors}, \dbtext{stick}, \dbtext{find house}, \dbtext{find outside}, \dbtext{winery}, \dbtext{branch}, 
\dbtext{polish}, \dbtext{wax}, \dbtext{generic}, \dbtext{ground}, \dbtext{eaten}, \dbtext{neighbor house}, \dbtext{usually}

\paragraph{Community 2 (size is 302).} 
\dbtext{write program}, \dbtext{go concert}, \dbtext{hear music}, \dbtext{exercise}, \dbtext{listen}, \dbtext{go performance}, \dbtext{take walk}, 
\dbtext{entertain}, \dbtext{run marathon}, \dbtext{wait line}, \dbtext{attend lecture}, \dbtext{study}, \dbtext{go walk}, \dbtext{play basketball}, 
\dbtext{fun}, \dbtext{bore}, \dbtext{wait table}, \dbtext{go see film}, \dbtext{go work}, \dbtext{watch tv show}, \dbtext{wake up morning}, 
\dbtext{go fish}, \dbtext{tell story}, \dbtext{surf web}, \dbtext{play football}, \dbtext{go restaurant}, \dbtext{visit museum}, \dbtext{study subject}, 
\dbtext{live life}, \dbtext{go sport event}, \dbtext{go play}, \dbtext{play soccer}, \dbtext{go jog}, \dbtext{take shower}, \dbtext{play ball}, 
\dbtext{watch movie}, \dbtext{watch film}, \dbtext{stretch}, \dbtext{play frisbee}, \dbtext{go school}, \dbtext{surprise}, \dbtext{paint picture}, 
\dbtext{go film}, \dbtext{rest}, \dbtext{listen radio}, \dbtext{kiss}, \dbtext{remember}, \dbtext{housework}, \dbtext{watch tv}, \dbtext{attend school}, 
\dbtext{play tennis}, \dbtext{comfortable}, \dbtext{take bus}, \dbtext{conversation}, \dbtext{talk}, \dbtext{take course}, \dbtext{learn}, 
\dbtext{think}, \dbtext{go run}, \dbtext{sleep}, \dbtext{hang out bar}, \dbtext{plan vacation}, \dbtext{go see play}, \dbtext{attend class}, 
\dbtext{go swim}, \dbtext{ride bike}, \dbtext{eat restaurant}, \dbtext{stress}, \dbtext{boredom}, \dbtext{ticket}, \dbtext{use television}, 
\dbtext{entertainment}, \dbtext{listen music}, \dbtext{enjoyment}, \dbtext{student}, \dbtext{go movie}, \dbtext{enlightenment}, \dbtext{stand line}, 
\dbtext{attend classical concert}, \dbtext{death}, \dbtext{play sport}, \dbtext{effort}, \dbtext{drive car}, \dbtext{traveling}, \dbtext{knowledge}, 
\dbtext{teach}, \dbtext{laugh joke}, \dbtext{run}, \dbtext{read book}, \dbtext{education}, \dbtext{take note}, \dbtext{travel}, \dbtext{go store}, 
\dbtext{go sleep}, \dbtext{tire}, \dbtext{attention}, \dbtext{fall asleep}, \dbtext{run errand}, \dbtext{patience}, \dbtext{spend money}, \dbtext{cry}, 
\dbtext{pay bill}, \dbtext{earn money}, \dbtext{speak}, \dbtext{fatigue}, \dbtext{take break}, \dbtext{drink alcohol}, \dbtext{play chess}, 
\dbtext{anger}, \dbtext{read}, \dbtext{curiosity}, \dbtext{pay}, \dbtext{drive}, \dbtext{use computer}, \dbtext{take film}, \dbtext{smile}, 
\dbtext{fiddle}, \dbtext{wrestle}, \dbtext{see new}, \dbtext{dance}, \dbtext{job}, \dbtext{smart}, \dbtext{play baseball}, \dbtext{excite}, 
\dbtext{attend rock concert}, \dbtext{hear news}, \dbtext{contemplate}, \dbtext{pain}, \dbtext{understand}, \dbtext{stay healthy}, \dbtext{research}, 
\dbtext{learn new}, \dbtext{sweat}, \dbtext{headache}, \dbtext{fart}, \dbtext{read newspaper}, \dbtext{sport}, \dbtext{understand better}, 
\dbtext{write story}, \dbtext{transportation}, \dbtext{fall down}, \dbtext{practice}, \dbtext{help}, \dbtext{lose}, \dbtext{close eye}, 
\dbtext{satisfaction}, \dbtext{time}, \dbtext{answer question}, \dbtext{perform}, \dbtext{everyone}, \dbtext{go somewhere}, \dbtext{play card}, 
\dbtext{go}, \dbtext{sex}, \dbtext{wait}, \dbtext{buy ticket}, \dbtext{gain knowledge}, \dbtext{interest}, \dbtext{sit down}, \dbtext{surf}, 
\dbtext{teacher}, \dbtext{happiness}, \dbtext{exhaustion}, \dbtext{sit chair}, \dbtext{laugh}, \dbtext{relax}, \dbtext{waste time}, 
\dbtext{pleasure}, \dbtext{relaxation}, \dbtext{procreate}, \dbtext{funny}, \dbtext{win}, \dbtext{go mall}, \dbtext{flirt}, \dbtext{pass time}, 
\dbtext{go home}, \dbtext{love else}, \dbtext{drunk}, \dbtext{sing}, \dbtext{buy beer}, \dbtext{internet}, \dbtext{like}, \dbtext{date}, 
\dbtext{play game}, \dbtext{activity}, \dbtext{jog}, \dbtext{quiet}, \dbtext{skill}, \dbtext{hobby}, \dbtext{tiredness}, \dbtext{communication}, 
\dbtext{drink coffee}, \dbtext{read magazine}, \dbtext{good time}, \dbtext{good health}, \dbtext{play hockey}, \dbtext{eat ice cream}, 
\dbtext{learn language}, \dbtext{go zoo}, \dbtext{go internet}, \dbtext{cash}, \dbtext{read child}, \dbtext{enjoy yourself}, \dbtext{see movie}, 
\dbtext{e\-mo\-tion}, \dbtext{clean house}, \dbtext{fit}, \dbtext{view video}, \dbtext{play poker}, \dbtext{excitement}, \dbtext{fly airplane}, 
\dbtext{ride horse}, \dbtext{stay bed}, \dbtext{happy}, \dbtext{find information}, \dbtext{fear}, \dbtext{go vacation}, \dbtext{breathe}, 
\dbtext{recreation}, \dbtext{enjoy}, \dbtext{ride bicycle}, \dbtext{health}, \dbtext{communicate}, \dbtext{make money}, \dbtext{become tire}, 
\dbtext{lose weight}, \dbtext{jump up down}, \dbtext{watch television}, \dbtext{healthy}, \dbtext{know}, \dbtext{learn subject}, \dbtext{joy}, 
\dbtext{stand up}, \dbtext{information}, \dbtext{read letter}, \dbtext{jump rope}, \dbtext{celebrate}, \dbtext{sadness}, 
\dbtext{watch musician perform}, \dbtext{feel better}, \dbtext{compete}, \dbtext{feel good}, \dbtext{accident}, \dbtext{stay fit}, 
\dbtext{injury}, \dbtext{ride}, \dbtext{play piano}, \dbtext{learn world}, \dbtext{see exhibit}, \dbtext{release energy}, \dbtext{see art}, 
\dbtext{see excite story}, \dbtext{orgasm}, \dbtext{laughter}, \dbtext{express yourself}, \dbtext{discover truth}, \dbtext{see favorite show}, 
\dbtext{go party}, \dbtext{competition}, \dbtext{express information}, \dbtext{climb mountain}, \dbtext{attend meet}, \dbtext{fly kite}, 
\dbtext{examine}, \dbtext{meet friend}, \dbtext{read news}, \dbtext{shock}, \dbtext{return work}, \dbtext{see band}, \dbtext{visit art gallery}, 
\dbtext{earn live}, \dbtext{watch television show}, \dbtext{socialize}, \dbtext{create art}, \dbtext{crossword puzzle}, \dbtext{enjoy film}, 
\dbtext{go pub}, \dbtext{feel happy}, \dbtext{play la\-crosse}, \dbtext{socialis}, \dbtext{physical activity}, \dbtext{get}, 
\dbtext{make person laugh}, \dbtext{make friend}, \dbtext{chat friend}, \dbtext{meet person}, \dbtext{meet interest person}, 
\dbtext{get drunk}, \dbtext{friend over}, \dbtext{get exercise}, \dbtext{get tire}, \dbtext{enjoy company friend}, \dbtext{play game friend}, 
\dbtext{get physical activity}, \dbtext{go opus}, \dbtext{get shape}, \dbtext{sit quietly}, \dbtext{do it}, \dbtext{get fit}, 
\dbtext{teach other person}, \dbtext{entertain person}, \dbtext{see person play game}

\paragraph{Community 3 (size is 275).} 
\dbtext{something}, \dbtext{man}, \dbtext{person}, \dbtext{type}, \dbtext{train}, \dbtext{work}, \dbtext{word}, \dbtext{pant}, \dbtext{love}, 
\dbtext{library}, \dbtext{bath}, \dbtext{school}, \dbtext{arm}, \dbtext{human}, \dbtext{class}, \dbtext{walk}, \dbtext{drink}, \dbtext{it}, 
\dbtext{dream}, \dbtext{shower}, \dbtext{child}, \dbtext{smoke}, \dbtext{gym}, \dbtext{movie}, \dbtext{sit}, \dbtext{ball}, \dbtext{eat food}, 
\dbtext{mother}, \dbtext{party}, \dbtext{clean}, \dbtext{lunch}, \dbtext{street}, \dbtext{trouble}, \dbtext{play}, \dbtext{bus}, \dbtext{plan}, 
\dbtext{eat}, \dbtext{bridge}, \dbtext{nothing}, \dbtext{computer}, \dbtext{line}, \dbtext{buy}, \dbtext{tv}, \dbtext{car}, \dbtext{vehicle}, 
\dbtext{dog}, \dbtext{music}, \dbtext{dress}, \dbtext{live}, \dbtext{one}, \dbtext{turn}, \dbtext{fish}, \dbtext{baby}, \dbtext{hurt}, 
\dbtext{game}, \dbtext{hospital}, \dbtext{hide}, \dbtext{girl}, \dbtext{muscle}, \dbtext{woman}, \dbtext{cold}, \dbtext{family}, \dbtext{shop}, 
\dbtext{letter}, \dbtext{eat dinner}, \dbtext{concert}, \dbtext{call}, \dbtext{electricity}, \dbtext{eye}, \dbtext{see}, \dbtext{story}, 
\dbtext{nose}, \dbtext{smell}, \dbtext{stand}, \dbtext{die}, \dbtext{money}, \dbtext{bill}, \dbtext{leg}, \dbtext{everything}, \dbtext{television}, 
\dbtext{take bath}, \dbtext{band}, \dbtext{drink water}, \dbtext{paint}, \dbtext{hike}, \dbtext{lie}, \dbtext{friend}, \dbtext{hot}, \dbtext{kill}, 
\dbtext{swim}, \dbtext{break}, \dbtext{foot}, \dbtext{verb}, \dbtext{camp}, \dbtext{den}, \dbtext{we}, \dbtext{fight}, \dbtext{telephone}, 
\dbtext{audience}, \dbtext{drop}, \dbtext{rain}, \dbtext{body}, \dbtext{use}, \dbtext{write}, \dbtext{doll}, \dbtext{wheel}, \dbtext{name}, 
\dbtext{nice}, \dbtext{book}, \dbtext{museum}, \dbtext{pool}, \dbtext{instrument}, \dbtext{bad}, \dbtext{show}, \dbtext{wind}, \dbtext{hand}, 
\dbtext{pee}, \dbtext{stop}, \dbtext{road}, \dbtext{seat}, \dbtext{boat}, \dbtext{train station}, \dbtext{war}, \dbtext{wet}, \dbtext{cell}, 
\dbtext{bicycle}, \dbtext{need}, \dbtext{life}, \dbtext{burn}, \dbtext{sound}, \dbtext{good}, \dbtext{fire}, \dbtext{news}, \dbtext{finger}, 
\dbtext{feel}, \dbtext{dangerous}, \dbtext{ski}, \dbtext{expensive}, \dbtext{leave}, \dbtext{number}, \dbtext{heavy}, \dbtext{map}, \dbtext{piano}, 
\dbtext{club}, \dbtext{theatre}, \dbtext{god}, \dbtext{care}, \dbtext{airplane}, \dbtext{watch}, \dbtext{phone}, \dbtext{radio}, \dbtext{tool}, 
\dbtext{mouth}, \dbtext{doctor}, \dbtext{theater}, \dbtext{lot}, \dbtext{hair}, \dbtext{make}, \dbtext{noise}, \dbtext{measure}, \dbtext{shape}, 
\dbtext{climb}, \dbtext{wash hand}, \dbtext{bar}, \dbtext{view}, \dbtext{tooth}, \dbtext{peace}, \dbtext{kid}, \dbtext{boy}, \dbtext{record}, 
\dbtext{find}, \dbtext{song}, \dbtext{meet}, \dbtext{not}, \dbtext{sofa}, \dbtext{cut}, \dbtext{page}, \dbtext{company}, \dbtext{bite}, 
\dbtext{science}, \dbtext{college}, \dbtext{open}, \dbtext{warm}, \dbtext{high}, \dbtext{birthday}, \dbtext{university}, \dbtext{roll}, 
\dbtext{mean}, \dbtext{general}, \dbtext{act}, \dbtext{heat}, \dbtext{cool}, \dbtext{dive}, \dbtext{skin}, \dbtext{art}, \dbtext{noun}, 
\dbtext{hard}, \dbtext{important}, \dbtext{toy}, \dbtext{ring}, \dbtext{crowd}, \dbtext{draw}, \dbtext{thing}, \dbtext{energy}, 
\dbtext{kill person}, \dbtext{little}, \dbtext{change}, \dbtext{ear}, \dbtext{alive}, \dbtext{move}, \dbtext{wave}, \dbtext{look}, 
\dbtext{voice}, \dbtext{face}, \dbtext{event}, \dbtext{long}, \dbtext{carry}, \dbtext{fly}, \dbtext{test}, \dbtext{hear}, \dbtext{organization}, 
\dbtext{jump}, \dbtext{business}, \dbtext{action}, \dbtext{pass}, \dbtext{fall}, \dbtext{bell}, \dbtext{head}, \dbtext{sign}, 
\dbtext{count}, \dbtext{end}, \dbtext{group}, \dbtext{bullet}, \dbtext{degree}, \dbtext{note}, \dbtext{card}, \dbtext{machine}, 
\dbtext{lay}, \dbtext{gas}, \dbtext{circle}, \dbtext{point}, \dbtext{useful}, \dbtext{adjective}, \dbtext{be}, \dbtext{steam}, 
\dbtext{bike}, \dbtext{side}, \dbtext{motion}, \dbtext{classroom}, \dbtext{out}, \dbtext{transport}, \dbtext{step}, \dbtext{part}, 
\dbtext{course}, \dbtext{power}, \dbtext{same}, \dbtext{stage}, \dbtext{comfort}, \dbtext{trip}, \dbtext{edge}, \dbtext{grow}, 
\dbtext{board}, \dbtext{race}, \dbtext{flea}, \dbtext{punch}, \dbtext{cool off}, \dbtext{skate}, \dbtext{movement}, \dbtext{away}, 
\dbtext{short}, \dbtext{many person}, \dbtext{singular}, \dbtext{general term}, \dbtext{unit}, \dbtext{generic term}

\subsection{Betweenness}
The algorithm used is the one implemented in \igraph 
which is based on \citep{communities:betweenness}.
The idea of the algorithm is described below; it is taken from \igraph documentation online.
\begin{quote}
The idea is that the betweenness of the edges connecting two communities is typically high, 
as many of the shortest paths between nodes in separate communities go through them. 
So we gradually remove the edge with highest betweenness from the network, 
and recalculate edge betweenness after every removal. 
This way sooner or later the network falls off to two components, 
then after a while one of these components falls off to two smaller components, 
etc.~until all edges are removed. 
This is a divisive hierarchical approach, the result is a dendrogram. 
\end{quote}
The algorithm has complexity 
\OO{\abs{V}\abs{E}^2}, as the betweenness calculation 
requires \OO{\abs{V}\abs{E}} time and we do it $\abs{E}-1$ times.
Hence, we applied the algorithm only on the subgraph induced by the vertices with maximum
coreness \footnote{ Recall that the subgraph has no self-loops.} .

\medskip

One execution of the algorithm in the subgraph took about $8,671.19$ seconds of computation time. 
The algorithm found $42$ communities and the modularity achieved was $0.268508$.

\subsubsection{Edge Betweenness: Communities in the Inner-Most Core}
We have the following communities.
\paragraph{Community 1 (size is 423).} 
\dbtext{something}, \dbtext{person}, \dbtext{train}, \dbtext{town}, \dbtext{rock}, \dbtext{beach}, \dbtext{tree}, \dbtext{soup}, \dbtext{weasel}, 
\dbtext{word}, \dbtext{library}, \dbtext{school}, \dbtext{kitten}, \dbtext{arm}, \dbtext{human}, \dbtext{plane}, \dbtext{class}, \dbtext{it}, 
\dbtext{paper}, \dbtext{bed}, \dbtext{dirty}, \dbtext{chicken}, \dbtext{gym}, \dbtext{office build}, \dbtext{ball}, \dbtext{box}, \dbtext{object}, 
\dbtext{mother}, \dbtext{coffee}, \dbtext{candle}, \dbtext{street}, \dbtext{fungus}, \dbtext{park}, \dbtext{snake}, \dbtext{wood}, \dbtext{bus}, 
\dbtext{bridge}, \dbtext{cloud}, \dbtext{computer}, \dbtext{line}, \dbtext{milk}, \dbtext{drawer}, \dbtext{storage}, \dbtext{car}, \dbtext{vehicle}, 
\dbtext{dog}, \dbtext{zoo}, \dbtext{bottle}, \dbtext{live}, \dbtext{one}, \dbtext{material}, \dbtext{chair}, \dbtext{cat}, \dbtext{hat}, 
\dbtext{country}, \dbtext{market}, \dbtext{house}, \dbtext{lake}, \dbtext{hotel}, \dbtext{plant}, \dbtext{hospital}, \dbtext{bank}, \dbtext{girl}, 
\dbtext{woman}, \dbtext{animal}, \dbtext{church}, \dbtext{cold}, \dbtext{family}, \dbtext{moon}, \dbtext{pet}, \dbtext{cook}, \dbtext{shop}, 
\dbtext{letter}, \dbtext{bird}, \dbtext{concert}, \dbtext{bathroom}, \dbtext{city}, \dbtext{water}, \dbtext{yard}, \dbtext{desk}, \dbtext{office}, 
\dbtext{home}, \dbtext{bat}, \dbtext{couch}, \dbtext{kitchen}, \dbtext{lizard}, \dbtext{build}, \dbtext{restaurant}, \dbtext{spoon}, \dbtext{butter}, 
\dbtext{beautiful}, \dbtext{key}, \dbtext{eye}, \dbtext{nose}, \dbtext{stand}, \dbtext{well}, \dbtext{pen}, \dbtext{bill}, \dbtext{snow}, 
\dbtext{weather}, \dbtext{everything}, \dbtext{mouse}, \dbtext{magazine}, \dbtext{hole}, \dbtext{nature}, \dbtext{band}, \dbtext{bald eagle}, 
\dbtext{nest}, \dbtext{crab}, \dbtext{paint}, \dbtext{ficus}, \dbtext{sea}, \dbtext{ocean}, \dbtext{sun}, \dbtext{sky}, \dbtext{food}, 
\dbtext{grape}, \dbtext{bedroom}, \dbtext{horse}, \dbtext{store}, \dbtext{hot}, \dbtext{airport}, \dbtext{sugar}, \dbtext{grocery store}, 
\dbtext{basket}, \dbtext{hold}, \dbtext{kill}, \dbtext{foot}, \dbtext{refrigerator}, \dbtext{newspaper}, \dbtext{rice}, \dbtext{surface}, 
\dbtext{liquid}, \dbtext{meadow}, \dbtext{camp}, \dbtext{window}, \dbtext{oil}, \dbtext{cover}, \dbtext{plate}, \dbtext{dinner}, \dbtext{den}, 
\dbtext{cow}, \dbtext{earth}, \dbtext{garage}, \dbtext{garden}, \dbtext{poop}, \dbtext{potato}, \dbtext{outside}, \dbtext{frog}, \dbtext{napkin}, 
\dbtext{light}, \dbtext{salad}, \dbtext{fox}, \dbtext{forest}, \dbtext{glass}, \dbtext{cupboard}, \dbtext{telephone}, \dbtext{marmot}, 
\dbtext{mountain}, \dbtext{salt}, \dbtext{motel}, \dbtext{drop}, \dbtext{bone}, \dbtext{meat}, \dbtext{bookstore}, \dbtext{rain}, \dbtext{body}, 
\dbtext{use}, \dbtext{ferret}, \dbtext{small dog}, \dbtext{cloth}, \dbtext{factory}, \dbtext{bottle wine}, \dbtext{doll}, \dbtext{pencil}, 
\dbtext{wheel}, \dbtext{name}, \dbtext{nice}, \dbtext{book}, \dbtext{black}, \dbtext{instrument}, \dbtext{show}, \dbtext{trash}, \dbtext{can}, 
\dbtext{a}, \dbtext{wind}, \dbtext{hand}, \dbtext{pee}, \dbtext{picture}, \dbtext{road}, \dbtext{seat}, \dbtext{boat}, \dbtext{clothe}, 
\dbtext{dish}, \dbtext{train station}, \dbtext{war}, \dbtext{mall}, \dbtext{wet}, \dbtext{flower}, \dbtext{wallet}, \dbtext{room}, \dbtext{cell}, 
\dbtext{small}, \dbtext{bicycle}, \dbtext{new york}, \dbtext{farm}, \dbtext{sink}, \dbtext{pocket}, \dbtext{color}, \dbtext{white}, \dbtext{red}, 
\dbtext{stone}, \dbtext{vegetable}, \dbtext{green}, \dbtext{burn}, \dbtext{large}, \dbtext{shoe}, \dbtext{scale}, \dbtext{soft}, \dbtext{steak}, 
\dbtext{fire}, \dbtext{beer}, \dbtext{finger}, \dbtext{knife}, \dbtext{dangerous}, \dbtext{carpet}, \dbtext{bowl}, \dbtext{corn}, \dbtext{fridge}, 
\dbtext{soap}, \dbtext{expensive}, \dbtext{coin}, \dbtext{number}, \dbtext{fruit}, \dbtext{heavy}, \dbtext{map}, \dbtext{fork}, \dbtext{steel}, 
\dbtext{piano}, \dbtext{wall}, \dbtext{theatre}, \dbtext{cup}, \dbtext{hill}, \dbtext{square}, \dbtext{shelf}, \dbtext{god}, \dbtext{friend house}, 
\dbtext{airplane}, \dbtext{space}, \dbtext{phone}, \dbtext{this}, \dbtext{place}, \dbtext{radio}, \dbtext{tool}, \dbtext{apple}, \dbtext{mouth}, 
\dbtext{bag}, \dbtext{doctor}, \dbtext{theater}, \dbtext{river}, \dbtext{blue}, \dbtext{grass}, \dbtext{cheese}, \dbtext{mammal}, \dbtext{bean}, 
\dbtext{lot}, \dbtext{hair}, \dbtext{make}, \dbtext{measure}, \dbtext{flat}, \dbtext{utah}, \dbtext{plastic}, \dbtext{container}, \dbtext{bar}, 
\dbtext{bug}, \dbtext{view}, \dbtext{live room}, \dbtext{toilet}, \dbtext{tooth}, \dbtext{cabinet}, \dbtext{table}, \dbtext{furniture}, \dbtext{lamp}, 
\dbtext{pizza}, \dbtext{dust}, \dbtext{sand}, \dbtext{hall}, \dbtext{rise}, \dbtext{closet}, \dbtext{boy}, \dbtext{door}, \dbtext{floor}, \dbtext{not}, 
\dbtext{basement}, \dbtext{sofa}, \dbtext{cut}, \dbtext{page}, \dbtext{company}, \dbtext{bite}, \dbtext{dark}, \dbtext{college}, \dbtext{world}, 
\dbtext{air}, \dbtext{sheep}, \dbtext{statue}, \dbtext{metal}, \dbtext{open}, \dbtext{warm}, \dbtext{big}, \dbtext{high}, \dbtext{squirrel}, 
\dbtext{alcohol}, \dbtext{university}, \dbtext{roll}, \dbtext{general}, \dbtext{clock}, \dbtext{round}, \dbtext{heat}, \dbtext{cool}, \dbtext{skin}, 
\dbtext{art}, \dbtext{noun}, \dbtext{top}, \dbtext{wine}, \dbtext{jar}, \dbtext{hard}, \dbtext{put}, \dbtext{duck}, \dbtext{toy}, \dbtext{ring}, 
\dbtext{crowd}, \dbtext{draw}, \dbtext{edible}, \dbtext{thing}, \dbtext{land}, \dbtext{rug}, \dbtext{pot}, \dbtext{little}, \dbtext{change}, 
\dbtext{ear}, \dbtext{alive}, \dbtext{bread}, \dbtext{field}, \dbtext{wave}, \dbtext{face}, \dbtext{tin}, \dbtext{oven}, \dbtext{long}, \dbtext{shade}, 
\dbtext{carry}, \dbtext{fly}, \dbtext{egg}, \dbtext{building}, \dbtext{bee}, \dbtext{business}, \dbtext{pass}, \dbtext{resturant}, \dbtext{wash}, 
\dbtext{sock}, \dbtext{bear}, \dbtext{bell}, \dbtext{head}, \dbtext{sign}, \dbtext{pantry}, \dbtext{bullet}, \dbtext{degree}, \dbtext{note}, 
\dbtext{card}, \dbtext{supermarket}, \dbtext{machine}, \dbtext{gas}, \dbtext{roof}, \dbtext{brown}, \dbtext{circle}, \dbtext{cake}, \dbtext{solid}, 
\dbtext{dirt}, \dbtext{point}, \dbtext{useful}, \dbtext{handle}, \dbtext{adjective}, \dbtext{maine}, \dbtext{department}, \dbtext{be}, \dbtext{steam}, 
\dbtext{pretty}, \dbtext{side}, \dbtext{decoration}, \dbtext{stapler}, \dbtext{classroom}, \dbtext{transport}, \dbtext{step}, \dbtext{apartment}, 
\dbtext{part}, \dbtext{course}, \dbtext{countryside}, \dbtext{power}, \dbtext{same}, \dbtext{stage}, \dbtext{any large city}, \dbtext{edge}, 
\dbtext{case}, \dbtext{grow}, \dbtext{board}, \dbtext{flea}, \dbtext{corner}, \dbtext{short}, \dbtext{stick}, \dbtext{singular}, \dbtext{find house}, 
\dbtext{find outside}, \dbtext{winery}, \dbtext{branch}, \dbtext{polish}, \dbtext{wax}, \dbtext{general term}, \dbtext{generic}, \dbtext{ground}, 
\dbtext{eaten}, \dbtext{neighbor house}, \dbtext{usually}, \dbtext{unit}, \dbtext{generic term}

\paragraph{Community 2 (size is 406).} 
\dbtext{man}, \dbtext{type}, \dbtext{work}, \dbtext{write program}, \dbtext{go concert}, \dbtext{hear music}, \dbtext{exercise}, \dbtext{love}, 
\dbtext{bath}, \dbtext{listen}, \dbtext{go performance}, \dbtext{take walk}, \dbtext{walk}, \dbtext{entertain}, \dbtext{run marathon}, 
\dbtext{wait line}, \dbtext{attend lecture}, \dbtext{drink}, \dbtext{study}, \dbtext{go walk}, \dbtext{play basketball}, \dbtext{fun}, 
\dbtext{bore}, \dbtext{wait table}, \dbtext{go see film}, \dbtext{go work}, \dbtext{watch tv show}, \dbtext{wake up morning}, \dbtext{dream}, 
\dbtext{shower}, \dbtext{child}, \dbtext{smoke}, \dbtext{go fish}, \dbtext{tell story}, \dbtext{surf web}, \dbtext{play football}, \dbtext{movie},
 \dbtext{go restaurant}, \dbtext{visit museum}, \dbtext{study subject}, \dbtext{live life}, \dbtext{go sport event}, \dbtext{go play}, \dbtext{sit}, 
\dbtext{play soccer}, \dbtext{go jog}, \dbtext{take shower}, \dbtext{play ball}, \dbtext{watch movie}, \dbtext{watch film}, \dbtext{stretch}, 
\dbtext{play frisbee}, \dbtext{go school}, \dbtext{surprise}, \dbtext{paint picture}, \dbtext{go film}, \dbtext{party}, \dbtext{rest}, 
\dbtext{listen radio}, \dbtext{kiss}, \dbtext{remember}, \dbtext{housework}, \dbtext{clean}, \dbtext{lunch}, \dbtext{watch tv}, 
\dbtext{attend school}, \dbtext{play tennis}, \dbtext{trouble}, \dbtext{comfortable}, \dbtext{play}, \dbtext{take bus}, \dbtext{conversation}, 
\dbtext{talk}, \dbtext{take course}, \dbtext{learn}, \dbtext{plan}, \dbtext{think}, \dbtext{go run}, \dbtext{sleep}, \dbtext{hang out bar}, 
\dbtext{plan vacation}, \dbtext{go see play}, \dbtext{eat}, \dbtext{attend class}, \dbtext{go swim}, \dbtext{ride bike}, \dbtext{nothing}, 
\dbtext{buy}, \dbtext{eat restaurant}, \dbtext{tv}, \dbtext{stress}, \dbtext{boredom}, \dbtext{ticket}, \dbtext{music}, \dbtext{use television}, 
\dbtext{dress}, \dbtext{turn}, \dbtext{entertainment}, \dbtext{listen music}, \dbtext{enjoyment}, \dbtext{fish}, \dbtext{baby}, \dbtext{hurt}, 
\dbtext{game}, \dbtext{student}, \dbtext{muscle}, \dbtext{go movie}, \dbtext{enlightenment}, \dbtext{stand line}, \dbtext{attend classical concert}, 
\dbtext{death}, \dbtext{play sport}, \dbtext{effort}, \dbtext{drive car}, \dbtext{traveling}, \dbtext{knowledge}, \dbtext{teach}, \dbtext{call}, 
\dbtext{laugh joke}, \dbtext{run}, \dbtext{read book}, \dbtext{education}, \dbtext{take note}, \dbtext{travel}, \dbtext{electricity}, 
\dbtext{go store}, \dbtext{see}, \dbtext{story}, \dbtext{smell}, \dbtext{go sleep}, \dbtext{tire}, \dbtext{attention}, \dbtext{die}, 
\dbtext{fall asleep}, \dbtext{money}, \dbtext{leg}, \dbtext{run errand}, \dbtext{patience}, \dbtext{spend money}, \dbtext{cry}, 
\dbtext{pay bill}, \dbtext{earn money}, \dbtext{television}, \dbtext{speak}, \dbtext{drink water}, \dbtext{fatigue}, \dbtext{take break}, 
\dbtext{hike}, \dbtext{drink alcohol}, \dbtext{lie}, \dbtext{play chess}, \dbtext{friend}, \dbtext{anger}, \dbtext{read}, \dbtext{curiosity}, 
\dbtext{pay}, \dbtext{swim}, \dbtext{break}, \dbtext{verb}, \dbtext{drive}, \dbtext{use computer}, \dbtext{take film}, \dbtext{smile}, 
\dbtext{fiddle}, \dbtext{we}, \dbtext{wrestle}, \dbtext{see new}, \dbtext{dance}, \dbtext{fight}, \dbtext{job}, \dbtext{smart}, 
\dbtext{play baseball}, \dbtext{excite}, \dbtext{attend rock concert}, \dbtext{hear news}, \dbtext{contemplate}, \dbtext{pain}, 
\dbtext{audience}, \dbtext{understand}, \dbtext{write}, \dbtext{stay healthy}, \dbtext{research}, \dbtext{learn new}, \dbtext{sweat}, 
\dbtext{headache}, \dbtext{fart}, \dbtext{read newspaper}, \dbtext{sport}, \dbtext{understand better}, \dbtext{bad}, \dbtext{write story}, 
\dbtext{stop}, \dbtext{transportation}, \dbtext{fall down}, \dbtext{practice}, \dbtext{help}, \dbtext{lose}, \dbtext{close eye}, 
\dbtext{satisfaction}, \dbtext{time}, \dbtext{answer question}, \dbtext{perform}, \dbtext{need}, \dbtext{everyone}, \dbtext{go somewhere}, 
\dbtext{life}, \dbtext{sound}, \dbtext{good}, \dbtext{play card}, \dbtext{go}, \dbtext{sex}, \dbtext{wait}, \dbtext{buy ticket}, 
\dbtext{gain knowledge}, \dbtext{news}, \dbtext{interest}, \dbtext{feel}, \dbtext{sit down}, \dbtext{ski}, \dbtext{surf}, \dbtext{teacher}, 
\dbtext{leave}, \dbtext{happiness}, \dbtext{exhaustion}, \dbtext{sit chair}, \dbtext{laugh}, \dbtext{relax}, \dbtext{waste time}, 
\dbtext{pleasure}, \dbtext{relaxation}, \dbtext{care}, \dbtext{procreate}, \dbtext{watch}, \dbtext{funny}, \dbtext{win}, \dbtext{go mall}, 
\dbtext{flirt}, \dbtext{pass time}, \dbtext{noise}, \dbtext{shape}, \dbtext{climb}, \dbtext{go home}, \dbtext{love else}, \dbtext{drunk}, 
\dbtext{peace}, \dbtext{sing}, \dbtext{buy beer}, \dbtext{internet}, \dbtext{kid}, \dbtext{like}, \dbtext{date}, \dbtext{record}, \dbtext{find}, 
\dbtext{song}, \dbtext{play game}, \dbtext{meet}, \dbtext{activity}, \dbtext{science}, \dbtext{jog}, \dbtext{quiet}, \dbtext{skill}, 
\dbtext{hobby}, \dbtext{birthday}, \dbtext{tiredness}, \dbtext{mean}, \dbtext{communication}, \dbtext{drink coffee}, \dbtext{read magazine}, 
\dbtext{good time}, \dbtext{good health}, \dbtext{act}, \dbtext{play hockey}, \dbtext{eat ice cream}, \dbtext{learn language}, \dbtext{dive}, 
\dbtext{go zoo}, \dbtext{go internet}, \dbtext{cash}, \dbtext{important}, \dbtext{read child}, \dbtext{enjoy yourself}, \dbtext{see movie}, 
\dbtext{energy}, \dbtext{kill person}, \dbtext{emotion}, \dbtext{clean house}, \dbtext{fit}, \dbtext{view video}, \dbtext{play poker}, 
\dbtext{excitement}, \dbtext{move}, \dbtext{fly airplane}, \dbtext{ride horse}, \dbtext{stay bed}, \dbtext{look}, \dbtext{voice}, 
\dbtext{event}, \dbtext{happy}, \dbtext{find information}, \dbtext{fear}, \dbtext{go vacation}, \dbtext{breathe}, \dbtext{recreation}, 
\dbtext{test}, \dbtext{enjoy}, \dbtext{hear}, \dbtext{jump}, \dbtext{ride bicycle}, \dbtext{health}, \dbtext{communicate}, \dbtext{make money}, 
\dbtext{become tire}, \dbtext{action}, \dbtext{fall}, \dbtext{lose weight}, \dbtext{jump up down}, \dbtext{watch television}, \dbtext{count}, 
\dbtext{healthy}, \dbtext{end}, \dbtext{know}, \dbtext{learn subject}, \dbtext{joy}, \dbtext{stand up}, \dbtext{information}, \dbtext{read letter}, 
\dbtext{lay}, \dbtext{jump rope}, \dbtext{celebrate}, \dbtext{sadness}, \dbtext{bike}, \dbtext{watch musician perform}, \dbtext{motion}, 
\dbtext{feel better}, \dbtext{compete}, \dbtext{out}, \dbtext{feel good}, \dbtext{accident}, \dbtext{stay fit}, \dbtext{injury}, \dbtext{ride}, 
\dbtext{play piano}, \dbtext{learn world}, \dbtext{see exhibit}, \dbtext{release energy}, \dbtext{see art}, \dbtext{see excite story}, 
\dbtext{comfort}, \dbtext{orgasm}, \dbtext{trip}, \dbtext{laughter}, \dbtext{express yourself}, \dbtext{discover truth}, \dbtext{see favorite show}, 
\dbtext{go party}, \dbtext{competition}, \dbtext{express information}, \dbtext{climb mountain}, \dbtext{attend meet}, \dbtext{fly kite}, 
\dbtext{examine}, \dbtext{race}, \dbtext{meet friend}, \dbtext{read news}, \dbtext{shock}, \dbtext{return work}, \dbtext{see band}, 
\dbtext{visit art gallery}, \dbtext{earn live}, \dbtext{punch}, \dbtext{cool off}, \dbtext{watch television show}, \dbtext{socialize}, 
\dbtext{skate}, \dbtext{movement}, \dbtext{create art}, \dbtext{crossword puzzle}, \dbtext{enjoy film}, \dbtext{go pub}, \dbtext{feel happy}, 
\dbtext{play lacrosse}, \dbtext{socialis}, \dbtext{away}, \dbtext{physical activity}, \dbtext{get}, \dbtext{many person}, \dbtext{make person laugh}, 
\dbtext{make friend}, \dbtext{chat friend}, \dbtext{meet person}, \dbtext{meet interest person}, \dbtext{get drunk}, \dbtext{friend over}, 
\dbtext{get exercise}, \dbtext{get tire}, \dbtext{enjoy company friend}, \dbtext{play game friend}, \dbtext{get physical activity}, 
\dbtext{go opus}, \dbtext{get shape}, \dbtext{sit quietly}, \dbtext{do it}, \dbtext{get fit}, \dbtext{teach other person}, 
\dbtext{entertain person}, \dbtext{see person play game}

\setlength{\columnseprule}{.2pt}
\begin{multicols}{2}
{ 
\paragraph{Community 3 (size is 1).} \dbtext{monkey}
\paragraph{Community 4 (size is 1).} \dbtext{pant}
\paragraph{Community 5 (size is 1).} \dbtext{beaver}
\paragraph{Community 6 (size is 1).} \dbtext{state}
\paragraph{Community 7 (size is 1).} \dbtext{wiener dog}
\paragraph{Community 8 (size is 1).} \dbtext{eat food}
\paragraph{Community 9 (size is 1).} \dbtext{hide}
\paragraph{Community 10 (size is 1).} \dbtext{eat dinner}
\paragraph{Community 11 (size is 1).} \dbtext{shark}
\paragraph{Community 12 (size is 1).} \dbtext{rosebush}
\paragraph{Community 13 (size is 1).} \dbtext{sloth}
\paragraph{Community 14 (size is 1).} \dbtext{take bath}
\paragraph{Community 15 (size is 1).} \dbtext{anemone}
\paragraph{Community 16 (size is 1).} \dbtext{lemur}
\paragraph{Community 17 (size is 1).} \dbtext{museum}
\paragraph{Community 18 (size is 1).} \dbtext{pool}
\paragraph{Community 19 (size is 1).} \dbtext{canada}
\paragraph{Community 20 (size is 1).} \dbtext{wild}
\paragraph{Community 21 (size is 1).} \dbtext{marmoset}
\paragraph{Community 22 (size is 1).} \dbtext{australia}
\paragraph{Community 23 (size is 1).} \dbtext{cuba}
\paragraph{Community 24 (size is 1).} \dbtext{france}
\paragraph{Community 25 (size is 1).} \dbtext{italy}
\paragraph{Community 26 (size is 1).} \dbtext{club}
\paragraph{Community 27 (size is 1).} \dbtext{unite state}
\paragraph{Community 28 (size is 1).} \dbtext{apple tree}
\paragraph{Community 29 (size is 1).} \dbtext{wash hand}
\paragraph{Community 30 (size is 1).} \dbtext{dictionary}
\paragraph{Community 31 (size is 1).} \dbtext{wyom}
\paragraph{Community 32 (size is 1).} \dbtext{lawn}
\paragraph{Community 33 (size is 1).} \dbtext{organization}
\paragraph{Community 34 (size is 1).} \dbtext{group}
\paragraph{Community 35 (size is 1).} \dbtext{alaska}
\paragraph{Community 36 (size is 1).} \dbtext{michigan}
\paragraph{Community 37 (size is 1).} \dbtext{maryland}
\paragraph{Community 38 (size is 1).} \dbtext{delaware}
\paragraph{Community 39 (size is 1).} \dbtext{kansa}
\paragraph{Community 40 (size is 1).} \dbtext{bush}
\paragraph{Community 41 (size is 1).} \dbtext{sunshine}
\paragraph{Community 42 (size is 1).} \dbtext{outdoors}
}
\end{multicols}

\subsection{Fast Greedy}
The algorithm used is the one implemented in \igraph 
which is based on \citep{communities:fast-greedy}.
According to igraph version $0.6.1$ which was used at the time of the writing,
some improvements mentioned in \citep{communities:fast-greedy:improvements}
have also been implemented. 
Table \ref{tbl:communities:positive:fast-greedy:cores} presents the results when we apply the algorithm for 
subgraphs in which we include vertices with successively lower coreness
and allow positive polarity on the edges only.

\begin{table}[ht]
\caption{Applying the Fast Greedy algorithm for community finding implemented in \igraph by successively including
vertices with lower coreness on the undirected graph induced by the assertions with positive polarity (self-loops are removed).
In every row we have the number of vertices and the number of edges of each such subgraph together with
the number of components ($\abs{C}$) that we find in that subgraph. 
The next three columns present the number of communities found by the algorithm;
the average among all runs, the minimum, and the maximum.
The next three columns present the modularity achieved by the algorithm due to the cut induced by the communities;
the average among all runs, the minimum, and the maximum.
The entire computation lasted about $1,251.1$ seconds for $2$ runs; 
that is about $625.55$ seconds per run.}\label{tbl:communities:positive:fast-greedy:cores}
\begin{center}
\begin{tabular}{|c|r|r|c||r|r|r||r|r|r||}\hline
\multirow{2}{*}{coreness} & \multicolumn{1}{c|}{\multirow{2}{*}{$\abs{V}$}} & \multicolumn{1}{c|}{\multirow{2}{*}{$\abs{E}$}} & \multirow{2}{*}{$\abs{C}$} & \multicolumn{3}{c||}{communities found} & \multicolumn{3}{c||}{modularity} \\\cline{5-10}
          &        &        &   & \multicolumn{1}{c|}{avg} & \multicolumn{1}{c|}{min} & \multicolumn{1}{c||}{max} & \multicolumn{1}{c|}{avg} & \multicolumn{1}{c|}{min} & \multicolumn{1}{c||}{max} \\\hline\hline
$\geq 26$ &    869 &  20526 &    1 &   4.000 &    4 &    4 & 0.286729 & 0.286729 & 0.286729 \\\hline
$\geq 25$ &   1167 &  27810 &    1 &   3.000 &    3 &    3 & 0.294925 & 0.294925 & 0.294925 \\\hline
$\geq 24$ &   1358 &  32314 &    1 &   4.000 &    4 &    4 & 0.285080 & 0.285080 & 0.285080 \\\hline
$\geq 23$ &   1514 &  35870 &    1 &   4.000 &    4 &    4 & 0.283817 & 0.283817 & 0.283817 \\\hline
$\geq 22$ &   1709 &  40099 &    1 &   5.000 &    5 &    5 & 0.283268 & 0.283268 & 0.283268 \\\hline
$\geq 21$ &   1865 &  43330 &    1 &   4.000 &    4 &    4 & 0.292439 & 0.292439 & 0.292439 \\\hline
$\geq 20$ &   2007 &  46145 &    1 &   3.000 &    3 &    3 & 0.291441 & 0.291441 & 0.291441 \\\hline
$\geq 19$ &   2173 &  49265 &    1 &   4.000 &    4 &    4 & 0.294087 & 0.294087 & 0.294087 \\\hline
$\geq 18$ &   2384 &  53011 &    1 &   6.000 &    6 &    6 & 0.286722 & 0.286722 & 0.286722 \\\hline
$\geq 17$ &   2617 &  56939 &    1 &   6.000 &    6 &    6 & 0.285408 & 0.285408 & 0.285408 \\\hline
$\geq 16$ &   2847 &  60583 &    1 &   6.000 &    6 &    6 & 0.294509 & 0.294509 & 0.294509 \\\hline
$\geq 15$ &   3105 &  64412 &    1 &   6.000 &    6 &    6 & 0.291864 & 0.291864 & 0.291864 \\\hline
$\geq 14$ &   3407 &  68613 &    1 &   5.000 &    5 &    5 & 0.303075 & 0.303075 & 0.303075 \\\hline
$\geq 13$ &   3746 &  72978 &    1 &   6.000 &    6 &    6 & 0.299836 & 0.299836 & 0.299836 \\\hline
$\geq 12$ &   4160 &  77882 &    1 &  10.000 &   10 &   10 & 0.292280 & 0.292280 & 0.292280 \\\hline
$\geq 11$ &   4634 &  83039 &    1 &  12.000 &   12 &   12 & 0.298413 & 0.298413 & 0.298413 \\\hline
$\geq 10$ &   5182 &  88462 &    1 &  10.000 &   10 &   10 & 0.306681 & 0.306681 & 0.306681 \\\hline
$\geq  9$ &   5883 &  94709 &    1 &  14.000 &   14 &   14 & 0.318836 & 0.318836 & 0.318836 \\\hline
$\geq  8$ &   6750 & 101564 &    1 &  13.000 &   13 &   13 & 0.318698 & 0.318698 & 0.318698 \\\hline
$\geq  7$ &   7904 & 109561 &    1 &  18.000 &   18 &   18 & 0.326037 & 0.326037 & 0.326037 \\\hline
$\geq  6$ &   9392 & 118389 &    1 &  17.000 &   17 &   17 & 0.333980 & 0.333980 & 0.333980 \\\hline
$\geq  5$ &  11483 & 128731 &    1 &  24.000 &   24 &   24 & 0.332774 & 0.332774 & 0.332774 \\\hline
$\geq  4$ &  14864 & 142112 &    1 &  37.000 &   37 &   37 & 0.339470 & 0.339470 & 0.339470 \\\hline
$\geq  3$ &  21812 & 162691 &    1 &  61.000 &   61 &   61 & 0.372741 & 0.372741 & 0.372741 \\\hline
$\geq  2$ &  41659 & 201678 &    4 & 252.000 &  252 &  252 & 0.409148 & 0.409148 & 0.409148 \\\hline
\end{tabular}
\end{center}
\end{table}

\subsubsection{Fast Greedy: Communities in the Inner-Most Core}
We have the following communities.
\paragraph{Community 1 (size is 20).} 
\dbtext{class}, \dbtext{gym}, \dbtext{turn}, \dbtext{woman}, \dbtext{smell}, \dbtext{take bath}, \dbtext{pool}, \dbtext{pee}, \dbtext{wet}, 
\dbtext{radio}, \dbtext{shape}, \dbtext{view}, \dbtext{science}, \dbtext{art}, \dbtext{change}, \dbtext{wave}, \dbtext{test}, \dbtext{pass}, 
\dbtext{wash}, \dbtext{course}

\paragraph{Community 2 (size is 440).} 
\dbtext{something}, \dbtext{town}, \dbtext{rock}, \dbtext{beach}, \dbtext{tree}, \dbtext{monkey}, \dbtext{soup}, \dbtext{weasel}, \dbtext{pant}, 
\dbtext{library}, \dbtext{school}, \dbtext{kitten}, \dbtext{arm}, \dbtext{human}, \dbtext{plane}, \dbtext{beaver}, \dbtext{it}, \dbtext{paper}, 
\dbtext{bed}, \dbtext{dirty}, \dbtext{chicken}, \dbtext{state}, \dbtext{office build}, \dbtext{wiener dog}, \dbtext{ball}, \dbtext{box}, 
\dbtext{object}, \dbtext{mother}, \dbtext{coffee}, \dbtext{candle}, \dbtext{street}, \dbtext{fungus}, \dbtext{park}, \dbtext{snake}, \dbtext{wood}, 
\dbtext{bus}, \dbtext{eat}, \dbtext{bridge}, \dbtext{cloud}, \dbtext{computer}, \dbtext{line}, \dbtext{milk}, \dbtext{drawer}, \dbtext{storage}, 
\dbtext{car}, \dbtext{vehicle}, \dbtext{dog}, \dbtext{zoo}, \dbtext{bottle}, \dbtext{live}, \dbtext{one}, \dbtext{material}, \dbtext{chair}, 
\dbtext{cat}, \dbtext{hat}, \dbtext{country}, \dbtext{market}, \dbtext{house}, \dbtext{fish}, \dbtext{lake}, \dbtext{baby}, \dbtext{hotel}, 
\dbtext{plant}, \dbtext{game}, \dbtext{hospital}, \dbtext{bank}, \dbtext{hide}, \dbtext{girl}, \dbtext{animal}, \dbtext{church}, \dbtext{cold}, 
\dbtext{family}, \dbtext{moon}, \dbtext{pet}, \dbtext{cook}, \dbtext{shop}, \dbtext{letter}, \dbtext{bird}, \dbtext{bathroom}, \dbtext{city}, 
\dbtext{shark}, \dbtext{water}, \dbtext{rosebush}, \dbtext{yard}, \dbtext{desk}, \dbtext{office}, \dbtext{home}, \dbtext{sloth}, \dbtext{bat}, 
\dbtext{couch}, \dbtext{kitchen}, \dbtext{lizard}, \dbtext{build}, \dbtext{restaurant}, \dbtext{spoon}, \dbtext{butter}, \dbtext{beautiful}, 
\dbtext{key}, \dbtext{electricity}, \dbtext{eye}, \dbtext{nose}, \dbtext{stand}, \dbtext{well}, \dbtext{pen}, \dbtext{bill}, \dbtext{snow}, 
\dbtext{weather}, \dbtext{leg}, \dbtext{everything}, \dbtext{mouse}, \dbtext{magazine}, \dbtext{hole}, \dbtext{nature}, \dbtext{bald eagle}, 
\dbtext{nest}, \dbtext{crab}, \dbtext{paint}, \dbtext{ficus}, \dbtext{sea}, \dbtext{anemone}, \dbtext{ocean}, \dbtext{sun}, \dbtext{sky}, 
\dbtext{food}, \dbtext{grape}, \dbtext{bedroom}, \dbtext{horse}, \dbtext{store}, \dbtext{hot}, \dbtext{airport}, \dbtext{sugar}, 
\dbtext{grocery store}, \dbtext{basket}, \dbtext{hold}, \dbtext{foot}, \dbtext{refrigerator}, \dbtext{newspaper}, \dbtext{rice}, \dbtext{surface}, 
\dbtext{liquid}, \dbtext{meadow}, \dbtext{camp}, \dbtext{window}, \dbtext{oil}, \dbtext{cover}, \dbtext{plate}, \dbtext{dinner}, \dbtext{den}, 
\dbtext{cow}, \dbtext{earth}, \dbtext{garage}, \dbtext{garden}, \dbtext{poop}, \dbtext{potato}, \dbtext{outside}, \dbtext{frog}, \dbtext{napkin}, 
\dbtext{light}, \dbtext{salad}, \dbtext{fox}, \dbtext{forest}, \dbtext{glass}, \dbtext{cupboard}, \dbtext{telephone}, \dbtext{marmot}, 
\dbtext{mountain}, \dbtext{salt}, \dbtext{motel}, \dbtext{drop}, \dbtext{bone}, \dbtext{meat}, \dbtext{bookstore}, \dbtext{rain}, 
\dbtext{body}, \dbtext{use}, \dbtext{ferret}, \dbtext{small dog}, \dbtext{cloth}, \dbtext{factory}, \dbtext{bottle wine}, \dbtext{doll}, 
\dbtext{pencil}, \dbtext{wheel}, \dbtext{lemur}, \dbtext{name}, \dbtext{nice}, \dbtext{book}, \dbtext{museum}, \dbtext{black}, \dbtext{canada}, 
\dbtext{instrument}, \dbtext{trash}, \dbtext{can}, \dbtext{a}, \dbtext{wind}, \dbtext{hand}, \dbtext{picture}, \dbtext{road}, \dbtext{seat}, 
\dbtext{boat}, \dbtext{wild}, \dbtext{clothe}, \dbtext{dish}, \dbtext{train station}, \dbtext{war}, \dbtext{mall}, \dbtext{flower}, \dbtext{wallet}, 
\dbtext{room}, \dbtext{cell}, \dbtext{small}, \dbtext{bicycle}, \dbtext{new york}, \dbtext{farm}, \dbtext{sink}, \dbtext{pocket}, \dbtext{color}, 
\dbtext{white}, \dbtext{red}, \dbtext{stone}, \dbtext{vegetable}, \dbtext{green}, \dbtext{life}, \dbtext{burn}, \dbtext{large}, \dbtext{shoe}, 
\dbtext{scale}, \dbtext{soft}, \dbtext{steak}, \dbtext{fire}, \dbtext{beer}, \dbtext{finger}, \dbtext{knife}, \dbtext{dangerous}, \dbtext{marmoset}, 
\dbtext{carpet}, \dbtext{bowl}, \dbtext{australia}, \dbtext{corn}, \dbtext{fridge}, \dbtext{soap}, \dbtext{expensive}, \dbtext{coin}, \dbtext{number}, 
\dbtext{fruit}, \dbtext{heavy}, \dbtext{map}, \dbtext{fork}, \dbtext{cuba}, \dbtext{france}, \dbtext{italy}, \dbtext{steel}, \dbtext{piano}, 
\dbtext{wall}, \dbtext{theatre}, \dbtext{unite state}, \dbtext{cup}, \dbtext{hill}, \dbtext{square}, \dbtext{apple tree}, \dbtext{shelf}, 
\dbtext{god}, \dbtext{friend house}, \dbtext{airplane}, \dbtext{space}, \dbtext{phone}, \dbtext{this}, \dbtext{place}, \dbtext{tool}, 
\dbtext{apple}, \dbtext{mouth}, \dbtext{bag}, \dbtext{doctor}, \dbtext{theater}, \dbtext{river}, \dbtext{blue}, \dbtext{grass}, \dbtext{cheese}, 
\dbtext{mammal}, \dbtext{bean}, \dbtext{lot}, \dbtext{hair}, \dbtext{make}, \dbtext{measure}, \dbtext{flat}, \dbtext{utah}, \dbtext{plastic}, 
\dbtext{container}, \dbtext{bar}, \dbtext{bug}, \dbtext{live room}, \dbtext{toilet}, \dbtext{tooth}, \dbtext{cabinet}, \dbtext{table}, 
\dbtext{furniture}, \dbtext{lamp}, \dbtext{pizza}, \dbtext{dust}, \dbtext{sand}, \dbtext{hall}, \dbtext{dictionary}, \dbtext{rise}, 
\dbtext{closet}, \dbtext{boy}, \dbtext{door}, \dbtext{floor}, \dbtext{basement}, \dbtext{sofa}, \dbtext{cut}, \dbtext{page}, \dbtext{company}, 
\dbtext{bite}, \dbtext{dark}, \dbtext{college}, \dbtext{world}, \dbtext{air}, \dbtext{sheep}, \dbtext{statue}, \dbtext{metal}, \dbtext{open}, 
\dbtext{warm}, \dbtext{big}, \dbtext{high}, \dbtext{squirrel}, \dbtext{alcohol}, \dbtext{university}, \dbtext{roll}, \dbtext{general}, 
\dbtext{clock}, \dbtext{round}, \dbtext{heat}, \dbtext{skin}, \dbtext{noun}, \dbtext{top}, \dbtext{wine}, \dbtext{jar}, \dbtext{hard}, 
\dbtext{put}, \dbtext{duck}, \dbtext{toy}, \dbtext{ring}, \dbtext{draw}, \dbtext{edible}, \dbtext{wyom}, \dbtext{thing}, \dbtext{land}, 
\dbtext{rug}, \dbtext{pot}, \dbtext{little}, \dbtext{ear}, \dbtext{alive}, \dbtext{bread}, \dbtext{field}, \dbtext{face}, \dbtext{lawn}, 
\dbtext{tin}, \dbtext{oven}, \dbtext{long}, \dbtext{shade}, \dbtext{carry}, \dbtext{fly}, \dbtext{egg}, \dbtext{building}, \dbtext{bee}, 
\dbtext{business}, \dbtext{resturant}, \dbtext{sock}, \dbtext{bear}, \dbtext{bell}, \dbtext{head}, \dbtext{sign}, \dbtext{pantry}, 
\dbtext{bullet}, \dbtext{degree}, \dbtext{note}, \dbtext{card}, \dbtext{supermarket}, \dbtext{machine}, \dbtext{gas}, \dbtext{roof}, 
\dbtext{brown}, \dbtext{circle}, \dbtext{cake}, \dbtext{solid}, \dbtext{dirt}, \dbtext{point}, \dbtext{useful}, \dbtext{handle}, 
\dbtext{adjective}, \dbtext{alaska}, \dbtext{michigan}, \dbtext{maryland}, \dbtext{maine}, \dbtext{delaware}, \dbtext{kansa}, 
\dbtext{department}, \dbtext{be}, \dbtext{steam}, \dbtext{pretty}, \dbtext{side}, \dbtext{decoration}, \dbtext{stapler}, \dbtext{classroom}, 
\dbtext{transport}, \dbtext{step}, \dbtext{apartment}, \dbtext{part}, \dbtext{bush}, \dbtext{countryside}, \dbtext{power}, \dbtext{same}, 
\dbtext{stage}, \dbtext{any large city}, \dbtext{comfort}, \dbtext{edge}, \dbtext{case}, \dbtext{grow}, \dbtext{board}, \dbtext{sunshine}, 
\dbtext{flea}, \dbtext{corner}, \dbtext{short}, \dbtext{outdoors}, \dbtext{stick}, \dbtext{singular}, \dbtext{find house}, \dbtext{find outside}, 
\dbtext{winery}, \dbtext{branch}, \dbtext{polish}, \dbtext{wax}, \dbtext{general term}, \dbtext{generic}, \dbtext{ground}, \dbtext{eaten}, 
\dbtext{neighbor house}, \dbtext{usually}, \dbtext{unit}, \dbtext{generic term}

\paragraph{Community 3 (size is 12).} 
\dbtext{word}, \dbtext{concert}, \dbtext{band}, \dbtext{kill}, \dbtext{club}, \dbtext{not}, \dbtext{mean}, \dbtext{cool}, \dbtext{crowd}, 
\dbtext{organization}, \dbtext{end}, \dbtext{group}

\paragraph{Community 4 (size is 397).} 
\dbtext{man}, \dbtext{person}, \dbtext{type}, \dbtext{train}, \dbtext{work}, \dbtext{write program}, \dbtext{go concert}, \dbtext{hear music}, 
\dbtext{exercise}, \dbtext{love}, \dbtext{bath}, \dbtext{listen}, \dbtext{go performance}, \dbtext{take walk}, \dbtext{walk}, \dbtext{entertain}, 
\dbtext{run marathon}, \dbtext{wait line}, \dbtext{attend lecture}, \dbtext{drink}, \dbtext{study}, \dbtext{go walk}, \dbtext{play basketball}, 
\dbtext{fun}, \dbtext{bore}, \dbtext{wait table}, \dbtext{go see film}, \dbtext{go work}, \dbtext{watch tv show}, \dbtext{wake up morning}, 
\dbtext{dream}, \dbtext{shower}, \dbtext{child}, \dbtext{smoke}, \dbtext{go fish}, \dbtext{tell story}, \dbtext{surf web}, \dbtext{play foot\-ball}, 
\dbtext{movie}, \dbtext{go restaurant}, \dbtext{visit museum}, \dbtext{study subject}, \dbtext{live life}, \dbtext{go sport event}, 
\dbtext{go play}, \dbtext{sit}, \dbtext{play soccer}, \dbtext{go jog}, \dbtext{take shower}, \dbtext{play ball}, \dbtext{eat food}, 
\dbtext{watch movie}, \dbtext{watch film}, \dbtext{stretch}, \dbtext{play frisbee}, \dbtext{go school}, \dbtext{surprise}, \dbtext{paint picture}, 
\dbtext{go film}, \dbtext{party}, \dbtext{rest}, \dbtext{listen radio}, \dbtext{kiss}, \dbtext{remember}, \dbtext{housework}, \dbtext{clean}, 
\dbtext{lunch}, \dbtext{watch tv}, \dbtext{attend school}, \dbtext{play tennis}, \dbtext{trouble}, \dbtext{comfortable}, \dbtext{play}, 
\dbtext{take bus}, \dbtext{conversation}, \dbtext{talk}, \dbtext{take course}, \dbtext{learn}, \dbtext{plan}, \dbtext{think}, \dbtext{go run}, 
\dbtext{sleep}, \dbtext{hang out bar}, \dbtext{plan vacation}, \dbtext{go see play}, \dbtext{attend class}, \dbtext{go swim}, \dbtext{ride bike}, 
\dbtext{nothing}, \dbtext{buy}, \dbtext{eat restaurant}, \dbtext{tv}, \dbtext{stress}, \dbtext{boredom}, \dbtext{ticket}, \dbtext{music}, 
\dbtext{use television}, \dbtext{dress}, \dbtext{entertainment}, \dbtext{listen music}, \dbtext{enjoyment}, \dbtext{hurt}, \dbtext{student}, 
\dbtext{muscle}, \dbtext{go movie}, \dbtext{enlightenment}, \dbtext{stand line}, \dbtext{attend classical concert}, \dbtext{death}, 
\dbtext{play sport}, \dbtext{eat dinner}, \dbtext{effort}, \dbtext{drive car}, \dbtext{traveling}, \dbtext{knowledge}, \dbtext{teach}, 
\dbtext{call}, \dbtext{laugh joke}, \dbtext{run}, \dbtext{read book}, \dbtext{education}, \dbtext{take note}, \dbtext{travel}, \dbtext{go store}, 
\dbtext{see}, \dbtext{story}, \dbtext{go sleep}, \dbtext{tire}, \dbtext{attention}, \dbtext{die}, \dbtext{fall asleep}, \dbtext{money}, 
\dbtext{run errand}, \dbtext{patience}, \dbtext{spend money}, \dbtext{cry}, \dbtext{pay bill}, \dbtext{earn money}, \dbtext{television}, 
\dbtext{speak}, \dbtext{drink water}, \dbtext{fatigue}, \dbtext{take break}, \dbtext{hike}, \dbtext{drink alcohol}, \dbtext{lie}, 
\dbtext{play chess}, \dbtext{friend}, \dbtext{anger}, \dbtext{read}, \dbtext{curiosity}, \dbtext{pay}, \dbtext{swim}, \dbtext{break}, 
\dbtext{verb}, \dbtext{drive}, \dbtext{use computer}, \dbtext{take film}, \dbtext{smile}, \dbtext{fiddle}, \dbtext{we}, \dbtext{wrestle}, 
\dbtext{see new}, \dbtext{dance}, \dbtext{fight}, \dbtext{job}, \dbtext{smart}, \dbtext{play baseball}, \dbtext{excite}, 
\dbtext{attend rock concert}, \dbtext{hear news}, \dbtext{contemplate}, \dbtext{pain}, \dbtext{audience}, \dbtext{understand}, \dbtext{write}, 
\dbtext{stay healthy}, \dbtext{research}, \dbtext{learn new}, \dbtext{sweat}, \dbtext{headache}, \dbtext{fart}, \dbtext{read newspaper}, 
\dbtext{sport}, \dbtext{understand better}, \dbtext{bad}, \dbtext{show}, \dbtext{write story}, \dbtext{stop}, \dbtext{transportation}, 
\dbtext{fall down}, \dbtext{practice}, \dbtext{help}, \dbtext{lose}, \dbtext{close eye}, \dbtext{satisfaction}, \dbtext{time}, 
\dbtext{answer question}, \dbtext{perform}, \dbtext{need}, \dbtext{everyone}, \dbtext{go somewhere}, \dbtext{sound}, \dbtext{good}, 
\dbtext{play card}, \dbtext{go}, \dbtext{sex}, \dbtext{wait}, \dbtext{buy ticket}, \dbtext{gain knowledge}, \dbtext{news}, \dbtext{interest}, 
\dbtext{feel}, \dbtext{sit down}, \dbtext{ski}, \dbtext{surf}, \dbtext{teacher}, \dbtext{leave}, \dbtext{happiness}, \dbtext{exhaustion}, 
\dbtext{sit chair}, \dbtext{laugh}, \dbtext{relax}, \dbtext{waste time}, \dbtext{pleasure}, \dbtext{relaxation}, \dbtext{care}, 
\dbtext{procreate}, \dbtext{watch}, \dbtext{funny}, \dbtext{win}, \dbtext{go mall}, \dbtext{flirt}, \dbtext{pass time}, \dbtext{noise}, 
\dbtext{climb}, \dbtext{wash hand}, \dbtext{go home}, \dbtext{love else}, \dbtext{drunk}, \dbtext{peace}, \dbtext{sing}, \dbtext{buy beer}, 
\dbtext{internet}, \dbtext{kid}, \dbtext{like}, \dbtext{date}, \dbtext{record}, \dbtext{find}, \dbtext{song}, \dbtext{play game}, \dbtext{meet}, 
\dbtext{activity}, \dbtext{jog}, \dbtext{quiet}, \dbtext{skill}, \dbtext{hobby}, \dbtext{birthday}, \dbtext{tiredness}, \dbtext{communication}, 
\dbtext{drink coffee}, \dbtext{read magazine}, \dbtext{good time}, \dbtext{good health}, \dbtext{act}, \dbtext{play hockey}, \dbtext{eat ice cream}, 
\dbtext{learn language}, \dbtext{dive}, \dbtext{go zoo}, \dbtext{go internet}, \dbtext{cash}, \dbtext{important}, \dbtext{read child}, 
\dbtext{enjoy yourself}, \dbtext{see movie}, \dbtext{energy}, \dbtext{kill person}, \dbtext{emotion}, \dbtext{clean house}, \dbtext{fit}, 
\dbtext{view video}, \dbtext{play poker}, \dbtext{excitement}, \dbtext{move}, \dbtext{fly airplane}, \dbtext{ride horse}, \dbtext{stay bed}, 
\dbtext{look}, \dbtext{voice}, \dbtext{event}, \dbtext{happy}, \dbtext{find information}, \dbtext{fear}, \dbtext{go vacation}, \dbtext{breathe}, 
\dbtext{recreation}, \dbtext{enjoy}, \dbtext{hear}, \dbtext{jump}, \dbtext{ride bicycle}, \dbtext{health}, \dbtext{communicate}, \dbtext{make money}, 
\dbtext{become tire}, \dbtext{action}, \dbtext{fall}, \dbtext{lose weight}, \dbtext{jump up down}, \dbtext{watch television}, \dbtext{count}, 
\dbtext{healthy}, \dbtext{know}, \dbtext{learn subject}, \dbtext{joy}, \dbtext{stand up}, \dbtext{information}, \dbtext{read letter}, \dbtext{lay}, 
\dbtext{jump rope}, \dbtext{celebrate}, \dbtext{sad\-ness}, \dbtext{bike}, \dbtext{watch musician perform}, \dbtext{motion}, \dbtext{feel better}, 
\dbtext{compete}, \dbtext{out}, \dbtext{feel good}, \dbtext{accident}, \dbtext{stay fit}, \dbtext{injury}, \dbtext{ride}, \dbtext{play piano}, 
\dbtext{learn world}, \dbtext{see exhibit}, \dbtext{release energy}, \dbtext{see art}, \dbtext{see excite story}, \dbtext{orgasm}, \dbtext{trip}, 
\dbtext{laughter}, \dbtext{express yourself}, \dbtext{discover truth}, \dbtext{see favorite show}, \dbtext{go party}, \dbtext{competition}, 
\dbtext{express information}, \dbtext{climb mountain}, \dbtext{attend meet}, \dbtext{fly kite}, \dbtext{examine}, \dbtext{race}, \dbtext{meet friend}, 
\dbtext{read news}, \dbtext{shock}, \dbtext{return work}, \dbtext{see band}, \dbtext{visit art gallery}, \dbtext{earn live}, \dbtext{punch}, 
\dbtext{cool off}, \dbtext{watch television show}, \dbtext{so\-cial\-ize}, \dbtext{skate}, \dbtext{movement}, \dbtext{create art}, 
\dbtext{crossword puzzle}, \dbtext{enjoy film}, \dbtext{go pub}, \dbtext{feel happy}, \dbtext{play lacrosse}, \dbtext{socialis}, \dbtext{away}, 
\dbtext{physical activity}, \dbtext{get}, \dbtext{many person}, \dbtext{make person laugh}, \dbtext{make friend}, \dbtext{chat friend}, 
\dbtext{meet person}, \dbtext{meet interest person}, \dbtext{get drunk}, \dbtext{friend over}, \dbtext{get exercise}, \dbtext{get tire}, 
\dbtext{enjoy company friend}, \dbtext{play game friend}, \dbtext{get physical activity}, \dbtext{go opus}, \dbtext{get shape}, \dbtext{sit quietly}, 
\dbtext{do it}, \dbtext{get fit}, \dbtext{teach other person}, \dbtext{entertain person}, \dbtext{see person play game}

\subsection{Multilevel}
The algorithm used is the one implemented in \igraph 
which is based on \citep{communities:multilevel}.
Table \ref{tbl:communities:positive:multilevel:cores} presents the results when we apply the algorithm for 
subgraphs in which we include vertices with successively lower coreness
and allow positive polarity on the edges only.

\begin{table}[ht]
\caption{Applying the Multilevel algorithm for community finding implemented in \igraph by successively including
vertices with lower coreness on the undirected graph induced by the assertions with positive polarity (self-loops are removed).
In every row we have the number of vertices and the number of edges of each such subgraph together with
the number of components ($\abs{C}$) that we find in that subgraph. 
The next three columns present the number of communities found by the algorithm;
the average among all runs, the minimum, and the maximum.
The next three columns present the modularity achieved by the algorithm due to the cut induced by the communities;
the average among all runs, the minimum, and the maximum.
The entire computation lasted $687.9$ seconds for $100$ runs; 
that is about $6.879$ seconds per run.}\label{tbl:communities:positive:multilevel:cores}
\begin{center}
\begin{tabular}{|c|r|r|c||r|r|r||r|r|r||}\hline
\multirow{2}{*}{coreness} & \multicolumn{1}{c|}{\multirow{2}{*}{$\abs{V}$}} & \multicolumn{1}{c|}{\multirow{2}{*}{$\abs{E}$}} & \multirow{2}{*}{$\abs{C}$} & \multicolumn{3}{c||}{communities found} & \multicolumn{3}{c||}{modularity} \\\cline{5-10}
          &        &        &   & \multicolumn{1}{c|}{avg} & \multicolumn{1}{c|}{min} & \multicolumn{1}{c||}{max} & \multicolumn{1}{c|}{avg} & \multicolumn{1}{c|}{min} & \multicolumn{1}{c||}{max} \\\hline\hline
$\geq 26$ &    869 &  20526 &    1 &  5.000 &    5 &    5 & 0.322157 & 0.322157 & 0.322157 \\\hline
$\geq 25$ &   1167 &  27810 &    1 &  6.000 &    6 &    6 & 0.320322 & 0.320322 & 0.320322 \\\hline 
$\geq 24$ &   1358 &  32314 &    1 &  7.000 &    7 &    7 & 0.320777 & 0.320777 & 0.320777 \\\hline
$\geq 23$ &   1514 &  35870 &    1 &  6.000 &    6 &    6 & 0.322224 & 0.322224 & 0.322224 \\\hline
$\geq 22$ &   1709 &  40099 &    1 &  7.000 &    7 &    7 & 0.322961 & 0.322961 & 0.322961 \\\hline
$\geq 21$ &   1865 &  43330 &    1 &  8.000 &    8 &    8 & 0.314372 & 0.314372 & 0.314372 \\\hline
$\geq 20$ &   2007 &  46145 &    1 &  7.000 &    7 &    7 & 0.320638 & 0.320638 & 0.320638 \\\hline
$\geq 19$ &   2173 &  49265 &    1 &  7.000 &    7 &    7 & 0.321349 & 0.321349 & 0.321349 \\\hline
$\geq 18$ &   2384 &  53011 &    1 &  8.000 &    8 &    8 & 0.329111 & 0.329111 & 0.329111 \\\hline
$\geq 17$ &   2617 &  56939 &    1 &  9.000 &    9 &    9 & 0.330618 & 0.330618 & 0.330618 \\\hline
$\geq 16$ &   2847 &  60583 &    1 &  6.000 &    6 &    6 & 0.331601 & 0.331601 & 0.331601 \\\hline
$\geq 15$ &   3105 &  64412 &    1 &  7.000 &    7 &    7 & 0.336495 & 0.336495 & 0.336495 \\\hline
$\geq 14$ &   3407 &  68613 &    1 &  9.000 &    9 &    9 & 0.347883 & 0.347883 & 0.347883 \\\hline
$\geq 13$ &   3746 &  72978 &    1 &  9.000 &    9 &    9 & 0.344480 & 0.344480 & 0.344480 \\\hline
$\geq 12$ &   4160 &  77882 &    1 & 10.000 &   10 &   10 & 0.349346 & 0.349346 & 0.349346 \\\hline
$\geq 11$ &   4634 &  83039 &    1 & 10.000 &   10 &   10 & 0.348056 & 0.348056 & 0.348056 \\\hline
$\geq 10$ &   5182 &  88462 &    1 & 10.000 &   10 &   10 & 0.361789 & 0.361789 & 0.361789 \\\hline
$\geq  9$ &   5883 &  94709 &    1 &  9.000 &    9 &    9 & 0.363861 & 0.363861 & 0.363861 \\\hline
$\geq  8$ &   6750 & 101564 &    1 & 10.000 &   10 &   10 & 0.368195 & 0.368195 & 0.368195 \\\hline
$\geq  7$ &   7904 & 109561 &    1 & 10.000 &   10 &   10 & 0.374810 & 0.374810 & 0.374810 \\\hline
$\geq  6$ &   9392 & 118389 &    1 & 10.000 &   10 &   10 & 0.386815 & 0.386815 & 0.386815 \\\hline
$\geq  5$ &  11483 & 128731 &    1 & 12.000 &   12 &   12 & 0.391540 & 0.391540 & 0.391540 \\\hline
$\geq  4$ &  14864 & 142112 &    1 & 11.000 &   11 &   11 & 0.401597 & 0.401597 & 0.401597 \\\hline
$\geq  3$ &  21812 & 162691 &    1 & 13.000 &   13 &   13 & 0.419143 & 0.419143 & 0.419143 \\\hline
$\geq  2$ &  41659 & 201678 &    4 & 25.000 &   25 &   25 & 0.449455 & 0.449455 & 0.449455 \\\hline
\end{tabular}
\end{center}
\end{table}

\subsubsection{Multilevel: Communities in the Inner-Most Core}
We have the following communities.
\paragraph{Community 1 (size is 92).}
\dbtext{soup}, \dbtext{drink}, \dbtext{chicken}, \dbtext{eat food}, \dbtext{coffee}, \dbtext{lunch}, \dbtext{eat}, 
\dbtext{milk}, \dbtext{bottle}, \dbtext{market}, \dbtext{fish}, \dbtext{cook}, \dbtext{shop}, \dbtext{eat dinner}, 
\dbtext{kitchen}, \dbtext{restaurant}, \dbtext{spoon}, \dbtext{butter}, \dbtext{eye}, \dbtext{food}, \dbtext{grape}, 
\dbtext{sugar}, \dbtext{grocery store}, \dbtext{basket}, \dbtext{hold}, \dbtext{refrigerator}, \dbtext{rice}, \dbtext{liquid}, 
\dbtext{oil}, \dbtext{plate}, \dbtext{dinner}, \dbtext{potato}, \dbtext{napkin}, \dbtext{salad}, \dbtext{glass}, \dbtext{cupboard}, 
\dbtext{salt}, \dbtext{bone}, \dbtext{meat}, \dbtext{bottle wine}, \dbtext{can}, \dbtext{a}, \dbtext{hand}, \dbtext{dish}, 
\dbtext{sink}, \dbtext{white}, \dbtext{vegetable}, \dbtext{steak}, \dbtext{beer}, \dbtext{knife}, \dbtext{bowl}, \dbtext{corn}, 
\dbtext{fridge}, \dbtext{soap}, \dbtext{fruit}, \dbtext{fork}, \dbtext{cup}, \dbtext{apple}, \dbtext{mouth}, \dbtext{cheese}, 
\dbtext{bean}, \dbtext{container}, \dbtext{wash hand}, \dbtext{tooth}, \dbtext{cabinet}, \dbtext{pizza}, \dbtext{cut}, \dbtext{open}, 
\dbtext{alcohol}, \dbtext{round}, \dbtext{skin}, \dbtext{wine}, \dbtext{jar}, \dbtext{edible}, \dbtext{pot}, \dbtext{ear}, \dbtext{bread}, 
\dbtext{face}, \dbtext{tin}, \dbtext{oven}, \dbtext{egg}, \dbtext{resturant}, \dbtext{wash}, \dbtext{head}, \dbtext{pantry}, 
\dbtext{supermarket}, \dbtext{cake}, \dbtext{steam}, \dbtext{power}, \dbtext{same}, \dbtext{eaten}, \dbtext{usually}

\paragraph{Community 2 (size is 180).} 
\dbtext{rock}, \dbtext{beach}, \dbtext{tree}, \dbtext{monkey}, \dbtext{weasel}, \dbtext{pant}, \dbtext{kitten}, \dbtext{arm}, 
\dbtext{beaver}, \dbtext{smoke}, \dbtext{state}, \dbtext{wiener dog}, \dbtext{fungus}, \dbtext{park}, \dbtext{snake}, \dbtext{wood}, 
\dbtext{bridge}, \dbtext{cloud}, \dbtext{nothing}, \dbtext{dog}, \dbtext{zoo}, \dbtext{live}, \dbtext{cat}, \dbtext{hat}, \dbtext{country}, 
\dbtext{lake}, \dbtext{baby}, \dbtext{plant}, \dbtext{hide}, \dbtext{animal}, \dbtext{cold}, \dbtext{moon}, \dbtext{pet}, \dbtext{bird}, 
\dbtext{shark}, \dbtext{water}, \dbtext{rosebush}, \dbtext{yard}, \dbtext{sloth}, \dbtext{bat}, \dbtext{lizard}, \dbtext{beautiful}, 
\dbtext{nose}, \dbtext{well}, \dbtext{snow}, \dbtext{weather}, \dbtext{everything}, \dbtext{mouse}, \dbtext{hole}, \dbtext{nature}, 
\dbtext{bald eagle}, \dbtext{nest}, \dbtext{crab}, \dbtext{ficus}, \dbtext{sea}, \dbtext{anemone}, \dbtext{ocean}, \dbtext{sun}, 
\dbtext{sky}, \dbtext{horse}, \dbtext{hot}, \dbtext{meadow}, \dbtext{camp}, \dbtext{den}, \dbtext{cow}, \dbtext{earth}, \dbtext{garden}, 
\dbtext{poop}, \dbtext{outside}, \dbtext{frog}, \dbtext{light}, \dbtext{fox}, \dbtext{forest}, \dbtext{marmot}, \dbtext{mountain}, 
\dbtext{rain}, \dbtext{body}, \dbtext{ferret}, \dbtext{small dog}, \dbtext{lemur}, \dbtext{nice}, \dbtext{museum}, \dbtext{black}, 
\dbtext{canada}, \dbtext{wind}, \dbtext{wild}, \dbtext{flower}, \dbtext{small}, \dbtext{new york}, \dbtext{farm}, \dbtext{color}, 
\dbtext{red}, \dbtext{stone}, \dbtext{green}, \dbtext{life}, \dbtext{burn}, \dbtext{large}, \dbtext{soft}, \dbtext{fire}, \dbtext{dangerous}, 
\dbtext{marmoset}, \dbtext{australia}, \dbtext{leave}, \dbtext{heavy}, \dbtext{cuba}, \dbtext{france}, \dbtext{italy}, \dbtext{unite state}, 
\dbtext{hill}, \dbtext{apple tree}, \dbtext{god}, \dbtext{space}, \dbtext{river}, \dbtext{blue}, \dbtext{grass}, \dbtext{mammal}, \dbtext{lot}, 
\dbtext{hair}, \dbtext{utah}, \dbtext{bug}, \dbtext{view}, \dbtext{sand}, \dbtext{dictionary}, \dbtext{rise}, \dbtext{bite}, \dbtext{dark}, 
\dbtext{science}, \dbtext{world}, \dbtext{air}, \dbtext{sheep}, \dbtext{statue}, \dbtext{warm}, \dbtext{big}, \dbtext{high}, \dbtext{squirrel}, 
\dbtext{general}, \dbtext{heat}, \dbtext{cool}, \dbtext{art}, \dbtext{hard}, \dbtext{duck}, \dbtext{wyom}, \dbtext{land}, \dbtext{little}, 
\dbtext{alive}, \dbtext{field}, \dbtext{lawn}, \dbtext{long}, \dbtext{shade}, \dbtext{fly}, \dbtext{bee}, \dbtext{bear}, \dbtext{gas}, 
\dbtext{brown}, \dbtext{solid}, \dbtext{dirt}, \dbtext{adjective}, \dbtext{alaska}, \dbtext{michigan}, \dbtext{maryland}, \dbtext{maine}, 
\dbtext{delaware}, \dbtext{kansa}, \dbtext{be}, \dbtext{pretty}, \dbtext{bush}, \dbtext{course}, \dbtext{countryside}, \dbtext{grow}, 
\dbtext{sunshine}, \dbtext{flea}, \dbtext{short}, \dbtext{outdoors}, \dbtext{stick}, \dbtext{find outside}, \dbtext{branch}, \dbtext{wax}, 
\dbtext{generic}, \dbtext{ground}, \dbtext{generic term}

\paragraph{Community 3 (size is 184).}
\dbtext{something}, \dbtext{type}, \dbtext{town}, \dbtext{word}, \dbtext{library}, \dbtext{school}, \dbtext{human}, \dbtext{plane}, 
\dbtext{class}, \dbtext{it}, \dbtext{paper}, \dbtext{bed}, \dbtext{dirty}, \dbtext{office build}, \dbtext{sit}, \dbtext{box}, \dbtext{object}, 
\dbtext{mother}, \dbtext{candle}, \dbtext{street}, \dbtext{bus}, \dbtext{computer}, \dbtext{line}, \dbtext{drawer}, \dbtext{storage}, \dbtext{car}, 
\dbtext{vehicle}, \dbtext{turn}, \dbtext{material}, \dbtext{chair}, \dbtext{house}, \dbtext{hotel}, \dbtext{game}, \dbtext{hospital}, \dbtext{bank}, 
\dbtext{girl}, \dbtext{church}, \dbtext{family}, \dbtext{letter}, \dbtext{bathroom}, \dbtext{city}, \dbtext{desk}, \dbtext{office}, \dbtext{home}, 
\dbtext{couch}, \dbtext{build}, \dbtext{key}, \dbtext{electricity}, \dbtext{stand}, \dbtext{pen}, \dbtext{bill}, \dbtext{magazine}, \dbtext{band}, 
\dbtext{paint}, \dbtext{bedroom}, \dbtext{store}, \dbtext{airport}, \dbtext{newspaper}, \dbtext{surface}, \dbtext{window}, \dbtext{cover}, 
\dbtext{garage}, \dbtext{telephone}, \dbtext{motel}, \dbtext{bookstore}, \dbtext{use}, \dbtext{write}, \dbtext{cloth}, \dbtext{factory}, 
\dbtext{doll}, \dbtext{pencil}, \dbtext{name}, \dbtext{book}, \dbtext{instrument}, \dbtext{trash}, \dbtext{picture}, \dbtext{road}, \dbtext{seat}, 
\dbtext{boat}, \dbtext{clothe}, \dbtext{train station}, \dbtext{mall}, \dbtext{wallet}, \dbtext{room}, \dbtext{cell}, \dbtext{pocket}, \dbtext{shoe}, 
\dbtext{scale}, \dbtext{finger}, \dbtext{carpet}, \dbtext{expensive}, \dbtext{coin}, \dbtext{number}, \dbtext{map}, \dbtext{steel}, \dbtext{piano}, 
\dbtext{wall}, \dbtext{club}, \dbtext{square}, \dbtext{shelf}, \dbtext{friend house}, \dbtext{airplane}, \dbtext{phone}, \dbtext{this}, \dbtext{place}, 
\dbtext{radio}, \dbtext{tool}, \dbtext{bag}, \dbtext{doctor}, \dbtext{theater}, \dbtext{make}, \dbtext{measure}, \dbtext{flat}, \dbtext{plastic}, 
\dbtext{bar}, \dbtext{live room}, \dbtext{toilet}, \dbtext{table}, \dbtext{furniture}, \dbtext{lamp}, \dbtext{dust}, \dbtext{hall}, \dbtext{closet}, 
\dbtext{boy}, \dbtext{door}, \dbtext{floor}, \dbtext{basement}, \dbtext{sofa}, \dbtext{page}, \dbtext{company}, \dbtext{college}, \dbtext{metal}, 
\dbtext{university}, \dbtext{clock}, \dbtext{noun}, \dbtext{top}, \dbtext{put}, \dbtext{toy}, \dbtext{ring}, \dbtext{crowd}, \dbtext{draw}, 
\dbtext{thing}, \dbtext{rug}, \dbtext{change}, \dbtext{carry}, \dbtext{test}, \dbtext{organization}, \dbtext{building}, \dbtext{business}, 
\dbtext{sock}, \dbtext{bell}, \dbtext{sign}, \dbtext{group}, \dbtext{bullet}, \dbtext{degree}, \dbtext{note}, \dbtext{card}, \dbtext{machine}, 
\dbtext{roof}, \dbtext{circle}, \dbtext{point}, \dbtext{useful}, \dbtext{handle}, \dbtext{department}, \dbtext{side}, \dbtext{decoration}, 
\dbtext{stapler}, \dbtext{classroom}, \dbtext{apartment}, \dbtext{part}, \dbtext{stage}, \dbtext{any large city}, \dbtext{comfort}, \dbtext{edge}, 
\dbtext{case}, \dbtext{board}, \dbtext{corner}, \dbtext{singular}, \dbtext{find house}, \dbtext{winery}, \dbtext{polish}, \dbtext{general term}, 
\dbtext{neighbor house}, \dbtext{unit}

\paragraph{Community 4 (size is 149).} 
\dbtext{man}, \dbtext{train}, \dbtext{work}, \dbtext{exercise}, \dbtext{take walk}, \dbtext{walk}, \dbtext{run marathon}, \dbtext{go walk}, 
\dbtext{play basketball}, \dbtext{wait table}, \dbtext{go work}, \dbtext{wake up morning}, \dbtext{shower}, \dbtext{child}, \dbtext{gym}, 
\dbtext{play football}, \dbtext{play soccer}, \dbtext{go jog}, \dbtext{take shower}, \dbtext{play ball}, \dbtext{ball}, \dbtext{stretch}, 
\dbtext{play frisbee}, \dbtext{housework}, \dbtext{clean}, \dbtext{play tennis}, \dbtext{plan}, \dbtext{go run}, \dbtext{go swim}, 
\dbtext{ride bike}, \dbtext{stress}, \dbtext{dress}, \dbtext{one}, \dbtext{hurt}, \dbtext{muscle}, \dbtext{woman}, \dbtext{death}, 
\dbtext{play sport}, \dbtext{effort}, \dbtext{drive car}, \dbtext{traveling}, \dbtext{run}, \dbtext{travel}, \dbtext{go store}, \dbtext{smell}, 
\dbtext{tire}, \dbtext{die}, \dbtext{leg}, \dbtext{run errand}, \dbtext{earn money}, \dbtext{drink water}, \dbtext{fatigue}, \dbtext{hike}, 
\dbtext{kill}, \dbtext{swim}, \dbtext{break}, \dbtext{foot}, \dbtext{verb}, \dbtext{drive}, \dbtext{wrestle}, \dbtext{fight}, \dbtext{play baseball}, 
\dbtext{pain}, \dbtext{drop}, \dbtext{stay healthy}, \dbtext{wheel}, \dbtext{sweat}, \dbtext{pool}, \dbtext{sport}, \dbtext{bad}, \dbtext{pee}, 
\dbtext{stop}, \dbtext{transportation}, \dbtext{fall down}, \dbtext{practice}, \dbtext{lose}, \dbtext{war}, \dbtext{wet}, \dbtext{bicycle}, 
\dbtext{go somewhere}, \dbtext{go}, \dbtext{ski}, \dbtext{exhaustion}, \dbtext{win}, \dbtext{shape}, \dbtext{climb}, \dbtext{not}, \dbtext{activity}, 
\dbtext{jog}, \dbtext{roll}, \dbtext{tiredness}, \dbtext{mean}, \dbtext{drink coffee}, \dbtext{good health}, \dbtext{play hockey}, \dbtext{dive}, 
\dbtext{energy}, \dbtext{kill person}, \dbtext{clean house}, \dbtext{fit}, \dbtext{move}, \dbtext{ride horse}, \dbtext{wave}, \dbtext{fear}, 
\dbtext{jump}, \dbtext{ride bicycle}, \dbtext{health}, \dbtext{become tire}, \dbtext{action}, \dbtext{pass}, \dbtext{fall}, \dbtext{lose weight}, 
\dbtext{jump up down}, \dbtext{count}, \dbtext{healthy}, \dbtext{end}, \dbtext{stand up}, \dbtext{jump rope}, \dbtext{bike}, \dbtext{motion}, 
\dbtext{feel better}, \dbtext{compete}, \dbtext{out}, \dbtext{accident}, \dbtext{transport}, \dbtext{stay fit}, \dbtext{injury}, \dbtext{ride}, 
\dbtext{step}, \dbtext{release energy}, \dbtext{trip}, \dbtext{competition}, \dbtext{climb mountain}, \dbtext{race}, \dbtext{shock}, 
\dbtext{return work}, \dbtext{earn live}, \dbtext{punch}, \dbtext{cool off}, \dbtext{skate}, \dbtext{movement}, \dbtext{play lacrosse}, \dbtext{away}, 
\dbtext{physical activity}, \dbtext{get exercise}, \dbtext{get tire}, \dbtext{get physical activity}, \dbtext{get shape}, \dbtext{get fit}

\paragraph{Community 5 (size is 264).} 
\dbtext{person}, \dbtext{write program}, \dbtext{go concert}, \dbtext{hear music}, \dbtext{love}, \dbtext{bath}, \dbtext{listen}, 
\dbtext{go performance}, \dbtext{entertain}, \dbtext{wait line}, \dbtext{attend lecture}, \dbtext{study}, \dbtext{fun}, \dbtext{bore}, 
\dbtext{go see film}, \dbtext{watch tv show}, \dbtext{dream}, \dbtext{go fish}, \dbtext{tell story}, \dbtext{surf web}, \dbtext{movie}, 
\dbtext{go restaurant}, \dbtext{visit museum}, \dbtext{study subject}, \dbtext{live life}, \dbtext{go sport event}, \dbtext{go play}, 
\dbtext{watch movie}, \dbtext{watch film}, \dbtext{go school}, \dbtext{surprise}, \dbtext{paint picture}, \dbtext{go film}, \dbtext{party}, 
\dbtext{rest}, \dbtext{listen radio}, \dbtext{kiss}, \dbtext{remember}, \dbtext{watch tv}, \dbtext{attend school}, \dbtext{trouble}, 
\dbtext{comfortable}, \dbtext{play}, \dbtext{take bus}, \dbtext{con\-ver\-sa\-tion}, \dbtext{talk}, \dbtext{take course}, \dbtext{learn}, 
\dbtext{think}, \dbtext{sleep}, \dbtext{hang out bar}, \dbtext{plan vacation}, \dbtext{go see play}, \dbtext{attend class}, \dbtext{buy}, 
\dbtext{eat restaurant}, \dbtext{tv}, \dbtext{boredom}, \dbtext{ticket}, \dbtext{music}, \dbtext{use television}, \dbtext{entertainment}, 
\dbtext{listen music}, \dbtext{enjoyment}, \dbtext{student}, \dbtext{go movie}, \dbtext{enlightenment}, \dbtext{stand line}, 
\dbtext{attend classical concert}, \dbtext{concert}, \dbtext{knowl\-edge}, \dbtext{teach}, \dbtext{call}, \dbtext{laugh joke}, 
\dbtext{read book}, \dbtext{education}, \dbtext{take note}, \dbtext{see}, \dbtext{story}, \dbtext{go sleep}, \dbtext{attention}, 
\dbtext{fall asleep}, \dbtext{money}, \dbtext{patience}, \dbtext{spend money}, \dbtext{cry}, \dbtext{pay bill}, \dbtext{television}, 
\dbtext{speak}, \dbtext{take bath}, \dbtext{take break}, \dbtext{drink alcohol}, \dbtext{lie}, \dbtext{play chess}, \dbtext{friend}, 
\dbtext{anger}, \dbtext{read}, \dbtext{curiosity}, \dbtext{pay}, \dbtext{use computer}, \dbtext{take film}, \dbtext{smile}, 
\dbtext{fiddle}, \dbtext{we}, \dbtext{see new}, \dbtext{dance}, \dbtext{job}, \dbtext{smart}, \dbtext{excite}, \dbtext{attend rock concert}, 
\dbtext{hear news}, \dbtext{contemplate}, \dbtext{audience}, \dbtext{understand}, \dbtext{research}, \dbtext{learn new}, \dbtext{headache}, 
\dbtext{fart}, \dbtext{read newspaper}, \dbtext{understand better}, \dbtext{show}, \dbtext{write story}, \dbtext{help}, \dbtext{close eye}, 
\dbtext{satisfaction}, \dbtext{time}, \dbtext{answer question}, \dbtext{perform}, \dbtext{need}, \dbtext{everyone}, \dbtext{sound}, 
\dbtext{good}, \dbtext{play card}, \dbtext{sex}, \dbtext{wait}, \dbtext{buy ticket}, \dbtext{gain knowledge}, \dbtext{news}, \dbtext{interest}, 
\dbtext{feel}, \dbtext{sit down}, \dbtext{surf}, \dbtext{teacher}, \dbtext{happiness}, \dbtext{sit chair}, \dbtext{laugh}, \dbtext{theatre}, 
\dbtext{relax}, \dbtext{waste time}, \dbtext{pleasure}, \dbtext{relaxation}, \dbtext{care}, \dbtext{procreate}, \dbtext{watch}, \dbtext{funny}, 
\dbtext{go mall}, \dbtext{flirt}, \dbtext{pass time}, \dbtext{noise}, \dbtext{go home}, \dbtext{love else}, \dbtext{drunk}, \dbtext{peace}, 
\dbtext{sing}, \dbtext{buy beer}, \dbtext{internet}, \dbtext{kid}, \dbtext{like}, \dbtext{date}, \dbtext{record}, \dbtext{find}, \dbtext{song}, 
\dbtext{play game}, \dbtext{meet}, \dbtext{quiet}, \dbtext{skill}, \dbtext{hobby}, \dbtext{birthday}, \dbtext{communication}, \dbtext{read magazine}, 
\dbtext{good time}, \dbtext{act}, \dbtext{eat ice cream}, \dbtext{learn language}, \dbtext{go zoo}, \dbtext{go internet}, \dbtext{cash}, 
\dbtext{important}, \dbtext{read child}, \dbtext{enjoy yourself}, \dbtext{see movie}, \dbtext{emotion}, \dbtext{view video}, \dbtext{play poker}, 
\dbtext{excitement}, \dbtext{fly airplane}, \dbtext{stay bed}, \dbtext{look}, \dbtext{voice}, \dbtext{event}, \dbtext{happy}, \dbtext{find information}, 
\dbtext{go vacation}, \dbtext{breathe}, \dbtext{recreation}, \dbtext{enjoy}, \dbtext{hear}, \dbtext{communicate}, \dbtext{make money}, 
\dbtext{watch tel\-e\-vi\-sion}, \dbtext{know}, \dbtext{learn subject}, \dbtext{joy}, \dbtext{information}, \dbtext{read letter}, \dbtext{lay}, 
\dbtext{celebrate}, \dbtext{sadness}, \dbtext{watch musician perform}, \dbtext{feel good}, \dbtext{play piano}, \dbtext{learn world}, 
\dbtext{see exhibit}, \dbtext{see art}, \dbtext{see excite story}, \dbtext{orgasm}, \dbtext{laughter}, \dbtext{express yourself}, 
\dbtext{discover truth}, \dbtext{see favorite show}, \dbtext{go party}, \dbtext{express information}, \dbtext{attend meet}, \dbtext{fly kite}, 
\dbtext{examine}, \dbtext{meet friend}, \dbtext{read news}, \dbtext{see band}, \dbtext{visit art gallery}, \dbtext{watch television show}, 
\dbtext{so\-cial\-ize}, \dbtext{create art}, \dbtext{crossword puzzle}, \dbtext{enjoy film}, \dbtext{go pub}, \dbtext{feel happy}, \dbtext{socialis}, 
\dbtext{get}, \dbtext{many person}, \dbtext{make person laugh}, \dbtext{make friend}, \dbtext{chat friend}, \dbtext{meet person}, 
\dbtext{meet interest person}, \dbtext{get drunk}, \dbtext{friend over}, \dbtext{enjoy company friend}, \dbtext{play game friend}, 
\dbtext{go opus}, \dbtext{sit quietly}, \dbtext{do it}, \dbtext{teach other person}, \dbtext{en\-ter\-tain person}, 
\dbtext{see person play game}

\subsection{Label Propagation}
The algorithm used is the one implemented in \igraph 
which is based on \citep{communities:lp}.
Table \ref{tbl:communities:positive:lp:cores} presents the results when we apply the algorithm for 
subgraphs in which we include vertices with successively lower coreness
and allow positive polarity on the edges only.

\begin{table}[ht]
\caption{Applying the Label Propagation algorithm for community finding implemented in \igraph by successively including
vertices with lower coreness on the induced undirected graph (self-loops are removed).
In every row we have the number of vertices and the number of edges of each such subgraph together with
the number of components ($\abs{C}$) that we find in that subgraph. 
The next three columns present the number of communities found by the algorithm;
the average among all runs, the minimum, and the maximum.
The next three columns present the modularity achieved by the algorithm due to the cut induced by the communities;
the average among all runs, the minimum, and the maximum.
Finally the last two columns
present in how many runs the algorithm computed as many communities as we had components in that subgraph.
The entire computation lasted $2065.4$ seconds for $100$ runs; that is about $20.65$ seconds per run.}\label{tbl:communities:positive:lp:cores}
\begin{center}
\begin{tabular}{|c|r|r|c||r|r|r||r|r|r||r|r|}\hline
\multirow{2}{*}{coreness} & \multicolumn{1}{c|}{\multirow{2}{*}{$\abs{V}$}} & \multicolumn{1}{c|}{\multirow{2}{*}{$\abs{E}$}} & \multirow{2}{*}{$\abs{C}$} & \multicolumn{3}{c||}{communities found} & \multicolumn{3}{c||}{modularity} & \multicolumn{2}{c|}{agreement} \\\cline{5-12}
          &        &        &   & \multicolumn{1}{c|}{avg} & \multicolumn{1}{c|}{min} & \multicolumn{1}{c||}{max} & \multicolumn{1}{c|}{avg} & \multicolumn{1}{c|}{min} & \multicolumn{1}{c||}{max} & \multicolumn{1}{c|}{Y} & \multicolumn{1}{c|}{N} \\\hline\hline
$\geq 26$ &    869 &  20526 &    1 &  1.000 &    1 &    1 & 0.000000 & 0.000000 & 0.000000 & 100 &   0 \\\hline
$\geq 25$ &   1167 &  27810 &    1 &  1.010 &    1 &    2 & 0.002823 & 0.000000 & 0.282317 &  99 &   1 \\\hline
$\geq 24$ &   1358 &  32314 &    1 &  1.060 &    1 &    2 & 0.016925 & 0.000000 & 0.284380 &  94 &   6 \\\hline
$\geq 23$ &   1514 &  35870 &    1 &  1.000 &    1 &    1 & 0.000000 & 0.000000 & 0.000000 & 100 &   0 \\\hline
$\geq 22$ &   1709 &  40099 &    1 &  1.010 &    1 &    2 & 0.002825 & 0.000000 & 0.282457 &  99 &   1 \\\hline
$\geq 21$ &   1865 &  43330 &    1 &  1.000 &    1 &    1 & 0.000000 & 0.000000 & 0.000000 & 100 &   0 \\\hline
$\geq 20$ &   2007 &  46145 &    1 &  1.030 &    1 &    2 & 0.008194 & 0.000000 & 0.275439 &  97 &   3 \\\hline
$\geq 19$ &   2173 &  49265 &    1 &  1.010 &    1 &    2 & 0.002720 & 0.000000 & 0.272007 &  99 &   1 \\\hline
$\geq 18$ &   2384 &  53011 &    1 &  1.010 &    1 &    2 & 0.002776 & 0.000000 & 0.277615 &  99 &   1 \\\hline
$\geq 17$ &   2617 &  56939 &    1 &  1.000 &    1 &    1 & 0.000000 & 0.000000 & 0.000000 & 100 &   0 \\\hline
$\geq 16$ &   2847 &  60583 &    1 &  1.020 &    1 &    2 & 0.005575 & 0.000000 & 0.278782 &  98 &   2 \\\hline
$\geq 15$ &   3105 &  64412 &    1 &  1.000 &    1 &    1 & 0.000000 & 0.000000 & 0.000000 & 100 &   0 \\\hline
$\geq 14$ &   3407 &  68613 &    1 &  1.010 &    1 &    2 & 0.002807 & 0.000000 & 0.280742 &  99 &   1 \\\hline
$\geq 13$ &   3746 &  72978 &    1 &  1.000 &    1 &    1 & 0.000000 & 0.000000 & 0.000000 & 100 &   0 \\\hline
$\geq 12$ &   4160 &  77882 &    1 &  1.000 &    1 &    1 & 0.000000 & 0.000000 & 0.000000 & 100 &   0 \\\hline
$\geq 11$ &   4634 &  83039 &    1 &  1.010 &    1 &    2 & 0.002830 & 0.000000 & 0.282966 &  99 &   1 \\\hline
$\geq 10$ &   5182 &  88462 &    1 &  1.000 &    1 &    1 & 0.000000 & 0.000000 & 0.000000 & 100 &   0 \\\hline
$\geq  9$ &   5883 &  94709 &    1 &  1.020 &    1 &    3 & 0.002928 & 0.000000 & 0.292798 &  99 &   1 \\\hline
$\geq  8$ &   6750 & 101564 &    1 &  1.000 &    1 &    1 & 0.000000 & 0.000000 & 0.000000 & 100 &   0 \\\hline
$\geq  7$ &   7904 & 109561 &    1 &  1.020 &    1 &    3 & 0.002960 & 0.000000 & 0.295950 &  99 &   1 \\\hline
$\geq  6$ &   9392 & 118389 &    1 &  1.000 &    1 &    1 & 0.000000 & 0.000000 & 0.000000 & 100 &   0 \\\hline
$\geq  5$ &  11483 & 128731 &    1 &  1.000 &    1 &    1 & 0.000000 & 0.000000 & 0.000000 & 100 &   0 \\\hline
$\geq  4$ &  14864 & 142112 &    1 &  1.120 &    1 &    2 & 0.000018 & 0.000000 & 0.000211 &  88 &  12 \\\hline
$\geq  3$ &  21812 & 162691 &    1 &  2.630 &    1 &    4 & 0.006172 & 0.000000 & 0.315545 &   5 &  95 \\\hline
$\geq  2$ &  41659 & 201678 &    4 & 12.710 &    9 &   19 & 0.009882 & 0.000654 & 0.328871 &   0 & 100 \\\hline
\end{tabular}
\end{center}
\end{table}

\subsubsection{Label Propagation: Communities in the Inner-Most Core}
The entire core is one big community. This is the result in all of our runs.

\subsection{InfoMAP}
The algorithm used is the one implemented in \igraph 
which is based on \citep{communities:infoMAP};
see also \citep{communities:infoMAP:additional}.
Table \ref{tbl:communities:positive:infoMAP:cores} presents the results when we apply the algorithm for 
subgraphs in which we include vertices with successively lower coreness
and allow edges with positive polarity only.
The algorithm can exhibit in some cases wild variations both in terms of the computed communities
as well as of the induced modularity.

\begin{table}[ht]
\caption{Applying the InfoMAP algorithm for community finding implemented in \igraph by successively including
vertices with lower coreness on the undirected graph induced by the assertions with positive polarity (self-loops are removed).
In every row we have the number of vertices and the number of edges of each such subgraph together with
the number of components ($\abs{C}$) that we find in that subgraph. 
The next three columns present the number of communities found by the algorithm;
the average among all runs, the minimum, and the maximum.
The next three columns present the codelength of the partitioning found by the algorithm;
the average among all runs, the minimum, and the maximum.
The next three columns present the modularity achieved by the algorithm due to the cut induced by the communities;
the average among all runs, the minimum, and the maximum.
Finally the last two columns
present in how many runs the algorithm computed as many communities as we had components in that subgraph.
The entire computation lasted $13,376.27$ seconds for $10$ runs; 
that is about $1,337.63$ seconds per run.}\label{tbl:communities:positive:infoMAP:cores}
\begin{center}
\resizebox{\textwidth}{!}{
\begin{tabular}{|r|r|r|c||r|r|r||r|r|r||r|r|r||r|r|}\cline{14-15}
\multicolumn{13}{c|}{} & \multicolumn{2}{c|}{agree-} \\\cline{1-13}
$\uppi$ & \multicolumn{1}{c|}{\multirow{2}{*}{$\abs{V}$}} & \multicolumn{1}{c|}{\multirow{2}{*}{$\abs{E}$}} & \multirow{2}{*}{$\abs{C}$} & \multicolumn{3}{c||}{communities found} & \multicolumn{3}{c||}{codelength} & \multicolumn{3}{c||}{modularity} & \multicolumn{2}{c|}{ment} \\\cline{5-15}
$\geq$ &      &        &   & \multicolumn{1}{c|}{avg} & \multicolumn{1}{c|}{min} & \multicolumn{1}{c||}{max} & \multicolumn{1}{c|}{avg} & \multicolumn{1}{c|}{min} & \multicolumn{1}{c||}{max} & \multicolumn{1}{c|}{avg} & \multicolumn{1}{c|}{min} & \multicolumn{1}{c||}{max} & \multicolumn{1}{c|}{Y} & \multicolumn{1}{c|}{N} \\\hline\hline
$26$ &    869 &  20526 &    1 &    1.000 &    1 &    1 &  9.595067 &  9.595067 &  9.595067 & 0.000000 & 0.000000 & 0.000000 &  10 &   0 \\\hline
$25$ &   1167 &  27810 &    1 &    1.000 &    1 &    1 &  9.997333 &  9.997333 &  9.997333 & 0.000000 & 0.000000 & 0.000000 &  10 &   0 \\\hline
$24$ &   1358 &  32314 &    1 &    1.000 &    1 &    1 & 10.202537 & 10.202537 & 10.202537 & 0.000000 & 0.000000 & 0.000000 &  10 &   0 \\\hline
$23$ &   1514 &  35870 &    1 &    1.000 &    1 &    1 & 10.348016 & 10.348016 & 10.348016 & 0.000000 & 0.000000 & 0.000000 &  10 &   0 \\\hline
$22$ &   1709 &  40099 &    1 &    1.000 &    1 &    1 & 10.510817 & 10.510817 & 10.510817 & 0.000000 & 0.000000 & 0.000000 &  10 &   0 \\\hline
$21$ &   1865 &  43330 &    1 &    1.000 &    1 &    1 & 10.625889 & 10.625889 & 10.625889 & 0.000000 & 0.000000 & 0.000000 &  10 &   0 \\\hline
$20$ &   2007 &  46145 &    1 &    1.000 &    1 &    1 & 10.721326 & 10.721326 & 10.721326 & 0.000000 & 0.000000 & 0.000000 &  10 &   0 \\\hline
$19$ &   2173 &  49265 &    1 &    1.000 &    1 &    1 & 10.824262 & 10.824262 & 10.824262 & 0.000000 & 0.000000 & 0.000000 &  10 &   0 \\\hline
$18$ &   2384 &  53011 &    1 &    1.000 &    1 &    1 & 10.943376 & 10.943376 & 10.943376 & 0.000000 & 0.000000 & 0.000000 &  10 &   0 \\\hline
$17$ &   2617 &  56939 &    1 &    2.000 &    2 &    2 & 11.057997 & 11.057997 & 11.057997 & 0.009477 & 0.009477 & 0.009477 &   0 &  10 \\\hline
$16$ &   2847 &  60583 &    1 &    2.000 &    2 &    2 & 11.158651 & 11.158651 & 11.158651 & 0.011783 & 0.011783 & 0.011783 &   0 &  10 \\\hline
$15$ &   3105 &  64412 &    1 &    2.000 &    2 &    2 & 11.264781 & 11.264781 & 11.264781 & 0.011999 & 0.011999 & 0.011999 &   0 &  10 \\\hline
$14$ &   3407 &  68613 &    1 &    3.200 &    2 &   14 & 11.384844 & 11.374820 & 11.475059 & 0.038630 & 0.012607 & 0.272836 &   0 &  10 \\\hline
$13$ &   3746 &  72978 &    1 &    2.000 &    2 &    2 & 11.484748 & 11.484741 & 11.484808 & 0.013978 & 0.013944 & 0.014283 &   0 &  10 \\\hline
$12$ &   4160 &  77882 &    1 &    3.300 &    2 &    9 & 11.615842 & 11.607225 & 11.650572 & 0.041794 & 0.014052 & 0.153033 &   0 &  10 \\\hline
$11$ &   4634 &  83039 &    1 &   18.000 &    2 &   34 & 11.772474 & 11.733024 & 11.800385 & 0.187870 & 0.013987 & 0.338438 &   0 &  10 \\\hline
$10$ &   5182 &  88462 &    1 &   34.500 &    3 &   46 & 11.885121 & 11.859279 & 11.894615 & 0.269445 & 0.014270 & 0.345652 &   0 &  10 \\\hline
$ 9$ &   5883 &  94709 &    1 &   55.900 &   51 &   61 & 11.997908 & 11.994686 & 12.002033 & 0.347252 & 0.341475 & 0.354718 &   0 &  10 \\\hline
$ 8$ &   6750 & 101564 &    1 &   81.100 &   73 &   92 & 12.111098 & 12.107090 & 12.114428 & 0.345340 & 0.335496 & 0.356027 &   0 &  10 \\\hline
$ 7$ &   7904 & 109561 &    1 &  117.100 &  107 &  123 & 12.231296 & 12.228096 & 12.236937 & 0.348036 & 0.342610 & 0.353833 &   0 &  10 \\\hline
$ 6$ &   9392 & 118389 &    1 &  166.800 &  161 &  177 & 12.344162 & 12.338383 & 12.349301 & 0.349823 & 0.346514 & 0.353298 &   0 &  10 \\\hline
$ 5$ &  11483 & 128731 &    1 &  251.600 &  241 &  260 & 12.461265 & 12.456851 & 12.465886 & 0.352773 & 0.347126 & 0.357310 &   0 &  10 \\\hline
$ 4$ &  14864 & 142112 &    1 &  407.900 &  392 &  424 & 12.580570 & 12.575446 & 12.583878 & 0.350158 & 0.343223 & 0.355421 &   0 &  10 \\\hline
$ 3$ &  21812 & 162691 &    1 &  747.500 &  738 &  757 & 12.679398 & 12.673615 & 12.682357 & 0.348008 & 0.341139 & 0.353773 &   0 &  10 \\\hline
$ 2$ &  41659 & 201678 &    4 & 1753.500 & 1734 & 1769 & 12.602572 & 12.598905 & 12.604815 & 0.351349 & 0.347792 & 0.355575 &   0 &  10 \\\hline
\end{tabular}
}
\end{center}
\end{table}

\subsubsection{InfoMAP: Communities in the Inner-Most Core}
The entire core is one big community. This is the result in all of our runs.

\chapter{Overlapping Communities}\label{ch:communities:overlapping}
Here we examine communities that were obtained with \cfinder \citep{cfinder}.

\section{Negative Polarity}
In this section we examine some communities found in the graph with negative polarity
by percolating cliques.

\paragraph{Percolating Cliques.}
Table \ref{tbl:percolate:negative:distribution:communities} presents how many communities were found
by percolating cliques of different sizes, while Table \ref{tbl:percolate:negative:distribution:community-sizes}
presents the distribution of the community sizes by percolating cliques of different sizes.
Table \ref{tbl:percolate:negative:distribution:membership} presents the distribution of the concepts
participating in different communities by percolating cliques of different sizes.
Figure \ref{fig:communities:percolate:negative} gives some examples of communities obtained
by percolating cliques of size $3$ and $4$.

\begin{table}[ht]
\caption{Number of communities found in the undirected graph with negative polarity 
with \cfinder by percolating cliques of certain size.}\label{tbl:percolate:negative:distribution:communities}
\centering
\begin{tabular}{|r|r|r|}\hline
clique size &   3 &  4 \\\hline
communities & 126 & 24 \\\hline
\end{tabular}
\end{table}

\begin{table}[ht]
\caption{Distribution of community sizes found in the undirected graph induced by the assertions
of the English language with positive score and negative polarity by percolating cliques of different
sizes.}\label{tbl:percolate:negative:distribution:community-sizes}
\centering
\begin{tabular}{|c||r|r|r|r|r|r|r|r|r|r|}\hline
percolating     & \multicolumn{10}{c|}{community size} \\\cline{2-11}
cliques of size & \multicolumn{1}{c|}{3} & \multicolumn{1}{c|}{4} & \multicolumn{1}{c|}{5} & \multicolumn{1}{c|}{6} & \multicolumn{1}{c|}{7} & \multicolumn{1}{c|}{9} & \multicolumn{1}{c|}{10} & \multicolumn{1}{c|}{11} & \multicolumn{1}{c|}{18} & \multicolumn{1}{c|}{457} \\\hline
3               & 75 & 26 & 14 & 6 & 2 &  1 & -- &  1 & -- &   1 \\\hline
4               & -- & 16 &  3 & 2 & 1 & -- &  1 & -- &  1 &  -- \\\hline
\end{tabular}
\end{table}

\begin{table}[ht]
\caption{Distribution of concepts participating in different communities in the undirected graph
induced by the assertions of the English language with positive score and negative polarity
by percolating cliques of different sizes.}\label{tbl:percolate:negative:distribution:membership}
\centering
\begin{tabular}{|c||r|r|r|r|r|r|r|}\hline
percolating     & \multicolumn{7}{c|}{number of communities} \\\cline{2-8}
cliques of size & \multicolumn{1}{c|}{0} & \multicolumn{1}{c|}{1} & \multicolumn{1}{c|}{2} & \multicolumn{1}{c|}{3} & \multicolumn{1}{c|}{4} & \multicolumn{1}{c|}{10} & \multicolumn{1}{c|}{13} \\\hline
3               & 10,898 & 716 & 79 & 10 &  3 &  1 & -- \\\hline
4               & 11,603 &  94 &  8 &  1 & -- & -- &  1 \\\hline
\end{tabular}
\end{table}

\begin{figure}[p]
\centering
\begin{subfigure}[b]{0.4\textwidth}
\includegraphics[width=\columnwidth]{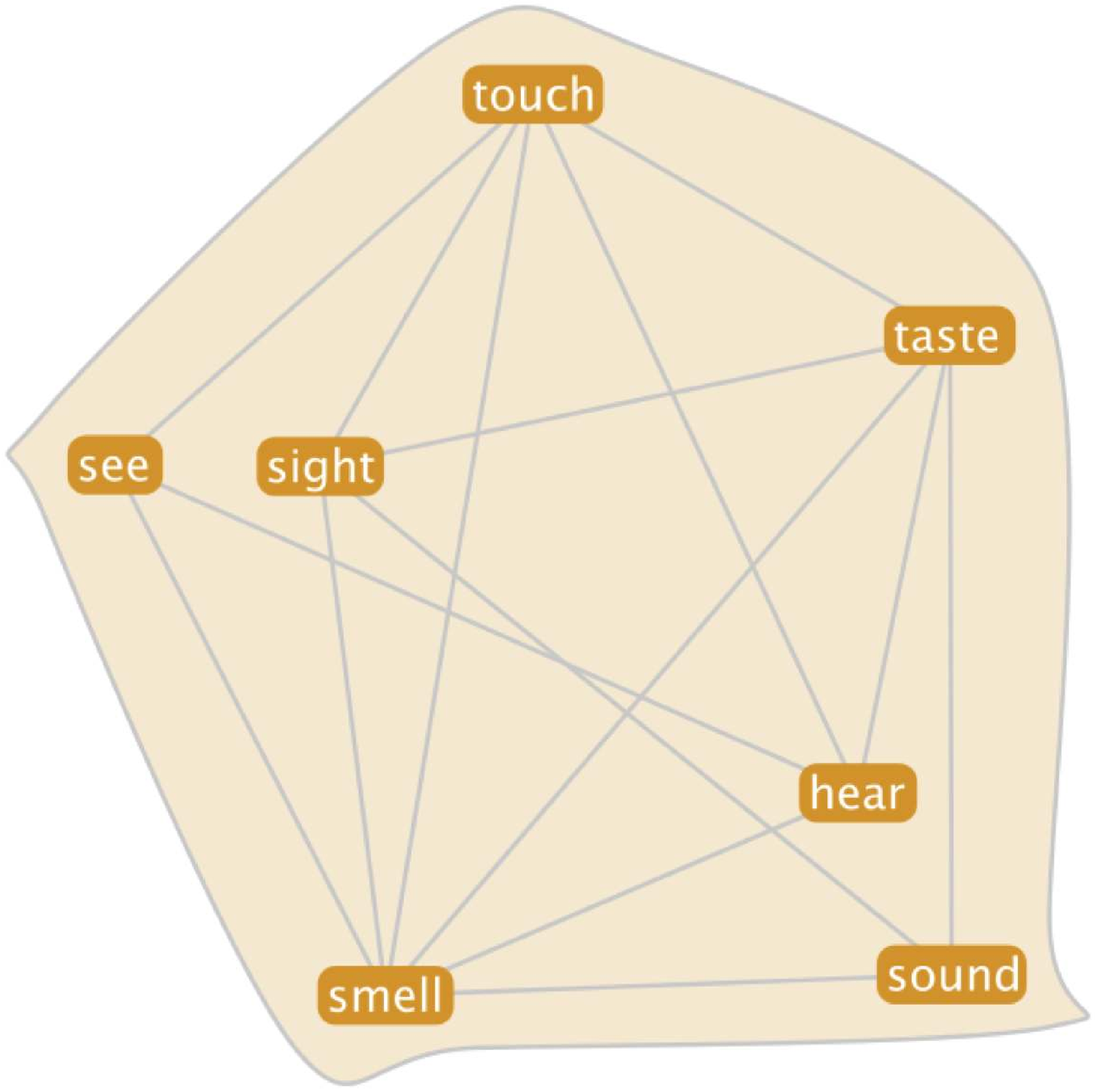}
\caption{Percolating senses. Seven nodes by percolating cliques of size $4$.
A link is missing between \dbtext{sight} and \dbtext{hear} generating a clique of size $5$.}\label{fig:communities:percolate:negative:senses}
\end{subfigure}
\hspace{\fill}
\begin{subfigure}[b]{0.4\textwidth}
\includegraphics[width=\columnwidth]{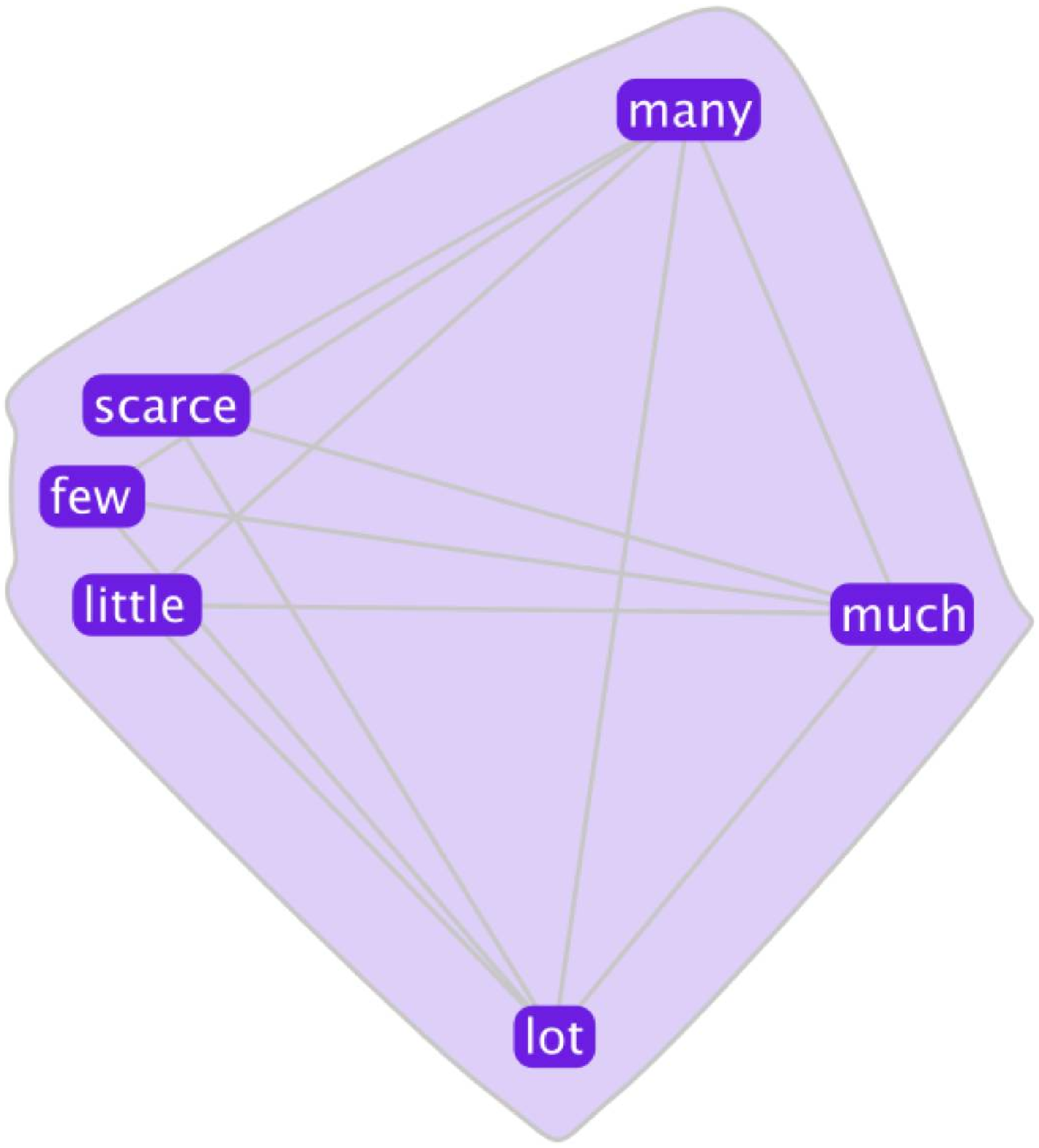}
\caption{Percolating frequent. Six nodes by percolating cliques of size $4$.}\label{fig:communities:percolate:negative:frequent}
\end{subfigure}

\begin{subfigure}[b]{0.4\textwidth}
\includegraphics[width=\columnwidth]{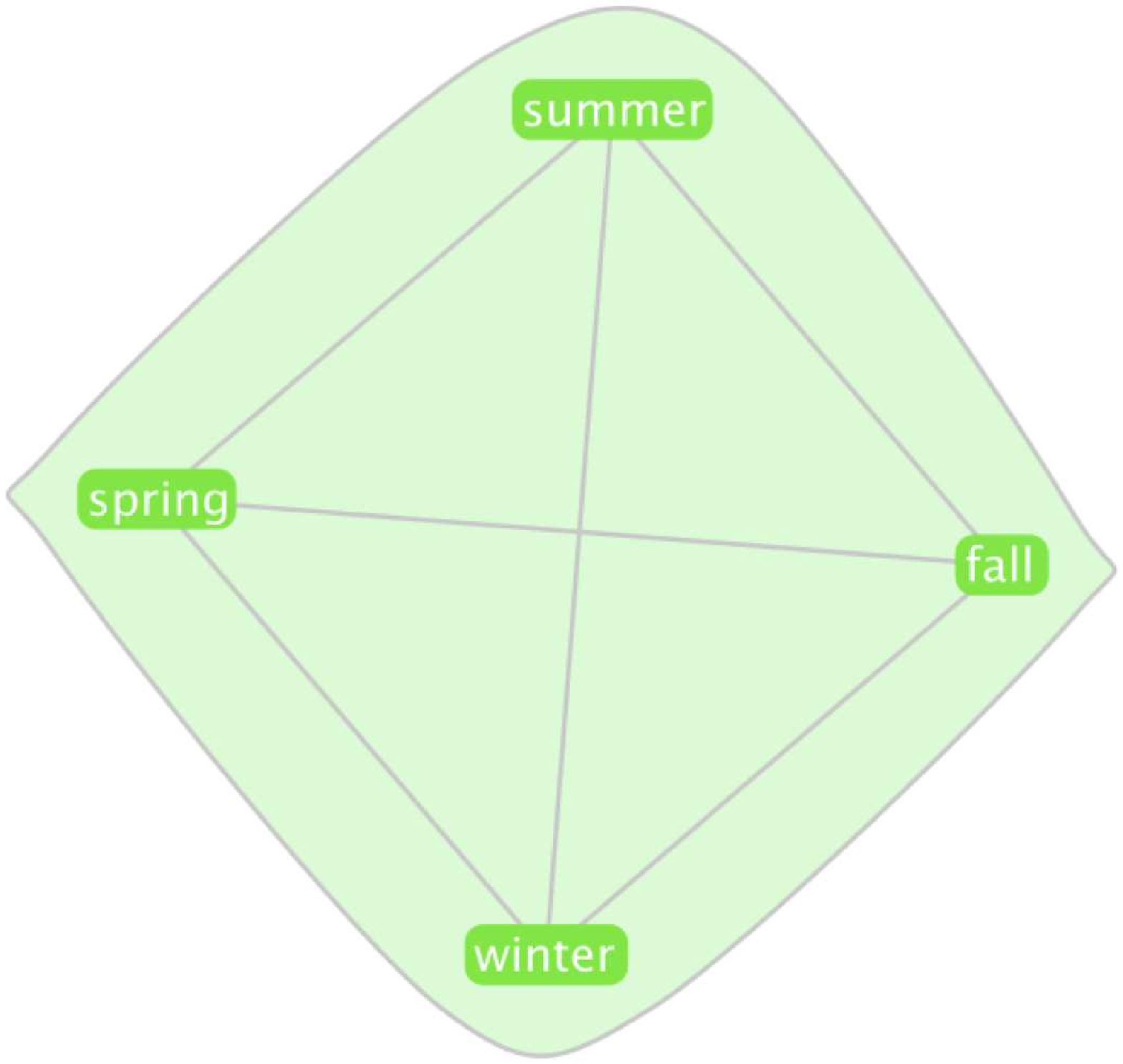}
\caption{Percolating year. Four nodes by percolating cliques of size $4$.}\label{fig:communities:percolate:negative:year}
\end{subfigure}
\hspace{\fill}
\begin{subfigure}[b]{0.4\textwidth}
\includegraphics[width=\columnwidth]{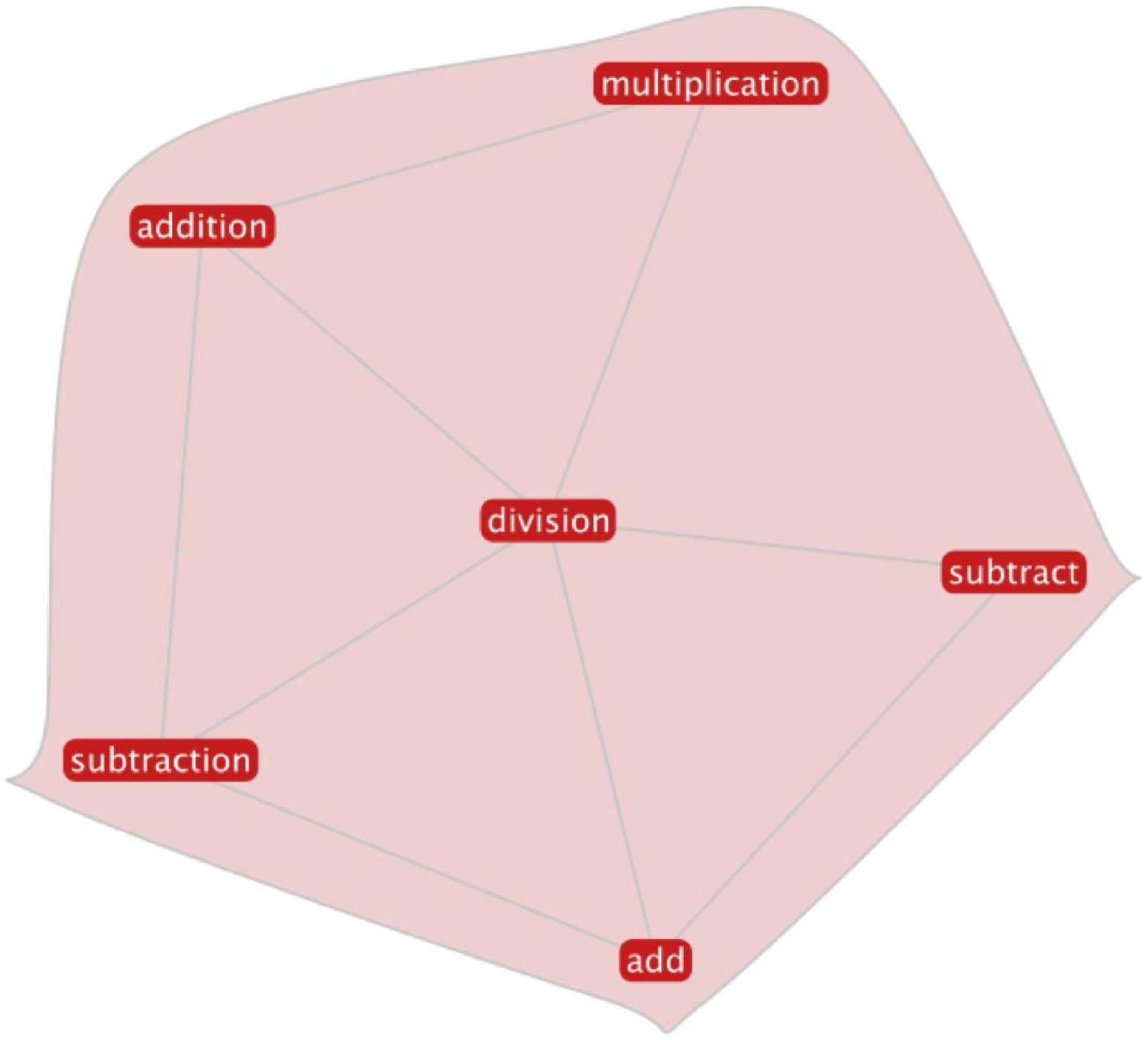}
\caption{Percolating arithmetic. Six nodes by percolating cliques of size $3$.}\label{fig:communities:percolate:negative:arithmetic}
\end{subfigure}

\begin{subfigure}[b]{0.45\textwidth}
\includegraphics[width=\columnwidth]{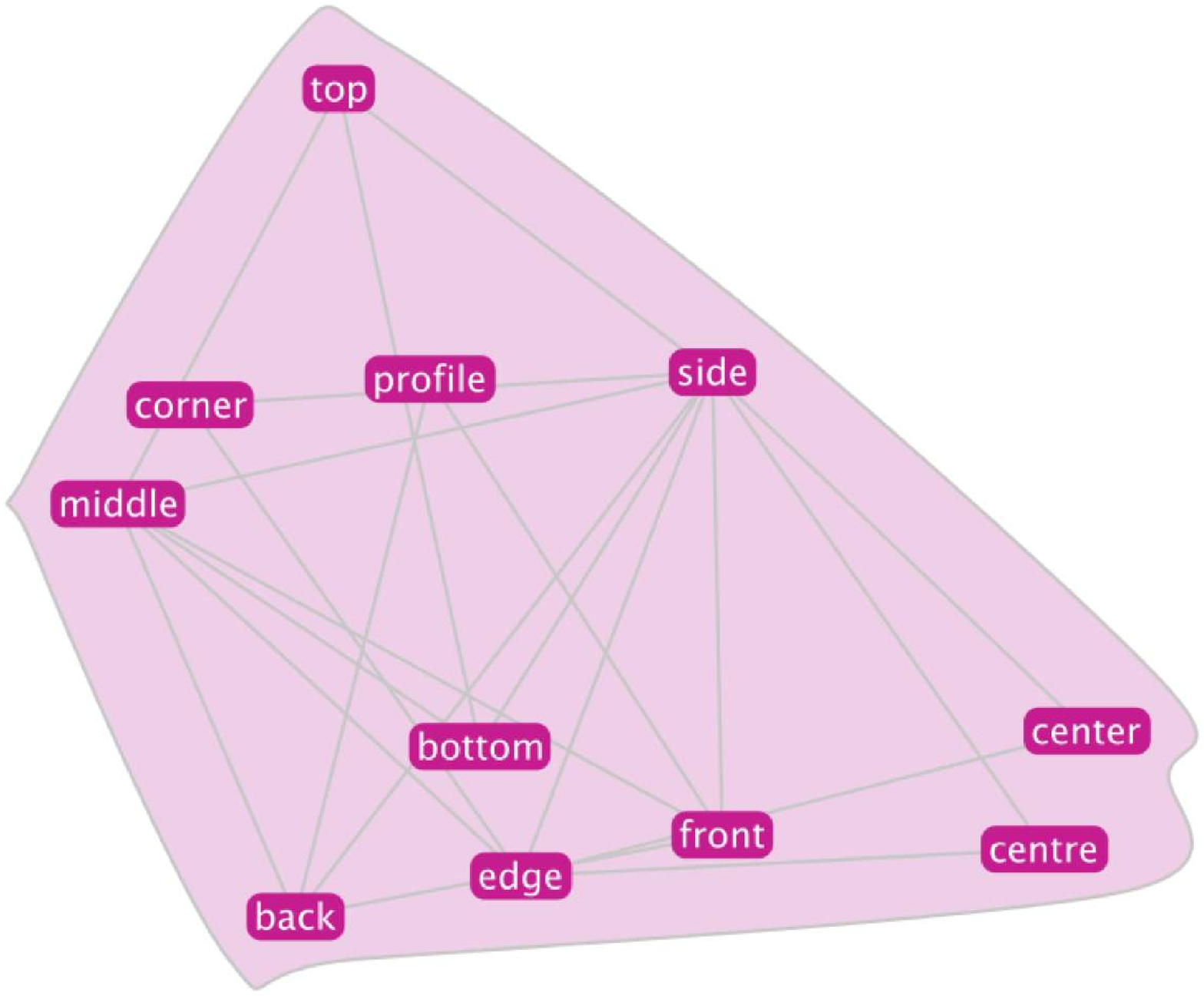}
\caption{Percolating orientation. Eleven nodes by percolating cliques of size $3$.
The concept \dbtext{profile} appears here but not in 
Figure \ref{fig:strongly-connected:negative:12}.}\label{fig:communities:percolate:negative:orientation}
\end{subfigure}
\hspace{\fill}
\begin{subfigure}[b]{0.45\textwidth}
\includegraphics[width=\columnwidth]{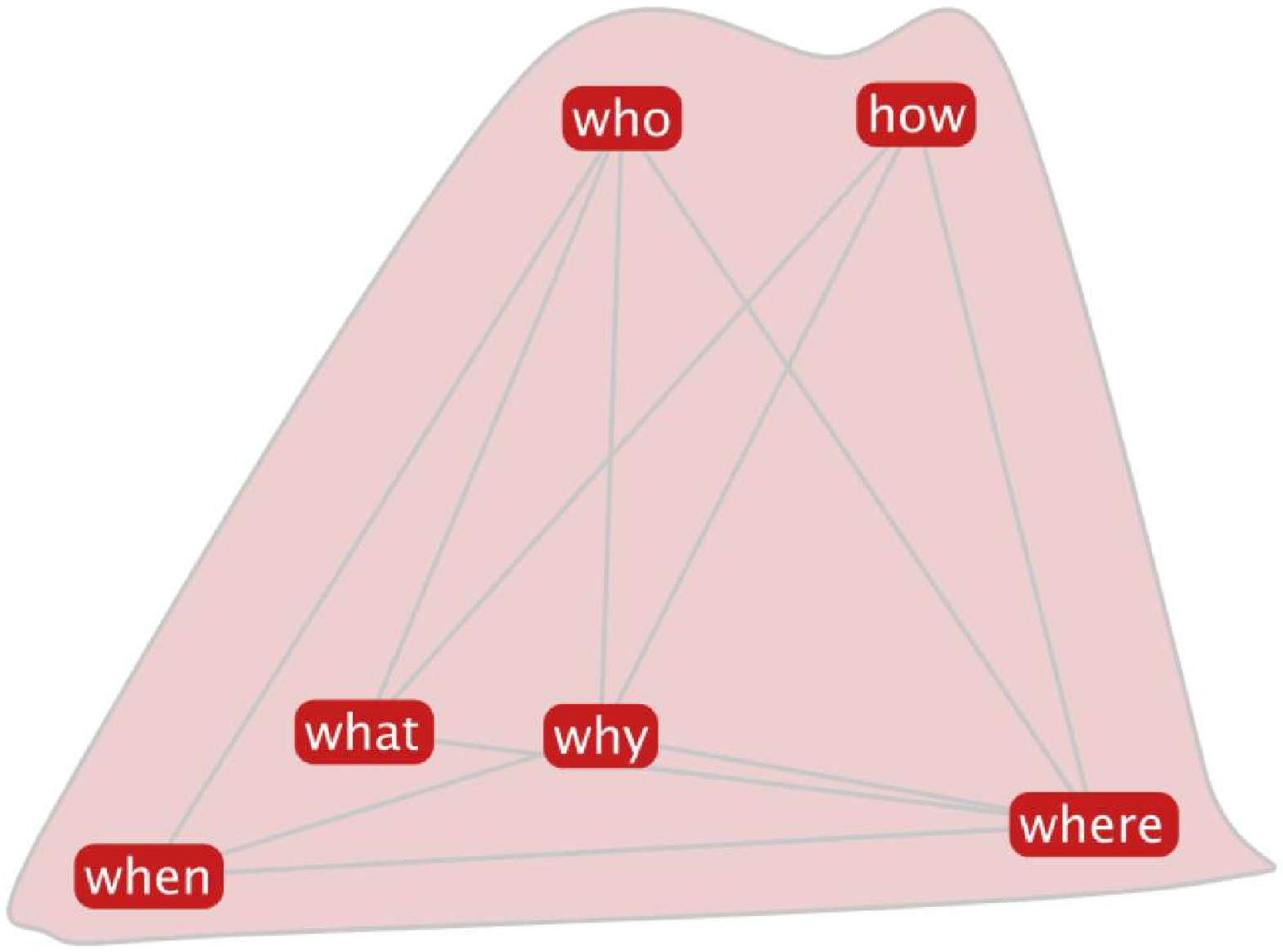}
\caption{Percolating questions. Six nodes by percolating cliques of size $3$.}\label{fig:communities:percolate:negative:questions}
\end{subfigure}
\caption{Instances of communities that are generated by percolating cliques of size $3$ and $4$.}\label{fig:communities:percolate:negative}
\end{figure}

\suppressfloats[t]

\paragraph{Overlapping Cliques.}
Overlapping cliques might prove useful in the future.
They can be used for example for further clarification when posing or processing questions.
We might be able to use them in order to isolate lower degree concepts related to specific
questions which in turn might help by contributing in a spreading activation process.
Figures \ref{fig:communities:overlapping:negative:middle} and \ref{fig:communities:overlapping:negative:second}
give some examples of overlapping communities in the graph induced by the assertions with negative polarity
(and positive score).

\begin{figure}[ht]
\centering
\begin{subfigure}[b]{0.65\textwidth}
\includegraphics[width=\columnwidth]{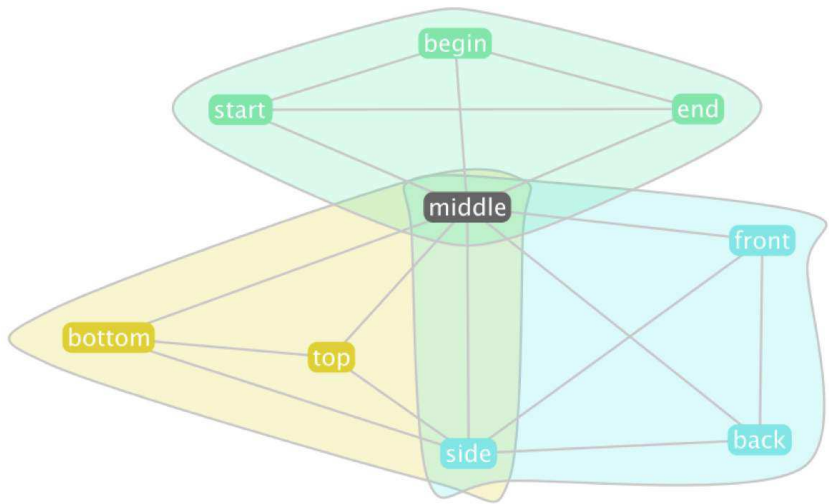}
\caption{Overlapping middle.}\label{fig:communities:overlapping:negative:middle}
\end{subfigure}
\hspace{\fill}
\begin{subfigure}[b]{0.65\textwidth}
\includegraphics[width=\columnwidth]{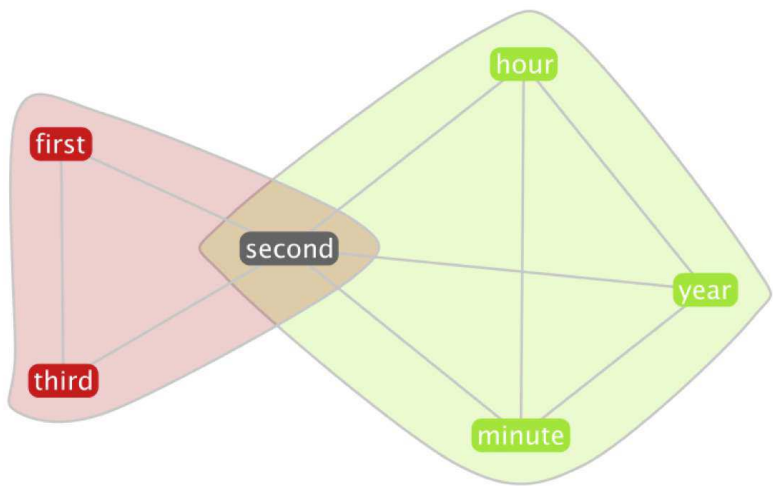}
\caption{Overlapping second.}\label{fig:communities:overlapping:negative:second}
\end{subfigure}
\caption{Ovelapping communities; negative polarity.}\label{fig:communities:overlapping:negative}
\end{figure}

\section{Positive Polarity}
In this section we examine some communities found in the graph with positive polarity
by percolating cliques.

\paragraph{Percolating Cliques.}
Table \ref{tbl:percolate:positive:distribution:communities} presents how many communities were found
by percolating cliques of different sizes, while Table \ref{tbl:percolate:positive:distribution:community-sizes}
presents the distribution of the community sizes by percolating cliques of different sizes.

\begin{table}[ht]
\caption{Number of communities found in the undirected graph with positive polarity 
with \cfinder by percolating cliques of certain size.}\label{tbl:percolate:positive:distribution:communities}
\centering
\begin{tabular}{|r|r|r|r|r|r|r|r|r|r|r|}\hline
size  &   3 &   4 &   5 &   6 &   7 &  8 &  9 & 10 & 11 & 12 \\\hline
comm. & 362 & 290 & 287 & 209 & 120 & 84 & 16 & 12 &  6 &  1 \\\hline
\end{tabular}
\end{table}

\begin{table}[ht]
\caption{Distribution of community sizes found in the undirected graph induced by the assertions
of the English language with positive score and positive polarity by percolating cliques of different
sizes.}\label{tbl:percolate:positive:distribution:community-sizes}
\centering
\begin{tabular}{|r||r|r|r|r|r|r|r|r|r|r|}\hline
\multicolumn{1}{|c||}{community} & \multicolumn{10}{c|}{percolating cliques of size} \\\cline{2-11}
\multicolumn{1}{|c||}{size}      & \multicolumn{1}{c|}{3} & \multicolumn{1}{c|}{4} & \multicolumn{1}{c|}{5} & \multicolumn{1}{c|}{6} & \multicolumn{1}{c|}{7} & \multicolumn{1}{c|}{8} & \multicolumn{1}{c|}{9} & \multicolumn{1}{c|}{10} & \multicolumn{1}{c|}{11} & \multicolumn{1}{c|}{12} \\\hline\hline
        3 & 320 &  -- &  -- &  -- &  -- &  -- &  -- &  -- &  -- &  -- \\\hline
        4 &  31 & 236 &  -- &  -- &  -- &  -- &  -- &  -- &  -- &  -- \\\hline
        5 &   8 &  35 & 204 &  -- &  -- &  -- &  -- &  -- &  -- &  -- \\\hline
        6 &   1 &  10 &  43 & 122 &  -- &  -- &  -- &  -- &  -- &  -- \\\hline
        7 &  -- &   3 &  18 &  41 &  61 &  -- &  -- &  -- &  -- &  -- \\\hline
        8 &   1 &   2 &   9 &  21 &  22 &  38 &  -- &  -- &  -- &  -- \\\hline
        9 &  -- &   1 &   7 &   7 &   9 &  19 &   6 &  -- &  -- &  -- \\\hline
       10 &  -- &  -- &   1 &   5 &   5 &   8 &   2 &   3 &  -- &  -- \\\hline
       11 &  -- &  -- &  -- &   4 &   5 &   5 &   1 &   5 &   3 &  -- \\\hline
       12 &  -- &   1 &   1 &   3 &   3 &   3 &   1 &   1 &   1 &   1 \\\hline
       13 &  -- &  -- &   1 &  -- &   2 &   1 &   2 &  -- &  -- &  -- \\\hline
       14 &  -- &  -- &  -- &   3 &   5 &   4 &   1 &   1 &   1 &  -- \\\hline
       15 &  -- &  -- &  -- &   1 &  -- &  -- &  -- &   1 &  -- &  -- \\\hline
       16 &  -- &  -- &  -- &  -- &   2 &   1 &  -- &  -- &  -- &  -- \\\hline
       17 &  -- &  -- &   1 &  -- &  -- &   1 &   1 &  -- &  -- &  -- \\\hline
       18 &  -- &  -- &   1 &  -- &  -- &   2 &   1 &  -- &  -- &  -- \\\hline
       21 &  -- &   1 &  -- &  -- &  -- &  -- &  -- &  -- &  -- &  -- \\\hline
       22 &  -- &  -- &  -- &  -- &   1 &  -- &  -- &  -- &  -- &  -- \\\hline
       23 &  -- &  -- &  -- &  -- &   1 &  -- &  -- &  -- &  -- &  -- \\\hline
       24 &  -- &  -- &  -- &   1 &  -- &  -- &  -- &  -- &   1 &  -- \\\hline
       25 &  -- &  -- &  -- &  -- &   1 &  -- &  -- &  -- &  -- &  -- \\\hline
       37 &  -- &  -- &  -- &  -- &  -- &   1 &  -- &  -- &  -- &  -- \\\hline
       47 &  -- &  -- &  -- &  -- &   1 &  -- &  -- &  -- &  -- &  -- \\\hline
       49 &  -- &  -- &  -- &  -- &   1 &  -- &  -- &   1 &  -- &  -- \\\hline
      128 &  -- &  -- &  -- &  -- &  -- &  -- &   1 &  -- &  -- &  -- \\\hline
      278 &  -- &  -- &  -- &  -- &  -- &   1 &  -- &  -- &  -- &  -- \\\hline
      796 &  -- &  -- &  -- &  -- &   1 &  -- &  -- &  -- &  -- &  -- \\\hline
     1944 &  -- &  -- &  -- &   1 &  -- &  -- &  -- &  -- &  -- &  -- \\\hline
     3868 &  -- &  -- &   1 &  -- &  -- &  -- &  -- &  -- &  -- &  -- \\\hline
     8208 &  -- &   1 &  -- &  -- &  -- &  -- &  -- &  -- &  -- &  -- \\\hline
    22533 &   1 &  -- &  -- &  -- &  -- &  -- &  -- &  -- &  -- &  -- \\\hline
\end{tabular}
\end{table}

\begin{table}[ht]
\caption{Distribution of concepts participating in different communities in the undirected graph
induced by the assertions of the English language with positive score and positive polarity
by percolating cliques of different sizes.}\label{tbl:percolate:positive:distribution:membership}
\resizebox{\columnwidth}{!}{
\begin{tabular}{|r||r|r|r|r|r|r|r|r|r|r|}\hline
\multicolumn{1}{|c||}{number of}   & \multicolumn{10}{c|}{percolating cliques of size} \\\cline{2-11}
\multicolumn{1}{|c||}{communities} & \multicolumn{1}{c|}{3} & \multicolumn{1}{c|}{4} & \multicolumn{1}{c|}{5} & \multicolumn{1}{c|}{6} & \multicolumn{1}{c|}{7} & \multicolumn{1}{c|}{8} & \multicolumn{1}{c|}{9} & \multicolumn{1}{c|}{10} & \multicolumn{1}{c|}{11} & \multicolumn{1}{c|}{12} \\\hline\hline
  0 & 233,812 & 248,357 & 252,646 & 254,592 & 255,701 & 256,285 & 256,629 & 256,760 & 256,810 & 256,834 \\\hline
  1 &  22,451 &   7,727 &   3,433 &   1,691 &     805 &     361 &     162 &      50 &      16 &      12 \\\hline
  2 &     547 &     629 &     526 &     326 &     197 &      90 &      43 &      13 &       7 &      -- \\\hline
  3 &      29 &      94 &     159 &     140 &      70 &      59 &       5 &      12 &       4 &      -- \\\hline
  4 &       4 &      30 &      46 &      51 &      25 &      13 &       5 &       3 &       5 &      -- \\\hline
  5 &      -- &       5 &      13 &      22 &      21 &      12 &      -- &       3 &       3 &      -- \\\hline
  6 &       1 &       2 &      10 &       7 &       6 &      10 &      -- &       2 &       1 &      -- \\\hline
  7 &       1 &      -- &       4 &       5 &       5 &       3 &      -- &       2 &      -- &      -- \\\hline
  8 &      -- &       1 &       1 &       3 &       5 &       6 &       1 &      -- &      -- &      -- \\\hline
  9 &      -- &      -- &       3 &      -- &       4 &       1 &       1 &      -- &      -- &      -- \\\hline
 10 &       1 &      -- &       1 &      -- &       1 &      -- &      -- &       1 &      -- &      -- \\\hline
 11 &      -- &      -- &      -- &       2 &       2 &      -- &      -- &      -- &      -- &      -- \\\hline
 12 &      -- &      -- &       1 &       1 &       2 &       1 &      -- &      -- &      -- &      -- \\\hline
 13 &      -- &      -- &       1 &       2 &      -- &      -- &      -- &      -- &      -- &      -- \\\hline
 14 &      -- &      -- &      -- &      -- &      -- &       1 &      -- &      -- &      -- &      -- \\\hline
 16 &      -- &      -- &      -- &      -- &      -- &       1 &      -- &      -- &      -- &      -- \\\hline
 18 &      -- &      -- &      -- &       1 &      -- &       1 &      -- &      -- &      -- &      -- \\\hline
 19 &      -- &      -- &      -- &       1 &      -- &       1 &      -- &      -- &      -- &      -- \\\hline
 21 &      -- &      -- &       1 &      -- &      -- &      -- &      -- &      -- &      -- &      -- \\\hline
 24 &      -- &      -- &      -- &      -- &       1 &      -- &      -- &      -- &      -- &      -- \\\hline
 25 &      -- &      -- &      -- &       1 &      -- &      -- &      -- &      -- &      -- &      -- \\\hline
 34 &      -- &       1 &      -- &      -- &      -- &      -- &      -- &      -- &      -- &      -- \\\hline
 52 &      -- &      -- &      -- &      -- &      -- &       1 &      -- &      -- &      -- &      -- \\\hline
 74 &      -- &      -- &      -- &      -- &       1 &      -- &      -- &      -- &      -- &      -- \\\hline
 87 &      -- &      -- &       1 &      -- &      -- &      -- &      -- &      -- &      -- &      -- \\\hline
105 &      -- &      -- &      -- &       1 &      -- &      -- &      -- &      -- &      -- &      -- \\\hline
\end{tabular}
}
\end{table}

Figures \ref{fig:communities:percolate:a} and \ref{fig:communities:percolate:b}
present communities that occur by percolating cliques of various sizes. 
Note that in the case of Figure \ref{fig:communities:percolate:reproduction}
the concept \dbtext{boy} does make it and is part of the community as it would be expected
contrasting the fact that it does not appear in the relevant clique of size $11$
shown in Table \ref{tbl:cliques:positive:moderate-frequency}.
As another example, one would also expect the concept \dbtext{dishonest} or \dbtext{dishonesty}
to appear in the community shown in Figure \ref{fig:communities:percolate:dishonest}.
Moreover, through percolation we can get hints about missing edges.

\begin{figure}[p]
\centering
\begin{subfigure}[b]{0.48\textwidth}
\includegraphics[width=\columnwidth]{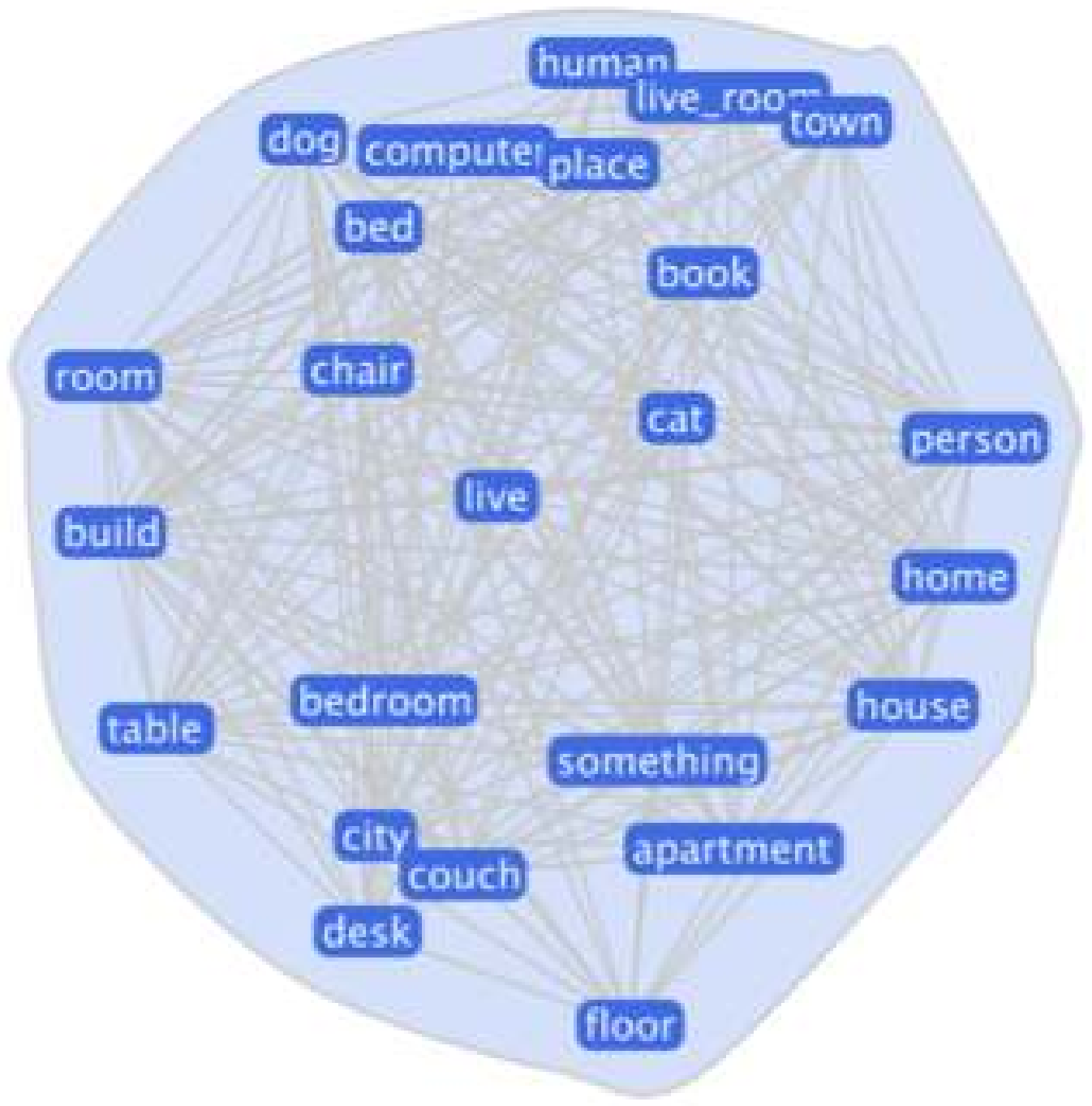}
\caption{Percolating house. Twenty four nodes by percolating cliques of size $11$.}\label{fig:communities:percolate:apartment}
\end{subfigure}
\hspace{\fill}
\begin{subfigure}[b]{0.48\textwidth}
\includegraphics[width=\columnwidth]{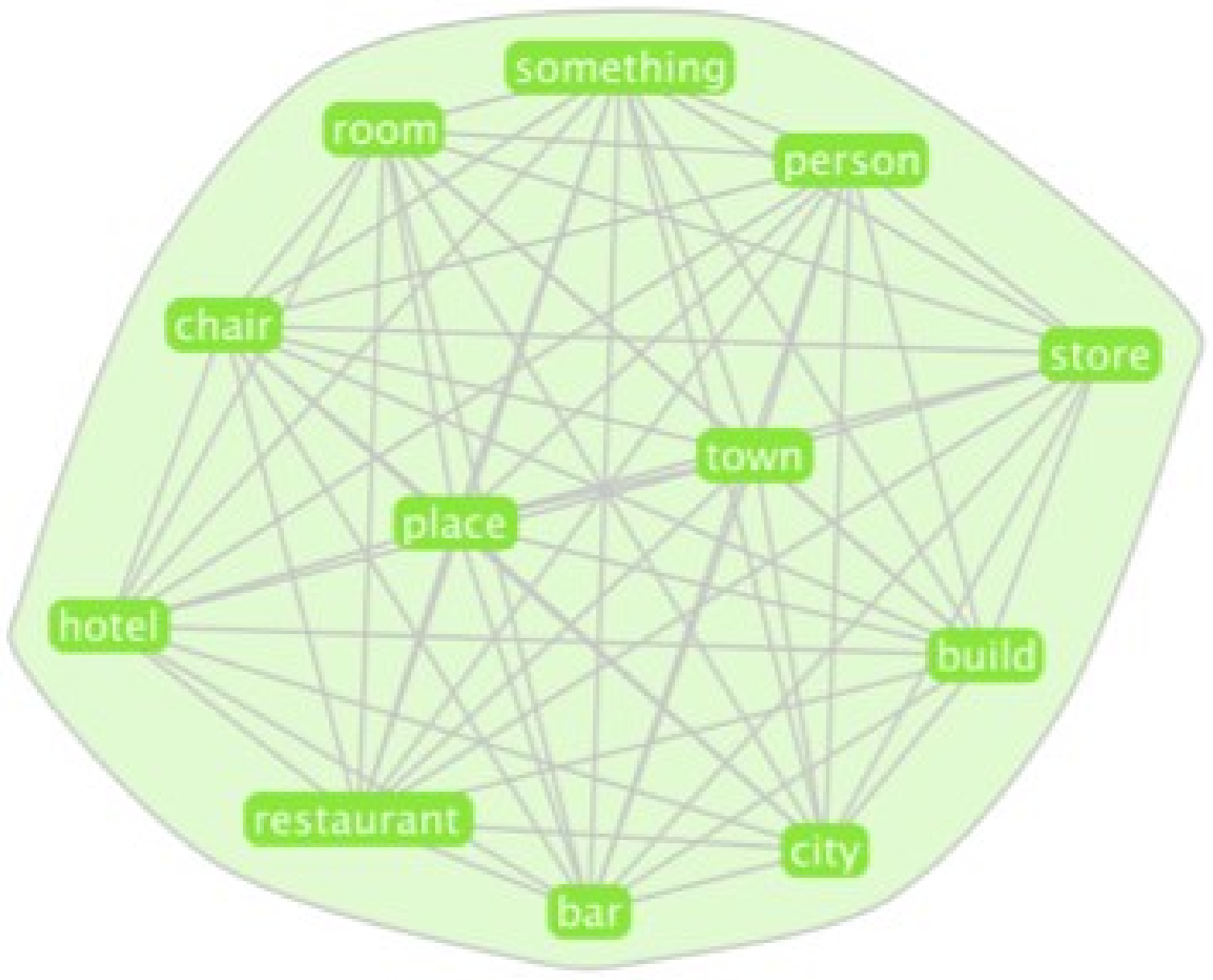}
\caption{Percolating neighborhood. Twelve nodes by percolating cliques of size $11$.}\label{fig:communities:percolate:neighborhood}
\end{subfigure}

\begin{subfigure}[b]{0.48\textwidth}
\includegraphics[width=\columnwidth]{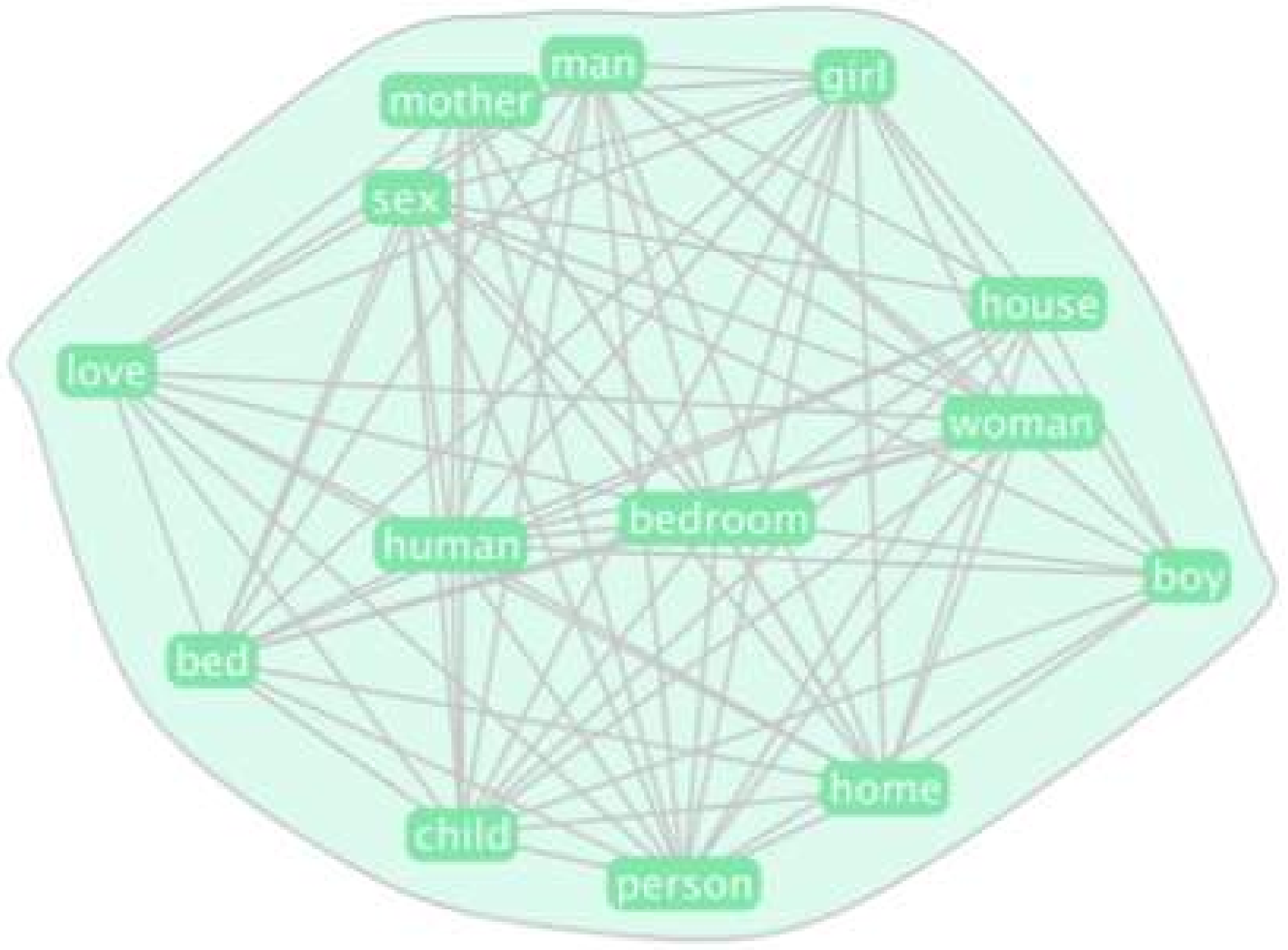}
\caption{Percolating human reproduction. Fourteen nodes by percolating cliques of size $10$.}\label{fig:communities:percolate:reproduction}
\end{subfigure}
\hspace{\fill}
\begin{subfigure}[b]{0.48\textwidth}
\includegraphics[width=\columnwidth]{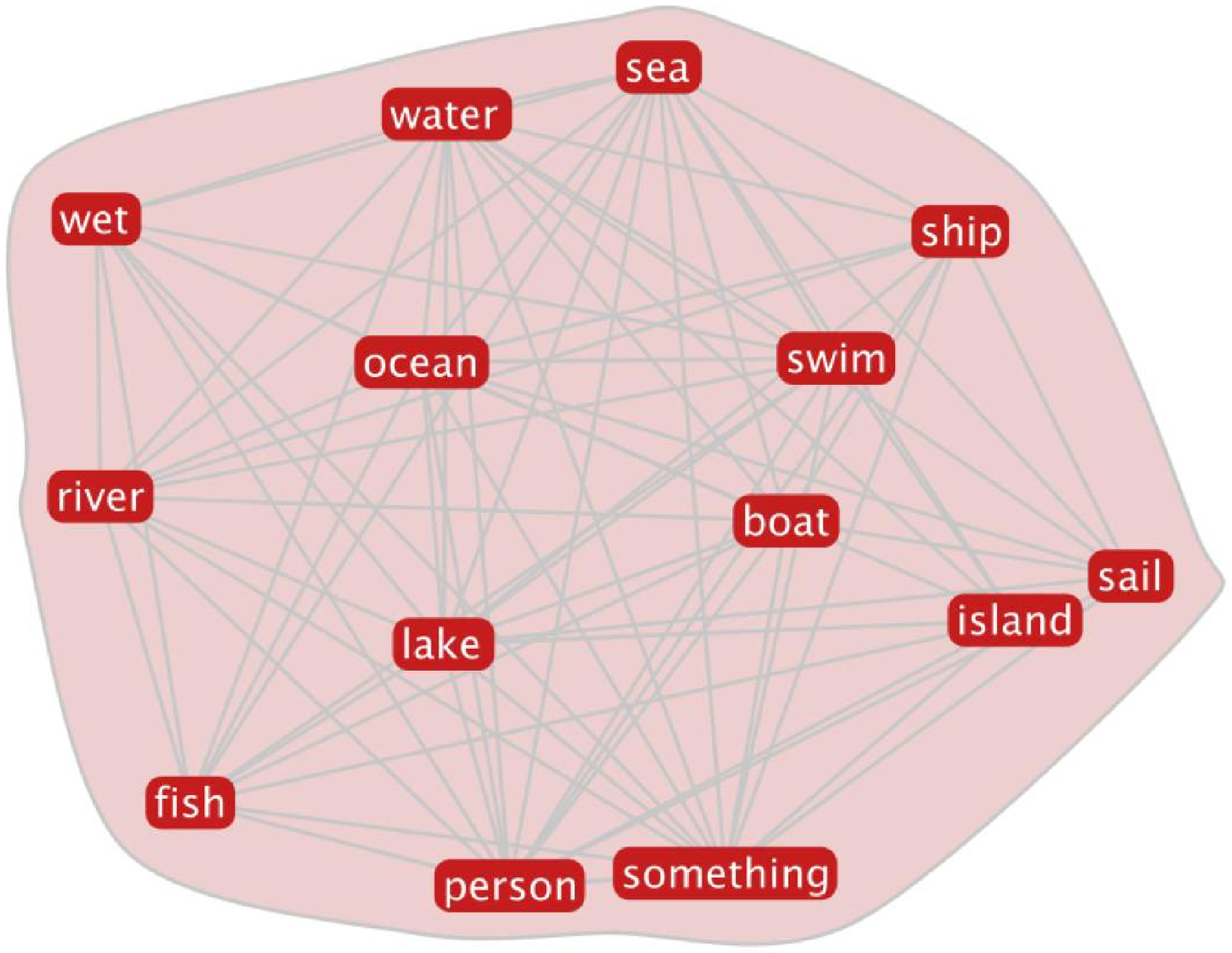}
\caption{Percolating sea. Fourteen nodes by percolating cliques of size $9$.}\label{fig:communities:percolate:sea}
\end{subfigure}

\begin{subfigure}[b]{0.48\textwidth}
\includegraphics[width=\columnwidth]{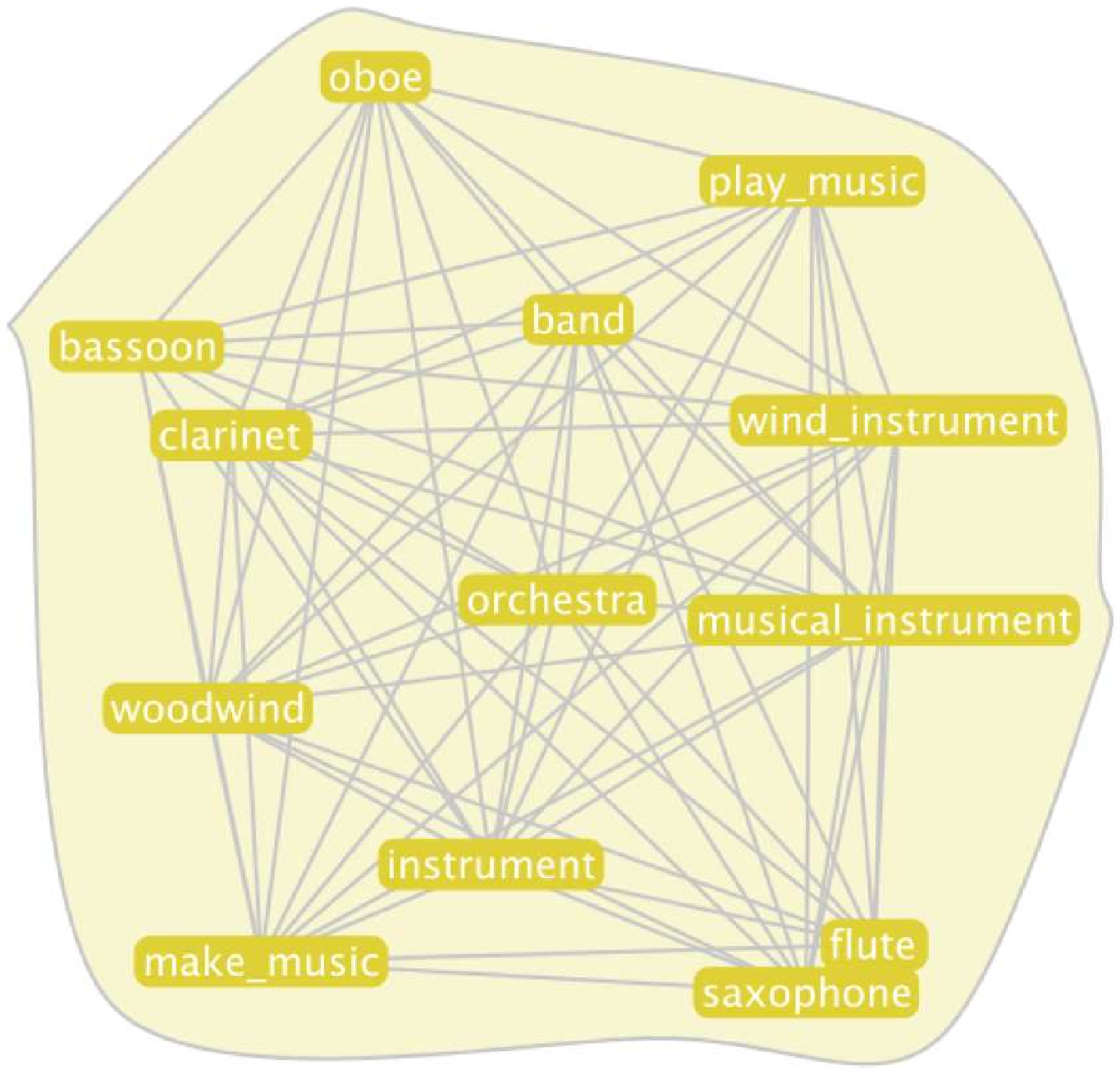}
\caption{Percolating music. Fourteen nodes by percolating cliques of size $9$.}\label{fig:communities:percolate:music:9}
\end{subfigure}
\hspace{\fill}
\begin{subfigure}[b]{0.48\textwidth}
\includegraphics[width=\columnwidth]{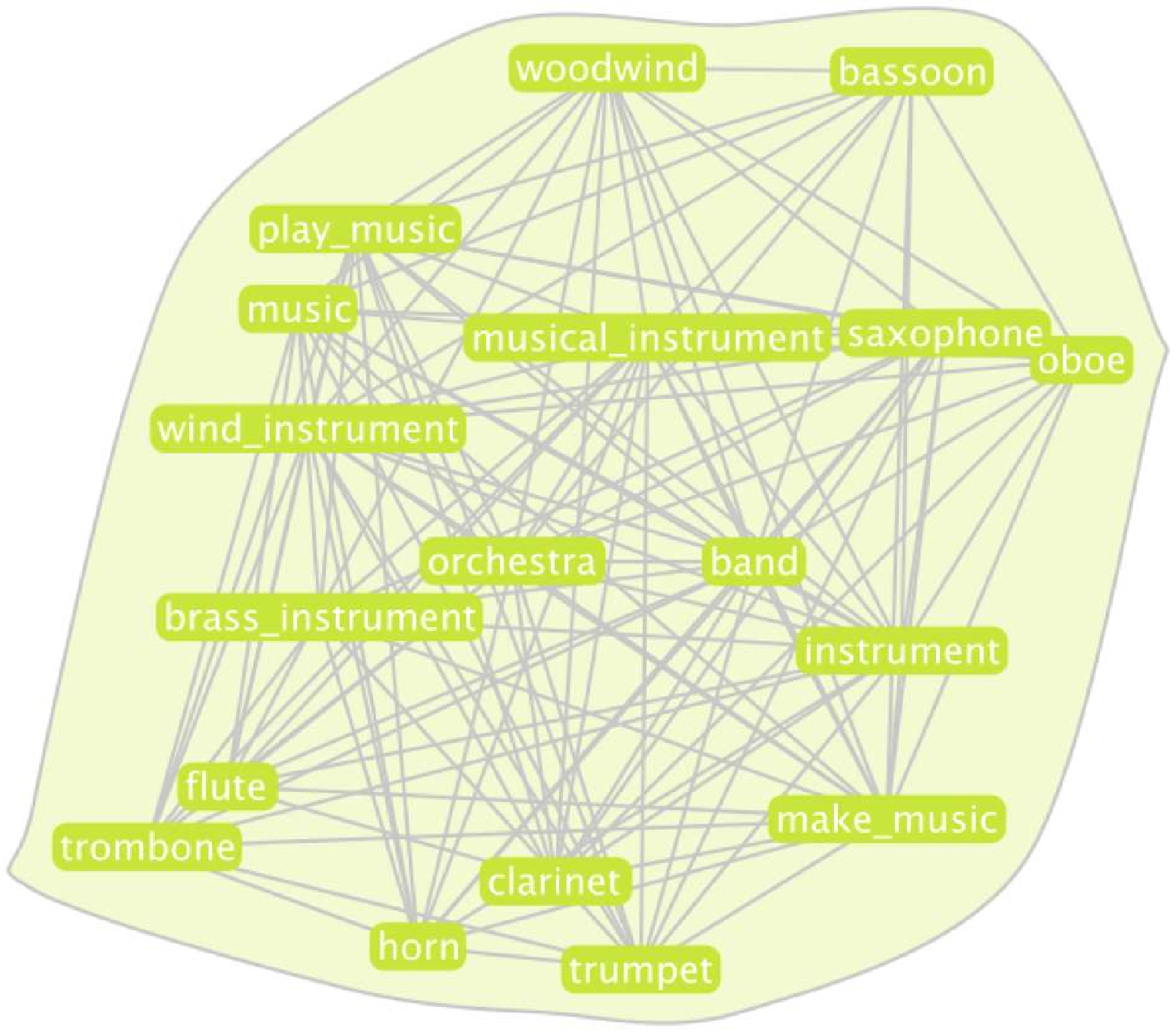}
\caption{Percolating music. Fourteen nodes by percolating cliques of size $8$.}\label{fig:communities:percolate:music:8}
\end{subfigure}
\caption{Percolating cliques of sizes $8, 9, 10$, and $11$ and some interesting communities.}\label{fig:communities:percolate:a}
\end{figure}

\begin{figure}[ht]
\centering
\begin{subfigure}[b]{0.48\textwidth}
\includegraphics[width=\columnwidth]{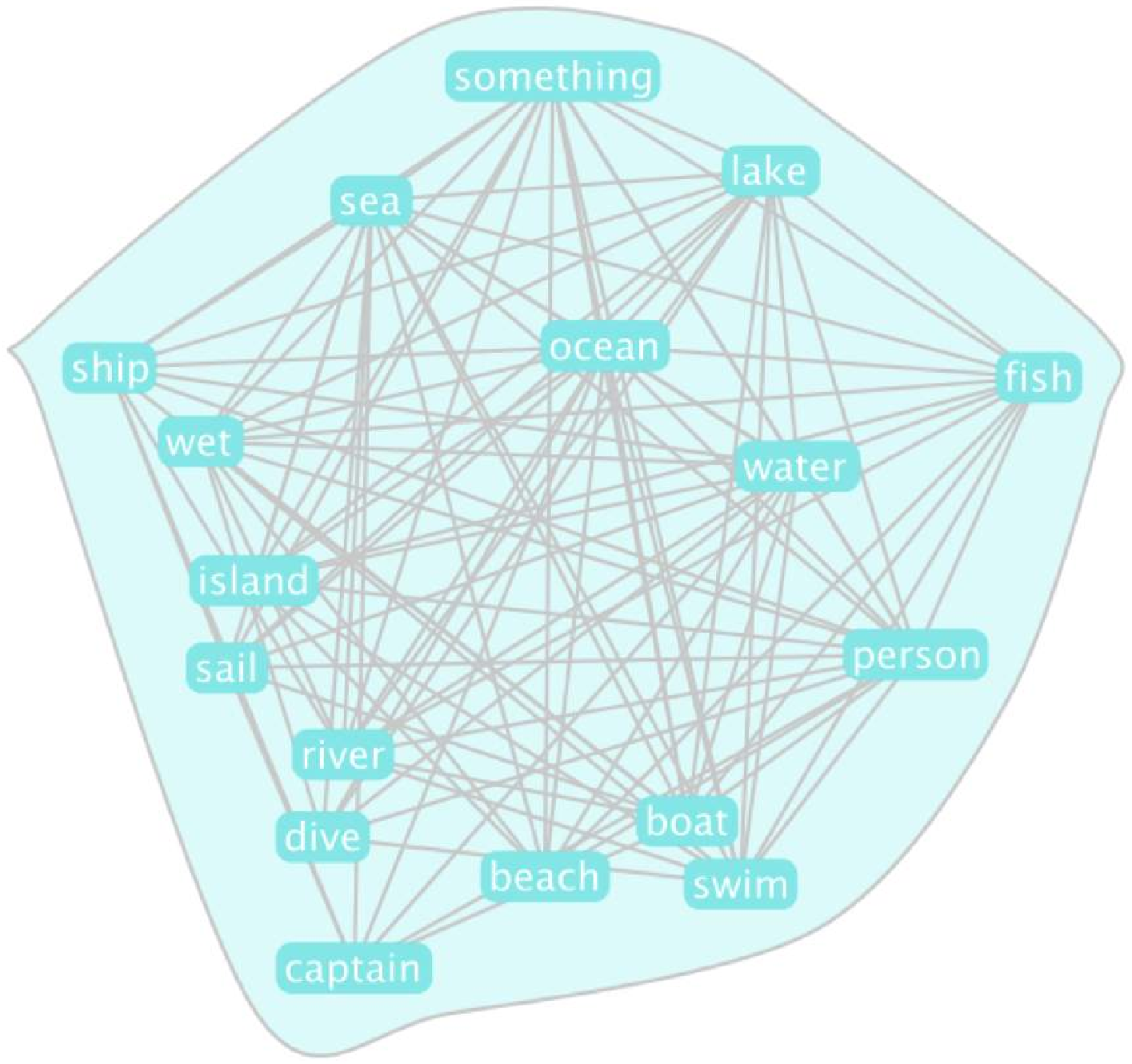}
\caption{Percolating water. Fourteen nodes by percolating cliques of size $8$.}\label{fig:communities:percolate:water}
\end{subfigure}
\hspace{\fill}
\begin{subfigure}[b]{0.48\textwidth}
\includegraphics[width=\columnwidth]{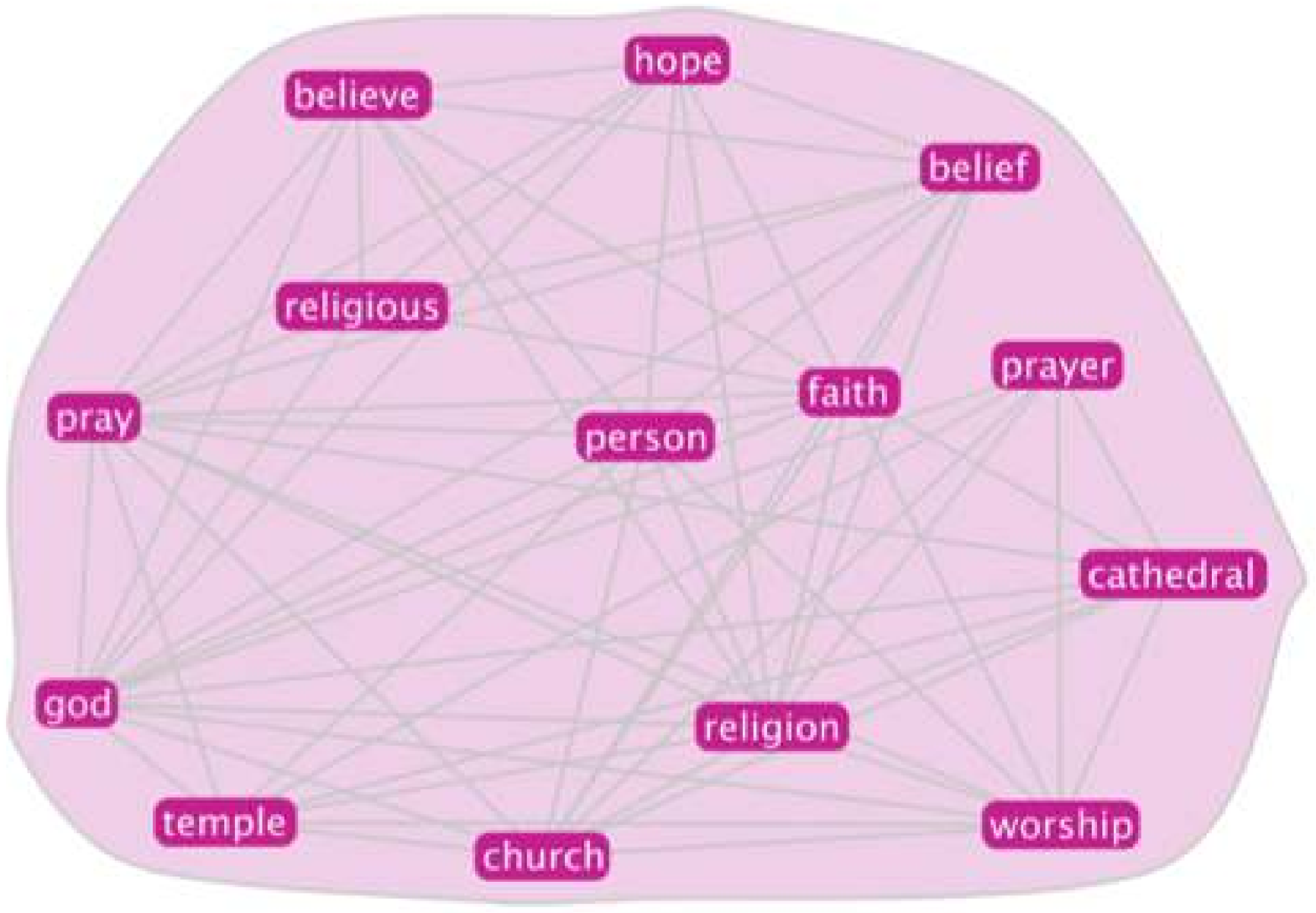}
\caption{Percolating religion. Fourteen nodes by percolating cliques of size $7$.}\label{fig:communities:percolate:religion}
\end{subfigure}

\begin{subfigure}[b]{0.48\textwidth}
\includegraphics[width=\columnwidth]{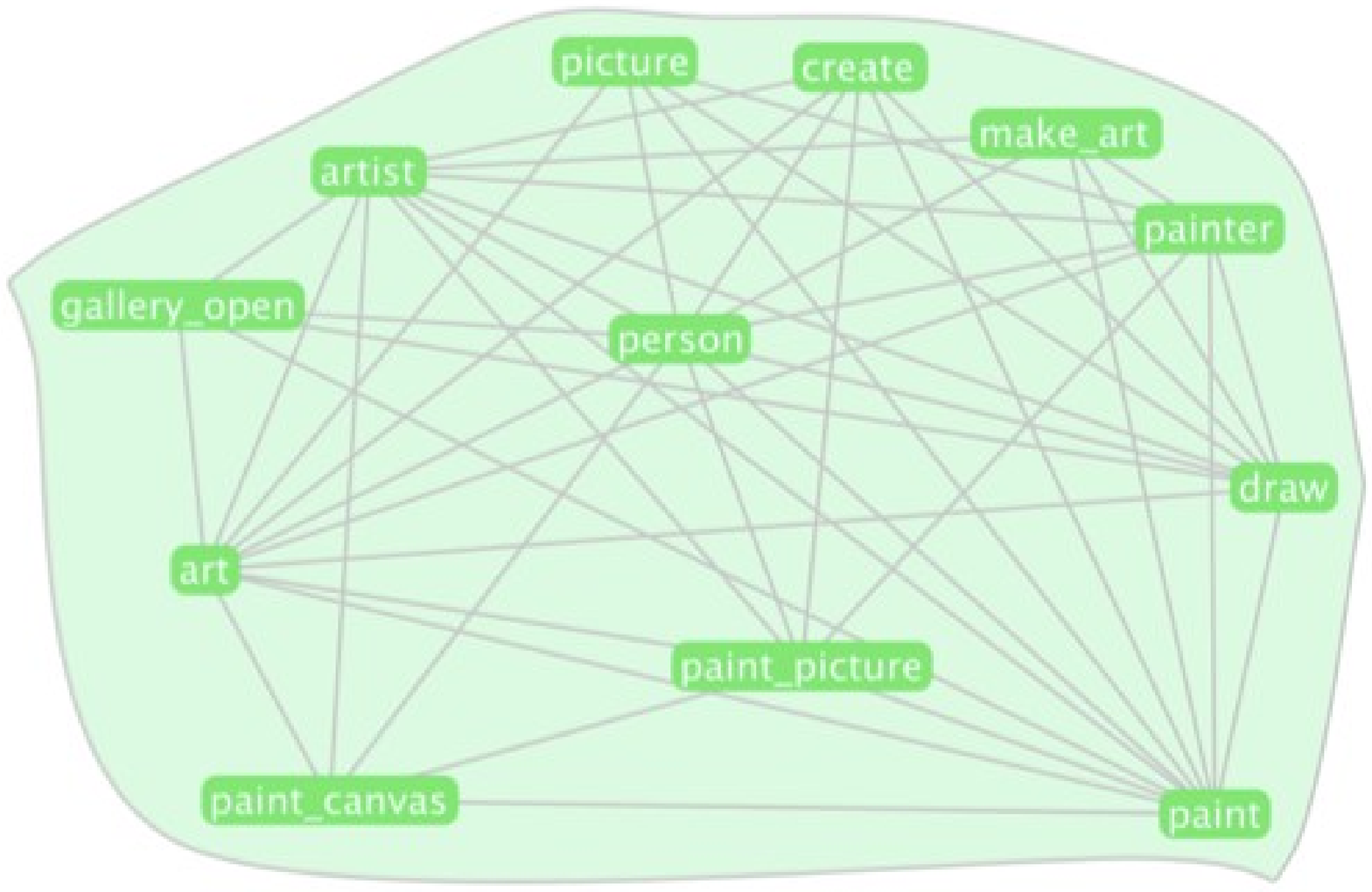}
\caption{Percolating painter. Twelve nodes by percolating cliques of size $6$.}\label{fig:communities:percolate:painter}
\end{subfigure}
\hspace{\fill}
\begin{subfigure}[b]{0.48\textwidth}
\includegraphics[width=\columnwidth]{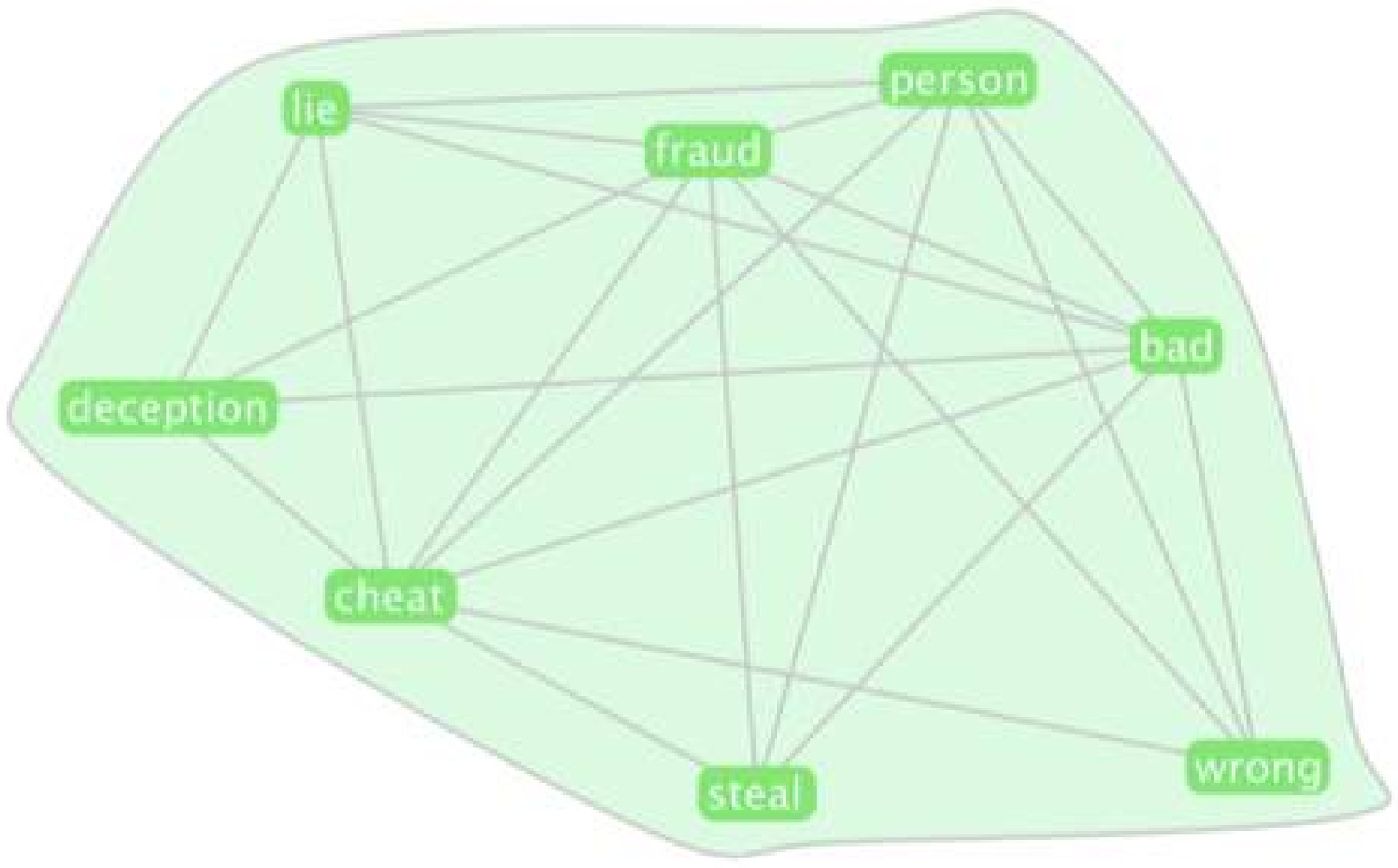}
\caption{Percolating dishonest/dishonesty. Eight nodes by percolating cliques of size $5$.
Note that \dbtext{dishonest}/\dbtext{dishonesty} is missing from the community.}\label{fig:communities:percolate:dishonest}
\end{subfigure}

\begin{subfigure}[b]{0.48\textwidth}
\includegraphics[width=\columnwidth]{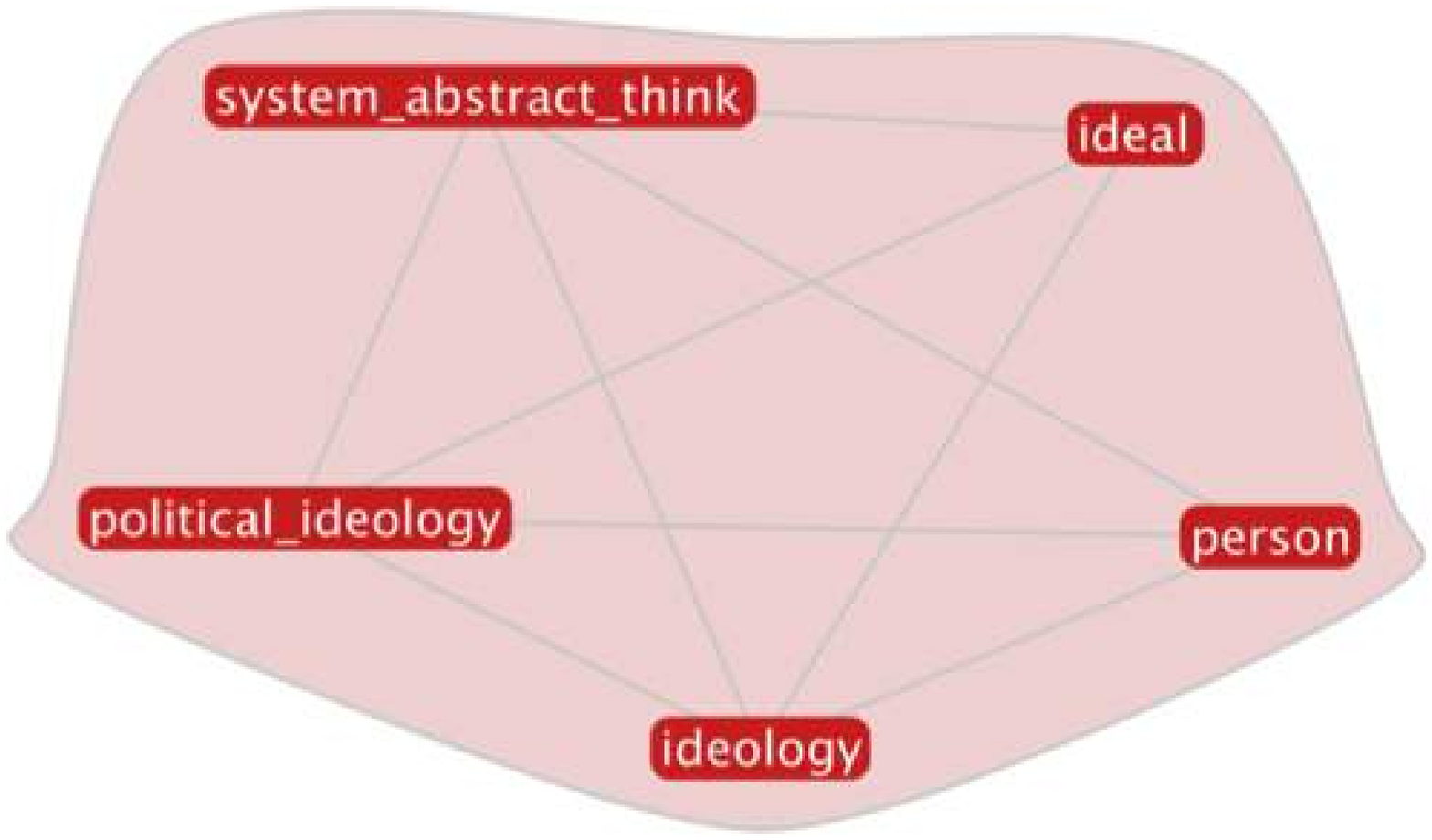}
\caption{Percolating ideals. Five nodes by percolating cliques of size $4$.}\label{fig:communities:percolate:ideals}
\end{subfigure}
\hspace{\fill}
\begin{subfigure}[b]{0.48\textwidth}
\includegraphics[width=\columnwidth]{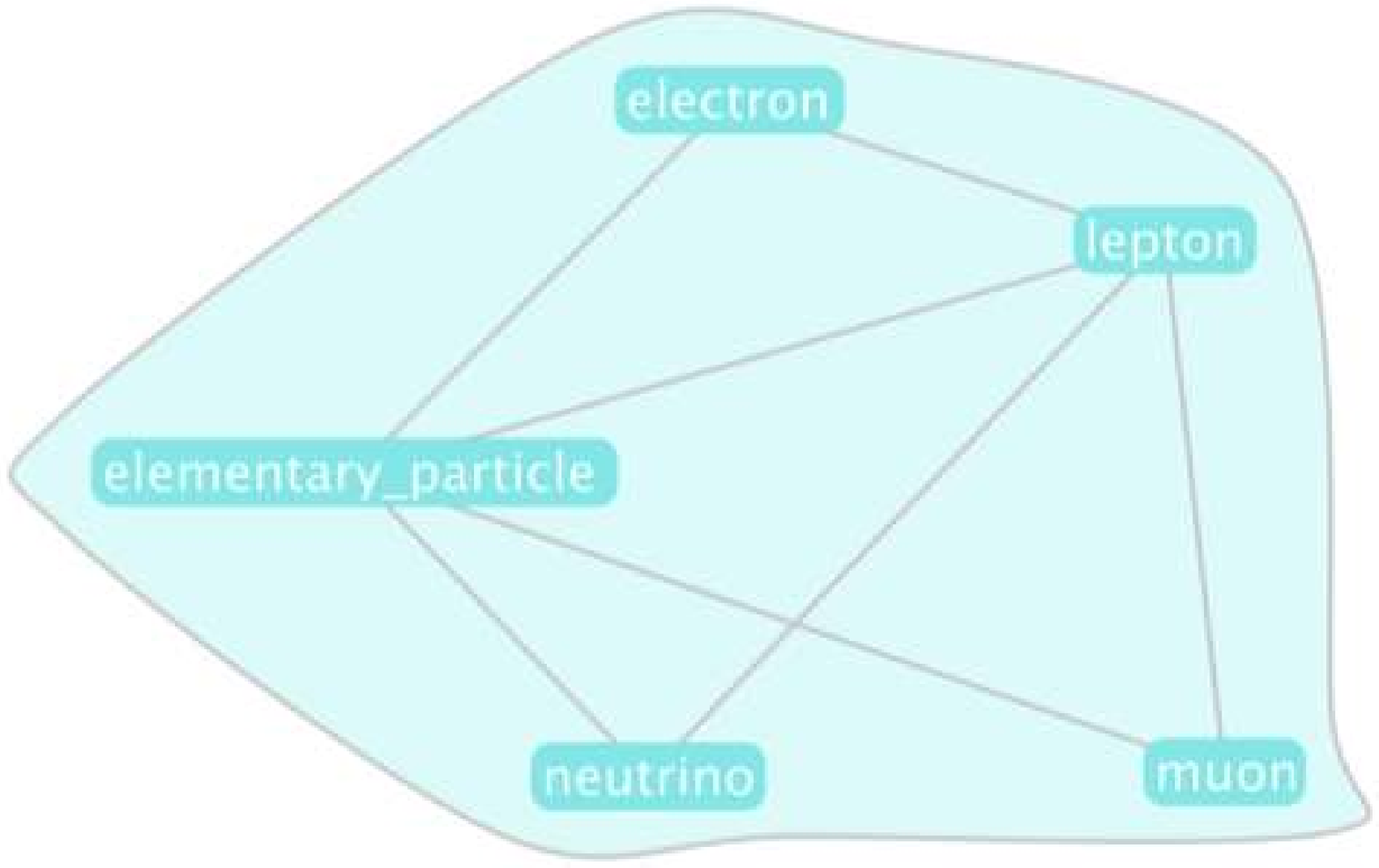}
\caption{Percolating particles. Five nodes by percolating cliques of size $3$.}\label{fig:communities:percolate:particles}
\end{subfigure}
\caption{Percolating cliques of sizes $3, 4, 5, 6, 7$ and $8$ and one interesting community in each case.}\label{fig:communities:percolate:b}
\end{figure}

\paragraph{Nested Clique.}
We can observe \emph{nested} cliques in \conceptnet.
One such instance appears by percolating cliques of size $9$
and is shown in Figure \ref{fig:communities:nested}.
The community shown in Figure \ref{fig:communities:percolate:bigCommunityPercolate9} 
is composed of $128$ concepts, while Figure \ref{fig:communities:percolate:basementPercolate9}
presents a community composed by a clique of size $9$ which does not percolate
to include more concepts. In the big clique we can see the concepts appearing in the smaller clique
either on the lower right hand side, or in the middle.

\begin{figure}[ht]
\centering
\begin{subfigure}[b]{\textwidth}
\includegraphics[width=\columnwidth]{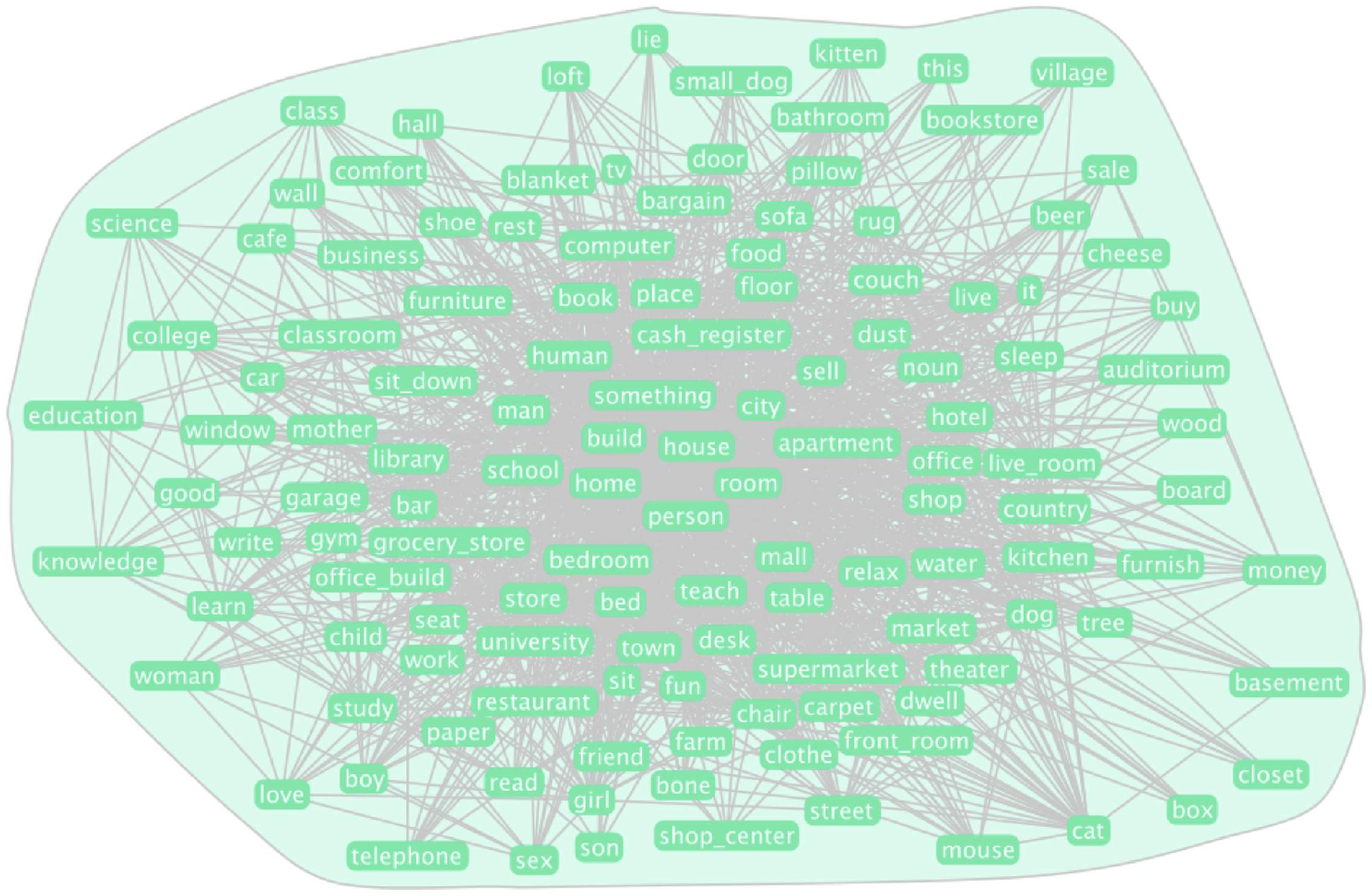}
\caption{A big community composed of $128$ concepts by percolating cliques of size $9$.}\label{fig:communities:percolate:bigCommunityPercolate9}
\end{subfigure}

\begin{subfigure}[b]{0.48\textwidth}
\includegraphics[width=\columnwidth]{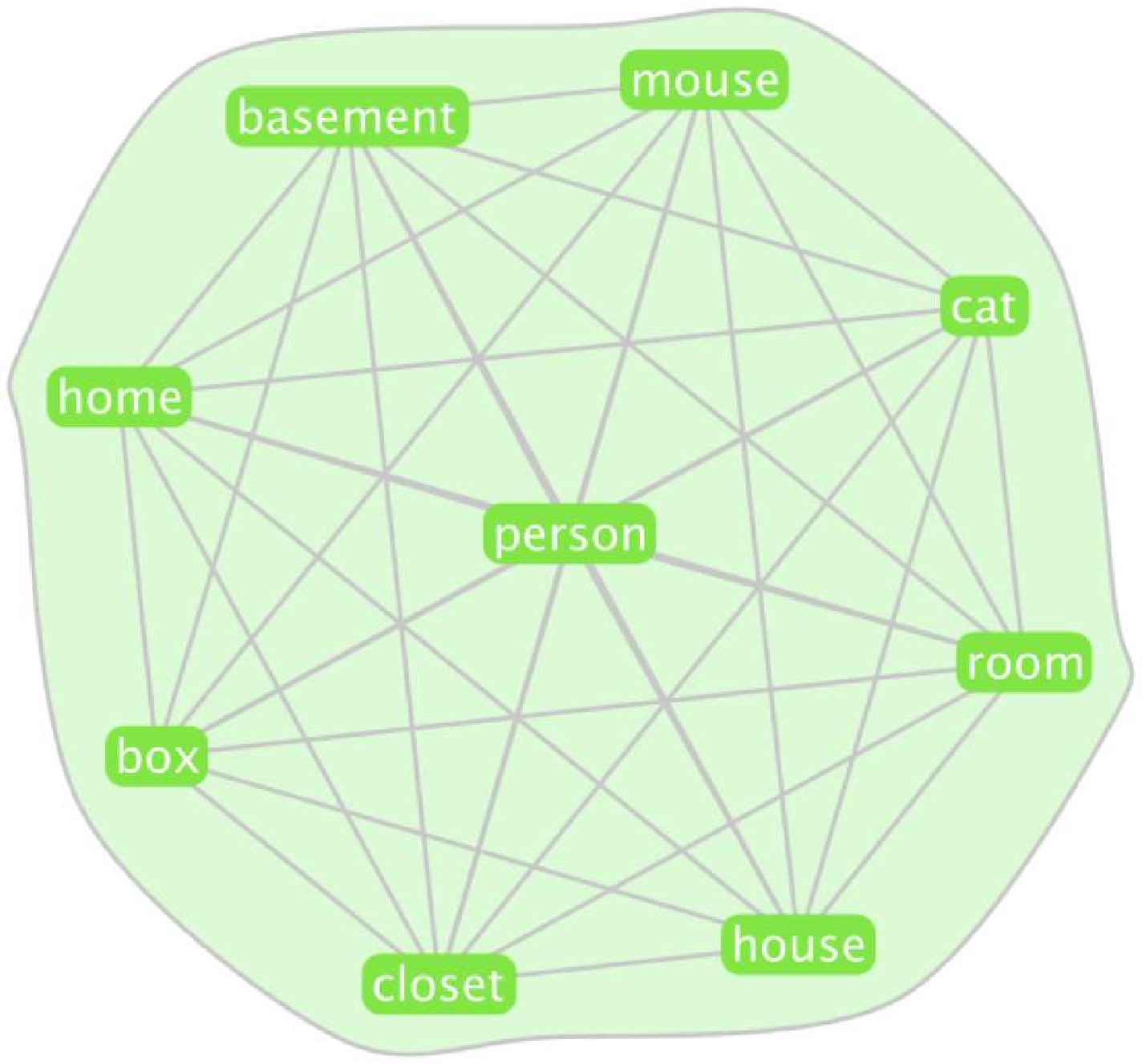}
\caption{A community composed of $9$ concepts that form a clique which does not percolate to include more concepts.}\label{fig:communities:percolate:basementPercolate9}
\end{subfigure}
\caption{Nested cliques. A clique of size $9$ has concepts which appear in a bigger clique.}\label{fig:communities:nested}
\end{figure}

\suppressfloats[t]

\paragraph{Overlapping Cliques.}
Figures \ref{fig:communities:overlapping:health}, \ref{fig:communities:overlapping:cut}, and \ref{fig:communities:overlapping:talk}
give some examples of overlapping communities in the graph induced by the assertions with positive polarity
(and positive score).

\begin{figure}[ht]
\centering
\begin{subfigure}[b]{0.8\textwidth}
\includegraphics[width=\columnwidth]{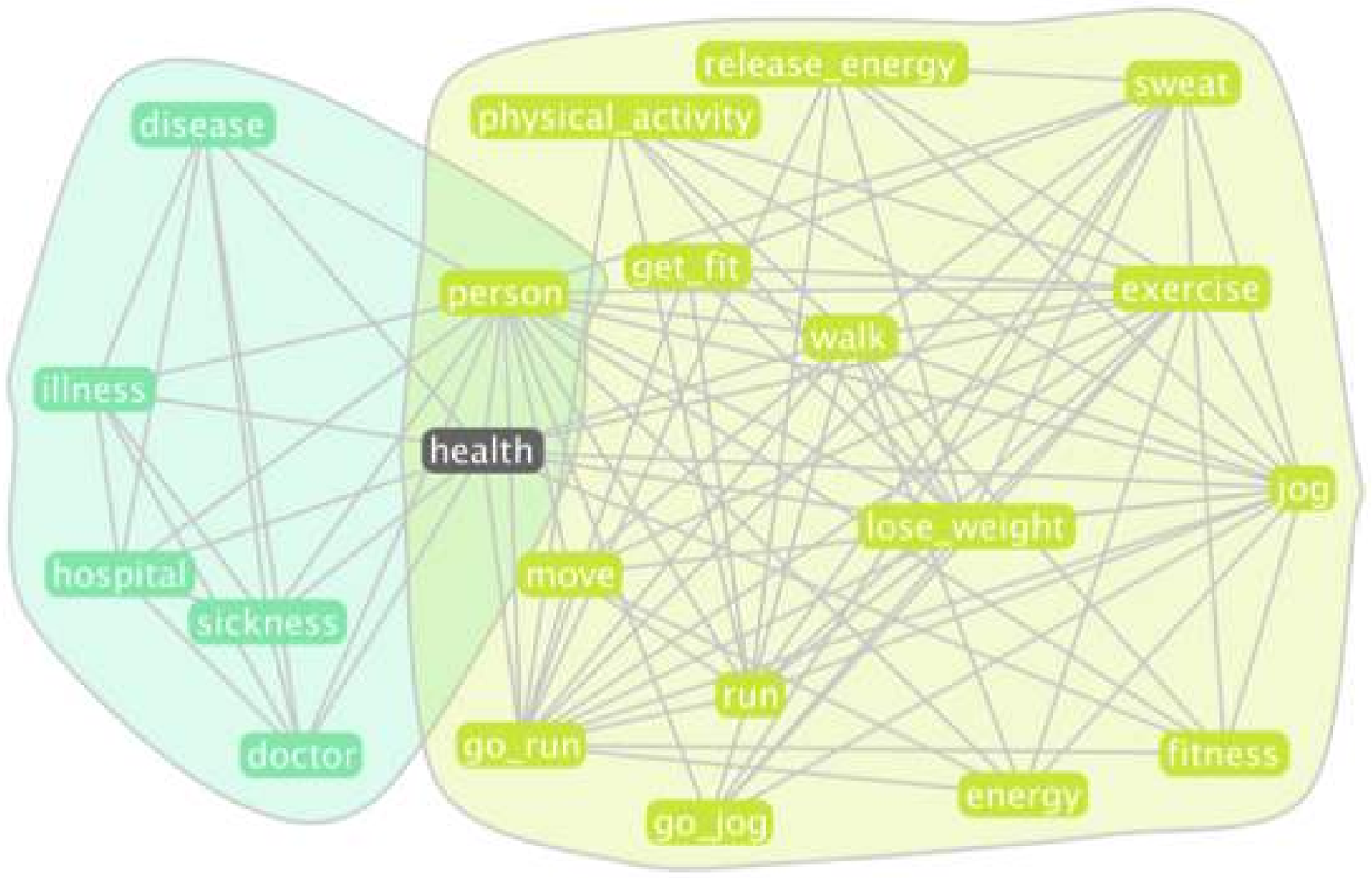}
\caption{Overlapping health.}\label{fig:communities:overlapping:health}
\end{subfigure}

\begin{subfigure}[b]{0.8\textwidth}
\includegraphics[width=\columnwidth]{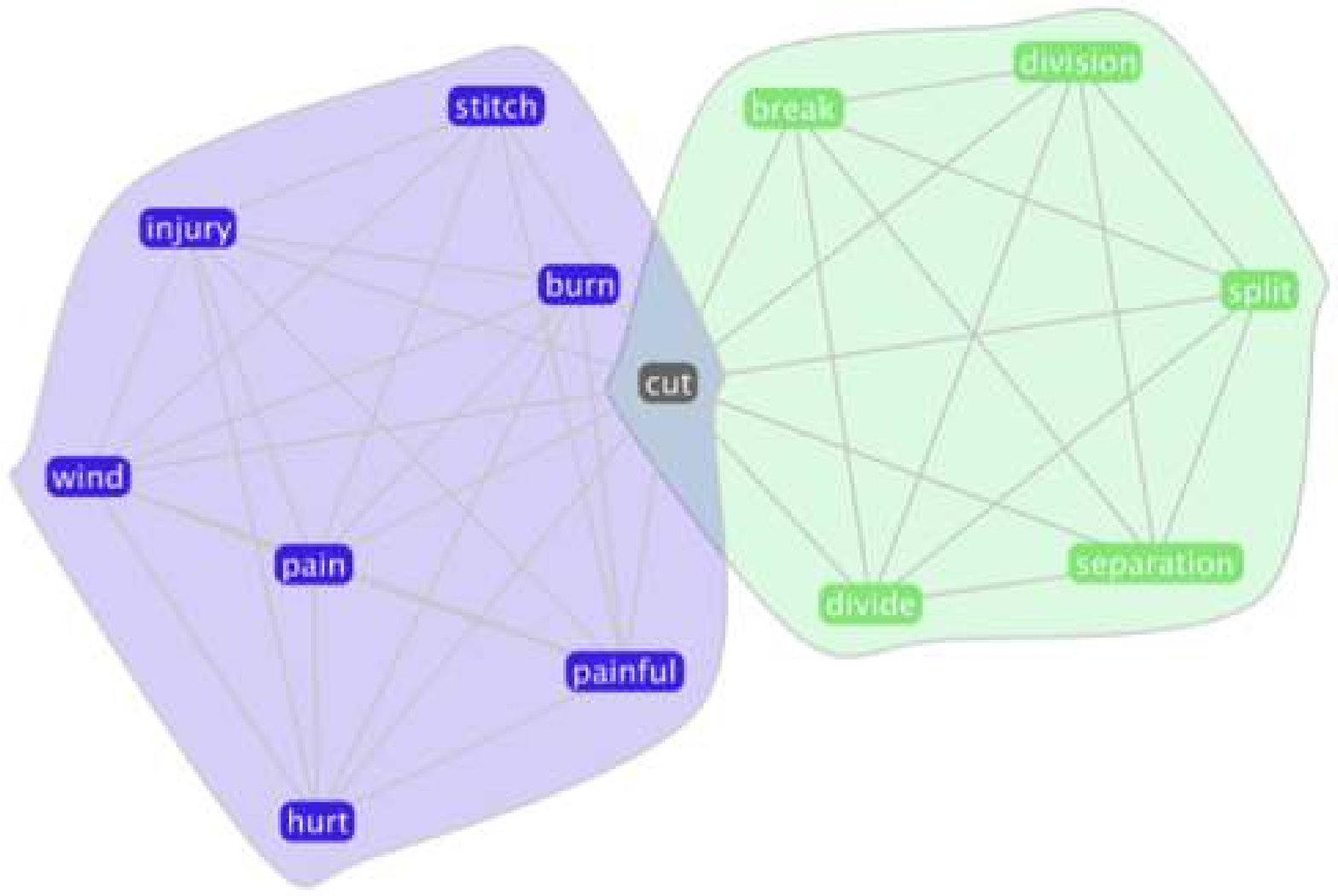}
\caption{Overlapping cut.}\label{fig:communities:overlapping:cut}
\end{subfigure}
\caption{Concepts participating in more than one communities.}\label{fig:communities:overlapping:a}
\end{figure}

\begin{figure}[ht]
\centering
\begin{subfigure}[b]{0.7\textwidth}
\includegraphics[width=\textwidth]{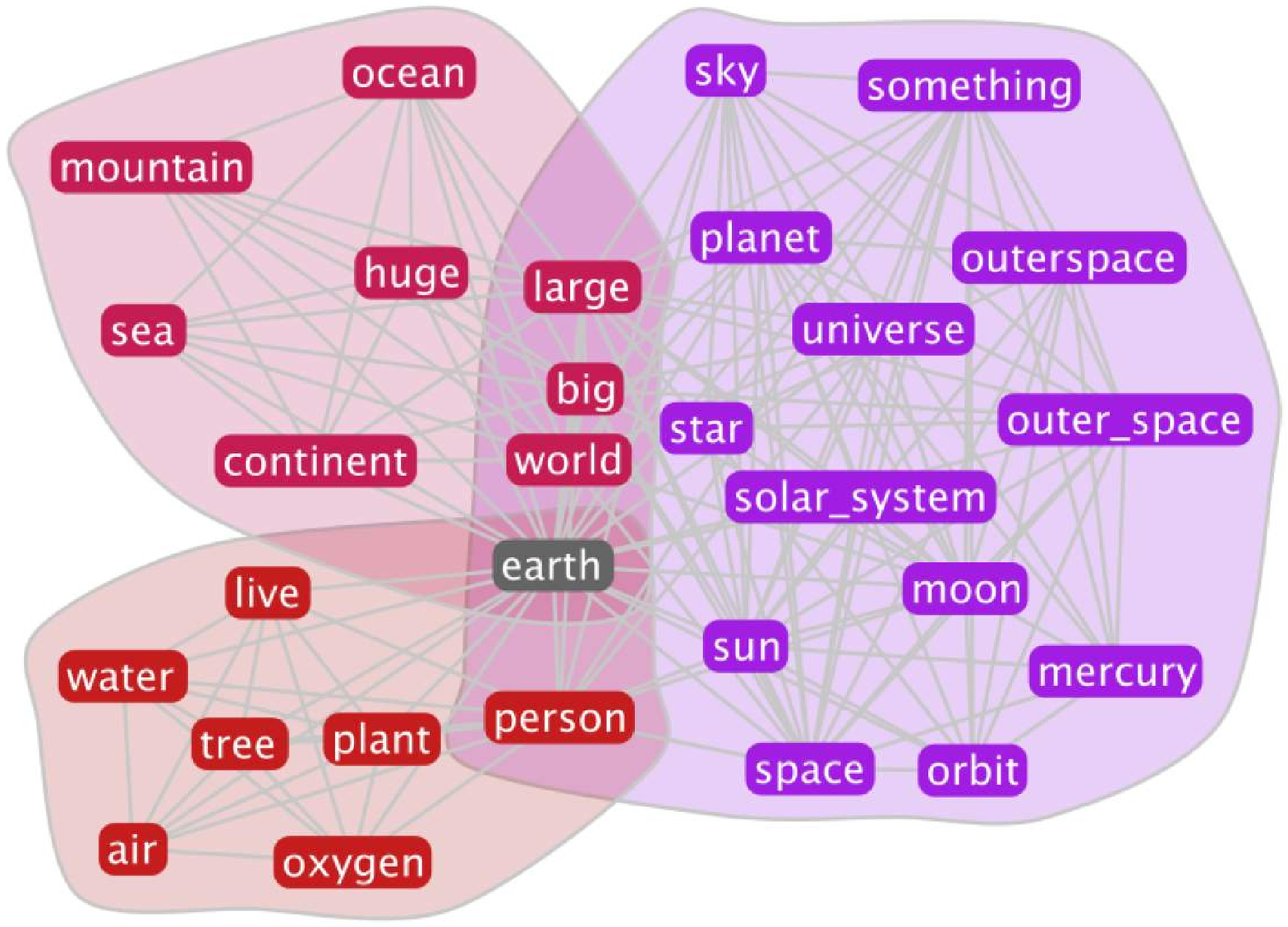}
\caption{Overlapping earth.}\label{fig:communities:overlapping:earth}
\end{subfigure}

\begin{subfigure}[b]{\textwidth}
\includegraphics[width=\textwidth]{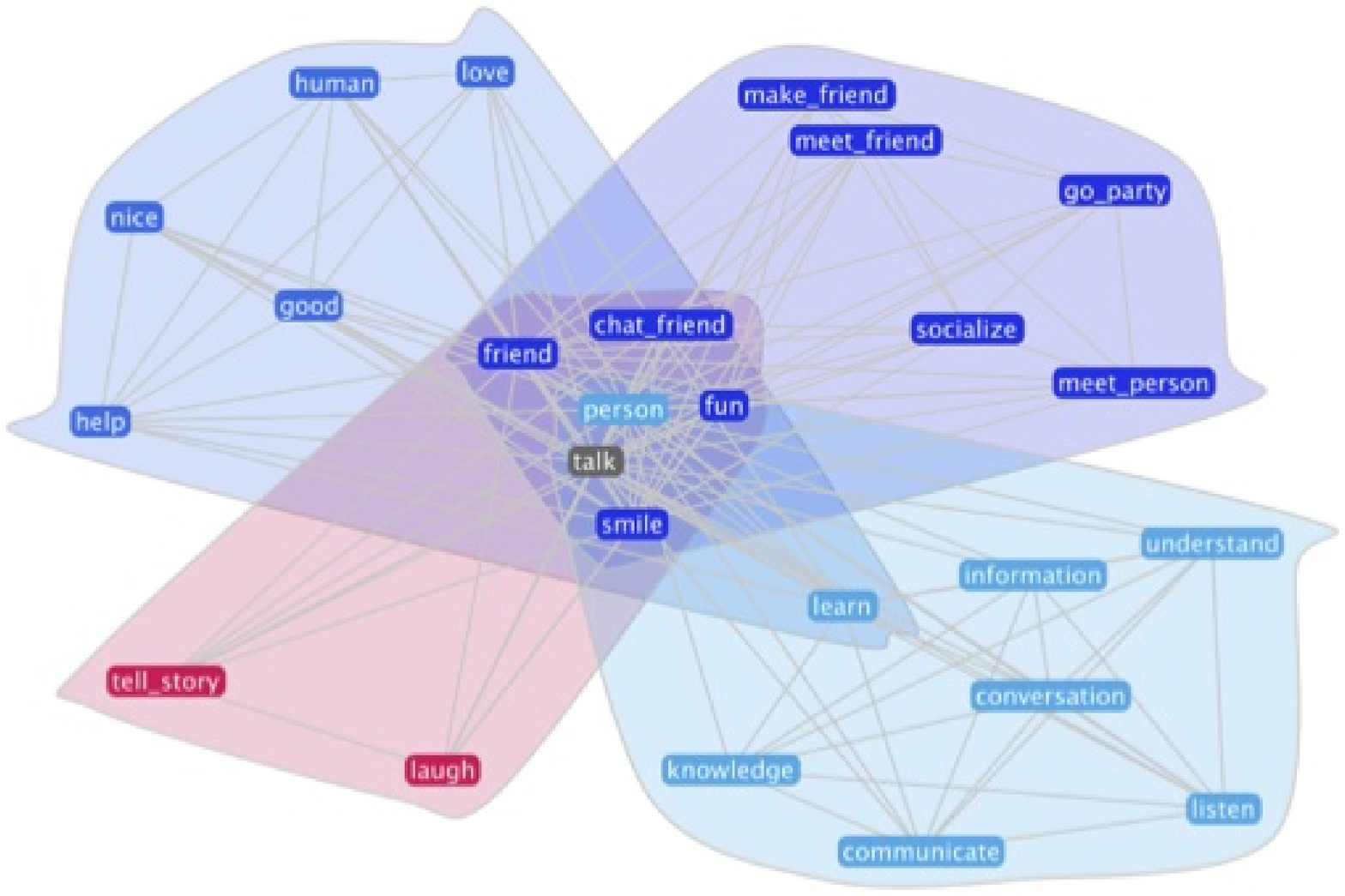}
\caption{Overlapping talk.}\label{fig:communities:overlapping:talk}
\end{subfigure}
\caption{Concepts participating in more than one communities.}\label{fig:communities:overlapping:b}
\end{figure}

\part{Mining}

\chapter{Mining Rules}
In this chapter we discuss the application of data mining towards the automated construction of a background theory
for the relations used in the knowledge base. We consider rules of the simplest form, mainly for computational considerations.

A \emph{rule} is given by an ordered triple of relations (\dbtext{X}, \dbtext{Y}, \dbtext{Z}),
where \dbtext{X}, \dbtext{Y} are the \emph{premisses} and \dbtext{Z} is
the \emph{conclusion}. For such a triple we consider triples of
concepts (\dbtext{a}, \dbtext{b}, \dbtext{c}) such that the assertions
\begin{center}
(\dbtext{a}, \dbtext{X}, \dbtext{b}) and (\dbtext{b}, \dbtext{Y}, \dbtext{c})
\end{center}
are in the knowledge base.
Such triples form the \emph{support} of the rule.
If (\dbtext{a}, \dbtext{Z}, \dbtext{c}) is also in the knowledge base then
(\dbtext{a}, \dbtext{b}, \dbtext{c}) is a \emph{success}
for the rule (\dbtext{X}, \dbtext{Y}, \dbtext{Z}), otherwise it is a \emph{failure}.
The \emph{success rate} of a rule is the percentage of successes in the support.
Consider, for example, the rule
(\dbtext{Desires}, \dbtext{Located\-Near}, \dbtext{At\-Lo\-ca\-tion}) and the triple of concepts
(\dbtext{human}, \dbtext{drink}, \dbtext{bar}).
The assertions
(\dbtext{human}, \dbtext{Desires}, \dbtext{drink}) and (\dbtext{drink}, \dbtext{Lo\-cat\-ed\-Near}, \dbtext{bar}) are
both in the knowledge base.
Therefore, we check whether the assertion
(\dbtext{human}, \dbtext{AtLocation}, \dbtext{bar})
is in the knowledge base.
It is, so (\dbtext{human}, \dbtext{drink}, \dbtext{bar}) is a success for the rule
(\dbtext{Desires}, \dbtext{LocatedNear}, \dbtext{At\-Lo\-ca\-tion}).

A triple of concepts (\dbtext{a}, \dbtext{b}, \dbtext{c}) is \emph{valid} for a rule
(\dbtext{X}, \dbtext{Y}, \dbtext{Z}) if the claim
\begin{center}
(\dbtext{a}, \dbtext{X}, \dbtext{b})
and
(\dbtext{b}, \dbtext{Y}, \dbtext{c})
therefore
(\dbtext{a}, \dbtext{Z}, \dbtext{c})
\end{center}
makes sense as a reasoning step. Otherwise (\dbtext{a}, \dbtext{b}, \dbtext{c}) is \emph{invalid}.
\emph{Making sense is a subjective judgement} and its intended meaning is up for discussion.
In what follows we use the sense \textquotedblleft given that the premisses hold it is reasonable
to assume that the conclusion holds\textquotedblright.
For example, (\dbtext{human}, \dbtext{drink}, \dbtext{bar}) is valid for the rule
(\dbtext{Desires}, \dbtext{LocatedNear}, \dbtext{AtLocation}).
Note that by the nature of its definition, deciding about validity requires an (often ambiguous)
decision by a human and so computing precise statistics about it is difficult.

We performed an exhaustive test for all possible rules
involving relations that have at least $300$ assertions with positive score regardless of their polarity.
We searched for \emph{frequent} rules, with support at least $300$ and success rate at least $5\%$
\footnote{For rules involving more than three concepts
such an exhaustive search is not feasible, and it will be necessary to use more advanced data mining techniques.}.
Success rates are expected to be low even for correct rules due to the sparsity of the network.
Tables \ref{tbl:mining:implications:list-of-relations:a} and \ref{tbl:mining:implications:list-of-relations:b}
present the $76$ triples of relations that satisfy these conditions; that is, at least $300$
assertions are in the support and at least $5\%$ success rate.
Below we give examples of some such relations,
plus an interesting one with low success rate, and comment on issues raised by these examples.

\begin{table}[ht]
\caption{The first $37$ out of the $76$ triples that appear to have support of at least $300$
and success rate at least $5\%$. Note that we have neglected from the computation all those relations
that do not have at least $300$ assertions with positive score regardless of their polarity.}\label{tbl:mining:implications:list-of-relations:a}
\begin{center}
\resizebox{\textwidth}{!}{
\begin{tabular}{|l|l|l|c|r|r|}\hline
relation $X$               & relation $Y$               & relation $Z$               & ratio    & successes & support \\\hline
HasFirstSubevent      ( 1) & MadeOf                ( 4) & Causes                (18) & 0.058282 &     19 &    326 \\\hline
AtLocation            ( 6) & AtLocation            ( 6) & AtLocation            ( 6) & 0.053967 &  29053 & 538349 \\\hline
AtLocation            ( 6) & PartOf                (21) & AtLocation            ( 6) & 0.085754 &   2394 &  27917 \\\hline
AtLocation            ( 6) & LocatedNear           (30) & AtLocation            ( 6) & 0.073569 &   4942 &  67175 \\\hline
AtLocation            ( 6) & SimilarSize           (31) & AtLocation            ( 6) & 0.082580 &   1919 &  23238 \\\hline
CapableOf             ( 8) & AtLocation            ( 6) & AtLocation            ( 6) & 0.115059 &   3067 &  26656 \\\hline
CapableOf             ( 8) & CausesDesire          (17) & CapableOf             ( 8) & 0.105263 &    424 &   4028 \\\hline
CapableOf             ( 8) & LocatedNear           (30) & AtLocation            ( 6) & 0.084516 &    530 &   6271 \\\hline
Desires               (10) & AtLocation            ( 6) & AtLocation            ( 6) & 0.109640 &   2707 &  24690 \\\hline
Desires               (10) & CausesDesire          (17) & CapableOf             ( 8) & 0.117987 &    286 &   2424 \\\hline
Desires               (10) & PartOf                (21) & AtLocation            ( 6) & 0.055444 &     55 &    992 \\\hline
Desires               (10) & CreatedBy             (25) & IsA                   ( 5) & 0.055233 &     19 &    344 \\\hline
Desires               (10) & CreatedBy             (25) & AtLocation            ( 6) & 0.055233 &     19 &    344 \\\hline
Desires               (10) & CreatedBy             (25) & CapableOf             ( 8) & 0.055233 &     19 &    344 \\\hline
Desires               (10) & LocatedNear           (30) & AtLocation            ( 6) & 0.122439 &    251 &   2050 \\\hline
Desires               (10) & LocatedNear           (30) & Desires               (10) & 0.053659 &    110 &   2050 \\\hline
Desires               (10) & SimilarSize           (31) & AtLocation            ( 6) & 0.058719 &     33 &    562 \\\hline
ConceptuallyRelatedTo (12) & ConceptuallyRelatedTo (12) & IsA                   ( 5) & 0.052977 &   7052 & 133114 \\\hline
ConceptuallyRelatedTo (12) & ConceptuallyRelatedTo (12) & HasProperty           (20) & 0.054194 &   7214 & 133114 \\\hline
ConceptuallyRelatedTo (12) & PartOf                (21) & AtLocation            ( 6) & 0.059574 &    579 &   9719 \\\hline
ConceptuallyRelatedTo (12) & PartOf                (21) & ConceptuallyRelatedTo (12) & 0.051960 &    505 &   9719 \\\hline
ConceptuallyRelatedTo (12) & PartOf                (21) & HasProperty           (20) & 0.051549 &    501 &   9719 \\\hline
ConceptuallyRelatedTo (12) & LocatedNear           (30) & IsA                   ( 5) & 0.060326 &   1611 &  26705 \\\hline
ConceptuallyRelatedTo (12) & LocatedNear           (30) & AtLocation            ( 6) & 0.062872 &   1679 &  26705 \\\hline
ConceptuallyRelatedTo (12) & LocatedNear           (30) & ConceptuallyRelatedTo (12) & 0.065007 &   1736 &  26705 \\\hline
ConceptuallyRelatedTo (12) & LocatedNear           (30) & HasProperty           (20) & 0.063471 &   1695 &  26705 \\\hline
ConceptuallyRelatedTo (12) & SimilarSize           (31) & IsA                   ( 5) & 0.067656 &    629 &   9297 \\\hline
ConceptuallyRelatedTo (12) & SimilarSize           (31) & ConceptuallyRelatedTo (12) & 0.068409 &    636 &   9297 \\\hline
ConceptuallyRelatedTo (12) & SimilarSize           (31) & HasProperty           (20) & 0.065182 &    606 &   9297 \\\hline
HasA                  (16) & PartOf                (21) & IsA                   ( 5) & 0.072956 &    706 &   9677 \\\hline
HasA                  (16) & PartOf                (21) & AtLocation            ( 6) & 0.050842 &    492 &   9677 \\\hline
HasA                  (16) & PartOf                (21) & ConceptuallyRelatedTo (12) & 0.059729 &    578 &   9677 \\\hline
HasA                  (16) & PartOf                (21) & HasProperty           (20) & 0.059419 &    575 &   9677 \\\hline
HasA                  (16) & LocatedNear           (30) & IsA                   ( 5) & 0.060117 &   1233 &  20510 \\\hline
HasA                  (16) & LocatedNear           (30) & AtLocation            ( 6) & 0.052365 &   1074 &  20510 \\\hline
HasA                  (16) & LocatedNear           (30) & ConceptuallyRelatedTo (12) & 0.060751 &   1246 &  20510 \\\hline
HasA                  (16) & LocatedNear           (30) & HasProperty           (20) & 0.059191 &   1214 &  20510 \\\hline
\end{tabular}
}
\end{center}
\end{table}

\begin{table}[ht]
\caption{The last $39$ out of the $76$ triples that appear to have support of at least $300$
and success rate at least $5\%$. Note that we have neglected from the computation all those relations
that do not have at least $300$ assertions with positive score regardless of their polarity.}\label{tbl:mining:implications:list-of-relations:b}
\begin{center}
\resizebox{\textwidth}{!}{
\begin{tabular}{|l|l|l|c|r|r|}\hline
relation $X$               & relation $Y$               & relation $Z$               & ratio    & successes & support \\\hline
HasProperty           (20) & LocatedNear           (30) & AtLocation            ( 6) & 0.051700 &   1743 &  33714 \\\hline
HasProperty           (20) & LocatedNear           (30) & HasProperty           (20) & 0.053183 &   1793 &  33714 \\\hline
HasProperty           (20) & SimilarSize           (31) & IsA                   ( 5) & 0.057224 &    642 &  11219 \\\hline
HasProperty           (20) & SimilarSize           (31) & ConceptuallyRelatedTo (12) & 0.055263 &    620 &  11219 \\\hline
HasProperty           (20) & SimilarSize           (31) & HasProperty           (20) & 0.063909 &    717 &  11219 \\\hline
CreatedBy             (25) & LocatedNear           (30) & AtLocation            ( 6) & 0.058252 &     42 &    721 \\\hline
LocatedNear           (30) & ConceptuallyRelatedTo (12) & HasProperty           (20) & 0.051206 &   2416 &  47182 \\\hline
LocatedNear           (30) & PartOf                (21) & IsA                   ( 5) & 0.059501 &    253 &   4252 \\\hline
LocatedNear           (30) & PartOf                (21) & AtLocation            ( 6) & 0.097601 &    415 &   4252 \\\hline
LocatedNear           (30) & PartOf                (21) & ConceptuallyRelatedTo (12) & 0.063735 &    271 &   4252 \\\hline
LocatedNear           (30) & PartOf                (21) & HasProperty           (20) & 0.070790 &    301 &   4252 \\\hline
LocatedNear           (30) & PartOf                (21) & PartOf                (21) & 0.062088 &    264 &   4252 \\\hline
LocatedNear           (30) & LocatedNear           (30) & IsA                   ( 5) & 0.062558 &    738 &  11797 \\\hline
LocatedNear           (30) & LocatedNear           (30) & AtLocation            ( 6) & 0.080698 &    952 &  11797 \\\hline
LocatedNear           (30) & LocatedNear           (30) & ConceptuallyRelatedTo (12) & 0.068407 &    807 &  11797 \\\hline
LocatedNear           (30) & LocatedNear           (30) & HasProperty           (20) & 0.070018 &    826 &  11797 \\\hline
LocatedNear           (30) & LocatedNear           (30) & LocatedNear           (30) & 0.058913 &    695 &  11797 \\\hline
LocatedNear           (30) & SimilarSize           (31) & IsA                   ( 5) & 0.051922 &    204 &   3929 \\\hline
LocatedNear           (30) & SimilarSize           (31) & AtLocation            ( 6) & 0.062611 &    246 &   3929 \\\hline
LocatedNear           (30) & SimilarSize           (31) & ConceptuallyRelatedTo (12) & 0.059812 &    235 &   3929 \\\hline
LocatedNear           (30) & SimilarSize           (31) & HasProperty           (20) & 0.052940 &    208 &   3929 \\\hline
SimilarSize           (31) & MadeOf                ( 4) & HasA                  (16) & 0.063559 &     60 &    944 \\\hline
SimilarSize           (31) & MadeOf                ( 4) & HasProperty           (20) & 0.050847 &     48 &    944 \\\hline
SimilarSize           (31) & ConceptuallyRelatedTo (12) & IsA                   ( 5) & 0.067418 &    933 &  13839 \\\hline
SimilarSize           (31) & ConceptuallyRelatedTo (12) & ConceptuallyRelatedTo (12) & 0.070020 &    969 &  13839 \\\hline
SimilarSize           (31) & ConceptuallyRelatedTo (12) & HasProperty           (20) & 0.070742 &    979 &  13839 \\\hline
SimilarSize           (31) & PartOf                (21) & IsA                   ( 5) & 0.060452 &     83 &   1373 \\\hline
SimilarSize           (31) & PartOf                (21) & AtLocation            ( 6) & 0.072833 &    100 &   1373 \\\hline
SimilarSize           (31) & PartOf                (21) & ConceptuallyRelatedTo (12) & 0.065550 &     90 &   1373 \\\hline
SimilarSize           (31) & PartOf                (21) & HasProperty           (20) & 0.067007 &     92 &   1373 \\\hline
SimilarSize           (31) & CreatedBy             (25) & ConceptuallyRelatedTo (12) & 0.055838 &     22 &    394 \\\hline
SimilarSize           (31) & LocatedNear           (30) & IsA                   ( 5) & 0.056172 &    162 &   2884 \\\hline
SimilarSize           (31) & LocatedNear           (30) & AtLocation            ( 6) & 0.074202 &    214 &   2884 \\\hline
SimilarSize           (31) & LocatedNear           (30) & ConceptuallyRelatedTo (12) & 0.065534 &    189 &   2884 \\\hline
SimilarSize           (31) & LocatedNear           (30) & HasProperty           (20) & 0.057559 &    166 &   2884 \\\hline
SimilarSize           (31) & SimilarSize           (31) & IsA                   ( 5) & 0.080597 &    108 &   1340 \\\hline
SimilarSize           (31) & SimilarSize           (31) & ConceptuallyRelatedTo (12) & 0.092537 &    124 &   1340 \\\hline
SimilarSize           (31) & SimilarSize           (31) & HasProperty           (20) & 0.088060 &    118 &   1340 \\\hline
SimilarSize           (31) & SimilarSize           (31) & SimilarSize           (31) & 0.053731 &     72 &   1340 \\\hline
\end{tabular}
}
\end{center}
\end{table}

\smallskip

Our first example is the rule (\dbtext{Desires}, \dbtext{LocatedNear}, \dbtext{AtLocation}).
This is the highest scoring rule with 251 successes and support 2050 (12\% success rate).
The triples (\dbtext{human}, \dbtext{drink}, \dbtext{bar}) and (\dbtext{bird}, \dbtext{seed}, \dbtext{garden})
are successful and valid. The triple (\dbtext{human}, \dbtext{love}, \dbtext{heart}) is successful but invalid.
The triple (\dbtext{bird}, \dbtext{seed}, \dbtext{plant garden}) is a failure but it is valid.
The reason for the failure is that the assertion
(\dbtext{bird},  \dbtext{AtLocation}, \dbtext{plant garden}) is missing from the knowledge base.
This is an example of using the mined
rules to identify missing entries.

\if 0
\begin{itemize}[noitemsep,leftmargin=8mm,topsep=0.5mm]
 \item (\dbtext{human}, \dbtext{exercise}, \dbtext{gym})
 \item (\dbtext{human}, \dbtext{love}, \dbtext{marriage})
 \item (\dbtext{human}, \dbtext{drink}, \dbtext{bar})
 \item (\dbtext{human}, \dbtext{sleep}, \dbtext{bed})
 \item (\dbtext{human}, \dbtext{money}, \dbtext{bank})
 \item (\dbtext{human}, \dbtext{food}, \dbtext{kitchen})
 \item (\dbtext{student}, \dbtext{learn}, \dbtext{school})
 \item (\dbtext{student}, \dbtext{degree}, \dbtext{college})
 \item (\dbtext{student}, \dbtext{degree}, \dbtext{university})
 \item (\dbtext{child}, \dbtext{play}, \dbtext{park})
 \item (\dbtext{child}, \dbtext{learn}, \dbtext{school})
 \item (\dbtext{child}, \dbtext{pet}, \dbtext{house})
 \item (\dbtext{child}, \dbtext{read}, \dbtext{library})
 \item (\dbtext{child}, \dbtext{cake}, \dbtext{birthday})
 \item (\dbtext{bird}, \dbtext{fly}, \dbtext{sky})
 \item (\dbtext{bird}, \dbtext{seed}, \dbtext{tree})
 \item (\dbtext{bird}, \dbtext{seed}, \dbtext{garden})
 \item (\dbtext{fish}, \dbtext{swim}, \dbtext{water})
 \item (\dbtext{turtle}, \dbtext{water}, \dbtext{ocean})
 \item (\dbtext{swimmer}, \dbtext{swim}, \dbtext{water})
 \item (\dbtext{swimmer}, \dbtext{swim}, \dbtext{pool})
 \item (\dbtext{christian}, \dbtext{faith}, \dbtext{church})
 \item (\dbtext{mouse}, \dbtext{cheese}, \dbtext{mousetrap})
\end{itemize}

\medskip

The following are examples of successes but can not be used as an explanation for the implication.
\begin{itemize}[noitemsep,leftmargin=8mm,topsep=0.5mm]
 \item (\dbtext{child}, \dbtext{mother}, \dbtext{family})
 \item (\dbtext{woman}, \dbtext{child}, \dbtext{school})
\end{itemize}

\medskip

Examples of invalid successes appear to be the following.
\begin{itemize}[noitemsep,leftmargin=8mm,topsep=0.5mm]
 \item (\dbtext{human}, \dbtext{love}, \dbtext{heart})
 \item (\dbtext{student}, \dbtext{degree}, \dbtext{desk})
\end{itemize}

\medskip

Changing the context of the concept.
\begin{itemize}[noitemsep,leftmargin=8mm,topsep=0.5mm]
 \item Failure on: (\dbtext{student}, \dbtext{degree}, \dbtext{thermometer})
 \item Failure on: (\dbtext{student}, \dbtext{degree}, \dbtext{angle measurement})
\end{itemize}

\medskip

Examples of valid failures below.
\begin{itemize}[noitemsep,leftmargin=8mm,topsep=0.5mm]
 \item (\dbtext{human}, \dbtext{food}, \dbtext{plate})
 \item (\dbtext{human}, \dbtext{food}, \dbtext{stomach})
 \item (\dbtext{bird}, \dbtext{seed}, \dbtext{fruit})
 \item (\dbtext{bird}, \dbtext{seed}, \dbtext{plant garden})
 \item (\dbtext{bird}, \dbtext{seed}, \dbtext{packet})
 \item (\dbtext{bird}, \dbtext{seed}, \dbtext{inside fruit})
\end{itemize}

\medskip

Example of valid failure but probably because we really can not make such a statement,
otherwise it should have been correct.
\begin{itemize}[noitemsep,leftmargin=8mm,topsep=0.5mm]
 \item (\dbtext{student}, \dbtext{learn}, \dbtext{book})
\end{itemize}

\medskip

Examples of invalid failures below.
\begin{itemize}[noitemsep,leftmargin=8mm,topsep=0.5mm]
 \item (\dbtext{student}, \dbtext{degree}, \dbtext{graduation})
 \item (\dbtext{bird}, \dbtext{seed}, \dbtext{ground})
 \item (\dbtext{bird}, \dbtext{seed}, \dbtext{soil})
 \item (\dbtext{bird}, \dbtext{seed}, \dbtext{dirt})
 \item (\dbtext{bird}, \dbtext{seed}, \dbtext{apple core})
\end{itemize}

\bigskip
\fi


\if 0
Then we can observe the following for possible third concepts.
\begin{itemize}[noitemsep,leftmargin=8mm,topsep=0.5mm]
 \item Examples of valid successes with \dbtext{field}, \dbtext{mountain}, \dbtext{cave}, \dbtext{countryside},
 \dbtext{ground}, \dbtext{field forest}, \dbtext{ground}.
 \item Examples of ambiguous successes with \dbtext{water} and \dbtext{museum}.
 \item Example of ambiguous failure with \dbtext{planet}.
 \item Valid failures with \dbtext{moon}, \dbtext{surface moon}, \dbtext{comet}, \dbtext{drawer}, \dbtext{bottom sea},
 \dbtext{little boy pocket}, \dbtext{geology lab}, \dbtext{beach}.
 \item Invalid failures with \dbtext{mine}, \dbtext{top mountain}, \dbtext{surface earth}, \dbtext{riverbed},
 \dbtext{bridge}
\end{itemize}
\fi



\medskip

The rule (\dbtext{AtLocation}, \dbtext{PartOf}, \dbtext{AtLocation})
has 2,394 successes and support 27,917 (8.5\% success rate). The triple
(\dbtext{text book}, \dbtext{classroom}, \dbtext{school}) is successful and valid.
On the other hand, (\dbtext{text book}, \dbtext{classroom}, \dbtext{school system})
is a failure. In contrast to the failure discussed for the first rule above,
this is not due to a missing assertion, because the triple is \emph{invalid}.
This points to a general problem with this rule: it is only expected to hold if the
third concept is a physical object, like \dbtext{school} and unlike \dbtext{school system}.
Thus examining this example suggests a weakening of the rule.

\if 0
{\bf give valid success where PartOf is not like AtLocation!}

Our first concept is \dbtext{text book}. We observe the following.
\begin{itemize}[noitemsep,leftmargin=8mm,topsep=0.5mm]
 \item Valid failure: (\dbtext{school}, \dbtext{school system}),  (\dbtext{classroom}, \dbtext{school system}).
 \item Ambiguous failure: (\dbtext{school}, \dbtext{basic social infrastructure}), (\dbtext{class}, \dbtext{phylum}).
 \item Valid success: (\dbtext{class}, \dbtext{school}), (\dbtext{classroom}, \dbtext{school})
\end{itemize}

\medskip

Moreover, we can observe that it succeeds on the following:
\begin{itemize}[noitemsep,leftmargin=8mm,topsep=0.5mm]
\item (\dbtext{oil}, \dbtext{engine}, \dbtext{car})
\item (\dbtext{oil}, \dbtext{engine}, \dbtext{lawnmower})
\end{itemize}
while it fails on the following.
\begin{itemize}[noitemsep,leftmargin=8mm,topsep=0.5mm]
 \item (\dbtext{oil}, \dbtext{car},    \dbtext{engine power wheel})
 \item (\dbtext{oil}, \dbtext{engine}, \dbtext{train})
 \item (\dbtext{oil}, \dbtext{engine}, \dbtext{vehicle})
 \item (\dbtext{oil}, \dbtext{engine}, \dbtext{motorcycle})
 \item (\dbtext{oil}, \dbtext{engine}, \dbtext{aeroplane})
 \item (\dbtext{oil}, \dbtext{engine}, \dbtext{motor vehicle})
 \item (\dbtext{oil}, \dbtext{engine}, \dbtext{automobile})
 \item (\dbtext{oil}, \dbtext{engine}, \dbtext{transportation vehicle})
\end{itemize}
\fi

The rule (\dbtext{PartOf}, \dbtext{AtLocation}, \dbtext{AtLocation})
is similar to the previous one. However, its success rate
is much smaller, only 1.4\% (with support 78,804, but only 1,112 successes).
 A possible explanation of the discrepancy can be illustrated by the triple
(\dbtext{engine oil}, \dbtext{car}, \dbtext{town}).
It is a failure as the assertion (\dbtext{engine oil}, \dbtext{AtLocation}, \dbtext{town})
is not in the knowledge base. Its validity depends on the
status of (\dbtext{engine oil}, \dbtext{AtLocation}, \dbtext{town}).
This assertion is not to be expected as input from a user (or from a text). On the other
hand, it is reasonable as a factual statement about the world.

Let us elaborate on the difference between the two rules.
For (\dbtext{AtLocation}, \dbtext{PartOf}, \dbtext{AtLocation}), the combined facts that
\dbtext{a} is an appropriate\footnote{By \emph{appropriate} we mean
``makes common sense for users asked to give natural language statements''.}
 left argument for \dbtext{AtLocation},
\dbtext{b} is an appropriate right argument for \dbtext{AtLocation},
and (\dbtext{b}, \dbtext{PartOf}, \dbtext{c}) mean that
if \dbtext{c} is an appropriate right argument for \dbtext{AtLocation}
(like \dbtext{school} but unlike \dbtext{school system}) then the
assertion (\dbtext{a}, \dbtext{AtLocation}, \dbtext{c})
\emph{makes sense both as a factual statement about the world and in terms of natural
language usage}.
By way of contrast, for (\dbtext{PartOf}, \dbtext{AtLocation}, \dbtext{AtLocation}),
things that are appropriate as left arguments for \dbtext{PartOf} are normally not
thought of as appropriate left arguments for \dbtext{AtLocation}; if they do occur
as such a left argument then they occur as being \dbtext{AtLocation} of the thing they are
part of. Thus, in this case (\dbtext{a}, \dbtext{AtLocation}, \dbtext{c}) \emph{may make sense
as a factual statement about the world but not in terms of natural language usage}.
Thus, the observed difference between the success rates of two similar rules points to a \emph{possible mismatch between
natural language usage and intended question answering applications}. This may be an issue to consider for
further knowledge base development.

\if 0
Fails on the following.
\begin{itemize}[noitemsep,leftmargin=8mm,topsep=0.5mm]
 \item (\dbtext{piston}, \dbtext{engine}, \dbtext{plane})
 \item (\dbtext{piston}, \dbtext{engine}, \dbtext{bus})
 \item (\dbtext{piston}, \dbtext{engine}, \dbtext{car})
 \item (\dbtext{piston}, \dbtext{engine}, \dbtext{vehicle})
 \item (\dbtext{piston}, \dbtext{engine}, \dbtext{airport})
 \item (\dbtext{piston}, \dbtext{engine}, \dbtext{garage})
 \item (\dbtext{piston}, \dbtext{engine}, \dbtext{boat})
 \item (\dbtext{piston}, \dbtext{engine}, \dbtext{repair shop})
 \item (\dbtext{piston}, \dbtext{engine}, \dbtext{inside car})
 \item (\dbtext{piston}, \dbtext{engine}, \dbtext{xar})
 \item (\dbtext{piston}, \dbtext{engine}, \dbtext{transportation device})
 \item (\dbtext{love}, \dbtext{religion}, \dbtext{church})
 \item (\dbtext{love}, \dbtext{religion}, \dbtext{synagogue})
 \item (\dbtext{love}, \dbtext{religion}, \dbtext{confessional})
 \item (\dbtext{love}, \dbtext{religion}, \dbtext{cathedral})
\end{itemize}
Moreover, checking for engine oil fails on all instances.
Examples:
\begin{itemize}[noitemsep,leftmargin=8mm,topsep=0.5mm]
 \item (\dbtext{engine oil}, \dbtext{car}, \dbtext{town})
 \item (\dbtext{engine oil}, \dbtext{car}, \dbtext{new jersey})
 \item (\dbtext{engine oil}, \dbtext{car}, \dbtext{street})
 \item (\dbtext{engine oil}, \dbtext{car}, \dbtext{park})
 \item (\dbtext{engine oil}, \dbtext{car}, \dbtext{bridge})
 \item (\dbtext{engine oil}, \dbtext{car}, \dbtext{garage})
 \item (\dbtext{engine oil}, \dbtext{car}, \dbtext{outside})
 \item (\dbtext{engine oil}, \dbtext{car}, \dbtext{motel})
 \item (\dbtext{engine oil}, \dbtext{car}, \dbtext{park lot})
\end{itemize}

\medskip

Succeeds on
(\dbtext{actor}, \dbtext{film}, \dbtext{movie})
but fails on
(\dbtext{actor}, \dbtext{film}, \dbtext{drawer}).

\medskip

Succeeds on
(\dbtext{wife}, \dbtext{family}, \dbtext{house}),
(\dbtext{wife}, \dbtext{family}, \dbtext{home}), and
(\dbtext{wife}, \dbtext{family}, \dbtext{wed}),
but fails on the more general
(\dbtext{wife}, \dbtext{family}, \dbtext{build}),
(\dbtext{wife}, \dbtext{family}, \dbtext{dinner}),
(\dbtext{wife}, \dbtext{family}, \dbtext{relative house}),
(\dbtext{wife}, \dbtext{family}, \dbtext{live room}),
(\dbtext{wife}, \dbtext{family}, \dbtext{supermarket}),
(\dbtext{wife}, \dbtext{family}, \dbtext{apartment}), and
(\dbtext{wife}, \dbtext{family}, \dbtext{funeral}).

\medskip

Generic nonsense:
(\dbtext{state}, \dbtext{country}, \dbtext{war}) where it actually fails,
but still the relation \dbtext{AtLocation} was not used correctly.
\fi

The rule (\dbtext{LocatedNear}, \dbtext{PartOf}, \dbtext{IsA})
does not make much sense 
even if
it has $253$ successes and support $4,252$ (6\% success rate).
Most successes we examined are false or nonsensical.
This is an example of a rule with high success rate but with many successful, invalid triples.
An example is the triple (\dbtext{desk}, \dbtext{classroom}, \dbtext{school}).
The wrong assertion (\dbtext{desk}, \dbtext{IsA}, \dbtext{school}) comes from the sentence \dbtext{Schools have desks} through
the intermediate form \dbtext{Desk is a type of school}. Thus the problem presumably comes
from a programming error and fixing it might eliminate many wrong assertions.
Hence
this in an example where rule mining can be used to correct mistakes.



\newpage
\bibliography{ConceptNet}


\newpage
\appendix

\chapter{Tables and Files in CSV Format}\label{chapter:appendix:conceptnet-db}
Here we have a brief presentation of the CSV files.

\section*{conceptnet\_assertion}
This is the main table of the database. 
Table \ref{tbl:csv:assertion} presents the first two lines.

\begin{table}[p]
\caption{The beginning of conceptnet\_assertion.}\label{tbl:csv:assertion}
\begin{center}
\resizebox{\textwidth}{!}{
\begin{tabular}{|c|c|c|c|c|c|c|c|c|c|c|}\hline
id & language\_id & relation\_id & concept1\_id & concept2\_id & score & frequency\_id & best\_surface1\_id & best surface2\_id & best\_raw\_id & best\_frame\_id \\\hline\hline
2 & en & 6 & 5 & 6 & 1 & 1 & 5 & 6 & 3 & 3 \\\hline
3 & en & 7 & 7 & 8 & 1 & 1 & 7 & 8 & 4 & 4 \\\hline
\vdots & \vdots & \vdots & \vdots & \vdots & \vdots & \vdots & \vdots & \vdots & \vdots & \vdots 
\end{tabular}
}
\end{center}
\end{table}

\section*{conceptnet\_concept}
This table has information related to concepts.
Table \ref{tbl:csv:concept} presents the first two lines.

\begin{table}[p]
\caption{The beginning of conceptnet\_concept.}\label{tbl:csv:concept}
\begin{center}
\begin{tabular}{|c|c|c|c|c|c|}\hline
id & language\_id & text & num\_assertions & words & visible \\\hline\hline
5 & en & something & 2887 & 1 & 1 \\\hline
6 & en & to & 71 & 1 & 1 \\\hline
\vdots & \vdots & \vdots & \vdots & \vdots & \vdots 
\end{tabular}
\end{center}
\end{table}

\section*{conceptnet\_relation}
This table describes the relations that are used to form assertions.
Table \ref{tbl:csv:relation} presents the first two lines.

\begin{table}[p]
\caption{The beginning of conceptnet\_relation.}\label{tbl:csv:relation}
\begin{center}
\begin{tabular}{|c|c|c|}\hline
id & name & description \\\hline\hline
1 & HasFirstSubevent & What do you do first to accomplish it? \\\hline
2 & HasLastSubevent & What do you do last to accomplish it? \\\hline
\vdots & \vdots & \vdots 
\end{tabular}
\end{center}
\end{table}

\section*{nl\_frequency}
Table \ref{tbl:csv:frequency} describes the frequencies that are used in order to classify
the extent to which a relation holds between two concepts in the assertions.
It ranges from \emph{never} (\emph{polarity} is $-1$) to \emph{always} (\emph{polarity} is $+1$).

\begin{table}
\caption{The beginning of nl\_frequency.}\label{tbl:csv:frequency}
\begin{center}
\begin{tabular}{|c|c|c|c|}\hline
id & language\_id & text & value \\\hline\hline
 1 & en &       & 5 \\\hline
 2 & en & often & 6 \\\hline
\vdots & \vdots & \vdots & \vdots
\end{tabular}
\end{center}
\end{table}

\section*{conceptnet\_frame}
This table has information related to frames.
Table \ref{tbl:csv:frame} presents the first two lines.

\begin{table}[p]
\caption{The beginning of conceptnet\_frame.}\label{tbl:csv:frame}
\begin{center}
\resizebox{\textwidth}{!}{
\begin{tabular}{|c|c|c|c|c|c|c|c|c|}\hline
 id & language\_id & text & relation\_id & goodness & frequency\_id & question\_yn & question1 & question2 \\\hline\hline
3 & en & Somewhere \{1\} can be is next \{2\} & 6 & 1 & 1 & & & \\\hline
4 & en & You can use \{1\} to \{2\} & 7 & 2 & 1 & & & \\\hline
\vdots & \vdots & \vdots & \vdots & \vdots & \vdots &  &  & 
\end{tabular}
}
\end{center}
\end{table}

\section*{conceptnet\_surfaceform}
This table has the information related to surface forms.
Table \ref{tbl:csv:surfaceform} presents the first two lines.

\begin{table}[p]
\caption{The beginning of conceptnet\_surfaceform.}\label{tbl:csv:surfaceform}
\begin{center}
\begin{tabular}{|c|c|c|c|c|c|}\hline
id & language\_id & concept\_id & text & residue & use\_count \\\hline\hline
5 & en & 5 & something & 1ly & 3979 \\\hline
6 & en & 6 & to & 1 & 59 \\\hline
\vdots & \vdots & \vdots & \vdots & \vdots & \vdots 
\end{tabular}
\end{center}
\end{table}

\section*{conceptnet\_rawassertion}
This table has information related to raw assertions.
Table \ref{tbl:csv:rawassertion} presents the first two lines.

\begin{table}[ht]
\caption{The beginning of conceptnet\_rawassertion.}\label{tbl:csv:rawassertion}
\begin{center}
\resizebox{\textwidth}{!}{
\begin{tabular}{|c|c|c|c|c|c|c|c|c|c|c|c|}\hline
id & created & updated & sentence\_id & assertion\_id & creator\_id & surface1\_id & surface2\_id & frame\_id & batch\_id & language\_id & score \\\hline\hline
3 & 2009\ldots & 2009\ldots & 715991 & 2 & 997 & 5 & 6 & 3 & & en & 1 \\\hline
4 & 2009\ldots & 2009\ldots & 715993 & 3 & 992 & 7 & 8 & 4 & & en & 1 \\\hline
\vdots & \vdots & \vdots & \vdots & \vdots & \vdots & \vdots & \vdots & \vdots &  & \vdots & \vdots 
\end{tabular}
}
\end{center}
\end{table}

\section*{corpus\_sentence}
This table essentially has the actual sentence to which a raw assertion points to.
Table \ref{tbl:csv:sentence} presents the first two lines.

\begin{table}[ht]
\caption{The beginning of corpus\_sentence.}\label{tbl:csv:sentence}
\begin{center}
\resizebox{\textwidth}{!}{
\begin{tabular}{|c|c|c|c|c|c|c|}\hline
id & text & creator\_id & created\_on & language\_id & activity\_id & score \\\hline\hline
715991 & Somewhere something can be is next to & 997 & 2006\ldots & en & 27 & 1 \\\hline
715992 & picture description: an old house made of brick & 1002 & 2006\ldots & en & 27 & 1 \\\hline
\vdots & \vdots & \vdots & \vdots & \vdots & \vdots & \vdots 
\end{tabular}
}
\end{center}
\end{table}

\suppressfloats[t]

\section{Database Entries: Tables with Relations and Frequencies}\label{sec:appendix:db-related}
\paragraph{Relations.}
\conceptnet has $30$ relations; $27$ appear among the assertions in the English language.
Table \ref{tbl:relations} gives an overview of all the relations found in \conceptnet.

\begin{table}[ht]
\caption{The relations that we can find in \conceptnet. Note that three of them do not appear among assertions in the English language
and in these cases the index assigned to them is \xmark.}\label{tbl:relations}
\begin{center}
\begin{tabular}{|c||r|l|l|}\hline
index & id & \multicolumn{1}{c|}{name} & \multicolumn{1}{c|}{description} \\\hline
  0 &  1 & HasFirstSubevent      & What do you do first to accomplish it? \\\hline
  1 &  2 & HasLastSubevent       & What do you do last to accomplish it?  \\\hline
  2 &  3 & HasPrerequisite       & What do you need to do first?          \\\hline
  3 &  4 & MadeOf                & What is it made of? \\\hline
  4 &  5 & IsA                   & What kind of thing is it? \\\hline
  5 &  6 & AtLocation            & Where would you find it? \\\hline
  6 &  7 & UsedFor               & What do you use it for? \\\hline
  7 &  8 & CapableOf             & What can it do? \\\hline
  8 &  9 & MotivatedByGoal       & Why would you do it? \\\hline
  9 & 10 & Desires               & What does it want? \\\hline
\xmark & 11 & [deprecated 1]        &  \\\hline
 10 & 12 & ConceptuallyRelatedTo &  \\\hline
 11 & 13 & DefinedAs             & How do you define it? \\\hline
 12 & 14 & InstanceOf            & *What type of thing is it a specific example of? \\\hline
 13 & 15 & SymbolOf              &  \\\hline
 14 & 16 & HasA                  &  \\\hline
 15 & 17 & CausesDesire          & What does it make you want to do? \\\hline
 16 & 18 & Causes                & What does it make happen? \\\hline
 17 & 19 & HasSubevent           & What do you do to accomplish it? \\\hline
 18 & 20 & HasProperty           & What properties does it have? \\\hline
 19 & 21 & PartOf                & What is it part of? \\\hline
 20 & 22 & ReceivesAction        & What can you do to it? \\\hline
\xmark & 23 & ObstructedBy          &  \\\hline
 21 & 24 & InheritsFrom          &  \\\hline
 22 & 25 & CreatedBy             & How do you bring it into existence? \\\hline
\xmark & 26 & Translation           &  \\\hline
 23 & 28 & HasPainCharacter      & *What is the character of pain associated with it? \\\hline
 24 & 29 & HasPainIntensity      & *What is the intensity of pain associated with it? \\\hline
 25 & 30 & LocatedNear           &  \\\hline
 26 & 31 & SimilarSize           &  \\\hline
\end{tabular}
\end{center}
\end{table}

\paragraph{Frequencies.}
Table \ref{tbl:different-frequencies} presents the different frequencies that we
can encounter in \conceptnet in the assertions of the English language.

\begin{table}[ht]
\caption{The different frequencies that we can encounter in the table \texttt{nl\_frequency}
in the English language.}\label{tbl:different-frequencies}
\begin{center}
\begin{tabular}{|r||r|l|r|}\hline
\multicolumn{1}{|c||}{index} & \multicolumn{1}{|c|}{id} & \multicolumn{1}{c|}{text} & \multicolumn{1}{c|}{value} \\\hline\hline
 0 &    1 &               &   5 \\\hline
 1 &    2 &         often &   7 \\\hline
 2 &    3 & (UNSPECIFIED) &   5 \\\hline
 3 &   11 &        always &  10 \\\hline
 4 &   21 &         never & -10 \\\hline
 5 &   22 &        rarely &  -2 \\\hline
 6 &   25 &           not &  -5 \\\hline
 7 & 1209 &     sometimes &   4 \\\hline
 8 & 1215 &       usually &   8 \\\hline
 9 & 1368 &  occasionally &   2 \\\hline
10 & 1403 & almost always &   9 \\\hline
\end{tabular}
\end{center}
\end{table}

\chapter{Derived Input Files}
In this section we present the format and properties of the input
files that we are going to use.

\section{Special Indices}
Throughout the files we have reserved two special \emph{indices} that may appear
in the input files. These are the following:
\begin{description}
 \item [Null Index = -1.] This index may appear in fields where the relevant field in the relevant table of \conceptnet 
had a null entry (i.e.~the empty string was the actual input).

 \item [Undefined Index = -2.] This index is useful when an entry in a field refers to an object
that does \emph{not} actually appear in the appropriate table for that object. 
In other words, an index equal to $-2$ in a specific field implies that the 
index found originally for that field in the \conceptnet database
was pointing to an object that did not actually exist in the 
database \footnote{Actually, in the original tables of \conceptnet we find IDs instead of indices, but this
is a more convenient description.}.

%
\end{description}

\section{Files with the Tables of the Database}
In this part we describe the tables that we derived from the original tables of the 
\conceptnet database.

\subsection*{Assertions}
Table \ref{tbl:input:table:assertions} presents the format of each line in the 
file describing the assertions.
Each line is composed of $14$ integers separated by a space.
Each line ends with a new line character \texttt{'\textbackslash n'} right after the last integer.
There are four indicators per assertion; in Table \ref{tbl:input:table:assertions} we have
compressed them in one entry (the last entry) for clarity of presentation.
These four indicators are, in that order, the \emph{frame indicator}, the \emph{surface form indicator},
the \emph{raw assertion indicator}, and the \emph{score indicator}.

\begin{table}[ht]
\caption{The format of each line in the file describing the assertions.
All the entries are integers separated by a space. 
Each line ends with a new line character \texttt{'\textbackslash n'} right after the last integer.
There are four indicators per assertion; the frame indicator, the surface form indicator, the raw assertion indicator, 
and the score indicator.}\label{tbl:input:table:assertions}
\begin{center}
\resizebox{\textwidth}{!}{
\begin{tabular}{|c|c|c|c|c|c|c|c|c|c|c|}\hline
\multirow{2}{*}{id} & concept 1 & concept 2 & relation & frequency & best frame & best surface 1 & best surface 2 & best raw assertion & \multirow{2}{*}{score} & \multirow{2}{*}{indicators}\\
                    &  index    &  index    &  index   &  index    & index      &   index        &  index         & index              &       & \\\hline
\end{tabular}
}
\end{center}
\end{table}

The first 2 lines of the file are shown below:
\begin{code}
\$ head -n 2 inputFiles/assertions/ConceptNet4Assertions.txt \\
2 0 1 5 0 0 0 1 0 1 0 0 0 0\\
3 2 3 6 0 1 2 3 1 1 0 0 0 0\\
\$
\end{code}

\paragraph{Number of Lines.} The file has $566094$ lines.

\paragraph{Permissible Values for each Field.}
The permissible values are described below.
\begin{description} 
 \item [id:] The ID of each assertion in the original \conceptnet database. There are $566094$
different values from the set $\{2, 3, \ldots, 898685\}$. This is the only set where not all integers are covered.
 \item [concept 1 index, concept 2 index:] There are $279497$ different concept IDs that appear in the assertions.
Hence, the values come from the set $\{0, 1, \ldots, 279496\}$.
 \item [relation index:] Integers from the set $\{0, 1, \ldots, 26\}$. 
 \item [frequency index:] Integers from the set $\{0, 1, \ldots, 10\}$.
 \item [best frame\ index:] Integers from the set $\{-1\}\cup\{0, 1, \ldots, 2752\}$.
Note that the frame can be null.
 \item [best surface 1 index, best surface 2 index:] Integers from the set $\{-1\}\cup\{0, 1, \ldots, 375589\}$.
Note that a surface form can be null.
 \item [best raw assertion index:] Integers from the set $\{-2, -1\}\cup\{0, 1, \ldots, 525179\}$.
Note that the raw assertion can be null \emph{or undefined}.
\item [score:] Integers from the set $\{-10, -9, \ldots, 147\}$.
\item [frame indicator:] See Table \ref{tbl:frames:indicator}.
\item [surface form indicator:] See Table \ref{tbl:surfaceforms:indicator}.
\item [raw assertion indicator:] See Table \ref{tbl:rawassertions:indicator-distribution}.
\item [score indicator:] Integers from the set $\{0, 1, \ldots, 9\}$.
See Table \ref{tbl:scores:discrepancy-classes}.
\end{description}

\subsection*{Concepts}
Table \ref{tbl:input:table:concepts} presents the format of each line in the 
file describing the concepts.
Each line is composed of one integer and a text description of the concept.
These two are separated by a space. Hence, once we read an integer and a space, whatever remains
until a new line character \texttt{'\textbackslash n'} is encountered is the text description of 
the particular concept.

\begin{table}[ht]
\caption{The format of each line in the file describing the concepts.
Each line has two entries; a number and a string describing the concept. 
Each line ends with a new line character \texttt{'\textbackslash n'} right at the end of the string
describing the concept.}\label{tbl:input:table:concepts}
\begin{center}
\begin{tabular}{|c|c|}\hline
id & text\\\hline
\end{tabular}
\end{center}
\end{table}

The first 2 lines of the file are shown below:
\begin{code}
\$ head -n 2 inputFiles/concepts/ConceptNet4Concepts.txt\\
5 something\\
6 to\\
\$
\end{code}

\paragraph{Number of Lines.} The file has $279885$ lines.

\paragraph{Permissible Values for each Field.}
\begin{description}
 \item [id:] Integers from the set $\{5, 6, \ldots, 482783\}$.
 \item [text:] Longest string length is $204$ characters (ID $211344$).
\end{description}

\subsection*{Relations}
Table \ref{tbl:input:table:relations} presents the format of each line in the 
file describing the relations.
Each line is composed of one integer and two strings describing the relation.
Each field is separated by a space. 
Note that this does not leave any ambiguity among the strings, since the field \texttt{name}
is a single word. Each line ends with a new line character \texttt{'\textbackslash n'}.

\begin{table}[ht]
\caption{The format of each line in the file describing the relations.
Each line has three entries; a number and two strings.
The first of the two strings (\texttt{name}) is a single word and the second
string (\texttt{description}) is a more detailed description of the relation. 
Each line ends with a new line character \texttt{'\textbackslash n'} 
right at the end of the second string.}\label{tbl:input:table:relations}
\begin{center}
\begin{tabular}{|c|c|c|}\hline
id & name & description\\\hline
\end{tabular}
\end{center}
\end{table}

The first 2 lines of the file are shown below:
\begin{code}
\$ head -n 2 inputFiles/relations/ConceptNet4Relations.txt\\
1 HasFirstSubevent What do you do first to accomplish it?\\
2 HasLastSubevent What do you do last to accomplish it?\\
\$
\end{code}

\paragraph{Number of Lines.} The file has $27$ lines.

\paragraph{Permissible Values for each Field.}
\begin{description}
 \item [id:] Integers from the set $\{1, 2, \ldots, 31\}$.
 \item [name:] Longest string length is $21$ characters (ID $12$).
 \item [description:] Longest string length is $50$ characters (ID $28$).
\end{description}

\subsection*{Frequencies}
Table \ref{tbl:input:table:frequencies} presents the format of each line in the 
file describing the frequencies.
Each line is composed of two integers and one string describing the frequency.
Each field is separated by a space. 
Note that this does not leave any ambiguity among the strings, since the field \texttt{name}
is a single word. Each line ends with a new line character \texttt{'\textbackslash n'}.

\begin{table}[ht]
\caption{The format of each line in the file describing the frequencies.
Each line has three entries; two numbers and one string. 
Each line ends with a new line character \texttt{'\textbackslash n'}.}\label{tbl:input:table:frequencies}
\begin{center}
\begin{tabular}{|c|c|c|}\hline
id & value & text\\\hline
\end{tabular}
\end{center}
\end{table}

The first 2 lines of the file are shown below:
\begin{code}
\$ head -n 2 inputFiles/frequencies/ConceptNet4Frequencies.txt \\
1 5 \\
2 7 often\\
\$
\end{code}

\paragraph{Number of Lines.} The file has $11$ lines.

\paragraph{Permissible Values for each Field.}
\begin{description}
 \item [id:] Integers from the set $\{1, 2, \ldots, 1403\}$.
 \item [value:] Integers from the set $\{-10, -9, \ldots, 10\}$.
 \item [text:] Longest string length is $13$ characters (ID $3$).
\end{description}

\subsection*{Frames}
Table \ref{tbl:input:table:frames} presents the format of each line in the 
file describing the frames.
Each line is composed of three integers and one string describing the frame.
Each field is separated by a space. 
Each line ends with a new line character \texttt{'\textbackslash n'}.

\begin{table}[ht]
\caption{The format of each line in the file describing the frames.
Each line has four entries; three numbers and one string. 
Each line ends with a new line character \texttt{'\textbackslash n'}.}\label{tbl:input:table:frames}
\begin{center}
\begin{tabular}{|c|c|c|c|}\hline
id & relation index & frequency index & text\\\hline
\end{tabular}
\end{center}
\end{table}

The first 2 lines of the file are shown below:
\begin{code}
\$ head -n 2 inputFiles/frames/ConceptNet4Frames.txt \\
3 5 0 Somewhere \{1\} can be is next \{2\}\\
4 6 0 You can use \{1\} to \{2\}\\
\$
\end{code}

\paragraph{Number of Lines.} The file has $2753$ lines.

\paragraph{Permissible Values for each Field.}
\begin{description}
 \item [id:] Integers from the set $\{3, 4, \ldots, 3831\}$.
 \item [relation index:] Integers from the set $\{0, 1, \ldots, 26\}$.
 \item [frequency index:] Integers from the set $\{0, 1, \ldots, 10\}$.
 \item [text:] Longest string length is $131$ characters (ID $2788$).
\end{description}

\subsection*{Surface Forms}
Table \ref{tbl:input:table:surfaceforms} presents the format of each line in the 
file describing the surface forms.
Each line is composed of two integers and one string describing the surface form.
Each field is separated by a space. 
Each line ends with a new line character \texttt{'\textbackslash n'}.

\begin{table}[ht]
\caption{The format of each line in the file describing the surface forms.
Each line has three entries; two numbers and one string. 
Each line ends with a new line character \texttt{'\textbackslash n'}.}\label{tbl:input:table:surfaceforms}
\begin{center}
\begin{tabular}{|c|c|c|}\hline
id & concept index & text\\\hline
\end{tabular}
\end{center}
\end{table}

The first 2 lines of the file are shown below:
\begin{code}
\$ head -n 2 inputFiles/surfaceForms/ConceptNet4SurfaceForms.txt \\
5 0 something\\
6 1 to\\
\$
\end{code}

\paragraph{Number of Lines.} The file has $375590$ lines.

\paragraph{Permissible Values for each Field.}
\begin{description}
 \item [id:] Integers from the set $\{5, 6, \ldots, 580314\}$.
 \item [concept index:] Integers from the set $\{0, 1, \ldots, 279884\}$.
 \item [text:] Longest string length is $255$ characters (IDs $286820$).
\end{description}

\subsection*{Raw Assertions}
Table \ref{tbl:input:table:rawassertions} presents the format of each line in the 
file describing the raw assertions.
Each line is composed of seven integers separated by a space.
Each line ends with a new line character \texttt{'\textbackslash n'}.

\begin{table}[ht]
\caption{The format of each line in the file describing the raw assertions.
Each line has seven integers separated by a space.
Each line ends with a new line character \texttt{'\textbackslash n'}.}\label{tbl:input:table:rawassertions}
\begin{center}
\begin{tabular}{|c|c|c|c|c|c|c|}\hline
id & sentence index & assertion index & surface 1 index & surface 2 index & frame index & score \\\hline
\end{tabular}
\end{center}
\end{table}

The first two lines of the file are shown below:
\begin{code}
\$ head -n 2 inputFiles/rawAssertions/ConceptNet4RawAssertions.txt \\
3 0 0 0 1 0 1\\
4 1 1 2 3 1 1\\
\$
\end{code}

\paragraph{Number of Lines.} The file has $525180$ lines.

\paragraph{Permissible Values for each Field.}
\begin{description}
 \item [id:] Integers from the set $\{3, 4, \ldots, 1277256\}$.
 \item [sentence index:] Integers from the set $\{0, 2, \ldots, 525170\}$.
 \item [assertion index:] Integers from the set $\{0, 1, \ldots, 566093\}$.
 \item [surface 1 index, surface 2 index:] Integers from the set $\{0, 1, \ldots, 375589\}$.
 \item [frame index:] Integers from the set $\{0, 1, \ldots, 2752\}$.
 \item [score:] Integers from the set $\{-10, -9, \ldots 124\}$.
\end{description}

\subsection*{Sentences}
Table \ref{tbl:input:table:sentences} presents the format of each line in the 
file describing the sentences.
Each line is composed of three integers and one string describing the frame.
Each field is separated by a space. 
Each line ends with a new line character \texttt{'\textbackslash n'}.

\begin{table}[ht]
\caption{The format of each line in the file describing the sentences.
Each line has three entries; two numbers and one string which is the actual sentence. 
Each line ends with a new line character \texttt{'\textbackslash n'}.}\label{tbl:input:table:sentences}
\begin{center}
\begin{tabular}{|c|c|c|}\hline
id & score & text\\\hline
\end{tabular}
\end{center}
\end{table}

The last two lines of the file are shown below:
\begin{code}
\$ tail -n 2 inputFiles/sentences/ConceptNet4Sentences.txt \\
2608286 1 A cloned animal is made of D.N.A.\\
2608290 1 An U.F.O, is made of alien material.\\
\$
\end{code}

\paragraph{Number of Lines.} The file has $525171$ lines.

\paragraph{Permissible Values for each Field.}
\begin{description}
 \item [id:] Integers from the set $\{715991, 715992, \ldots, 2608290\}$.
 \item [score:] Integers from the set $\{-10, -9, \ldots, 48\}$.
 \item [text:] The length of the largest string is $1216$ characters (ID $1023955$).
\end{description}

\section{Mapping From ConceptNet 4}\label{sec:derived:map-from}
Here we describe the structure of the files that map IDs for various objects from \conceptnet
to the IDs that we use for input. All the files have one integer per line.
These integers refer to the \emph{indices} in the appropriate table where the objects can be found.
Hence, valid indices are non-negative integers. However, we may encounter either a null index ($-1$),
or an undefined index ($-2$). Null indices indicate that there was no object with such an ID in the 
original \conceptnet database. On the other hand, undefined indices indicate that there was an object
with such an ID in the original \conceptnet database, but it turns out that this object does not appear
in the closure of the input defined by the assertions of the English language.

\subsection*{Map Assertion IDs From ConceptNet 4}
We can see the first 10 lines below:
\begin{code}
\$ head -n 10 inputFiles/assertions/MapAssertionIDsFromConceptNet4.txt \\
-1\\
-1\\
0\\
1\\
-1\\
-1\\
2\\
3\\
4\\
5\\
\$
\end{code}
Hence, there are no assertions with IDs $0, 1, 4$ and $5$ in the \conceptnet database
when we restrict the search in the set $\{0, 1, \ldots, 9\}$.
On the other hand assertion with ID $2$ appears in index $0$ of the table of assertions,
assertion with ID $3$ appears in index $1$ in the table of assertions, and so on.

\paragraph{Number of Lines.} The file has $898686$ lines.

\subsection*{Map Concept IDs From ConceptNet 4}
We can see the first 10 lines below:
\begin{code}
\$ head -n 10 inputFiles/concepts/MapConceptIDsFromConceptNet4.txt \\
-1\\
-1\\
-1\\
-1\\
-1\\
0\\
1\\
2\\
3\\
4\\
\$
\end{code}
Hence, there are no concepts with IDs $0, 1, \ldots, 4$ in the \conceptnet database
when we restrict the search in the set $\{0, 1, \ldots, 9\}$.
On the other hand concept with ID $5$ appears in index $0$ of the table of concepts,
concept with ID $6$ appears in index $1$ in the table of concepts, and so on.

\paragraph{Number of Lines.} The file has $482784$ lines.

\subsection*{Map Relation IDs From ConceptNet 4}
We can see the first 10 lines below:
\begin{code}
\$ head -n 10 inputFiles/relations/MapRelationIDsFromConceptNet4.txt \\
-1\\
0\\
1\\
2\\
3\\
4\\
5\\
6\\
7\\
8\\
\$
\end{code}
Hence, there is no relation with ID $0$ in the \conceptnet database
when we restrict the search in the set $\{0, 1, \ldots, 9\}$.
On the other hand relation with ID $1$ appears in index $0$ of the table of relations,
relation with ID $2$ appears in index $1$ in the table of relations, and so on.

\paragraph{Number of Lines.} The file has $32$ lines.

\subsection*{Map Frequency IDs From ConceptNet 4}
We can see the first 10 lines below:
\begin{code}
\$ head -n 10 inputFiles/frequencies/MapFrequencyIDsFromConceptNet4.txt \\
-1\\
0\\
1\\
2\\
-1\\
-1\\
-1\\
-1\\
-1\\
-1\\
\$
\end{code}
Hence, 
when we restrict the search in the set $\{0, 1, \ldots, 9\}$,
only the IDs $1$, $2$, $3$ actually appear in the original \conceptnet database
and these are mapped respectively to indices $0, 1$, and $2$.
All the other frequency IDs are invalid ($-1$) in that region.

\paragraph{Number of Lines.} The file has $1404$ lines.

\subsection*{Map Frame IDs From ConceptNet 4}
We can see the first 10 lines below:
\begin{code}
\$ head -n 10 inputFiles/frames/MapFrameIDsFromConceptNet4.txt \\
-1\\
-1\\
-1\\
0\\
1\\
2\\
3\\
4\\
5\\
6\\
\$
\end{code}
Hence, there are no frames with IDs $0, 1$, and $2$ in the \conceptnet database
when we restrict the search in the set $\{0, 1, \ldots, 9\}$.
On the other hand frame with ID $3$ appears in index $0$ of the table of frames,
frame with ID $4$ appears in index $1$ in the table of frames, and so on.

\paragraph{Number of Lines.} The file has $3832$ lines.

\subsection*{Map Surface Form IDs From ConceptNet 4}
We can see the first 10 lines below:
\begin{code}
\$ head -n 10 inputFiles/surfaceForms/MapSurfaceFormIDsFromConceptNet4.txt \\
-1\\
-1\\
-1\\
-1\\
-1\\
0\\
1\\
2\\
3\\
4\\
\$ 
\end{code}
Hence, there are no surface forms with IDs $0, 1, \ldots, 3$, and $4$ in the \conceptnet database
when we restrict the search in the set $\{0, 1, \ldots, 9\}$.
On the other hand surface form with ID $5$ appears in index $0$ of the table of surface forms,
surface form with ID $6$ appears in index $1$ in the table of surface forms, and so on.

\paragraph{Number of Lines.} The file has $580315$ lines.

\subsection*{Map Raw Assertion IDs From ConceptNet 4}
We can see the first 10 lines below:
\begin{code}
\$ head -n 10 inputFiles/rawAssertions/MapRawAssertionIDsFromConceptNet4.txt \\
-1\\
-1\\
-1\\
0\\
1\\
-1\\
-1\\
2\\
3\\
4\\
\$
\end{code}
Hence, there are no raw assertions with IDs $0, 1, 2, 5$, and $6$ in the \conceptnet database
when we restrict the search in the set $\{0, 1, \ldots, 9\}$.
On the other hand raw assertion with ID $3$ appears in index $0$ of the table of raw assertions,
raw assertion with ID $4$ appears in index $1$ in the table of raw assertions, and so on.

\paragraph{Number of Lines.} The file has $1277257$ lines.

\subsection*{Map Sentence IDs From ConceptNet 4}
We can see the last 10 lines below:
\begin{code}
\$ tail -n 10 inputFiles/sentences/MapSentenceIDsFromConceptNet4.txt \\
525164\\
525165\\
525166\\
525167\\
525168\\
525169\\
-2\\
-2\\
-2\\
525170\\
\$ 
\end{code}
Hence, \emph{there are} sentences with IDs $2608287, 2608288, 2608289$ in the \conceptnet database
when we restrict the search in the set $\{2608281, 2608282, \ldots, 2608290\}$.
However, it turns out that these sentences are \emph{not} referenced by any raw assertion which appears 
in the database. Recall that we include only those raw assertions that appear as \emph{best}
raw assertions for at least one assertion in the English language.

On the other hand sentences with IDs $2608281, 2608282, \ldots, 2608286$ appear in 
indices $525164, 525165, \ldots, 525169$ of the table of sentences.
Moreover, sentence with ID $2608290$ is mapped to index $525170$ in the table of sentences.

\paragraph{Number of Lines.} The file has $2608291$ lines.

\section{Mapping To ConceptNet 4}\label{sec:derived:map-to}
No file is needed for this direction. Every data structure has an entry \texttt{id} that stores
the integer of the actual ID for that particular object found in \conceptnet.

\section{Lists of Edges: Directed and Undirected Multigraph}\label{sec:derived:edgelists:multigraph}
The same file is used both for the directed multigraph as well as the undirected multigraph.
Table \ref{tbl:input:edges:multigraph} shows the structure of the file.

\begin{table}[ht]
\caption{The structure of the file with the edges in the case of the directed and undirected multigraph.}\label{tbl:input:edges:multigraph}
\begin{center}
\begin{tabular}{|c|c||c|}\hline
concept 1 & concept 2 & assertion \\
index     & index     & index     \\\hline
\end{tabular}
\end{center}
\end{table}

The first ten lines of the file are shown below.
\begin{code}
\$ head -n 10 inputFiles/edges/ConceptNet4EdgesDM.txt \\
0 1 0\\
2 3 1\\
7 3 2\\
8 9 3\\
10 3 4\\
11 203359 5\\
12 13 6\\
100569 15 7\\
46006 20 8\\
22 203360 9\\
\$
\end{code}
Hence, the first edge is between concepts with indices (\emph{not IDs}) $0$ and $1$ and the assertion
justifying that edge has index $0$ (the very first one). The second edge is between concepts
with indices (\emph{not IDs}) $2$ and $3$ and the assertion justifying that edge has index $1$,
and so on for the rest.

\paragraph{Number of Lines.} The file has $566094$ lines.

\section{Lists of Edges: Directed Graph}\label{sec:derived:edgelists:directed-graph}
Table \ref{tbl:input:edges:directed-graph} shows the structure 
of the file with the edges of the directed graph.

\begin{table}[ht]
\caption{The structure of the file with the edges in the case of the directed graph.}\label{tbl:input:edges:directed-graph}
\begin{center}
\begin{tabular}{|c|c||c||c|c|c|c|c|}\hline
concept 1 & concept 2 & number of  & assertion 1 & assertion 2 & \multirow{2}{*}{$\cdots$} & assertion N \\
index     & index     & assertions & index       & index       &                           &  index     \\\hline
\end{tabular}
\end{center}
\end{table}

The first ten lines of the file are shown below.
\begin{code}
\$ head -n 10 inputFiles/edges/ConceptNet4EdgesDG.txt \\
0 0 2 102882 691\\
0 1 1 0\\
0 3 1 176972\\
0 4 1 14259\\
0 6 1 42755\\
0 7 1 31529\\
0 11 1 29344\\
0 13 1 161947\\
0 14 1 144144\\
0 15 1 35915\\
\$
\end{code}
Hence, the first edge is a self loop for the concept with index (\emph{not ID!}) $0$ and there are two 
assertions justifying that loop; those with indices $102882$ and $691$.
The second edge is an edge between the concepts with indices (\emph{again, not IDs!}) $0$ and $1$,
and there is one assertion justifying that edge which has index $0$. Similarly for the rest.

\paragraph{Number of Lines.} The file has $478929$ lines.

\section{Lists of Edges: Undirected Graph}\label{sec:derived:edgelists:undirected-graph}
Table \ref{tbl:input:edges:undirected-graph} shows the structure 
of the file with the edges of the directed graph.

\begin{table}[ht]
\caption{The structure of the file with the edges in the case of the undirected graph.}\label{tbl:input:edges:undirected-graph}
\begin{center}
\begin{tabular}{|c|c||c||c|c|c|c|c|}\hline
concept 1 & concept 2 & number of  & assertion 1 & assertion 2 & \multirow{2}{*}{$\cdots$} & assertion N \\
index     & index     & assertions & index       & index       &                           &  index     \\\hline
\end{tabular}
\end{center}
\end{table}

The first ten lines of the file are shown below.
\begin{code}
\$ head -n 10 inputFiles/edges/ConceptNet4EdgesUG.txt \\
0 0 2 102882 691\\
0 1 1 0\\
0 3 1 176972\\
0 4 5 14259 338737 174978 192462 156888\\
0 6 1 42755\\
0 7 1 31529\\
0 11 1 29344\\
0 13 1 161947\\
0 14 1 144144\\
0 15 1 35915\\
\$
\end{code}
Hence, the first edge is a self loop for the concept with index (\emph{not ID!}) $0$ and there are two 
assertions justifying that loop; those with indices $102882$ and $691$.
The second edge is an edge between the concepts with indices (\emph{again, not IDs!}) $0$ and $1$,
and there is one assertion justifying that edge which has index $0$. Similarly for the rest.
Note that this time the fourth edge (i.e.~has index $3$) 
between concepts with indices $0$ and $4$ 
is justified by $5$ assertions as opposed to the $1$ assertion shown in Table \ref{tbl:input:edges:directed-graph}.
The reason is that for the undirected edges we can not distinguish between the source and the destination of the edge
as there is no orientation on the edge. Hence, when we write down the undirected edges, the convention that we follow
is that we write first the node (concept) with smallest index, and second the node (concept) with largest index.

\paragraph{Number of Lines.} The file has $465072$ lines.

\chapter{Directory Structure, Timestamps and File Sizes}
Below we see the directory structure, creation time and date for each file, and finally the size of each file.
\begin{verbatim}
$ ls -lR inputFiles/
total 0
drwxr-xr-x  2 user  staff  136 Oct 24 12:13 assertions/
drwxr-xr-x  2 user  staff  136 Oct 24 12:13 concepts/
drwxr-xr-x  2 user  staff  170 Oct 24 12:13 edges/
drwxr-xr-x  2 user  staff  136 Oct 24 12:13 frames/
drwxr-xr-x  2 user  staff  136 Oct 24 12:13 frequencies/
drwxr-xr-x  2 user  staff  136 Oct 24 12:13 rawAssertions/
drwxr-xr-x  2 user  staff  136 Oct 24 12:13 relations/
drwxr-xr-x  2 user  staff  136 Oct 24 12:13 sentences/
drwxr-xr-x  2 user  staff  136 Oct 24 12:13 surfaceForms/

inputFiles//assertions:
total 69744
-rw-r--r--  1 user  staff  30858317 Oct 24 12:13 ConceptNet4Assertions.txt
-rw-r--r--  1 user  staff   4849324 Oct 24 12:13 MapAssertionIDsFromConceptNet4.txt

inputFiles//concepts:
total 17048
-rw-r--r--  1 user  staff  6270698 Oct 24 12:13 ConceptNet4Concepts.txt
-rw-r--r--  1 user  staff  2456782 Oct 24 12:13 MapConceptIDsFromConceptNet4.txt

inputFiles//edges:
total 59432
-rw-r--r--  1 user  staff  10200385 Oct 24 12:13 ConceptNet4EdgesDG.txt
-rw-r--r--  1 user  staff  10188521 Oct 24 12:13 ConceptNet4EdgesDM.txt
-rw-r--r--  1 user  staff  10031293 Oct 24 12:13 ConceptNet4EdgesUG.txt

inputFiles//frames:
total 200
-rw-r--r--  1 user  staff  85312 Oct 24 12:13 ConceptNet4Frames.txt
-rw-r--r--  1 user  staff  15892 Oct 24 12:13 MapFrameIDsFromConceptNet4.txt

inputFiles//frequencies:
total 24
-rw-r--r--  1 user  staff   155 Oct 24 12:13 ConceptNet4Frequencies.txt
-rw-r--r--  1 user  staff  4202 Oct 24 12:13 MapFrequencyIDsFromConceptNet4.txt

inputFiles//rawAssertions:
total 50208
-rw-r--r--  1 user  staff  19879266 Oct 24 12:13 ConceptNet4RawAssertions.txt
-rw-r--r--  1 user  staff   5821381 Oct 24 12:13 MapRawAssertionIDsFromConceptNet4.txt

inputFiles//relations:
total 16
-rw-r--r--  1 user  staff  1028 Oct 24 12:13 ConceptNet4Relations.txt
-rw-r--r--  1 user  staff    86 Oct 24 12:13 MapRelationIDsFromConceptNet4.txt

inputFiles//sentences:
total 69992
-rw-r--r--  1 user  staff  26015887 Oct 24 12:13 ConceptNet4Sentences.txt
-rw-r--r--  1 user  staff   9814447 Oct 24 12:13 MapSentenceIDsFromConceptNet4.txt

inputFiles//surfaceForms:
total 29112
-rw-r--r--  1 user  staff  11768624 Oct 24 12:13 ConceptNet4SurfaceForms.txt
-rw-r--r--  1 user  staff   3132195 Oct 24 12:13 MapSurfaceFormIDsFromConceptNet4.txt
$
\end{verbatim}

\chapter{Further Issues with the Database}
In this appendix we describe further issues that we have observed on \conceptnet
but did not appear during the derivation process of the input files.

\section{num\_assertions on conceptnet\_concept}
In theory the entries found in that column of the table conceptnet\_concept could
be used in order to calculate the degree of the node (concept) in the induced
directed multigraph. However, this is not the case.

Let us take the very first concept that has ID equal to $5$ and first of all ignore the scores.
The first line below indicates that concept with ID $5$ does not appear among assertions
that are not in the English language. This way we do not have to restrict the language 
being English in further SQL queries.
\begin{code}
sqlite> select count(*) from conceptnet\_assertion where\\
\phantom{sql}...> ((concept1\_id = 5) or (concept2\_id = 5)) and (language\_id is not 'en');\\
0\\
sqlite> select count(id) from conceptnet\_assertion where (concept1\_id = 5);\\
2816\\
sqlite> select count(id) from conceptnet\_assertion where (concept2\_id = 5);\\
147\\
sqlite> select count(id) from conceptnet\_assertion where\\
\phantom{sql}...> (concept1\_id = 5) and (concept2\_id = 5);\\
2\\
sqlite> select count(id) from conceptnet\_assertion where\\
\phantom{sql}...> (concept1\_id = 5) or (concept2\_id = 5);\\
2961
\end{code}

Note that $2961 = 2816 + 147 - 2$. However, all these numbers are different from $2887$ which is the entry in the
num\_assertions column of the table conceptnet\_concept. When we restrict to scores being
positive, we still can not justify the numbers.
\begin{code}
sqlite> select count(id) from conceptnet\_assertion where\\
\phantom{sql}...> (concept1\_id = 5) and (score > 0);\\
2754\\
sqlite> select count(id) from conceptnet\_assertion where\\
\phantom{sql}...> (concept2\_id = 5) and (score > 0);\\
139\\
sqlite> select count(id) from conceptnet\_assertion where\\
\phantom{sql}...> (concept1\_id = 5) and (concept2\_id = 5) and (score > 0);\\
2\\
sqlite> select count(id) from conceptnet\_assertion where\\
\phantom{sql}...> ((concept1\_id = 5) or (concept2\_id = 5)) and (score > 0);\\
2891
\end{code}

\end{document}